\pdfoutput=1
\documentclass[11pt]{article}
\usepackage[OT1]{fontenc}
\usepackage[margin=1in]{geometry}
\usepackage{enumitem}
\usepackage{fullpage}
\usepackage{smile}

\newcommand{\mytag}[1]{\tag*{\color{blue} // #1}}

\newcommand{\iidfrom}{\overset{\iid}{\sim}}

\newlist{steps}{enumerate}{1}
\setlist[steps, 1]{label = Step \arabic*:}

\newcommand{\pmax}{\lor}
\newcommand{\pmin}{\land}
\newcommand{\warmup}{{\mathtt{warmup}}}
\newcommand{\snr}{\mathrm{SNR}}

\newcommand{\lbdaU}{{\lambda}_U}
\newcommand{\brown}{\color{brown}}

\newcommand{\Uthres}{{\mathtt{U\underline{~}thres}}}
\newcommand{\vzero}{\mathbf{0}}

\newcommand{\operYtoXbeta}{\mathcal{A}}
\newcommand{\operXtoYbeta}{\mathcal{B}}

\newcommand{\connectedcomponent}{\mathrm{CC}}

\DeclareMathOperator*{\Deg}{Deg}
\newcommand{\eigveq}{\simeq}

\usepackage{scalerel}
\newcommand{\ldeq}[1][1]{{=_{{\textstyle\mathstrut}\scaleto{\leqslant #1\mathstrut}{6.5pt}}}}
\DeclareMathOperator*{\efforder}{Ord}
\newcommand{\error}[1][1]{\mathrm{err}_{{\textstyle\mathstrut}\scaleto{> #1\mathstrut}{6.5pt}}}

\newcommand{\owntag}[1]{\stepcounter{equation}\tag{\theequation, #1}}

\newcommand{\herm}{\mathrm{He}}

\newlist{myenumi}{enumerate}{3}
\setlist[myenumi,1]{
    label=(\roman*),
    ref  =(\roman*), 
    }
\setlist[myenumi,2]{
    label=\textbf{(\alph*)},
    leftmargin=.75cm}
\setlist[myenumi,3]{label=\bfseries(\roman*),
                   ref  =\themyenumii\textbf{.(\roman*)}}

\crefname{myenumi}{}{}
\crefname{myenumii}{}{}
\crefname{myenumiii}{}{}

\newlist{factenum}{enumerate}{3}
\setlist[factenum,1]{
    label={(\textbf{\arabic*})},
    ref  =(\textbf{\arabic*}), 
    leftmargin=1.5cm}
\setlist[factenum,2]{
    label=\textbf{(\alph*)},
    ref  =\thefactenumi{(\alph*)},
    leftmargin=0cm}
\setlist[factenum,3]{label=\bfseries(\roman*),
                   ref  =\thefactenumii\textbf{.(\roman*)}}

\crefname{factenumi}{fact}{facts}
\crefname{factenumii}{fact}{facts}
\crefname{factenumiii}{fact}{facts}

\newcommand{\TIFwarmupi}{\hyperref[def:TIF-warmup]{TIF-$i$ }}
\newcommand{\TIFwarmupj}{\hyperref[def:TIF-warmup]{TIF-$j$ }}

\acrodef{icl}[ICL]{In-Context Learning}
\acrodef{msa}[\texttt{MS-Attn}]{Multi-head Softmax Attention}
\acrodef{psd}[PSD]{Positive Semi-definite}
\acrodef{svd}[SVD]{Singular Value Decomposition}
\acrodef{gf}[Gradient Flow]{Gradient Flow}
\acrodef{si}[Symmetric Initialization]{Symmetric Initialization}
\acrodef{asi}[Asymmetric Initialization]{Asymmetric Initialization}
\acrodef{sw}[Symmetric Weights]{Symmetric Weights}
\acrodef{asw}[Asymmetric Weights]{Asymmetric Weights}
\acrodef{mgf}[MGF]{Moment Generating Function}
\acrodef{cdf}[CDF]{Cumulative Distribution Function}
\acrodef{wth}{Within-Task Homogeneous}
\acrodef{mmse}[MMSE]{Minimum Mean Square Error}
\acrodef{snr}[SNR]{Signal-to-Noise Ratio}
\newcommand{\DC}{{Decomposability Condition}}
\newcommand{\ConvergenceICL}{{\text{GF-ICL}}}
\newcommand{\MultiheadICLLB}{{\text{LB-ICL}}}
\newcommand{\SingleheadICLOpt}{{\text{OptS-ICL}}}
\newcommand{\ConstructICL}{{\text{Const-ICL}}}
\def \ssa {{single-head softmax attention}}

\newcommand{\h}{{(h)}}

\newcommand{\hprime}{{(h')}}
\newcommand{\handhprime}{{(h, h')}}
\newcommand{\handzero}{{(h, 0)}}
\newcommand{\histar}{{(h_i^\star)}}

\newcommand{\intf}{{\mathtt{Intf}}}
\newcommand{\noise}{{\mathtt{Noise}}}
\newcommand{\signal}{{\mathtt{Signal}}}
\newcommand{\AXinterference}{A_{ \mathtt{Intf}}}
\newcommand{\AXnoise}{A_{\mathtt{Noise}}}
\newcommand{\AXsignal}{A_{\mathtt{Signal}}}
\newcommand{\BYinterference}{B_{ \mathtt{Intf}}}
\newcommand{\BYnoise}{B_{\mathtt{Noise}}}
\newcommand{\BYsignal}{B_{\mathtt{Signal}}}

\newcommand{\barq}{\cleverbar{q}}
\newcommand{\barcT}{\cleverbar{\cT}}
\newcommand{\baromega}{\cleverbar{\omega}}
\newcommand{\baralpha}{\cleverbar{\alpha}}
\newcommand{\bard}{\overline{d}}

\newcommand{\muih}{\mu_{i}^{\h}}
\newcommand{\muihprime}{\mu_{i}^{\hprime}}
\newcommand{\muistar}{\mu_{i}^{\star}}
\newcommand{\mujh}{\mu_{j}^{\h}}
\newcommand{\muh}{\mu^{\h}}
\newcommand{\muhprime}{\mu^{\hprime}}

\newcommand{\baromegaih}   {\baromega_{i}^{\h}}
\newcommand{\baromegaihprime}   {\baromega_{i}^{\hprime}}
\newcommand{\baromegaistar}{{\baromega_{i}^{\star}}}
\newcommand{\baromegajh}   {\baromega_{j}^{\h}}
\newcommand{\baromegah}   {\baromega^{\h}}

\newcommand{\hprimeandh}   {{(h', h)}}
\newcommand{\handh}  {{(h, h)}}
\newcommand{\tomegainorm} {\norm{\tilde\omega_i}_2}

\newcommand{\uistar}{u_i^{\star}}

\newcommand{\vecd}{\bm{d}}

\newcommand{\convergence}{\mathtt{Conv}}

\usepackage{xparse}
\NewDocumentCommand{\fracexproductL}{ O{\omega^\h} O{\omega^\h} }{\frac{\exp\bigl(\bigl\langle #1, #2 \bigr\rangle\bigr)}{L}}
\NewDocumentCommand{\exproduct}{ O{\omega^\h} O{\omega^\h} }{{\exp\bigl(\bigl\langle  #1, #2 \bigr\rangle\bigr)}}

\newcommand{\lowerbound}{\mathtt{lb}}
\newcommand{\binomial}{\mathrm{Binom}}
\newcommand{\bernoulli}{\mathrm{Bern}}
\newcommand{\simple}{\mathtt{sim}}



\newcommand{\de}{d_e} 
\newcommand{\rhoi}{\rho_i}
\newcommand{\rhoistar}{\rho_{i}^{\star}}
\newcommand{\checkxiiprime}{\check\xi^\prime_{i}}
\newcommand{\hatxiiprime}{\hat\xi^\prime_{i}}


\definecolor{brown}{rgb}{0.0, 0.0, 0.0}
\definecolor{teal}{rgb}{0.0, 0.0, 0.0}
\definecolor{violet}{rgb}{0.0, 0.0, 0.0}
\usepackage{caption}
\usepackage{floatrow}
\usepackage{subcaption}

\usepackage{lipsum}

\newcommand\blfootnote[1]{%
  \begingroup
  \renewcommand\thefootnote{}\footnote{#1}%
  \addtocounter{footnote}{-1}%
  \endgroup
}

\newcommand{\revise}[1]{{#1}}
\newcommand{\siyurevise}[1]{{#1}}
\newcommand{\HJrevise}[1]{{#1}}
\title{\bf Training Dynamics of Multi-Head Softmax Attention for In-Context Learning: Emergence, Convergence, and Optimality}

\author{
    Siyu Chen
\and 
    Heejune Sheen 
\and 
    Tianhao Wang
\and 
    Zhuoran Yang
\and 
{\small\textit{Department of Statistics and Data Science, Yale University}} 
\and
{
    \small\texttt{\{siyu.chen.sc3226, heejune.sheen, tianhao.wang, zhuoran.yang\}@yale.edu}
}
}

\date{}

\begin{document}

\maketitle


\begin{abstract}%
We study the dynamics of gradient flow for training a multi-head softmax attention model for in-context learning of multi-task linear regression. 
We establish the global   convergence  of gradient flow under suitable choices of initialization. 
In addition,  we prove that 
an interesting ``task allocation" phenomenon emerges during the gradient flow dynamics, where each attention head focuses on solving a single task of the multi-task model.
Specifically, we prove that the gradient flow dynamics can be split into three phases --- a \emph{warm-up} phase where the loss decreases rather slowly and the attention heads gradually build up their inclination towards individual tasks, an \emph{emergence} phase where each head selects a single task and the loss rapidly decreases, and a \emph{convergence} phase  where the attention parameters converge to a limit. 
Furthermore, we prove the optimality of gradient flow in the sense that the limiting model learned by gradient flow is on par with the best possible multi-head softmax attention model up to a constant factor. 
Our analysis also delineates a strict separation in terms of the prediction accuracy of ICL between single-head and multi-head attention models. 
The key technique for our convergence  analysis is to map  the gradient flow dynamics in the parameter space to a set of ordinary differential equations in the spectral domain, where the relative magnitudes of the semi-singular values of the attention weights determines task allocation.  
To our best knowledge, our work provides the first convergence result for the multi-head softmax attention model. 
\end{abstract}


\blfootnote{Accepted for presentation at the
Conference on Learning Theory (COLT) 2024}
\section{Introduction}
The transformer architecture \citep{vaswani2017attention} is the backbone of many foundational models in artificial intelligence (AI), demonstrating striking empirical success in domains including natural language processing~\citep{devlin2018bert}, computer vision~\citep{Dosovitskiy2020AnII}, and reinforcement learning~\citep{chen2021decision}.
At a high level, a transformer is a sequence-to-sequence model that processes an input sequence of tokens and produces an output sequence of tokens.
An autoregressive  transformer generates the output in an autoregressive manner, i.e., each new token is generated by taking   all previously generated  tokens as the input of the transformer model. 
Autoregressive transformer architecture has emerged as the mainstream paradigm of large language models  (LLMs), with examples such as GPT~\citep{radford2019language,brown2020language,achiam2023gpt}, Claude \citep{anthropic2023claude}, and Gemini \citep{team2023gemini}. 

These LLMs are trained on internet-scale data across a wide range of topics and languages. 
A distinguishing feature of these models is their ability to learn from a few demonstrations of a new task that does appear in the training data, a phenomenon known as \ac{icl}~\citep{brown2020language}.
That is, without updating the model parameters, by feeding a few input-output examples of a task and querying a new input, 
LLMs are able to give  the correct output. 
The \ac{icl} ability serves as the foundation  for building more sophisticated prompting methods that uses LLMs for solving complicated problems \citep{huang2022towards}.

While the \ac{icl} capability of transformers has been empirically demonstrated and applied, theoretical understanding of this phenomenon is still in its infancy. 
In particular, we lack a comprehensive understanding of a key component of the transformer architecture, the attention mechanism~\citep{vaswani2017attention}, and how it is related to the \ac{icl} ability.
In the attention mechanism, multiple attention heads \emph{attend} to tokens in the input sequence by assigning attention weights and aggregates the values according to the weights from all tokens to form the output features. 
These attention weights are probabilities given by a softmax function and thus the resulting model is referred to as \ac{msa}.
Existing works have investigated how single-head attention-based models can be trained for ICL \citep{zhang2023trained,huang2023context,kim2024transformers}, but the multi-head case is still under-explored, though it has been observed that multi-head attention outperforms single-head attention for ICL~\citep{cui2024superiority,xing2024benefits}.

In this work, 
we aim to make a step towards understanding the role \ac{msa} played in the emergence of \ac{icl} ability. 
To this end, we focus on a simple but fundamental setting where transformer is a one-layer multi-head softmax attention model trained for the \ac{icl} task of multi-task linear regression. 
In particular, the \ac{msa} model is asked to see $L$ covariate-response examples,  $\{ x_l, y_l \} _{l \in [L]}$, sampled from  a randomly sampled noisy multi-task linear regression problem, i.e., $y_l = G^\top x_l  + \varepsilon_l$, and predict the regression target $G^\top q$ of a new covariate $q$, referred to as a query. 
Here,  $G \in \RR^{d_y \times d} $ is a random matrix sampled randomly from a distribution and $\varepsilon_l$ is a random noise. 
Moreover, we further assume that $G$ admits a \emph{multi-task structure} in the sense that it can be transformed by two fixed orthogonal matrices $\varPhi$  and $\varPsi$ into a block diagonal matrix, where each block corresponds to the signal of  an individual task. 
In other words, upon orthogonal transformations, the multi-task linear regression is reduced to a set of independent linear models. 
We are considering sufficiently large $L$ and $d$ while the dimension-to-sequence length ratio $d/L = \Theta(1)$.
To perform ICL, 
the  \ac{msa} model is first trained on a variety of randomly sampled instances of multi-task ICL data and then evaluated on a random new instance.

To understand how the attention mechanism enables the ICL ability, we delve into the gradient flow  dynamics of training the \ac{msa} model at a population level. 
This corresponds to the setting where the number of instances of multi-task task linear regression goes to infinity. 
Under such a setting, we aim to address the following questions: 
\begin{enumerate}[
    label=\textbf{(\alph*)}, 
    leftmargin=1cm, 
    ]
    \item Does gradient flow dynamics of ICL on multi-task linear data converge to a limit? 
    \item Does the learned \ac{msa} model exhibit ICL ability with high prediction accuracy? 
    \item Does the multi-head structure offer any advantage over the single-head version? 
\end{enumerate}

We provide affirmative answers to all these three questions. 
In particular, using a symmetric initialization scheme where the key and query weights are the same, we prove that the gradient flow with respect to the population loss converges to a limit. 
In addition, we characterize the prediction error of the limiting \ac{msa} model  explicitly in terms of the signal-to-noise ratio of the linear model 
and the the ratio $d/L$ under certain condition, e.g., both $d$ and $L$ are large.
Moreover, such an error decays to zero when $d/L$ decreases.  
We also characterize the minimal error attained by the class of \ac{msa} models and prove that the model founded by gradient flow is on par with the best model up to a constant factor.
\HJrevise{Finally, we demonstrate the superiority of multi-head structure in two aspects. First, the multi-head structure exhibits a ``task allocation'' phenomenon, where each attention head focuses on solving an individual and non-overlapping task. 
Second, there exists a strict separation between single-head and multi-head models. 
In particular, the minimal prediction error achieved by the best single-head model is strictly larger than that of the \ac{msa} model learned by gradient flow by a factor of $H$, where $H$ represents the number of heads.}

\paragraph{Main Contributions.}   Our main contributions are summarized as follows:
\begin{enumerate}
\item First, we establish the convergence of gradient flow for a one-layer multi-head softmax attention model for ICL (\Cref{thm:convergence-multi-head-symmetric}). 
\revise{A key technical ingredient is to reduce the gradient flow dynamics in the parameter space to the corresponding spectral dynamics in the eigenspace of the data features, which allows a more tractable analysis (\Cref{lem:decomposability preserved}).}
Our analysis captures also fine-grained properties of the dynamics like the phase transition behavior that leads to the sudden emergence of ICL ability.

\item We prove that the convergence point of gradient flow \revise{admits the following properties: each attention head focuses on solving an individual task without cross-head interference and acts as the optimal single-head attention model for that task (\Cref{lem:msa icl loss-mainbody} and \Cref{thm:optimality of ssa}).} 
In addition, we show that the model found by gradient flow achieves minimal ICL loss up to a constant factor when the number of heads matches the number of tasks (\Cref{thm:optimality of equiangular}).
  
\item We show that the multi-head model strictly outperforms the single-head model by a factor of $H$ when the number of tasks matches the number of heads. 
\siyurevise{This is demonstrated by comparing the multi-head model found by the gradient flow to the optimal single-head model in terms of the ICL loss.}

\item As a byproduct, we identify an interesting \say{attention allocation} phenomenon for a single-head softmax attention head to optimally solve multiple tasks (\Cref{thm:optimality of ssa}).
As an extension, we make a comparison between linear and softmax attention in \Cref{sec:extension}. 
\HJrevise{Moreover, we show that the model learned by gradient flow generalizes to new data with sequence lengths differing from the training data, while we also characterize its limitation in solving nonlinear tasks.}
\end{enumerate}
To our best knowledge, our work is the first to provide a comprehensive analysis of the training dynamics of a single-layer multi-head softmax attention model \revise{and the properties of the model trained by gradient flow}.
Our analysis characterizes, in a unified manner, the rich interplay between different design aspects of the attention mechanism and ICL task-specific properties.

\subsection{Our Approach: Tracking the  Spectral Dynamics}
\begin{figure}[ht]
    \centering
        \begin{subfigure}[b]{0.475\textwidth}
            \includegraphics[width=\textwidth]{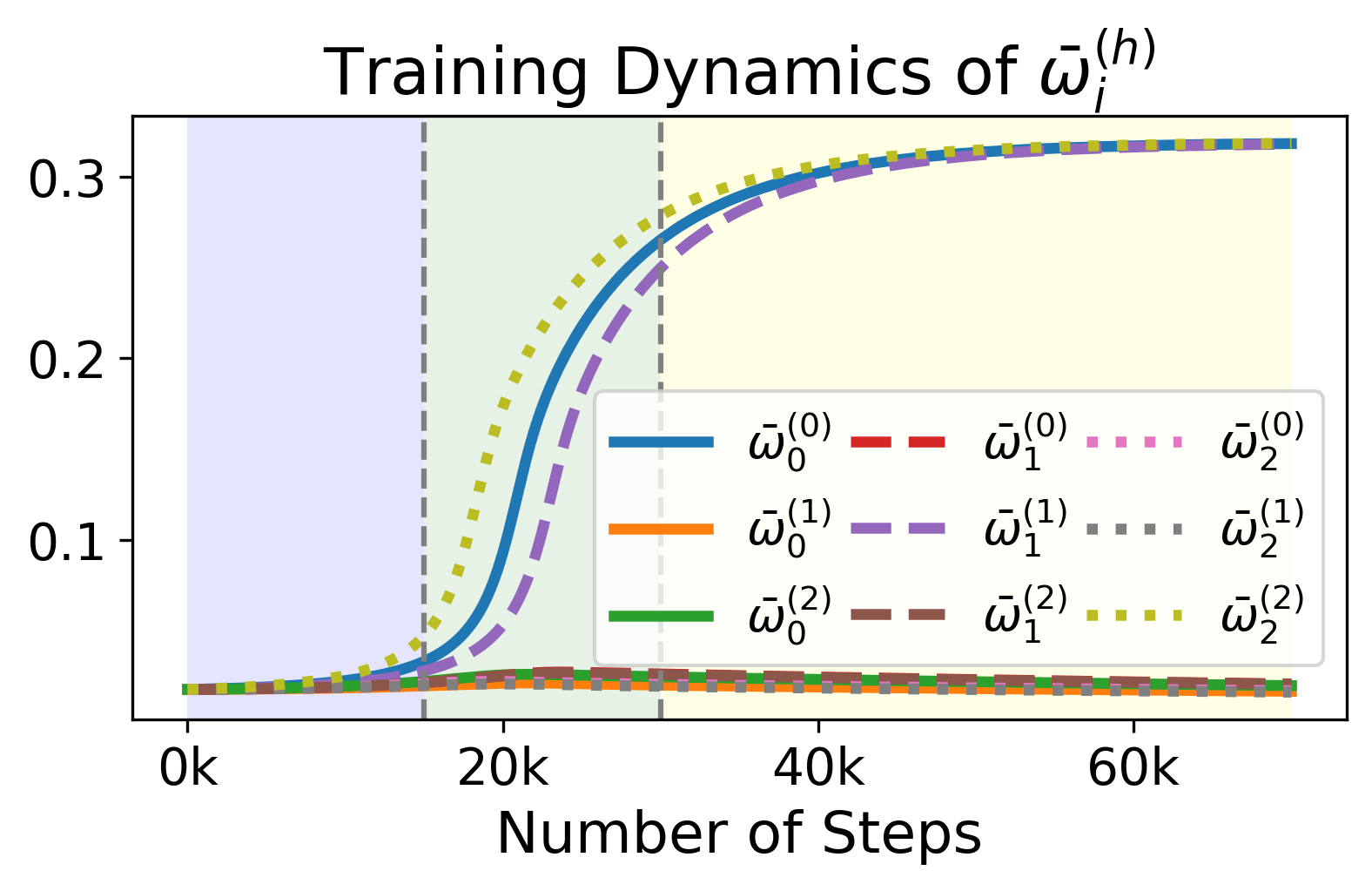}
        \end{subfigure}
        \begin{subfigure}[b]{0.475\textwidth}
            \includegraphics[width=\textwidth]{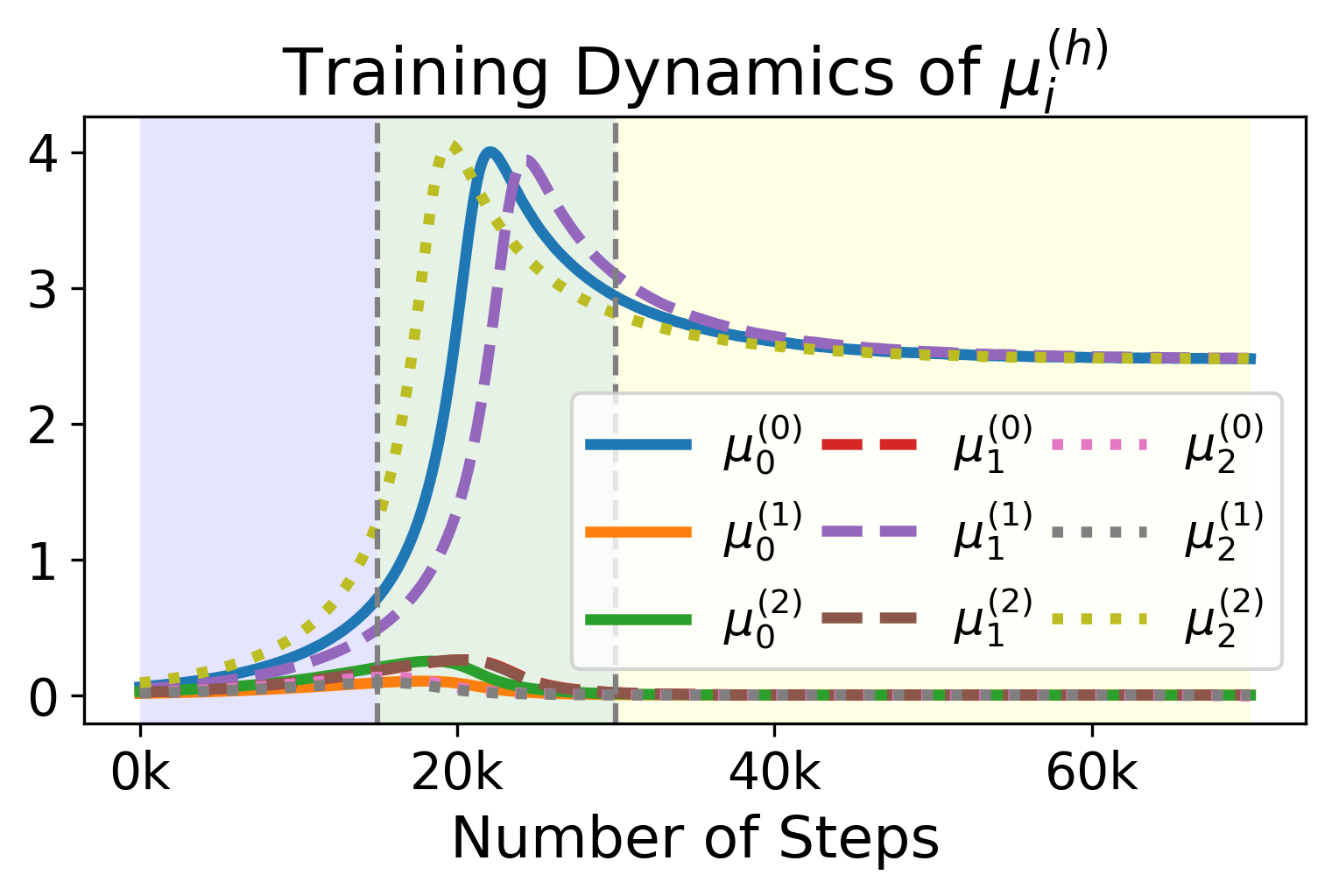}
        \end{subfigure}

        \begin{subfigure}[b]{0.475\textwidth}
            \includegraphics[width=\textwidth]{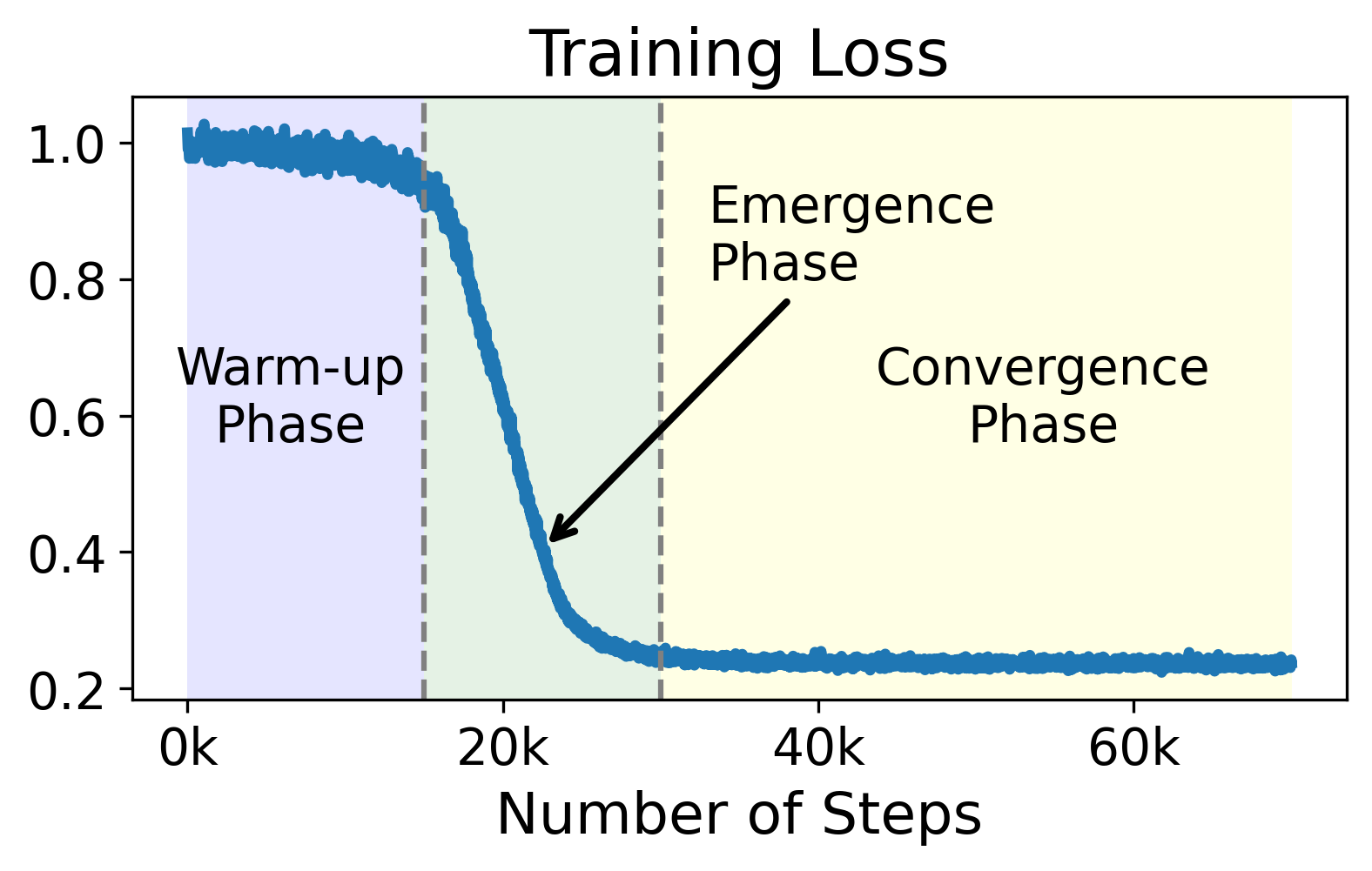}
        \end{subfigure}
        \begin{subfigure}[b]{0.475\textwidth}
            \includegraphics[width=\textwidth]{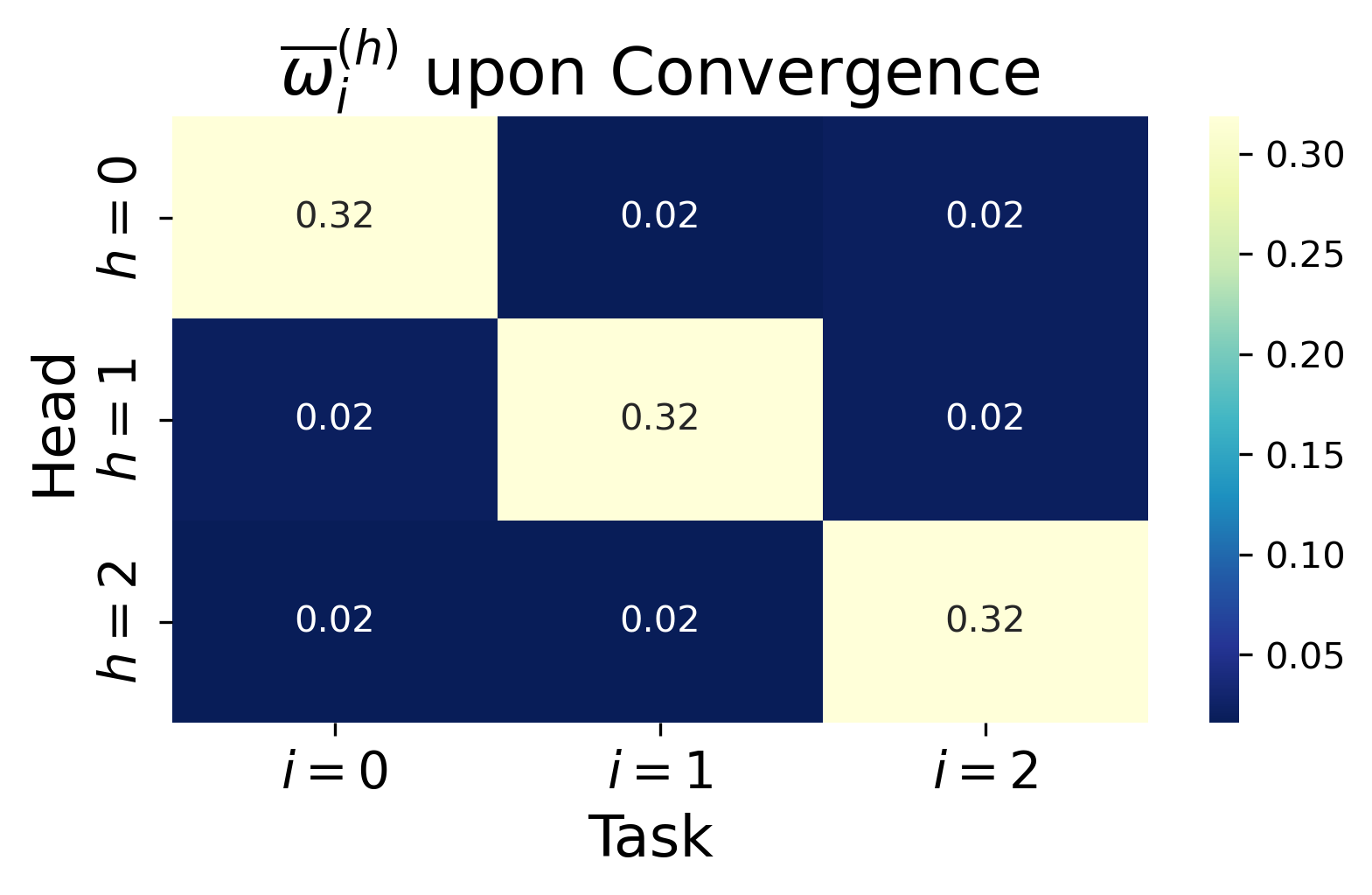}
        \end{subfigure}
    \captionsetup{width=0.95\textwidth}
    \caption{Illustration of the training loss and dynamics of the weights of the \ac{msa}. 
    The dynamics undergo three phases: warm-up, emergence, and convergence. 
    Here, we set $H=3$ heads, $I=3$ homogeneous tasks with signal strength $\lambda=1$ and no noise, dimension $d=30$ and context sequence length $L=100$.
    The top two plots show the dynamics of the eigenvalues of two combined weight matrices, $\baromega^\h$ for $W_X^\h$ and $\mu^\h$ for $U_Y^\h$ for all $h\in[H]$.
    The third plot shows the training loss and the last plot shows the value of $\baromegaih$ upon convergence.
    Here, the optimal head for task $i$ is $h_i^\star = i$ for $i\in[I]$ by initialization.
    }
    \label{fig:dyn}
\end{figure}

The analysis of the dynamics is challenging due to \revise{the symmetries} of the loss landscape (see related discussion in \S\ref{sec:related_work}), as well as the nonlinearity introduced by the softmax operation.
\begin{enumerate}
    \item We address the first challenge by properly aligning the  eigenspaces of the weight matrices with those of the context data features under the condition of Decomposable Weights  (\Cref{def:decomposability property}).
    \revise{This gives rise to a crucial simplification that allows us to track the spectral dynamics of the attention weights and output weights during the training process.}
    \item To address the second challenge, we perform careful moment analyses on the attention probability vectors produced by the softmax operation.
    \HJrevise{Based on this, we control the nonlinear effects and prune higher-order terms to extract the key factors driving the learning process. This is essential for both the dynamic analysis and the establishment of optimality bounds.}
\end{enumerate}
\revise{Combining these two ingredients paves the way for understanding the attention layer's behavior both during the training process and at the convergence point.}
The following informal theorem summarizes our main result on the dynamics of the multi-head attention model.
\begin{theorem}[Gradient flow for \ac{msa}, informal]\label[theorem]{thm:convergence-multi-head-symmetric_informal}
Under proper initialization (also satisfying \Cref{def:decomposability property}) of the gradient flow, assume that the number of heads is larger than the number of tasks, 
then the following holds:
\begin{myenumi}[
    ref = \Cref{thm:convergence-multi-head-symmetric}(\roman*),
    leftmargin=.5cm,
    align=left,
]
\item  \emph{(Warm-up)}
\HJrevise{There exists a threshold time $T_0 > 0$, before which the loss value decreases slowly, and each attention head gradually adjusts its weights to select one individual task.}

\item \emph{(Emergence)}
\HJrevise{For a brief period around $T_0$, the loss undergoes a sudden decrease, each attention head rapidly becomes focused on a single task, and cross-head interference vanishes.}

\item \emph{(Convergence)} 
\HJrevise{Finally, gradient flow converges to a point where each task is predominantly handled by a single attention head, in the sense that the output from this head dominates the output of the entire model for this task.}

\end{myenumi}
\end{theorem}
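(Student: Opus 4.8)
The plan is to eliminate the rotational and scaling symmetries of the loss landscape by working under the Decomposable Weights condition (\Cref{def:decomposability property}), which aligns the eigenspaces of the query--key matrix $W_X^\h$ and the output matrix $U_Y^\h$ of each head with the fixed eigenbasis of the context features. Following \Cref{lem:decomposability preserved}, I would first show that decomposability is preserved along the gradient flow, so that the infinite-dimensional parameter dynamics collapse to a finite system of ODEs in the spectral domain: the evolution of the semi-singular values $\baromegaih$ of $W_X^\h$ and $\muih$ of $U_Y^\h$, indexed by (head, task) pairs. The central quantity is the vector $(\baromegaih)_{h\in[H]}$ for each fixed task $i$; whichever head has the largest coordinate will end up ``owning'' task $i$.

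The next ingredient is a moment analysis of the softmax attention vector. Since the attention logits are $1/L$-scaled inner products of the random context features, I would Taylor-expand the softmax around the uniform distribution and compute the leading first- and second-moment contributions. This separates the right-hand sides of the reduced ODEs into an explicit low-degree polynomial of the spectral variables --- a \emph{signal} term proportional to $\baromegaih\muih$, a \emph{noise} term from the label noise, and an \emph{interference} term coupling distinct heads or distinct tasks --- plus a remainder that is provably negligible in the regime where $d,L$ are large with $d/L=\Theta(1)$. Pruning this remainder is what makes the dynamics tractable.

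With the reduced ODEs in hand, the three phases follow from an analysis of the polynomial vector field. In the \emph{warm-up} phase, near the small symmetric initialization the signal terms are quadratically small, so the loss decreases only slowly while each $\baromegaih$ grows essentially exponentially at a rate set by its initial value; the head $h_i^\star$ with the largest initial coordinate for task $i$ separates from the rest, and I would define $T_0$ as the first time the leading coordinates reach order one. In the \emph{emergence} phase, once a coordinate crosses this threshold the drift becomes superlinear, producing a rapid saturation over a short window around $T_0$ accompanied by a steep loss drop; a comparison argument on growth rates shows $\baromega_i^{\histar}$ outruns $\baromegaih$ for $h\ne h_i^\star$, forcing all interference coordinates to $0$. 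In the \emph{convergence} phase, I would exhibit a Lyapunov function --- the population loss itself, or a closely related energy --- that is monotone along the reduced dynamics and whose stationary set consists exactly of the ``one task per head'' configurations; combined with boundedness of the trajectory this yields convergence to a limit at which task $i$ is handled predominantly by head $h_i^\star$, as asserted in \Cref{thm:convergence-multi-head-symmetric}.

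The main obstacle is controlling the nonlinear softmax contributions uniformly along the entire trajectory, and in particular during the emergence window, where the spectral variables move quickly and are of order one, so the crude error bounds available in the warm-up regime no longer suffice. This forces sharp two-sided estimates on the attention moments, a careful bootstrap ensuring that the decomposability error remains negligible throughout the dynamics, and a delicate winner-take-all separation argument showing that exactly one head survives per task rather than several heads splitting it.
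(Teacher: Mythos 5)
Your overall architecture — decomposability preserved along the flow, reduction to spectral ODEs, moment analysis of the attention vector, then a three-phase study of the reduced vector field — is the same as the paper's. However, two of your key steps would not go through as described.

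First, the moment analysis. You propose to Taylor-expand the softmax around the uniform distribution and keep first- and second-moment contributions. The attention logits $s_l = x_l^\top W_X^\h q$ are not $1/L$-scaled; they are Gaussians with variance $\|W_X^\h q\|_2^2$, which grows to order one (and the analysis must cover the whole regime up to $c^{-2}\cdot 2\log L$). A low-order expansion around uniform gives $\EE[\|p^\h\|_2^2]\approx L^{-1}(1+\|W_X^\h q\|_2^2+\cdots)$ and misses the exponential structure $\EE[{p^\h}^\top p^\hprime\given q]\approx \exp(q^\top W_X^{\hprime\top} W_X^\h q)/L$, which the paper obtains by a Stein's-lemma interpolation ODE in an auxiliary time variable (\Cref{lemma: pseudo-dynamics}, \Cref{lemma: E[pq]-moment}). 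That exponential factor is not a technicality: it produces the $\exp(d_i(\baromegaistar)^2)$ terms that drive both the emergence mechanism (the denominator in \eqref{eq:emergence}) and the exact convergence values $\baromegaistar\to d_i^{-1/2}$, $\muistar\to\sqrt{d_i}/(1+e d_i\phi_i L^{-1})$. Relatedly, the winner for task $i$ is selected by the largest \emph{initial} $\muih$, not the largest $\baromegaih$ — the $\baromegaih$ are all initialized equal to $\omega_0$, and the separation in $\baromega$ is induced by the $\mu$ gap through the coupled system \eqref{eq:approximated_dyn_main2} after the cross-head interference term cancels between heads; your proposal does not identify this cancellation, which is the engine of the warm-up phase.

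Second, the convergence phase. A Lyapunov/LaSalle argument with the population loss only yields convergence to the set of critical points of the restricted flow, and that set is much larger than the ``one task per head'' configurations: it contains the origin, saddles where several heads split a task, and points where a head carries no task. Monotonicity plus boundedness therefore cannot, by itself, deliver part (iii). The paper instead proves a quantitative winner-take-all statement phase by phase (\Cref{thm:convergence-multi-head-symmetric}(i)), showing $\muih\le\muistar\exp(-\Omega(\sqrt{d_e}))$ and $\baromegaih=O(\omega_0)$ for non-optimal heads, and then runs a two-timescale argument in which $\muistar$ equilibrates fast onto the slow manifold \eqref{eq:muistar dynamics} and the resulting scalar ODE \eqref{eq:emergence} for $\baromegaistar$ is integrated explicitly, with the accumulated approximation error controlled only up to time $\exp(O(\sqrt{d_e}))$. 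You would need to replace the Lyapunov step with an argument of this quantitative type to rule out the spurious stationary points.
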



In complement to the dynamics analysis, we further study the optimality property of the convergence point of gradient flow.
\begin{myenumi}
    \item We first consider using a single-head softmax attention model to optimally solve multi-task linear regression.
    We show the lower bound of the optimal ICL loss and construct a group of attention weights to achieve the lower bound.
    When specialized to a single task, we directly conclude that the optimal attention weights are indeed the same as the convergence point of the gradient flow dynamics.
    The challenge here is the nonconvexity of the landscape induced by the softmax function, and our main approach is to find the optimal working regime of the softmax attention and derive a good nonlinear approximation to the attention's behavior in this regime.
    As a byproduct, we also show that the optimal attention weights are decomposable in the sense of \Cref{def:decomposability property}, which justifies our simplification of the gradient flow dynamics to the spectral dynamics.
    \item For the multi-head case, we consider a subclass of the \ac{msa} model bearing a certain symmetric structure called the equiangular weights (\Cref{def:equiangular}). Under task homogeneity, i.e., all tasks having the same \ac{snr} and dimension, we show using a similar technique that the convergence point of the gradient flow is indeed optimal. The analysis demonstrates an additional characterization of the cross-head interference compared with the single-head case.
\end{myenumi}

\paragraph{Organization of the paper.} 
The rest of the paper is organized as follows:
We finish the introduction by discussing related works.
In \Cref{sec:preliminary}, we introduce the one-layer multi-head softmax attention model and the ICL setting of multi-task linear regression.
\HJrevise{In \Cref{sec:dynamics_main}, we present gradient flow dynamics and characterize the emergence and convergence behavior of the attention weights during the training process.}
In \Cref{sec:optimality_main}, we investigate the optimality of the trained model. In \Cref{sec:extension}, we discuss extensions of our main results.
Detailed proofs and auxiliary results are deferred to the appendix.

\subsection{Related Work}
\label{sec:related_work}
We briefly review additional related work on ICL and learning properties of transformers in other settings.
First, as suggested by empirical evidence, various explicit weight constructions have been proposed to show that transformers can do ICL by emulating specific algorithms such as gradient descent and its variants \citep{akyurek2023what,von2023transformers,bai2023transformers,fu2023transformers}.
It is further theoretically verified via landscape analysis that the global minimizer (or certain critical points) of the ICL loss corresponds to certain adaptive version of gradient descent \citep{ahn2023transformers,mahankali2023one}.
Moreover, convergence analyses have also been proposed by \citet{zhang2023trained, huang2023context,kim2024transformers, nichani2024transformers}
under different model and data assumptions.
Nonetheless, these convergence analyses are tailored to single-head attention models and require simplifications of the model such as combining the key and query matrices into a single weight matrix.
Though not changing the model's expressiveness, such simplifications make the loss landscape more amenable to analysis by removing the rotational symmetry of the attention weights\footnote{Here we are refering to the fact that $K^\top Q = (O K)^\top (OQ)$ for any orthogonal matrix $O$.}.
Also note that the permutation symmetry between multiple heads further complicates the loss landscape, and so far there is no convergence analysis for multi-head attention models for ICL.

The statistical complexity of transformers for ICL of linear functions has been characterized by \citet{wu2023many}, and \citet{cheng2023transformers,guo2023transformers, collins2024context} investigated ICL beyond the simple linear functions.
In particular, \citet{collins2024context, edelman2024evolution, makkuva2024attention} studied the ICL capability of transformers when data is drawn from Markov chains.
There are also works explaining how transformers perform in-context learning from the Bayesian perspective \citep{xie2021explanation,Muller2021TransformersCD,zhang2022analysis,zhang2023and,ahuja2023context,jeon2024information}.
Transformers' capability of in-context decision making has been explored by \citet{lin2023transformers,sinii2023context}.
See also \citet{li2023transformers,dai2022can,raventos2023pretraining} for other related results on ICL.
Existing works have also studied the use of multiple attention heads from different perspectives \citep{an2020repulsive,mahdavi2023memorization}.
Beyond the case of ICL, there have been recent advances in understanding the learning properties of transformers in other settings \citep{edelman2022inductive,Li2023HowDT,jelassi2022vision,sanford2023representational,pmlr-v202-giannou23a,Liu2022TransformersLS,Tarzanagh2023TransformersAS,tarzanagh_max-margin_2023,tian2023joma,tian2023scan,song2023uncovering,deora2023optimization,chen2024provably}.
We do not discuss these works in detail here since they are out of the scope of the current paper.

\section{Preliminaries}\label[section]{sec:preliminary}
We first introduce the notation conventions used throughout the paper.
We use uppercase letters like $A$ and $B$ to denote matrices and lowercase letters like $u$ and $v$ for column vectors.
For any positive integer $n$, we denote $[n] := \{1, \ldots, n\}$.
We denote by $\vspan(M)$ the column space of matrix $M$.
For two matrices $A$ and $B$ (not necessarily squared), we write $A \eigveq B$ if they share the same singular vector spaces, i.e., $A = U \Sigma V^\top$ and $B = U \Lambda V^\top$ for some orthogonal matrices $U$ and $V$ and diagonal matrices $\Sigma$ and $\Lambda$.
For any vector $v\in\RR^{d}$, we define the softmax operation as $\softmax(v) := (e^{v_1}/\sum_{i=1}^de^{v_i}, \ldots, e^{v_d}/\sum_{i=1}^{d} e^{v_i})^\top$.
We denote by $\odot$ the element-wise Hadamard product.
Let $\NN$ be the set of non-negative integers.


Next, we introduce the one-layer multi-head softmax attention model in \Cref{sec:msa_def} and the multi-task linear regression problem for \ac{icl} in \Cref{sec:icl_def}. 

\subsection{One-Layer Multi-Head Softmax Attention Model}\label[section]{sec:msa_def}

We consider a \ac{msa} model with $H$ attention heads and context length $L$, defined as follows: 
Let $D$ be the input dimension, $d_y$ be the output dimension, and ${d_e}$ be the embedding dimension.
Throughout the paper, we assume $d_e \ge d$, where $d = D - d_y$.
For each attention head $h\in[H]$, let $O^{(h)} \in \RR^{d_y\times {d_e}}$ and $V^{(h)}, K^{(h)}, Q^{(h)} \in \RR^{{d_e}\times D}$ be the output projection, value, key, and query weights, respectively. 
We let $\Theta = \{O^\h, V^\h, K^\h, Q^\h\}_{h\in[H]}$ denote the parameters of the \ac{msa} model. 
Then given an input matrix $Z \in \RR^{D\times L}$ and a query vector $z_q\in\RR^D$, the output $\hat y_q$ of \ac{msa} is 
\begin{align}\label{eq: msa}
    \hat y_q = \mathtt{MS\text{-}Attn} ( Z, z_{q}; \Theta)  := \sum_{h=1}^H O^{(h)} V^{(h)} Z \cdot \softmax \left( \frac{Z^\top K^{\h^\top} Q^\h z_q }{ \sqrt{{d_e}}}\right) \in \RR^{d_y}.
\end{align}
Following from \eqref{eq: msa}, given the parameter $\Theta$, for  each head $h\in[H]$, we define the attention score vector $s^{(h)} \in \RR^L $ and the corresponding attention probability vector $p^{(h)}\in \RR^L$  as
\begin{align}\label{eq:attn_score_prob}
    s^{(h)}  =  \frac{Z^\top K^{\h^\top} Q^\h z_q}{\sqrt{{d_e}}} \in \RR^L, \qquad p^{(h)}  = \softmax(s^{(h)}) \in \RR^L.
\end{align}
Intuitively, $ s^{(h)}$ measures the similarity between the query vector and each column of $Z$, and $p^{(h)}$ is the probability distribution over $[L]$ that is induced by $s^{(h)}$.
Such a distribution is used to weight the value head $V^{(h)} Z \in \RR^{{d_e}\times L}$ to get a vector $V^{(h)} Z p^{(h)}$ that lies in the embedded space   $\RR^{d_e} $. The output $\hat y_q$ is obtained by mapping such an embedding to the output space $\RR^{d_y}$ by a linear map $O^{(h)}$. That is, we can equivalently write \eqref{eq: msa} as $\hat y_q = \sum_{h=1}^H O^{(h)} V^{(h)} Z p^{(h)}$.
To further simplify the notation, we define the combined output weight  $ U^{(h)} \in\RR^{d_y \times {D}} $ and attention weight $W^\h \in \RR^{D\times D}$: 
\begin{align}\label{eq:combined_weights}
    U^{(h)} := O^{(h)} V^{(h)}    ,\quad W^{(h)} :=  \frac{K^{(h)\top} Q^{(h)}}{\sqrt{d_e}}  \quad \text{for each } h\in[H].
\end{align}

Throughout the paper, we will consider the query vector $z_q$ with the last $d_y$ coordinates being zero, i.e., $z_q = [q^\top, 0, \dots, 0]^\top$ where $q\in\RR^{{d}}$.
The output $\hat y_q$ can be viewed as the lower right $d_y\times 1$ block of the output of the standard multi-head softmax attention architecture with input matrix $[Z, z_q] \in \RR^{D \times (L+1)}$.
We remark that the residual connection is omitted in \eqref{eq: msa} since it does not affect the output, and we do not consider any attention mask as there is only one single layer.

\subsection{In-Context Learning of Multi-Task Linear Regression}\label[section]{sec:icl_def}
We consider a data structure where each input token comprises a covariate with dimension ${d}$ and a label with dimension $d_y$ such that $D={d}+d_y$.
The inputted $Z$ and $z_q$ can be decomposed as
\begin{gather*}
    Z = \begin{bNiceMatrix}
        x_1 & x_2 & \Cdots & x_L \\
        y_1 & y_2 & \Cdots & y_L
    \end{bNiceMatrix}, \quad 
    z_q = \begin{bNiceMatrix}
        q \\
        0
    \end{bNiceMatrix}.
\end{gather*}
Such input format is standard in recent literature about Transformers and ICL \citep{garg2022can,akyurek2023what,von2023transformers,ahn2023transformers,zhang2023trained}, but we remark that here we allow $d_y > 1$, in contrast to the single-task case where $d_y=1$ considered in previous works.
In the sequel, we denote by $X=[x_1, \dots, x_L] \in \RR^{{d}\times L}$ and $Y=[y_1,\dots, y_L] \in \RR^{d_y\times L}$.
For the ICL, we consider a multi-task linear regression task described by the following data assumption. 

 \begin{assumption}[Multi-Task \ac{icl} Data]
    \label[assumption]{assump:data}
    Every in-context sample $(Z, z_q)$ is independently sampled according to the distribution described as follows:
    First, we assume the covariate vectors and the query vector to be distributed as
        $x_1, \ldots, x_L \overset{\iid}{\sim} \cN(0, I_{{d}}) \text{ and } q \sim \cN(0, I_{{d}})$.
    Next, for a random coefficient matrix $G\in\RR^{{d}\times d_y}$, each response vector is given by $y_l = G^\top x_l + \varepsilon_l$ for $l\in[L]$ where the noise $\varepsilon_1, \ldots, \varepsilon_L \overset{\iid}{\sim} \cN(0, \sigma^2 I_{d_y})$. 
    Note that the coefficient matrix $G$ is shared within the context sequence $Z$.
    Then the goal is to predict the true response vector $y_q = G^\top q$ given the context $Z$ and the query $z_q$.
    To this end, we assume $G$ to be independent of both the covariate vectors $\{x_l\}_{l=1}^L$ and the noise $\{\varepsilon_l\}_{l=1}^L$, and moreover, we assume that $G$ admits a multi-task structure:
    \begin{equation}\label{eq: decomposition of beta}
    \begin{gathered}
        G = d^{-1/2} \cdot \varPhi
        \cdot
        \begin{bNiceArray}{cccc}[margin,parallelize-diags=false]
            g_1 & 0       & \cdots & 0      
            \\
            0       & g_2 & \cdots & 0      \\
            \vdots  & \vdots  & \ddots & \vdots \\
            0       & 0       & \cdots & g_I \\
        \end{bNiceArray}
        \cdot \varPsi^\top,
    \end{gathered}
    \end{equation}
    where $\varPhi\in\RR^{d\times d}$ and $\varPsi\in\RR^{d_y\times d_y}$ are two fixed orthogonal matrices, and corresponding to each task $i\in[I]$, the random vector $g_i\in\RR^{d_i}$ with dimension $d_i$ satisfies $\EE[g_i] = 0$ and $\Cov[g_i] = \lambda_i I_{d_i}$ with $\lambda_i$ being the signal strength for each task $i\in[I]$. 
    Here, we restrict $I = d_y$ for simplicity.
    In addition, $d^{-1/2}$ is the normalization factor which ensures that $\EE[\norm{y_q}_2^2] = 1$.
\end{assumption}
The above decomposition of $G$ breaks down the ICL problem into $I$ components, and inferring the correct response requires simultaneously estimating all $I$ tasks.
Note that we allow $G$ to be supported within some subspace. 
Here we only consider cross-task differences while assuming homogeneous signal strength within each task for ease of analysis.
We define the single sample Signal-to-Noise Ratio (SNR) for task $i$ as $\snr_i \defeq \lambda_i d_i / (d \sigma^2)$, and also $\phi_i := 1 + \snr_i^{-1}$.
We define $\cJ_i = \{d_1+\dots+d_{i-1} +1, \cdots, d_1+\dots+d_{i}\}$ as the set of indices for task $i$.
Then $\{ \cJ_i \}_{i\in[I]}$ forms a partition of $[d]$.
Throughout the paper, we consider sufficiently large context length $L$ and dimension $d$, and we assume that the ratios $d/L = \Theta(1)$ and $d_i / d = \Theta(1)$ for all $i\in[I]$.

\begin{assumption}\label{assump:scale}
For a constant $C>0$, it holds that $d/L \in (C, 1/C)$ and $d_i / d > C$ for all $i\in[I]$.
\end{assumption}

Before we proceed, we first introduce $V_X^\h, K_X^\h, Q_X^\h\in\RR^{{d_e}\times {d}}$ and $V_Y^\h, K_Y^\h, Q_Y^\h\in\RR^{{d_e}\times d_y}$ as the splits of $V^\h, K^\h, Q^\h$ with respect to the $X$ and $Y$ parts as shown in \Cref{fig:msa}. 
We split the combined output weights by writing 
$U_X^\h = O^\h V_X^\h$ as the left $d_y \times {d}$ block and  $U_Y^\h=O^\h V_Y^\h$ as the right $d_y\times d_y$ block of $U^\h$,  respectively. 
We also split the combined attention weights by packing
$W_X^\h = {K_X^\h}^\top Q_X^\h/\sqrt {d_e}$
as the top left ${d}\times {d}$ block and $W_Y^\h = {K_Y^\h}^\top Q_X^\h / \sqrt{{d_e}}$ as the bottom left $d_y\times {d}$ block of $W^\h$. 
\begin{equation} \label{eq:define_U_W_main}
U^\h  = \begin{bNiceMatrix}
     
        U_X^\h &  U_Y^\h   
        \end{bNiceMatrix}, \qquad
W^\h  = \begin{bNiceMatrix}
     
        W_X^\h & \star \\  W_Y^\h & \star
\end{bNiceMatrix}.
\end{equation}
The splitting described above is visualized in \Cref{fig:msa}. 
\begin{figure}[t]
    \centering
    \includegraphics[width=0.9\textwidth]{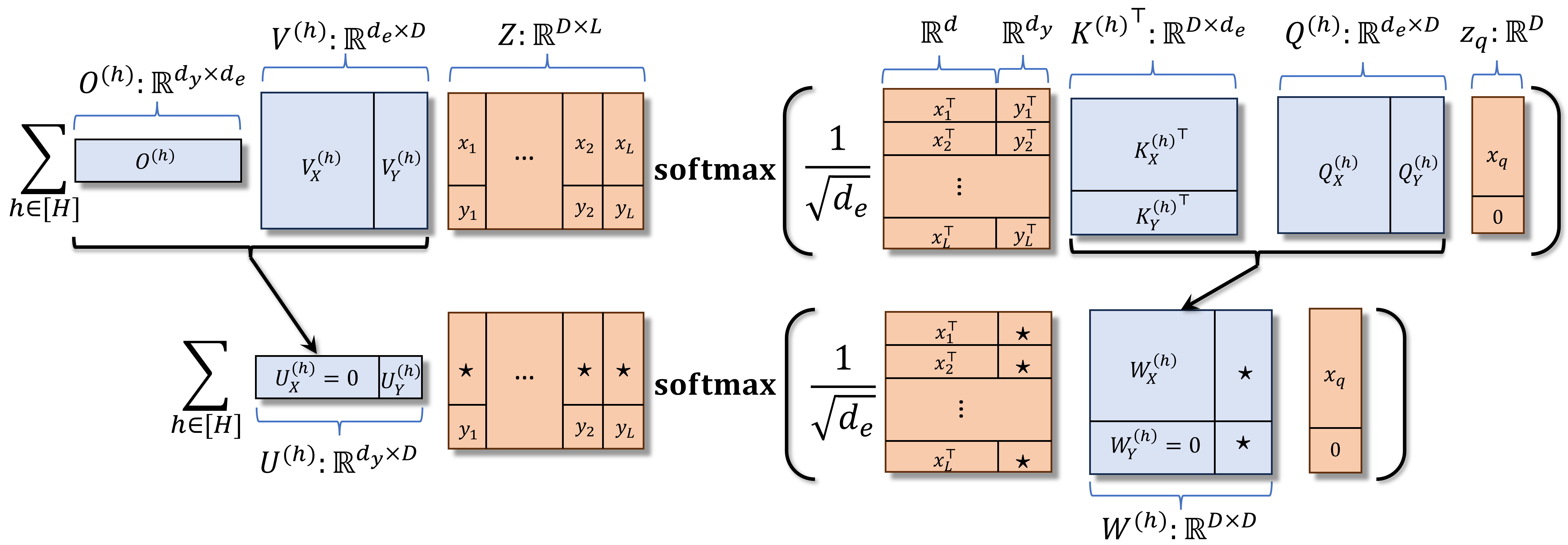}
    \captionsetup{width=0.95\textwidth}
    \caption{Illustration of the \ac{msa} with $H$ heads, embedding dimension ${d_e}$, and sequence length $L$. Weights $(V^\h, K^\h, Q^\h)$ can be split into the $X$ and $Y$ parts. The combined weights $(U^\h, W^\h)$ are also split into $(U_X^\h, U_Y^\h)$ and $(W_X^\h, W_Y^\h)$. Under the decomposability assumption ($W_Y^\h = 0$, $U_X^\h = 0$), the attention scores only depend on the $X$ part, and the output is an aggregation of the $Y$ part.
    Although we introduce the concept of combined weights, it is important to note that the gradient flow operates on the individual weights.
    }
    \label{fig:msa}
\end{figure}

\paragraph{Gradient Flow.} 
We consider a pretraining process with gradient flow on the parameters $\Theta = \{O^\h, V^\h, K^\h, Q^\h\}_{h\in[H]}$ to minimize the following population mean-squared error:
\begin{align}
     \cL (\Theta) = \EE [\ell(\hat y_q, y_q)], \quad\text{where}\quad \ell(\hat y_q, y_q) =   \norm{\hat y_q - y_q}_2^2 / 2,
    \label{eq: loss}
\end{align}
and the expectation is taken over the distribution over $(Z,z_q)$ in \Cref{assump:data}.
With the parameterization of the \ac{msa} in \eqref{eq: msa} and the loss function in \eqref{eq: loss}, one can derive the gradient flow dynamics for parameters $\Theta =\{O^\h, V^\h, K^\h, Q^\h\}_{h\in[H]}$ by moving $\Theta$ along the direction of the negative gradient  of the mean squared loss, i.e., $\partial_t \Theta = - \nabla_{\Theta} \cL(\Theta) $.  By a direct calculation, we have the following dynamics\footnote{Here we omit the time dependence of the weights for simplicity.}:
\begin{equation}
    \label{eq: gradient flow dynamics}
\begin{aligned}
\partial_t O^\h = - B^\h {V^\h}^\top, &\quad \partial_t V^\h = - {O^\h}^\top B^\h,\\
\partial_t K^\h = - d_e^{-1/2} Q^\h {A^\h}^\top, &\quad \partial_t Q^\h = - d_e^{-1/2} K^\h A^\h, 
\end{aligned}
\end{equation}
where $A^\h \in\RR^{{D}\times {D}}$ and $B^\h \in\RR^{d_y\times {D}}$ are defined respectively as
\begin{align}
    A^\h &= \EE\bigg[
        Z {P^\h }^\top Z^\top  {U^\h}^\top   \bigg(\sum_{h'=1}^H U^\hprime Z p^\hprime - y_q\bigg) z_q^\top 
        \bigg], \label{eq:define_A_h}
    \\
    B^\h &= \EE\bigg[\bigg(\sum\nolimits_{h'=1}^H U^\hprime Z p^\hprime - y_q\bigg) {p^\h}^\top  Z^\top
    \bigg],  \label{eq:define_B_h}
\end{align}
where we have an $L\times L$ matrix  $(P^\h)_{l, m} = \partial p^\h_l/\partial s^\h_m$ and in the case of softmax attention, it takes the form of
$P^\h = \diag(p^\h) - p^\h {p^\h}^\top$.
\section{Dynamics of Gradient Flow: Emergence and Convergence}\label[section]{sec:dynamics_main}




In this section, we present our main results on the gradient flow dynamics, stated previously in the informal \Cref{thm:convergence-multi-head-symmetric_informal}.
We first introduce in \Cref{sec:decomposable_weights} the specific set of weights that we will consider.
Then we describe the analysis and state the full results in \Cref{sec:convergence_analysis}.


\subsection{Decomposable Weights}\label[section]{sec:decomposable_weights}

We introduce the concept of decomposable weights, a key property that will be preserved along gradient flow and will be used for analyzing the induced dynamics in the eigenvalue space.
\begin{definition}
    \label{def:decomposability property}
    For orthogonal matrices $\varPhi\in\RR^{d \times d }$ and $\varPsi\in\RR^{d_y \times d_y }$, we say the weights of the \ac{msa} are decomposable with respect to $(\varPhi,\varPsi)$ if for all $h\in[H]$, the following three conditions hold:
    \begin{myenumi}[
        label=(\roman*), 
        ref=\Cref{def:decomposability property}(\roman*),
        leftmargin=*, 
        align=left, 
        topsep=-1ex,
        itemsep=-1ex
        ]
        \item \label{cond:U_X and W_Y are zero} \label{cond:orthogonal}
        \emph{(Orthogonality)} For the matrices $U^\h$ and $W^\h$ defined in \eqref{eq:define_U_W_main}, we have 
        $U_X^\h = W_Y^\h = 0$.
        Moreover, the column subspaces (of $\RR^{d_e}$) spanned by $K_X^\h$ and $K_Y^\h$ are orthogonal, and the column spaces spanned by $V_X^\h$ and $V_Y^\h$ are orthogonal. 
        That is, 
        $\vspan(K_X^\h) \perp \vspan(K_Y^\h)$, and $\vspan(V_X^\h) \perp \vspan(V_Y^\h)$.
        \item \label{cond:U_Y and W_X aligned}
        \emph{(Simultaneous Diagonolizability)} 
        We have 
         $K_X^\h \eigveq Q_X^\h$ with the same right eigenvector matrix $\varPhi$ and $V_Y^\h \eigveq O^{\h\top}$ with the same right eigenvector matrix $\varPsi$. 
         That is, there exists an orthogonal matrix 
        $M \in \RR^{d_e\times d_e}$ such that 
        $$
        M^\top K_X^\h \varPhi = \diag\bigl (\sigma(K_X^\h) \big ), \qquad M^\top Q_X^\h \varPhi = \diag \bigl (\sigma(Q_X^\h) \big ).
        $$ 
     Here  $\sigma(K_X^\h), \sigma(Q_X^\h) \in \RR^d$ are the $d$ semi-singular values\footnote{The difference between semi-singular values and standard singular values is that we allow semi-singular values to be negative.} of $K_X^\h$ and $Q_X^\h$ respectively, and we let $\diag  (\sigma(K_X^\h)   )$ to denote the diagonal matrix in $\RR^{d_e\times d}$ whose $(j,j)$-th entry is given by $\sigma(K_X^\h)_j$,  the $j$-th entry of $\sigma(K_X^\h)$, $\forall j\in[d]$. We define $\diag  (\sigma(Q_X^\h)   )$ similarly. 
Besides, there exists an orthogonal matrix $N \in  \RR^{d_e\times d_e}$ such that 
$$
N^\top V_Y^\h \varPsi = \diag\bigl (\sigma(V_Y^\h) \big ), \qquad N^\top {O^\h}^\top \varPsi = \diag \bigl (\sigma(O^\h) \big ),
$$
where $\sigma(V_Y^\h), \sigma(O^\h) \in \RR^{d_y}$ are the $d_y$ semi-singular values of $V_Y^\h$ and $O^\h$ respectively. 
Here $\diag  (\sigma(V_Y^\h)   )$ and $\diag  (\sigma(O^\h)   ) $ are   diagonal matrices in $\RR^{d_e\times d_y}$ defined similarly.

        \item \label{cond:task-wise homogeneous}
        \emph{(Task-wise Homogeneity)}  Note that
        $\sigma(K_X^\h)$ and 
        $\sigma(Q_X^\h)$ are vectors in $\RR^d$ and that $[d] $ is partitioned into $I$ disjoint sets $\cJ_1, \ldots, \cJ_I$.
        We require that the semi-singular values are the same within each task. That is, if 
        $j, j'\in \cJ_i$ for a same  task $i$, then we have 
        $$\sigma(Q_X^\h)_j =  \sigma(Q_X^\h) _{j'}, \qquad    \sigma(K_X^\h) _j =  \sigma(K_X^\h) _{j'}.$$

    \end{myenumi}    
\end{definition}


\Cref{cond:orthogonal}
asserts that $U_X^\h$ and $W_Y^\h$ are zero. By 
\eqref{eq: msa}  
and \eqref{eq:attn_score_prob}, since the attention scores are computed by $X^\top W_X q$ and the output is given by $ U_Y Y q$, this condition implies that the attention probability is only a function of the covariate and the query, and the output is just an aggregation of $Y$. 
This property further implies that the attention probability is independent of  the noise $\epsilon$ and the coefficient $G$, which enables us to simplify gradient flow dynamics. 
Moreover, \Cref{cond:orthogonal} 
requires that the column spaces spanned by $K_X^\h$ and $K_Y^\h$ are orthogonal. 
Similar property holds for $V_X^\h$ and $V_Y^\h$. 
As we will see in \Cref{sec:proof of decomposability preserved}, orthogonality of these subspaces ensures that $U_X^\h$ and $W_Y^\h$ remains zero through the  course of gradient flow dynamics.
 
In addition, 
\Cref{cond:U_Y and W_X aligned} suggests that the split weight matrices  are simultaneously diagonalizable   and  that the right eigenvectors are aligned with task-specific directions given by $(\varPhi, \varPsi)$. 
In particular, 
 let 
 \begin{align}\label{eq:define_eigenvalues_omega_mu} 
 \omega^\h = \sigma(K_X^\h) \odot \sigma(Q_X^\h), \qquad \mu^\h = \sigma(O^\h) \odot \sigma({V_Y^\h}),
\end{align} 
 which are vectors in $\RR^d$ and $\RR^{d_y}$ respectively.
 Then we have $W_X^\h = \varPhi \,\diag(\omega^\h) \varPhi^\top$ and 
$U_Y^\h = \varPsi \diag(\mu^\h) \varPsi^\top$. 
That is, $W_X^\h$ and $U_Y^\h$ are symmetric matrices and can be diagonalized by $\varPhi$ and $\varPsi$ respectively
, and the eigenvalues are given by $\omega^\h$ and $\mu^\h$ respectively. 

Furthermore, \Cref{cond:task-wise homogeneous} is a natural assumption given that within task $i$'s support $\cJ_i$, both $g_i$ and $X_{(i)}$ are isotropic random vectors under our data model. That is, the mean is zero and covariance is proportional to the identity matrix.
This condition implies  that $\omega^\h$   defined in \eqref{eq:define_eigenvalues_omega_mu} should also satisfy the same structure.
Here, $X_{(i)}$ is the submatrix of $X$ containing only the rows corresponding to task $i$'s support $\cJ_i$. 

Finally, we remark that the family of decomposable weights seems a bit restrictive. 
Nevertheless, we will justify this restriction by showing that the minimal ICL loss is approximately attained by decomposable weights in \Cref{sec:optimality_main} (see the discussion below \Cref{thm:optimality of ssa}).  
Notably, letting $Q_X^\h, K_X^\h, V_Y^\h, O^\h$ be diagonal matrices (but not necessarily squared) yields a special case of decomposable weights, and any decomposable weights can be transformed to this special case by rotation.
Another key feature of decomposable weights is that, when the initial weights are decomposable, such a structure is preserved through the course of gradient flow dynamics. This is shown in the lemma below.


\begin{lemma}[Preservation of Decomposibility along Gradient Flow]
    \label[lemma]{lem:decomposability preserved}
    Under \Cref{assump:data}, suppose the initialization of gradient flow is decomposable with respect to $(\varPhi,\varPsi)$ in the sense of \Cref{def:decomposability property}.
    Then the decomposability is preserved along the gradient flow trajectory.
    Moreover, the semi-singular values of the weight matrices  $O^\h, V_Y^\h, K_X^\h, Q_X^\h$ for $h\in[H]$ satisfy
    \begin{align*}
        \partial_t \sigma(K_X^\h) = - d_e^{-1/2} \sigma(Q_X^\h) \odot \sigma({A_{XX}^{\h\top}}), 
        &\quad \partial_t \sigma(Q_X^\h) = - d_e^{-1/2} \sigma(K_X^\h) \odot \sigma({A_{XX}^\h}), \\
        \partial_t \sigma(O^\h) = - \sigma(B_Y^\h) \odot \sigma({V_Y^{\h\top}}), &\quad \partial_t \sigma(V_Y^\h) = - \sigma({O^\h}^\top) \odot \sigma(B_Y^\h), 
    \end{align*}
    where $A_{XX}^\h$ is the top left $d\times d$ block of $A^\h$ and $B_Y^\h$ is the right $d_y\times d_y$ block of $B^\h$.
\end{lemma}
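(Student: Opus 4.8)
The plan is to verify that decomposability is an invariant of the gradient flow by checking that the defining algebraic structure is infinitesimally preserved, and then to read off the induced ODEs on the semi-singular values. Concretely, I would introduce the fixed orthogonal bases $M, N \in \RR^{d_e \times d_e}$ from \Cref{cond:U_Y and W_X aligned} at initialization, and track the transformed weights $\widetilde K_X^\h := M^\top K_X^\h \varPhi$, $\widetilde Q_X^\h := M^\top Q_X^\h \varPhi$, $\widetilde V_Y^\h := N^\top V_Y^\h \varPsi$, $\widetilde O^\h := N^\top {O^\h}^\top \varPsi$, which are diagonal (in the $d_e \times d$ or $d_e \times d_y$ sense) at time $0$. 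The goal is to show each of these transformed quantities stays diagonal for all $t$, with the cross blocks $U_X^\h, W_Y^\h$ and the orthogonality relations $\vspan(K_X^\h) \perp \vspan(K_Y^\h)$, $\vspan(V_X^\h) \perp \vspan(V_Y^\h)$ all preserved.

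The first key step is a structural computation of the gradient objects $A^\h$ and $B^\h$ from \eqref{eq:define_A_h} and \eqref{eq:define_B_h} under the assumption that, at the current time $t$, the weights are decomposable. Here I would use the crucial consequence of \Cref{cond:orthogonal} noted in the text: when $W_Y^\h = 0$ and $U_X^\h = 0$, the attention score $s^\h = X^\top W_X^\h q / \text{(normalization)}$ and the attention probability $p^\h$ depend only on $(X, q)$, hence are independent of the noise $\varepsilon$ and the coefficient $G$. Together with the Gaussian isotropy of $x_l$ and $q$ and the rotational symmetry (conjugating $X$ by $\varPhi^\top$, $q$ by $\varPhi^\top$, etc.), this forces $A^\h$ to be block-diagonal with $A_{XX}^\h$ diagonalized by $\varPhi$ (and its eigenbasis aligned through $M$) and $A_{XY}^\h = A_{YX}^\h = 0$, and similarly $B^\h$ has $B_Y^\h$ diagonalized by $\varPsi$ and $B_X^\h = 0$; moreover the task-wise homogeneity of $\omega^\h$ feeds through to make the corresponding eigenvalues of $A_{XX}^\h$ constant within each $\cJ_i$. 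This is the computational heart of the argument and I expect it to be the main obstacle — it requires a careful expansion of the softmax derivative matrix $P^\h = \diag(p^\h) - p^\h {p^\h}^\top$ and the moment/symmetry bookkeeping over the Gaussian data, establishing that every matrix appearing in \eqref{eq:define_A_h}--\eqref{eq:define_B_h} respects the $(\varPhi,\varPsi)$ block and eigenbasis structure.

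Given that structural description, the second step is to plug $A^\h, B^\h$ into the parameter-space dynamics \eqref{eq: gradient flow dynamics}. Writing $\partial_t K^\h = -d_e^{-1/2} Q^\h A^\h$ etc.\ in the transformed coordinates: because $A^\h$ is block-diagonal and $Q^\h$ has its $X$/$Y$ column spaces orthogonal, the $X$ and $Y$ parts decouple, so $\partial_t K_X^\h = -d_e^{-1/2} Q_X^\h A_{XX}^{\h\top}$ and the $Y$ part receives no mixing — this preserves $\vspan(K_X^\h)\perp\vspan(K_Y^\h)$ and keeps $W_Y^\h=0$; the analogous statement for $\partial_t O^\h, \partial_t V_Y^\h$ with $B_Y^\h$ preserves $U_X^\h = 0$ and $\vspan(V_X^\h)\perp\vspan(V_Y^\h)$. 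In the diagonalizing coordinates, $\partial_t \widetilde K_X^\h = -d_e^{-1/2}\, \widetilde Q_X^\h\, (M^\top A_{XX}^{\h\top} \varPhi M / \ldots)$ — since $\widetilde Q_X^\h$ is diagonal and $M^\top A_{XX}^\h \varPhi$-type factor is diagonal, the right-hand side is diagonal, so $\widetilde K_X^\h$ remains diagonal; the fixed bases $M, N$ therefore work for all time, which is exactly the preservation of \Cref{cond:U_Y and W_X aligned}. Reading off the diagonal entries gives precisely the stated ODEs $\partial_t \sigma(K_X^\h) = -d_e^{-1/2}\sigma(Q_X^\h)\odot\sigma(A_{XX}^{\h\top})$ and the other three.

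Finally, I would close the argument by a standard ODE-uniqueness/invariant-set remark: the decomposable weights form a submanifold of parameter space that is invariant under the vector field of gradient flow (the computations above show the vector field is tangent to it along the submanifold), and since the gradient-flow vector field is locally Lipschitz (the loss is smooth in $\Theta$), the unique solution starting from a decomposable initialization stays in the submanifold for all $t$ in its interval of existence. The only remaining care is to confirm the semi-singular-value decomposition stays valid — i.e., that $\widetilde K_X^\h$ etc.\ remain genuinely diagonal rather than acquiring off-diagonal mass through some degeneracy — which follows because we have shown the time derivative of every off-diagonal entry vanishes identically whenever the current configuration is diagonal, so the off-diagonal entries satisfy a homogeneous linear ODE with zero initial condition. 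Task-wise homogeneity (\Cref{cond:task-wise homogeneous}) is preserved by the same token, since the derivative of $\sigma(K_X^\h)_j - \sigma(K_X^\h)_{j'}$ for $j,j'\in\cJ_i$ is driven by $\sigma(A_{XX}^{\h\top})_j - \sigma(A_{XX}^{\h\top})_{j'}$ times a common factor, which vanishes by the symmetry established in step one.
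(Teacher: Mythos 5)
Your plan follows essentially the same route as the paper: the paper first establishes (via its simplification proposition for $A^\h$ and $B^\h$) that under decomposability only the $A_{XX}^\h$ block of $A^\h$ is nonzero with $\varPhi^\top A_{XX}^\h \varPhi$ diagonal and within-task homogeneous (and the analogue for $B_Y^\h$ with $\varPsi$), using exactly the ingredients you name — independence of $p^\h$ from $G$ and $\varepsilon$, Stein's lemma, and rotational invariance of the Gaussian data — and then differentiates each condition in time just as you do, reading off the semi-singular-value ODEs from the diagonal entries. The only caveat is that what you correctly flag as the ``computational heart'' is also where nearly all of the paper's work lies (a full Stein's-lemma expansion of every term appearing in $A^\h$ and $B^\h$), so on that step your proposal is a faithful outline rather than a completed argument.
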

When the weights are initialized to be decomposable, the gradient flow dynamics become simpler as described in \Cref{lem:decomposability preserved}.
In particular, this lemma shows that the weight matrices remains decomposable with the same orthogonal matrices $(\varPhi,\varPsi)$.
As a result, we only need to track the evolution os the semi-singular values of the weight matrices,
which enables us to focus on the gradient flow  dynamics in the spectral domain.


\subsection{Analysis of Gradient Flow Dynamics}\label[section]{sec:convergence_analysis}


We proceed to give a high level overview of our analysis of the gradient flow dynamics. 
Here we focus on the dominant part of the gradient flow dynamics to illustrate the main idea, and the complete analysis involving the higher order terms is given in \Cref{sec:approximation_dynamics} and \Cref{sec:appendix_convergence_analysis}.

When \Cref{def:decomposability property} is satisfied, we define $\baromega^\h\in\RR^I$ to be the vector where each $\baromega_i^\h$ is the average value of the entries of $\omega^\h \in \RR^d$ corresponding to the $i$-th task in the decomposition of $G$. That is, $\baromega^\h_i$ is equal to any $\omega^\h_j$
with $j \in \cJ_i$.
If the weights are decomposable, then the output is only a function of the data, $\baromega:=\{\baromega^\h\}_{h\in[H]}$,  and $\mu:=\{\mu^\h\}_{h\in[H]}$.
Recall that $d_y = I$ and thus $\mu^\h \in \RR^{d_y}$ is also a vector in $\RR^I$. 
By \Cref{cond:task-wise homogeneous} and \Cref{lem:decomposability preserved}, it suffices to focus on $\baromegah$ and $\muh$ when the weights are initialized to be decomposable.

We assume that at initialization, $\sigma(Q_X^\h) = \sigma(K_X^\h) = \sqrt{\omega_0} \vone_{d}, \sigma(O^\h) = \sigma(V_Y^\h) = (\mu_0^\h)^{-1/2}, $ where $\omega_0>0$ is a sufficiently small constant and $\mu_0^\h$ is a vector of positive and sufficiently small entries.
We thoroughly analyze the signal and interference components of the gradient flow dynamics and observe three phases: {\color{blue}warm-up}, {\color{blue}emergence}, and {\color{blue}convergence}. 
Specifically, for each task $i$, we analyze the dynamics of the parameters of the optimal head, $(\baromegaistar, \muistar)$, as well as those of the non-optimal heads $(\baromegaih, \muih)$ for $h\neq h_i^\star$. 
Here the optimal head for task $i$ is defined as $h_i^\star = \argmax_{h \in [H]}\muih$ and we have the abbreviations $\baromegaistar \triangleq \baromega_i^{(
    h_i^\star)}$ and $\muistar \triangleq \mu^{(h_i^\star)}$.
Notice that $h_i^\star$ is actually a function of time 
$t$. As we will state in \Cref{thm:convergence-multi-head-symmetric}, in initialization, $h_i^\star$ is determined by the initial values of $\{\mu^\h\}_{h\in[H]}$, and the choice of $\mu_0^\h$ ensures that $h_i^\star$ is unique for each task $i$. 
Then, as we will see below  $h_i^\star$ remains the same throughout the dynamics. 
That is, the optimal head for each task is fixed during gradient flow  and only determined in initialization. 

In the following, we characterize the three phases of gradient flow dynamics in the spectral domain. For ease of exposition, we rescale the time by letting  $t \leftarrow 2 d t$ in the following discussion.


\paragraph{Phase I: Warm-up.}
During the warm-up phase, the optimal heads gradually dominate the non-optimal heads. 
\siyurevise{Recall that $\phi_i =1 + \snr_i^{-1} = 1 + \sigma^2 d / (\lambda_i d_i)$ where $\lambda_i$ is the signal strength for task $i$.}
We can prove that the dynamics of $\baromegaih$ and $\muih$ are approximately given by
\begin{align}\label{eq:approximated_dyn_main1}
    \partial_t \log \baromegaih
    \approx \frac{\lambda_i \muih  }{\sqrt{d_e}}, \quad 
    \partial_t \log \muih
    \approx  d_i  \lambda_i \baromega_i^\h  - \underbrace{\sum_{h'=1}^H \fracexproductL[\omega^\h][\omega^\hprime] d_i  \lambda_i \phi_i \mu_i^\hprime}_{\ds\text{Cross-head interference}},
\end{align}
where the second term in the dynamics of $\muih$ is the cross-head interference term because it contains the contribution of $\mu_i ^\hprime$ for $h'\neq h$.
When $t$ is small, $\omega ^\h $ is not far from the initial value $\omega_0 \cdot \vone_{d}$ for all $h \in [H]$.
Thus $\exp(\langle \omega^\h, \omega^\hprime\rangle ) $ in \eqref{eq:approximated_dyn_main1} is roughly of the same order for all $h, h'$. 
As a result, the cross-head interference term is almost the same for all heads as long as $\baromega^\h$ is small enough at initialization.
Then, by comparing heads $h$ with $h_i^\star$, we eliminate the interference term and get 
\begin{align} \label{eq:approximated_dyn_main2}
    \partial_t \log\bigg(\frac{\muistar}{\muih}\bigg) \approx \lambda_i d_i (\baromegaistar - \baromegaih),  \quad 
    \partial_t \log\bigg(\frac{\baromegaistar}{\baromegaih}\bigg) \approx \frac{\lambda_i}{\sqrt{d_e}} (\muistar - \muih).
\end{align}
Note that $\mu_i^\star $ is the largest among $\mu_i^\h$ for all $h \in [H]$. The second equation in \eqref{eq:approximated_dyn_main2} ensures that $\baromegaistar/\baromegaih$  increases in times. Since they are initialized with the same value $\omega_0$, \eqref{eq:approximated_dyn_main2} implies that $\baromegaistar$ will grow faster than $\baromegaih$ for all $i \in I$, i.e., $\baromegaistar = \max_{i\in [I]}\baromegaih$. 
By examining the two equations in  \eqref{eq:approximated_dyn_main2}, we see that the dominance of $\muistar$ over each other $\muh$ will induce the dominance of $\baromegaistar$ over each other $\baromegaih$ through the second equation, which then in-turn reinforces the dominance of $\muistar$ via the first equation.

In summary, during the warm up stage, both $\muistar$ and $\baromegaistar$ will increase quickly and keep their dominating role over other heads in terms of solving task $i$. 
This phase terminates when there exists an $i \in [I]$ such that $\baromegaistar$ is sufficiently large such that the interference terms in the dynamics of $\muih$ cannot be canceled out across different heads.

\paragraph{Phase II: Emergence.}
After the warm-up phase, the optimal head parameters $\muistar$ and $\baromegaistar$  undergo a rapid and substantial increase for each task $i$. 
As the optimal head $h_i^\star$  becomes increasingly dominant, $\{ \mu_i^\h , \baromega_i^\h\}_{h \neq h_i^* }$ for $i\neq h_i^\star$ will remain small.
As a result, 
we can  only focus on  the optimal head parameters and neglect the non-optimal head parameters, which yields the following simplified dynamics: 
\begin{align}\label{eq:approximated_dyn_main3}
    \begin{split}
    \partial_t \muistar 
    &\approx \lambda_i d_i \cdot \Big (- {\exp(d_i(\baromegaistar)^2)} \cdot {L}^{-1} \phi_i{\muistar}^2 + (1-\baromegaistar\muistar ) \baromegaistar \muistar \Big ),  \\
    \partial_t \baromegaistar &\approx \Bigl (1 - \big (1 + {\exp(d_i(\baromegaistar)^2\bigr )} \cdot {L}^{-1}  \cdot d_i\phi_i ) \cdot \muistar  \baromegaistar \Big ) \cdot d_e^{-1/2} \lambda_i\baromegaistar\muistar.
    \end{split}
\end{align}
To see that the dynamics in \eqref{eq:approximated_dyn_main3} captures the gradient flow dynamics, we plot the dynamics of $\muistar$ in \Cref{fig:dynamics_simu}-(a) for $\phi_i = \lambda_i = 1$, $d_i = 10$, $d_e = 30$, and $L = 100$. The curve aligns well with the gradient flow dynamics in the simulation experiments, which is plotted in \Cref{fig:dyn}.

To analyze this dynamics, we first note that because the first equation in \eqref{eq:approximated_dyn_main3} has a factor $d_i$ while the second equation is multiplied by $d_e^{-1/2}$,  we can conclude that $\muistar$ changes much faster than $\baromegaistar$.
As a result, we  leverage a two-timescale analysis by first pushing $\muistar$ to its limit and write the limiting value as a function of $\baromegaistar$. 
Then we can only focus on the dynamics of $\baromegaistar$ and analyze its behavior.
In particular, setting the right-hand side of the first equation in \eqref{eq:approximated_dyn_main3} to zero, we have 
\begin{align}
    \muistar \approx \frac{\baromegaistar}{\baromegaistar^2 + \phi_i\exp\bigl(d_i(\baromegaistar)^2\bigr) L^{-1}}. 
    \label{eq:muistar dynamics}
\end{align}
Then, plugging \eqref{eq:muistar dynamics} into the dynamics of $\baromegaistar$, we have 
\begin{align}
    \partial_t \baromegaistar \approx \frac{\left(1-d_i(\baromegaistar)^2 \right) \cdot d_e^{-1/2}\lambda_i } {2 
    + {\phi_i\exp(d_i(\baromegaistar)^2)}/{(L(\baromegaistar)^2)}
    + {L(\baromegaistar)^2}/{(\phi_i\exp(d_i(\baromegaistar)^2))}}. 
    \label{eq:emergence}
\end{align}
Setting the right-hand side of \eqref{eq:emergence} to zero yields a critical point $\baromegaistar = d_i^{-1/2}$. 
Thus, when $\baromegaistar < d_i^{-1/2}$, the right-hand side of \eqref{eq:emergence} is positive,
which indicates the growth of $\baromegaistar$ before its  value reaches $d_i^{-1/2}$.
To further understand the dynamics of $\baromegaistar$, it suffices to consider the behavior of the denominator: 
\[
    2 
    + \frac{\phi_i\exp\bigl(d_i(\baromegaistar)^2\bigr)}{L(\baromegaistar)^2}
    + \frac{L(\baromegaistar)^2}{\phi_i\exp(d_i(\baromegaistar)^2)}
\]
For small $\baromegaistar$, the second term is large and dominates, thus resulting in the slow growth of $\baromegaistar$ at the beginning.
However, as $\baromegaistar$ \siyurevise{grows larger (but still much less than $d_i^{-1/2}$)}, the value of the denominator quickly decreases, making the growth of $\baromegaistar$ fast and giving rise to the emergence phase. 
We also plot the dynamics of $\baromegaistar$ according to the apprximated dynamics in \eqref{eq:emergence} in \Cref{fig:dynamics_simu}-(b). The curve aligns well  \Cref{fig:dyn}, which justifies the validity of the approximation.


\paragraph{Phase III: Convergence.}
\siyurevise{As $\baromegaistar$ approaches $d_i^{-1/2}$, we see from \eqref{eq:emergence} that the growth of $\baromegaistar$ slows down, and eventually $\baromegaistar$ converges to $d_i^{-1/2}$.
By also invoking \eqref{eq:muistar dynamics}, we can show that eventually}
\begin{align*}
\baromegaistar \to \frac{1}{\sqrt{d_i}}, \quad \muistar \to \frac{\sqrt{d_i}}{1+ e  d_i\phi_i L^{-1}}.
\end{align*}
While for each non-optimal head, $\baromegaih$ does not change much from the initialization and $\muih$ converges to $0$. 

Collecting all the above components, we arrive at the main result on gradient flow dynamics presented in \Cref{thm:convergence-multi-head-symmetric}.
For a rigorous analysis, we pick constants $c$ and $\epsilon$ such that 
\begin{align}
    3 \le c = o(\sqrt{\log L}), \quad \frac 1 2 > \epsilon \ge \frac{1}{c} + \frac{3}{1 + \sqrt{1+c^2/2}}.
    \label{eq:epsilon}
\end{align}
Picking $\epsilon =1/4$ suffices, and a smaller value of $\epsilon$ requires larger $L$.


\begin{figure}[ht]
    \centering
    \begin{subfigure}[b]{0.32\textwidth}
        \includegraphics[width=\textwidth]{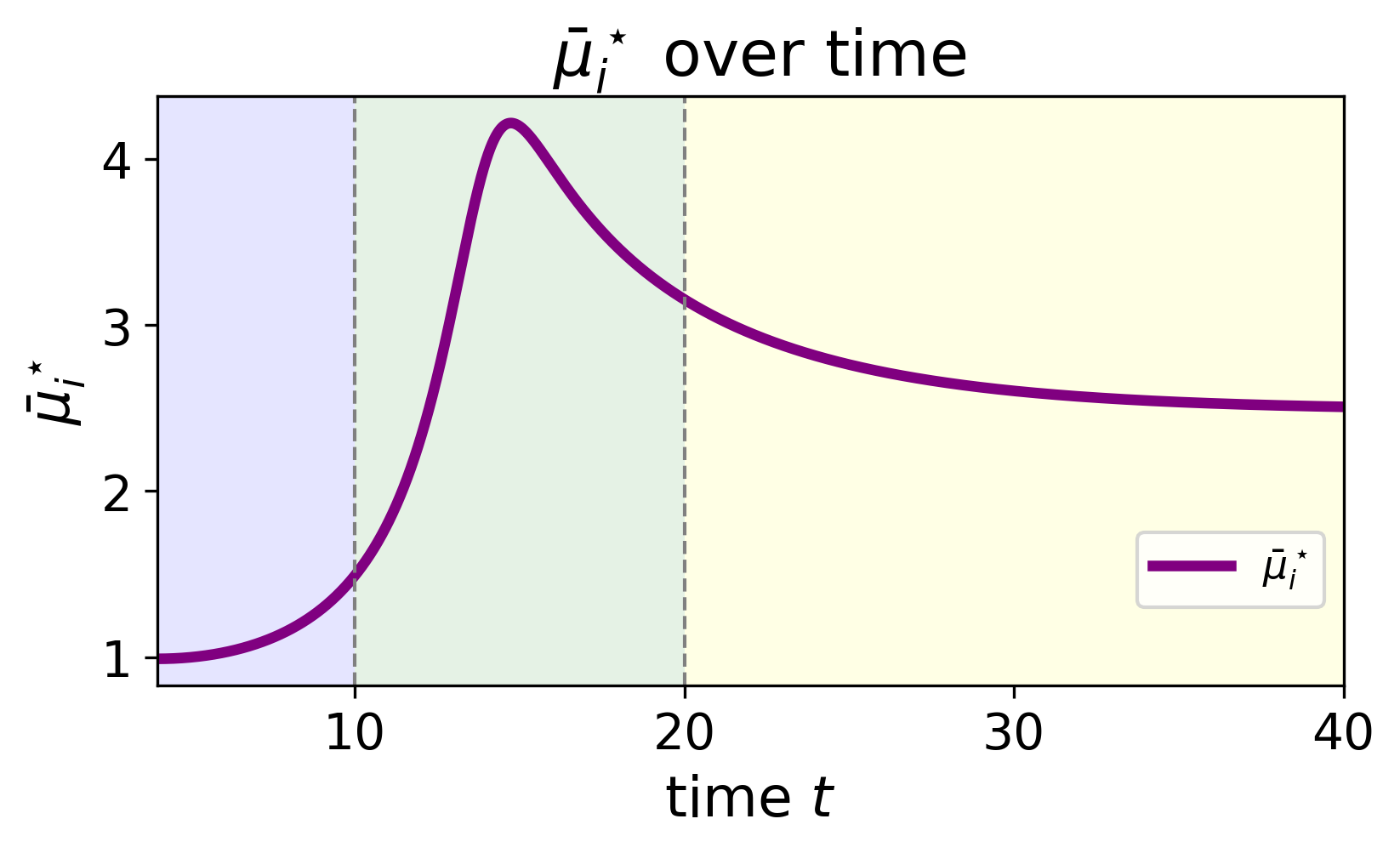}
        \caption{Dynamics of $\mu_i^\star$ in \eqref{eq:approximated_dyn_main3}}
        \label{fig:sub1}
    \end{subfigure}
    \hfill 
    \begin{subfigure}[b]{0.32\textwidth}
        \includegraphics[width=\textwidth]{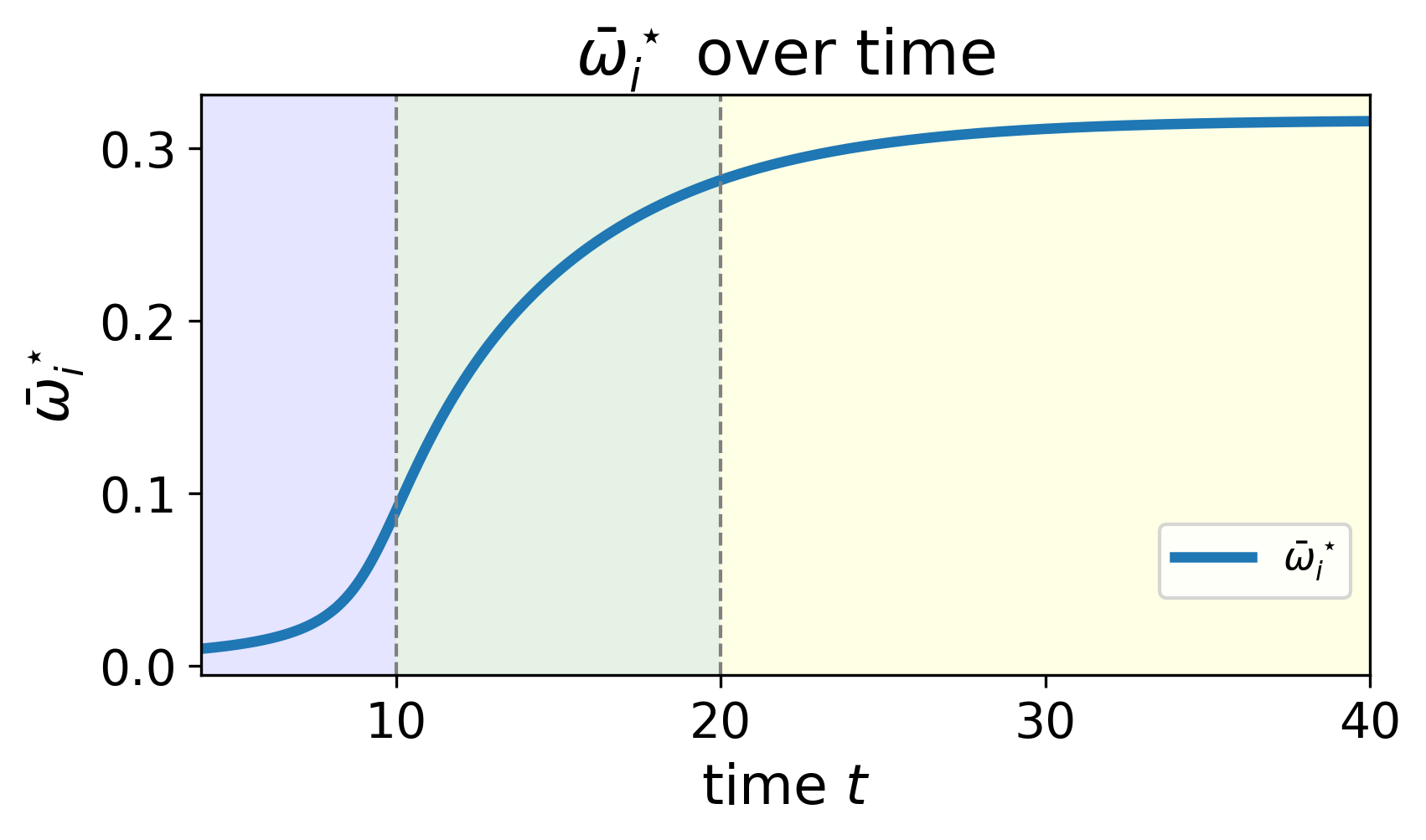}
        \caption{Dynamics of $\baromegaistar$ in \eqref{eq:emergence}}
        \label{fig:sub2}
    \end{subfigure}
    \hfill 
    \begin{subfigure}[b]{0.32\textwidth}
        \includegraphics[width=\textwidth]{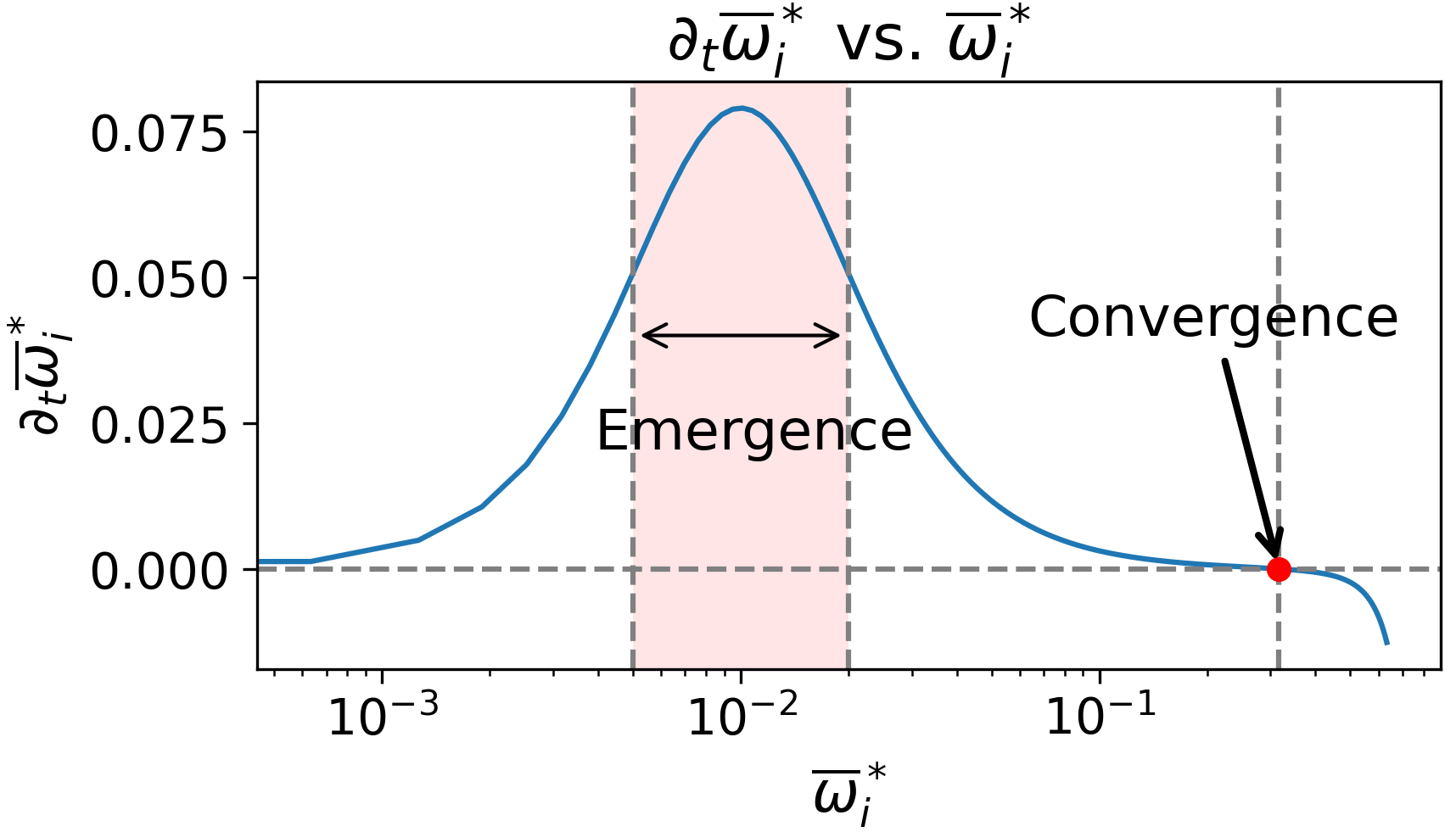}
        \caption{$\partial_t\baromegaistar$ as a function of $\baromegaistar$}
        \label{fig:sub3}
    \end{subfigure}
    \captionsetup{width=0.95\textwidth}
    \caption{
        Approximated dynamics of the gradient flow with $d_i = 10$, $d_e = 30$,  $L=100$, $\phi_i =1$ and $\lambda_i =1$.  
    In (a) we plot the dynamics of $\mu_i^\star$ in \eqref{eq:approximated_dyn_main3}, starting from $t =4$ and $(\baromegaistar, \muistar) = (0.01, 1)$.
    In (b) we plot the dynamics of $\baromegaistar$ in \eqref{eq:emergence}, starting from $t =4$ and $\baromegaistar = 0.01$. 
    We note that the approximated dynamics  given by \eqref{eq:approximated_dyn_main3} and \eqref{eq:emergence}
    are consistent with the numerical simulations in \Cref{fig:dyn} based on data. 
    In (c) we plot $\partial_t\baromegaistar$ as a function of $\baromegaistar$ with the same setup. When $\baromegaistar$ is small (left), the growth of $\partial_t\baromegaistar$ is slow. When $\baromegaistar$ reaches the middle region, the growth of $\baromegaistar$ accelerates. When $\baromegaistar$ grows to the right end, it converges to $d_i^{-1/2}$. }
    \label{fig:dynamics_simu}
\end{figure}

\begin{theorem}[Convergence of gradient flow for \ac{msa}]
    \label[theorem]{thm:convergence-multi-head-symmetric}
\revise{Under \Cref{assump:data} and \Cref{assump:scale}, let $L$ be sufficiently large such that \eqref{eq:epsilon} holds.}
Let $d_e \ge d$. 
Suppose the initialization is decomposable in the sense of \Cref{def:decomposability property} and moreover,
\begin{align*}
    \sigma(Q_X^\h) = \sigma(K_X^\h) = \sqrt{\omega_0} \vone_{d}, \quad 
    \sigma(O^\h) = \sigma(V_Y^\h) = \sqrt{\mu_0^\h}, 
\end{align*}
where $\omega_0$ and $\mu_0^\h\in \RR^{d_y}$ are positive and sufficiently small \siyurevise{in the sense of \Cref{assump:initialization}}.
Suppose $H \ge I$ 
and that at initialization, there exists a unique optimal head $h_i^\star$  for each task $i$ such that 
$
    {(\muistar(0) - \muih(0))}/{\muistar(0)} \ge c_1, \forall h\neq h_i^\star, \forall i\in[I],$
where $c_1>0$ is a positive constant. Let $\muistar$ and $\baromegaistar$ be abbreviations of $\mu_i^{(h_i^\star)}$, $\baromega_i^{\histar}$, respectively. 
We rescale $t \leftarrow 2 d t$.
Then the following holds:
\begin{myenumi}[
    ref = \Cref{thm:convergence-multi-head-symmetric}(\roman*),
    leftmargin=.5cm,
    align=left,
]
\item \label{result:optimal head takes all}
    \emph{(Optimal Head Takes All)} For each task $i\in[I]$, the optimal head $h_i^\star$ determined by initialization will remain optimal during the training process in the sense that $\muistar > \muih$ and $\baromegaistar > \baromegaih$ for all $h\neq h_i^\star$. 
    In particular, for any non-optimal head $h\neq h_i^\star$, we have \siyurevise{$\muih\le \muistar \exp(-\Omega(\sqrt{d_e}))$} and $\baromegaih \le O(\omega_0)$ upon convergence for all $i\in[I]$. 
\item \emph{(Emergence)}
    Fix a small constant $\alpha = \Theta(1) < 1$ and let $T_{0,i} = {\sqrt{d_e}\phi_i}/{(\lambda L \omega_0)}$
    be the threshold time for task $i$ to emerge.
    For time $t\le (1 - \alpha) T_{0,i}$, we have $\baromegaistar(t) \le \alpha^{-1} \omega_0$, \siyurevise{which means $\baromegaistar(t)$ is \say{fixed} around initialization}
    and the loss for task $i$ is larger than \siyurevise{$\Omega(\lambda_i d_i / d)$}, where $\lambda_i d_i / d$ is the value of the \ac{icl} loss if the attention output is $0$ for task $i$.
    If $t\ge (1 + \alpha) T_{0,i}$, then $\baromegaistar(t) \ge \sqrt{\alpha d_i^{-1}}$ and the loss for task $i$ is at most $ O(e^\alpha/\alpha)$ times the optimal loss for task $i$ in the single-head case given later by \eqref{eq:1head1task-optimal loss}:
    \[
        \cL_i^\star \defeq \frac{\lambda_i d_i}{d} \cdot \frac{e d_i \phi_i L^{-1}}{1 + e d_i \phi_i L^{-1}}. 
    \]
\item (\emph{Convergence}) \label{result:convergence}
    Consider the optimal head for task $i$. Let $\delta \in (0, 1)$.  
    With training time  
    \[\Omega\bigg( \max_{i\in[I]} \frac{\sqrt{d_e} \phi_i}{\lambda_i L \omega_0}  + \frac{\sqrt{d_e}}{\lambda_i\sqrt{d_i}} \log \frac{4}{\delta}\bigg) \le T \le \exp(O(\sqrt{d_e})),\]
    it holds that 
    \begin{align}
        \sup_{i\in[I]} \bigg|
                \frac{\baromegaistar(T)}{{d_i}^{-1/2}}  -  1
    \bigg| \le \delta, \quad\text{and}\quad 
            \sup_{i\in[I]} \bigg|
                \frac{\muistar(T)}{\sqrt{d_i}/(1 + e d_i \phi_i L^{-1})} - 1
            \bigg| \le O(\delta ),
        \label{eq:convergence-error}
    \end{align}
        which is the same as the global optimal solution for a single-head softmax attention trained only for task $i$ (see \eqref{eq:1head1task-optimal solution}).
\end{myenumi}
\end{theorem}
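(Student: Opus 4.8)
The plan is to combine the preservation of decomposability (\Cref{lem:decomposability preserved}) with a careful time-discretization of the gradient flow into the three phases, and then upgrade the informal approximate dynamics \eqref{eq:approximated_dyn_main1}–\eqref{eq:emergence} to rigorous bounds with controlled error terms. First I would invoke \Cref{lem:decomposability preserved} to reduce all of \eqref{eq: gradient flow dynamics} to coupled scalar ODEs for $\baromega^\h$ and $\mu^\h$ (using \Cref{cond:task-wise homogeneous} to collapse each $\omega^\h\in\RR^d$ down to $\baromega^\h\in\RR^I$), so that the entire theorem becomes a statement about a finite-dimensional ODE system. The bulk of the work is then to produce sharp moment estimates on the softmax attention probability vector $p^\h$ — computing $\EE[Z P^\h{}^\top Z^\top \cdots]$ and $\EE[(\sum_{h'}U^\hprime Z p^\hprime - y_q)p^\h{}^\top Z^\top]$ up to leading order in $d,L$ — which is exactly the "moment analysis on attention probability vectors" promised in the introduction; this yields the $A_{XX}^\h$ and $B_Y^\h$ entries appearing in \Cref{lem:decomposability preserved}, along with explicit error bounds of the form $\error[1]$ that are negligible when $L$ is large enough for \eqref{eq:epsilon} to hold.

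For part (i), I would set up a comparison/bootstrap argument: define the ratios $r_i^\h(t) = \muih(t)/\muistar(t)$ and the analogous ratio for $\baromega$, show from \eqref{eq:approximated_dyn_main2} that $\partial_t\log(\muistar/\muih) \ge 0$ is maintained (so the initial gap $c_1$ never closes and the optimal head is frozen in), and then propagate this through the emergence and convergence phases using a continuity/maximality argument on the set of times where the desired inequalities hold. The key point is that the cross-head interference term in \eqref{eq:approximated_dyn_main1} is \emph{common} to all heads as long as every $\baromega^\h$ stays near $\omega_0$, so it cancels in the comparison; once the optimal head emerges, the non-optimal $\muih$ see a net-negative drift and decay, while $\baromegaih$ has essentially zero drift (its growth rate is proportional to $\muih$, which is now exponentially small), giving the stated $\muih \le \muistar\exp(-\Omega(\sqrt{d_e}))$ and $\baromegaih \le O(\omega_0)$.

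For parts (ii) and (iii), I would carry out the two-timescale analysis rigorously: since $\partial_t\muistar$ carries a factor $\lambda_i d_i$ while $\partial_t\baromegaistar$ carries $d_e^{-1/2}$, I would show $\muistar$ equilibrates to the slow manifold \eqref{eq:muistar dynamics} up to a small error (a Tikhonov-type singular perturbation estimate, controlling the fast variable's distance to its quasi-equilibrium by a Grönwall argument), reducing the problem to the scalar ODE \eqref{eq:emergence} for $\baromegaistar$. Then the emergence time $T_{0,i}$ follows by integrating \eqref{eq:emergence}: for $\baromegaistar \ll d_i^{-1/2}$ the dominant balance in the denominator is the term $L(\baromegaistar)^2/(\phi_i e^{d_i(\baromegaistar)^2})$, giving approximately exponential-in-time growth of $\baromegaistar$ from $\omega_0$ with rate $\sim \lambda L\omega_0/(\sqrt{d_e}\phi_i)$ before the threshold, hence the slow-then-sudden transition and the value $T_{0,i} = \sqrt{d_e}\phi_i/(\lambda L \omega_0)$ (up to the $(1\pm\alpha)$ window); after the threshold, $\baromegaistar$ overshoots quickly to a neighborhood of $d_i^{-1/2}$, and since $d_i^{-1/2}$ is a stable fixed point of \eqref{eq:emergence} (the right side changes sign there), a standard Lyapunov/linearization argument gives exponential convergence at rate $\sim \lambda_i\sqrt{d_i}/\sqrt{d_e}$ to within $\delta$, and the companion estimate on $\muistar$ follows by plugging the converged $\baromegaistar$ into \eqref{eq:muistar dynamics}. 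The loss bounds in (ii) come from a direct computation of $\cL_i$ as a function of $(\baromegaistar,\muistar)$ using the same moment estimates, evaluated at the relevant phase boundaries.

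The main obstacle I anticipate is \emph{not} the ODE analysis per se but the rigorous control of the softmax moment expansions uniformly over the entire trajectory — in particular, bounding the higher-order terms in \eqref{eq:approximated_dyn_main1}–\eqref{eq:emergence} when $\baromegaistar$ is of order $d_i^{-1/2}$, so that $\exp(d_i(\baromegaistar)^2) = \Theta(1)$ and the nonlinearity is genuinely active; here one needs that $L$ is large enough (quantified by \eqref{eq:epsilon}) for the error terms to remain a lower-order correction, and one must check that the approximations remain valid not just pointwise but along the whole flow, including the delicate emergence window where $\baromegaistar$ transitions across scales. A secondary difficulty is handling the case $H > I$ cleanly: one must argue that the "extra" heads behave like non-optimal heads for every task and thus all their $\mu$-parameters decay, which requires the uniqueness-of-optimal-head hypothesis to be used carefully to avoid ties developing dynamically.
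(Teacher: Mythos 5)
Your overall architecture matches the paper's: reduce to spectral ODEs via \Cref{lem:decomposability preserved}, control the softmax moments to get the approximate dynamics with quantified errors, run a bootstrap comparison across heads for part (i), and a two-timescale reduction to the scalar ODE \eqref{eq:emergence} for parts (ii)–(iii) (the paper's version of your "slow manifold" is the quantity $\pi_i = \muistar(\baromegaistar + \phi_i\exp(d_i(\baromegaistar)^2)/(L\baromegaistar))$, shown to stay in $[1-\iota_i,1+\iota_i]$ throughout the growing stage). However, two specific steps as you state them would fail.

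First, in part (i) you claim that $\partial_t\log(\muistar/\muih)\ge 0$ is maintained from \eqref{eq:approximated_dyn_main2}. At $t=0$ one has $\baromegaistar=\baromegaih=\omega_0$ exactly, so the leading term $\lambda_i d_i(\baromegaistar-\baromegaih)$ vanishes and the sign of the derivative is decided by the error terms (of size $\tilde\xi(\baromegaistar+\baromegaih) + \tilde\zeta\phi_i\sum_{h'}\muihprime/L$), which can be negative; monotonicity of $\muistar/\muih$ genuinely fails early on. The paper's fix is not a sign argument but an integrated-drift argument: it lower-bounds $\partial_t\log(\muistar/\muih)\ge -O((\tilde\xi+\tilde\zeta)\lambda_i d_i\omega_0)$ and shows the total duration of the early warm-up steps is short enough that the initial gap $c_1$ can shrink by at most a constant fraction before $\baromegaistar-\baromegaih$ has grown large enough to make the ratio monotone thereafter. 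Your "continuity/maximality" remark gestures at this, but without the duration bound the bootstrap does not close.

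Second, in part (ii) you identify the dominant term in the denominator of \eqref{eq:emergence} for small $\baromegaistar$ as $L(\baromegaistar)^2/(\phi_i\exp(d_i(\baromegaistar)^2))$; it is actually the reciprocal term $\phi_i\exp(d_i(\baromegaistar)^2)/(L(\baromegaistar)^2)$ that is large and dominant there. The resulting growth law is $\partial_t\baromegaistar \approx \sqrt{d_e}^{-1}\lambda_i L\phi_i^{-1}(\baromegaistar)^2$, i.e.\ hyperbolic growth with finite-time blow-up of the comparison ODE at $t=\sqrt{d_e}\phi_i/(\lambda_i L\omega_0)=T_{0,i}$ — not exponential growth. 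This distinction is not cosmetic: an exponential-rate picture would put the time to reach $d_i^{-1/2}$ at $T_{0,i}\cdot\Theta(\log(1/(\omega_0\sqrt{d_i})))$, which is incompatible with the sharp $(1\pm\alpha)T_{0,i}$ window claimed in the theorem; the quadratic ODE is precisely what makes the transition "sudden" and log-free. You land on the correct value of $T_{0,i}$, but via the wrong mechanism, and the convergence-rate bookkeeping in (iii) (integrating $dt/d\baromegaistar$ near the stable fixed point $d_i^{-1/2}$, which the paper does via an $\tanh^{-1}$ integral giving the $\log(4/\delta)$ factor) would need to be redone against the correct dominant balance.
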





The upper bound of the convergence time is due to technical reasons where we need to control the cumulative effect of the approximation error in the gradient flow.
We compare our results to the existing literature regarding the training dynamics of attention-based models.
\citet{zhang2023trained} proved the convergence of gradient flow for one-layer single-head  linear attention, while our results are for multi-head softmax attention and we do not combine the key and query weight matrices into a single matrix.
Our results further provide a full characterization of the training dynamics from small initialization to convergence and demonstrate a phase transition in the emergence of the \ac{icl} capacity. 
\citet{huang2023context} was the first to study the training dynamics of softmax attention, but their analysis is limited to the single-head case with context features coming from an orthogonal dictionary. 
Moreover, the convergence point derived in \citet{huang2023context} is in sharp contrast to ours. 
As we will show in the following section, we have $\EE[\norm{p}_2^2] = O(L^{-1})$ upon convergence, while the convergence point in \citet{huang2023context} has $\EE[\norm{p}_2^2] =\Omega(1)$ as the tokens that are the same as the query will have dominating attention weights. 
The difference is due to the orthogonality between token features and not taking into account the effect of noise in their analysis. 
Indeed, we will show in the following section (\Cref{thm:optimality of ssa}) that having a dominating attention weight is suboptimal in the presence of noise.
Finally, \citet{deora2023optimization} also considered training a multi-head softmax attention model, but they focused on the classification setting and their analysis is limited to the regime of neural tangent kernel \citep{jacot2018neural} where the parameters do not evolve much during training.

We conclude this section with a discussion of the convergence point of the gradient flow.

\paragraph{Each Optimal Head Aligns with the Task's Subspace.}
By \Cref{result:convergence}, for the optimal head $h_i^\star$ corresponding to task $i$, the attention score is computed as $s = \baromegaistar (\varPhi X)_{(i)}^\top  (\varPhi q)_{(i)}$, where we use subscript \say{$(i)$} to denote the slice of a vector (matrix) corresponding to task $i$'s indices $\cJ_i$.
Intuitively, the attention score is obtained by (i) rotating the key and query vectors to align with the task's eigenvectors and (ii) computing the inner product between the aligned key and query vectors after projected onto the task's subspace.

\paragraph{What is the Model learned by Gradient Flow?}
In addition, it was previously found by \citet{ahn2023transformers, mahankali2023one} that the optimal weights of a one-layer linear attention model correspond to one step of preconditioned gradient descent.
While in our case, the model in \Cref{thm:optimality of ssa} found by gradient flow is analogous to emulating certain \emph{kernel regression} where the softmax attention aggregates similarity scores between the query and the keys across all tasks. 
Note that if $(\varPhi X)_{(i)} $ and $(\varPhi q)_{(i)}$ are of the same 2-norm, then the attention probability is given by $ p_l \propto \exp(- \frac{1}{2} \baromegaistar \norm{(\varPhi x_l)_{(i)} -  (\varPhi q)_{(i)}}_2^2)$, and the output for task $i$, $\hat y_i = \muistar Y p$, is the same as that of the kernel smoothing method based on the radial basis function kernel with bandwidth $(\baromegaistar)^{-1/2}$.



\section{Optimality of Convergence Point of Gradient Flow} \label[section]{sec:optimality_main}
Next, we discuss the optimality (in terms of the ICL loss value) of the model trained by gradient flow.
The main result is summarized in the following lemma.
Let us first understand the \ac{icl} loss for this solution. 
\begin{lemma}
[ICL Loss for \ac{msa}]
\label[lemma]{lem:msa icl loss-mainbody}
Under the setting of \Cref{thm:convergence-multi-head-symmetric}, the ICL loss value achieved by the convergence point of gradient flow is upper bounded by 
\begin{equation}
    \sum_{i=1}^I \frac{\lambda_i d_i }{d} \cdot \left(\frac{e d_i \phi_i L^{-1}}{1 + e d_i \phi_i L^{-1}} + O(L^{-(1-\epsilon)/2} + \delta + \omega_0^2 d) \right), 
    \owntag{\ConvergenceICL}
    \label{eq:icl-loss-convergence-point}
\end{equation}
where $\epsilon$ is defined in \eqref{eq:epsilon} and $\delta$ is the error for the convergence point in \eqref{eq:convergence-error}. 
\end{lemma}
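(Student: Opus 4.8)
The plan is to plug the convergence-point values of $\baromegaistar$ and $\muistar$ from \Cref{thm:convergence-multi-head-symmetric}(iii) into an expression for the ICL loss, task by task, and carefully account for the error terms. First I would use the decomposability of the limiting weights (guaranteed by \Cref{lem:decomposability preserved}) to write the model output as a sum over heads, $\hat y_q = \sum_h \mu^\h \odot (Y p^\h)$ in the rotated coordinates $\varPsi, \varPhi$. By \Cref{thm:convergence-multi-head-symmetric}(i), the non-optimal heads satisfy $\muih \le \muistar e^{-\Omega(\sqrt{d_e})}$ and $\baromegaih = O(\omega_0)$, so their contribution to the output is negligible; this should reduce the loss to a sum over tasks $i$, where task $i$ is handled essentially only by its optimal head $h_i^\star$ with parameters $(\baromegaistar, \muistar)$ close to $(d_i^{-1/2}, \sqrt{d_i}/(1 + e d_i \phi_i L^{-1}))$ up to the relative error $\delta$.

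Next I would carry out the single-head/single-task loss computation: for one softmax attention head with attention weight $\baromega$ and output weight $\mu$ acting on task $i$, the per-task loss is $\EE\|\mu Y_{(i)} p - (G^\top q)_{(i)}\|_2^2 / 2$, where $p = \softmax(\baromega (\varPhi X)_{(i)}^\top (\varPhi q)_{(i)})$. Expanding this, the loss decomposes into a signal-alignment term, a noise term scaling with $\sigma^2 \EE\|p\|_2^2$, and an interference/bias term; this is exactly the kind of moment analysis the paper flags in \S\ref{sec:convergence_analysis} (controlling $\EE\|p\|_2^2 = O(L^{-1})$ and the relevant softmax moments). The key estimates here are: (a) $\EE[\|p\|_2^2]$ concentrates around $e^{d_i \baromega^2} L^{-1}$ in the working regime $\baromega \approx d_i^{-1/2}$; and (b) the signal term is controlled by $\EE[Y p \,(\varPhi q)^\top]$-type quantities, which again reduce to softmax moments. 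Substituting $\baromega = d_i^{-1/2}$ and the optimal $\mu$ into the resulting quadratic in $\mu$ yields the leading term $\frac{\lambda_i d_i}{d}\cdot\frac{e d_i \phi_i L^{-1}}{1 + e d_i \phi_i L^{-1}}$, matching $\cL_i^\star$ from \Cref{thm:convergence-multi-head-symmetric}(ii).

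Finally I would propagate the three sources of error. The $\delta$ term comes from \eqref{eq:convergence-error}: perturbing $\baromegaistar$ and $\muistar$ by a relative $\delta$ changes the quadratic-in-$\mu$ loss by $O(\delta)$ after normalization by $\lambda_i d_i/d$. The $\omega_0^2 d$ term accounts for the residual contribution of the non-optimal heads' attention scores (their $\baromegaih = O(\omega_0)$ enters through cross terms like $\omega_0^2 \cdot d$ when summing the $d_i = \Theta(d)$ coordinates). The $L^{-(1-\epsilon)/2}$ term is the concentration error in the softmax-moment estimates — i.e., replacing the random quantities $\|p\|_2^2$, $Y p$, etc., by their expectations incurs fluctuations of this order under \Cref{assump:scale}, where $\epsilon$ from \eqref{eq:epsilon} sets the tail cutoff. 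Summing the per-task bounds over $i \in [I]$ gives \eqref{eq:icl-loss-convergence-point}.

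The main obstacle I expect is (b): getting a clean handle on the signal and interference terms of the softmax output, since the softmax nonlinearity couples all $L$ tokens and the query in a way that does not factorize. The warm-up/emergence analysis only needed the \emph{dominant} order of these moments, but for the optimality bound one needs the next-order correction (the $\phi_i = 1 + \snr_i^{-1}$ factor encodes precisely the noise-induced inflation of the effective signal), together with quantitative concentration — so the bulk of the work is a careful Gaussian moment computation plus a concentration argument for $\softmax$ applied to inner products of standard Gaussians, reusing the machinery developed for the dynamics analysis in \Cref{sec:approximation_dynamics}.
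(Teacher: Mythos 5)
Your proposal is correct and follows essentially the same route as the paper: decompose the loss task by task, bound the optimal head's contribution by a single-head perturbation analysis around the convergence point (using the Stein's-lemma/softmax-moment machinery with $\EE[\norm{p}_2^2]\approx e^{B}/L$, which is exactly how the paper's \Cref{lem:loss perturbation} and \Cref{cor:single task icl loss} produce the leading term and the $\delta$ and $L^{-(1-\epsilon)/2}$ errors), and show the non-optimal heads' leakage is negligible because $\muih$ is exponentially small. One small misattribution: the $\omega_0^2 d$ term does not come from the non-optimal heads (their contribution is exponentially suppressed through $\muih$), but from the \emph{optimal} head $h_i^\star$'s residual attention weights $\baromega_j^{(h_i^\star)}=O(\omega_0)$ on the coordinates of the other tasks $j\neq i$, which perturb its attention budget by $\xi_\omega^2 d = O(\omega_0^2 d)$ in the perturbation lemma.
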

\siyurevise{We defer readers to \Cref{lem:msa icl loss} and its follow-up proof for the detailed derivation.}
For sequence length $L$, the \ac{icl} loss scales with $O(d/L)$ and is simply a summation of individual loss across tasks, which matches our observation that each task $i\in[I]$ is completed by an individual head without cross-head interference.
Naturally, one may ask:
\begin{enumerate}[leftmargin=0.7cm]
    \item Given the fact that each task is handled by a unique head, 
    does this dominating head's behavior endure similarity to a \ssa? Does it achieve the optimal \ac{icl} loss on that \emph{individual} task?
    \item Does this model achieve the optimal \ac{icl} loss within the class of \ac{msa}?
\end{enumerate}
In pursuit of answers to the above question, we continue to study the optimality properties for both \ssa~and \ac{msa}.
\revise{In \S\ref{sec:optimality_single_head}, we investigate what are the optimal attention weights and loss value for a single-head softmax attention dealing with multiple tasks.
In \S\ref{sec:optimality_multi_head}, we study lower bound of the optimal \ac{icl} loss within the class of \emph{equiangular weights} under homogeneity assumption on the tasks, and show that the convergence point of the gradient flow matches the lower bound.
In \S\ref{sec:implicit_regularization}, we give a proof sketch to these optimality results with more insights into the softmax attention.
}

\subsection{The Single-Head Case: Allocation of Attention Budget}\label[section]{sec:optimality_single_head}

We first consider the single-head case where $H=1$, and the goal here is to characterize the minimal value of the ICL loss, as summarized in the following theorem.
Here we recall the definition $\phi_i = 1 + \sigma^2 d / (\lambda_i d_i)$.

\begin{theorem}[Optimal Loss for Single-Head Softmax Attention]
\label[theorem]{thm:optimality of ssa}
Suppose \Cref{assump:data} holds with $\varPhi$ and $\varPsi$ being identity matrices (which is without loss of generality).
Under \Cref{assump:scale}, suppose $L$ is sufficiently large such that \eqref{eq:epsilon} holds with constants $\epsilon$ and $c$.
For a single-head softmax attention, assume that $W_Y = 0$. 
Consider constant noise level $\sigma^2=\Theta(1)$. 
Suppose for each task $i\in[I]$, the signal strength satisfies either $\lambda_i = \Theta(1)$ or $\lambda_i = 0$ for any $i\in[I]$. 
The model with the weights 
    \begin{align*}
        W^\star = \begin{bNiceArray}{cccc|c}
            \sqrt{b_1^\star/d_i}\cdot I_{d_1} & 0 & \cdots & 0 & \Block{4-1}<\large>{\star}\\
            0 & \sqrt{b_2^\star/d_i}\cdot I_{d_2} & \cdots & 0 & \\
            \vdots & \vdots & \ddots & \vdots & \\
            0 & 0 & \cdots & \sqrt{b_I^\star/d_I} \cdot I_{d_I} & \\
            \hline 
            \Block{1-4}<\large>{0} & & & & \star
        \end{bNiceArray}, \quad 
        U^\star =\begin{bNiceArray}{c|cccc}
            0 & \Block{1-4}<\large>{0} & & & \\
            \hline 
            \Block{4-1}<\large>{0} & u_1^\star & 0 & \cdots & 0\\
             & 0 & u_2^\star & \cdots & 0\\
             & \vdots & \vdots & \ddots & \vdots\\
             & 0 & 0 & \cdots & u_I^\star
        \end{bNiceArray}
    \end{align*}
    approximately achieves the optimal loss in the sense that
    \[
        \cL(U^\star, W^\star) \le \inf_{U, W: W_Y = 0} \cL(U, W) +  O(L^{-(1-\epsilon)/2}).
        \]
    Moreover, $B^\star$ and $b^\star = (b_i^\star)_{i\in[I]}$ are obtained by solving the following optimization problem:
    \begin{align}
        \min_{c^{-2}2\log L\ge B \ge 0, \atop b\in \RR_+^{I},  \vone^\top b = B}  \cL_\simple(B, b)\defeq \sum_{i=1}^I \frac{\lambda_i d_i}{d} \bigg( 1 - \frac{b_i }{ b_i  +  d_i\phi_i \cdot \exp(B) \cdot L^{-1} } \bigg),  
        \owntag{\SingleheadICLOpt}
        \label{eq:water filling}
    \end{align}
    and $u_i^\star = {\sqrt{b_i^\star d_i}}/{( b_i^\star + d_i \phi_i  \exp(B^\star)/L)}$. 
    In addition, $|\cL(U^\star, W^\star) - \cL_\simple(B^\star, b^\star)| = O(L^{-(1-\epsilon)/2})$.
\end{theorem}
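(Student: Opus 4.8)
The plan is to reduce the infinite-dimensional optimization over $(U,W)$ with $W_Y=0$ to a finite-dimensional one over the scalar ``attention budget'' $B$ and the per-task allocations $b=(b_i)_{i\in[I]}$, and then to show both that the displayed candidate $(U^\star,W^\star)$ is nearly optimal and that its loss is close to $\cL_\simple(B^\star,b^\star)$. First I would argue that for any $(U,W)$ with $W_Y=0$, we may assume without loss of generality that $W_X$ and $U_Y$ are decomposable with respect to the eigenbasis of the data (i.e., diagonal after rotating by $\varPhi,\varPsi$, which are identity here), with task-wise constant eigenvalues. The orthogonal invariance of the Gaussian data and the block structure of $G$ in \Cref{assump:data} mean that averaging over the orthogonal group within each task block cannot increase the loss (by Jensen/convexity of the squared loss in the relevant averaged quantities), so the optimal weights can be taken in the decomposable family. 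This mirrors the remark after \Cref{def:decomposability property} that ``the minimal ICL loss is approximately attained by decomposable weights,'' and it is exactly the step the theorem calls a byproduct.

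Next, having restricted to decomposable weights, I would parametrize $W_X = \diag(\sqrt{b_i/d_i}\,\vone_{d_i})_i$ and $U_Y=\diag(u_i)_i$, so the attention score for the query is $s_l = \sum_i \sqrt{b_i/d_i}\,\langle (x_l)_{(i)},q_{(i)}\rangle$ and the output is $\hat y_{q,i} = u_i (Yp)_i$. The key computation is a moment/concentration analysis of the attention probability vector $p=\softmax(s)$: since $s_l$ is, conditionally on $q$, approximately Gaussian with variance $\sum_i b_i \|q_{(i)}\|_2^2/d_i \approx \sum_i b_i = B$ in the large-$d$ regime of \Cref{assump:scale}, one shows $\EE\|p\|_2^2 \approx \exp(B)/L$ up to lower-order corrections, and more refined identities for $\EE[p_l p_m s$-cross terms$]$ that appear when expanding $\EE\|\hat y_q - y_q\|_2^2/2$. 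Plugging these in and optimizing the quadratic in each $u_i$ (for fixed $b$) gives $u_i^\star = \sqrt{b_i d_i}/(b_i + d_i\phi_i \exp(B)/L)$ and the reduced objective $\cL_\simple(B,b)$ of \eqref{eq:water filling}, with the constraint $B\le c^{-2}2\log L$ coming from the range in \eqref{eq:epsilon} where the nonlinear approximation to softmax is controlled — outside this regime the softmax concentrates too sharply and the error bounds break, which is why the constraint is imposed rather than being genuinely active. The error terms should all be $O(L^{-(1-\epsilon)/2})$, matching the statement.

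The main obstacle I anticipate is making the softmax moment analysis rigorous and uniform: one must control $\EE\|p\|_2^2$, cross-moments like $\EE[\sum_{l}p_l^2 \langle x_l, q\rangle]$, and the interaction between the attention weights and the noise $\varepsilon_l$ in $Y$, all to relative precision $O(L^{-(1-\epsilon)/2})$, and all uniformly over the admissible range $0\le B\le 2c^{-2}\log L$ and over allocations $b$. This requires careful conditioning on $q$, Gaussian tail/anti-concentration estimates for the scores, and bounding the contribution of the event that $\|p\|_\infty$ is atypically large. The argument that restricting to decomposable weights is lossless (up to $O(L^{-(1-\epsilon)/2})$) rather than exactly optimal also needs care: I would establish it by showing the loss depends on $W_X$ only through task-blockwise spectral quantities after a symmetrization argument, handling the softmax nonlinearity by the same moment bounds. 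Finally I would verify the two comparisons stated — $\cL(U^\star,W^\star)\le \inf \cL + O(L^{-(1-\epsilon)/2})$ and $|\cL(U^\star,W^\star)-\cL_\simple(B^\star,b^\star)| = O(L^{-(1-\epsilon)/2})$ — by a sandwich: the reduced problem lower-bounds $\inf\cL$ up to the error, and the candidate achieves $\cL_\simple(B^\star,b^\star)$ up to the same error, so both inequalities follow. Detailed proofs of the moment identities and of the reduction would be deferred to the appendix; here I only sketch the structure.
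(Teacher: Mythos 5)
Your overall architecture (reduce to a finite-dimensional problem over a budget $B$ and allocation $b$, optimize the quadratic in $U_Y$ to get $u_i^\star$, approximate the softmax moments by $\exp(B)/L$ in the regime $B\le c^{-2}2\log L$, then sandwich) matches the paper's, but two steps as you describe them have genuine gaps.

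First, your reduction to decomposable weights via orbit-averaging is not justified. The loss is indeed \emph{invariant} under block-orthogonal conjugation $W_X\mapsto R^\top W_X R$ (by rotational invariance of the data), but to conclude that the orbit average $\EE_R[R^\top W_X R]$ — the task-blockwise scalar matrix — does no worse, you need convexity of $W_X\mapsto\cL(U,W)$, and the softmax destroys that: Jensen applied to the squared loss in $\hat y_q$ does not pass through the nonlinear map $W_X\mapsto p$. The paper avoids symmetrization entirely: after setting $U_X=0$ and optimizing $U_Y$ exactly, it lower-bounds the resulting $\cL(W)$ for \emph{arbitrary} $W_X$ by Cauchy–Schwarz on the sliced traces together with the bound $\EE[f]/\EE[g]\le\sup f/g$ (using that $f_1,f_2$ depend on $q$ only through $\|W_Xq\|_2^2$), which directly yields $\cL(W)\ge\inf_{B,b}\cL_{\lowerbound}(B,b)$ with $b_i=\trace_i(W_XW_X^\top)$ and $B=\vone^\top b$, no decomposability assumed. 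Decomposability of the optimizer is then a \emph{consequence} (the constructed $W^\star$ attains the relaxed bound), not an input.

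Second, you dismiss the constraint $B\le c^{-2}2\log L$ as "imposed rather than genuinely active" because the moment approximations break outside it. But the theorem claims $\cL(U^\star,W^\star)\le\inf_{U,W:W_Y=0}\cL(U,W)+O(L^{-(1-\epsilon)/2})$ over \emph{all} admissible weights, including those with $\|W_X\|_{\fro}^2=\Omega(\log L)$ or even $\Omega(d^{2-2\epsilon})$, where your exponential-regime estimates say nothing. One must separately prove that the saturation regime is suboptimal: the paper does this in two sub-regimes — for very large budget it passes to a normalized query on the sphere and shows the attention collapses onto $O(1)$ tokens so the noise variance forces $\Omega(1)$ loss (\Cref{lemma:SS-Attn lb-2}), and for intermediate budget it uses monotonicity of $\EE[\|p\|_2^2\given r]$ in $r$ plus the tail bound $1-\EE[\|p\|_2^2\given r]\le O(r^{-(1-\epsilon)})$ (\Cref{lem:p-moment-tail}, \Cref{lemma:SS-Attn lb}). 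Without this piece your sandwich only bounds the infimum over bounded-budget weights, which is strictly weaker than the claim. The remainder of your sketch — the Stein-lemma moment identities, the formula for $u_i^\star$, and the perturbation analysis showing the candidate attains $\cL_\simple(B^\star,b^\star)$ up to $O(L^{-(1-\epsilon)/2})$ — is consistent with the paper's treatment.
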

See \S\ref{sec:implicit_regularization} for a proof sketch and \S\ref{sec:global-optimality-single-head} for a detailed proof.
\revise{In a nutshell, \Cref{thm:optimality of ssa} shows that the optimal \ac{icl} loss and the corresponding weights for a \ssa~are related to the optimal value and solutions to an optimization problem.
Our proof is based on first lower bounding the \ac{icl} loss for any weights by the optimal value of \eqref{eq:water filling} and then show the constructed optimal solution achieves the lower bound.
Note that the softmax operation induces nonlinearity, and our proof technique is to identify the operating regimes of the softmax attention (\Cref{fig:regime} in \S\ref{sec:implicit_regularization}) and decide the optimal operating regime for the \ac{icl} task.
After that, we isolate the higher order nonlinear terms and approximate the lower order nonlinear term in the system to make rigorous nonlinear analysis tractable.} 
We also remark that setting $W_Y=0$ is a standard practice for analyzing attention-based models for \ac{icl} and similar assumptions are also employed in \citet{zhang2023trained,von2023transformers, ahn2023transformers,huang2023context}. 

\revise{Below we provide discussion on the insights and implications of \Cref{thm:optimality of ssa}.}

\paragraph{Optimal Solution is Decomposable.}
For general orthogonal matrices $\varPhi$ and $\varPsi$, the corresponding optimal weights can be constructed by applying the rotation to $W_X^\star$ and $U_Y^\star$ while keeping the remaining entries zero, i.e., the optimal weights $\widetilde W^\star$ and $\widetilde U^\star$ are given by
\[
    \widetilde W_X^\star = \varPhi^\top W_X^\star \varPhi, \quad 
    \widetilde W_Y^\star = 0, \quad 
    \widetilde U_Y^\star = \varPsi^\top U_Y^\star \varPsi, \quad 
    \tilde U_X = 0.
\]
To see the equivalence, one can view the rotated data $\tilde X = \varPhi^\top X$, $\tilde \varepsilon = \varPsi^\top \varepsilon$ and $\tilde q = \varPhi^\top q$ as the new input, and the loss value is preserved due to the rotational invariance of the input distribution (See \eqref{eq:rotation-optimality}).
More importantly, we remark that the optimal weights are decomposable since  $W_X^\star$ and $U_Y^\star$ are diagonal, and one can always construct the corresponding $Q_X^\star$, $K_X^\star$, $V_Y^\star$ and $O^\star$ (also subject to rotation for general $\varPhi$ and $\varPsi$) that share common eigenvector spaces to get the same model.
Moreover, it is clear that the optimal weights also have homogeneous entries within each task.
These facts justify the decomposability condition in \Cref{def:decomposability property}.

\paragraph{Allocation of Attention Budget.}
We interpret the sum of the squared eigenvalues, $\|\omega\|_2^2 = \|W_X\|_{\fro}^2$, as the total \emph{attention budget}.
Correspondingly, one can intuitively understand the sum of the squared eigenvalues within task $i$'s support, $\sum_{j\in\cJ_i}\omega_j^2$, as the \emph{allocation of attention budget} for task $i$.
Then in \eqref{eq:water filling}, $B$ and $b=\{b_i\}_{i\in[I]}$ are the attention budget and the allocation vector of the budget, respectively.
Fixing the attention budget $B$, the inner optimization problem over $b$ is simply trying to optimally allocate the budget according to the signal-to-noise ratios $\{\snr_i\}_{i\in[I]}$ and the task dimensions $\{d_i\}_{i\in[I]}$, which is a convex optimization problem.

\paragraph{Softmax Attention Works in the Exponential Regime.}
Note that the optimal attention budget scales as $B^\star=o(\log L)$ (see the justification of this in \Cref{fact:order of attention budget}).
As we will illustrate later in \S\ref{sec:implicit_regularization}, such an attention budget corresponds to the \emph{exponential regime}\footnote{See \Cref{fig:regime} for a graphic illustration of the \say{exponential} behavior of the softmax attention in this regime} of softmax attention. 
This is a consequence of the fact that the $\exp(B)$ in the denominator of \eqref{eq:water filling} penalizes the attention budget. 
In contrast, in the \emph{saturation regime} where the attention budget $B=\Omega(\log L)$, the attention probability vector concentrates on a few tokens, which is suboptimal in the presence of noise (\Cref{lemma:SS-Attn lb-2} and \Cref{lemma:SS-Attn lb}).

Given that the attention budget $B=o(\log L)$, the attention probability vector is delocalized\footnote{This is a direct result of \Cref{lemma: E[pq]-moment} applied to a single head which says $\EE[\norm{p}_2^2] \approx \exp(\norm{\omega}_2^2)/L =o(1)$} so that the attention is spread out to capture the information from similar tokens in regression tasks and average out the noise.
This is in sharp contrast to the solution found by \citet{huang2023context} where the attention probability is concentrated on the tokens that are the same as the query, which is because their context features are from an orthogonal dictionary and the data is noiseless.

\paragraph{\ac{msa} Acts as Multiple Optimal Single-Head Softmax Attention.}
Suppose an attention head focuses on a single task, say task $i$, and treats the other tasks as with zero signal strength, i.e. $\lambda_j = 0$ for any $j\neq i$. 
We apply \Cref{thm:optimality of ssa} to this head, and the optimal eigenvalues are 
\begin{align}
    \baromega_i = \frac{1}{\sqrt{d_i}}, \quad \mu_i = \frac{\sqrt{d_i}}{1 + e d_i \phi_i L^{-1}}, \quad, \baromega_j = \mu_j = 0, \quad \forall j \in [I]\backslash\{i\}.
    \label{eq:1head1task-optimal solution}
\end{align}  
This is obtained by letting $b_i = B$ for the inner optimization in \eqref{eq:water filling} and directly solving the outer optimization problem which gives optimal solution at $B^\star=1$. 
The optimal loss value (up to $O(L^{-(1-\epsilon)/2})$ error) for task $i$ is then given by 
\begin{align}
    \cL_i^\star \defeq \frac{\lambda_i d_i}{d} \cdot \frac{e d_i \phi_i L^{-1}}{1 + e d_i \phi_i L^{-1}} = O\bigg(\frac{d}{L}\bigg),
    \label{eq:1head1task-optimal loss}
\end{align} 
Comparing this to the multi-head model trained by gradient flow in \Cref{thm:convergence-multi-head-symmetric}, we discover that at the convergence point, the optimal head is acting as an optimal \ssa~for each task.
Is this optimal within the class of \ac{msa}? 
We will next investigate this question over a class of equiangular weights, which is a natural class to consider when the tasks are homogeneous in the sense that they share the same dimension and signal strength.




\subsection{The Multi-Head: Equiangular Lower Bound and Bayesian Risk}\label[section]{sec:optimality_multi_head}


We start by introducing the following class of weights, which we call \emph{equiangular weights}.
\begin{definition}[Equiangular Weights]
    \label[definition]{def:equiangular}
    For weights of a \ac{msa} that are decomposable in the sense of \Cref{def:decomposability property}, we say the corresponding task-aggregated eigenvalues $\{\baromega^\h\}_{h\in[H]}$  are \emph{equiangular} if there exist constants $a, b\in\RR$ such that $\norm{\baromega^\h}_2^2 = a$ and $\langle \baromega^\h, \baromega^\hprime \rangle = b$ for all distinct $h, h'\in[H]$.
\end{definition}

The equiangular property ensures the cosine similarity between any two task-aggregated weights to be the same, \revise{which reduces the degree of freedom in the system and it suffices to optimize over $a$ and $b$.}
We remark that the equiangular weights are a reasonable class to consider under the task homogeneity assumption that all tasks have the same dimension and signal strength.
The following theorem addresses the optimality among the class of \ac{msa} with equiangular weights.
\begin{theorem}[Lower Bound of \ac{msa} within Equiangular Weights]
    \label{thm:optimality of equiangular}
    Under \Cref{assump:data} and \Cref{assump:scale}, suppose $H\ge 2$ and $I$ tasks are homogeneous such that $d_i = d/I = \bard$ and $\lambda_i = \lambda$ for all $i\in[I]$.
    \revise{Suppose task coefficient $g_i \iidfrom \cN(0, \lambda / d \cdot I_{\bard})$ for all $i\in[I]$.}
Let $c$ and $\epsilon$ satisfy \eqref{eq:epsilon}, and consider the regime
where $\norm{\baromega^\h}_\infty \le \sqrt{2 \log L / 3 d c^2}, \norm{\mu^\h}_\infty \le L^{3/4 - \epsilon/2}$ for all $h\in[H]$.
Then for any \ac{msa} with equiangular weights in the sense of \Cref{def:equiangular}, the \ac{icl} loss is lower bounded by 
\begin{align}
    \max\bigg\{\underbrace{\frac{\lambda}{ \phi^{-1} d^{-1} L \cdot  (H-1) + 1 }}_{\ds\text{\emph{\MultiheadICLLB}}} + O(L^{-\epsilon/2}), \quad \cR_{\mathrm{Bayes}}\bigg\}    
    \label{eq:lower-bound-multi-head}
\end{align}
where $\cR_{\mathrm{Bayes}}$ is the Bayesian risk \siyurevise{for this \ac{icl} task} with prior $g_i \iidfrom \cN(0, \lambda / d \cdot I_{\bard})$ for all $i\in[I]$.
Moreover, by decomposing $\cR_{\mathrm{Bayes}} := \mathrm{Variance} + \mathrm{Bias}$, the two terms are given asymptotically by
\begin{equation}
    \begin{aligned}
        \mathrm{Variance} &= I \sigma^2 \cdot \frac{br + (1 + r) - \sqrt{(br - 1 + r)^2 + 4 b}}{2 \sqrt{(br- 1 + r)^2 + 4 b}}, \\
        \mathrm{Bias} &= \lambda \cdot \bigg(\frac{br (1+r) + (1 - r)^2 - |1-r|\sqrt{(br - 1 + r)^2 + 4 b r}}{2 \sqrt{(br- 1 + r)^2 + 4 b}} + \bigg(1 - \frac 1 r\bigg) \ind (r > 1)\bigg), 
    \end{aligned}
    \notag
\end{equation}
where $b = \sigma^2/\lambda$ and $r = \bard /L$.
\end{theorem}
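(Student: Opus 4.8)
\textbf{Proof Plan for \Cref{thm:optimality of equiangular}.}
The plan is to establish the two lower bounds in \eqref{eq:lower-bound-multi-head} separately and then take their maximum. The Bayesian risk bound $\cR_{\mathrm{Bayes}}$ is immediate: any \ac{msa} model is a particular (measurable) function of the context $Z$ and query $z_q$, so its mean-squared prediction error cannot beat the Bayes-optimal predictor $\EE[y_q \mid Z, z_q]$; the explicit $\mathrm{Variance}+\mathrm{Bias}$ decomposition then follows by a direct posterior computation under the Gaussian prior $g_i \iidfrom \cN(0,\lambda/d\cdot I_{\bard})$, where, conditioning on $X$, the posterior of each task block is Gaussian and the remaining randomness in $X^\top X/L$ concentrates via Marchenko--Pastur with aspect ratio $r = \bard/L$ — this is where the closed forms in terms of $b$ and $r$ come from. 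I would carry out this part second, since it is a self-contained random-matrix calculation.

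The main work is the \MultiheadICLLB{} bound. First, I would invoke decomposability (\Cref{def:decomposability property}) together with the homogeneity assumption and \Cref{def:equiangular} to reduce the whole model to the scalars $\{\baromega^\h, \mu^\h\}_{h\in[H]}$ with $\|\baromega^\h\|_2^2 = a$, $\langle \baromega^\h,\baromega^\hprime\rangle = b$, so the ICL loss becomes a function of $(a,b)$ and $\{\mu^\h\}$ alone. Next, under the stated magnitude restrictions $\|\baromega^\h\|_\infty \le \sqrt{2\log L/3dc^2}$ and $\|\mu^\h\|_\infty \le L^{3/4-\epsilon/2}$, I would apply the softmax moment estimates (the analogues of \Cref{lemma: E[pq]-moment} used for \Cref{lem:msa icl loss}) to expand the loss: the dominant contributions are a ``signal'' term linear in $\sum_h \mu_i^\h \baromega_i^\h$, a ``self-noise'' term proportional to $L^{-1}\exp(\langle\omega^\h,\omega^\h\rangle)\phi(\mu_i^\h)^2$, and the cross-head interference terms proportional to $L^{-1}\exp(\langle\omega^\h,\omega^\hprime\rangle)\phi\,\mu_i^\h\mu_i^\hprime$, all up to $O(L^{-\epsilon/2})$ error. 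Then I would minimize this quadratic-type expression over the $\mu$'s for fixed geometry: optimizing first over $\{\mu_i^\h\}$ (a convex quadratic minimization) yields, for each task $i$, a value controlled by an inner-product / Gram-matrix quantity, and the equiangular structure makes that Gram matrix $a\cdot I + b(\vone\vone^\top - I)$ whose inverse is explicit, giving the denominator $\phi^{-1}d^{-1}L(H-1)+1$ after bounding $a,b$ using $\|\baromega^\h\|_\infty$ and the constraint $a \le \sum_j \|\baromega^\h\|_\infty^2$. Summing over the $I$ homogeneous tasks produces $\lambda/(\phi^{-1}d^{-1}L(H-1)+1)$.

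For the write-up I would organize it as: (Step 1) reduce to scalar parameters via decomposability and equiangularity; (Step 2) Taylor/moment expansion of the ICL loss with explicit error control using the magnitude hypotheses; (Step 3) minimize over $\{\mu^\h\}$ for fixed $(a,b)$ and then bound the resulting expression using the equiangular Gram structure and the $\ell^\infty$ constraint on $\baromega^\h$ to extract the $(H-1)$ factor; (Step 4) the Bayes lower bound and its $\mathrm{Variance}/\mathrm{Bias}$ decomposition; (Step 5) combine. The main obstacle I anticipate is Step 2--3: tracking the nonlinear softmax contributions tightly enough that the cross-head interference term is captured with the right constant (so that the $(H-1)$ in the denominator is exact and not merely order-correct), while simultaneously keeping the approximation error at $O(L^{-\epsilon/2})$ — this is precisely the delicate ``prune higher-order terms, approximate the leading nonlinear term'' analysis flagged in \S\ref{sec:implicit_regularization}, and ensuring the $\mu$-optimization respects the $\|\mu^\h\|_\infty \le L^{3/4-\epsilon/2}$ box rather than being unconstrained will require checking that the unconstrained optimum lies inside the box.
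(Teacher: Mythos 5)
Your plan follows essentially the same route as the paper's proof in \S\ref{sec:lowerbound_multihead}: the Bayes bound is the trivial "no predictor beats the posterior mean" argument plus a Marchenko--Pastur computation, and the \MultiheadICLLB{} bound is obtained by reducing to the spectral parameters, expanding the loss via the softmax moment estimates into per-task quadratics $\cL_i = 1 - 2\mu_i^\top\baromega_i + \mu_i^\top(\baromega_i\baromega_i^\top + B)\mu_i$, minimizing over $\mu_i$, and explicitly inverting the equiangular Gram matrix $B = (\tilde a - \tilde b)I + \tilde b E$ (the paper then applies Jensen across the $I$ tasks and optimizes the resulting scalar expression over $x = \bard(a-b)\ge 0$ and $y=\bard b$, which is where the exact $(H-1)$ emerges via $\sup_{x\ge 0} e^{x/H}(H-1)x/(e^x-1) = H-1$, rather than from the $\ell^\infty$ constraint on $\baromega^\h$, which only serves to validate the moment approximations). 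One small correction: your worry that the $\mu$-optimization must respect the box $\norm{\mu^\h}_\infty \le L^{3/4-\epsilon/2}$ is inverted — since you are proving a lower bound, minimizing over $\mu$ \emph{unconstrained} is automatically valid (it can only make the bound weaker), and this is exactly how the paper handles the fact that the unconstrained optimizer $\mu_i^\star = (\baromega_i\baromega_i^\top + B)^{-1}\baromega_i$ may violate the box, at the cost of only a constant factor.
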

See \S\ref{sec:implicit_regularization} for a proof sketch and \S\ref{sec:lowerbound_multihead} for a detailed proof.
We remark the norm constraints on the weights are without much loss of generality since the optimal weights for the \ssa~satisfy $\baromega = O({d}^{-1/2})$ and $\mu = O(\sqrt d)$ as we have shown in \Cref{thm:optimality of ssa}, which means the attention is working in the exponential regime as we will discuss in \S\ref{sec:implicit_regularization}.
As a corollary, we give an affirmative answer to the question of the optimality of the model found by gradient flow.
\begin{corollary}
    \label[corollary]{cor:equiangular better than symmetric}
    Under the setting of \Cref{thm:optimality of equiangular}, let $H=I$.
    The \ac{icl} loss of the model in \Cref{thm:convergence-multi-head-symmetric} achieves the lower bound of the \ac{icl} loss given by \Cref{thm:optimality of equiangular} up to a constant multiplicative factor.
\end{corollary}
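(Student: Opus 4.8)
The plan is to read off an upper bound on the ICL loss of the gradient-flow limit from \Cref{lem:msa icl loss-mainbody}, read off a lower bound from \eqref{eq:lower-bound-multi-head} in \Cref{thm:optimality of equiangular}, and compare the two after specializing to the homogeneous regime $d_i = \bard = d/I$, $\lambda_i = \lambda$, $\phi_i = \phi := 1 + \sigma^2 d/(\lambda\bard)$, with $H = I$. For the lower bound it is enough to retain only the $\MultiheadICLLB$ term inside the maximum, namely $\lambda/(\phi^{-1}d^{-1}L(H-1)+1) + O(L^{-\epsilon/2})$, since $\max\{A,B\} \ge A$; note that $H = I \ge 2$ guarantees $H - 1 \ge 1$, so this term is positive. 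Specializing \eqref{eq:icl-loss-convergence-point}, each summand carries weight $\lambda_i d_i/d = \lambda/I$ and has an identical inner value, so the leading part of the upper bound collapses to $\lambda \cdot \frac{e\bard\phi/L}{1 + e\bard\phi/L}$, while the leading part of the retained lower bound is $\lambda / \big(\frac{L(I-1)}{\phi I\bard} + 1\big)$.

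Next I would carry out the purely algebraic comparison of these two leading terms. Setting $t := e\bard\phi/L$, one has $\frac{L(I-1)}{\phi I\bard} = \frac{e(I-1)}{It}$, hence
\[
\frac{\lambda \cdot \frac{t}{1+t}}{\lambda\big/\big(\frac{e(I-1)}{It}+1\big)} \;=\; \frac{t}{1+t}\cdot\Big(\frac{e(I-1)}{It}+1\Big) \;=\; \frac{e(I-1)}{I(1+t)} + \frac{t}{1+t} \;<\; e+1,
\]
because $\frac{I-1}{I} < 1$ and $\frac{t}{1+t} < 1$; thus the ratio of the leading terms is bounded by the universal constant $e + 1$ for every admissible value of $\bard, L, \phi, \lambda, I$. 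Moreover, \Cref{assump:scale} forces $\bard/L = \Theta(1)$ and $\phi \ge 1$, so $t = \Theta(1)$ and both leading terms are in fact $\Theta(\lambda)$, with the implied constants depending only on the constant $C$ of \Cref{assump:scale}.

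The only genuine work — and the main obstacle — is to check that the error terms cannot spoil this comparison. On the upper-bound side, summing $\sum_i \lambda_i d_i/d = \lambda$ turns the error into $\lambda \cdot O(L^{-(1-\epsilon)/2} + \delta + \omega_0^2 d)$; choosing $L$ large, the convergence tolerance $\delta$ small, and the initialization scale $\omega_0$ small (in the sense of \Cref{assump:initialization}, so that $\omega_0^2 d = o(1)$) makes this error no larger than, say, the leading term $\lambda\frac{t}{1+t}$, giving $\cL_{\mathrm{GF}} \le 2\lambda\frac{t}{1+t}$, where $\cL_{\mathrm{GF}}$ denotes the ICL loss of the gradient-flow limit. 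On the lower-bound side, the additive $O(L^{-\epsilon/2})$ is $o(1)$ while the leading term is $\Theta(\lambda)$, so for $L$ large the lower bound $\mathrm{LB}$ appearing in \eqref{eq:lower-bound-multi-head} is at least $\frac{1}{2}\lambda/\big(\frac{L(I-1)}{\phi I\bard}+1\big)$. Combining these with the algebraic bound gives $\cL_{\mathrm{GF}} \le 4(e+1) \cdot \mathrm{LB}$, a universal constant multiple, which is the assertion. For completeness, the reverse inequality holds too: $\cL_{\mathrm{GF}} \ge \cR_{\mathrm{Bayes}}$ for any predictor, and the gradient-flow limit is, up to $O(\omega_0)$ perturbations, equiangular with $\|\baromega^\h\|_2^2 \approx 1/\bard$ and $\langle \baromega^\h, \baromega^\hprime \rangle \approx 0$, so \Cref{thm:optimality of equiangular} applies to it and gives $\cL_{\mathrm{GF}} \ge \mathrm{LB} - o(1)$; hence $\cL_{\mathrm{GF}} = \Theta(\mathrm{LB})$.
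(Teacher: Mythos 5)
Your proposal is correct and follows essentially the same route the paper takes: it compares the convergence-point loss from \Cref{lem:msa icl loss-mainbody} against the $\MultiheadICLLB$ term of \eqref{eq:lower-bound-multi-head} in the homogeneous case with $H=I$, exactly as in the paper's Case~(i) discussion and \Cref{tab:bounds_summary} (the paper obtains the constant $e$, you obtain $e+1$ plus explicit bookkeeping of the $O(L^{-(1-\epsilon)/2}+\delta+\omega_0^2 d)$ and $O(L^{-\epsilon/2})$ error terms, which is a harmless difference). The closing remarks on the reverse inequality are not needed for the statement but do not affect correctness.
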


\paragraph{Lower Bound is Achieved by the Convergence Point.}
Now we compare the upper bound of the \ac{icl} loss with the lower bound in \eqref{eq:lower-bound-multi-head} under different settings.
For simplicity, we consider the asymptotic regime ($d, L\rightarrow \infty$ and $d/L= \Theta(1)$) where we neglect the error term that only depends on $L$ and consider a small constant $\bard/L$.
The Bayesian risk (or \ac{mmse}) when ignoring higher order terms of $\bard/L$ is given by
\begin{equation}
    \mathrm{MMSE} = \frac{\sigma^2}{d^{-1} L + I^{-1}(\snr^{-1}-1)}. 
    \owntag{MMSE}
    \label{eq:asymptotic-bayesian-risk-limit}
\end{equation}
We consider four cases stratified by the relationship between $H$ and $I$, and compare in each case the lower bound and the upper bound on the \ac{icl} loss.  
The result is summarized in \Cref{tab:bounds_summary} and visualized in \Cref{fig:loss_comparison}.
\renewcommand{\arraystretch}{1.75}
\begin{table}[t] 
    \centering 
    \begin{tabular}{c|c|c|c} 
        \hline \hline
        Case & Lower Bound & Upper Bound & Matched?\\
        \hline 
        $H = I$ & $\text{\MultiheadICLLB}=\frac{\lambda}{(H-1)\phi^{-1}d^{-1}L + 1}$ & $\text{\ConvergenceICL}=\frac{\lambda}{H e^{-1}\phi^{-1} d^{-1}L + 1}$ & \checkmark\\ 
        \hline 
        $H > I$ & $\mathrm{MMSE} = \frac{\sigma^2}{d^{-1} L + I^{-1}(\snr^{-1}-1)}$ & $\text{\ConvergenceICL} = \frac{\sigma^2\cdot (1 + \snr)}{e^{-1} d^{-1} L + I^{-1}(1 + \snr^{-1})}$ & \checkmark (low SNR) \\ 
        \hline 
        $H < I$ & $\text{\MultiheadICLLB}=\frac{\lambda}{(H-1)\phi^{-1}d^{-1}L + 1}$ & $\ConstructICL=\frac{\lambda}{H\cdot e^{-1}\phi^{-1}d^{-1}L + 1}$ & \checkmark ($I=kH$)\\ 
        \hline 
        \hline
        $H=1$ & $\text{\SingleheadICLOpt} = \frac{\lambda}{e^{-1} \phi^{-1}d^{-1}L + 1}$ & same as the lower bound & \checkmark\\
        \hline \hline
    \end{tabular} 
    \caption{Summary of lower bounds and upper bounds for different cases. 
    Here, {\MultiheadICLLB} is the lower bound given by \eqref{eq:lower-bound-multi-head}, 
    {\ConvergenceICL} is the \ac{icl} loss of the convergence point of the dynamics in \eqref{eq:icl-loss-convergence-point} which serves as an upper bound, MMSE is the Bayesian risk, {\SingleheadICLOpt} is given by the optimal value to \eqref{eq:water filling} that is the optimal \ac{icl} loss for a single-head softmax attention.
    For the $H>I$ case, we rewrite the {\ConvergenceICL} in terms of the $\snr$ for better comparison.
    The term ``\ConstructICL'' refers to the \ac{icl} loss for the constructed solution under the condition $I=kH$ where each head handles $k$ different tasks optimally and treat other tasks as with zero signal in the sense of \Cref{thm:optimality of ssa}.
    } 
    \label{tab:bounds_summary} 
\end{table}
\renewcommand{\arraystretch}{1}
\begin{myenumi}
    \item If $H = I$, then the \ac{icl} loss at the convergence point achieves the lower bound up to a constant factor $e$, where we are exploiting the full power of the \ac{msa} to solve the tasks. 
    \item If $H \ge I$, in general the {\ConvergenceICL} is suboptimal compared with the Bayesian risk.
    However, in the low SNR regime, i.e.,  $\snr = \lambda/(I\sigma^2) = o(1)$, the \ac{icl} loss of the trained model also reaches the Bayesian risk up to a constant factor $e$. 
    This is in fact not surprising and is due to the softmax-induced regularization. 
    We will further illustrate this in \Cref{sec:implicit_regularization}.
    \item We do not consider the gradient flow for $H < I$. 
    However, it is not hard to construct a solution when $I = k H$ for some  positive integer $k$.
    In this case, we just let each head handle $k$ different tasks while treating the remaining as with zero signal, and the optimal weights are given by \Cref{thm:optimality of ssa} and the \ac{icl} loss is given by {\ConstructICL} shown in \Cref{tab:bounds_summary}.
    Clearly, the construction achieves the lower bound up to a constant factor $e$.
    \item When comparing the \ac{icl} loss upper bound of the \ac{msa} with $H$ heads (Case (i)$\sim$(iii)) to the optimal \ac{icl} loss for the single-head case ({\SingleheadICLOpt} by \Cref{thm:optimality of ssa}), 
    \emph{we clearly see that the \ac{msa} is at least $H\pmin I$ times more efficient than the \ssa~in terms of the \ac{icl} loss.}
\end{myenumi}
In summary, we have shown by cases the tightness of the lower bound in \Cref{thm:optimality of equiangular}.
In particular, the Bayesian risk lower bound is achieved by the convergence point of the \ac{msa} for $H\ge I$ in the lower SNR regime.

\begin{figure}[t]
    \begin{subfigure}{0.45\textwidth}
        \centering 
        \includegraphics[width=1\textwidth]{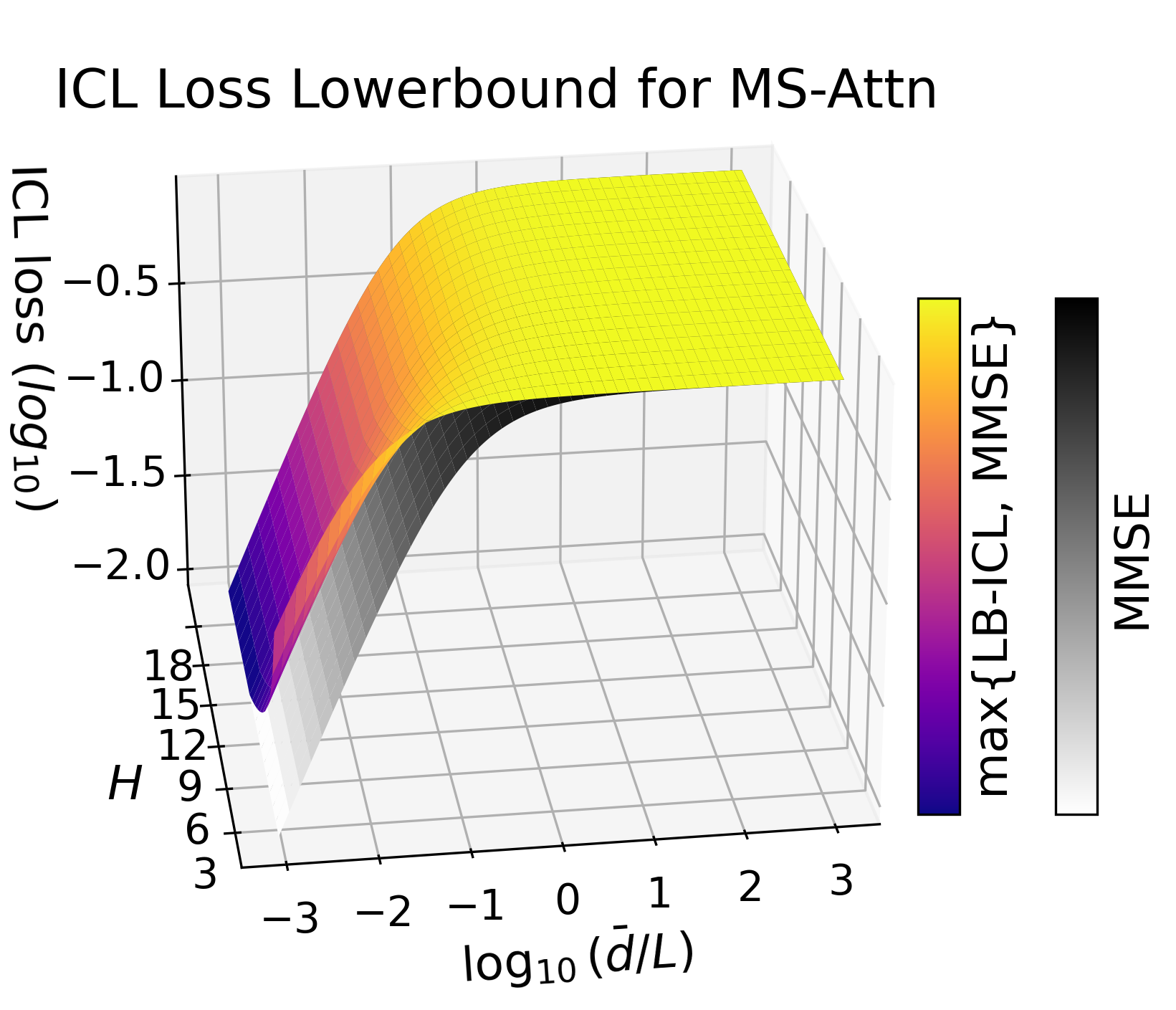}
        \caption{Visualization of the lower bound in \Cref{thm:optimality of equiangular}  and the Bayesian risk (MMSE).
        As further illustrated on the right, 
        for $H < I$, {\MultiheadICLLB} serves as the effective lower bound and for roughly $H> I$, the Bayesian risk serves as the effective lower bound. 
        }
    \end{subfigure}
    \hspace{10pt}
    \begin{subfigure}{0.35\textwidth}
        \centering 
        \includegraphics[width=0.9\textwidth]{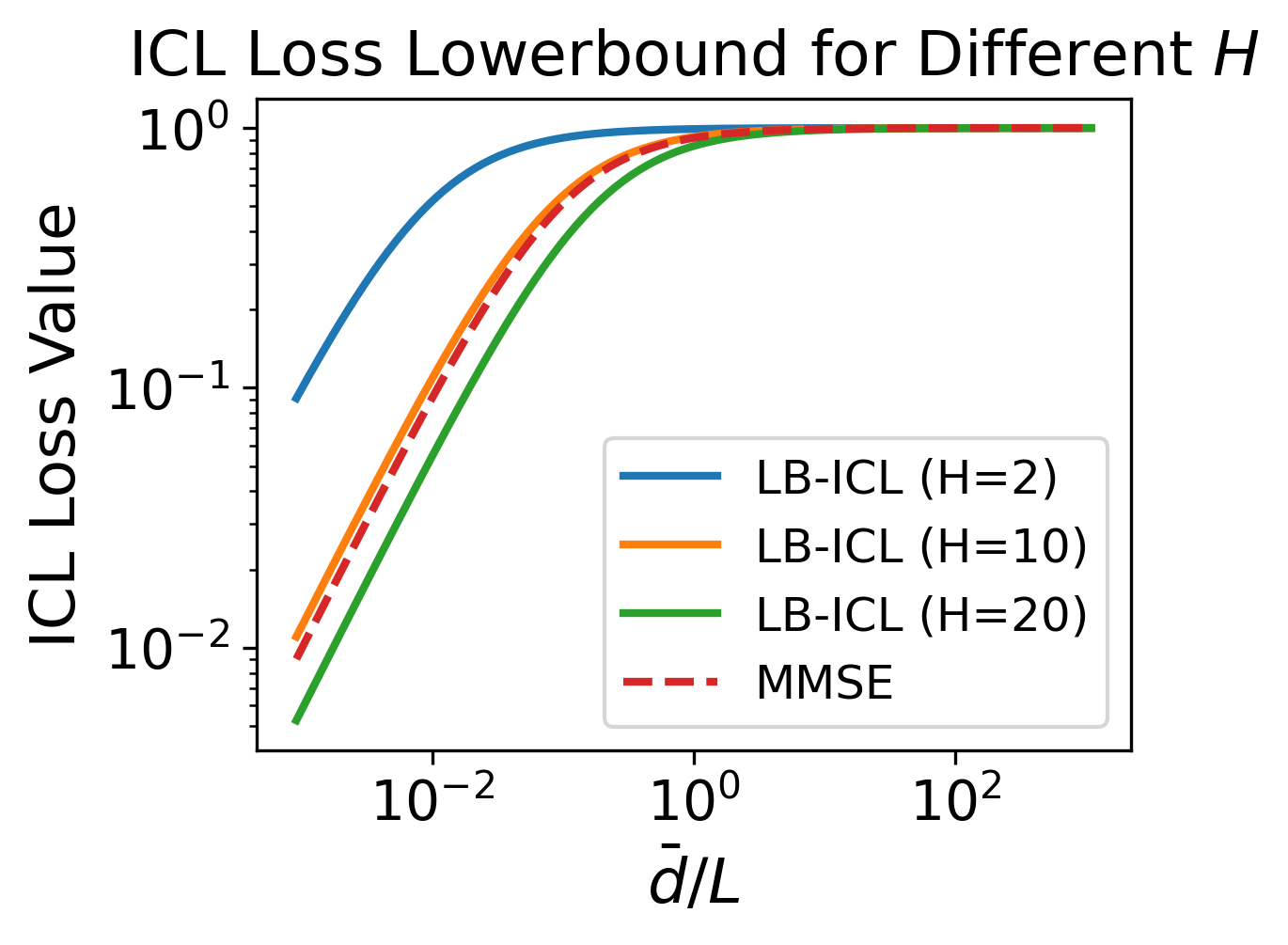}
        \includegraphics[width=0.9\textwidth]{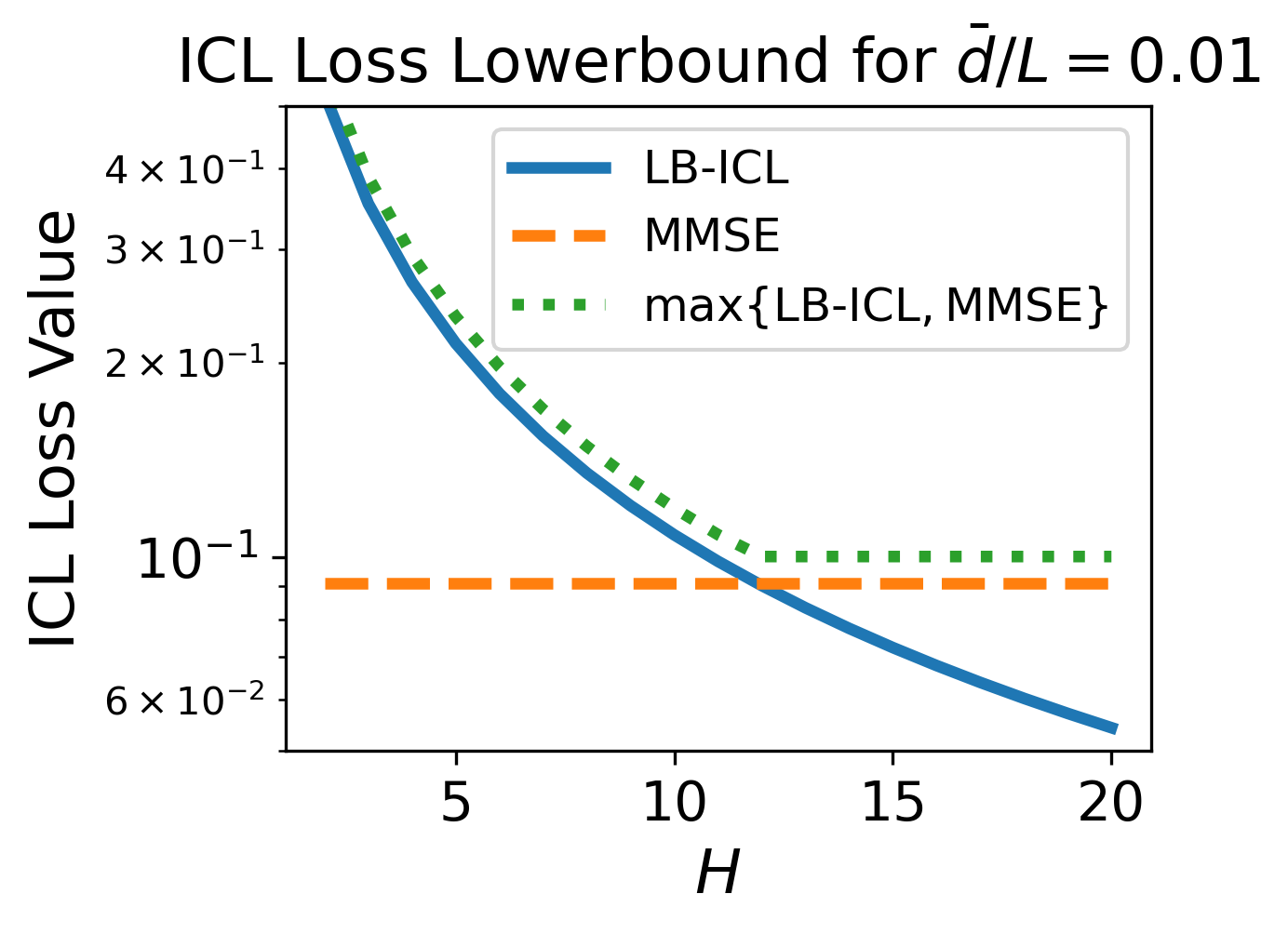}
        \caption{Comparison of {\MultiheadICLLB} and MMSE over (i) $\bard/L$ for different $H$ and (i) $H$ for $\bard/L = 0.01$.}
    \end{subfigure}
    \captionsetup{width=0.95\linewidth}
    \caption{Comparison between the lower bound {\MultiheadICLLB} and the Bayesian risk (MMSE) for different $H$ and $\bard/L$
    with $I=10$, $\lambda =\sigma^2=1$. Here, $\bard = d/I$.
    }
\end{figure}

\begin{figure}[t]
    \centering
    \includegraphics[width=0.45\textwidth]{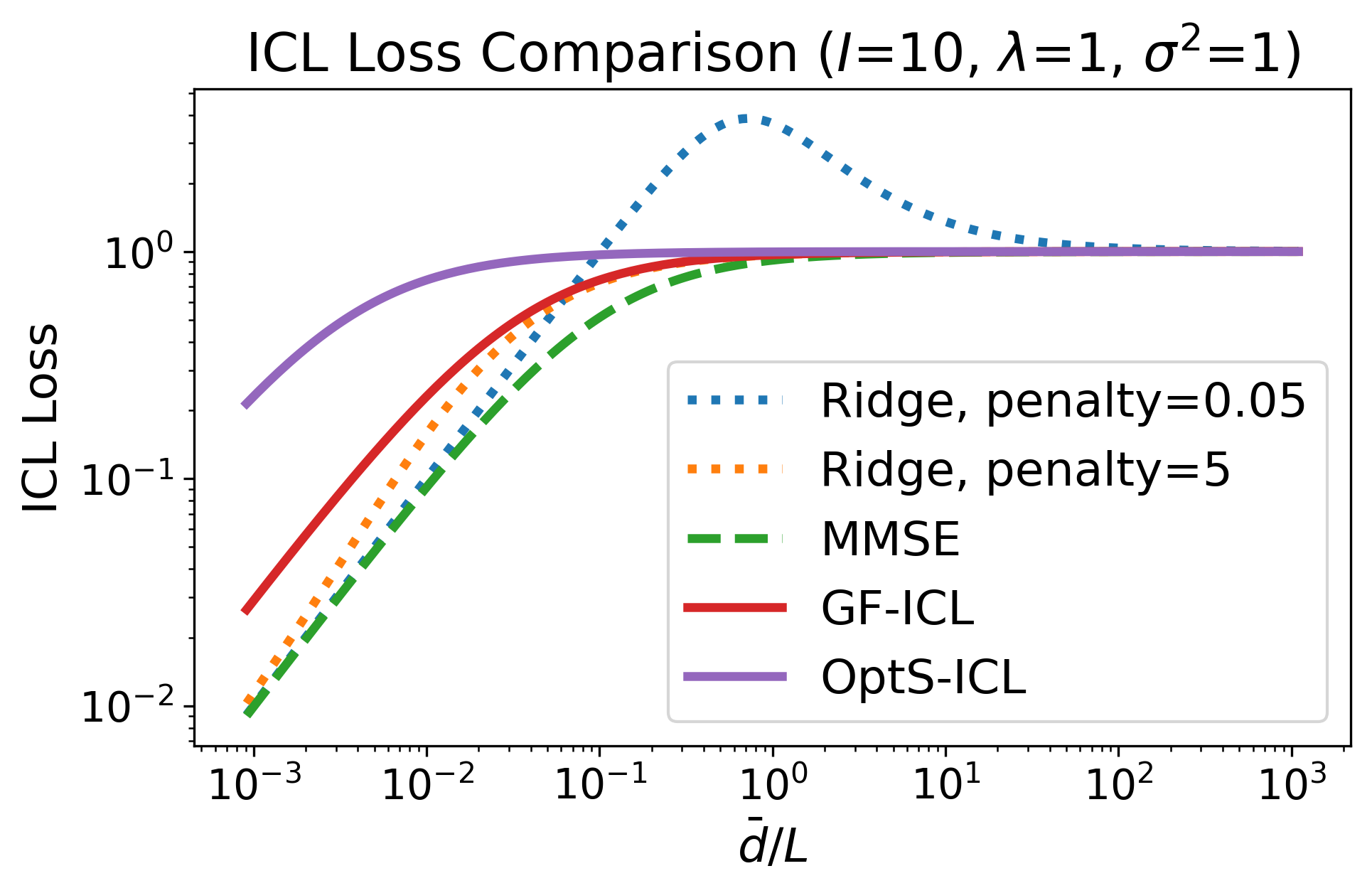}
    \captionsetup{width=0.95\linewidth}
    \caption{Comparison of the \ac{icl} loss between (i) ridge regression with regularization parameter $\mathrm{penalty} = 0.05$ and $\mathrm{penalty} = 5$; (ii) the Bayesian risk, i.e., MMSE; (iii) the \ac{icl} loss for the convergence point of the gradient flow \eqref{eq:icl-loss-convergence-point} with $H\ge I$; (iv) the \ac{icl} loss of the optimal single head attention \eqref{eq:water filling}. 
    Here, $\bard = d/I$.
    For large $\bard/L$, the {\ConvergenceICL} loss behaves similarly to ridge regression with large $\mathrm{penalty}=5$ and does not have the double descent phenomenon that is observed with insufficient regularization $\mathrm{penalty}=0.05$.
    This is due to the softmax-induced regularization as we will further illustrate in \S\ref{sec:implicit_regularization}.
    For small $\bard/L$, we see that \ac{msa} ({\ConvergenceICL}) has an $H\pmin I=10$ improvement over the single-head attention ({\SingleheadICLOpt}).
    We refer readers to \S\ref{sec:icl loss comparison} for more details.}
    \label{fig:loss_comparison}
\end{figure}

\subsection{Softmax-Induced Regularization and Proof Sketch for the Optimality Results}
\label[section]{sec:implicit_regularization}
Before presenting our proof sketch for the optimality results, let us first understand the loss decomposition of a single softmax transformer head.
\subsubsection{Loss Decomposition and Softmax-Induced Regularization}
We treat $W$ and $U$ as the combined weights of the single-head attention.
For ease of presentation, we only consider one task here while the multi-task case is deferred to the complete proof in \S\ref{sec:optimality_proof}. 
Hence, $d_y=1$ with $U_Y$ being just a scalar, $G = g_1 \in \RR^{d\times 1}$ and we have $\EE[G G^\top] = \lambda d^{-1} I_d$.
Suppose $W_Y=0$, then the ICL loss defined in \eqref{eq: loss} can be decomposed into
\begin{align*}
    \cL(U, W) = \underbrace{\EE 
    \big[\bigl\| G^\top q - U_Y G^\top X p\bigr\|_2^2\big]}_{\ds\text{Signal-induced~error}}
 + \underbrace{\EE\big[\bigl\| U_X X p \bigr\|_2^2\big]}_{\ds\text{Extra~error}} + \underbrace{\EE\big[\bigl\| U_Y \varepsilon p \bigr\|_2^2\big]}_{\ds\text{Noise-induced~variance}}.
\end{align*}
Note that the extra error term vanishes when setting $U_X=0$, and the remaining terms are not affected, thus justifying \Cref{cond:U_X and W_Y are zero}.
Expanding the signal-induced error, we have
\begin{align*}
    \EE\big[\bigl\| G^\top q - U_Y G^\top X p\bigr\|_2^2\big] = \lambda d^{-1} \cdot \EE\bigl[d - 2 U_Y \trace( X p q^\top 
    )  + U_Y^2 \trace( X p p^\top X^\top ) \bigr].
\end{align*}
Applying the Stein's lemma for the expectation term (\Cref{fact:p is a function of Wq's 2-norm}) yields
\begin{equation}
\begin{aligned}
    \EE[X p \given q] &=  W_X q (1 - \EE[\norm{p}_2^2\given q]), \\
    \EE[X p p^\top X^\top\given q] &= W_X q q^\top W_X^\top \EE[1 - \norm{p}_2^2 - 6 \norm{p}_3^3 + 6 \norm{p}_2^4 \given q] + I_{d} \EE[\norm{p}_2^2\given q].
\end{aligned}
\label{eq:stein-lemma-proofsketch}
\end{equation}
When $W_X$ has bounded operator norm (which will be justified later), it holds with high probability that $\norm{W_X q}_2^2 \le O(2\log L)$, and then we can ignore the higher order terms in the expectation (\Cref{lemma: higher-order-moment}).
Consequently, we have $\EE[X p \given q] \approx W_X q$ and
$
    \EE[X p p^\top X^\top\given q] \approx W_X q q^\top W_X^\top + {I_{d} \EE[\norm{p}_2^2\given q]}
$
up to $O(L^{-1})$ error. 
Applying these to simplify the signal-induced error term, we obtain
\begin{align}
    \cL(U, W) \approx \underbrace{\lambda d^{-1} \cdot \EE\bigl[
        \left\|q - U_Y W_X q\right\|_2^2\bigr]}_{\ds\text{Linearized bias}}  + \underbrace{(\lambda + \sigma^2) U_Y^2 \cdot \EE[\norm{p}_2^2]}_{\ds\text{Softmax-induced regularization}}. 
    \label{eq:loss-decomposition}
\end{align}
Here, the first term is the \emph{linearized bias}, where we get rid of the nonlinearity in the attention probability.
In particular, this term measures how close $U_Y W_X$ is to the identity.
The second term is a softmax-induced regularization induced by the \emph{concentration} of the attention probability.
\revise{With $\norm{p}_2^2$ becoming larger, the attention probability becomes more concentrated on a few input tokens, leading to a larger variance in the output.
In particular, when compared to standard linear regression's loss decomposition, a key difference is that the softmax-induced regularization term also depends on the signal strength $\lambda$ while the standard linear regression's regularization term only depends on the noise level $\sigma^2$.
Such a fact answers the question of why the \ac{msa}'s performance matches the MMSE only under the low SNR regime as shown \Cref{tab:bounds_summary}.
}
The key to our analysis is understanding the softmax-induced regularization term. 

\subsubsection{Exponential and Saturation Regime of Softmax Attention}
We characterize the behavior of the softmax-induced regularization in two regimes, which we call the \emph{exponential regime} and the \emph{saturation regime}, stratified by the scale of the attention budget $\norm{W_X}_{\fro}^2 = \EE[\norm{W_X q}_2^2]$.
In the sequel, we denote by $\EE[\norm{p}_2^2\given r] = \EE[\norm{p}_2^2\given \norm{W_X q}_2=r]$. 
Since $r$ is a random variable with $r^2$ concentrated around $\norm{W_X}_{\fro}^2$, we also characterize the two regimes by the scale of $r$.
\Cref{fig:regime} illustrates the behavior of the softmax-induced regularization term $\EE[\norm{p}_2^2\given r]$ as a function of $r$. 

\begin{figure}[t]
    \centering
    \includegraphics[width=0.45\textwidth]{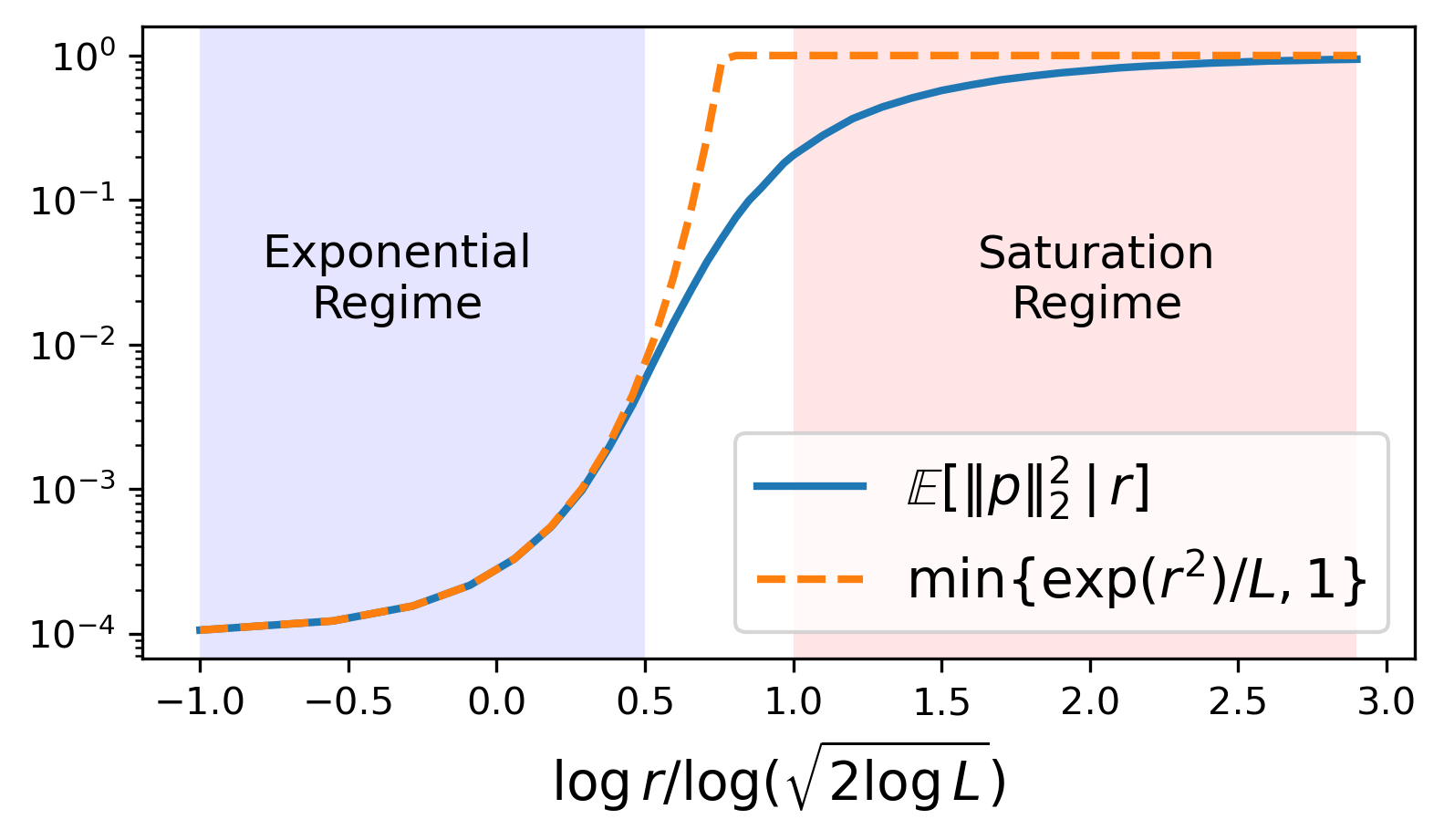}
    \includegraphics[width=0.45\textwidth]{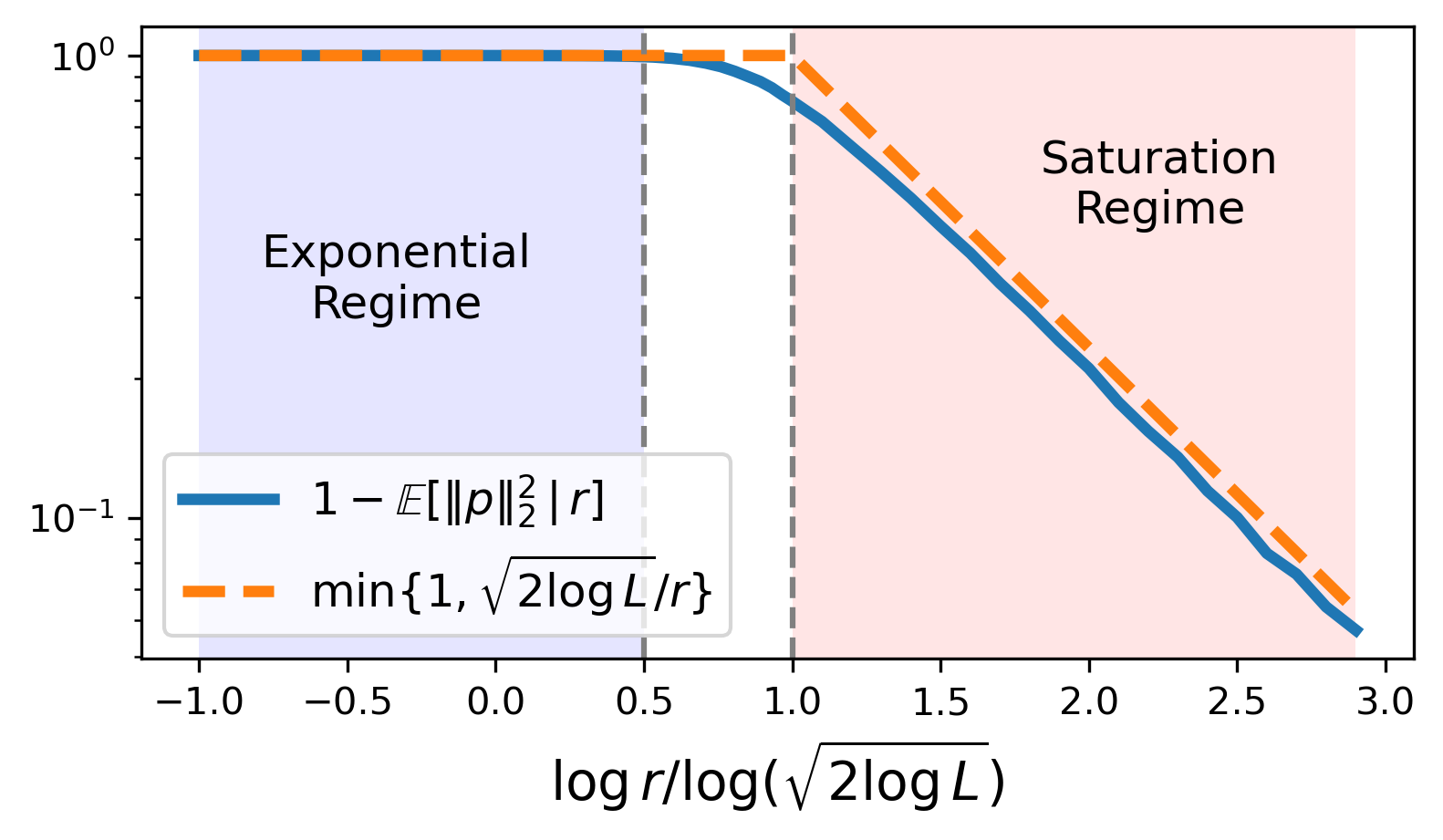}
    \captionsetup{width=0.95\textwidth}
    \caption{The behavior of $\EE[\norm{p}_2^2\given r]$ as a function of $r$ for $L=10^4$. The left figure shows the behavior of $\EE[\norm{p}_2^2\given r]$ and $\min\{\exp(r^2)/L, 1\}$ is a good approximation when $r = o(\sqrt{2 \log L})$ (\Cref{lemma: pseudo-dynamics}) in the \emph{exponential regime}. The right figure shows that $1 - \EE[\norm{p}_2^2\given r]$ for $r \gg \sqrt{2\log L}$ is upper bounded by $O(r^{-1+\epsilon})$ in the \emph{saturation regime}(\Cref{lem:p-moment-tail}).} 
    \label{fig:regime}
\end{figure}

\paragraph{Exponential Region ($r = o(\sqrt{2\log L})$, \Cref{lemma: pseudo-dynamics}).}
We name this regime the \emph{exponential regime} since in this regime, the softmax-induced regularization term in \eqref{eq:loss-decomposition} is approximately an exponential function of the attention budget.
This is also the regime where the optimal \ac{icl} rate is achieved (\Cref{thm:optimality of ssa}).
To derive the approximation form, we introduce an index $t\in [0, 1]$ and consider function $f(t, q) = \EE[\norm{\softmax(tX^\top W_X q)}_2^2 \given q]$. 
We then take the derivative $\frac{\partial f}{\partial t}$ to obtain
\begin{align*}
    \frac{\partial f}{\partial t} = \EE[\norm{p(t)}_2^2 - 4 \norm{p(t)}_3^3 + 3 \norm{p(t)}_2^4 \given q] \cdot \norm{W_X q}_2^2, 
\end{align*}
where $p(t) = \softmax(tX^\top W_X q)$.
This calculation follows from the Stein's lemma. 
By analyzing the tail of $\max_{l\in[L]} p_l(t)$, we can control the higher order moments of $p(t)$ and get \[\frac{\partial f}{\partial t} \approx \EE[\norm{p(t)}_2^2\given q] \cdot \norm{W_X q}_2^2 = f(t, q) \cdot \norm{W_X q}_2^2.\] 
Integrating this with respect to $t$ from $0$ to $1$ and by a careful analysis of the error along the integrating path, we conclude that $\EE[\norm{\softmax(tX^\top W_X q)}_2^2 \given q] = f(1, q) \approx \exp(\norm{W_X q}_2^2)/L$.
The analysis can also be extended to the multi-head case and the results are also useful for simplifying the dynamics in \S\ref{sec:appendix_dynamics_error}.

\paragraph{Saturation Regime ($r \gg \sqrt{2\log L}$, \Cref{lem:p-moment-tail}).}
The above approximation breaks down for large $r$. 
However, in this case we can utilize the fact that the attention probability is concentrated. 
Note that $\EE[\norm{p}_2^2\given r] = \EE[\norm{\softmax(r z)}_2^2]$, where $z\sim \cN(0, I_d)$.
It can be shown that
\begin{align}
    1 - \EE[\norm{p}_2^2\given r] \le 2 \EE[p_2 + p_3 + \cdots + p_L\given r] \le 2 \EE\bigg[\sum_{l=2}^L \exp(-r (z_1 - z_l))\bigg], 
    \label{eq:proof-sketch-1}
\end{align}
where $p_1, p_2, \ldots, p_L$ and $z_1, z_2, \ldots, z_L$ are the (descending) order statistics of $p$ and $z$ respectively.
The approach for dealing with \eqref{eq:proof-sketch-1} is to properly choose a threshold $\theta = \log(L r^\alpha)/r$ for some $\alpha > 1/2$ and then split the sum into two parts based on whether $z_1 - z_l > \theta$ or $z_1 - z_l \le \theta$. 
For the first part, we utilize the gap $\theta$ to upper bound the sum by $L \exp(-r \theta) = r^{-\alpha}$.
For the second part, we directly upper bound each term by $\exp(-r (z_1 - z_2))$ while controlling the total number of terms $N$.
This is essentially equivalent to bounding the tail of a binomial distribution with $L$ trials and success probability roughly $\PP_{v\sim \cN(0, 1)}(\sqrt{2\log L} - v \le \theta)$.
Finally, combining the moment generating function for $z_1 - z_2$ and also the tail bound for $N$, we can choose $\alpha$ to be some $1-\epsilon$ constant and obtain the desired result.

\paragraph{The Saturation Regime is Suboptimal for \ac{icl} with noise.}
\citet{huang2023context} studied the setting where the context features come from an orthogonal basis and the data is noiseless.
Their trained model that assigns $\Omega(1)$ probability to some tokens and falls into the saturation regime.
However, in the presence of noise, our analysis clearly shows the suboptimality of the saturation regime for \ac{icl} (\Cref{lemma:SS-Attn lb-2} and \Cref{lemma:SS-Attn lb}).
The proof is a combination of understanding the nonlinear \ac{icl} loss for different regions of the attention budget.
Notably, the proof of \Cref{lemma:SS-Attn lb-2} for the very extreme case of the attention budget is given by relating the lower bound of the \ac{icl} loss to a group of attention weights operating with \emph{normalized} query on the sphere.

\subsubsection{Proof Sketch for the Optimality Results}
Now we are ready to sketch the proof for the optimality results.

\paragraph{The Single-Head Case.}
Recall from \eqref{eq:loss-decomposition} and we consider the attention to operate in the exponential regime:
\begin{align}
    \cL(U, W) \approx \underbrace{\lambda d^{-1} \cdot \EE\bigl[
        \left\|I - U_Y W_X\right\|_{\fro}^2\bigr]}_{\ds\text{Linearized bias}}  + \underbrace{(\lambda + \sigma^2) \cdot U_Y^2 \cdot \frac{\exp(\norm{W_X}_{\fro}^2)}{L}}_{\ds\text{Softmax-induced regularization}}.
    \label{eq:loss-decomposition-2}
\end{align}
The softmax-induced regularization puts a quadratic penalty on the output projection weights $U_Y$ and an exponential penalty on the attention budget $\norm{W_X}_{\fro}^2$.
In \eqref{eq:loss-decomposition-2}, it is not hard to see that all attention budget should be concentrated only on the \emph{diagonal} entries of $W_X$ to minimize the loss.
For multi-task case, the same argument still holds but needs a more refined analysis, especially for dealing with nonlinearity that is hidden in \eqref{eq:loss-decomposition-2} for ease of presentation.
By treating $W_X = \sqrt{B/d} \cdot I_d$ and first optimizing over $U_Y$, we arrive at the an optimization problem over $B$:
\begin{align*}
    \min_{0\le B\le O(\log L)} \frac{\lambda}{d^{-1} L \phi^{-1} \exp(-B) B + 1}, 
\end{align*}
where $\phi = 1 + \snr^{-1} = 1 + \sigma^2 /\lambda$. 
The optimal is achieved at $B^\star = 1$ which gives the optimal loss value $\lambda/(e^{-1} \phi^{-1} d^{-1} L  + 1)$ and also the optimal solution. 
The results for the multi-task setting follow from similar ideas.

\paragraph{The Multi-Head Case.}
Here we need a more refined loss decomposition that isolates task-wise performance as follows:
\begin{gather*}
    \cL(\mu, \baromega) = \frac{\lambda}{I}\sum_{i=1}^I \cL_i(\mu, \baromega) + O(L^{-\epsilon/2}), \where \cL_i(\mu, \baromega) 
    \defeq 1 -  2 \mu_i^\top \baromega_i   +  \mu_i^\top (\baromega_i \baromega_i^\top + B) \mu_i, \\
    \baromega_i = (\baromega_i^{(1)}, \ldots, \baromega_i^{(H)})^\top, \quad \mu_i = (\mu_i^{(1)}, \ldots, \mu_i^{(H)})^\top, \quad B\in\RR^{H\times H} \:\text{with}\: (B_i)_{h h'} \defeq \frac{\phi_i\exp(\bard \langle \baromega^\h, \baromega^\hprime \rangle)}{L}.
\end{gather*}
Note that by the equiangular property, we can also decompose $B$ into the diagonal part and the off-diagonal part
\[
    B = (\tilde a - \tilde b) I + \tilde b E, 
\]
where $E$ is the matrix with all entries being $1$ and $\tilde a, \tilde b$ are the diagonal and off-diagonal entries of $B$ respectively.
Motivated by the structure of $B$, we can also project $\baromega_i$ into the parallel and orthogonal components with respect to $\vone_H$. 
By doing quadratic minimization over $\mu_i$, and a nonlinear minimization over $\tilde a, \tilde b$, we arrive at the lower bound for the equiangular weights.

\section{Extensions}
\label{sec:extension}
In this section, we discuss some extensions of the main results.
Let us first gain more understanding of the relationship between softmax attention and linear attention.

\subsection{Similarity between Softmax Attention and Linear Attention}
Here we consider the optimal single-head attention head and consider the regime $L \gg d$. 
Consider a query $q$ such that $\norm{W_X q}_2^2 = O(1)$ and we ignore the noise term.
We invoke \eqref{eq:stein-lemma-proofsketch} and also the approximation in the exponential regime (\Cref{lemma: pseudo-dynamics}, which applies regardless of the relative scale of $d$ and $L$ when $\norm{W_X q}_2^2 = O(1)$) to get 
\begin{align*}
    X p\approx \EE[X p \given q] =  W_X q (1 - \EE[\norm{p}_2^2\given q]) \approx W_X q \approx \frac{1}{L} X X^\top W_X q.
\end{align*}
Here the first approximation is due to the fact that the second moment is small by \eqref{eq:stein-lemma-proofsketch}, i.e.,
\[
    \trace(\EE[(X p - \EE[X p\given q]) (X p - \EE[X p\given q])^\top \given q]) =  d\cdot\EE[\norm{p}_2^2\given q] + O(L^{-1})  = O(d/L).
\]
Therefore, it follows that
\begin{align}
    \hat y_q = U_Y G^\top X p  \approx \frac 1 L U_Y Y X^\top W_X q = \frac 1 L U Z Z^\top W z_q, 
    \label{eq:linear attention approx}
\end{align}
which is equivalent to the linear attention model \citep{zhang2022analysis} but with a different scaling of the attention weights.
However, such approximation requires a large $L$.
For the setting of our main results where $d/L = \Theta(1)$, the approximation is not valid and the analysis has to go beyond the linear case.

\subsection{Length Generalization}
We also consider applying the trained model in \Cref{thm:convergence-multi-head-symmetric} with training data of length $L$ to a same \ac{icl} tasks with a new sequence of length $\tilde L$, and study the \ac{icl} loss for the new tasks. 
Specifically, the softmax attention model is trained in the same way as in \eqref{eq: gradient flow dynamics}, but when computing the ICL loss, we use a sequence of length $\tilde L$. 
That is, in \eqref{eq: loss}, $\hat y_q$ is computed according to \eqref{eq: msa} but with $Z$ consisting of $\tilde L$  covariate-response pairs from the multi-task linear model introduced in \Cref{assump:data}. We have the following upper bound on the ICL loss.

\begin{proposition}[Length Generalization for \ac{msa}]
\label{cor:length generalization}
For the convergence point of the dynamics described in \Cref{thm:convergence-multi-head-symmetric}, 
consider applying the trained model to a new \ac{icl} learning task with the same data structure under \Cref{assump:data}
but a new sequence length $\tilde L$. 
Then the corresponding loss value is upper bounded by
\begin{align}
        \sum_{i=1}^I \frac{\lambda_i d_i}{d}\cdot \left(\frac{e d_i \phi_i L^{-1}}{1 + e d_i \phi_i L^{-1}} +  O(L^{-(1-\epsilon)/2} + \omega_0^2 d + \delta) + O (\phi_i d_i |L^{-1} - \tilde L^{-1}|)^2\right). 
        \label{eq:length-generalization-1}
\end{align}
\end{proposition}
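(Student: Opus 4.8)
The plan is to exploit the fact that, by \Cref{thm:convergence-multi-head-symmetric}, the converged model is decomposable with semi-singular values satisfying $\baromegaistar \approx d_i^{-1/2}$ and $\muistar \approx \sqrt{d_i}/(1+e d_i\phi_i L^{-1})$ for the optimal head $h_i^\star$, while every non-optimal head contributes negligibly ($\muih \le \muistar \exp(-\Omega(\sqrt{d_e}))$, $\baromegaih = O(\omega_0)$). Since the weights are unchanged and only the evaluation sequence length changes from $L$ to $\tilde L$, the first step is to re-run the loss computation of \Cref{lem:msa icl loss-mainbody} (i.e.\ the derivation behind \eqref{eq:icl-loss-convergence-point}) but with $\tilde L$ context tokens. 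By decomposability, the attention probabilities depend only on the $X$-block, so each task $i$ decouples and its contribution to $\cL$ is, up to $O(L^{-(1-\epsilon)/2}+\omega_0^2 d+\delta)$ error,
\[
    \frac{\lambda_i d_i}{d}\cdot\Bigl(1 - 2\,\muistar\baromegaistar + (\muistar)^2\bigl((\baromegaistar)^2 + \phi_i \tilde L^{-1}\exp(d_i(\baromegaistar)^2)\bigr)\Bigr),
\]
which is exactly the single-head single-task loss from the analysis of \Cref{thm:optimality of ssa} with the token count $L$ replaced by $\tilde L$ inside the softmax-induced regularization term only (the ``linearized bias'' part $1-2\muistar\baromegaistar+(\muistar\baromegaistar)^2$ is length-independent). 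Here I would invoke the exponential-regime approximation (\Cref{lemma: pseudo-dynamics}), whose validity requires only $\|W_X q\|_2 = o(\sqrt{2\log L})$; since $\baromegaistar \approx d_i^{-1/2}$ gives $\|W_X q\|_2^2 = O(1)$, the approximation holds for any $\tilde L$, including $\tilde L \ne L$.

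The second step is to substitute the converged values $\baromegaistar = d_i^{-1/2}(1+O(\delta))$ and $\muistar = \sqrt{d_i}(1+ed_i\phi_i L^{-1})^{-1}(1+O(\delta))$ and Taylor-expand the resulting expression in the small quantity $\Delta_i := \phi_i d_i(\tilde L^{-1}-L^{-1})$. Writing $a_i := e d_i\phi_i L^{-1}$, the per-task loss becomes $\frac{\lambda_i d_i}{d}$ times
\[
    \frac{a_i}{1+a_i} + O(\Delta_i) \cdot (\text{bounded terms}) + O(\Delta_i^2) + (\text{the } O(L^{-(1-\epsilon)/2}+\omega_0^2 d+\delta)\text{ terms}).
\]
The key cancellation to check is that the \emph{first-order} term in $\Delta_i$ vanishes: because $\baromegaistar\approx d_i^{-1/2}$ is precisely the stationary point that minimizes the single-task loss over $\baromega$ (recall \eqref{eq:emergence} has its critical point at $\baromegaistar = d_i^{-1/2}$, and likewise $\muistar$ was chosen to be the minimizer given $\baromegaistar$), the loss is at a critical point with respect to perturbations of the effective ``regularization weight'' $\phi_i\exp(d_i(\baromegaistar)^2)\tilde L^{-1}$ only in the $\mu$-direction; after minimizing over $\mu$, the envelope's derivative in $\tilde L^{-1}$ is not zero, so in fact one does \emph{not} expect the linear term to cancel — instead one bounds $|O(\Delta_i)|$ crudely. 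Wait: re-examining, the stated bound \eqref{eq:length-generalization-1} contains a term $O(\phi_i d_i|L^{-1}-\tilde L^{-1}|)^2$ which is quadratic, so the claim is that the first-order term \emph{does} cancel. This happens because $\muistar$ is the exact minimizer of the quadratic $\mu\mapsto 1-2\mu\baromegaistar+\mu^2((\baromegaistar)^2+\phi_i L^{-1}e)$ at training time, and at evaluation time we replace $L^{-1}$ by $\tilde L^{-1}$ but keep the same $\mu$; the value of a quadratic near its minimizer, when the minimizer is held fixed but one coefficient is perturbed, changes by an amount whose leading term is linear in the perturbation \emph{times} the value at the minimizer, and that value is $O(a_i)=O(d_i\phi_i/L)$ — so the apparent linear term is itself $O(\phi_i d_i |L^{-1}-\tilde L^{-1}|)\cdot O(\phi_i d_i /L)$, which folds into the existing $O(\cdot)$ error terms, leaving the genuinely new contribution at order $(\phi_i d_i|L^{-1}-\tilde L^{-1}|)^2$. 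I would make this precise by writing $f(\kappa,\mu)=1-2\mu\baromegaistar+\mu^2((\baromegaistar)^2+\kappa)$, noting $\partial_\mu f(\kappa_{\mathrm{train}},\muistar)=0$, and expanding $f(\kappa_{\mathrm{eval}},\muistar)-\min_\mu f(\kappa_{\mathrm{eval}},\mu)$.

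The third step is bookkeeping: sum over $i\in[I]$, collect the non-optimal-head contributions (each bounded by $\exp(-\Omega(\sqrt{d_e}))$, hence absorbed into $O(L^{-(1-\epsilon)/2})$ given $d_e\ge d = \Theta(L)$), verify that replacing $L$ by $\tilde L$ does not break the higher-order-moment pruning from \Cref{lemma: higher-order-moment} / \Cref{lemma: E[pq]-moment} (it does not, since those bounds hold uniformly for $\|W_X q\|_2 = O(1)$), and arrive at \eqref{eq:length-generalization-1}. The main obstacle I anticipate is the second step: carefully tracking that the linear-in-$(L^{-1}-\tilde L^{-1})$ term is genuinely second-order small rather than first-order, i.e.\ verifying the envelope/minimizer cancellation cleanly and confirming all cross terms between this perturbation and the pre-existing $O(\delta)$, $O(\omega_0^2 d)$, $O(L^{-(1-\epsilon)/2})$ errors are themselves negligible. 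A secondary technical point is ensuring the exponential-regime approximation error is uniform in $\tilde L$ down to small $\tilde L$; if $\tilde L$ is allowed to be very small (comparable to $d_i$), one should either restrict $\tilde L = \Omega(d)$ or note that the $\tilde L^{-1}$ term then simply dominates and the bound still holds with room to spare.
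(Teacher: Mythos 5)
Your proposal is correct and follows essentially the same route as the paper: the paper also reduces to the per-task single-head loss, treats the trained $\mu_i^{(h_i^\star)}$ as a perturbation of the $\tilde L$-optimal value (with relative error $O(\delta)+e\phi_i d_i|L^{-1}-\tilde L^{-1}|$) while noting $\baromegaistar=d_i^{-1/2}$ is length-independent, and invokes the quadratic-in-$u$ perturbation bound of \Cref{lem:loss perturbation}/\Cref{cor:single task icl loss} to get the mismatch term at second order. Your mid-paragraph hesitation about the linear term is resolved correctly by your final formulation ($f(\kappa_{\mathrm{eval}},\cdot)$ is exactly quadratic in $\mu$, so $f(\kappa_{\mathrm{eval}},\muistar)-\min_\mu f(\kappa_{\mathrm{eval}},\mu)$ has no linear part), which is precisely the mechanism the paper uses.
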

See \S\ref{sec:length-generalization} for a proof.
\Cref{cor:length generalization} shows that the \ac{icl} loss scales with $O(d/\tilde L) + O((d/(L\pmin \tilde L)^2))$. 
Here in \eqref{eq:length-generalization-1}, the first term is the intrinsic loss and the third term characterizes the error due to the mismatch in sequence length.
Notably, since $\baromegaistar$ is independent of the sequence length, we have the mismatch term coming only from the suboptimality of $\muistar$, which is actually a higher order term (for more details, see the quadratic error term in \Cref{lem:loss perturbation}).
Thus, we conclude that the model trained by gradient flow also generalizes well to new sequence length.
We also remark that such a nice property is due to the normalization of the softmax function, and the linear attention model does not have such a property as indicated by \eqref{eq:linear attention approx} where the weights should directly scale with $1/L$.

\subsection{Linear-to-Nonlinear Transfer}
We consider first training the model on linear tasks and then applying the trained model to a nonlinear task. 
We only consider the single-head case for simplicity.
Consider replacing $Y = G^\top X + \varepsilon$ with $Y = f(X) + \varepsilon$ where $f$ is a nonlinear function in the downstream \ac{icl} task (the training data is still linear).
Suppose $f$ has degree at most $D$ in the sense that $f$ is a linear combination of $d$-dimensional multivariate Hermite polynomials up to degree $D$, $\{\herm_\alpha \given \alpha\in\NN^d, \: |\alpha| \defeq \sum_{i=1}^d \alpha_i \le D\}$.
\begin{align*}
    f(x) = \sum_{|\alpha|\le D} \hat f_\alpha \herm_\alpha(x), \where \hat f_\alpha\in\RR.
\end{align*}
\begin{lemma}[Generalization of Single-Head Softmax Attention to Nonlinear Tasks]
\label{lem:nonlinear generalization}
    \revise{Let $L$ be sufficiently large such that \eqref{eq:epsilon} holds for constants $\epsilon$ and $c$.}
    Consider  \Cref{assump:data} on the data but with a nonlinear task $Y = f(X) + \varepsilon$ where $f$ has degree at most $D= O(1)$.
    Consider only a single head whose weights are given by the optimal \ssa~trained for this linear task, i.e., 
    \begin{align}
        W_X = d^{-1/2} \cdot I_d, \quad U_Y = \mu = \frac{\sqrt{d}}{1 + e d \phi L^{-1}}. 
        \label{eq:optimal single-head linear task}
    \end{align}
    Then with probability at least $1 - \exp(- (c^{-2} \log L - 1)^2 \pmin d/2)$, the average output of the model for a fixed query $q$ satisfies
    \begin{align}
        \bigg|\EE[\hat y_q \given q] - \mu \cdot \sum_{\alpha:|\alpha|\le D} \hat f_\alpha d^{-|\alpha|/2} \cdot q^\alpha \bigg| \le O(L^{-2(1-\epsilon)}), \label{eq:nonlinear generalization}
    \end{align}
    where we define $q^\alpha = \prod_{i=1}^d q_i^{\alpha_i}$.
\end{lemma}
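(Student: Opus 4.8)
The plan is to compute $\EE[\hat y_q\given q]$ in closed form by Gaussian integration by parts against the Hermite basis, and then bound the remainder created by the softmax normalization using the exponential--regime moment estimates. First, since the weights in \eqref{eq:optimal single-head linear task} are decomposable with $\varPhi=\varPsi=I$, by \Cref{lem:decomposability preserved} the attention probabilities $p=\softmax(X^\top W_X q)$ with $W_X=d^{-1/2}I_d$ are a function of $(X,q)$ only, and the output is $\hat y_q=\mu\, Yp=\mu\sum_{l=1}^L\bigl(f(x_l)+\varepsilon_l\bigr)p_l$. Because $\{\varepsilon_l\}$ is centered and independent of $X$, the noise terms vanish under $\EE[\,\cdot\given q]$, and exchangeability of the $L$ context tokens gives $\EE[\hat y_q\given q]=\mu L\,\EE[f(x_1)p_1\given q]$. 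Expanding $f(x_1)=\sum_{|\alpha|\le D}\hat f_\alpha\herm_\alpha(x_1)$, it remains to evaluate $\EE[\herm_\alpha(x_1)p_1\given q]$ for each multi-index with $|\alpha|\le D$.

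Second, freeze $q$ and $x_2,\dots,x_L$ and treat $p_1$ as a function of $x_1$ alone. Writing $a:=W_X q=d^{-1/2}q$, one computes $\partial_{x_{1,j}}p_1=a_j\,p_1(1-p_1)$, and iterating gives $\partial^\alpha_{x_1}p_1=a^\alpha\,p_1(1-p_1)\,R_{|\alpha|-1}(p_1)$ for an explicit one--variable polynomial $R_{|\alpha|-1}$ (not depending on $x_{2:L}$) with $R_{|\alpha|-1}(0)=1$ and $\sup_{t\in[0,1]}|R_{|\alpha|-1}(t)|\le C_{|\alpha|}$. As $p_1$ and all its $x_1$--derivatives are bounded, the multivariate Hermite--Stein identity $\EE_x[\herm_\alpha(x)g(x)]=\EE_x[\partial^\alpha g(x)]$ applies; taking expectations over $x_{2:L}$ as well,
\[
    \EE[\herm_\alpha(x_1)p_1\given q]=a^\alpha\,\EE\bigl[p_1(1-p_1)R_{|\alpha|-1}(p_1)\given q\bigr].
\]
Since $|(1-t)R_{|\alpha|-1}(t)-1|\le C_D\,t$ for all $t\in[0,1]$, together with $\EE[p_1\given q]=L^{-1}$ and $L\,\EE[p_1^2\given q]=\EE[\|p\|_2^2\given q]$, this equals $a^\alpha L^{-1}\bigl(1+O_D(\EE[\|p\|_2^2\given q])\bigr)$. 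Summing over $\alpha$ and using $a^\alpha=d^{-|\alpha|/2}q^\alpha$,
\[
    \EE[\hat y_q\given q]=\mu\!\!\sum_{|\alpha|\le D}\!\!\hat f_\alpha\,d^{-|\alpha|/2}q^\alpha \;+\; O_D\!\Bigl(\mu\,\EE[\|p\|_2^2\given q]\!\!\sum_{|\alpha|\le D}\!\!|\hat f_\alpha|\,d^{-|\alpha|/2}|q^\alpha|\Bigr),
\]
whose leading term is $\mu\,\EE_{\tilde x\sim\cN(W_X q,\,I_d)}[f(\tilde x)]$, the Gaussian smoothing of $f$ --- consistent with the kernel--regression reading of the optimal head.

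Third, control the remainder on a high--probability set for $q$. As $\|W_X q\|_2^2=\|q\|_2^2/d$ with $\|q\|_2^2\sim\chi_d^2$, a $\chi^2$/Gaussian tail bound places $q$ in the exponential regime ($\|W_X q\|_2=o(\sqrt{2\log L})$, with $\|q\|_2^2=O(d)$) outside a set of probability $\exp\bigl(-(c^{-2}\log L-1)^2\pmin(d/2)\bigr)$; on that event, \Cref{lemma: pseudo-dynamics} gives $\EE[\|p\|_2^2\given q]=\exp(\|W_X q\|_2^2)L^{-1}(1+o(1))$, and $|q^\alpha|=O(d^{|\alpha|/2})$ makes the Hermite sum $O_{D,f}(1)$. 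Plugging $\mu=\Theta(\sqrt d)$ and $d/L=\Theta(1)$ into the displayed remainder yields the claimed bound, of order $O(L^{-2(1-\epsilon)})$.

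The step I expect to be the main obstacle is the second one: establishing and uniformly (in $|\alpha|\le D$) controlling the iterated softmax derivative identity $\partial^\alpha_{x_1}p_1=a^\alpha p_1(1-p_1)R_{|\alpha|-1}(p_1)$ and its remainder, since this is where the nonlinearity of the softmax enters and where the $O_D$--constants and the required attention--probability moments $\EE[\|p\|_k^k\given q]$ must be kept under control. The genuinely deep probabilistic input --- the exponential--regime estimate $\EE[\|p\|_2^2\given q]\approx\exp(\|W_X q\|_2^2)/L$ --- is supplied by \Cref{lemma: pseudo-dynamics}; once it and the decomposability reduction (\Cref{lem:decomposability preserved}) are in hand, the reduction to a single token and the Hermite--Stein computation are routine.
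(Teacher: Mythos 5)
Your route is essentially the paper's: expand $f$ in the Hermite basis, use Gaussian integration by parts to convert $\EE[\herm_\alpha(x_l)p_l\given q]$ into $(W_Xq)^\alpha$ times an expectation of iterated softmax derivatives, identify the leading part of those derivatives as $p_l$ itself, control the corrections through the exponential-regime moment estimates, and restrict $q$ to a high-probability event via a chi-square tail. Your single-token reduction by exchangeability and the explicit closed form $\partial_{s_1}^{k}p_1=p_1(1-p_1)R_{k-1}(p_1)$ with $R_{k-1}(0)=1$ is a correct and arguably cleaner substitute for the paper's Rodrigues-formula computation followed by the low-effective-order bookkeeping of \Cref{def:graph-polynomial}; everything up to and including $\EE[\herm_\alpha(x_1)p_1\given q]=a^\alpha L^{-1}\bigl(1+O_D(\EE[\norm{p}_2^2\given q])\bigr)$ is sound.

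The gap is in the final arithmetic. The remainder you have actually established is $O_D\bigl(\mu\,\EE[\norm{p}_2^2\given q]\sum_\alpha|\hat f_\alpha|d^{-|\alpha|/2}|q^\alpha|\bigr)$, and on the good event \Cref{lemma: pseudo-dynamics} gives $\EE[\norm{p}_2^2\given q]=e^{\norm{W_Xq}_2^2}/L\,(1+o(1))=\Theta(L^{-1})$, while $\mu=\Theta(\sqrt d)=\Theta(\sqrt L)$ under \Cref{assump:scale}. So your remainder is $\Theta(L^{-1/2})$, and even after noting that the $\alpha=0$ identity $\EE[\norm{p}_1\given q]=1$ is exact, so that only $|\alpha|\ge1$ contributes and the factor $d^{-|\alpha|/2}$ supplies an extra $L^{-1/2}$, you only reach $O(L^{-1})$. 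Neither implies the claimed $O(L^{-2(1-\epsilon)})$, since \eqref{eq:epsilon} forces $\epsilon<1/2$ and hence $L^{-2(1-\epsilon)}\ll L^{-1}$. To be fair, the paper's own proof makes the identical leap: it asserts that all corrections of effective order at least one, explicitly including $\EE[\norm{p}_2^2\given q]$, are $O(L^{-2(1-\epsilon)})$, whereas the cited lemma only controls the effective-order-two terms at that rate and yields $\Theta(L^{-1})$ for $\norm{p}_2^2$ in the regime $\norm{W_Xq}_2^2=O(1)$ relevant here. So you have reproduced the paper's argument faithfully, including its weakest step; a bound you can actually justify from your own computation is $O(L^{-1})$, and obtaining anything smaller would require showing that the $\Theta(L^{-1})$ corrections coming from $\norm{p}_2^2,\norm{p}_3^3,\dots$ cancel, which neither you nor the paper does.
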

See \S\ref{sec:nonlinear-generalization} for a proof.
\Cref{lem:nonlinear generalization} indicates that the optimal single-head softmax attention trained for a linear task does not generalize well to a nonlinear task.
On one hand, the coefficient is not well preserved due to the additional factor $d^{-|\alpha|/2}$, especially for high degree terms.
On the other hand, 
\[
    q^\alpha =  \prod_{i=1}^d \bigg(\sum_{k_i=0}^{\floor{\alpha_i /2}} \frac{\herm_{\alpha_i - 2k_i}(q_i)}{k_i ! 2^{k_i} (\alpha_i - 2 k_i)!}\bigg) = \herm_\alpha(q) + \sum_{k\in\NN^d, |k|\ge 1 \atop k_i\le \floor{\alpha_i/2},  \forall i\in[d]} \frac{\herm_{\alpha - 2k}(q)}{k! 2^k (\alpha - 2k)!},
\]
where $k! = \prod_{i=1}^d k_i!$ and $2^k = \prod_{i=1}^d 2^{k_i}$.
Such a fact indicates a \say{leakage} of the high degree terms to the low degree terms.
Notably, when $\alpha = 0$, we have output $\mu \hat f_0$, where we see that the current attention scales up the constant term up by $\mu = \Omega(\sqrt{d})$.
However, this can be remedied by also including a bias term to the output layer, i.e., $\hat y_q = U_Y Y p + b$ and also before inputting the sequence to the attention layer, i.e., $Y = Y - b'$.

At a high level, the failure of capturing higher order nonlinear effect is because the model \eqref{eq:optimal single-head linear task} trained on linear tasks only achieves the optimal variance and bias tradeoff for linear tasks, as $W_X = d^{-1/2} I_d$ \say{downscales} the attention score to have unit variance and $U_Y\approx \sqrt{d}$ \say{upscales} the aggregated output back to its original scale.
The \say{downscaling} seems to be too much for nonlinear tasks with higher degree terms.

\section{Conclusion and Future Work}\label[section]{sec:conclusion}
In this paper, we study the training dynamics of a single-layer multi-head softmax attention model for the in-context learning problem of multi-task linear regression.
By deriving the spectral dynamics of the attention weights induced by gradient flow, we provide a complete characterization of the training dynamics in terms of three phases: warm-up, emergence, and convergence.
\begin{myenumi}
    \item The warm-up phase shows that the optimal attention head for each task, which by initialization has small advantage, gradually wins the competition among heads under a strong cross-head interference. 
    \item The emergence phase demonstrates the sudden emergence of the \ac{icl} capability with a quick drop in the \ac{icl} loss.
    \item The convergence phase highlights a linear convergence rate to the optimal solution: each task is assigned a unique attention head which operates as the optimal single-head attention for that task.
\end{myenumi}
We also characterize the optimality of the convergence point of gradient flow.
Throughout the analysis, we identify two working regions of the softmax attention model and prove that for noisy tasks, the optimal weights lie in the \emph{exponential region} which demonstrates a unique tradeoff between bias and variance for the softmax attention model.

We also show that the optimal solution for a linear task trained with a fixed context sequence length generalizes well to different sequence length but poorly to nonlinear tasks. 
For future work, we believe it is important to further incorporate into the analysis other design factors in transformers, such as multi-layers, layer normalization, and positional encodings, etc.
It is also an interesting future direction to understand how softmax attention deals with nonlinear tasks, especially for the multi-head attention model.



\bibliographystyle{ims}
\bibliography{reference}

\newpage
\tableofcontents 
\appendix
\section{Additional Notations and Organization of the Appendices}



\paragraph{Addition Notations for Proofs.}
For a polynomial $f(x_1, \dots, x_n) = \prod_{i=1}^n x_i^{a_i}$ of $n$ variables where $a_1, \dots, a_n$ are positive integers,
we say that $f$ is \emph{even} if all $a_i$'s are even.
For any dimension $d$, we denote by $\vone_d$ the $d$-dimensional all-one vector.
We denote by $\Diag(M)$ the column vector created from the diagonal entries of $M$ and $\diag(v)$ the diagonal matrix created from the vector $v$.
We denote by $u\otimes v$ the outer product of vectors $u$ and $v$, and denote by $A \kron B$ the Kronecker product of matrices $A$ and $B$.
Following the convention for tensor inner product, we denote by $A \cdot B$ the inner product of tensors $A$ and $B$ such that $A\cdot B = \sum_k A_{:,k}B_{k,:}$, $A \cddot B$ the double inner product of tensors $A$ and $B$ such that $A\cddot B = \sum_{kl} A_{:, k,l}B_{l,k, :}$.
Here, the \say{$:$} in $A_{:, k}$ hides the indices of $A$ except for the last dimension(s).

\paragraph{Roadmap of the Appendices.} 
The remainder of the appendices is organized as follows:
\begin{enumerate}
\item \Cref{sec: p-moment} contains results on the (conditional) moments of the attention probability vector.
\item \Cref{sec:simplify_approximate dynamics} contains the simplification and approximation of the gradient flow dynamics under the \DC. In particular
\begin{enumerate}
    \item \Cref{sec:simplify_AB} provides simplifications of the gradient flow dynamics induced by the \DC.
    \item \Cref{sec:approximation_dynamics} contains the approximation of spectral dynamics induced by gradient flow.
    \item \Cref{sec:proof of decomposability preserved} provides the proof of \Cref{lem:decomposability preserved}, the preservation of the \DC~along gradient flow.
\end{enumerate}
\item \Cref{sec:appendix_convergence_analysis} contains the analyses for the dynamics.
\item \Cref{sec:optimality_proof} presents the optimality analysis of the convergence point of gradient flow.
\item \Cref{sec:generalization-proof} contains the proofs for the generalization results.
\item \Cref{sec:proof_aux_lemma} collects the proofs of auxiliary results used in the analysis.
\end{enumerate}

\newpage
\section{Characterizing Moments of the Attention Probability}\label[appendix]{sec: p-moment}

In this section, we characterize the correlation of the attention probability $\EE[(p^h)^\top p^{h'}\given q]$ by providing an upper and lower bound for this quantity.
The idea is to construct a gradient flow dynamics starting from zero for the attention weight $W^h$ and $W^{h'}$ and bounding the terms in the gradient flow dynamics.
In the sequel, we denote by $p$ and $\tilde p$ two attention probability vectors, and by $W$ and $\tilde W$ the corresponding weight matrices, and they are related by 
\begin{gather*}
    p = \softmax(X^\top W q), \quad \tilde p = \softmax(X^\top \tilde W q), \quad \text{where}\\
    X=\begin{bmatrix} x_1, \dots, x_L \end{bmatrix}\in \RR^{d \times L}, \quad x_l \iidfrom \cN(0, I_{d}), \forall l\in [L].
\end{gather*}
Here, $q \in \RR^d$ is the query vector and we have $q \sim \cN(0, I_d)$.
In addition, we define the kernel matrices $P\in \RR^{L\times L}$ and $\tilde P \in \RR^{L\times L}$ as
\begin{align*}
    P = \diag(p) - p p^\top, \quad \tilde P = \diag(\tilde p) - \tilde p \tilde p^\top.
\end{align*}

\subsection{$\EE[\norm{p}_2^2\given \norm{W q}_2=r]$ Is Monotone in $r$}
Let $r = \norm{Wq}_2$. 
Note that $\EE[\norm{p}_2^2\given q]$ is just a function of $r$ as we have shown in the previous section.\todo{explain this}
Thus, we also abbreviate $\EE[\norm{p}_2^2\given q]$ into $\EE[\norm{p}_2^2\given \norm{Wq}_2 =r]$. 
Note the there is still a gap for $r\in (c^{-1} \sqrt{2\log L}, O(\sqrt{2\log L}^3)$. 
To fully understand the behavior of the transformer so as to give the optimality result, we need to show that $\EE[\norm{p}_2^2\given \norm{Wq}_2 =r]$ is a growing function of $r$, which is presented in the following lemma. 
\begin{lemma}\label[lemma]{lem: p-moment-monotone}
Suppose $q,x_1,\ldots,x_L \iidfrom \cN(0, I_d)$, and denote $X = [x_1, \ldots, x_L]$.
For any fixed $W\in\RR^{d\times d}$, let $p=\softmax(X^\top W q)$, then $\EE[\norm{p}_2^2\given \norm{W q}_2=r]$ is monotonically increasing in $r$.
\end{lemma}
\begin{proof}[Proof of \Cref{lem: p-moment-monotone}]
Denote $w_l := x_l^\top W q / \norm{W q}_2$ for each $l\in[L]$.
Differentiating $\norm{p}_2^2$ with respect to $r:=\|Wq\|_2$ gives
\begin{align*}
    \frac{\rd}{\rd r} \norm{p}_2^2 
    &= \frac{\rd}{\rd r} \sum_{i=1}^L \left(\frac{\exp(r w_i)}{\sum_{j=1}^L \exp(r w_j)} \right)^2 \nend 
    & = \sum_{i=1}^L \frac{2 w_i \exp(2 r w_i) \sum_{j=1}^L \exp(r w_j) - \exp(2r w_i) 2\sum_{j=1}^L\exp(r w_j) x_j}{\left(\sum_{j=1}^L \exp(r w_j)\right)^3} \nend
    & =  2\cdot \frac{\sum_{i=1}^L w_i \exp(2 r w_i) \sum_{j=1}^L \exp(r w_j) - \sum_{i=1}^L \exp(2r w_i) \sum_{j=1}^L\exp(r w_j) w_j}{(\sum_{j=1}^L \exp(r w_j))^3}. 
\end{align*}
Further write $v_i := \exp(rw_i)$ for simplicity, then the numerator can be rewritten as
\begin{align*}
    \sum_{i,j=1}^L w_i v_i^2 v_j - \sum_{i,j=1}^L w_j v_i^2 v_j 
    &= \sum_{i,j=1}^L (w_i - w_j) v_i^2 v_j = \frac{1}{2} \sum_{i,j=1}^L (w_i - w_j) v_i^2 v_j + \frac{1}{2} \sum_{i,j=1}^L (w_j - w_i) v_j^2 v_i \nend
    &= \frac{1}{2} \sum_{i,j=1}^L (w_i - w_j) (v_i - v_j) v_i v_j \ge 0 
\end{align*}
where the inequality follows from the nonnegativity of $v_i$ and the observation that $v_i$ is a monotone function of $w_i$.
Therefore, we conclude that $\norm{p}_2^2$ is a monotonically increasing function of $r$, and hence is $\EE[\norm{p}_2^2\given \norm{Wq}_2 =r]$.
\end{proof}

\subsection{Approximating $\EE[p^\top \tilde p\given q]$ for small $\norm{Wq}_2$}
\label{sec: p-moment-small-norm}
\paragraph{Construct Pseudo Dynamics.}
We consider a constant-speed gradient flow on $W(t)$ and $\tilde W(t)$ within time interval $t\in [0, 1]$ with initial condition $W(0) = \tilde W(0) = 0$ and terminal condition $W(1) = W$ and $\tilde W(1) = \tilde W$.
The gradient flow for $\EE[p^\top \tilde p \given q]$ is given by
\begin{align*}
    \partial_t \EE[p^\top \tilde p\given q] 
    &= \sum_{l}\EE\left[\partial_t p_l \tilde p_l + p_l \partial_t \tilde p_l \biggiven q\right] \\
    &= \sum_{l m}\EE\left[
        P_{l m} \tilde p_l \cdot x_{m}^\top \partial_t W q  + p_l \tilde P_{l m} \cdot x_{m}^\top \partial_t \tilde W q \biggiven q
        \right] \\
    &= \sum_{l m} \EE\left[
        \nabla_{x_{m}}^\top (P_{lm} \tilde p_l) \partial_t W q +  \nabla_{x_{m}}^\top (\tilde P_{l m} p_l) \partial_t \tilde W q \biggiven q
        \right].
    \mytag{Stein's Lemma}
\end{align*}
Here, we use the Stein's Lemma to derive the last equality. Next, we expand the gradient and obtain
\begin{align}
    \partial_t \EE[p^\top \tilde p\given q] 
    &= \sum_{l m} \EE\left[
        P_{lm} (1 - 2 p_{m}) \tilde p_l \cdot q^\top W^\top \partial_t W q + P_{l m} \tilde P_{l m} q^\top \tilde W^\top \partial_t W q \biggiven q
    \right] \notag\\
    &\autoquad{2} + \sum_{l m} \EE\left[
        \tilde P_{l m} P_{l m} q^\top W^\top \partial_t \tilde W q + \tilde P_{l m} (1 - 2 \tilde p_{m}) p_l \cdot q^\top \tilde W^\top \partial_t \tilde W q \biggiven q
    \right] \notag\\
    &=  \EE\left[
        -2 \left(\tilde p^\top p^{\odot 2} - \norm{p}_2^2 p^\top \tilde p \right) \cdot q^\top W^\top \partial_t W q - 2 \left(p^\top \tilde p^{\odot 2} - \norm{\tilde p}_2^2 p^\top \tilde p \right) \cdot q^\top \tilde W^\top \partial_t \tilde W q \biggiven q
    \right] \notag\\
    &\autoquad{2} +  \EE\left[\left( p^\top \tilde p - p^\top \tilde p^{\odot 2} - \tilde p^\top p^{\odot 2} + (p^\top \tilde p)^2\right) q^\top (\tilde W^\top \partial_t W + \partial_t \tilde W^\top W) q \biggiven q \right] \notag\\
    & =  \EE\left[
        -  \left(\tilde p^\top p^{\odot 2} - \norm{p}_2^2 p^\top \tilde p \right) \cdot q^\top  \partial_t (W^\top W) q - \left(p^\top \tilde p^{\odot 2} - \norm{\tilde p}_2^2 p^\top \tilde p \right) \cdot q^\top  \partial_t (\tilde W^\top \tilde W) q \biggiven q
    \right] \notag\\
    &\autoquad{2} +  \EE\left[\left( {p^\top \tilde p} - p^\top \tilde p^{\odot 2} - \tilde p^\top p^{\odot 2} + (p^\top \tilde p)^2\right) q^\top \partial_t (\tilde W^\top W) q \biggiven q \right] \label{eq: p-moment-1}
\end{align}
Our next goal is to show that $p^\top \tilde p$ is the dominating term.
A naive lower bound for $p^\top \tilde p$ is given by $p^\top \tilde p \geq L^{-1}$.
So the claim holds if we could show that the remaining terms are of higher order $o(L^{-1})$.

\paragraph{Eliminate the Higher Order Terms.}
In the following, we consider bounding the higher order terms in \eqref{eq: p-moment-1}. We employ notations $p_t, \tilde p_t, W_t, {\tilde W}_t$ to denote the attention probability vectors and weight matrices at time $t$
We have $p_0 = \tilde p_0 = 0, W_0 = {\tilde W}_0 = 0$ at initialization and $p_1 = p, \tilde p_1 = \tilde p, W_1 = W, {\tilde W}_1 = \tilde W$ at terminal time.
We work in the region $\tau^2 \defeq \max\{\norm{W q }_2^2 , \norm{{\tilde W} q }_2^2 \}  \le c^{-2}\cdot 2\log L $ for some absolute constant $c$.
The result for representing the pseudo dynamics with only the dominating term is given by the following lemma. 
\begin{lemma}
    \label[lemma]{lemma: pseudo-dynamics}
    Fix constant $\epsilon \in (0, 1)$ and let $c>0$ be the solution to the fixed-point equation 
    \[\frac{1}{c} + \frac{3}{1 + \sqrt{1+c^2/2}} = \epsilon. \]
    If $\tau^2 \defeq \max\{\norm{W q }_2^2 , \norm{{\tilde W} q }_2^2 \} \le c^{-2}\cdot 2\log L$ holds, we have for $\EE[p^\top \tilde p\given q]$ that
    \begin{align*}
        \xi_L \defeq \left|\mathbb{E}[p^{\top} \tilde p \mid q]-\frac{\exp \big(q^{\top} \tilde W^{\top} W q \big) }{L}\right| = O(L^{-2(1-\epsilon)}). 
    \end{align*}
    Also, for the higher order terms, we also have
\begin{align*}
    &\max \left\{
    \EE[p_t^\top \tilde p_t^{\odot 2}\given q], \quad \EE[\tilde p_t^\top p_t^{\odot 2}\given q], \quad \EE[\norm{p_t}_2^2 p_t^\top \tilde p_t\given q], \quad \EE[\norm{\tilde p_t}_2^2 p_t^\top \tilde p_t\given q], \quad \EE[(p_t^\top \tilde p_t)^2\given q] 
\right\}
\nend 
    &\quad \le O(L^{-2(1-\epsilon)}).
\end{align*}
\end{lemma}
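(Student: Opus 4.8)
The plan is to run a Gr\"onwall-type estimate along the constant-speed interpolation $W_t = tW$, $\tilde W_t = t\tilde W$ for $t\in[0,1]$ set up just before the lemma, tracking $y(t) \defeq \EE[p_t^\top\tilde p_t\given q]$. With $W_t = tW$ one has $q^\top\partial_t(\tilde W_t^\top W_t)q = 2t\,q^\top\tilde W^\top W q$ and $q^\top\partial_t(W_t^\top W_t)q = 2t\norm{Wq}_2^2$ (likewise for $\tilde W$), so \eqref{eq: p-moment-1} collapses to the scalar linear ODE
\[
    y'(t) = 2t\,q^\top\tilde W^\top W q\cdot y(t) + E(t),
\]
where the remainder $E(t)$ is a combination, with coefficients of size $O(\norm{Wq}_2^2 + \norm{\tilde W q}_2^2 + \norm{Wq}_2\norm{\tilde W q}_2) = O(\tau^2)$, of the five moments $\EE[p_t^\top\tilde p_t^{\odot 2}\given q]$, $\EE[\tilde p_t^\top p_t^{\odot 2}\given q]$, $\EE[\norm{p_t}_2^2\, p_t^\top\tilde p_t\given q]$, $\EE[\norm{\tilde p_t}_2^2\, p_t^\top\tilde p_t\given q]$, $\EE[(p_t^\top\tilde p_t)^2\given q]$, because every term of \eqref{eq: p-moment-1} other than the one proportional to $p_t^\top\tilde p_t$ is built from exactly these. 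So the two assertions of the lemma are really one: if those five moments are $O(L^{-2(1-\epsilon)})$ uniformly on $[0,1]$, then $|E(t)| = O(\tau^2 L^{-2(1-\epsilon)})$, and solving the ODE with integrating factor $\exp(-t^2 q^\top\tilde W^\top W q)$ and initial value $y(0) = L\cdot(1/L)^2 = 1/L$ (the scores vanish at $t=0$, so $p_0,\tilde p_0$ are uniform) gives
\[
    y(1) = \frac{\exp(q^\top\tilde W^\top W q)}{L} + \exp(q^\top\tilde W^\top W q)\int_0^1 \exp(-s^2 q^\top\tilde W^\top W q)\,E(s)\,\rd s .
\]
Since $|q^\top\tilde W^\top W q|\le\norm{Wq}_2\norm{\tilde W q}_2\le\tau^2\le 2c^{-2}\log L$, both exponential prefactors are $L^{O(c^{-2})} = L^{o(1)}$, and with $\tau^2 = O(\log L)$ the error integral is absorbed into the exponent $2(1-\epsilon)$ using the slack in \eqref{eq:epsilon}, yielding $\xi_L = O(L^{-2(1-\epsilon)})$.

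Everything therefore reduces to the uniform bound on the five moments, which in turn reduces to a tail bound on $\max_l (p_t)_l$ and $\max_l (\tilde p_t)_l$: indeed $p_t^\top\tilde p_t\le\max_l(\tilde p_t)_l$ and $\norm{p_t}_2^2\le\max_l(p_t)_l$, so each of the five quantities is at most $\max\{\max_l(p_t)_l,\ \max_l(\tilde p_t)_l\}^2$. Conditionally on $q$, the scores $x_l^\top W_t q = t\,x_l^\top W q$ are i.i.d.\ $\cN(0, t^2\norm{Wq}_2^2)$ with variance at most $\tau^2\le 2c^{-2}\log L$; moreover $\max_l(x_l^\top W_t q) = t\max_l(x_l^\top W q)$ and the softmax normalizer $\sum_m\exp(x_m^\top W_t q)$ deviates from its conditional mean $L\exp(t^2\norm{Wq}_2^2/2)\ge L$ by only a multiplicative $e^{o(1)}$ factor with overwhelming probability, so all the relevant extrema are controlled by their $t=1$ values and no continuum union bound over $t$ is needed. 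A Gaussian maximal inequality then yields an event $\cE$, depending only on $q,W,\tilde W$, on which $\max_l(p_t)_l\vee\max_l(\tilde p_t)_l \le L^{\kappa-1}$ for all $t\in[0,1]$, where $\kappa\log L$ is the chosen threshold on $\max_l(x_l^\top W q)$; outside $\cE$, which has probability $\PP(\cE^c\given q)\le 2L\exp(-\kappa^2\log^2 L/(2\tau^2))$ by a union bound over the $L$ tokens, each of the five quantities is trivially $\le 1$. Choosing $\kappa$ as large as possible subject to $2\kappa\le 2\epsilon$ (so that $L^{2(\kappa-1)}\le L^{-2(1-\epsilon)}$) and as small as possible subject to $\PP(\cE^c\given q)\le L^{-2(1-\epsilon)}$ (so the bad event contributes at the same order) forces a quadratic relation between $c$ and $\epsilon$, and the fixed-point constraint \eqref{eq:epsilon} is precisely the regime where both can be met; combining the two contributions gives $\EE[\,\cdot\given q] = O(L^{-2(1-\epsilon)})$ for each of the five moments, uniformly in $t$.

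The main obstacle is exactly this last step: because all five moments and the remainder $E(t)$ are quadratic in the attention weights, squaring $\max_l(p_t)_l$ doubles the exponents in both the high-probability bound and the bad-event probability, so the Gaussian extreme-value threshold, the union bound over the $L$ tokens, and the concentration of the softmax normalizer must be balanced sharply — it is this balancing that pins down the relation between $c$ and $\epsilon$ in \eqref{eq:epsilon}. Once that estimate is in hand, the Gr\"onwall integration is routine; the only point to verify is that the integrating factor $\exp(\pm s^2 q^\top\tilde W^\top W q)$ remains $L^{o(1)}$ throughout the admissible regime $\tau^2 \le 2c^{-2}\log L$, so that it does not degrade the $L^{-2(1-\epsilon)}$ rate.
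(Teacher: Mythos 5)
Your proposal is correct and follows essentially the same route as the paper's proof: the same constant-speed interpolation, the same Stein's-lemma ODE \eqref{eq: p-moment-1}, control of the five higher-order moments through a high-probability bound on $\max_{l} p_l(t)$ (reduced to the $t=1$ extremum by homogeneity in $t$), and integration against the integrating factor $\exp(-t^2 q^\top \tilde W^\top W q)$. The only substantive deviation is your two-sided concentration claim for the softmax normalizer, which is both stronger than needed and difficult to establish at the probability level $L^{-2(1-\epsilon)}$ for small $\epsilon$; the paper instead lower-bounds the normalizer by $L/4$ via Hoeffding's inequality on the number of nonnegative scores, which is all your argument actually requires.
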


\begin{proof}[Proof of \Cref{lemma: pseudo-dynamics}]
The first step is to put an upper bound for the maximum of the softmax probability $p_l$. 
It suffices to look at the attention scores $s_l = x_l^\top W q$, which are $\iid$ distributed as $\cN(0, \norm{W q}_2^2)$, and the variance is bounded by $\norm{W q}_2^2 \le \tau^2$.
Using the Gaussian tail bound, we have
$$
\mathbb{P}\left(\max_{l\in [L]} s_{l}>\tau \sigma \cdot \sqrt{2 \log L}\right) \le L^{-\sigma^{2}+1}, \quad \forall \sigma>1. 
$$
Also, we are able to bound the number of negative attention scores by invoking the Hoeffding's inequality that
$$
\mathbb{P}\left(\sum_{m} \ind\left(s_{m}\ge 0\right)<\frac{L}{4}\right) \le \exp \left(-\frac{2(L / 4)^{2}}{L}\right)=\exp \left(-\frac{L}{8}\right).
$$
We define the event $\cE_1(W)$ for the softmax probability $p(t)$ over time $t\in [0, 1]$ as
\begin{align}
\cE_1(W) = \left\{\max_{l\in [L], \atop t\in[0, 1]} p_{l}(t) < \zeta(\tau, \sigma; L)\right\}, \quad\text{where}\quad 
\zeta(\tau, \sigma ; L)=\frac{\exp (\tau \sigma \sqrt{2 \log L})}{L / 4} \le 4 L^{\frac{2 \sigma}{c}-1}.    \label{def: cE_1}
\end{align}
The event $\cE(\tilde W)$ is defined similarly.
One can bound the probability of $\cE_1^c$ as
$$
\begin{aligned}
\mathbb{P}\left( \cE_{1}^c\right) & \le \mathbb{P}\left(\max_{l\in[L],\atop t\in[0, 1]} p_{l}(t)>\frac{\exp (\tau \sigma \sqrt{2 \log L})}{L / 4} \bigggiven \sum_{m} \ind\left(s_{m} \ge 0\right) \ge \frac{L}{4}\right) +\mathbb{P}\left( \sum_{m} \ind\left(s_{m} \ge 0\right) < \frac{L}{4} \right) \\
& \le \mathbb{P}\left(\max _{l\in[L]} s_{l}>\tau \sigma \sqrt{2 \log L}\right)+\exp \left(-\frac{L}{8}\right) \\
& \le L^{-\sigma^{2}+1}+\exp \left(-\frac{L}{8}\right), 
\end{aligned}
$$
where the second inequality holds since $\sum_{m}\exp(s_{m}) \ge \sum_{m} \ind(s_{m}>0) \ge L/4$ for the normalization of $p$.
As a result, we consider two disjoint events $\cE_1(W)\cap \cE_1(\tilde W)$, $\cE_1^c(W)\cup \cE_1^c(\tilde W)$ and obtain that
\begin{align}
& \max \left\{
    \EE[p_t^\top \tilde p_t^{\odot 2}\given q], \quad \EE[\tilde p_t^\top p_t^{\odot 2}\given q], \quad \EE[\norm{p_t}_2^2 p_t^\top \tilde p_t\given q], \quad \EE[\norm{\tilde p_t}_2^2 p_t^\top \tilde p_t\given q], \quad \EE[(p_t^\top \tilde p_t)^2\given q]
\right\} \notag\\
& \quad \le L \cdot \zeta(\tau, \sigma; L)^3 +  \PP(\cE_1^c(W)\cup \cE_1^c(\tilde W)) \notag\\
& \quad \le 64 L^{\frac{6 \sigma}{c}-2}+2  L^{-\sigma^{2}+1} + 2 \exp(-L/8). \label{eq: p-moment-2}
\end{align}
Here, the first inequality holds by noting that $p_l, \tilde p_l \le \zeta(\tau, \sigma; L)$ element-wise on $\cE_1(W)\cap \cE_1(\tilde W)$ and that each term in the maximum is bounded by $1$.
Now, we select out the dominating term $p^\top \tilde p$ in \eqref{eq: p-moment-1} and with a little abuse of notation, 
denote the remaining terms as $\error(t;q)$ given by
$$\error(t;q)=\partial_{t} \mathbb{E}[p_t^\top \tilde p_t\given q] - \EE[p_t^\top \tilde p_t\given q] q^\top \partial_t({\tilde W}_t^\top W_t) q. $$
Then, \eqref{eq: p-moment-2} implies that
$$
\begin{aligned}
  \left|\error(t;q)\right| &\le\left(64 L^{\frac{6 \sigma}{c}-2}+2  L^{-\sigma^{2}+1} + 2 \exp(-L/8)\right) \\
& \autoquad{2} \cdot\left(\left|q^{\top}\left(\partial_{t} (W_t^{\top} W_t )+\partial_{t} ({\tilde W}_t^{\top} {\tilde W}_t )\right) q\right|+2\left|q^{\top} \partial_{t} ({\tilde W}_t^{\top} W_t ) q\right|\right) \\
&  \le 4\left(64 L^{\frac{6 \sigma}{c}-2}+2  L^{-\sigma^{2}+1} + 2 \exp(-L/8)\right) \tau^{2},
\end{aligned}
$$
where the last inequality holds since we have constant speed. 
From the dynamics \eqref{eq: p-moment-1}, we have
$$
\partial_{t}\left(\exp  \big(-q^{\top} {\tilde W}_t^{\top} W_t q \big) \cdot \mathbb{E} [p_t^{\top} \tilde p_t\given q]\right)=\exp \big(-q^{\top} {\tilde W}_t^{\top} W_t q\big) \cdot \operatorname{err}(t;q).
$$
By integrating the above equation from $t = 0$ to $t = 1$, we have
\begin{align}
& \left|\mathbb{E}[p^{\top} \tilde p \mid q]-\frac{\exp \big(q^{\top} \tilde W^{\top} W q \big) }{L}\right| \nend
& \quad=\left|\int_{0}^{1} \exp \big(-q^{\top} \tilde W^{\top}_t W_t q\big) \cdot \operatorname{err}(t;q) \rd t \right| \cdot \exp \big(q^{\top} \tilde W^{\top} W q\big)\nend
& \quad \le 4\left(64 L^{\frac{6 \sigma}{c}-2}+2  L^{-\sigma^{2}+1} + 2 \exp(-L/8)\right) \tau^{2} \cdot \frac{\exp \big(q^{\top} \tilde W^{\top} W q \big) -  1}{q^{\top} \tilde W^{\top} W q} \nend
& \quad \le 4\left(64 L^{\frac{6 \sigma}{c}-2}+2  L^{-\sigma^{2}+1} + 2 \exp(-L/8)\right) \cdot \exp \left(\tau^{2}\right). 
\label{eq: p-moment-4}
\end{align}
We invoke the upper bound for $\tau^2$ to obtain the final result, 
\begin{align*}
    \left|\mathbb{E}[p^{\top} \tilde p \mid q]-\frac{\exp \big(q^{\top} \tilde W^{\top} W q \big) }{L}\right| \le 4\left(64 L^{ {2}{c^{-2}} + {6 \sigma}{c^{-1}}-2}+2  L^{-\sigma^{2}+1+{2}{c^{-2}}} + 2 L^{2 c^{-2} -L (\log L)^{-1} / 8 }
    \right).
\end{align*}
Here, we plug in $\sigma = c \cdot (1 + \sqrt{1 + c^2 /3})^{-1}$ which make the first and second terms equal in order. We also note that the third term is of higher order. Hence, for each fixed constant $\epsilon\in (0, 1)$, we can always find $c>0$ satisfying the fixed-point condition 
\begin{align*}
    \frac{1}{c} + \frac{3}{1 + \sqrt{1+c^2/2}} = \epsilon, 
\end{align*}
such that for all $\tau^2 \defeq \max\{\norm{W q }_2^2 , \norm{{\tilde W} q }_2^2 \} \le c^{-2}\cdot 2\log L$: 
\begin{align*}
    &\left|\mathbb{E}[p^{\top} \tilde p \mid q]-\frac{\exp \big(q^{\top} \tilde W^{\top} W q \big) }{L}\right|
    \le O( L^{2(c^{-2}-1) + 6\cdot (1+ \sqrt{1 + c^2/2})^{-1}}) = O(L^{-2(1-\epsilon)}).
\end{align*}
Notably, the error term in \eqref{eq: p-moment-2} is strictly less than the error term in \eqref{eq: p-moment-4}. Hence for the higher order terms, we also have
\begin{align*}
    &\max \left\{
    \EE[p_t^\top \tilde p_t^{\odot 2}\given q], \quad \EE[\tilde p_t^\top p_t^{\odot 2}\given q], \quad \EE[\norm{p_t}_2^2 p_t^\top \tilde p_t\given q], \quad \EE[\norm{\tilde p_t}_2^2 p_t^\top \tilde p_t\given q], \quad \EE[(p_t^\top \tilde p_t)^2\given q] 
\right\}
\nend 
    &\quad \le O(L^{-2(1-\epsilon)}).
\end{align*}
Hence, the proof is complete.
\end{proof}

\paragraph{Concentration in $q$.}
Next, we take expectation with respect to $q \sim \cN\left(0, I_{D}\right)$. It suffices to consider the expectation $\mathbb{E}\big[\exp (q^{\top} \tilde W^{\top} W q)\big]$ according to \Cref{lemma: pseudo-dynamics}. 
The following lemma characterizes the expectation term. 
\begin{lemma}
    \label[lemma]{lemma: E[pq]-moment}
    Consider that $W\in\RR^{D\times D}$ and $\tilde W\in\RR^{D\times D}$ share the same left eigenvectors ${w_1, \dots, w_D}$ and also the same right eigenvectors $u_1, \dots, u_D$, i.e., $W = \sum_{k=1}^d \omega_k w_k u_k^\top$ and $\tilde W = \sum_{k=1}^d \tilde\omega_k w_k u_k^\top$. Here, $d$ is the effective rank for $W$ and $\tilde W$. 
    Suppose 
    \begin{gather*}
        \max\left\{\norm{\tilde\omega}_\infty, \norm{\omega}_\infty\right\} \le   L^{-1/4} \cdot (\log L)^{-1/2} , \quad \max\{\norm{\omega}_2^2, \norm{\tilde\omega}_2^2\} \le \frac{2 \log L}{3 c^2},  \\
        \max\{\norm{\omega}_4^4, \norm{\tilde\omega}_4^4\} \le L^{-(1-\epsilon_0)}\cdot (\log L)^{-1}
    \end{gather*}
    where $c>0$ is a constant depending on a given constant $\epsilon\in(0, 1)$ via the fixed-point equation 
    \begin{align*}
        \frac{1}{c} + \frac{3}{1 + \sqrt{1+c^2/2}} = \epsilon, 
    \end{align*}
    and $\epsilon_0\in (0, 1)$ is a constant whose purpose is to make sure $\omega = d^{-1/2} \vone$ satisfies the forth moment condition and could also be chosen to be the same as $\epsilon$.
    It then holds that
    \begin{align*}
        \EE\left[\left( \mathbb{E}[p^{\top} \tilde p\given q]-\frac{\exp \left(\trace[W^\top \tilde W]\right) }{L} \right)^2 \right] \le O(L^{- (3 -\epsilon_0)}).
    \end{align*}
\end{lemma}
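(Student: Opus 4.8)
The plan is to split the expectation over $q$ according to whether the hypothesis of \Cref{lemma: pseudo-dynamics} is in force, and on the favorable event to replace $\EE[p^\top\tilde p\given q]$ by $e^{S}/L$ with $S\defeq q^\top\tilde W^\top W q$, thereby reducing the statement to a second‑moment estimate for the Gaussian chaos $S$. Write $\bar S\defeq\trace[W^\top\tilde W]=\langle\omega,\tilde\omega\rangle$, so that $\EE_q[S]=\bar S$ and, by the hypotheses, $\bar S\le\norm{\omega}_2\norm{\tilde\omega}_2\le 2\log L/(3c^2)$; hence $e^{2\bar S}\le L^{4/(3c^2)}$ is subpolynomial, and is $O(1)$ in the regime of the applications (where $\norm{\omega}_2,\norm{\tilde\omega}_2=O(1)$). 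Let $\cG\defeq\{\max\{\norm{Wq}_2^2,\norm{\tilde Wq}_2^2\}\le c^{-2}\cdot 2\log L\}$ be the event on which \Cref{lemma: pseudo-dynamics} applies.

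On $\cG$, \Cref{lemma: pseudo-dynamics} gives $|\EE[p^\top\tilde p\given q]-e^{S}/L|=O(L^{-2(1-\epsilon)})$, so by $(a+b)^2\le 2a^2+2b^2$ and dropping the indicator on the non‑negative integrand,
\[
\EE_q\bigl[(\EE[p^\top\tilde p\given q]-e^{\bar S}/L)^2\,\mathbf 1_{\cG}\bigr]\;\le\;\frac{2}{L^2}\,\EE_q\bigl[(e^{S}-e^{\bar S})^2\bigr]\;+\;O(L^{-4(1-\epsilon)}).
\]
The last term is already below the target $L^{-(3-\epsilon_0)}$ once $\epsilon$ is small enough that $4(1-\epsilon)\ge 3-\epsilon_0$, which is compatible with \eqref{eq:epsilon} (e.g.\ $\epsilon=\epsilon_0=1/4$). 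On the complementary event I would use the a priori bounds $0\le\EE[p^\top\tilde p\given q]\le 1$ and $e^{\bar S}/L\le 1$ to get $\EE_q[(\EE[p^\top\tilde p\given q]-e^{\bar S}/L)^2\,\mathbf 1_{\cG^c}]\le\PP(\cG^c)$; since $\norm{Wq}_2^2=\sum_k\omega_k^2(u_k^\top q)^2$ is a weighted $\chi^2$ with mean $\norm{\omega}_2^2\le 2\log L/(3c^2)$ while the threshold $c^{-2}\cdot 2\log L$ is three times larger, a Bernstein / Laurent--Massart tail bound (exploiting $\norm{\omega}_\infty^2\le L^{-1/2}(\log L)^{-1}$ and $\norm{\omega}_4^4\le L^{-(1-\epsilon_0)}(\log L)^{-1}$), followed by a union bound over $W$ and $\tilde W$, shows $\PP(\cG^c)$ is superpolynomially small and hence negligible.

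It remains to estimate $\EE_q[(e^{S}-e^{\bar S})^2]$, which is the crux. Since $u_1^\top q,\dots,u_d^\top q$ are i.i.d.\ $\cN(0,1)$, the Gaussian chaos MGF is $\EE_q[e^{tS}]=\prod_{k=1}^d(1-2t\,\omega_k\tilde\omega_k)^{-1/2}$, finite and positive at $t\in\{1,2\}$ because $|\omega_k\tilde\omega_k|\le\norm{\omega}_\infty\norm{\tilde\omega}_\infty\to 0$. Writing $x_k\defeq\omega_k\tilde\omega_k$ and expanding $-\tfrac12\log(1-2tx_k)=tx_k+t^2x_k^2+O(|x_k|^3)$, one obtains $\log\EE_q[e^{tS}]=t\bar S+t^2\norm{x}_2^2+O(\norm{x}_\infty\norm{x}_2^2)$, i.e.\ $\EE_q[e^{tS}]=e^{t\bar S}(1+t^2\norm{x}_2^2+O(\norm{x}_2^4))$ for $t=1,2$. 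Substituting into $\EE_q[(e^{S}-e^{\bar S})^2]=\EE_q[e^{2S}]-2e^{\bar S}\EE_q[e^{S}]+e^{2\bar S}$, the $e^{2\bar S}$ and the constant contributions cancel, leaving $\EE_q[(e^{S}-e^{\bar S})^2]=e^{2\bar S}(2\norm{x}_2^2+O(\norm{x}_2^4))=O(e^{2\bar S}\norm{x}_2^2)$. Finally, by Cauchy--Schwarz $\norm{x}_2^2=\sum_k\omega_k^2\tilde\omega_k^2\le\norm{\omega}_4^2\norm{\tilde\omega}_4^2\le L^{-(1-\epsilon_0)}(\log L)^{-1}$, so $\frac{2}{L^2}\EE_q[(e^{S}-e^{\bar S})^2]=O\bigl(e^{2\bar S}L^{-(3-\epsilon_0)}(\log L)^{-1}\bigr)=O(L^{-(3-\epsilon_0)})$ after absorbing the subpolynomial factor $e^{2\bar S}$. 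Collecting the $\cG$‑part, the $\cG^c$‑part, and the approximation error yields the claim.

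I expect the main obstacle to be the off‑$\cG$ bookkeeping: on $\cG^c$ the quantity $e^{S}$ can be huge, so one must avoid ever pairing a crude pointwise bound on $e^{S}$ with $\PP(\cG^c)$ in a way that loses the superpolynomial decay — the clean route is to bound the truncated squared difference directly by $1$ (legitimate because $\EE[p^\top\tilde p\given q]\in[0,1]$ and $e^{\bar S}/L\le 1$ always) and to keep $\EE_q[(e^{S}-e^{\bar S})^2]$ in the exact MGF form, which needs no truncation at all. A secondary technical point is checking quantitatively that the weighted‑$\chi^2$ tail is already superpolynomially small at the threshold $c^{-2}\cdot 2\log L$; this rests on the factor‑three gap relative to the mean bound $2\log L/(3c^2)$, which is precisely why the hypotheses are stated with that constant.
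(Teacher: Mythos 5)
Your proposal is correct and reaches the stated rate, but the central estimate is handled by a genuinely different mechanism than the paper's. Both proofs condition on the same event $\cE_3(W)\cap\cE_3(\tilde W)$ (your $\cG$) so that \Cref{lemma: pseudo-dynamics} applies, and both dispose of the complement with the trivial bound $\EE[p^\top\tilde p\given q]\in[0,1]$, $e^{\bar S}/L\le 1$ times a small failure probability (the paper settles for $L^{-4}$ from Laurent--Massart with $x=4\log L$; your observation that the tail is in fact superpolynomially small is true but not needed). The difference is in controlling $\EE_q[(e^{S}-e^{\bar S})^2]$: the paper introduces a \emph{second} truncation event $\cE_2(W,\tilde W)$ on which $|S-\bar S|\le 2\norm{\omega\odot\tilde\omega}_2\sqrt{4\log L}+8\norm{\omega\odot\tilde\omega}_\infty\log L$ and bounds $|e^S-e^{\bar S}|\le e^{\bar S}(e^{\mathrm{deviation}}-1)=e^{\bar S}\cdot O(L^{-(1-\epsilon_0)/2})$ deterministically on that band, whereas you compute the second moment exactly from the weighted-$\chi^2$ MGF $\prod_k(1-2t\,\omega_k\tilde\omega_k)^{-1/2}$ and extract $e^{2\bar S}(2\norm{\omega\odot\tilde\omega}_2^2+o(\cdot))$ by cancellation at $t=1,2$. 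The two routes use the fourth-moment hypothesis in the same place (through $\norm{\omega\odot\tilde\omega}_2\le\norm{\omega}_4\norm{\tilde\omega}_4$) and give the same rate; yours avoids one truncation and the associated bookkeeping, at the cost of needing the MGF to exist at $t=2$, which your $\norm{\omega}_\infty\norm{\tilde\omega}_\infty\to 0$ check supplies. Two caveats you share with the paper rather than introduce: the prefactor $e^{2\bar S}\le L^{4/(3c^2)}$ is only polynomially (not literally sub-polynomially) controlled under the stated hypotheses and is absorbed into the $O(\cdot)$ exactly as the paper does, and the term $\xi_L^2=O(L^{-4(1-\epsilon)})$ only sits below $L^{-(3-\epsilon_0)}$ when $4(1-\epsilon)\ge 3-\epsilon_0$, a constraint the paper also leaves implicit.
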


\begin{proof}[Proof of \Cref{lemma: E[pq]-moment}]
Under the condition that $\tilde W$ and $W$ have the same left and right eigenvectors, it suffices to study the quantity $\mathbb{E}\big[\exp (\sum_{k=1}^{d} \omega_k \tilde\omega_k v_{k}^{2})\big]$ where $v_k\iidfrom \cN(0, 1)$. Here, $d$ is the maximal rank of $W$ and $\tilde W$, and $\omega_k$ and $\tilde\omega_k$ are the eigenvalues of $W$ and $\tilde W$ within the effective dimensions, respectively. 
We also denote by $\omega$ and $\tilde\omega$ the vector of eigenvalues for $W$ and $\tilde W$ when there is no ambiguity.
Note that $v_k^2$ is a chi-square random variable.
We invoke the following tail bound for chi-square distribution, which is given by Lemma 1 in \citet{laurent2000adaptive}.
\begin{gather}
\mathbb{P}\left(\sum_{k=1}^{d} \omega_k \tilde\omega_k v_k^{2}-\sum_{k=1}^{d} \omega_k \tilde\omega_k \ge 2\left\|\omega \odot \tilde\omega\right\|_{2} \sqrt{x}+2\left\|\omega \odot \tilde\omega\right\|_{\infty} x\right) \le \exp (-x), 
\label{eq:p-moment-chi^2 right tail}
\\
\mathbb{P}\left(\sum_{k=1}^{d} \omega_k \tilde\omega_k v_k^{2}-\sum_{k=1}^{d} \omega_k \tilde\omega_k \le-2\left\|\omega \odot \tilde\omega\right\|_{2} \sqrt{x}\right) \le \exp (-x). \notag
\end{gather}
For our purpose, we take $x=4 \log L$, which gives
\begin{align}
\mathbb{P}\left(\left|\sum_{k=1}^{d} \omega_k \tilde\omega_k v_{k}^{2}-\sum_{k=1}^{d} \omega_k \tilde\omega_k\right| \ge 2\left\|\omega \odot \tilde\omega\right\|_{2} \sqrt{4 \log L}+8\left\|\omega \odot \tilde\omega\right\|_{\infty} \log L\right) \le \frac{2}{L^{4}}. \label{eq: p-moment-3}
\end{align}
Define the above event as $\cE_{2}^{c}(W, \tilde W)$. 
Before we proceed, one must be aware that to use \Cref{lemma: pseudo-dynamics} for approximating $\mathbb{E}\big[p^\top \tilde p\given q\big]$ with $\exp (q^{\top} \tilde W^{\top} W q)$, we need to show that event 
$$\cE_3(W)\defeq \left\{\|W q\|_{2}^{2} \le c^{-2} \cdot {2 \log L}\right\}$$ 
also holds with high probability.
To show this point, 
we have by \eqref{eq:p-moment-chi^2 right tail} where $\tilde\omega$ is replaced by $\omega$ and $x=4\log L$ that 
\begin{align}
    \mathbb{P}\left( \norm{W q}_2^2  \ge \norm{\omega}_2^2 + 4\left\|\omega\right\|_{4}^2 \sqrt{\log L}+8\left\|\omega \right\|_{\infty}^2 \log L \right) \le L^{-4}. 
    \label{eq: p-moment-fail prob E3}
\end{align}
Note that under the conditions
\begin{align}
    \max\{\norm{\omega}_4^4, \norm{\tilde\omega}_4^4\}\le \frac{ \log L}{36 c^4} ,\quad  
    \max\{\norm{\omega}_2^2, \norm{\tilde\omega}_2^2\} \le \frac{2 \log L}{3 c^2}, \quad
    \max\{\norm{\omega}_\infty , \norm{\tilde\omega}_\infty\} \le \frac{1}{\sqrt{12} c}, 
    \label{eq: E3 condition}
\end{align}
it follows that $\mathbb{P}(\cE_3^c(W)) = \mathbb{P}\left( \norm{W q}_2^2  \ge c^{-2} \cdot 2 \log L\right) \le L^{-4}$, and the same holds for $\cE_3^c(\tilde W)$.
Hence, we are able to control the difference between $\EE[p^\top \tilde p]$ and $\exp(\omega^\top \tilde\omega) / L$ by 
\begin{align}
&  \EE\left[\left( \mathbb{E}[p^{\top} \tilde p\given q]-\frac{\exp \left(\omega^{\top} \tilde\omega\right) }{L} \right)^2 \right] \notag\\
& \quad \le 
    \EE\left[\left( \mathbb{E}[p^{\top} \tilde p\given q]-\frac{\exp \left(\omega^{\top} \tilde\omega\right) }{L} \right)^2 \cdot \ind\left[\cE_2(W, \tilde W) \cap \cE_3(W) \cap \cE_3(\tilde W)\right] \right] \notag\\
    & \autoquad{3} 
    +\left( \PP\bigl(\cE_2^c(W, \tilde W)\bigr) + \PP\bigl(\cE_3^c(W)\bigr) + \PP\bigl(\cE_3^c(\tilde W)\bigr) \right) 
    \cdot 2\left(1+\frac{2\exp \left(\omega^{\top} \tilde\omega\right)}{L^2}\right) \notag\\
& \quad \le 
    2\cdot \frac{\exp \left(2\omega^{\top} \tilde\omega\right)}{L^2} \cdot\left(\exp \left(2\left\|\omega \odot \tilde\omega\right\|_{2} \sqrt{4 \log L}+8\left\|\omega \odot \tilde\omega\right\|_{\infty} \log L\right)-1\right)^2 + \xi_L^2 \notag\\
    & \autoquad{3}+\frac{6}{L^{4}} \cdot \left(1+\frac{\exp \left(2\omega^{\top} \tilde\omega\right)}{L^2}\right) , 
    \notag
\end{align}
where the second inequality holds by using the approximation result in \Cref{lemma: pseudo-dynamics} with $\xi_L = O(L^{-2(1-\epsilon)})$, and the last inequality holds by plugging in \eqref{eq: p-moment-3}. 
Under the condition 
$$\max\left\{\norm{\tilde\omega}_\infty, \norm{\omega}_\infty\right\} \le   L^{-1/4} \cdot (\log L)^{-1/2} , \quad \max\{\norm{\omega}_4^4, \norm{\tilde\omega}_4^4\} \le L^{-(1-\epsilon_0)}\cdot (\log L)^{-1}, $$
it is easy to show that
\begin{gather*}
    \exp \left(2\left\|\omega \odot \tilde\omega\right\|_{2} \sqrt{2 \log L}+4\left\|\omega \odot \tilde\omega\right\|_{\infty} \log L\right) - 1 \le  O(L^{-(1-\epsilon_0)/2} ), \\
    \frac{6}{L^{4}} \cdot \left(1+\frac{\exp \left(2\omega^{\top} \tilde\omega\right)}{L^2}\right) = O(L^{-4}), \quad \xi_L = O(L^{-2(1-\epsilon)}).
\end{gather*}
As a result, we conclude that
\begin{align*}
    \EE\left[\left( \mathbb{E}[p^{\top} \tilde p\given q]-\frac{\exp \left(\omega^{\top} \tilde\omega\right) }{L} \right)^2 \right] \le O(L^{- (3 -\epsilon_0)}), 
\end{align*}
which completes the proof.
\end{proof}

\subsection{Controlling the Higher Order Moments.}
\begin{lemma} \label{lemma: higher-order-moment}
    Under \Cref{def:graph-polynomial}, consider a $\cG$-induced polynomial $f_\cG(p, \tilde p) = \prod_{v\in\cV} \bigl( p_{v}^{a_v} \tilde p_{v}^{b_v} \bigr) \cdot \prod_{(v, v')\in\cE}\delta_{v v'}$ and let $ \efforder(\cG)\deleq\sum_{v\in\cV} (a_v + b_v) - |\connectedcomponent_{\ge 1}(\cG)|$ be the effective order of $\cG$. 
    Here, $p$ and $\tilde p$ are the attention probability vectors for $W$ and $\tilde W$, respectively.
    Suppose $\efforder(\cG)\ge \kappa = 2$.
    Fix any $\epsilon \in (0, 1)$ and let $c\ge \frac{2\Deg(\cG)\sqrt{2\kappa+1}}{\epsilon \kappa}$ where $\Deg(\cG) = \sum_{v\in\cV} (a_v + b_v)$. 
    If 
    \[\max\{\norm{\omega}_4^4, \norm{\tilde\omega}_4^4\}\le \frac{ \log L}{36 c^4} ,\:  
    \max\{\norm{\omega}_2^2, \norm{\tilde\omega}_2^2\} \le \frac{2 \log L}{3 c^2}, \:
    \max\{\norm{\omega}_\infty , \norm{\tilde\omega}_\infty\} \le \frac{1}{\sqrt{12} c}\] holds, we have
$$
\mathbb{E}\left[f_\cG(p, \tilde p)^{2}\right] \le O(L^{-4(1-\epsilon)}). 
$$
\end{lemma}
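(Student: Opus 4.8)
The plan is to obtain a deterministic bound on $f_\cG(p,\tilde p)$ on a high-probability event and to control the complement crudely, exactly in the spirit of the proofs of \Cref{lemma: pseudo-dynamics} and \Cref{lemma: E[pq]-moment}. First I would resolve the $\delta$-constraints in \Cref{def:graph-polynomial}: summing over index assignments that are constant on each connected component $c$ of $\cG$, one gets $f_\cG(p,\tilde p)=\prod_{c}\big(\sum_{j\in[L]}p_j^{A_c}\tilde p_j^{B_c}\big)$, where $A_c=\sum_{v\in c}a_v$, $B_c=\sum_{v\in c}b_v$ and $D_c:=A_c+B_c\ge 1$. Since $p,\tilde p$ are nonnegative with $\sum_j p_j=\sum_j\tilde p_j=1$, every factor is at most $1$, so $0\le f_\cG\le 1$ always; hence $\EE[f_\cG^2]\le \PP(\text{bad event})+\EE\big[f_\cG^2\,\ind(\text{good event})\big]$, and it suffices to pin down the good event.

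I would reuse the two events from the earlier analysis, with $c$ the constant in the statement and $\sigma>1$ to be fixed. Let $\cE_3(W)=\{\norm{Wq}_2^2\le 2c^{-2}\log L\}$ (and $\cE_3(\tilde W)$ likewise); under the stated moment conditions on $\omega,\tilde\omega$ this obeys $\PP(\cE_3^c(W))\le L^{-4}$ by the chi-square tail bound, as in \eqref{eq: p-moment-fail prob E3}. Let $\cE_1(W)=\{\max_{l\in[L]}p_l<\zeta\}$ with $\zeta\le 4L^{2\sigma/c-1}$; on $\cE_3(W)$ this fails with probability at most $L^{-\sigma^2+1}+e^{-L/8}$ by the Gaussian tail plus Hoeffding estimate in the proof of \Cref{lemma: pseudo-dynamics}. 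On the good event $\cE:=\cE_1(W)\cap\cE_1(\tilde W)\cap\cE_3(W)\cap\cE_3(\tilde W)$ every coordinate of $p$ and $\tilde p$ is at most $\zeta$, so using that one of $A_c,B_c$ is $\ge 1$ together with $\sum_j p_j=\sum_j\tilde p_j=1$, each component factor satisfies $\sum_j p_j^{A_c}\tilde p_j^{B_c}\le\zeta^{D_c-1}$; multiplying over the $C$ connected components yields the key deterministic bound $f_\cG\le\zeta^{\sum_c(D_c-1)}=\zeta^{\Deg(\cG)-C}=\zeta^{\efforder(\cG)}$.

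Finally I would set $\sigma=\epsilon c/2$, so that $2\sigma/c-1=-(1-\epsilon)$ and hence $\zeta\le 4L^{-(1-\epsilon)}$, giving on $\cE$ the bound $f_\cG^2\le 4^{2\efforder(\cG)}L^{-2(1-\epsilon)\efforder(\cG)}\le 4^{2\efforder(\cG)}L^{-4(1-\epsilon)}$ since $\efforder(\cG)\ge\kappa=2$. The hypothesis $c\ge \tfrac{2\Deg(\cG)\sqrt{2\kappa+1}}{\epsilon\kappa}$ forces $\sigma=\epsilon c/2\ge \Deg(\cG)\sqrt{2\kappa+1}/\kappa$, and since $\Deg(\cG)=\efforder(\cG)+C\ge 3$ this makes $\sigma^2-1$ comfortably larger than $4(1-\epsilon)$; thus $L^{-\sigma^2+1}$, together with $\PP(\cE_3^c)\le 2L^{-4}$ and the negligible $e^{-L/8}$, contributes $O(L^{-4(1-\epsilon)})$. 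Combining the two pieces gives $\EE[f_\cG^2]=O(L^{-4(1-\epsilon)})$.

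The one substantive point — the tail estimates being inherited from the preceding lemmas — is the deterministic component-wise bound $f_\cG\le\zeta^{\efforder(\cG)}$: one must recognize that conservation of softmax mass lets a connected component of total degree $D_c$ spend only $D_c-1$ powers of the small quantity $\zeta$, so that the total saving is precisely the effective order $\Deg(\cG)-C$. The remaining difficulty is purely parameter bookkeeping — the exponent-of-$\zeta$ constraint forces $\sigma\le\epsilon c/2$ while the Gaussian tail forces $\sigma^2$ to be bounded below by roughly $4(1-\epsilon)$, and reconciling these two requirements is exactly where the $\Deg(\cG)$-dependent lower bound on $c$ is used — but this is routine once the structure above is in place.
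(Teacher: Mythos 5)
Your proposal is correct and follows essentially the same route as the paper: truncate on the events $\cE_1$ (softmax maximum at most $\zeta\le 4L^{2\sigma/c-1}$) and $\cE_3$ ($\norm{Wq}_2^2\le 2c^{-2}\log L$), bound $f_\cG$ deterministically on the good event, and absorb the complement via the Gaussian/Hoeffding and chi-square tails with a suitable choice of $\sigma$. The only (harmless) difference is the deterministic step: you use mass conservation to get $\sum_j p_j^{A_c}\tilde p_j^{B_c}\le\zeta^{D_c-1}$ per component and hence $f_\cG\le\zeta^{\efforder(\cG)}$, whereas the paper bounds each component sum by $L\cdot\zeta^{D_c}$ to get $f_\cG\le \zeta^{\Deg(\cG)}L^{|\connectedcomponent_{\ge 1}(\cG)|}$ — both yield $O(L^{-2(1-\epsilon)\efforder(\cG)})$ after the parameter bookkeeping.
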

\begin{proof}[Proof of \Cref{lemma: higher-order-moment}]
Let $n= |\connectedcomponent_{\ge 1}(\cG)|$ and $\Deg(\cG) = \sum_{v\in\cV}^n (a_v + b_v)$. 
The argument is a combination of the general ideas in \Cref{lemma: pseudo-dynamics} and \Cref{lemma: E[pq]-moment}.
Let us first fix a query $q$. 
On the success of the event $\cE_{1}(W) \cap \cE_{1}(\tilde W)$ defined in \eqref{def: cE_1}, 
one has by definition that 
\[
f_\cG(p, \tilde p)^{2} \le\left(4 L^{\frac{2\sigma}{c}-1}\right)^{2m} \cdot L^{2 n} = 16^{\Deg(\cG)} \cdot L^{\frac{4\sigma}{c} \Deg(\cG)} \cdot L^{-2\efforder(\cG)}, 
\]
The failure probability for this joint event is at most
$
2\big(L^{-\sigma^{2}+1}+L^{-\frac{L}{8 \log L}}\big) 
$ as we have shown in the proof of \Cref{lemma: pseudo-dynamics} if $\cE_{3}(W) \cap \cE_{3}(\tilde W)$ holds. Here, $\cE_3(W)$ is defined by the success of $\|W q\|_{2}^{2} \le c^{-2} \cdot {2 \log L}$. 
Note that $|f_\cG(p, \tilde p)| \le 1$. 
Therefore, we have on the success of $\cE_{3}(W) \cap \cE_{3}(\tilde W)$ that
\begin{align*}
\mathbb{E}\left[f(p, \tilde p)^{2}\given q\right] 
&\le 16^{\Deg(\cG)} \cdot L^{\frac{4\sigma}{c} \Deg(\cG)} \cdot L^{-2\kappa} +2\left(L^{-\sigma^{2}+1}+L^{-\frac{L}{8 \log L}}\right) .
\end{align*}
We take $\sigma = \sqrt{2\kappa+1}$. 
By choosing a constant $c= (2\sigma \Deg(\cG)) /(\epsilon \cdot \kappa) $ according to $\epsilon$, the total degree $\Deg(\cG)$, and $\kappa$,  
we obtain that $\mathbb{E}\left[f(p, \tilde p)^{2}\given q\right] \le O(L^{-2 \kappa \cdot (1-\epsilon)})$ on the success of $\cE_{3}(W) \cap \cE_{3}(\tilde W)$.
Following the conditions in \eqref{eq: E3 condition} on the failure probability of $\cE_{3}(W) \cap \cE_{3}(\tilde W)$, we have
\begin{align*}
    \EE\left[f(p,\tilde p)^2\right] 
    & = \EE\left[  \mathbb{E}[f(p, \tilde p)^2\given q]  \right] \notag\\
    &\le \EE\left[
        \mathbb{E}[f(p, \tilde p)^2\given q] \cdot \ind\left[\cE_3(W) \cap \cE_3(\tilde W)\right] 
    \right] + \PP(\cE_3^c(W)) + \PP(\cE_3^c(\tilde W)) \notag\\
    &\le O(L^{-2 \kappa \cdot (1-\epsilon)}) + O(L^{- 4}) \le O(L^{- 4(1-\epsilon)}), 
\end{align*}
where in the first inequality we use the fact that $|f_\cG(p, \tilde p)| \le 1$ and in the last inequality we use the same conditions as \Cref{eq: E3 condition} in order to invoke \eqref{eq: p-moment-fail prob E3}.
Thus, we complete the proof.
\end{proof}
\subsection{Understanding $\EE[\norm{p}_2^2\given q]$ for large $\norm{Wq}_2$}
Previously, we have a thorough understanding of the behavior of $\EE[\norm{p}_2^2]$ as $\norm{Wq}_2^2\le c^{-2} \cdot 2 \log L$. 
We are still lacking understanding of what happens if $\norm{Wq}_2^2 \gg c^{-2} \cdot 2 \log L$. 
In this section, we give a characterization of the behavior of $\EE[\norm{p}_2^2\given q]$. 
Note that $\norm{p}_2^2 \le 1$. 
When $\norm{Wq}_2$ gets large, we anticipate $\norm{p}_2^2$ to get closer to $1$. 
Hence, we instead understand the behavior of $h(r) = 1 - \EE[\norm{p}_2^2\given q]$, where $r=\norm{Wq}_2$.
In the following, we provide an upper bound for $h(r)$. 
\begin{lemma}\label[lemma]{lem:p-moment-tail}
Define $h(r) = 1 - \EE[\norm{p}_2^2\given \norm{W q}_2 = r]$. Take a small constant $\epsilon\in (0, 1)$. 
For $r \ge O(\sqrt{2\log L}^{3/\epsilon})$,  it holds that
\[  
h(r) = \EE[1 - \norm{p}_2^2] \le O(r^{-(1-\epsilon)}).
\]
\end{lemma}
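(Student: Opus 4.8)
The plan is to estimate $h(r) = 1 - \EE[\norm{p}_2^2 \mid \norm{Wq}_2 = r]$ by reducing to a scalar quantity. Conditioning on $\norm{Wq}_2 = r$, the attention scores are distributed as $s_l = r z_l$ with $z_1, \dots, z_L \iidfrom \cN(0,1)$, so $\EE[\norm{p}_2^2 \mid \norm{Wq}_2 = r] = \EE[\norm{\softmax(rz)}_2^2]$. Let $z_{(1)} \ge z_{(2)} \ge \cdots \ge z_{(L)}$ denote the order statistics, and let $p_{(1)} \ge \cdots \ge p_{(L)}$ be the corresponding sorted probabilities. Since $\sum_l p_l = 1$ and each $p_l \le 1$, we have $1 - \norm{p}_2^2 = \sum_l p_l(1-p_l) \le 2\sum_{l \ge 2} p_{(l)}$, and each $p_{(l)} = e^{r z_{(l)}}/\sum_m e^{r z_{(m)}} \le e^{r(z_{(l)} - z_{(1)})} = e^{-r(z_{(1)} - z_{(l)})}$. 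This gives the bound $h(r) \le 2\,\EE\bigl[\sum_{l=2}^L e^{-r(z_{(1)} - z_{(l)})}\bigr]$, which is \eqref{eq:proof-sketch-1}.

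The next step, following the proof-sketch in the excerpt, is a threshold argument. Set $\theta = \log(L r^\alpha)/r$ for a constant $\alpha = 1 - \epsilon$ to be fixed at the end, and split the sum over $l \ge 2$ according to whether $z_{(1)} - z_{(l)} > \theta$ or $z_{(1)} - z_{(l)} \le \theta$. For the far part, each term contributes at most $e^{-r\theta} = 1/(L r^\alpha)$, and there are at most $L$ such terms, so the far part contributes at most $r^{-\alpha}$. For the near part, each term is bounded crudely by $e^{-r(z_{(1)} - z_{(2)})}$, while the number of indices $l$ with $z_{(1)} - z_{(l)} \le \theta$ — call it $N$ — is a binomial-type count: given $z_{(1)}$, the remaining $L-1$ variables each land in the window $[z_{(1)} - \theta, z_{(1)}]$ with probability roughly $\PP_{v \sim \cN(0,1)}(z_{(1)} - \theta \le v \le z_{(1)})$. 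Since $z_{(1)} \approx \sqrt{2\log L}$ with overwhelming probability, this success probability is $\approx \PP(\sqrt{2\log L} - \theta \le v \le \sqrt{2\log L})$, which is $O(\theta)$ up to polylog factors (the Gaussian density near $\sqrt{2\log L}$ is $\approx 1/L$, so the window of width $\theta$ has mass $\approx \theta/L$, giving $\EE[N] = O(\theta)$). Then the near part is at most $\EE[N \cdot e^{-r(z_{(1)} - z_{(2)})}]$; bounding $N$ by its tail (a Chernoff bound on the binomial) on a high-probability event and using the moment generating function of the gap $z_{(1)} - z_{(2)}$ on that event controls this term. One also needs to handle the rare event that $z_{(1)}$ is atypically large (where $N$ could be bigger) or atypically small; on the complement the contribution is negligible because $h(r) \le 1$ always and the failure probability is super-polynomially small in $L$ provided $r \ge O(\sqrt{2\log L}^{3/\epsilon})$, which is exactly the hypothesis.

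Collecting the pieces, $h(r) \le r^{-\alpha} + (\text{near part}) + (\text{negligible})$, and choosing $\alpha = 1-\epsilon$ — so that the near part, which carries extra polylog$(L)$ and powers of $\sqrt{\log L}$ from $\EE[N]$ and the MGF of the gap, is absorbed into $O(r^{-(1-\epsilon)})$ using $r \ge O(\sqrt{2\log L}^{3/\epsilon})$ — yields the claimed bound $h(r) = O(r^{-(1-\epsilon)})$. The main obstacle is the near part: one must simultaneously control $N$ (a binomial whose success probability depends on the random location of $z_{(1)}$) and the gap $z_{(1)} - z_{(2)}$, which are not independent, so the cleanest route is to condition on $z_{(1)}$ lying in a typical band $[\sqrt{2\log L} - O(\sqrt{\log\log L}), \sqrt{2\log L} + O(\sqrt{\log\log L})]$, bound $\EE[N \mid z_{(1)}]$ and the conditional MGF of $z_{(1)} - z_{(2)}$ on that band, and show the complementary event has probability $L^{-\Omega(1)}$ small enough that, after multiplying by the trivial bound $h(r) \le 1$, it is dominated by $r^{-(1-\epsilon)}$ precisely under $r \ge O((2\log L)^{3/(2\epsilon)})$. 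Bookkeeping the polylog factors to verify they all fit under $r^{-\epsilon \cdot (1-\epsilon)}$ (the slack between $r^{-(1-\epsilon)}$ and the far-part bound) is the routine-but-delicate part.
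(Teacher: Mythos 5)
Your proposal follows essentially the same route as the paper's proof: the reduction $h(r)\le 2\,\EE[\sum_{l\ge 2}e^{-r(z_{(1)}-z_{(l)})}]$, the threshold split at $\theta=\log(Lr^\alpha)/r$ with the far part bounded by $L e^{-r\theta}=r^{-\alpha}$, the binomial-tail control of the count $N$ of near points after conditioning $z_{(1)}$ to its typical band, and the MGF of the gap $z_{(1)}-z_{(2)}$ (which in the paper is computed explicitly as $\EE[e^{-r(v-u)}\mid u]=e^{r^2/2+ru}\,\PP(z\ge u+r)/\PP(z\ge u)\lesssim \frac{u}{u+r}$, the source of the $r^{-1}\sqrt{\log L}$ decay that forces the hypothesis $r\ge O(\sqrt{2\log L}^{3/\epsilon})$). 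The plan is sound and matches the paper's argument in all essential steps.
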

\begin{proof}[Proof of \Cref{lem:p-moment-tail}]
Let $p_1, \dots, p_L$ be the elements of $p$ in the descending order. 
In the following , we consider $r$ to be fixed and the expectation is taken only with respect to the context $X$ (or equivalently, attention probability $p$). 
We let $s_1, \dots, s_L$ be the attention score calculated by $x_1^\top W q, \dots, x_L^\top W q$, also in the same descending order and let $z_1, \dots, z_L$ be the projection of $x_1, \dots, x_L$ onto the direction of $W q$. 
We have
\begin{align*}
    h(r) \le \EE[1 - p_1^2] = \EE[(1-p_1)(1+p_1)]  \le 2 \EE[1 - p_1] = 2 \cdot \EE[p_2 +\dots + p_L]. 
\end{align*}
Also we note that 
\begin{align*}
    \frac{p_1}{p_l} = \exp(s_1 - s_l) = \exp( r (z_1 - z_l)) \Rightarrow p_l \le p_1 \exp\left( - r (z_1 - z_l)\right) \le \exp\left( - r (z_1 - z_l)\right). 
\end{align*}
Hence, 
\begin{align*}
    h(r) \le 2 \EE\left[\sum_{l=2}^L \exp(-r (z_1 - z_l))\right]. 
\end{align*}
We split the summation into two parts depending on whether $z_1 - z_j \le \log(L r^{\alpha})/ r$, where $\alpha > 1/2$ is a constant to be determined later. 
Thereby, 
\begin{align*}
    h(r) 
    &\le 2 \EE\bigg[ \exp(-r (z_1 - z_2)) + \sum_{j\ge 3: z_1 -  z_j \le \log (L r^\alpha)/r} \exp(-r (z_1 - z_j)) + L \exp\left( - r \cdot \log (L r^\alpha)/r\right)\bigg] \nend 
    & \le 2 \EE\left[(N+1) \exp(-r (z_1 - z_2)) \right] + \frac{2}{r^\alpha}, 
\end{align*}
where $N$ is the size of $\{j \in \{3, \dots, L\}: z_1 - z_j \le \log (L r^\alpha)/r\}$ and is also a random variable. 
Here, we remark that in the second inequality, we just lower bound $z_1 - z_j$ by the minimum $z_1 - z_2$. 
We pick some small constant $a\in(0, 1)$ and note that 
\begin{align}
    &2\EE\left[(N+1) \exp(-r (z_1 - z_2)) \right]  \nend
    &=  2\EE\left[(N+1) \exp(-r (z_1 - z_2)) \ind(z_1 > \sqrt{2 a \log L })\right] + 2\EE\left[(N+1) \exp(-r (z_1 - z_2)) \ind(z_1 \le \sqrt{2 a \log L })\right] \nend 
    &\le 2\EE\left[(N+1) \exp(-r (z_1 - z_2)) \ind(z_1 > \sqrt{2 a \log L })\right] + 2 L \PP(z_1 \le \sqrt{2 a\log L})\nend 
    &\le 2\EE\left[(N_1+1) \exp(-r (z_1 - z_2))\right] + 2 L \exp\left(  - \frac{1}{\sqrt{2\pi}} \cdot \left(\frac{1}{\sqrt{2a \log L}} - \frac{1}{\sqrt{2 a \log L}^3}\right) L^{1-a} \right), 
    \label{eq:p-moment-tail-1}
\end{align}
where we define $N_1$ as the size of the class $\{j\in\{3, \dots, L\}: \sqrt{2a \log L } - z_j \le \log (Lr^\alpha)/r\}$ and it is obvious that $N \le N_1$ under the condition $z_1 > \sqrt{2a\log L}$. 
Here, in the first inequality, we upper bound $N$ by $L$ and in the second inequality, we invoke the following fact: 
\begin{lemma}[Gaussian First Order Statistics]
Let $v$ be the first order statistics of $L$ $\iid$ standard Gaussian random variables. For any $a>0$:
\[
\PP(v \le \sqrt{2 a \log L}) \le \exp\left( \frac{1}{\sqrt{2\pi}} \cdot \left(\frac{1}{\sqrt{2a \log L}} - \frac{1}{\sqrt{2 a \log L}^3}\right) L^{1-a}
\right)
\]
\end{lemma}
We take $a =  1-\epsilon$, and to make the second term in \eqref{eq:p-moment-tail-1} smaller than $O(r^{-\alpha})$, we just need 
\[  
    \epsilon \ge \frac{\log\left(\sqrt{4\pi \log L} \log (L r^{\alpha})\right)}{\log L} \rightarrow \frac{\log (\log L\log r)}{\log L} = o(1), 
\]
where the right hand side is $o(1)$ as long as $r=o(\exp(L))$. 
To this end, we just set the first inequality to be equality for $\epsilon$. 
Now, for the first term, we have
\begin{align*}
    &2\EE\left[(N_1+1) \exp(-r (z_1 - z_2))\right] \nend
    &\quad = 2 \EE\left[\sharp\left\{ j=2,\dots, L: \:z_j \ge \left(\sqrt{(1-\epsilon)} - \frac{\log(L r^\alpha)}{r \sqrt{2\log L}}\right) \cdot  \sqrt{2\log L} \right\}  \exp(-r(z_1 - z_2)\right] \nend 
    &\quad\le  2 \EE\left[\sharp\left\{ j\in[L]: \:z_j \ge \left(\sqrt{(1-\epsilon)} - \frac{\log(L r^\alpha)}{r \sqrt{2\log L}}\right) \cdot  \sqrt{2\log L} \right\}  \exp(-r(z_1 - z_2)\right] \nend
    &\quad = 2 \EE\left[N_2\exp(-r(z_1 - z_2)\right], 
\end{align*}
where we define 
\[
N_2 \defeq \sharp\left\{ j\in[L]: \:z_j \ge \left(\sqrt{(1-\epsilon)} - \frac{\log(L r^\alpha)}{r \sqrt{2\log L}}\right) \cdot  \sqrt{2\log L} \right\}  
\]
We note that for a standard Gaussian random variable $z$, 
\begin{align*}
&\PP\left(z \ge \left(\sqrt{(1-\epsilon)} - \frac{\log(L r^\alpha)}{r \sqrt{2\log L}}\right) \cdot  \sqrt{2\log L}\right) \nend 
&\quad \le \frac{1}{\sqrt{2\pi}} \exp\left( - \left(\sqrt{(1-\epsilon)} - \frac{\log(L r^\alpha)}{r \sqrt{2\log L}}\right)^2  \cdot  \log L\right) \cdot \frac{1}{\sqrt{\log L}} \nend 
&\quad \le \frac{1}{\sqrt{2\pi \log L}} L^{-1 + \epsilon + \frac{2\log(L r^\alpha)}{r\sqrt{2\log L}}} = L^{-1} \cdot L^{\frac{\log\left(\sqrt{4\pi \log L} \log (L r^{\alpha})\right)}{\log L} + \frac{2\log(L r^\alpha)}{r\sqrt{2\log L}} - \frac{\log \sqrt{2\pi \log L}}{\log L}}. 
\end{align*}
where we invoke the upper bound $\PP(z\ge a) \le f(z) /a$ with $f(\cdot)$ being the density of $z$.
For simplicity, we consider function 
\[
\brown \varphi(r) \ge \frac{\log\left(\sqrt{4\pi \log L} \log (L r^{\alpha})\right)}{\log L} + \frac{2\log(L r^\alpha)}{r\sqrt{2\log L}} - \frac{\log \sqrt{2\pi \log L}}{\log L}, 
\]
where we note that $r \ll  \exp(L/\alpha)$ (the upper bound is because the first term in $\varphi(r)$ cannot be too large).
To this end, we invoke the tail bound for binomial distribution that if $N_2$ follows $\binomial(L, q)$, then 
\begin{align*}
    \PP(N_2 \ge m) \le \exp\left( - L \cD(m/L \| q)\right) , 
\end{align*}
where $\cD(q_1 \| q_2)$ is the KL divergence between $\bernoulli(q_1)$ and $\bernoulli(q_2)$. 
We plug in the expression for $q = L^{\varphi(r) - 1}$ and obtain that 
\begin{align*}
    \PP(N_2 \ge m) 
    &\le \exp\left( 
        -2 L \left( \frac{m}{L} \log \frac{m}{L^{\varphi(r)}} +  \left(1 - \frac{m}{L}\right) \log \left(1 + \frac{m/L - q}{1  - m/L}\right)\right) 
    \right)\nend 
    &\le \exp\left( 
        -2 \left( m\log \frac{m}{L^{\varphi(r)}} + {m - L^{\varphi(r)}}\right) 
    \right), 
\end{align*}
where in the second inequality we invoke the fact that $\log(1+x)\le x$. 
We pick $m = L^{\varphi(r) + \xi(r)}$, where $\xi(r)>0$ is a constant depending on $r$ to be determined later,  and obtain 
\begin{align*}
    \PP(N_2 \ge L^{\varphi(r) + \xi(r)}) \le \exp\left( -2 L^{\varphi(r) + \xi(r)} \cdot (1+\xi(r)\log L) + 2 L^{\varphi(r)}\right) \le L^{-2 L^{\varphi(r)+\xi(r)} \cdot \xi(r)}, 
\end{align*}
Here, the first and the third terms cancel and the second term gives the final upper bound.
To this end, we have for $2\EE[N_2 \exp(- r(z_1 - z_2))]$
that 
\begin{align*}
    2\EE[N_2 \exp(- r(z_1 - z_2))] 
    &\le 2\EE[N_2 \exp(- r(z_1 - z_2)) \ind(N_2 < L^{\varphi(r) + \xi(r)})] 
    \nend 
    &\qquad + 2 \EE[ N_2 \exp(-r(z_1 - z_2)) \ind(N_2 \ge L^{\varphi(r) + \xi(r)})] \nend 
    &\le 2 L^{\varphi(r)+\xi(r)} \EE[\exp(- r(z_1 - z_2))] + 2 L \cdot L^{-2 L^{\varphi(r)+\xi(r)} \xi(r)}. 
\end{align*}
Note that to make the second term less than $O(r^{-\alpha})$, a sufficient condition we need is 
\[
\brown \varphi(r) + \xi(r) - \frac{\log(\xi(r)^{-1})}{\log L}\ge \frac{\alpha \log r}{(\log L)^2}. 
\]
Therefore, all we need to do is controlling the \ac{mgf} for the difference between the Gaussian's first and second order statistics. 
Let $u, v$ be the second and the first order statistics of $z_1, \dots, z_L$. 
The joint distribution of $u, v$ is then given by 
\begin{align*}
    f(u, g) = \frac{L(L-1)}{2\pi} \exp\left(-\frac{u^2 + v^2}{2}\right) \Phi(u)^{n-2} \ind (u\le v), 
\end{align*}
where $\Phi$ is the standard Gaussian \ac{cdf}. 
The \ac{mgf} is then given by
\begin{align*}
    M(r) &= \int_{-\infty}^{+\infty}\int_{-\infty}^{+\infty} \frac{L(L-1)}{2\pi} \exp\left(-\frac{u^2 + v^2}{2}\right) \Phi(u)^{n-2} \ind (u\le v) \exp\left( - r (v - u)\right) \rd u\rd v \nend 
    & = \int_{-\infty}^{+\infty} \frac{L(L-1)}{\sqrt{2\pi}} \exp\left(-\frac{u^2}{2}\right) \Phi(u)^{n-2} \exp\left(r u\right) \cdot \int_{u}^{+\infty}\frac{1}{\sqrt{2\pi}} \exp\left(-\frac{v^2}{2} - r v\right)\rd v  \cdot \rd u \nend
    &= \int_{-\infty}^{+\infty} \frac{L(L-1)}{\sqrt{2\pi}} \exp\left(-\frac{u^2}{2}\right) \Phi(u)^{n-2} \exp\left(\frac{r^2}{2} + r u\right) \cdot \left(1 - \Phi(u + r)\right) \cdot \rd u. 
\end{align*}
We note that the marginal for $u$ is (we just need to plug $r = 0$ into the above equation, and after marginalizing $v$ the remaining expression is just $f(u)$)
\begin{align*}
    f(u) = \frac{L(L-1)}{\sqrt{2\pi}} \exp\left(-\frac{u^2}{2}\right) \Phi(u)^{n-2} \cdot \left(1 - \Phi(u)\right). 
\end{align*}
Notably, in the above calculation we fix $u$ and only marginalize $v$. Thus, a byproduct is the following fact:
\begin{align*}
    \EE\left[ \exp\left(-r (v - u)\right) \given u \right] = \exp\left(\frac{r^2}{2} + r u\right) \cdot \frac{\PP(z\ge u+ r)}{\PP(z\ge u)} \le 1. 
\end{align*}
Therefore, the \ac{mgf} can also be rewritten as 
\begin{align*}
    M(r) &= \EE_u \left[ \exp\left(\frac{r^2}{2} + r u\right) \cdot \frac{\PP(z\ge u+ r)}{\PP(z\ge u)}\right] \nend 
    & = \EE\left[
        \exp\left(\frac{r^2}{2} + r u\right)\frac{\PP(z\ge u+ r)}{\PP(z\ge u)} \ind\left(u\in (\sqrt{2(1-\tilde \epsilon)\log L}, \sqrt{2(1+b)\log L}) \right)
    \right] \nend 
    &\qquad + \EE\left[
        \exp\left(\frac{r^2}{2} + r u\right)\frac{\PP(z\ge u+ r)}{\PP(z\ge u)} \ind\left(u\notin (\sqrt{2(1-\tilde \epsilon)\log L}, \sqrt{2(1+b)\log L}) \right)
    \right] \nend 
    & = \EE\left[
        \exp\left(\frac{r^2}{2} + r u\right)\frac{\PP(z\ge u+ r)}{\PP(z\ge u)} \ind\left(u\in (\sqrt{2(1-\tilde \epsilon)\log L}, \sqrt{2(1+b)\log L}) \right)
    \right] \nend 
    &\qquad +  \PP\left(u\notin (\sqrt{2(1-\tilde \epsilon)\log L}, \sqrt{2(1+b)\log L}) \right) 
\end{align*}
where we pick $b \in (0, 1)$ and $\tilde \epsilon \in (O((\log L)^{-1}), 1)$ as small constant to be determined later. 
For the second order statistics of $L$ $\iid$ Gaussian random variables, we have 
\begin{fact}[Gaussian Second Order Statistics]
    Let $v, u$ be the first and the second order statistics of $L$ $\iid$ standard Gaussian random variables. 
    Also, let $z$ be another standard Gaussian random variable. 
    Then for any $\epsilon \in (O((\log L)^{-1}), 1)$:
    \begin{align*}
        \PP(u\ge \sqrt{2(1+\epsilon) \log L}) \le \PP(v\ge \sqrt{2(1+\epsilon )L }) \le L^{-\epsilon},
    \end{align*}
    and 
    \begin{align*}
        \PP(u\le \sqrt{2(1-\epsilon)\log L}) 
        &= \PP(z\le \sqrt{2(1-\epsilon)\log L})^{L-1} \cdot \PP(z\ge \sqrt{2(1-\epsilon)\log L}) \cdot \binom{L}{1} \nend 
        &\qquad + \PP(z\le \sqrt{2(1-\epsilon)\log L})^L \nend 
        & \le (L + 1) \cdot  \PP(z\ge \sqrt{2(1-\epsilon)\log L})^{L - 1} \nend 
        &\le (L + 1) \cdot \exp\left( - \frac{(L-1)^{\epsilon}}{\sqrt{2\pi}} \cdot \left(\frac{1}{\sqrt{2(1-\epsilon)\log L}} - \frac{1}{\sqrt{2(1-\epsilon)\log L}^3}\right)\right) \nend 
        &\le (L + 1) \cdot \exp\left( - \frac{(L-1)^{\epsilon}}{\sqrt{4\pi \log L}} \right), 
    \end{align*}
    where the last inequality holds if $\epsilon\ge O(1/ 2\log L)$. 
\end{fact}
Using the above fact, we deduce that 
\begin{align*}
    M(r) 
    &\le L^{-b} + (L + 1) \cdot \exp\left( - \frac{(L-1)^{\epsilon}}{\sqrt{4\pi \log L}} \right)  + \max_{u\in \cU(\tilde\epsilon, b)}\: \exp\left(\frac{r^2}{2} + r u\right) \cdot \frac{\PP(z\ge u+ r)}{\PP(z\ge u)}, 
\end{align*}
where for our convenience, we define $\cU(\tilde\epsilon, b) = (\sqrt{2(1-\tilde \epsilon)\log L}, \sqrt{2(1+b)\log L})$.
Now, we invoke the tail bound for Gaussian distribution and obtain that 
\begin{align}
    \exp\left(\frac{r^2}{2} + r u\right) \cdot \frac{\PP(z\ge u+ r)}{\PP(z\ge u)}
    &\le \exp\left(\frac{r^2}{2} + r u\right)\cdot \frac{f(u+r)}{f(u)} \cdot \frac{(u+r)^{-1}}{u^{-1} - u^{-3}}  = \frac{u}{u+r} \cdot \frac{1}{1 - u^{-2}}. 
\end{align}
As the last piece of the jigsaw, we conclude that 
\begin{align*}
    &2\EE[N_2 \exp(- r(z_1 - z_2))] \nend
    &\quad \le 2 L^{\varphi(r)+\xi(r)} \EE[\exp(- r(z_1 - z_2))] + 2 L \cdot L^{-2 L^{\varphi(r)+\xi(r)} \xi(r)} \nend 
    &\quad \le 2 L^{\varphi(r)+\xi(r)} \cdot \left(L^{-b} + (L + 1) \cdot \exp\left( - \frac{(L-1)^{\epsilon}}{\sqrt{4\pi \log L}} \right)  + \max_{u\in \cU(\tilde\epsilon, b)}\: \frac{u}{u+r} \cdot \frac{1}{1 - u^{-2}}\right)  \nend 
    &\qqquad + 2 L \cdot L^{-2 L^{\varphi(r)+\xi(r)} \xi(r)}. 
\end{align*}
Now, we pick $\beta > 0$ and let 
\[  
    L^{-b} =
    (L + 1) \cdot \exp\left( - \frac{(L-1)^{\tilde\epsilon}}{\sqrt{4\pi \log L}} \right) = r^{-\beta}, 
\]
which implies 
\[  
    b = \frac{\beta \log r}{\log L}, \quad \tilde\epsilon = \frac{\log\left(\log \frac{L+1}{r^\beta} \cdot \sqrt{4\pi \log L}\right)}{\log (L-1)} = o(1), 
\]
where the solution also satisfies the contraint $\tilde \epsilon > O((\log L)^{-1})$ so long as $\brown r \ll (L+1)^{\beta^{-1}}$
and we have 
\begin{align*}
    &\max_{u\in \cU(\tilde\epsilon, b)}\: \frac{u}{u+r} \cdot \frac{1}{1 - u^{-2}} \nend
    &\quad \le \frac{1}{ 1 + r/\sqrt{2(1+b)\log L}} \cdot \frac{1}{1-(2(1-\tilde\epsilon)\log L)^{-2}} = \frac{1 + O((\log L)^{-2})}{1 + r/\sqrt{2(\log L+\beta \log r )}}. 
\end{align*}
Therefore, we have 
\begin{align}
    &2\EE[N_2 \exp(- r(z_1 - z_2))] \nend
    &\quad \le 2 L^{\varphi(r)+\xi(r)} \cdot \left(\frac{2}{r^\beta} + \frac{1 + O((\log L)^{-2})}{1 + r/\sqrt{2(\log L+\beta \log r )}} \right) + 2 L^{1 -2 L^{\varphi(r)+\xi(r)} \xi(r)}.
    \label{eq:p-moment-tail-3}
\end{align}
To this end, it is clear that the right choice of $\beta$ should be $1$. 
Here, we recall the definition for $\varphi(r)$ that
\[
    \varphi(r) \ge \frac{\log\left(\sqrt{4\pi \log L} \log (L r^{\alpha})\right)}{\log L} + \frac{2\log(L r^\alpha)}{r\sqrt{2\log L}} - \frac{\log \sqrt{2\pi \log L}}{\log L}, 
\]
and the condition for picking $\xi(r)$ is 
\[
    \varphi(r) + \xi(r) - \frac{\log(\xi(r)^{-1})}{\log L}\ge \frac{\alpha \log r}{(\log L)^2}. 
\]
We redefine $\tilde\varphi(r) = \log L /\log r \cdot \varphi(r) $ and $\tilde\xi(r) = \log L /\log r \cdot \xi(r) $. 
Then we conclude that 
\begin{gather*}
    \tilde \varphi(r) \ge \frac{\log\left(\sqrt{4\pi \log L} \log (L r^{\alpha})\right)}{\log r} + {\frac{\log(L r^\alpha) \sqrt{2\log L}}{r \log r } } - \frac{\log \sqrt{2\pi \log L}}{\log r}, \nend 
    \tilde \varphi(r) + \tilde\xi(r) - \frac{\log(\tilde \xi(r)^{-1} \cdot \log L/\log r)}{\log r} \ge \frac{\alpha }{\log L}
\end{gather*}
To make the third term in \eqref{eq:p-moment-tail-3} $L^{1 -2 L^{\varphi(r)+\xi(r)} \xi(r)}$ also less than $O(r^{-\alpha})$, a sufficient condition is that 
\begin{align*}
    \frac{\log L}{\log r} -  2 r^{\tilde\varphi(r) + \tilde\xi(r)} \tilde\xi(r) \le -\alpha
\end{align*}
To this end, we can see directly that by choosing $\tilde \varphi(r)$ and $\tilde\xi(r)$ to be small constant, say $\epsilon/3$, all the inequalities are satisfied for $r \ge O((\log L)^{3/2})$ (in order to make $\frac{\log(L r^\alpha) \sqrt{2\log L}}{r \log r }$ of the scale $o(1)$). 
In order to make \eqref{eq:p-moment-tail-3} of order $O(r^{-\alpha})$ with $\alpha=1-\epsilon$, we note that the second term dominates and we just need to ensure that 
\[
    r^{-1 + 2\epsilon/3} \sqrt{2\log L} \le r^{-1+\epsilon} \Rightarrow r \ge \sqrt{2\log L}^{3/\epsilon}
    \]
and $\alpha = 1-\epsilon$. 
Therefore, we wrap our proof by concluding that for $r \ge O(\sqrt{2\log L}^{3/\epsilon})$, 
\[  
h(r) = \EE[1 - \norm{p}_2^2] \le O(r^{-(1-\epsilon)}).
\]
\end{proof}

\newpage
\section{Simplification and Approximation of the Gradient Flow Dynamics}
\label[section]{sec:simplify_approximate dynamics}
In this section, we first simplify the gradient flow dynamics of the \ac{msa} to the eigenvalue space under the Decomposability Condition introduced in \Cref{def:decomposability property}. 
Then we approximate the spectral gradient flow dynamics using the results from \Cref{sec: p-moment}.
As a byproduct of the simplification of the gradient flow dynamics, we further prove that the Decomposability Condition is preserved along the gradient flow.

\subsection{Simplification Induced by the Decomposability Condition} \label[appendix]{sec:simplify_AB}
In this section, we aim to show that the gradient flow dynamics in  \eqref{eq: gradient flow dynamics} admit a simplified form under the Decomposability Condition introduced in \Cref{def:decomposability property}. 

Recall that given the parameters $\Theta = \{ O^{(h)}, V^{(h)}, K^{(h)}, Q^{(h) } \}_{h\in [H]}$, the output of the  the \ac{msa} function, is given by \eqref{eq: msa}. 
The gradient flow for minimizing the mean-squared loss $L(\Theta)$ in \eqref{eq: loss} is introduced in \eqref{eq: gradient flow dynamics}, which involves matrices $A^{(h)} \in \RR^{{D} \times {D}}$ and $B^{(h)} \in \RR^{{d_y} \times D} $ defined in \eqref{eq:define_A_h} and \eqref{eq:define_B_h}, respectively. Here $D = d + d_y$ where $d$ and the $d_y$ are the dimensions of the covariate and output, respectively.
Note that these matrices 
depends on the attention score $s^{(h)}$ and attention probability $p^{(h)}$ defined in \eqref{eq:attn_score_prob}, and output weight matrix $U^{(h)}$ given in \eqref{eq:combined_weights}. 
Thus, both $A^{(h)}$ and $B^{(h)}$
are (perhaps complicated) functions of the parameter $\Theta$. 
In the following, we aim to prove that when $\Theta$ satisfies the Decomposability Condition, $A^{(h)}$ and $B^{(h)}$ can be diagonalized using the orthogonal matrices specified by the Decomposability Condition.
The main result can be summarized in the following proposition.

\begin{proposition} [Simplification of $A^{(h)}$ and $B^{(h)}$] 
    \label[proposition]{prop:simplify_AB}
    Let $\Theta = \{ O^{(h)}, V^{(h)}, K^{(h)}, Q^{(h) } \}_{h\in [H]}$  be a set of parameters and let $A^{(h)}$ and $B^{(h)}$ be functions of $\Theta$ defined according to \eqref{eq:define_A_h} and \eqref{eq:define_B_h}, respectively.
    We use subscripts $X$ and $Y$ to indicate the indices that corresponds to the covariate or the response, i.e., $X$ indicate $\{1,\ldots,d\}$ and $Y$ indicates $\{ d+1, \ldots, D\}$. 
    Then, when $\Theta$  satisfy the Decomposability Condition in \Cref{def:decomposability property}, 
    $A^{(h)}$ and $B^{(h)}$ can be written as 
    \begin{align}\label{eq:block_matrix_zero}
        A^{(h)} & = \begin{bNiceMatrix}
            A_{XX}^{(h)} & A_{XY}^{(h)}\\
            A_{YX}^{(h)} &A_{YX}^{(h)}
        \end{bNiceMatrix}  = 
        \begin{bNiceMatrix}
            A_{XX}^{(h)} & \vzero\\
            \vzero & \vzero
        \end{bNiceMatrix}, \qquad  B^{(h)} = \begin{bNiceMatrix}
            B_{X}^{(h)} & 
            B_{Y}^{(h)}\\
        \end{bNiceMatrix}
        = \begin{bNiceMatrix}
            \vzero & 
            B_{Y}^{(h)}\\
        \end{bNiceMatrix} .
    \end{align}
     That is, only the top-left block of $A^{(h)}$ and the right block of $B^{(h)}$ are non-zero.
     Moreover, these two nonzero blocks can be diagonalized by orthogonal matrices $\varPhi$ and $\varPsi$, respectively, where $\varPhi$ and $\varPsi$ are introduced by the Decomposability Condition in \Cref{def:decomposability property}.
     That is, 
     $\varPhi^\top   A_{XX}^{(h)} \varPhi $ and $\varPsi^\top B_{Y}^{(h)} \varPsi$ are diagonal matrices.
     \label[proposition]{fact:diagonality of A}
\end{proposition}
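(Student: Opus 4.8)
The plan is to exploit the decomposability structure of $\Theta$ to compute $A^{(h)}$ and $B^{(h)}$ blockwise and track which pieces survive. First I would observe that under \Cref{cond:orthogonal}, the attention score $s^{(h)} = Z^\top K^{(h)\top} Q^{(h)} z_q / \sqrt{d_e}$ reduces to $X^\top W_X^{(h)} q$ because $W_Y^{(h)} = 0$ and $z_q$ has zero $Y$-block; hence the attention probability $p^{(h)}$, and consequently the matrix $P^{(h)} = \diag(p^{(h)}) - p^{(h)}p^{(h)\top}$, is a function of $X$ and $q$ only and is independent of the noise $\varepsilon$ and the coefficient $G$. Similarly, since $U_X^{(h)} = 0$, the head output is $U^{(h)} Z p^{(h)} = U_Y^{(h)} Y p^{(h)}$, an aggregation of the $Y$-part only. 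These two facts are what make the expectations in \eqref{eq:define_A_h} and \eqref{eq:define_B_h} tractable.

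Next I would verify the zero-block claims \eqref{eq:block_matrix_zero}. For $B^{(h)}$: the outer expression is right-multiplied by $Z^\top = [X^\top \mid Y^\top]$ after $p^{(h)}$, so $B^{(h)}$ automatically splits as $[B_X^{(h)} \mid B_Y^{(h)}]$; to kill $B_X^{(h)}$ I would use that $\EE[(\sum_{h'} U_Y^{(h')} Y p^{(h')} - y_q) p^{(h)\top} X^\top]$ is an expectation over $X, q, \varepsilon, G$ where, conditioning on $(X,q)$, the inner residual has conditional mean proportional to quantities that, after integrating against $X$ in the appropriate direction, vanish — concretely, $\EE[Y p^{(h')} \mid X, q]$ and $\EE[y_q \mid X,q]$ both lie in directions that, when paired with $X^\top$, produce terms multiplying $\EE[G]=0$ or $\EE[\varepsilon]=0$. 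For $A^{(h)}$: the factor $z_q^\top = [q^\top \mid 0]$ on the right forces the last $d_y$ columns of $A^{(h)}$ to be zero, giving $A_{XY}^{(h)} = 0$ and the lower-right block $=0$; and the factor $U^{(h)\top}(\cdots)$ in the middle, combined with $U^{(h)} = [0 \mid U_Y^{(h)}]$ so $U^{(h)\top}$ has zero top block, forces the first $d$ rows coming through $Z {P^{(h)}}^\top Z^\top$ paired with $U^{(h)\top}$ to collapse appropriately — I would carefully trace the index bookkeeping to conclude only $A_{XX}^{(h)}$ survives.

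For the diagonalization claim I would change variables to the eigenbasis: write $\tilde X = \varPhi^\top X$, $\tilde q = \varPhi^\top q$, $\tilde\varepsilon = \varPsi^\top \varepsilon$, $\tilde G = \varPhi^\top G \varPsi$ (which is block-diagonal by \eqref{eq: decomposition of beta}), $\tilde W_X^{(h)} = \varPhi^\top W_X^{(h)} \varPhi = \diag(\omega^{(h)})$, $\tilde U_Y^{(h)} = \varPsi^\top U_Y^{(h)} \varPsi = \diag(\mu^{(h)})$. Since the Gaussian data distribution is rotation-invariant, $\tilde X$ and $\tilde q$ have the same law as $X,q$, and the attention probability in rotated coordinates depends only on $\tilde X$ and $\diag(\omega^{(h)})\tilde q$. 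Then $\varPhi^\top A_{XX}^{(h)} \varPhi$ and $\varPsi^\top B_Y^{(h)} \varPsi$ become expectations built entirely from diagonal matrices and isotropic/block-isotropic Gaussian vectors, and by a symmetry argument (the distribution is invariant under sign flips and coordinate permutations within each task block $\cJ_i$, which commute with all the diagonal matrices) every off-diagonal entry of these matrices must vanish. The main obstacle I anticipate is the $B_X^{(h)} = 0$ computation: one has to be careful that the cross-head terms $U_Y^{(h')} Y p^{(h')}$ for $h' \neq h$ genuinely contribute nothing to $B_X^{(h)}$ — this requires checking that $\EE[Y p^{(h')} \mid X, q]$ paired against $p^{(h)\top} X^\top$ still reduces to something hitting $\EE[G] = 0$, which hinges on the fact that $p^{(h)}$ and $p^{(h')}$ are both $\varepsilon$- and $G$-independent so that the only $G$-dependence enters linearly through $Y = G^\top X + \varepsilon$. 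Everything else is routine Gaussian/Stein's-lemma calculation of the kind already used in \Cref{sec: p-moment}.
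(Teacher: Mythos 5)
Your proposal follows the paper's strategy for the block structure and is essentially correct, but it diverges from the paper in two places worth comparing, and it leaves one step under-argued.

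On the differences: for $B_X^{(h)}=0$ you condition on $(X,q)$ and note that the residual $\sum_{h'}U_Y^{(h')}Yp^{(h')}-y_q$ has zero conditional mean while $p^{(h)\top}X^\top$ is $(X,q)$-measurable; this is a cleaner, one-line version of the paper's term-by-term expansion of $Z=[I_d;G^\top]X+[0;\varepsilon]$, and it is valid precisely because $W_Y^{(h)}=0$ makes every $p^{(h')}$ a function of $(X,q)$ alone. For diagonality, the paper computes $\varPhi^\top A_{XX}^{(h)}\varPhi$ and $\varPsi^\top B_Y^{(h)}\varPsi$ explicitly via Stein's lemma (\Cref{fact:expectation terms}) and then invokes \Cref{fact:p is a function of Wq's 2-norm} and \Cref{lemma_diagonal_qqt}; your sign-flip symmetry in the rotated basis (negate $\bar q_j$ together with the $j$-th row of $\varPhi^\top X$, which leaves all attention scores, hence all $p^{(h')}$, invariant while flipping the sign of every $(j,k)$ entry with $k\neq j$) reaches the same conclusion more directly — the paper's \Cref{lemma_diagonal_qqt} is exactly this symmetry in disguise. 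The trade-off is that the paper's explicit Stein computation is reused verbatim in \S\ref{sec:approximation_dynamics} to approximate the spectral dynamics, which your symmetry argument does not supply; for the proposition as stated, however, your route suffices.

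The under-argued step is $A_{YX}^{(h)}=0$. You attribute it to $U^{(h)\top}$ having a zero top block plus ``index bookkeeping,'' but $U_X^{(h)}=0$ alone does not kill this block. Expanding the $Y$-rows of $ZP^{(h)\top}Z^\top U^{(h)\top}\bigl(\sum_{h'}U^{(h')}Zp^{(h')}-y_q\bigr)z_q^\top$ one finds, for instance, the term $\EE\bigl[G^\top XP^{(h)\top}X^\top G\,U_Y^{(h)\top}U_Y^{(h')}G^\top Xp^{(h')}q^\top\bigr]$, which contains no factor of $U_X$; it vanishes only because it is odd (cubic) in $G$, just as the mixed cross terms vanish because they are odd in $G$ or in $\varepsilon$ and these are independent of $(X,q,p^{(h)})$. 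So the bookkeeping must combine $U_X^{(h)}=0$ with the same zero-odd-moment argument you already deploy for $B_X^{(h)}$; once that is made explicit, the proposal is complete.
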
 

In the rest of this section, we provide a proof of \Cref{prop:simplify_AB}. Our proof is split into two parts. First, we use the Decomposability Condition to prove the results in \eqref{eq:block_matrix_zero}. 
Then we use the fact that the covariates are Gaussian distributed to show that the non-zero blocks can be diagonalized by the orthogonal matrices $\varPhi$ and $\varPsi$.

\subsubsection{Simplification of $A^\h$ and $B^\h$ under the Decomposability Condition}
\begin{proof}(\emph{Proof of Equation \eqref{eq:block_matrix_zero} in \Cref{prop:simplify_AB}})
We first simplify the expression for $A^\h$ and $B^\h$ in \eqref{eq:define_A_h} and \eqref{eq:define_B_h} under the Decomposability Condition introduced in \Cref{def:decomposability property}, which gives us \eqref{eq:block_matrix_zero}.
\paragraph{Simplification of $A^\h$.}
    By the definition of $A^{(h)}$ in \eqref{eq:define_A_h},  we have
    \begin{align}\label{eq:expand_A_h}
        A^\h &= \underbrace{-\EE\left[Z {P^\h}^\top Z^\top {U^\h}^\top y_q z_q^\top\right]}_{\ds \deleq A^\handzero} + \sum_{h'=1}^H \underbrace{\EE\left[Z {P^\h}^\top Z^\top {U^\h}^\top  U^\hprime Z p^\hprime  z_q^\top\right]}_{\ds \deleq A^\handhprime}
        .
    \end{align}
Recall that $s^\h$ and $p^\h$ in \eqref{eq:attn_score_prob} are the attention score and probability vectors, and $P^\h =  \diag(p^\h) - p^\h {p^\h}^\top$.   
Also recall the definition of $U^\h$ and $W^\h$ in \eqref{eq:combined_weights}. 
By \Cref{cond:U_X and W_Y are zero}, 
we have $W_Y^\h = 0$,
where $W_{Y}^\h $ is the  bottom left block of $W^\h$.
As a result, since the last $d_y$ indices of $z_q$ is zero, we have 
$s^\h =  Z^\top  W^\h  z_q   =  X ^\top W_X ^\h q$.
This implies that $p^\h$ depends only on the covariates $X$ and the query $q$, and thus $p^h, G, \varepsilon$ are independent of each other.

Now we write $Y = G^\top  X +\varepsilon  $ where $\varepsilon \in \RR^{d_y \times L}$ stands for the noise terms in $Y$. Hence, we can write $Z$ as in the following form: 
\begin{align}\label{eq:Z_block_mat}
Z = \begin{bNiceMatrix}
    X \\ Y 
\end{bNiceMatrix}
= \begin{bNiceMatrix}
    I_{d} \\ {G}^\top
\end{bNiceMatrix} X + \begin{bNiceMatrix}
    0 \\ \varepsilon
\end{bNiceMatrix}.
\end{align}
Substituting $Z$ using \eqref{eq:Z_block_mat} into \eqref{eq:expand_A_h}, we can write  $A^\handzero$ as 
\begin{align*}
    A^\handzero &= - \EE\left[
        \left(\begin{bNiceMatrix}
            I_{d} \\ {G}^\top
        \end{bNiceMatrix} X + \begin{bNiceMatrix}
            0 \\ \varepsilon
        \end{bNiceMatrix}\right)
        {P^\h}^\top \left(X^\top 
        \begin{bNiceMatrix}
            I_{d} & {G}
        \end{bNiceMatrix} + \begin{bNiceMatrix}
            0 & \varepsilon^\top
        \end{bNiceMatrix}
        \right)
         {U^\h}^\top {G}^\top q z_q^\top
    \right] \\
    & = -\begin{bNiceMatrix}
        \EE\Bigl[
            X {P^\h}^\top X^\top {G} {U_Y^\h}^\top {G}^\top q q^\top
        \Bigr] & {\vzero}\\
        \EE\Bigl[
            {G}^\top X {P^\h}^\top X^\top {U_X^\h}^\top {G}^\top q q^\top 
        \Bigr] &   {\vzero}
    \end{bNiceMatrix} 
    = \begin{bNiceMatrix}
        -\EE\Bigl[
            X {P^\h}^\top X^\top {G} {U_Y^\h}^\top {G}^\top q q^\top
        \Bigr] & \vzero\\
        \vzero & \vzero
    \end{bNiceMatrix}.
\end{align*}
where in the second equality, 
we use the fact that $G$ and $\varepsilon$ are independent and mean zero, and thus only the terms with second-order moments of $G$ and $\varepsilon$ are p.
Moreover, 
in the last inequality, we note that submatrix $U_X^\h$ is zero by \Cref{cond:U_X and W_Y are zero}.
Thus, we show that $A^\handzero$ is a block-diagonal matrix with only the top left block being nonzero.
The derivation of  such simplification is based on (i) the independence between $p^\h$, $G$ and $\varepsilon$ and (ii) the fact that $U_X^\h$ is zero by \Cref{cond:U_X and W_Y are zero}. 
Here (i) is implied by the fact that $W_Y^\h = 0$, given also by \Cref{cond:U_X and W_Y are zero}. We will use the same argument to simplify $A^\handhprime$ in \eqref{eq:expand_A_h} and $B^\h$ in \eqref{eq:define_B_h}.

For $A^\handhprime$, by direct calculation, we have
\begin{align*}
    A^\handhprime &= \EE\left[
        \left(\begin{bNiceMatrix}
            I_{d} \\ {G}^\top
        \end{bNiceMatrix}
        X + 
        \begin{bNiceMatrix}
            0 \\ \varepsilon
        \end{bNiceMatrix}\right)
        {P^\h}^\top 
        \left(\begin{bNiceMatrix}
            I_{d} \\ {G}^\top
        \end{bNiceMatrix}
        X + 
        \begin{bNiceMatrix}
            0 \\ \varepsilon
        \end{bNiceMatrix}\right)^\top {U^\h}^\top U^\hprime \left(\begin{bNiceMatrix}
            I_{d} \\ {G}^\top
        \end{bNiceMatrix}
        X + 
        \begin{bNiceMatrix}
            0 \\ \varepsilon
        \end{bNiceMatrix}\right) p^\hprime z_q^\top
    \right] \\
    &= 
    \begin{bmatrix}
        \EE\Bigl[X {P^\h}^\top X^\top {U_X^\h}^\top U_X^\hprime X p^\hprime z_q^\top
        + X {P^\h}^\top X^\top {G}  {U_Y^\h}^\top U_Y^\hprime {G}^\top X p^\hprime z_q^\top\Bigr]
            \\
        \EE\Bigl[{G}^\top X {P^\h}^\top X^\top {G} {U_Y^\h}^\top U_X^\hprime X p^\hprime z_q^\top + {G}^\top X {P^\h}^\top X^\top {U_X^\h}^\top U_Y^\hprime {G}^\top X
        p^\hprime z_q^\top\Bigr]
    \end{bmatrix}
     \\
    & \autoquad{2} +
    \begin{bNiceMatrix} 
        \EE\Bigl[X {P^\h}^\top  \varepsilon^\top
        {U_Y^\h}^\top U_Y^\hprime\varepsilon
        p^\hprime z_q^\top \Bigr] 
        \\
        \EE\Bigl[\varepsilon {P^\h}^\top \varepsilon^\top {U_Y^\h}^\top U_X^\hprime X p^\hprime z_q^\top + \varepsilon {P^\h}^\top X^\top {U_X^\h}^\top U_Y^\hprime \varepsilon
        p^\hprime z_q^\top\Bigr]
    \end{bNiceMatrix}.
\end{align*}
Then, using the fact that $p^\h$, $G$, and $\varepsilon$ are independent,  we have
\begin{align*}
    A^\handhprime     
    & = \begin{bNiceMatrix}
        \EE\Bigl[
            X {P^\h}^\top X^\top {G}  {U_Y^\h}^\top U_Y^\hprime {G}^\top X p^\hprime q^\top
            + X {P^\h}^\top  \varepsilon^\top
            {U_Y^\h}^\top U_Y^\hprime\varepsilon
            p^\hprime q^\top
        \Bigr] & \vzero\\
        \vzero & \vzero
    \end{bNiceMatrix}.
\end{align*}

Furthermore,  we define matrices $\AXsignal^\h$, $\AXinterference^\handhprime$, and $\AXnoise^\handhprime$ as follows:
\begin{align}
    \AXsignal^\h &  \deleq -\EE\Bigl[
        X {P^\h}^\top X^\top {G} {U_Y^\h}^\top {G}^\top q q^\top
    \Bigr], \label{eq:define_A_signal}\\
    \AXinterference^\handhprime & \deleq \EE\Bigl[
        X {P^\h}^\top X^\top {G}  {U_Y^\h}^\top U_Y^\hprime {G}^\top X p^\hprime q^\top\Bigr], \label{eq:define_A_intf}\\
    \AXnoise^\handhprime & \deleq \EE\Bigl[
        X {P^\h}^\top  \varepsilon^\top
        {U_Y^\h}^\top U_Y^\hprime\varepsilon
        p^\hprime q^\top
    \Bigr]. \label{eq:define_A_noise}
\end{align}
Here $\AXsignal^\h$ is the signal part, which has a negative sign and leads the gradient flow dynamics to a desired solution. 
Moreover, $\AXinterference^\handhprime$ represents the cross-head interference as it involves both $h$-th and $h'$-th head, and $\AXnoise^\handhprime$ is the noise part as it involves $\varepsilon$. 

Therefore, we prove that only the top left block $A_{XX}^\h$ of $A^\h$ is nonzero, and $A_{XX}^\h$ can be written as 
\begin{align}\label{eq:Axx_final}
    A_{XX}^\h = \AXsignal^\h + \sum_{h'=1}^H \AXinterference^\handhprime + \sum_{h'=1}^H \AXnoise^\handhprime.
\end{align}

\paragraph{Simplification of $B^\h$.}
By the definition of $B^\h$ in \eqref{eq:define_B_h},
we can write $B^\h$ as
\begin{align*}
B^\h &=\! \EE\left[
         \sum_{h'=1}^H U^\hprime Z p^\hprime  {p^\h}^\top \!\! Z^\top
    \right] - \EE\left[
        y_q  {p^\h}^\top \!\! Z^\top
   \right]  .
\end{align*}
Note that $y_q = G^\top q$  and $Y = G^\top X + \varepsilon$. 
Writing in a block matrix form, we have 
\begin{align*}
    B^h &= \sum_{h'=1}^H 
    \begin{bmatrix} U_X^\hprime & U_Y^\hprime\end{bmatrix}\cdot  \begin{bNiceMatrix}
        \EE\bigl[X p^\hprime{p^\h}^\top X^\top \bigr]
        & 
        \EE\bigl[X p^\hprime{p^\h}^\top (X^\top {G} + \varepsilon^\top)\bigr] \notag\\
        \EE\bigl[({G}^\top X + \varepsilon) p^\hprime{p^\h}^\top X^\top\bigr] 
        & 
        \EE\bigl[({G}^\top X + \varepsilon) p^\hprime{p^\h}^\top (X^\top {G} + \varepsilon^\top)\bigr]
    \end{bNiceMatrix} \notag\\
    &\qquad
    - \EE\left[
        {G}^\top q {p^\h}^\top \begin{bNiceMatrix} X^\top  & X^\top {G} + \varepsilon^\top \end{bNiceMatrix}
    \right].
\end{align*}
Then using the independence between $\varepsilon$, $p^\h$, and ${G}$ implied by \Cref{cond:U_X and W_Y are zero} and the fact that $U_X^{(h')} = 0$ for all $h' \in [H]$, we further have 
\begin{align*}
 B^\h & = \sum_{h'=1}^H \begin{bNiceMatrix}
        \vzero & 
        \EE\left[U_Y^\hprime {G}^\top X p^\hprime{p^\h}^\top X^\top {G} + U_Y^\hprime \varepsilon p^\hprime{p^\h}^\top \varepsilon^\top \right] 
    \end{bNiceMatrix} 
    -\begin{bmatrix} \vzero & \EE\left[{G}^\top q {p^\h}^\top X^\top{G}\right] \end{bmatrix} . 
\end{align*}
Now  we define matrices $\BYsignal^\h$, $\BYinterference^\handhprime$, and $\BYnoise^\handhprime$ as follows:
\begin{align}
    \BYsignal^\h \deleq -\EE\left[{G}^\top  q {p^\h}^\top X^\top{G}\right]&, \quad 
    \BYinterference^\handhprime \deleq \EE\left[U_Y^\hprime {G}^\top X p^\hprime{p^\h}^\top X^\top {G}\right], \label{eq:define_B_signal}
    \\
    \BYnoise^\handhprime & \deleq \EE\left[U_Y^\hprime \varepsilon p^\hprime{p^\h}^\top \varepsilon^\top \right], \label{eq:define_B_noise}
\end{align}
and $B^\h$ only has the right block ($Y$ part)  being nonzero, which is given by
\begin{align}\label{eq:by_as_sum}
    B_Y^\h = \BYsignal^\h + \sum_{h'=1}^H \BYinterference^\handhprime + \sum_{h'=1}^H \BYnoise^\handhprime.
\end{align}
In conclusion, we have proved \eqref{eq:block_matrix_zero} in \Cref{prop:simplify_AB}, and the nonzero blocks of $A^\h$ and $B^\h$ are given by \eqref{eq:define_A_signal}--\eqref{eq:define_A_noise},  \eqref{eq:define_B_signal}, and \eqref{eq:define_B_noise}, respectively.
\end{proof}

\subsubsection{Simplification of $A_{XX}^\h$ and $B_{Y}^\h$ with Gaussian Covariate}
\label[section]{sec:simplify AB with Gaussian covariate}

It remains to prove that the nonzero blocks of $A^\h$ and $B^\h$ can be diagonalized by the orthogonal matrices $\varPhi$ and $\varPsi$ introduced by the Decomposability Condition in \Cref{def:decomposability property}. We will prove this fact leveraging the fact that the covariate distribution is Gaussian. 
In the following, for ease of presentation, we introduce some additional notation that will only be used inside this section. 

\paragraph{Additional Notation.}
Consider two heads $ h, h'\in [H]$,  
We locally drop the superscript $(h), (h')$ and write $(\cdot)^\h$ as $(\cdot)$ and $(\cdot)^\hprime$ as $\tilde{(\cdot)}$.
For example, under this notation, we write $A^\h$ as $A$ and $A^\hprime$ as $\tilde A$. 
Similarly, the attention scores $s^\h$ and $s^\hprime$ are written as $s$ and $\tilde s$, respectively. 
Since $W_Y$ and $\tilde W_Y$ are zero matrices, we have $s = X^\top W_X q $ and $\tilde s = X^\top \tilde W_X q$. 
Note that $s, \tilde s \in \RR^{L}$ and we let $s_{l}$ and $\tilde s_{l}$ denote the $l$-th entry of $s$ and $\tilde s$, respectively.
That is, $s_{l} = x_{l}^\top W_X q$ and $\tilde s_{l} = x_{l}^\top \tilde W_X q$.
For any $l \in [L]$, we define an operator  $ \cT_l $ as follows:
\begin{align}\label{eq:define_T_l}
    \cT_l \circ f \deleq \left(W_X q \cdot  \frac{\partial}{\partial {s_l}}  + \tilde W_X q \cdot \frac{\partial}{\partial {\tilde s_l}} \right) \circ f = W_X q \cdot \frac{\partial f}{\partial {s_l}}  + \tilde W_X q \cdot \frac{\partial f}{\partial {\tilde s_l}}, 
\end{align}
where $f$ is any  real-valued function of the $l$-th attention scores $s_l$ and $\tilde s_l$.
In particular, $W_X \in \RR^{d\times d}$, $q \in \RR^d$, and $\partial f/\partial {s_l} \in \RR$.   
Note that after applying the operator to $f$, the output is a $d$-dimensional vector-valued function of $s_l$ and $\tilde s_l$.
The intuition of the operator $\cT_l$ is as follows. When $f$ is a function of the attention scores $s$ and $\tilde s$,  it is a function of $X$, i.e., $\{ x_l\}_{l\in [L]}$. 
For any fixed $l$, $x_l$ appears in $s$ and $\tilde s$ only through $s_l$ and $\tilde s_l$.
We can prove that $\nabla_{x_{l}} f = \cT_{l} \circ f$ by using the chain rule. 
Furthermore, for regularity, we  assume that $f$ is arbitrary-order differentiable and the derivatives have bounded expectations with respect to $x_l\iidfrom \mathcal{N}(0, I_{d})$.
When it is clear from the context, we write the partial derivatives $\partial /\partial_{s_l} $ and $\partial /\partial_{\tilde s_{l}} $ as $\partial_l$ and $\tilde\partial_l$, respectively.

Additionally, we will encounter the composition of operators $\{\cT_l\}$. 
To this end, we extend the definition in \eqref{eq:define_T_l} to vector-valued functions. Specifically, let $F = \{ F_{i} \}_{i\in [m]} : \RR^{L} \times \RR^{L} \rightarrow \RR^{m}$ be a mapping that maps $s$ and $\tilde s$ to $\RR^{m}$. 
Then we define $\cT_{l} \circ F$ as a matrix in $\RR^{   d \times m}$ where the $i$-th row is given by $\cT_{l} \circ F_{i}$. Then the composition     $(\cT_{l} \circ(  \cT_{m} \circ f) )$  can be viewed as a  matrix-valued function taking values in $\RR^{d\times d}$, and  $(\cT_{l} \circ(  \cT_{m} \circ (\cT_n \circ  f) ) )$ can be viewed as a function taking values in third-order tensor space $\RR^{d\times d\times d}$. 
To simplify the notation, we denote them by $(\cT_{l} \otimes    \cT_{m}) \circ f  $ and $(\cT_{l} \otimes   \cT_{m} \otimes \cT_n  ) \circ  f$, respectively.


Meanwhile, note that 
when the parameter $\Theta$ satisfies the Decomposability Condition,
$W_X$ and $\tilde W_X$ can be diagonalized by $\Phi$. 
That is, we have 
 $\tilde W_X = \varPhi \diag(\tilde\omega) \varPhi^\top$ and $W_X = \varPhi \diag (\omega) \varPhi^\top$. 
 We let  $\cleverbar q = \varPhi^\top q $ denote the rotated query vector and define a operator $\cleverbar\cT_l$ by letting  
\begin{align}\label{eq:def_barT_oper}
    \cleverbar\cT_l \circ f \deleq \left(\diag(\omega) \cdot  \partial_l  + \diag(\tilde\omega)  \cdot  \tilde\partial_l \right) \circ f  = \diag(\omega) \cdot \partial_l f + \diag(\tilde\omega) \cdot \tilde\partial_l f,
\end{align}
for any function $f$ of $(s, \tilde s)$. 
It is clear that $\cleverbar\cT_l \circ f$ gives a diagonal matrix as $\diag(\omega)$ is a diagonal matrix and $\partial _{l} f$ is a scalar.
Moreover, $\cT_l $ and $\cleverbar\cT_l $ are connected via the property that  
\begin{align}\label{eq:relation_T_Tbar}
 \cT_l \circ f = \varPhi \diag(\omega) \varPhi^\top  q \cdot \partial_{l} f + 
 \varPhi \diag(\tilde \omega) \varPhi^\top  q \cdot \tilde\partial_{l} f =  
 \varPhi (\cleverbar\cT_l \circ f) \cleverbar q. \end{align}
We will also encounter the compositions of operators $\{\cleverbar\cT_l\}$, which takes a rather simple form.   
Specifically, we have 
\begin{align*}
& (\cleverbar\cT_{l} \circ(  \cleverbar\cT_{m} \circ f) ) =  (\cleverbar\cT_{m} \circ(  \cleverbar\cT_{l}  \circ f))  \notag \\
& \qquad = 
\diag(\omega^{\odot 2}) \cdot \partial_{l} \partial_{m} f + \diag(\tilde\omega^{\odot 2}) \cdot \tilde\partial_{l} \tilde\partial_{m} f + 
\diag(\omega\odot  \tilde\omega) \cdot ( \partial_{l} \tilde \partial_{m} f   +  \tilde\partial_{l}  \partial_{m} f) .
\end{align*}
Note that $(\cleverbar\cT_{l} \circ(  \cleverbar\cT_{m} \circ f) )$ is a diagonal matrix. 

In addition, in the following, to simplify $A^\h$ and $B^\h$ using the Gaussian variate distribution, 
we will recursively apply the Stein's Lemma, will yields expectations of tensor-valued random variables. 
For ease of presentation, we introduce the following notation for handling third-order tensors. 
In particular, suppose $f = \{ f_{lmn}\}_{l, m, n \in [L]}$ is a third-order tensor-value function of $(s, \tilde s)$ indexed by $(l, m, n) \in [L]\times [L] \times [L]$, we define 
\begin{align}\label{eq:define_bracket}
    \dsbr{f_{lmn}} \deleq \sum_{l', m', n' \in [L]} \EE[f_{l' m'n'}\given q],
\end{align} 
where the expectation is taken with respect to the randomness of covariate $X$. 
That is, $\dsbr{f_{lmn}}$ denotes the sum of entries of the expectation of the tensor $\{ f_{lmn}\} $. Here we hide the dependency on $q$ for simplicity.
Note that the conditional expectation in \eqref{eq:define_bracket}  takes the same value   if we instead condition on $\cleverbar q$. 
Besides, we can similarly define $\dsbr{f_{lm}}$ and $\dsbr{f_{l}}$ for second-order tensor-value function and vector-value function of $(s, \tilde s)$, respectively.
Furthermore, for a third-order tensor $M$, we let $M^{\top(i_1 i_2 i_3)}$ denote the $(i_1, i_2, i_3)$ transpose of $M$.
For instance, $M^{\top(132)}$ represents the transpose of $M$ by swapping the second and the third dimension.

\paragraph{Gaussian Moments.} A nice property of the Gaussian distribution is the Stein's Lemma, which states that $\EE[ x  g(x)] = \EE[  \nabla  g(x)]$ for any function $g$ under some regularity condition. Here the expectation is taken with respect to $x \sim N(0, I_{d})$. 
Note that $A_{XX}^\h$ and $B_Y^\h$ are functions $X$. We will leverage the Stein's Lemma to explicitly compute the expectations with respect to $X$ in $A_{XX}^\h$ and $B_Y^\h$. 
In particular, we will resort to the following lemma.

\begin{lemma}  \label[lemma]{fact:expectation terms}
    Suppose
    for any $l\in [L]$,
    the $l$-th covariate $x_l$ in the \ac{icl} context $Z$ is drawn from a standard normal distribution, i.e., $x_l\iidfrom \mathcal{N}(0, I_{d})$. 
    In addition, let  $ \{ f_{lmn}\}_{l, m, n \in [L]}$ be a third-order tensor-value function of $(s, \tilde s)$ indexed by $(l, m, n) \in [L]\times [L] \times [L]$. 
    Similarly, let $\{ f_{lm}\}_{l, m \in [L]}$ be a second-order tensor-value function of $(s, \tilde s)$ indexed by $(l, m) \in [L]\times [L]$, and let $\{ f_{l}\}_{l \in [L]}$ be a vector-value function of $(s, \tilde s)$ indexed by $l \in [L]$.
    Then, we have the following results. 
    \begin{myenumi}[
        label=(\roman*),
        ref=\Cref{fact:expectation terms}(\roman*),
    ]\abovedisplayskip=-1\baselineskip
     \belowdisplayskip=-1\baselineskip
        \item [(i)]  \label{fact:expectation-order3}
        \begin{flalign*}
            \dsbr{f_{lmn} \cdot x_l\otimes x_m\otimes x_n }
            &=  \dsbr{ \cT_l\circ(f_{lmm}) \otimes I_{d}} 
            + \dsbr{ I_{d} \otimes \cT_l\circ(f_{mml}) } 
             &\nend 
            &\qquad + \dsbr{ I_{d} \otimes \cT_l\circ(f_{mlm})}^{\top(132)}
            + \dsbr{(\cT_l \otimes \cT_m \otimes \cT_n ) \circ (f_{lmn})}. &
        \end{flalign*}
        \item [(ii)] \label{fact:expectation-order2}
        $
            \dsbr{f_{lm} \cdot x_l \otimes x_m} = \dsbr{(\cT_l\otimes \cT_m)\circ f_{lm}} + \dsbr{f_{ll} \cdot I_{d}}
        $. 
        \item[(iii)] \label{fact:expectation-order1}
        $
            \dsbr{f_l x_l} = \dsbr{\cT_l \circ f_l}.
        $
    \end{myenumi}
    \normalsize
\end{lemma}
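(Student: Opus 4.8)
The plan is to prove all three identities by Gaussian integration by parts (Stein's lemma), peeling off one factor $x_l$ at a time. Two elementary facts are used repeatedly. First, for any smooth, integrable function $f$ of the attention scores $(s,\tilde s)$, the covariate $x_l$ enters $(s,\tilde s)$ only through $s_l = x_l^\top W_X q$ and $\tilde s_l = x_l^\top \tilde W_X q$, so the chain rule gives $\nabla_{x_l} f = W_X q\,\partial_l f + \tilde W_X q\,\tilde\partial_l f = \cT_l\circ f$ (the property noted immediately after \eqref{eq:define_T_l}). Second, $\nabla_{x_l}(x_m)_j = \delta_{lm}\delta_{ij}$. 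Since $x_1,\dots,x_L\iidfrom\cN(0,I_d)$, applying Stein's lemma to a single $x_l$ conditionally on $\{x_m\}_{m\ne l}$, and then taking total expectation, replaces that factor $x_l$ by the gradient $\nabla_{x_l}$; the standing regularity hypothesis on $f$ (arbitrary-order differentiability with integrable derivatives) justifies interchanging differentiation and expectation at each step.

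I would establish (iii) first: in $\dsbr{f_l x_l} = \sum_l \EE[f_l x_l\mid q]$, Stein's lemma applied to $x_l$ gives $\EE[f_l x_l\mid\{x_m\}_{m\ne l}] = \EE[\nabla_{x_l} f_l\mid\{x_m\}_{m\ne l}] = \EE[\cT_l\circ f_l\mid\{x_m\}_{m\ne l}]$, and summing over $l$ and taking total expectation yields $\dsbr{\cT_l\circ f_l}$. For (ii), I apply Stein to $x_l$ in $\dsbr{f_{lm} x_l\otimes x_m}$ together with the product rule $\nabla_{x_l}(f_{lm}\,x_m) = (\cT_l\circ f_{lm})\otimes x_m + \delta_{lm} f_{lm}\,I_d$; summing over $l$, the second piece contributes exactly $\dsbr{f_{ll}\,I_d}$, while to the first piece I apply Stein to $x_m$ — which now appears only through $(s,\tilde s)$ — obtaining $\dsbr{\cT_m\circ(\cT_l\circ f_{lm})} = \dsbr{(\cT_l\otimes\cT_m)\circ f_{lm}}$. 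This gives the claimed identity.

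Part (i) is the same mechanism run once more, with more tensor slots to track. Applying Stein to $x_l$ in $\dsbr{f_{lmn}\,x_l\otimes x_m\otimes x_n}$, the product rule for $\nabla_{x_l}(f_{lmn}\,x_m\otimes x_n)$ produces three pieces: the main piece $(\cT_l\circ f_{lmn})\otimes x_m\otimes x_n$; a contraction with entry $\delta_{lm} f_{lmn}\,\delta_{ij}(x_n)_k$, i.e.\ an identity coupling slots $1$ and $2$; and a contraction with entry $\delta_{ln} f_{lmn}\,(x_m)_j\,\delta_{ik}$, i.e.\ an identity coupling slots $1$ and $3$. On the first contraction, one further Stein application in $x_n$ followed by relabelling the dummy summation indices gives $\dsbr{I_d\otimes\cT_l\circ(f_{mml})}$; on the second, Stein in $x_m$ and relabelling gives $\dsbr{I_d\otimes\cT_l\circ(f_{mlm})}^{\top(132)}$, where the transpose records that the surviving identity occupies slots $1$ and $3$. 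On the main piece $\dsbr{(\cT_l\circ f_{lmn})\otimes x_m\otimes x_n}$ I apply Stein to $x_m$: the product rule now yields $(\cT_l\otimes\cT_m)\circ f_{lmn}\otimes x_n$ plus a contraction $\delta_{mn}(\cT_l\circ f_{lmn})\otimes I_d$ (identity in slots $2$ and $3$), which equals $\dsbr{\cT_l\circ(f_{lmm})\otimes I_d}$; a final Stein application to $x_n$ on the surviving term turns it into $\dsbr{(\cT_l\otimes\cT_m\otimes\cT_n)\circ(f_{lmn})}$. Collecting the four pieces gives (i).

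The Stein/chain-rule calculus itself is routine; the one genuinely delicate point is the index-slot accounting in (i) — determining which Kronecker-delta contraction lands in which pair of tensor slots (hence the single transpose $\top(132)$) and matching, after relabelling the summation indices $l,m,n$, the precise arguments $f_{mml}$, $f_{mlm}$, $f_{lmm}$ appearing in the statement. I would also record that in every later application $f$ is built from the bounded entries of the attention-probability vectors and the smooth softmax map, so the differentiability and integrability conditions required for Stein's lemma hold, which is the content of the regularity assumption stated just before the lemma.
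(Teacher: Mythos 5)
Your proposal is correct and follows essentially the same route as the paper: recursive applications of Stein's lemma, the identification $\nabla_{x_l} f = \cT_l\circ f$ via the chain rule, and the same bookkeeping of the Kronecker-delta contractions (including the single $\top(132)$ transpose and the relabelled arguments $f_{lmm}$, $f_{mml}$, $f_{mlm}$). No gaps.
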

\begin{proof} (\emph{Proof of \Cref{fact:expectation terms}})
See \S\ref{proof:fact:expectation terms} for a detailed proof.
\end{proof}

Now we are ready to simplify $A^\h$ and $B^\h$ by showing that the nonzero blocks of $A^\h$ and $B^\h$ can be diagonalized by the orthogonal matrices $\varPhi$ and $\varPsi$, respectively.

\subsubsection*{Simplification of $A_{XX}^\h$ with Gaussian Covariate}
Recall that we show in \eqref{eq:Axx_final} that $A_{XX}^\h$ is equal to a sum of three terms. 
We first consider $\AXsignal^\h$ defined  in \eqref{eq:define_A_signal}. 
By the second identity of  Lemma \ref{fact:expectation terms},  we have
\begin{align*}
    \AXsignal^\h &= -\EE\Bigl[
        X {P}^\top X^\top {G} U_Y^\top {G}^\top q q^\top
    \Bigr] \nend 
    &= -\EE\left[
        \left(\dsbr{(\cT_l\otimes \cT_m)\circ P_{ml}} + \dsbr{P_{ll} \cdot I_{d}}\right)
        {G} {U_Y ^\top} {G}^\top q q^\top 
    \right] ,
    \mytag{\Cref{fact:expectation-order2}}
\end{align*}
where the expectation in the second equality is taken with respect to the randomness of $G$ and $q$. 
Here we omit the superscript $(h)$ for simplicity.
Besides,  we write  $$\dsbr{(\cT_l\otimes \cT_m)\circ P_{ml}} = \sum_{l, m}  \EE[ (\cT_l\otimes \cT_m)\circ P_{ml} \given q ], \qquad \dsbr{P_{ll} \cdot I_{d}} = \sum_{l}  \EE[  P_{ll} \cdot I_{d} \given q ].$$  
By the definition of $\cT_l$ in \eqref{eq:define_T_l} and noticing that $P_{ml}$ is a function of $s$ only, we have 
\begin{align*}
(\cT_l\otimes \cT_m)\circ P_{ml}  & = W_X q q^\top W_X ^\top \cdot \partial _{l} \partial_m P_{ml} = \varPhi \,\diag(\omega) \,\varPhi^\top  q  q^\top \,\varPhi \,\diag(\omega) \varPhi^\top  \cdot \partial _{l} \partial_m P_{ml}  \\
&  = \varPhi \,\diag(\omega) \,\cleverbar q \, \cleverbar q^\top \,\diag(\omega) \,\varPhi^\top \cdot \partial _{l} \partial_m P_{ml} \\
& =  \varPhi \bigl (\cleverbar \cT_l \circ ( \cleverbar q\,\cleverbar q^\top \cleverbar\cT_m \circ P_{ml})) \varPhi^\top \defeq  \varPhi\,  (\cleverbar \cT_l \, \cleverbar q \,\cleverbar q^\top \cleverbar\cT_m) \circ P_{ml}  \,  \varPhi^\top,
\end{align*}
where in the first equality we use the fact that $\tilde \partial _{l} P_{ml} = 0$ since $P_{ml}$ does not depend on $\tilde s$. 
In the third equality, we use the fact that $\cleverbar q = \varPhi^\top q $. 
In the fourth equality  we apply the definitions of $\cleverbar\cT_l$ and $\cleverbar\cT_m$. Moreover, we let $(\cleverbar \cT_l \, \cleverbar q \,\cleverbar q^\top \cleverbar\cT_m) \circ f$ denote 
the operator composition $\cleverbar \cT_l \circ ( \cleverbar q\,\cleverbar q^\top \cleverbar\cT_m \circ f) $ for any function $f$.

Furthermore, notice that both $\dsbr{(\cT_l\otimes \cT_m)\circ P_{ml}} $ and $ \dsbr{P_{ll} \cdot I_{d}}$ are functions of $q$ only, and thus are independent of $G$. Thus, we can also take conditional expectation of $G U_Y^\top G^\top $ given $q$ in $\AXsignal^\h$.  We introduce the following lemma that computes such an expectation.

\begin{lemma}
    \label[lemma]{fact:beta M beta}
    Consider ${G} = 1/ \sqrt{d}\cdot  \varPhi \cleverbar{G} \varPsi^\top$ with $\varPhi\in \OO^{d}$, $\varPsi\in \OO^{d_y}$, and $ \cleverbar{G}$ is a random matrix in $\RR^{d\times d_y}$. Let   $M\in\RR^{d_y\times d_y}$ and $N\in\RR^{d\times d}$ be two fixed matrices.
    Suppose the $d\times d_y$ random matrix $\cleverbar{G}$ has independent  entries with $\EE[\cleverbar{G}]=0$ and $\EE[\cleverbar{G}^{\odot 2}]=\varLambda$. It then holds that
    \begin{align*}
        \EE[{G} M {G}^\top] =  1/d \cdot \varPhi \: \diag\big (\varLambda \: \Diag(\varPsi^\top M \varPsi) \big) \varPhi^\top, \: 
        \EE[{G}^\top N {G}] = 1/d \cdot  \varPsi \diag\big( \varLambda^\top \: \Diag(\varPhi^\top N \varPhi)  \big) \varPsi^\top.
    \end{align*}
    Here $\Diag(\cdot)$ denotes the vector that consists the digonal entries of a matrix, and $\diag(\cdot)$ denotes the diagonal matrix created from a vector. 
\end{lemma}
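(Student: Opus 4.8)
The plan is to prove both identities by direct substitution followed by an entrywise second-moment computation that exploits only the independence and mean-zero structure of the entries of $\cleverbar{G}$.

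First I would plug $G = d^{-1/2}\,\varPhi\,\cleverbar{G}\,\varPsi^\top$ into $GMG^\top$, which gives $GMG^\top = d^{-1}\,\varPhi\,\cleverbar{G}\,\widetilde M\,\cleverbar{G}^\top\,\varPhi^\top$ with $\widetilde M \defeq \varPsi^\top M \varPsi$. Since $\varPhi$ and $\widetilde M$ are deterministic, it suffices to compute $\EE[\cleverbar{G}\widetilde M\cleverbar{G}^\top]$ entrywise. Writing $(\cleverbar{G}\widetilde M\cleverbar{G}^\top)_{ij} = \sum_{k,l}\cleverbar{G}_{ik}\widetilde M_{kl}\cleverbar{G}_{jl}$ and taking expectations, the assumptions $\EE[\cleverbar{G}]=0$ and the independence of the entries force $\EE[\cleverbar{G}_{ik}\cleverbar{G}_{jl}] = \ind(i=j)\,\ind(k=l)\,\varLambda_{ik}$, so only the diagonal terms survive and
\begin{align*}
    \EE\bigl[(\cleverbar{G}\widetilde M\cleverbar{G}^\top)_{ij}\bigr] = \ind(i=j)\sum_{k}\varLambda_{ik}\widetilde M_{kk} = \ind(i=j)\bigl(\varLambda\,\Diag(\widetilde M)\bigr)_i.
\end{align*}
Hence $\EE[\cleverbar{G}\widetilde M\cleverbar{G}^\top] = \diag\bigl(\varLambda\,\Diag(\varPsi^\top M\varPsi)\bigr)$, and multiplying by $d^{-1}\varPhi(\cdot)\varPhi^\top$ yields the first claim.

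The second identity is entirely analogous: I would write $G^\top N G = d^{-1}\,\varPsi\,\cleverbar{G}^\top\,\widetilde N\,\cleverbar{G}\,\varPsi^\top$ with $\widetilde N \defeq \varPhi^\top N\varPhi$, expand $(\cleverbar{G}^\top\widetilde N\cleverbar{G})_{kl} = \sum_{i,j}\cleverbar{G}_{ik}\widetilde N_{ij}\cleverbar{G}_{jl}$, and apply the same moment identity. This time the surviving sum is over the \emph{first} index $i$, giving $\ind(k=l)\sum_i\varLambda_{ik}\widetilde N_{ii} = \ind(k=l)\bigl(\varLambda^\top\Diag(\widetilde N)\bigr)_k$, so that $\EE[\cleverbar{G}^\top\widetilde N\cleverbar{G}] = \diag\bigl(\varLambda^\top\Diag(\varPhi^\top N\varPhi)\bigr)$, which is where the transpose on $\varLambda$ enters.

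There is no genuine obstacle here; the computation is routine. The only point requiring care is the index bookkeeping: since $\varLambda$ need not be symmetric, one must track which index of $\varLambda$ is summed against the diagonal of $\widetilde M$ (resp. $\widetilde N$), and it is precisely this that distinguishes $\varLambda$ in the first identity from $\varLambda^\top$ in the second. Interchanging expectation with the finite sums needs no additional justification, as every term is a finite linear combination of the second moments $\varLambda_{ik}$, which are assumed finite.
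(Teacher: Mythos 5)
Your proof is correct, and it is precisely the "direct computation" that the paper invokes while omitting the details for brevity. The entrywise second-moment identity $\EE[\cleverbar{G}_{ik}\cleverbar{G}_{jl}] = \ind(i=j)\ind(k=l)\varLambda_{ik}$ and the resulting index bookkeeping that produces $\varLambda$ versus $\varLambda^\top$ are exactly right.
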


This lemma can be computed by direct computation and we omit the proof for brevity. 
Besides, to simplify the notation, 
we define 
\[
    \operYtoXbeta\circ M \deleq 1/ d \cdot  \diag\big(\varLambda \: \Diag(\varPsi^\top M \varPsi) \big), \quad 
    \operXtoYbeta\circ N \deleq 1/ d \cdot  \diag\big( \varLambda^\top \: \Diag(\varPhi^\top N \varPhi) \big).
\] 
as operators that map a matrix $M\in\RR^{d_y\times d_y}$ or $N\in\RR^{d\times d}$ to another diagonal matrix.
In particular, if $M=\varPsi \cleverbar M \varPsi$ takes $\varPsi$ as its eigenvector matrix and $\cleverbar M$ as a diagonal matrix, then $\operYtoXbeta\circ M = \diag(\varLambda \vec(\cleverbar M))/d$.
By this lemma, we have 
$
\EE [{G} U_Y^\top {G}^\top \given q ] =  \operYtoXbeta\circ U_Y^\top. 
$
Thus, we  further have 
\begin{align}\label{eq:AXsignal_diag}
    \AXsignal^\h
    &= -\EE\left[
        \left( \varPhi\, \dsbr{(\cleverbar \cT_l \, \cleverbar q \,\cleverbar q^\top \cleverbar\cT_m ) \circ P_{ml}}  \varPhi^\top + \dsbr{P_{ll}  \cdot I_{d}} \right)
        \varPhi \bigl(\operYtoXbeta\circ U_Y^\top\bigr) \varPhi^\top q  q^\top 
    \right] \nend 
    &= -\varPhi\, \EE\left[
        \left(\dsbr{(\cleverbar \cT_l \, \cleverbar q \,\cleverbar q \cleverbar\cT_m ) \circ P_{ml}} + \dsbr{P_{ll}  \cdot I_{d}} \right)
        (\operYtoXbeta\circ U_Y^\top)\, \cleverbar q \, \cleverbar q^\top 
    \right] \varPhi^\top.  
\end{align}
Note that $   (\cleverbar \cT_l \, \cleverbar q \,\cleverbar q \cleverbar\cT_m) \circ P_{ml}   $, $P_{ll}  \cdot I_{d}$, and $\operYtoXbeta\circ U_Y^\top$ are all diagonal matrices.  Hence, $\varPhi^\top \AXsignal^\h \varPhi $ is equal to the expectation of the product of a diagonal matrix and a rank-one matrix $\cleverbar q \, \cleverbar q^\top$, where the diagonal matrix is a function of $\cleverbar q$.

We now introduce a 
key observation, which is a direct result of the inner product structure of the attention scores and the rotation invariance of the distribution of the covariate $x_l$. 
\begin{lemma}
    \label[lemma]{fact:p is a function of Wq's 2-norm}
    Consider $g(s, \tilde s)$ as an arbitrary function of the attention scores $s$ and $\tilde s$.
    Then,  $\EE[g(s, \tilde s)\given q]$, viewed as a function of $\cleverbar q$, is a function of only $\langle \omega^{\odot 2}, \cleverbar q^{\odot 2} \rangle$, $ \langle \tilde\omega^{\odot 2}, \cleverbar q^{\odot 2}\rangle$, and $\langle \omega \odot \tilde\omega, \cleverbar q^{\odot 2}\rangle$. 
    Here $ \omega$ and $\tilde\omega$ are the eigenvalues of $W_X$ and $\tilde W_X$, respectively.  Here $\omega^{\odot 2}$ denotes the elementwise square of $\omega$, i.e., $\omega^{\odot 2}$ is a vector in $\RR^{d}$ with the $i$-th entry being $\omega_i^2$.
\end{lemma}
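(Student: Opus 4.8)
The plan is to show that, conditioned on $q$, the entire joint law of the attention scores $(s,\tilde s)$ is the $L$-fold product of one fixed centered bivariate Gaussian whose $2\times 2$ covariance matrix is parametrized exactly by the three inner products appearing in the statement; the lemma then follows immediately, since $\EE[g(s,\tilde s)\mid q]$ is just an integral against this product law.

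First I would use the Decomposability Condition. By \Cref{cond:orthogonal} we have $W_Y^\h = \tilde W_Y^\h = 0$, so the attention scores depend on $z_q$ only through its covariate block $q$, and by \Cref{cond:U_Y and W_X aligned} the blocks $W_X$ and $\tilde W_X$ are simultaneously diagonalizable by $\varPhi$, i.e.\ $W_X = \varPhi\diag(\omega)\varPhi^\top$ and $\tilde W_X = \varPhi\diag(\tilde\omega)\varPhi^\top$. Next I would invoke the rotational invariance of the Gaussian covariates: since $x_l\iidfrom\cN(0,I_d)$ and $\varPhi$ is orthogonal, the rotated covariates $\overline{x}_l \defeq \varPhi^\top x_l$ are again i.i.d.\ $\cN(0,I_d)$. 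Writing $\overline{q} = \varPhi^\top q$, the $l$-th scores become $s_l = x_l^\top W_X q = \langle \overline{x}_l,\ \omega\odot\overline{q}\rangle$ and $\tilde s_l = \langle \overline{x}_l,\ \tilde\omega\odot\overline{q}\rangle$.

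Then I would note that, conditionally on $q$, each pair $(s_l,\tilde s_l)$ is a fixed linear image of the standard Gaussian vector $\overline{x}_l$, hence jointly Gaussian with mean zero and covariance $\mathrm{Var}(s_l) = \|\omega\odot\overline{q}\|_2^2 = \langle\omega^{\odot 2},\overline{q}^{\,\odot 2}\rangle$, $\mathrm{Var}(\tilde s_l) = \langle\tilde\omega^{\odot 2},\overline{q}^{\,\odot 2}\rangle$, and $\mathrm{Cov}(s_l,\tilde s_l) = \langle\omega\odot\tilde\omega,\overline{q}^{\,\odot 2}\rangle$; moreover the pairs $\{(s_l,\tilde s_l)\}_{l\in[L]}$ are mutually independent because the $\overline{x}_l$ are. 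Consequently the conditional law of $(s,\tilde s)\in\RR^L\times\RR^L$ given $q$ is the product of $L$ copies of a single centered bivariate Gaussian whose covariance matrix is a function of only those three scalars, so $\EE[g(s,\tilde s)\mid q]$ is a function of $\overline{q}$ only through $\langle\omega^{\odot 2},\overline{q}^{\,\odot 2}\rangle$, $\langle\tilde\omega^{\odot 2},\overline{q}^{\,\odot 2}\rangle$, and $\langle\omega\odot\tilde\omega,\overline{q}^{\,\odot 2}\rangle$.

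There is no real obstacle here: the argument is essentially a two-line computation once the bookkeeping is arranged. The only points requiring a moment of care are (i) correctly invoking the Decomposability Condition both to put $W_X,\tilde W_X$ in the common eigenbasis $\varPhi$ and to discard the response-block contributions, so that the scores genuinely depend on $z_q$ only through $\overline q$; and (ii) the standard measurability/integrability proviso ensuring $\EE[g(s,\tilde s)\mid q]$ is well-defined, which is assumed throughout. Degeneracy of the covariance matrix (when $\overline q$ meets the kernel directions of $\omega$ or $\tilde\omega$) causes no trouble, since the conditional law remains a measurable function of the three inner products regardless.
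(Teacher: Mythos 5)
Your proof is correct and takes essentially the same approach as the paper: both factorize the conditional law of $(s,\tilde s)$ over the tokens and use rotational invariance of the Gaussian covariates to show each pair $(s_l,\tilde s_l)$ has a law determined by the Gram matrix of $\diag(\omega)\cleverbar q$ and $\diag(\tilde\omega)\cleverbar q$, whose entries are exactly the three inner products in the statement. The only cosmetic difference is that you name the bivariate Gaussian explicitly, while the paper argues via an orthogonal decomposition of $\tilde\nu$ against $\nu$; the content is identical.
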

\begin{proof}(\emph{Proof of \Cref{fact:p is a function of Wq's 2-norm}})
    See \S\ref{proof:fact:p is a function of Wq's 2-norm} for a detailed proof.
\end{proof}

We remark that \Cref{fact:p is a function of Wq's 2-norm} shows that when $f$ is a  function that simultaneously depends on two attention scores $s$ and $\tilde s$, the conditional expectation $\EE[f(s, \tilde s)\given q]$ is a function of only the second-order moments of $\cleverbar q$, i.e., $\{\cleverbar q_j^2 \}_{j\in[d]}$. 
This fact is a result of  the rotational invariance of the covariate distribution. 
This lemma directly implies that $\varPhi^\top \AXsignal^\h \varPhi $ is a diagonal matrix, which is shown in the following lemma.

\begin{lemma} \label[lemma]{lemma_diagonal_qqt}
    Let $g(\barq) = \prod_{i=1}^d \barq_i^{c_i}$   be a polynomial of $\barq$ with $c_i \in \NN$. 
    Then, if $\sum_{i=1}^d c_i$ is odd,
    we have $\EE[g(\barq)\given \barq^{\odot 2}]=0$. 
    In particular, 
    $\EE [ \barq \barq^\top\given \barq^{\odot 2}]$ is a diagonal matrix.
\end{lemma}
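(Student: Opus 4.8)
The plan is to prove a purely distributional fact: if $g(\barq) = \prod_{i=1}^d \barq_i^{c_i}$ is a monomial with total degree $\sum_i c_i$ odd, then $\EE[g(\barq) \mid \barq^{\odot 2}] = 0$, and consequently $\EE[\barq\barq^\top \mid \barq^{\odot 2}]$ is diagonal. First I would recall that $\barq = \varPhi^\top q$ with $q \sim \cN(0, I_d)$, and since $\varPhi$ is orthogonal, $\barq \sim \cN(0, I_d)$ as well; in particular the coordinates $\barq_1, \ldots, \barq_d$ are i.i.d.\ standard Gaussians. The key structural observation is that conditioning on $\barq^{\odot 2} = (\barq_1^2, \ldots, \barq_d^2)$ fixes each $|\barq_i|$ but leaves the sign $\mathrm{sgn}(\barq_i) \in \{-1, +1\}$ undetermined, and by symmetry of the Gaussian density these signs are, conditionally on $\barq^{\odot 2}$, independent and uniform on $\{-1,+1\}$ (and independent across $i$).

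The main step is then a sign-flip argument. Fix the values of $\barq^{\odot 2}$, write $\barq_i = \epsilon_i \sqrt{\barq_i^2}$ where $\epsilon_i$ are the conditional signs, and note
\begin{align*}
    g(\barq) = \prod_{i=1}^d \barq_i^{c_i} = \left(\prod_{i=1}^d \epsilon_i^{c_i}\right) \prod_{i=1}^d (\barq_i^2)^{c_i/2}.
\end{align*}
The second factor is a deterministic function of $\barq^{\odot 2}$, so
\begin{align*}
    \EE[g(\barq) \mid \barq^{\odot 2}] = \left(\prod_{i=1}^d (\barq_i^2)^{c_i/2}\right) \cdot \EE\!\left[\prod_{i=1}^d \epsilon_i^{c_i} \;\middle|\; \barq^{\odot 2}\right] = \left(\prod_{i=1}^d (\barq_i^2)^{c_i/2}\right) \prod_{i=1}^d \EE[\epsilon_i^{c_i}],
\end{align*}
using conditional independence of the $\epsilon_i$. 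Now $\EE[\epsilon_i^{c_i}] = 1$ if $c_i$ is even and $\EE[\epsilon_i^{c_i}] = \EE[\epsilon_i] = 0$ if $c_i$ is odd. Since the total degree $\sum_i c_i$ is odd, at least one $c_i$ must be odd, so the product vanishes. This gives $\EE[g(\barq) \mid \barq^{\odot 2}] = 0$.

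For the final assertion, the $(i,j)$ entry of $\EE[\barq\barq^\top \mid \barq^{\odot 2}]$ is $\EE[\barq_i \barq_j \mid \barq^{\odot 2}]$, which for $i \neq j$ is the conditional expectation of the monomial $\barq_i \barq_j$ of total degree $2$ — wait, that is even, so I should instead argue directly: $\EE[\barq_i\barq_j \mid \barq^{\odot 2}] = \sqrt{\barq_i^2}\sqrt{\barq_j^2}\,\EE[\epsilon_i \epsilon_j] = \sqrt{\barq_i^2}\sqrt{\barq_j^2}\,\EE[\epsilon_i]\EE[\epsilon_j] = 0$ for $i \neq j$ by independence and mean-zero of the signs, while the diagonal entries are $\barq_i^2$, which are $\barq^{\odot 2}$-measurable. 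Hence $\EE[\barq\barq^\top \mid \barq^{\odot 2}] = \diag(\barq^{\odot 2})$ is diagonal. I do not anticipate a genuine obstacle here; the only point requiring a little care is justifying that the conditional law of the sign vector given $\barq^{\odot 2}$ is the uniform product measure on $\{-1,+1\}^d$ — this follows from the product form and evenness of the Gaussian density, and could alternatively be dispatched by a direct change-of-variables computation of $\EE[g(\barq) h(\barq^{\odot 2})]$ against an arbitrary test function $h$, exploiting that $\barq \mapsto (\epsilon_1 \barq_1, \ldots, \epsilon_d \barq_d)$ preserves the standard Gaussian measure for any fixed sign pattern.
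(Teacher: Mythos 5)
Your proof is correct and uses the same conditional sign-symmetry argument as the paper's (each $\barq_i$ conditionally takes the values $\pm|\barq_i|$ with equal probability, independently across $i$), just spelled out in more detail. You also correctly notice that the ``in particular'' claim about $\EE[\barq\barq^\top\given\barq^{\odot 2}]$ does not follow literally from the odd-total-degree hypothesis — the off-diagonal monomial $\barq_i\barq_j$ has even total degree $2$ — and you supply the right fix: it suffices that at least one exponent $c_i$ is odd, which holds here since $c_i=c_j=1$; the paper glosses over this point.
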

\begin{proof} (\emph{Proof of \Cref{lemma_diagonal_qqt}})
Note that when conditioned on $\barq^{\odot 2}$, each entries of $\barq$ are $\iid$ and takes values $-|\barq_i|$ and $|\barq_i|$ with equal probability.
Hence, by symmetry, we have $\EE[g(\barq)\given \barq^{\odot 2}]=0$ if $\sum_{i=1}^d c_i$ is odd.
Therefore, we prove this lemma. 
\end{proof}

Applying \eqref{eq:AXsignal_diag} and \Cref{lemma_diagonal_qqt} to $\AXsignal^\h$, we have
\begin{align*}
    \AXsignal^\h 
    & = -\varPhi\, \EE\left[
        \left(\dsbr{(\cleverbar \cT_l \, \cleverbar q \,\cleverbar q^\top \cleverbar\cT_m ) \circ P_{ml}} + \dsbr{P_{ll}  \cdot I_{d}} \right)
        (\operYtoXbeta\circ U_Y^\top)\, \cleverbar q \, \cleverbar q^\top 
    \right] \varPhi^\top \nend 
    & = -\varPhi\, \EE\left[\EE\left[
         \dsbr{(\cleverbar \cT_l \, \cleverbar q \,\cleverbar q^\top \cleverbar\cT_m ) \circ P_{ml} \cdot (\operYtoXbeta\circ U_Y^\top)\, \cleverbar q \, \cleverbar q^\top}  + \dsbr{P_{ll}  \cdot I_{d}} (\operYtoXbeta\circ U_Y^\top)\, \cleverbar q \, \cleverbar q^\top 
        \Biggiven \barq^{\odot 2}
    \right] \right] \varPhi^\top. 
\end{align*}
Note that the second term is a diagonal matrix by directly applying \Cref{lemma_diagonal_qqt}.
We note that the first term is also a diagonal matrix since $\cleverbar q^\top \barcT_m  \circ P_{ml} (\operYtoXbeta \circ U_Y^\top) \barq = \langle \Diag(\barcT_m^\top \circ P_{ml} (\operYtoXbeta \circ U_Y^\top)), \barq^{\odot 2}\rangle$ 
is a real number and only depends on $\barq^{\odot 2}$.
The remaining term $\EE[\barq \barq^\top\given \barq^{\odot 2}]$ clearly gives us a diagonal matrix.
Hence, we claim that $\AXsignal^\h$ is diagonal.
We will leverage the same technique to simplify $\AXinterference^\handhprime$ and $\AXnoise^\handhprime$, respectively.

For $\AXinterference^\handhprime$, by the first  identity of   \Cref{fact:expectation terms},  we have
\begin{align*}
    \AXinterference^\handhprime &= \EE\Bigl[
        X {P^\h}^\top X^\top {G}  {U_Y^\h}^\top U_Y^\hprime {G}^\top X p^\hprime q^\top\Bigr] \nend 
        &= \EE\Bigl[  \Bigl(\bigl(
            \dsbr{\cT_l \circ (P_{ml}\tilde p_m)} \otimes I_{d}  +  I_{d} \otimes \dsbr{\cT_l \circ (P_{mm}\tilde p_l)} + (I_{d}\otimes \dsbr{\cT_l \circ (P_{lm}\tilde p_m)})^{\top(132)}   \nend 
            &\qqquad   + \dsbr{(\cT_l \otimes \cT_m \otimes \cT_n ) \circ (P_{ml}\tilde p_n)}
        \bigr) \cddot \bigl({G} \tilde U_Y^\top U_Y {G}^\top\bigr)\Bigr) \otimes q
        \Bigr].
\end{align*}
Recall that ``$\cddot$'' denotes the double inner product of two tensors. 
In particular, we have the following relationships for the double inner product. 

\begin{lemma} \label[lemma] {lemma:tensor_double_inner}
    Let $N, M\in\RR^{d\times d}$ be two matrices and  $u, v, w, x\in\RR^{d }$ be four vectors. 
    Then we have
    \begin{align*}
        v \otimes N \cddot M \otimes u  = vu^\top \cdot \trace(NM),&  \qquad (I_{d} \otimes v) \cddot M = (v^\top M)^\top, \\
        (I_{d}\otimes v)^{\top(132)} \cddot M = M v, & \qquad  u\otimes v \otimes w \cddot M \otimes x = (w^\top M v) u x^\top .
    \end{align*}
\end{lemma}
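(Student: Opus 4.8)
The plan is to prove all four identities by a direct coordinate computation from the definition $A \cddot B = \sum_{k,l} A_{:,k,l}\, B_{l,k,:}$, in which the leading slot of $A$ and the trailing slot of $B$ are the free (output) dimensions and the pair $(k,l)$ is contracted with a built-in swap. Before doing this I would fix two pieces of bookkeeping: first, the convention that when the second argument is a matrix $M$ (a second-order tensor) the trailing free slot is simply absent, so the output of $\cddot$ is a vector rather than a matrix; and second, the meaning of the transpose $(\cdot)^{\top(132)}$, which permutes the second and third slots of a third-order tensor. With these conventions pinned down, each identity reduces to a one-line summation.

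Concretely, for the first identity I would write $(v\otimes N)_{ikl}=v_i N_{kl}$ and $(M\otimes u)_{lkj}=M_{lk}u_j$, so that $\bigl(v\otimes N \cddot M\otimes u\bigr)_{ij}=\sum_{k,l} v_i N_{kl} M_{lk} u_j = v_i u_j\cdot\trace(NM)$, which is exactly $vu^\top\cdot\trace(NM)$; the fourth identity is the same computation with $(u\otimes v\otimes w)_{ikl}=u_i v_k w_l$, giving $\sum_{k,l} u_i v_k w_l M_{lk} x_j = u_i x_j\,(w^\top M v)$, i.e. $(w^\top M v)\,u x^\top$. For the two middle identities I would use $(I_d\otimes v)_{ikl}=(I_d)_{ik}v_l$, so $\bigl((I_d\otimes v)\cddot M\bigr)_i=\sum_{k,l}(I_d)_{ik} v_l M_{lk}=\sum_l v_l M_{li}=(v^\top M)_i$, which is $(v^\top M)^\top$ as a column vector; and after applying the $(132)$-transpose the entry in slots $(i,a,b)$ becomes $(I_d)_{ib}v_a$, so $\bigl((I_d\otimes v)^{\top(132)}\cddot M\bigr)_i=\sum_{a,b}(I_d)_{ib}v_a M_{ba}=\sum_a M_{ia}v_a=(Mv)_i$.

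The main obstacle here is purely notational rather than mathematical: one must be consistent about the ordering of the factors in the Kronecker products (so that the contracted indices of $A$ and $B$ line up in the order demanded by the definition of $\cddot$, including its implicit index swap) and about which slots the $(132)$-transpose permutes. Once the index layout is fixed in the preliminary step, verifying the four identities is routine and each is a single line of summation, so no further estimates or auxiliary lemmas are needed.
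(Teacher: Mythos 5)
Your proposal is correct: all four identities follow exactly as you compute them from the definition $A\cddot B=\sum_{k,l}A_{:,k,l}B_{l,k,:}$, and your handling of the index swap in the contraction and of the $(132)$-transpose matches the paper's conventions. The paper itself omits any proof of this lemma, treating it as a routine direct computation, so your coordinate-level verification is precisely the argument being left implicit.
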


Combining the first identity in  \Cref {lemma:tensor_double_inner}  and  \Cref{fact:beta M beta}  to the expression of $\AXinterference^\handhprime$, we have
\begin{align} \label{eq:decompose_double_inner1}
  &  \EE \big [  \dsbr{\cT_l \circ (P_{ml}\tilde p_m)} \otimes I_{d} \cddot \bigl({G} \tilde U_Y^\top U_Y {G}^\top \bigr) \otimes q  \bigr ]  \notag\\
   &\qquad   =   \EE \big [ \dsbr{\cT_l \circ (P_{ml}\tilde p_m)} q^\top \cdot \trace\bigl(\varPhi \operYtoXbeta \circ(\tilde U_Y^\top U_Y) \varPhi^\top\bigr) \bigr ]  \notag \\
   &\qquad  =  \varPhi \, \EE \big [ \dsbr{\cleverbar\cT_l \circ (P_{ml}\tilde p_m)} \,\cleverbar q\, \cleverbar q^\top \trace\bigl( \operYtoXbeta \circ(\tilde U_Y^\top U_Y) \bigr) \bigr ] \varPhi^\top,  
\end{align}
where $\cleverbar q = \varPhi^\top q $. 
Besides, 
using the second identity in  \Cref {lemma:tensor_double_inner}  and  \Cref{fact:beta M beta}, we have 
\begin{align} \label{eq:decompose_double_inner2}
    &  \EE \big [  I_{d} \otimes \dsbr{\cT_l \circ (P_{mm}\tilde p_l)} \cddot \bigl({G} \tilde U_Y^\top U_Y {G}^\top \bigr) \otimes q  \bigr ]  \notag \\
    &\qquad   =   \EE \big [ \varPhi \operYtoXbeta \circ(\tilde U_Y^\top U_Y) \varPhi^\top \dsbr{\cT_l \circ (P_{mm}\tilde p_l)} q^\top  \bigr ]  \notag \\
    &\qquad  =  \varPhi \,\EE \big [  \operYtoXbeta \circ(\tilde U_Y^\top U_Y)  \dsbr{\cleverbar \cT_l \circ (P_{mm}\tilde p_l)}\cleverbar q  \cleverbar q^\top  \bigr ]\varPhi   , 
\end{align} 
where the second equality follows from \eqref{eq:relation_T_Tbar}.
Note   that $\dsbr{\cT_l \circ (P_{mm}\tilde p_l)}$ is a vector in $\RR^{d}$ and $\operYtoXbeta\circ(\tilde U_Y^\top U_Y)$ is a diagonal matrix in $\RR^{d\times d}$. 
Moreover,  by the third identity in  \Cref {lemma:tensor_double_inner}  and  \Cref{fact:beta M beta}, we have
\begin{align} \label{eq:decompose_double_inner3}
    &  \EE \big [  I_{d} \otimes \dsbr{\cT_l \circ (P_{lm}\tilde p_m)}^{\top(132)} \cddot \bigl({G} \tilde U_Y^\top U_Y {G}^\top \bigr) \otimes q  \bigr ]  \notag\\
    &\qquad   =   \EE \big [ \varPhi \operYtoXbeta \circ(\tilde U_Y^\top U_Y) \varPhi^\top \dsbr{\cT_l \circ (P_{lm}\tilde p_m)} q^\top  \bigr ]  \notag \\
    &\qquad  =  \varPhi \,\EE \big [  \operYtoXbeta \circ(\tilde U_Y^\top U_Y)  \dsbr{\cleverbar \cT_l \circ (P_{lm}\tilde p_m)}\cleverbar q  \cleverbar q^\top  \bigr ]\varPhi   , 
\end{align}
where the second equality follows from \eqref{eq:relation_T_Tbar}.
Finally, note that by the definition of $ \cT_l$ in \eqref{eq:define_T_l},
  $ \cT_l \otimes \cT_m \otimes \cT_n \circ (P_{ml}\tilde p_n)$ is a rank-one third-order tensor. By 
\Cref{fact:beta M beta} and the last identity in \Cref {lemma:tensor_double_inner}, we have
\begin{align}\label{eq:decompose_double_inner4}
    &  \EE \big [  \dsbr{\cT_l \otimes \cT_m \otimes \cT_n \circ (P_{ml}\tilde p_n)} \cddot \bigl({G} \tilde U_Y^\top U_Y {G}^\top \bigr) \otimes q  \bigr ]  \notag\\
    &\qquad   =   \EE \Big [  \dsbr{\bigl( \cT_l   \cT_n^\top \bigl(\operYtoXbeta \circ(\tilde U_Y^\top U_Y) \bigr)  \cT_m \bigr) \circ (P_{ml}\tilde p_n)}    q^\top   \Bigr ]  \notag \\
    &\qquad  =  \varPhi \,\EE \Big [  \dsbr{\bigl(\cleverbar\cT_l \cleverbar q \cleverbar q^\top \cleverbar\cT_n \bigl(\operYtoXbeta \circ(\tilde U_Y^\top U_Y) \bigr) \cleverbar\cT_m \bigr) \circ (P_{ml}\tilde p_n)} \cleverbar q \cleverbar q^\top   \Bigr ] \varPhi   ,
\end{align}
where   the second equality is obtained by applying    \eqref{eq:relation_T_Tbar} for three times.

Thus, combining \eqref{eq:decompose_double_inner1}--\eqref{eq:decompose_double_inner4},  we have for $\AXinterference^\handhprime$ that
\begin{align*}
    \AXinterference^\handhprime
    &
    =\varPhi \, \EE\left[ 
        \dsbr{\cleverbar\cT_l \circ (P_{ml}\tilde p_m)} \,\cleverbar q\, \cleverbar q^\top \trace\bigl( \operYtoXbeta \circ(\tilde U_Y^\top U_Y) \bigr) 
        +  \operYtoXbeta \circ(\tilde U_Y^\top U_Y) \dsbr{\cleverbar\cT_l \circ (P_{mm}\tilde p_l)} \,\cleverbar q\, \cleverbar q^\top \right.\nend    
    &\qquad  \left.
        + \operYtoXbeta \circ(\tilde U_Y^\top U_Y) \dsbr{\cleverbar\cT_l \circ (P_{lm}\tilde p_m)}  \cleverbar q  \cleverbar q^\top 
        + \dsbr{\bigl(\cleverbar\cT_l \cleverbar q \cleverbar q^\top \cleverbar\cT_n \bigl(\operYtoXbeta \circ(\tilde U_Y^\top U_Y) \bigr) \cleverbar\cT_m \bigr) \circ (P_{ml}\tilde p_n)} \cleverbar q \cleverbar q^\top 
    \right] \varPhi^\top.
\end{align*}
Here, we treat  each $\cT_l$ as a vector and $\cleverbar\cT_l$ as a diagonal matrix.
Note that $\operYtoXbeta\circ(\tilde U_Y^\top U_Y)$ is a diagonal matrix. 
Thus, the terms with $\dsbr{\cdot }$ in \eqref{eq:decompose_double_inner1}, \eqref{eq:decompose_double_inner2}, and \eqref{eq:decompose_double_inner3} are all diagonal matrices.
Moreover, they are functions of $s$ and $\tilde s$. Thus, for these three terms, by  \Cref{fact:p is a function of Wq's 2-norm} and \Cref{lemma_diagonal_qqt}, we can first take a conditional expectation with respect to $\cleverbar q^{\odot 2}$ and prove that they are all diagonal matrices. 
Moreover, in \eqref{eq:decompose_double_inner4}, the term 
$\cleverbar\cT_n \bigl(\operYtoXbeta \circ(\tilde U_Y^\top U_Y) \bigr) \cleverbar\cT_m  $ is a diagonal matrix as it is the product of three diagonal matrices.
As a result, 
${\textstyle \dsbr{ (\cleverbar\cT_l \cleverbar q \cleverbar q^\top \cleverbar\cT_n (\operYtoXbeta \circ(\tilde U_Y^\top U_Y)  ) \cleverbar\cT_m  ) \circ (P_{ml}\tilde p_n)} \cleverbar q \cleverbar q^\top}$ involves fourth-order moments of $\cleverbar q$.  By applying  \Cref{fact:p is a function of Wq's 2-norm}  and   \Cref{lemma_diagonal_qqt}, and taking conditional expectation given $\cleverbar q^{\odot 2}$, we can similarly argue that this term is a diagonal matrix.
Therefore, we conclude that $\AXinterference^\handhprime$ is a diagonal matrix.

Finally, for $\AXnoise^\handhprime$, by the last   identity of   \Cref{fact:expectation terms} and the first identity of \Cref{lemma:tensor_double_inner},  we have
\begin{align*}
    \AXnoise^\handhprime 
    &= \EE\Bigl[
        X P^\top  \varepsilon^\top
        U_Y^\top \tilde U_Y\varepsilon
        \tilde p q^\top
    \Bigr] = \EE\left[
        \left(\dsbr{x_l \otimes \varepsilon_m \otimes \varepsilon_n \cdot (P_{ml}\tilde p_n)} \cddot 
        \tilde U_Y U_Y^\top\right)  q^\top
    \right] \nend
    & = \sigma^2 \EE\Bigl[\trace\bigl(\tilde U_Y U_Y^\top\bigr)  \cdot 
        \dsbr{ (P_{ml}\tilde p_m) \cdot x_{l} } \cdot  
        q^\top  
    \Bigr]   = \sigma^2 \EE\Bigl[\trace\bigl(\tilde U_Y U_Y^\top\bigr)  \cdot 
        \dsbr{\cT_l \circ (P_{ml}\tilde p_m)} 
        q^\top
    \Bigr] 
    \\
    &= \sigma^2 \cdot \varPhi \, \EE\Bigl[\trace(\tilde U_Y U_Y^\top) \cdot  \dsbr{\cleverbar\cT_l \circ (P_{ml}\tilde p_m)} 
         \,\cleverbar q\,\cleverbar q^\top
    \Bigr] \varPhi^\top, 
\end{align*}
where the second  equality follows from direct computation; in the  third equality, 
we use the fact that $\varepsilon_l \iidfrom \mathcal{N}(0, \sigma^2 I_{d_y})$ and the fact that $I \cddot M = \trace(M)$; in the fourth equality we apply \Cref{fact:expectation terms}; the last equality follows from \eqref{eq:relation_T_Tbar}.
Similarly, applying  \Cref{fact:p is a function of Wq's 2-norm}  and and \Cref{lemma_diagonal_qqt}, we prove that $\AXnoise^\handhprime$ is a diagonal matrix.

In summary, we have for $A_{XX}^\h$ that
\begin{align*}
    \varPhi^\top \!\! A_{XX}^{(h)} \varPhi 
    &= \underbrace{-\EE\left[
        \left(\dsbr{(\cleverbar \cT_l  \cleverbar q  \cleverbar q^\top \cleverbar\cT_m^\top) \circ P_{ml}} + \dsbr{P_{ll} I_{d}}\right)
        (\operYtoXbeta\circ U_Y^\top)  \cleverbar q   \cleverbar q^\top 
    \right]}_{\dr Signal}  \notag \\
    & \qquad+  \underbrace{\sum_{h'=1}^H \sigma^2 \EE\Bigl[\trace(\tilde U_Y U_Y^\top) \dsbr{\cleverbar\cT_l \circ (P_{ml}\tilde p_m)} 
     \cleverbar q \cleverbar q^\top
    \Bigr]}_{\dr Noise}\nend 
    &\qquad + \sum_{h'=1}^H \EE\Bigl[ 
        \dsbr{\cleverbar\cT_l \circ (P_{ml}\tilde p_m)} \,\cleverbar q\, \cleverbar q^\top \trace\bigl( \operYtoXbeta \circ(\tilde U_Y^\top U_Y) \bigr) 
        +  \operYtoXbeta \circ(\tilde U_Y^\top U_Y) \dsbr{\cleverbar\cT_l \circ (P_{mm}\tilde p_l)} \,\cleverbar q\, \cleverbar q^\top \nend    
    &\autoquad{2} 
        \underbrace{\quad + \operYtoXbeta \circ(\tilde U_Y^\top U_Y) \dsbr{\cleverbar\cT_l \circ (P_{lm}\tilde p_m)} \cleverbar q \cleverbar q^\top 
        + \dsbr{\bigl(\cleverbar\cT_l   \cleverbar q \cleverbar q^\top \cleverbar\cT_n^\top \bigl(\operYtoXbeta \circ(\tilde U_Y^\top U_Y) \bigr) \cleverbar\cT_m \bigr) \circ (P_{ml}\tilde p_n)} \cleverbar q \cleverbar q^\top 
    \Bigr]}_{\dr Interference}.
\end{align*}
{\color{blue} We prove that, when $\Theta$ satisfies Decomposability Condition, 
only the top left block of $A^\h$, $A_{XX}^\h$, is non-zero.
Moreover, 
  $\varPhi^\top A_{XX}^\h \varPhi$ is  a diagonal matrix.}

\subsubsection*{Simplification of $B_Y^\h$ with Gaussian Covariate.}
\label{sec:Stein B_Y}
Recall that in \eqref{eq:by_as_sum} we show that $B_Y^\h$ can be decomposed into three terms: $\BYsignal^\h$, $\BYinterference^\handhprime$, and $\BYnoise^\handhprime$, which are defined in \eqref{eq:define_B_signal}  and \eqref{eq:define_B_signal}. 
In the following, we prove that each term can be diagonalized by $\varPsi$.

For $\BYsignal^\h$, by the last   identity of   \Cref{fact:expectation terms} and \Cref{fact:beta M beta}, we have
\begin{align*}
    \BYsignal^\h &  = -\EE\left[{G}^\top q p^\top X^\top{G}\right] = -\varPsi \EE\left[
        \operXtoYbeta \circ \left(q \dsbr{\cT_l \circ p_l}^\top\right)
    \right] \varPsi^\top  \notag \\
    & 
    = -\varPsi \EE\left[
        \operXtoYbeta \circ \left(\varPhi \, \cleverbar q\, \cleverbar q^\top \dsbr{\cleverbar\cT_l \circ p_l}^\top \varPhi^\top \right)
    \right] \varPsi^\top.
\end{align*}
where the   last equality holds by \eqref{eq:relation_T_Tbar} and the definition $\cleverbar q = \varPhi^\top q $.
Note that the operator $\operXtoYbeta $ always produces a diagonal matrix, we prove that 
$\varPsi ^\top \BYsignal^\h \varPsi  $ is a diagonal matrix.
For $ \BYinterference^\handhprime$, by the second  identity of   \Cref{fact:expectation terms} and \Cref{fact:beta M beta},  we have
\begin{align*}
    \BYinterference^\handhprime 
    &= \EE\left[\tilde U_Y {G}^\top X \tilde p p^\top X^\top {G}\right] \nend 
    &= \varPsi \EE\left[
        \diag(\tilde\mu) \operXtoYbeta \circ \left(
            \dsbr{(\cT_l \cT_m^\top) \circ (\tilde p_l p_m)} 
            + \dsbr{\tilde p_l p_l} \cdot I_{d}
        \right)
    \right] \varPsi^\top 
    \mytag{\Cref{fact:expectation-order2}}
    \nend 
    & = \varPsi \EE\left[
        \diag(\tilde\mu) \operXtoYbeta \circ \left( \varPhi \left(
            \dsbr{(\cleverbar\cT_l \, \cleverbar q \, \cleverbar q^\top \cleverbar\cT_m ) \circ (\tilde p_l p_m)} 
            + \dsbr{\tilde p_l p_l} \cdot I_{d} \right) \varPhi^\top
        \right)
    \right] \varPsi^\top. 
\end{align*}
Here, the second equality also holds by \Cref{cond:U_Y and W_X aligned} that $\tilde U_Y = \varPsi \diag(\tilde\mu) \varPsi^\top$ and the last equality follows from \eqref{eq:relation_T_Tbar}.
Similarly, thanks to  the operator $\operXtoYbeta$, $\varPsi ^\top \BYinterference^\handhprime \varPsi  $ is a diagonal matrix.
Finally, for $\BYnoise^\handhprime$, we have
\begin{align*}
    \BYnoise^\handhprime &= \EE\left[\tilde U_Y \varepsilon \, \tilde p \, p^\top \varepsilon^\top \right]
    = \sigma^2 \cdot \EE\left[\tilde U_Y \dsbr{I_{d_y} \cdot \tilde p_l p_l} \right] = \sigma^2 \varPsi \EE\left[\diag(\tilde\mu) \dsbr{\tilde p_l p_l} \right] \varPsi^\top,
\end{align*}
where we use  the fact that $\varepsilon_l \iidfrom \mathcal{N}(0, \sigma^2 I_{d_y})$. This matrix is also digonalizable by $\varPsi$ as $\dsbr{\tilde p_l p_l}$ is a scalar.

In summary, we have for $B_Y^\h$ that
\begin{align*}
    \varPsi^\top B_Y^\h \varPsi 
    &= \underbrace{-\EE\left[
        \operXtoYbeta \circ \left(\varPhi \, \cleverbar q\, \cleverbar q^\top \dsbr{\cleverbar\cT_l \circ p_l}^\top \varPhi^\top \right)
    \right]}_{\dr Signal} 
    + \underbrace{\sum_{h'=1}^H \sigma^2  \EE\left[\diag(\tilde\mu) \dsbr{\tilde p_l p_l}\right]}_{\dr Noise}  \nend 
    &\qquad + \underbrace{\sum_{h'=1}^H \EE\left[
        \diag(\tilde\mu) \operXtoYbeta \circ \left( \varPhi \left(
            \dsbr{(\cleverbar\cT_l \, \cleverbar q \, \cleverbar q^\top \cleverbar\cT_m^\top) \circ (\tilde p_l p_m)} 
            + \dsbr{\tilde p_l p_l} \cdot I_{d} \right) \varPhi^\top
        \right)
    \right]}_{\dr Interference}.
\end{align*}
{\color{blue} We prove that, when $\Theta$ satisfies Decomposability Condition, 
only the right block of $B^\h$, $B_Y^\h$, is non-zero.
Moreover, 
$\varPsi^\top B_Y^\h \varPsi$ is  a diagonal matrix.}  
Therefore, we conclude the proof of \Cref{prop:simplify_AB}.

\subsection{Approximation of the Spectral Dynamics}\label{sec:approximation_dynamics}
\paragraph{Additional Notations.}
We follow the same notations as in \Cref{sec:simplify AB with Gaussian covariate}. 
Besides, we denote by $\delta_{ij}$ the Kronecker delta function such that $\delta_{ij} = 1$ if $i=j$ and $0$ otherwise.
Furthermore, we will also encounter graph-theoretic notions. 
For a graph $\cG$, we denote by $\connectedcomponent(\cG)$ the set of connected components of $\cG$.
We denote by $R(v)$ the set of nodes that are reachable from node $v$ in $\cG$.
Note that $R(v)$ is also the element of $\connectedcomponent(\cG)$ that contains $v$.

\subsubsection{Low-Effective Order Approximation to the Derivatives of Softmax Attention Probability}
\label{sec:low-effective-order approximation}
In order to give a rigorous calculation of $A_{XX}^\h$ and $B_Y^\h$, we need to compute the derivatives of the attention probability $p_l$ and $\tilde p_l$ with respect to the attention scores $s$ and $\tilde s$ as implied by the operator $\cleverbar\cT_l$ that 
\[
    \cleverbar\cT_l \circ f \deleq \left(\diag(\omega)  \partial_l  + \diag(\tilde\omega) \tilde\partial_l \right) \circ f.
\]
For simplicity, we replace $\diag(\omega)$ and $\diag(\tilde\omega)$ with $w$ and $\tilde w$ as placeholders, respectively.
With such a replacement,  we have 
$\cleverbar\cT_l \circ f = (w\partial_l  + \tilde w\tilde\partial_l  )f   $.
Also, when considering $\cT_l \circ f$, we just need to replace $w$ and $\tilde w$ with $W q $ and $\tilde W q$ by multiplying back the rotation matrix $\varPhi$ and $\barq = \varPhi^\top q$ according to \eqref{eq:relation_T_Tbar}.

Note that $P = \diag(p) - p p^\top$ for softmax attention probability $p$, which is a symmetric matrix. 
For calculation of $A_{XX}^\h$, we need to compute the following derivatives:
\begin{gather*}
    \dsbr{w^{\otimes 2} \partial_l \partial_m P_{ml}}, 
    \:
    \dsbr{(w\partial_l + \tilde w\tilde\partial_l) P_{mm}\tilde p_l}, \: 
    \dsbr{(w\partial_l + \tilde w\tilde\partial_l) P_{lm}\tilde p_m}, \\
    \dsbr{(w\partial_l + \tilde w\tilde\partial_l)\otimes (w\partial_m + \tilde w\tilde\partial_m)\otimes (w\partial_n + \tilde w\tilde\partial_n) P_{ml}\tilde p_n}.
\end{gather*}
For $B_Y^\h$, we need to compute the following derivatives:
\begin{gather*}
    \dsbr{w\partial_l  p_l}, \:
    \dsbr{(w\partial_l + \tilde w\tilde\partial_l)\otimes (w\partial_m + \tilde w\tilde\partial_m) \tilde p_l p_m}. 
\end{gather*}
We have the following fact about the derivatives of the attention probabilities. 
\begin{lemma}
    \label[lemma]{fact:derivatives}
    We have the following identities involving partial  derivatives of the attention probability $p$ with respect to the attention scores $s$:
    \begin{myenumi}[
        label=(\roman*),
        ref=\Cref{fact:derivatives}(\roman*),
        leftmargin=1.5em
    ]
        \item \label{deriv: 1}
        $\partial_l p_m = P_{ml} = \delta_{lm} p_l - p_l p_m$;
        \item \label{deriv: 2}
        $\partial_{lm}^2 p_n = \partial_l P_{nm} = \delta_{lmn} p_l - \delta_{mn} p_l p_m - \delta_{nl} p_m p_n - \delta_{lm} p_n p_l + 2 p_l p_m p_n$;
        \item \label{deriv: 3}
        $\sum_{l} \partial_l P_{nl} = \sum_{l} \partial_n P_{ll}= \sum_{l} 2 p_n p_l^2 - 2 p_n^2 \delta_{ln} = 2 p_n (\norm{p}_2^2 - p_n)$;
        \item \label{deriv: 4}
        $\sum_{l} \partial_{lm} P_{nl} = \partial_m \left(\sum_{l} \partial_{l} P_{nl}\right) = 2(-2 p_n^2 \delta_{lmn} + p_n p_l^2 \delta_{mn} + 2 p_n p_l^2 \delta_{lm} + 2 p_n^2 p_m \delta_{ln} - 3p_m p_n p_l^2) $;
    \end{myenumi}
\end{lemma}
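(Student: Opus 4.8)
The plan is to verify each of the four identities directly from the defining formula $p_m = e^{s_m}/\sum_{k} e^{s_k}$, working by induction on the order of differentiation. First I would establish \ref{deriv: 1}: compute $\partial_l p_m = \partial_l \bigl(e^{s_m}/\sum_k e^{s_k}\bigr)$ by the quotient rule, yielding $\delta_{lm} e^{s_m}/\sum_k e^{s_k} - e^{s_m} e^{s_l}/(\sum_k e^{s_k})^2 = \delta_{lm} p_l - p_l p_m$, which is exactly $P_{ml}$ since $P = \diag(p) - pp^\top$ is symmetric. This is the base case and the engine for everything else.

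For \ref{deriv: 2} I would differentiate the formula $\partial_l p_n = \delta_{ln} p_n - p_l p_n$ once more with respect to $s_m$, applying \ref{deriv: 1} to each factor: $\partial_m(\delta_{ln} p_n) = \delta_{ln}(\delta_{mn} p_n - p_m p_n)$, and $\partial_m(p_l p_n) = (\delta_{lm} p_l - p_l p_m) p_n + p_l(\delta_{mn} p_n - p_m p_n)$. Collecting terms and using that $\delta_{ln}\delta_{mn} = \delta_{lmn}$ gives the claimed five-term expression $\delta_{lmn} p_l - \delta_{mn} p_l p_m - \delta_{nl} p_m p_n - \delta_{lm} p_n p_l + 2 p_l p_m p_n$; the symmetry of this expression under permuting $l,m,n$ is a useful internal consistency check. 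For \ref{deriv: 3} I would sum \ref{deriv: 2} over $l$ using $\sum_l p_l = 1$ and $\sum_l p_l^2 = \|p\|_2^2$ — wait, more directly, sum $\partial_l P_{nl} = \partial_{ln}^2 p_? $; actually the cleanest route is to sum $\partial_l P_{nl}$ where $P_{nl} = \delta_{nl} p_n - p_n p_l$, so $\partial_l P_{nl}$ requires differentiating with respect to $s_l$ the expression $\delta_{nl}p_n - p_n p_l$, then summing over $l$; using $\sum_l \partial_l p_n = \sum_l(\delta_{ln}p_n - p_n p_l) = p_n - p_n = 0$ and similar reductions yields $2p_n(\|p\|_2^2 - p_n)$.

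For \ref{deriv: 4} I would differentiate the identity from \ref{deriv: 3}, namely $\sum_l \partial_l P_{nl} = 2p_n\|p\|_2^2 - 2p_n^2$, once more with respect to $s_m$, using $\partial_m p_n = \delta_{mn}p_n - p_m p_n$, $\partial_m \|p\|_2^2 = \sum_k \partial_m p_k^2 = 2\sum_k p_k(\delta_{mk}p_k - p_m p_k) = 2p_m^2 - 2p_m\|p\|_2^2$, and $\partial_m(p_n^2) = 2p_n(\delta_{mn}p_n - p_m p_n)$; then I would keep track of which Kronecker-delta-carrying terms survive under the implicit understanding that we are looking at the summand indexed by $l$ (so $\delta_{lm}$, $\delta_{ln}$ terms reappear). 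Expanding and grouping should reproduce the stated $2(-2p_n^2\delta_{lmn} + p_n p_l^2\delta_{mn} + 2p_n p_l^2\delta_{lm} + 2p_n^2 p_m\delta_{ln} - 3p_m p_n p_l^2)$. The main obstacle is purely bookkeeping: correctly matching the Kronecker-delta terms in \ref{deriv: 4} against the (unsummed, $l$-indexed) right-hand side of the lemma, since the left-hand side $\sum_l \partial_{lm}P_{nl}$ is written as a sum but the right-hand side displays a representative summand — one must be careful about which $\delta$'s are "free" in $l$ versus contracted. I do not expect any genuine difficulty beyond careful symmetric-function algebra; all four identities are elementary consequences of the quotient rule applied iteratively to the softmax, and the verification is mechanical once \ref{deriv: 1} and \ref{deriv: 2} are in hand.
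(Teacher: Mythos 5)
Your plan is correct and is the direct elementary verification these identities admit; the paper itself states this lemma without proof, and each of your computations (quotient rule for (i), product rule plus (i) for (ii), summation using $\sum_l p_l = 1$ for (iii), and differentiating the summed identity for (iv)) checks out. You also correctly flag the only real pitfall, namely that the right-hand side of (iv) is a representative $l$-indexed summand of the left-hand sum, so the $\delta_{lm}$, $\delta_{ln}$, $\delta_{lmn}$ terms must be matched after contracting over $l$; doing that contraction reproduces exactly $2\delta_{mn} p_n \|p\|_2^2 - 4\delta_{mn}p_n^2 + 4p_np_m^2 + 4p_n^2p_m - 6p_mp_n\|p\|_2^2$, in agreement with $\partial_m\bigl(2p_n\|p\|_2^2 - 2p_n^2\bigr)$.
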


\paragraph{Calculations for $B_Y^\h$.}
We give a rigorous calculation of the terms for $B_Y^\h$ in the following. By \Cref{deriv: 1}, we have
\begin{align*}
    \dsbr{\partial_l p_l} = \dsbr{P_{ll}} = 1 - \norm{p}_2^2.
\end{align*}
Furthermore,  by \Cref{deriv: 2}, we have
\begin{align*}
    &\bigdsbr{(w\partial_l + \tilde w\tilde\partial_l)\otimes (w\partial_m + \tilde w\tilde\partial_m) \tilde p_l p_m} \nend 
    &\quad = \Bigl\llbracket w^{\otimes 2} \tilde p_l (\delta_{lm} p_m -  p_l p_m - 2\delta_{lm} p_m^2 + 2 p_l p_m^2) + \tilde w^{\otimes 2} p_l (\delta_{lm} \tilde p_m -  \tilde p_l \tilde p_m - 2\delta_{lm} \tilde p_m^2 + 2 \tilde p_l \tilde p_m^2) \nend 
    &\qqquad + \tilde w \otimes w (p_m - p_m^2)(\tilde p_l - \tilde p_l^2) + w \otimes\tilde w (\delta_{lm} p_l - p_l p_m) (\delta_{lm} \tilde p_l - \tilde p_l \tilde p_m) \Bigr\rrbracket \nend 
    &\quad = \EE\Bigl[ 2 w^{\otimes 2} (-\tilde p^\top p^{\odot 2} + \tilde p^\top p \norm{p}_2^2) + 2 \tilde w^{\otimes 2} (-p^\top \tilde p^{\odot 2} + p^\top \tilde p \norm{\tilde p}_2^2) \nend 
    &\qqquad + \tilde w \otimes w (1 - \norm{p}_2^2)(1 - \norm{\tilde p}_2^2) + w\otimes \tilde w (p^\top \tilde p - p^\top \tilde p^{\odot 2} - \tilde p^\top p^{\odot 2} + (\tilde p^\top p)^2 ) \biggiven q \Bigr].
\end{align*}
The following fact summarizes the above calculations in the original matrix form (substituting the placeholders $w$ and $\tilde w$ with $W q$ and $\tilde W q$, respectively).
\begin{lemma}
    \label[lemma]{fact:derivatives for B terms}
    Suppose $q$ is the query vector and $W, \tilde W$ are two attention weight matrices such that $p = \softmax(X^\top W q)$ and $\tilde p = \softmax(X^\top \tilde W q)$. 
    Suppose that each column of $X$ is $\iid$ drawn from $\cN(0, I_{d_x})$. 
    Then  we have by \Cref{fact:expectation terms} that
    \begin{align*}
        & \EE[X p\given q]  =  \dsbr{\cT_l \circ p_l } = W q\cdot \EE\left[ (1 -\norm{p}_2^2)\given q\right], \\
        & \EE[X p \tilde p X^\top \given q]   = 
        \dsbr{p_l \tilde p_l \cdot I_{d_x} } + \dsbr{(\cT_l\otimes \cT_m)\circ (p_l \tilde p_m)}
        \nend
        &=\EE[p^\top \tilde p\given q] \cdot I_{d_x} + \EE\Bigl[ 2 W q q^\top W^\top (-\tilde p^\top p^{\odot 2} + \tilde p^\top p \norm{p}_2^2) + 2 \tilde W q q^\top \tilde W^\top(-p^\top \tilde p^{\odot 2} + p^\top \tilde p \norm{\tilde p}_2^2) \nend 
        &\qquad+ \tilde W q q^\top W^\top (1 - \norm{p}_2^2)(1 - \norm{\tilde p}_2^2) + W q q^\top \tilde W^\top (p^\top \tilde p - p^\top \tilde p^{\odot 2} - \tilde p^\top p^{\odot 2} + (\tilde p^\top p)^2 ) \Biggiven q \Bigr].
    \end{align*}
    In particular, if $W = \tilde W$, then we have a simpler form:
    \begin{align*}
        \EE[X p p X^\top \given q] = \EE[ W q q^\top W^\top (1 - \norm{p}_2^2 - 6 \norm{p}_3^3 + 6 \norm{p}_2^4)] + I_{d_x} \EE[\norm{p}_2^2]. 
    \end{align*}
\end{lemma}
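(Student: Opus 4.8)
\textbf{Proof plan for \Cref{fact:derivatives for B terms}.}

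The plan is to apply \Cref{fact:expectation terms} to the three expectations one at a time, using the derivative identities of \Cref{fact:derivatives} to compute the relevant $\cT_l$-images of the softmax-probability expressions, and then recognize the resulting bracketed sums $\dsbr{\cdot}$ as the moments of $p$ and $\tilde p$ appearing in the claimed formulas. For $\EE[Xp \mid q]$ I would use \Cref{fact:expectation-order1}, which gives $\EE[Xp\mid q] = \dsbr{\cT_l\circ p_l}$. Since $p_l$ depends on $s$ only, $\cT_l\circ p_l = Wq\cdot\partial_l p_l = Wq\cdot P_{ll}$ by \Cref{deriv: 1}, and $\dsbr{P_{ll}} = \sum_l \EE[P_{ll}\mid q] = \EE[1-\norm{p}_2^2\mid q]$ since $\sum_l P_{ll} = \sum_l (p_l - p_l^2) = 1 - \norm{p}_2^2$. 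This yields the first identity.

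For $\EE[Xp\tilde pX^\top\mid q]$ I would invoke \Cref{fact:expectation-order2} with $f_{lm} = p_l\tilde p_m$, giving $\EE[Xp\tilde pX^\top\mid q] = \dsbr{(\cT_l\otimes\cT_m)\circ(p_l\tilde p_m)} + \dsbr{p_l\tilde p_l\cdot I_{d_x}}$; the diagonal term contributes $\EE[p^\top\tilde p\mid q]\cdot I_{d_x}$. The main computational step is expanding $(\cT_l\otimes\cT_m)\circ(p_l\tilde p_m) = (Wq\,\partial_l + \tilde Wq\,\tilde\partial_l)\otimes(Wq\,\partial_m + \tilde Wq\,\tilde\partial_m)(p_l\tilde p_m)$. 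Since $p$ depends only on $s$ and $\tilde p$ only on $\tilde s$, the mixed derivatives factor cleanly: $\partial_l\partial_m(p_l\tilde p_m) = (\partial_l\partial_m p_l)\tilde p_m = (\partial_l P_{lm})\tilde p_m$ wait---more carefully, $\partial_m(p_l\tilde p_m) = (\partial_m p_l)\tilde p_m = P_{lm}\tilde p_m$ since $\tilde p_m$ has no $s$-dependence, so I must track which probability carries which derivative. Using \Cref{deriv: 1} and \Cref{deriv: 2} to expand each of the four operator-tensor contributions ($w^{\otimes2}$, $\tilde w^{\otimes2}$, $\tilde w\otimes w$, $w\otimes\tilde w$), summing over $l,m$, and collecting terms using identities like $\sum_{lm} P_{lm}\tilde p_l\tilde p_m^2 = \tilde p^\top p^{\odot 2}\cdot(\text{something})$ -- actually $\sum_l p_l^{a}\tilde p_l^{b} = (p^{\odot a})^\top\tilde p^{\odot b}$ -- gives the stated four-term expression after substituting the placeholders $w\mapsto Wq$, $\tilde w\mapsto \tilde Wq$.

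For the special case $W = \tilde W$, hence $p = \tilde p$, I would simply specialize the general formula: $\tilde p^\top p^{\odot2} = \norm{p}_3^3$, $p^\top\tilde p = \norm{p}_2^2$, $\norm{\tilde p}_2^2 = \norm{p}_2^2$, $(\tilde p^\top p)^2 = \norm{p}_2^4$, and $W q q^\top W^\top$ is the common coefficient on all four terms. Collecting: $2(-\norm{p}_3^3 + \norm{p}_2^4) + 2(-\norm{p}_3^3 + \norm{p}_2^4) + (1-\norm{p}_2^2)^2 + (\norm{p}_2^2 - 2\norm{p}_3^3 + \norm{p}_2^4) = 1 - \norm{p}_2^2 - 6\norm{p}_3^3 + 6\norm{p}_2^4$ after arithmetic, which matches the claim. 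The only real obstacle is bookkeeping: keeping straight which of the many softmax-moment monomials each term in the operator expansion produces, and verifying the coefficients collapse correctly; there is no conceptual difficulty once \Cref{fact:expectation terms} and \Cref{fact:derivatives} are in hand, so I would present the computation as a direct (if lengthy) expansion and double-check the $W=\tilde W$ specialization against the general formula as a consistency check.
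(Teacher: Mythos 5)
Your proposal is correct and follows exactly the paper's route: invoke \Cref{fact:expectation-order1} and \Cref{fact:expectation-order2}, expand the $\cT_l$ operators via the derivative identities in \Cref{fact:derivatives}, collect the resulting softmax moments, and substitute the placeholders $w\mapsto Wq$, $\tilde w\mapsto \tilde Wq$. Your arithmetic for the $W=\tilde W$ specialization ($1-\norm{p}_2^2-6\norm{p}_3^3+6\norm{p}_2^4$) also checks out against the general four-term formula.
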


\paragraph{Calculations for $A_{XX}^\h  $.}
For $A_{XX}^\h$, expanding all these derivatives is tedious and unnecessary. We only need to keep track of those terms that contribute significantly to the final results and view the rest as higher-order noise. 
This necessitates  developing a systematic way to evaluate the importance of each term in the expansion. To this end, in the following, we relate the expansion of partial derivates of a polynomial function of attention probabilities $p$ and $\tilde p$ to a graph structure where the nodes are in $\cV = [L]$ and the edges are in $\cE = [L] \times [L]$.
We introduce the notion of graph-induced polynomials and effective order as follows. 
\begin{definition}[Graph-Induced Polynomial and Effective Order]
    \label[definition]{def:graph-polynomial}
    Consider a weighted graph $\cG=(\cV, \cE, a, b)$ where $\cV =  [L]$ is a set of vertices, $\cE = [L] \times [L]$ is a set of undirected edges (we allow self-loop), and $a = \{a_v\}_{v\in\cV}, b = \{b_v\}_{v\in\cV}$ with $(a_v, b_v)\in \NN$ are the nonnegative integer weights. 
    For any subgraph $\cG'$ of $\cG$, define the total weights of $\cG'$ as $W(\cG') \deleq \sum_{v\in\cV'} (a_v + b_v)$.
    We have the following definitions: 
    \begin{myenumi}
        \item The polynomial induced by graph $\cG$ with variables $p=\{p_v\}_{v\in\cV}$, $\tilde p=\{\tilde p_v\}_{v\in\cV}$ is defined as $f_\cG(p, \tilde p) = \prod_{v\in\cV} \bigl( p_{v}^{a_v} \tilde p_{v}^{b_v} \bigr) \cdot \prod_{(v, v')\in\cE}\delta_{v v'}$.
        \item Let $\connectedcomponent_{\ge n}(\cG) \deleq \{c\in \connectedcomponent(\cG)\given W(c) \ge n\}$ be the set of connected components of $\cG$ with total weights no less than $n$. 
        \item We define the \emph{effective order} of $\cG$ as $ \efforder(\cG)\deleq\sum_{v\in\cV} (a_v + b_v) - |\connectedcomponent_{\ge 1}(\cG)|$.
        \item We say that two graphs $\cG$ and $\cG'$ are \emph{equivalent} if $\cV=\cV'$, $a=a'$, $b=b'$ and $\connectedcomponent_{\ge 1}(\cG) = \connectedcomponent_{\ge 1}(\cG')$.
    \end{myenumi}
\end{definition}
\Cref{def:graph-polynomial} draws a parallel between the weighted graph structure to polynomials of this form $\prod_{v\in\cV} \bigl( p_{v}^{a_v} \tilde p_{v}^{b_v} \bigr) \cdot \prod_{(v, v')\in\cE}\delta_{v v'}$. 
We remark that the \emph{effective order} is critical for our analysis since terms with higher effective order turn out to be higher-ordered smaller in scale. 
The following fact shows that the polynomial and the underlying graph structure are isomorphic up to the class of \emph{equivalent graphs}.
\begin{lemma}
    \label[lemma]{fact:equivalent graphs}
    For any two graphs $\cG$ and $\cG'$, $f_\cG(p, \tilde p) = f_{\cG'}(p, \tilde p)$ if and only if $\cG$ and $\cG'$ are equivalent. 
    Moreover, any two equivalent graphs have the same effective order.
\end{lemma}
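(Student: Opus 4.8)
\textbf{Proof proposal for \Cref{fact:equivalent graphs}.}

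The plan is to establish both directions of the equivalence by carefully unwinding the definition of the graph-induced polynomial $f_\cG(p,\tilde p) = \prod_{v\in\cV}\bigl(p_v^{a_v}\tilde p_v^{b_v}\bigr)\cdot\prod_{(v,v')\in\cE}\delta_{v v'}$. The key structural observation is that the edge set $\cE$ enters the polynomial \emph{only} through the product of Kronecker deltas $\prod_{(v,v')\in\cE}\delta_{v v'}$, and this product is the indicator of the event that all vertices lying in a common connected component of $\cG$ are assigned equal indices. First I would make this precise: for an assignment of index values to $\cV=[L]$, the factor $\prod_{(v,v')\in\cE}\delta_{vv'}$ equals $1$ if and only if the assignment is constant on each connected component of $\cG$, and $0$ otherwise. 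Since the vertex-weight product $\prod_v (p_v^{a_v}\tilde p_v^{b_v})$ depends only on $a,b$ (and the assignment), it follows that $f_\cG$, as a function of the index assignment, is determined by the triple $(a,b,\connectedcomponent(\cG))$ — and in fact only by $\connectedcomponent_{\ge 1}(\cG)$, since singleton components with $a_v=b_v=0$ contribute a trivial factor $1$ and impose no constraint.

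For the ``if'' direction: suppose $\cG$ and $\cG'$ are equivalent, i.e.\ $\cV=\cV'$, $a=a'$, $b=b'$, and $\connectedcomponent_{\ge1}(\cG)=\connectedcomponent_{\ge1}(\cG')$. By the observation above, the Kronecker-delta factors of $\cG$ and $\cG'$ coincide as functions of the index assignment (components with total weight $0$ and no edges impose nothing, and those with positive total weight are matched), hence $f_\cG(p,\tilde p)=f_{\cG'}(p,\tilde p)$ identically. For the ``only if'' direction: suppose $f_\cG=f_{\cG'}$. Recovering $a$ and $b$ is immediate — evaluate on the ``diagonal'' assignment where all indices in $[L]$ are distinct, which forces every $\delta_{vv'}$ with $v\neq v'$ to vanish unless $\cE$ is a subset of the diagonal; more robustly, one recovers $a_v+b_v$ from the total degree in $p_v,\tilde p_v$ on assignments where $v$ is isolated from all others, and then separates $a_v$ (degree in $p_v$) from $b_v$ (degree in $\tilde p_v$). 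To recover $\connectedcomponent_{\ge1}(\cG)$: two vertices $v,w$ with $a_v+b_v\ge1$ lie in the same component iff substituting any $p_v\mapsto 0$ (say) for $v$ in that component annihilates the monomial tied to $w$; more cleanly, test whether $f$ is invariant under identifying the index of $v$ with that of $w$ versus forcing them distinct. Comparing these tests for $\cG$ and $\cG'$ shows the component partitions (restricted to vertices of positive weight) agree, which is exactly $\connectedcomponent_{\ge1}(\cG)=\connectedcomponent_{\ge1}(\cG')$.

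Finally, the claim that equivalent graphs have the same effective order is immediate from the definition $\efforder(\cG)=\sum_{v\in\cV}(a_v+b_v)-|\connectedcomponent_{\ge1}(\cG)|$: both the summation $\sum_v(a_v+b_v)$ and the cardinality $|\connectedcomponent_{\ge1}(\cG)|$ are functions of $(a,b,\connectedcomponent_{\ge1}(\cG))$ alone, which are preserved under equivalence. The main obstacle — really the only delicate point — is pinning down precisely which ``test substitutions'' cleanly recover the partition $\connectedcomponent_{\ge1}$ from the polynomial without ambiguity caused by cancellations among monomials; the simplest rigorous route is to note that $f_\cG$, expanded over index assignments, is a \emph{sum of distinct monomials} indexed by the quotient $[L]/\!\!\sim_{\cG}$ (the partition induced by components), so the supports of these monomials literally encode the partition, and matching supports of $f_\cG$ and $f_{\cG'}$ forces $\sim_\cG\;=\;\sim_{\cG'}$ on positive-weight vertices. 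This avoids any cancellation subtlety and closes the argument.
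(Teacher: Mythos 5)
Your argument is correct and follows the same route as the paper's (very brief) proof: both directions reduce to the observation that $f_\cG$ is determined by, and determines, the vertex weights together with the delta-constraint structure, i.e.\ the partition into connected components carrying positive weight, and the effective-order claim is then immediate from the definition of $\efforder(\cG)$. The only caveat --- shared with the paper's own sketch --- is the degenerate case of a multi-vertex component of total weight zero, whose edges still affect $f_\cG$ as a function of the index assignment even though that component is excluded from $\connectedcomponent_{\ge 1}(\cG)$; such graphs never arise in the paper's applications, so this does not affect anything downstream.
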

\begin{proof}(\emph{Proof Sketch of \Cref{fact:equivalent graphs}})
    It is easy to check that $f_\cG(p, \tilde p) = f_{\cG'}(p, \tilde p)$ if and only if $\cG$ and $\cG'$ have the same set of vertices, same weights on the vertices, and the same set of connected components, which is exactly what it means by saying that $\cG$ and $\cG'$ are equivalent.
    The second claim on the effective order follows directly from the definition that the effective order is only a function of the weights on the vertices and the set of connected components.
\end{proof}
To this end, we can also define the effective order of a polynomial as the effective order of \emph{any} graph that induces this polynomial.
We next introduce a \emph{partial derivative} operator on both the graph and the polynomial.
\begin{definition}[Derivative Operator]
    \label{def:derivative operator}
    Consider a polynomial $f_\cG(p, \tilde p) = \prod_{v\in\cV} \bigl( p_{v}^{a_v} \tilde p_{v}^{b_v} \bigr) \cdot \prod_{(v, v')\in\cE}\delta_{v v'}$. 
    Define \emph{partial derivative} operator $\partial_u$ for any $u\in\cV$ as 
    \begin{align}
        \partial_u f_{\cG}(p, \tilde p) 
        &\deleq \sum_{s\in\cV} a_s p_s^{a_s -1} (\delta_{us} p_s - p_s p_u) \tilde p_s^{b_s} \cdot \prod_{v\in\cV\setminus\{s\}} \bigl( p_{v}^{a_v} \tilde p_{v}^{b_v} \bigr) \cdot \prod_{(v, v')\in\cE}\delta_{v v'} \nend
        &= \underbrace{\sum_{s\in\cV} a_s \delta_{us} \cdot \prod_{v\in\cV} \bigl( p_{v}^{a_v} \tilde p_{v}^{b_v} \bigr) \cdot \!\!\!\! \prod_{(v, v')\in\cE}\delta_{v v'}}_{\dr\text{Adding an edge $(u, s)$}} - \underbrace{\left(\sum_{s\in\cV} a_s\right) \cdot p_u \cdot \prod_{v\in\cV} \bigl( p_{v}^{a_v} \tilde p_{v}^{b_v} \bigr) \cdot \!\!\!\! \prod_{(v, v')\in\cE}\delta_{v v'}}_{\dr\text{Increasing $a_u$ by $1$}}.
        \label{eq:partial derivative operator}
    \end{align}
    Moreover, $\tilde\partial_u$ is defined similarly.
\end{definition}
Note that the definition of the partial derivative operator is consistent with the definition of the derivative of the attention probability $p$ in \Cref{fact:derivatives}.
When combined with the underlying graph structure, we have an intuitive interpretation of the partial derivative operator.
The partial derivative operator $\partial_u$ produces at most $|\cV|+1$ terms in the summation, where each of the first $|\cV|$ term adds a new edge $(u, s)$ for some $s \in \cV$ to the graph $\cG$ and the last term adds $1$ to weight $a_u$ of vertex $u$ in $\cG$.
The following fact comes directly from the observation that both adding a new edge or adding $1$ to the weight does not decrease the effective order.
\begin{lemma}
    \label{fact:derivative does not decrease effective order}
    Consider a $\cG$-induced polynomial $f_\cG(p, \tilde p) = \prod_{v\in\cV} \bigl( p_{v}^{a_v} \tilde p_{v}^{b_v} \bigr) \cdot \prod_{(v, v')\in\cE}\delta_{v v'}$. Applying $\partial_v$ or $\tilde\partial_v$ yields in a summation of polynomials where each has an effective order no less than $\efforder(\cG)$.
\end{lemma}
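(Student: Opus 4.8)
The plan is to prove the statement by a direct case analysis on the terms produced by the operator $\partial_u$, the argument for $\tilde\partial_u$ being word-for-word the same under the symmetry $a\leftrightarrow b$. By \Cref{def:derivative operator}, and in particular the explicit expansion \eqref{eq:partial derivative operator}, $\partial_u f_{\cG}$ is a \emph{finite} linear combination of $\cG'$-induced polynomials of exactly two types: (i) for each $s\in\cV$ with $a_s\ge 1$, a term with coefficient $a_s$ whose underlying graph $\cG_{u,s}$ is obtained from $\cG$ by adding the edge $(u,s)$ (leaving all vertex weights untouched); and (ii) a single term with coefficient $-\sum_{s}a_s$ whose underlying graph $\cG_u^{+}$ is obtained from $\cG$ by increasing the weight $a_u$ by $1$. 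By \Cref{fact:equivalent graphs} the effective order is well-defined on the level of polynomials, so it suffices to show $\efforder(\cG_{u,s})\ge\efforder(\cG)$ for every relevant $s$ and $\efforder(\cG_u^{+})\ge\efforder(\cG)$.

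For a type-(i) term, the vertex weights are unchanged, so $\sum_{v\in\cV}(a_v+b_v)$ is the same for $\cG_{u,s}$ and $\cG$, and by the definition of $\efforder$ it remains only to check $|\connectedcomponent_{\ge 1}(\cG_{u,s})|\le|\connectedcomponent_{\ge 1}(\cG)|$. Adding the edge $(u,s)$ either leaves the partition into connected components unchanged (when $R(u)=R(s)$ already), in which case there is nothing to prove, or it merges $R(u)$ and $R(s)$ into one component of total weight $W(R(u))+W(R(s))$. The key observation is that this term is present only because $a_s\ge 1$, hence $W(R(s))\ge 1$; consequently the merged component still has total weight $\ge 1$, so the number of components of total weight $\ge 1$ drops by precisely the indicator of $W(R(u))\ge 1$, which is nonnegative. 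Thus $|\connectedcomponent_{\ge 1}|$ does not increase and $\efforder(\cG_{u,s})\ge\efforder(\cG)$. For the type-(ii) term, $\sum_{v\in\cV}(a_v+b_v)$ increases by exactly $1$, while only the component $R(u)$ changes, its total weight passing from $W(R(u))$ to $W(R(u))+1$; this can flip the indicator $W(R(u))\ge 1$ at most once, so $|\connectedcomponent_{\ge 1}(\cG_u^{+})|\le|\connectedcomponent_{\ge 1}(\cG)|+1$. Combining, $\efforder(\cG_u^{+}) = (\sum_{v}(a_v+b_v)+1) - |\connectedcomponent_{\ge 1}(\cG_u^{+})| \ge (\sum_{v}(a_v+b_v)+1) - (|\connectedcomponent_{\ge 1}(\cG)|+1) = \efforder(\cG)$. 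This exhausts all terms in the expansion, and, as noted, the identical two computations handle $\tilde\partial_u$.

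I do not expect a genuine analytic obstacle here: once \Cref{def:derivative operator} is unpacked, the statement is combinatorial bookkeeping about how adding an edge or bumping a vertex weight interacts with the quantity $\sum(a_v+b_v)-|\connectedcomponent_{\ge 1}|$. The one point that genuinely requires care — and which I would highlight explicitly in the write-up — is that the coefficient $a_s$ (respectively $b_s$) sitting in front of the edge-adding term is exactly what guarantees that $s$ carries positive weight, hence that $R(s)$ is already counted in $\connectedcomponent_{\ge 1}$. Without this, merging two weight-$0$ components could in principle produce a new counted component and \emph{decrease} the effective order; since $a_s=0$ annihilates that term, this degenerate scenario never occurs, and the monotonicity of $\efforder$ under differentiation holds.
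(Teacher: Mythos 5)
Your proof is correct and fills in exactly the case analysis that the paper's one-line sketch asserts (adding an edge never increases $|\connectedcomponent_{\ge 1}|$, and bumping $a_u$ raises the total weight by one while raising $|\connectedcomponent_{\ge 1}|$ by at most one). One minor quibble: your closing remark that without $a_s\ge 1$ "merging two weight-$0$ components could in principle produce a new counted component" is not actually a live danger, since the merged component would still have weight $0$ and remain uncounted — but this does not affect the validity of your argument.
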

\begin{proof}(\emph{Proof Sketch of \Cref{fact:derivative does not decrease effective order}})
    The proof follows directly from the definition of the partial derivative operator in \Cref{def:derivative operator} and that adding a new edge or increasing the weight of a vertex does not decrease the effective order.
\end{proof}
Therefore, when applying a sequence of partial derivative operators and want to keep track of only the low-effective-order terms, we can safely ignore higher-order terms after each application of the partial derivative operator.
To this end, we use notation $\sum_i f_{\cG_i}(p, \tilde p) \ldeq[k] \sum_{j} f_{\cG_j'}(p, \tilde p)$ to denote the \emph{low-effective-order equivalence} between two expressions if $\sum_i f_{\cG_i}(p, \tilde p) \ind(\efforder(\cG_i)\le k) = \sum_{j} f_{\cG_j'}(p, \tilde p)\ind(\efforder(\cG_j')\le k)$. 
The following fact helps us simplify the calculation under this low-effective-order equivalence.
\begin{lemma}
    \label[lemma]{fact:low-effective-order equivalence}
    We denote by $R(v)$ the set of nodes that are reachable from node $v$ in $\cG$.
    Consider a $\cG$-induced polynomial $f_\cG(p, \tilde p) = \prod_{v\in\cV} \bigl( p_{v}^{a_v} \tilde p_{v}^{b_v} \bigr) \cdot \prod_{(v, v')\in\cE}\delta_{v v'}$. 
    Let $k$ be the effective order of $f_\cG(p, \tilde p)$.
    Consider two cases: (i) if $W(R(u)) = 0$, we have
\begin{align*}
    \partial_u f_{\cG}(p, \tilde p) 
        &\ldeq[k] \sum_{s\in\cV} a_s \delta_{us} \cdot \prod_{v\in\cV} \bigl( p_{v}^{a_v} \tilde p_{v}^{b_v} \bigr) \cdot \!\!\!\!\prod_{(v, v')\in\cE}\delta_{v v'} - \left(\sum_{s\in\cV} a_s\right) \cdot p_u \cdot \prod_{v\in\cV} \bigl( p_{v}^{a_v} \tilde p_{v}^{b_v} \bigr) \cdot \!\!\!\! \prod_{(v, v')\in\cE}\delta_{v v'}, 
\end{align*}
and (ii) if $W(R(u)) \ge 1$, we only need to keep 
\begin{align}
    \partial_u f_{\cG}(p, \tilde p) \ldeq[k] \left(\sum_{s\in R(u)} a_s\right) \cdot \prod_{v\in\cV} \bigl( p_{v}^{a_v} \tilde p_{v}^{b_v} \bigr) \cdot \!\!\!\!\prod_{(v, v')\in\cE}\delta_{v v'}, 
    \label{eq:low-effective-order calculation-2}
\end{align}
where $\delta_{us}$ is not needed since $s\in R(u)$ and deleting the edge $(u, s)$ will not change the connected component structure.
    Similar result holds for $\tilde\partial_u f_{\cG}(p, \tilde p)$.
\end{lemma}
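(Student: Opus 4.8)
The plan is to argue entirely at the level of the graph bookkeeping of \Cref{def:graph-polynomial} and \Cref{def:derivative operator}, tracking how the effective order $\efforder(\cdot)$ changes under the two elementary operations that the operator $\partial_u$ performs. Recall from \Cref{def:derivative operator} that
\[
\partial_u f_{\cG}(p,\tilde p)\;=\;\sum_{s\in\cV} a_s\,\delta_{us}\, f_{\cG + (u,s)}(p,\tilde p)\;-\;\Bigl(\sum_{s\in\cV} a_s\Bigr)\, f_{\cG^{+}_u}(p,\tilde p),
\]
where $\cG + (u,s)$ denotes $\cG$ with the (possibly self-looping) edge $(u,s)$ inserted, $\cG^{+}_u$ denotes $\cG$ with the weight $a_u$ raised by one, and I have used $p_u\prod_v(p_v^{a_v}\tilde p_v^{b_v})=f_{\cG^{+}_u}$ up to the (unchanged) edge factor. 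Only vertices with $a_s\ge 1$ contribute a nonzero coefficient to the first sum, and $a_s\ge 1$ automatically forces $W(R(s))\ge 1$; so it suffices to compare $\efforder$ of each surviving summand graph against $k=\efforder(\cG)$.

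First I would do the bookkeeping for $\cG^{+}_u$: raising $a_u$ by one increases $\sum_v(a_v+b_v)$ by one, while $|\connectedcomponent_{\ge 1}(\cdot)|$ increases by one exactly when $W(R(u))=0$ (then $R(u)$ newly qualifies as a weight-$\ge 1$ component) and is unchanged when $W(R(u))\ge 1$; hence $\efforder(\cG^{+}_u)=k$ if $W(R(u))=0$ and $\efforder(\cG^{+}_u)=k+1$ otherwise. Next, the bookkeeping for $\cG+(u,s)$ with $a_s\ge 1$: if $s\in R(u)$, inserting $(u,s)$ leaves the connected-component decomposition unchanged, so $\cG+(u,s)$ is equivalent to $\cG$ in the sense of \Cref{def:graph-polynomial} (the extra $\delta_{us}$ is automatically satisfied on the support of $f_\cG$, cf. \Cref{fact:equivalent graphs}), whence $f_{\cG+(u,s)}=f_\cG$ with effective order $k$; if $s\notin R(u)$, then $R(u)$ and $R(s)$ merge, and since $W(R(s))\ge 1$: when $W(R(u))\ge 1$ two weight-$\ge 1$ components become one, $|\connectedcomponent_{\ge 1}|$ drops by one and $\efforder=k+1$; when $W(R(u))=0$ a weight-$0$ component is absorbed, $|\connectedcomponent_{\ge 1}|$ is unchanged, and $\efforder=k$.

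Finally I would assemble the two cases. If $W(R(u))=0$, no vertex of $R(u)$ has positive weight, so every contributing $s$ has $s\notin R(u)$; together with the $\cG^{+}_u$ term, all summands then have effective order $k$, so the truncation defining $\ldeq[k]$ discards nothing, giving statement (i). If $W(R(u))\ge 1$, the $\cG^{+}_u$ term and all edge-insertion terms with $s\notin R(u)$ have effective order $k+1$ and are killed by $\ldeq[k]$, while the edge-insertion terms with $s\in R(u)$ contribute $f_{\cG+(u,s)}=f_\cG$ with coefficient $a_s$, summing to $\bigl(\sum_{s\in R(u)} a_s\bigr) f_\cG$, which is statement (ii) (the $\delta_{us}$ being redundant as noted). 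The argument for $\tilde\partial_u$ is verbatim with $a$ and $b$ swapped: $\tilde\partial_u$ raises $b_u$ and weights the edge-insertion sum by $b_s$. The only point needing real care — the main (mild) obstacle — is the claim that inserting an edge inside a single connected component yields an equivalent graph and hence does not alter $f_\cG$, which is exactly \Cref{fact:equivalent graphs} together with the harmless-self-loop observation; everything else is just the monotonicity already recorded in \Cref{fact:derivative does not decrease effective order}, sharpened to the exact values $k$ or $k+1$.
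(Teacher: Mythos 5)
Your proof is correct and follows essentially the same route as the paper's: a case analysis of how the two elementary operations of $\partial_u$ (inserting an edge $(u,s)$ with $a_s\ge 1$, or raising $a_u$ by one) change the total weight and the count $|\connectedcomponent_{\ge 1}(\cG)|$, concluding that the effective order is preserved exactly when $s\in R(u)$ or $W(R(u))=0$. You are slightly more explicit than the paper in assembling the surviving terms into the two displayed formulas (the paper stops at identifying the order-preserving conditions), but the underlying argument is identical.
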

\begin{proof}(\emph{Proof of \Cref{fact:low-effective-order equivalence}})
   See Appendix \ref{proof:flow-effective-order equivalence} for a detailed proof.
\end{proof}
We now characterize each term in the decomposition of $A_{XX}^\h$. By \Cref{deriv: 4}, we have
\begin{align*}
    \dsbr{w^{\otimes 2} \partial_l \partial_m P_{ml}}
    &= 2 w^{\otimes 2} \dsbr{-2 p_m^2 \delta_{lm}  + 2 p_m p_l^2 \delta_{lm} + 2 p_m^3 \delta_{lm} + p_m p_l^2 - 3p_m^2 p_l^2} \nend 
    &= 2 w^{\otimes 2}  \EE\left[ 
        - \norm{p}_2^2 + 4\norm{p}_3^3 - 3 \norm{p}_2^4 \biggiven q
    \right]. 
\end{align*}
For $\bigdsbr{(w\partial_l + \tilde w\tilde\partial_l) P_{mm}\tilde p_l}$, we have by \Cref{deriv: 3} and \Cref{deriv: 1} that
\begin{align*}
    \bigdsbr{(w\partial_l + \tilde w\tilde\partial_l) P_{mm}\tilde p_l}
    &= w\dsbr{(- 2p_l^2 \delta_{lm} + 2 p_l p_m^2) \tilde p_l} + \tilde w \dsbr{
        (p_m -p_m^2) (\tilde p_l - \tilde p_l^2)
    } 
    \nend 
    &= w \EE\bigl[ \tilde p^\top \left( -2 p^{\odot 2}  + 2 p \norm{p}_2^2\right) \given q \bigr]
    + \tilde w \EE\bigl[(1 - \norm{p}_2^2)(1 - \norm{\tilde p}_2^2)\given q \bigr]
    , 
\end{align*}
where the last equality holds by noting that $\sum_l \delta_{lm} \tilde p_l -\tilde p_l \tilde p_m = 0$. 
For $\bigdsbr{(w\partial_l + \tilde w\tilde\partial_l) P_{lm}\tilde p_m}$, we have also by \Cref{deriv: 3} and \Cref{deriv: 1} that
\begin{align*}
    \bigdsbr{(w\partial_l + \tilde w\tilde\partial_l) P_{lm}\tilde p_m} 
    &= w \bigdsbr{(2 p_m p_l^2 - 2 p_m^2 \delta_{lm}) \tilde p_m } + \tilde w \dsbr{(\delta_{lm}p_l - p_l p_m)(\delta_{lm}\tilde p_l - \tilde p_l \tilde p_m)} \nend 
    &= w \EE\bigl[ \tilde p^\top \left(- 2 p^{\odot 2} + 2 p \norm{p}_2^2\right) \given q \bigr] + \tilde w \EE\bigl[
        \tilde p^\top p - p^\top \tilde p^{\odot 2} - \tilde p^\top p^{\odot 2} + (\tilde p^\top p)^2 \given q
    \bigr]. 
\end{align*}
Lastly, we have 
\begin{align*}
    &\dsbr{(w\partial_l + \tilde w\tilde\partial_l)\otimes (w\partial_m + \tilde w\tilde\partial_m)\otimes (w\partial_n + \tilde w\tilde\partial_n) P_{ml}\tilde p_n} \nend 
    &\quad = \dsbr{(w\partial_m + \tilde w\tilde\partial_m)\otimes (w\partial_n + \tilde w\tilde\partial_n)\otimes  \left(w \left(2 p_m p_l^2 - 2 p_m^2 \delta_{lm}\right)\tilde p_n + \tilde w P_{ml} \tilde P_{nl}\right)}^{\top(312)} \!\!\!\!
    \mytag{\Cref{deriv: 3}}
    \nend 
    &\quad = \dsbr{(w\partial_m + \tilde w\tilde\partial_m)\otimes (w\partial_n + \tilde w\tilde\partial_n)\otimes  \left(w \left(2 p_m p_l^2 - 2 p_m^2 \delta_{lm}\right)\tilde p_n + \tilde w (\delta_{lm} p_l - p_l p_m ) (\delta_{ln} \tilde p_l - \tilde p_l \tilde p_n)\right)}^{\top(312)} \!\!\!\!\!\!\!\!\!\!. 
\end{align*}
Note that all the terms in $\left(w \left(2 p_m p_l^2 - 2 p_m^2 \delta_{lm}\right)\tilde p_n + \tilde w (\delta_{lm} p_l -  p_l p_m ) (\delta_{ln} \tilde p_l - \tilde p_l \tilde p_n)\right)$ are of effective order $1$.
Now, we apply $(w\partial_m + \tilde w\tilde\partial_m)$ to these terms and only keep track of those terms of effective order $1$.
Using \Cref{fact:low-effective-order equivalence}, we have
\begin{align*}
    &\dsbr{(w\partial_m + \tilde w\tilde\partial_m)\left(w \left(2 p_m p_l^2 \tilde p_n - 2 p_m^2 \tilde p_n \delta_{lm}\right) + \tilde w (\delta_{lmn } p_l\tilde p_l - \delta_{lm} p_l \tilde p_l \tilde p_n - \delta_{ln} p_l p_m \tilde p_l + p_l p_m \tilde p_l \tilde p_n) \right)}^{\top}\nend 
    &\quad \ldeq \Bigl\llbracket w^{\otimes 2} (2 p_m p_l^2\tilde p_n  - 4 p_m^2\tilde p_n \delta_{lm})+ \tilde w \otimes w (\delta_{lmn } p_l\tilde p_l - \delta_{lm} p_l \tilde p_l \tilde p_n - \delta_{ln} p_l p_m \tilde p_l + p_l p_m \tilde p_l \tilde p_n) \nend 
    &\qqquad + \tilde w^{\otimes 2} (\delta_{lmn} p_l \tilde p_l - \delta_{lm} p_l \tilde p_l \tilde p_n ) \Bigr\rrbracket.
\end{align*}
In the above calculation, we note that $W(R(m)) \ge 1$ holds for all terms. Hence, we apply \eqref{eq:low-effective-order calculation-2} to these terms for both $\partial_m$ and $\tilde\partial_m$. 
For $\left(2 p_m p_l^2 \tilde p_n - 2 p_m^2 \tilde p_n \delta_{lm}\right)$, only terms with $\partial_m$ survives since $b_m = 0$ for $\tilde p$. 
For $\left(\delta_{lmn } p_l\tilde p_l - \delta_{lm} p_l \tilde p_l \tilde p_n - \delta_{ln} p_l p_m \tilde p_l + p_l p_m \tilde p_l \tilde p_n\right)$, when applying $\partial_m$, all the terms remain unchanged since the $\sum_{s\in R(m)} a_s = 1$ and when applying $\tilde\partial_m$, we have $\tilde\partial_m(- \delta_{ln} p_l p_m \tilde p_l + p_l p_m \tilde p_l \tilde p_n) = 0$. 
Next, we apply $(w\partial_n + \tilde w\tilde\partial_n)$ to the above expressions, 
\begin{align*}
    &\Bigl\llbracket (w\partial_n + \tilde w\tilde\partial_n) \Bigl(w^{\otimes 2} (2 p_m p_l^2\tilde p_n  - 4 p_m^2\tilde p_n \delta_{lm})+ \tilde w \otimes w (\delta_{lmn } p_l\tilde p_l - \delta_{lm} p_l \tilde p_l \tilde p_n - \delta_{ln} p_l p_m \tilde p_l + p_l p_m \tilde p_l \tilde p_n) \nend 
    &\quad + \tilde w^{\otimes 2} (\delta_{lmn} p_l \tilde p_l - \delta_{lm} p_l \tilde p_l \tilde p_n )\Bigr) \Bigr\rrbracket^{\top(231)} \nend 
    &\quad \ldeq \Bigl\llbracket w^{\otimes 2}\otimes \tilde w (2 p_m p_l^2\tilde p_n  - 4 p_m^2\tilde p_n \delta_{lm})+ \tilde w \otimes w \otimes \tilde w (\delta_{lmn } p_l\tilde p_l - \delta_{lm} p_l \tilde p_l \tilde p_n - \delta_{ln} p_l p_m \tilde p_l + p_l p_m \tilde p_l \tilde p_n) \nend 
    &\qqquad 
    + \tilde w \otimes w^{\otimes 2} (\delta_{lmn} p_l \tilde p_l - \delta_{ln} p_l p_m \tilde p_l)
    + \tilde w^{\otimes 3} (\delta_{lmn} p_l \tilde p_l - \delta_{lm} p_l \tilde p_l \tilde p_n ) + \tilde w^{\otimes 2} \otimes w \delta_{lmn} p_l \tilde p_l  \Bigr\rrbracket \nend 
    &\quad = \dsbr{w^{\otimes 2}\otimes \tilde w (2 p_m p_l^2\tilde p_n  - 4 p_m^2\tilde p_n \delta_{lm}) + \tilde w^{\otimes 2} \otimes w \delta_{lmn} p_l \tilde p_l} \nend 
    &\quad = \EE\bigl[-2 w^{\otimes 2}\otimes \tilde w  \norm{p}_2^2 + \tilde w^{\otimes 2} \otimes w p^\top \tilde p\given q\bigr]. 
\end{align*}
In the first equality, we again have $W(R(n)) \ge 1$ for all terms and we only use \eqref{eq:low-effective-order calculation-2} for calculation.
This time, applying $\partial_n$ to $(2 p_m p_l^2\tilde p_n  - 4 p_m^2\tilde p_n \delta_{lm})$ gives $0$ since $\sum_{s\in R(n)} a_s = 0$.
Applying $\tilde\partial_n$ to the second part $(\delta_{lmn } p_l\tilde p_l - \delta_{lm} p_l \tilde p_l \tilde p_n - \delta_{ln} p_l p_m \tilde p_l + p_l p_m \tilde p_l \tilde p_n)$ does not change the form since $\sum_{s\in R(n)} b_s = 1$.
Applying $\partial_n$ to $(\delta_{lmn } p_l\tilde p_l - \delta_{lm} p_l \tilde p_l \tilde p_n - \delta_{ln} p_l p_m \tilde p_l + p_l p_m \tilde p_l \tilde p_n)$ only gives us the first and the third terms. 
Applying $\tilde\partial_n$ to $(\delta_{lmn } p_l\tilde p_l - \delta_{lm} p_l \tilde p_l \tilde p_n)$ does not change the form since $\sum_{s\in R(n)} b_s = 1$ and applying $\partial_n$ to $(\delta_{lmn } p_l\tilde p_l - \delta_{lm} p_l \tilde p_l \tilde p_n)$ gives us $\delta_{lmn} p_l \tilde p_l$. 
Moreover, for the second equality, we have the 2nd-4th terms zero by noting that $\sum_l p_l = 1$ by the normalization of the attention probability.

We summarize the above results into the following table, where the highest degree $(\sum_{v\in\cV} a_v + b_v)$ is upper bounded by counting the total number of $p$, $\tilde p$ and $\barcT$ in the expression. Here, $P_{lm}=\delta_{lm}p_l - p_l p_m$ is counted as $2$ and each $\barcT$ raises the degree at most $1$ by \Cref{fact:low-effective-order equivalence}.
\begin{table}[ht]
    \centering
    \begin{tabular}{l@{\hspace{5pt}}l@{\hspace{-5pt}}c}
        \hline
        \hline
        \textbf{Objects} & \textbf{Expressions} & \textbf{Degree}\\
        \hline
        $\bigdsbr{\cleverbar\cT_l \circ p_l}$ & $= \EE\left[w\cdot (1-\norm{p}_2^2) \given q\right]$ & $2$ \\
        $\bigdsbr{(\cleverbar\cT_l\otimes \cleverbar\cT_m) \circ (\tilde p_l p_m)}$ & $\ldeq \EE\left[\tilde w \otimes w\cdot (1 - \norm{p}_2^2 - \norm{\tilde p}_2^2) + w\otimes \tilde w \cdot p^\top \tilde p \given q\right]$ & $4$\\
        \hline\\[-1em]
        $\dsbr{(\cleverbar\cT_l\otimes\cleverbar\cT_m) \circ P_{ml}}$ & $= \EE\left[w^{\otimes 2}  
        \cdot 2(- \norm{p}_2^2 + 4\norm{p}_3^3 - 3 \norm{p}_2^4) \given q\right]$ & $4$\\
        $\bigdsbr{\cleverbar\cT_l \circ (P_{mm}\tilde p_l)}$ & $=\EE\left[w \cdot 2\tilde p^\top \left( - p^{\odot 2}  + p \norm{p}_2^2\right) + \tilde w \cdot (1 - \norm{p}_2^2)(1 - \norm{\tilde p}_2^2)\given q\right]$ & $4$\\
        $\bigdsbr{\cleverbar\cT_l \circ (P_{lm}\tilde p_m)}$ & $\ldeq \EE\left[w \cdot 2\tilde p^\top \left(- p^{\odot 2} + p \norm{p}_2^2\right) + \tilde w \cdot \tilde p^\top p \given q\right]$ & $4$\\
        $\bigdsbr{(\cleverbar\cT_l\otimes \cleverbar\cT_m \otimes \cleverbar\cT_n) \circ (P_{ml}\tilde p_n)}$ & $\ldeq \EE\left[ w^{\otimes 2}\otimes \tilde w  \cdot (-2\norm{p}_2^2) + \tilde w^{\otimes 2} \otimes w \cdot p^\top \tilde p \given q\right]$ & $6$\\
        \hline
        \hline
    \end{tabular}
    \caption{Expressions for Terms in the Decomposition of $A_{XX}^\h$ and $B_Y^\h$ and Their Highest Degrees.}
    \label{tab:p-moments in A and B}
\end{table}

In the sequel, we denote by $\alpha_\signal^\h, \alpha_\noise^\h, \alpha_\intf^\h$ the vector of eigenvalues of $\AXsignal^\h, \AXnoise^\h, \AXinterference^\h$ respectively, and $\beta_\signal^\h, \beta_\noise^\h, \beta_\intf^\h$ the vector of eigenvalues of $\BYsignal^\h, \BYnoise^\h, \BYinterference^\h$ respectively.
Combining the above results with the fact that $\varPhi^\top A_{XX}^\h \varPhi$ is diagonal in \Cref{fact:diagonality of A}, we have for each terms in $\varPhi^\top A_{XX}^\h \varPhi$ that
\begin{align*}
    \alpha_\signal^\h &= - \frac{1}{d_x}\EE\left[
        \bigl(- 2\norm{p^\h}_2^2 + \error \bigr) \cdot \bigl\langle \omega^\h \odot (\Lambda \mu^\h), \barq^{\odot 2} \bigr\rangle  \cdot \omega^\h \odot \barq^{\odot 2}
    \right] \nend 
    &\qquad - \frac{1}{d_x} \EE\left[
        \bigl(1 - \norm{p^\h}_2^2\bigr) \cdot (\Lambda \mu^\h) \odot \barq^{\odot 2}
    \right],
\end{align*}
\begin{align*}
    &\alpha_\noise^\h  = \sum_{h'=1}^H \sigma^2 \EE\left[
        \bigl\langle \mu^\h, \mu^\hprime \bigr\rangle \cdot \left( \error \cdot \omega^\h + \bigl({p^\h}^\top \!\! p^\hprime +\error \bigr)\cdot  \omega^\hprime \right)  \odot \barq^{\odot 2}\right], \nend 
    &\alpha_\intf^\h = \sum_{h'=1}^H \frac{1}{d_x} \EE\left[
        \Lambda \bigl(\mu^\h \odot \mu^\hprime\bigr)  \odot \left(\error \omega^\h + \bigl(1 + \error[0] \bigr) \omega^\hprime \right) \odot \barq^{\odot 2}
        \right] \nend 
    &\quad+\sum_{h'=1}^H \frac{1}{d_x} \EE\left[
        \left(\vone_{d_x}^\top \Lambda \bigl(\mu^\h \odot \mu^\hprime\bigr)\vone_{d_x} +  \Lambda \bigl(\mu^\h \odot \mu^\hprime\bigr)\right) \odot \left(\error \cdot \omega^\h +\bigl({p^\h}^\top p^\hprime + \error\bigr)\cdot  \omega^\hprime \right)  \odot \barq^{\odot 2}
    \right] \nend 
    &\quad + \sum_{h'=1}^H \frac{1}{d_x} \EE\left[
        \bigl(-2\norm{p^\h}_2^2 + \error \bigr)\cdot \bigl\langle \Lambda \bigl(\mu^\h \odot \mu^\hprime\bigr) \odot \omega^\h \odot \omega^\hprime, \barq^{\odot 2} \bigr\rangle \cdot \omega^\h \odot \barq^{\odot 2}\right] \nend 
    &\quad + \sum_{h'=1}^H \frac{1}{d_x} \EE\left[
        \bigl({p^\h}^\top p^\hprime + \error \bigr) \cdot \bigl\langle \Lambda \bigl(\mu^\h \odot \mu^\hprime\bigr) \odot \omega^\h \odot \omega^\hprime, \barq^{\odot 2} \bigr\rangle \cdot \omega^\hprime \odot \barq^{\odot 2}\right] \nend 
    &\quad + \sum_{(\omega_1, \omega_2, \omega_3)\in \Omega}\sum_{h'=1}^H \frac{1}{d_x} \EE\left[
        \error[1] \cdot \bigl\langle \Lambda \bigl(\mu^\h \odot \mu^\hprime\bigr) \odot \omega_1 \odot \omega_2, \barq^{\odot 2} \bigr\rangle \cdot \omega_3 \odot \barq^{\odot 2}\right], 
\end{align*}
where we use $\error[k]$ to hide terms that are functions of $(p^\h, p^\hprime)$ and are of effective order higher than $k$. 
In addition, we denote by $\Omega = \{\omega^\h, \omega^\hprime\}^3 \setminus \{(\omega^\h, \omega^\hprime, \omega^\h), (\omega^\h, \omega^\hprime, \omega^\hprime)\}$.
Here, we also use the fact that $\operYtoXbeta \circ (\tilde U_Y^\top U_Y) = \diag(\Lambda (\tilde\mu\odot \mu))$ and $ \operYtoXbeta \circ (U_Y^\top) = \diag(\Lambda\mu)$ due to the fact that $\tilde U_Y$ and $U_Y$ are simultaneously diagonalizable by $\varPsi$. 
Similarly, we have for each terms in $\varPsi^\top B_Y^\h \varPsi$ that
\begin{align*}
    &\beta^\h = -\frac{1}{d_x} \EE\left[
        \bigl(1 - \norm{p^\h}_2^2 \bigr) \cdot \Lambda^\top (\omega^\h \odot \barq^{\odot 2}) \right] + \sum_{h'=1}^H \sigma^2 \EE\left[{p^\h}^\top p^\hprime \cdot \mu^\hprime\right] \nend 
    &\qqquad + \sum_{h'=1}^H \frac{1}{d_x} \EE\left[
        \mu^\hprime \odot \Lambda^\top \left(\left(
            \omega^\hprime \odot \omega^\h (1 + \error[0]) + {\omega^\h}^{\odot 2} \error + {\omega^\hprime}^{\odot 2} \error
        \right) \odot \barq^{\odot 2} \right)
    \right] \nend 
    &\qqquad + \sum_{h'=1}^H \frac{1}{d_x} \EE\left[{p^\h}^\top p^\hprime \cdot \mu^\hprime \odot \bigl(\Lambda^\top \vone_{d_y}\bigr) \right]. 
\end{align*}

\subsubsection{Error Analysis for the Approximation of $A_{XX}^\h$ and $B_Y^\h$}
\label{sec:appendix_dynamics_error}
In order to derive an approximation of $A_{XX}^\h$ and $B_Y^\h$, we propose the following conditions on $\mu^\h$ and $\omega^\h$.
\begin{condition}
    \label[condition]{cond:bounded weights}
    For fixed $\epsilon\in(0, 1)$ and $\epsilon_0\in(0, 1)$, we consider the following regime for $\{\omega^\h\}_{h\in[H]} $:
    \begin{gather*}
        \norm{\omega^\h}_\infty \le   L^{-1/4} \cdot (\log L)^{-1/2} , \quad \norm{\omega^\h}_2^2 \le \frac{2 \log L}{3 c^2}, \quad
        \norm{\omega^\h}_4^4 \le L^{-(1-\epsilon_0)}\cdot (\log L)^{-1}, 
    \end{gather*}
    where $c\in\RR_+$ is the minimal constant satisfying 
    \begin{align*}
        \max\left\{\frac{1}{c} + \frac{3}{1 + \sqrt{1+c^2/2}}, \frac{12\sqrt{5}}{2c} \right\} \le \epsilon.
    \end{align*}
    We also consider the following regime for $\{\mu^\h\}_{h\in[H]}$:
    \begin{align*}
        \muihprime \baromegaihprime \ge 0, \quad
        \sum_{h'=1}^H  \muihprime \baromegaihprime \le O(1),  \quad 
        \bigl|\muih\bigr| \le \sqrt{2 L \phi_i^{-1}} , \quad \forall i\in [d_y], h\in[H], 
    \end{align*}
    where $\phi_i = 1 + \snr_i^{-1} = 1 + (\sigma^2 d) / (d_i \lambda_i)$. 
\end{condition}
The above conditions on $\omega^\h$ is motivated by \Cref{lemma: E[pq]-moment} and \Cref{lemma: higher-order-moment} where in \Cref{lemma: higher-order-moment} we plug in $\Deg(\cG)\le 6$ and $\kappa=2$ in the condition $\frac{2\Deg(\cG)\sqrt{2\kappa+1}}{\epsilon \kappa} \le c$. The fact $\Deg(\cG)\le 6$ is according to the highest degree in \Cref{tab:p-moments in A and B}. 
Under \Cref{cond:bounded weights}, we conclude from \Cref{lemma: E[pq]-moment} that
\begin{align}
    \EE\left[\left( \mathbb{E}[{p^\h}^{\top} p^\hprime\given q]-\frac{\exp\bigl(\bigl\langle \omega^\h, \omega^\hprime \bigr\rangle\bigr) }{L} \right)^2 \right] \le O(L^{- (3 -\epsilon_0)}).
    \label{eq:E[p tildep]-approx}
\end{align}
And if $\efforder(\cG) \ge 2$ and $\Deg(\cG) \le 6$  for some graph $\cG$, then the $\cG$-induced polynomial $f_\cG(p^\h, p^\hprime)$ satisfies
\begin{align}
\EE\left[f_\cG(p^\h, p^\hprime)^{2}\right] \le O(L^{-4\cdot (1-\epsilon)})
\label{eq:E[f]-moment ub}
\end{align}
by \Cref{lemma: higher-order-moment}.

To use the previous results, we need to decouple the randomness in $f_\cG(p^\h, p^\hprime)$ and some polynomials of $\barq$. 
\paragraph{Notations.}
In the sequel, we use $f_{=1}$ to denote terms with effective order equal to $1$  and $f_{>1}$ to denote terms with effective order greater than or equal to $2$ where we hide the dependence on $(p^\h, p^\hprime)$ in the notation.
For our need, we consider two kinds of polynomials in $\barq$: (i) $g(\barq) = \barq^{\odot 2}$ and (ii) $g(\barq) = \langle v, \barq^{\odot 2}\rangle\cdot \barq^{\odot 2}$ where $v\in\RR_+^{d}$ is a constant vector. 
We abbreviate $g(\barq)$ as $g$ in the sequel.
We drop the superscript $(\cdot)^\h$ and replace $(\cdot)^\hprime$ by $\tilde{(\cdot)}$ for simplicity.
We denote by $v_i$ the $i$-th entry of $v$ and $v_{-i}$ the vector obtained from $v$ with the $i$-th entry removed.

\begin{lemma}
    \label[lemma]{lem:decoupling error}
    Suppose $v\in\RR_+^{d}$ or $v\in\RR_{-}^{d}$.  
    Under \Cref{cond:bounded weights}, we have 
    \begin{gather*}
        \max\left\{\left| \frac{\EE\left[ (p^\top \tilde p + f_{>1}) \cdot \langle v, \barq^{\odot 2} \rangle \cdot \barq_i^2\right]}{\exproduct[\omega][\tilde\omega]\cdot L^{-1} \cdot (\langle v\rangle + 2 v_i) } - 1  \right|, \quad \left| \frac{\EE\left[ (p^\top \tilde p + f_{>1}) \cdot \barq_i^2\right]}{\exproduct[\omega][\tilde\omega]\cdot L^{-1} } - 1  \right|  \right\}\le O(L^{-(1-\epsilon_0)/2}), \\
        \max\left\{ \left|\frac{\EE\left[  f_{>1} \cdot \langle v, \barq^{\odot 2} \rangle \cdot \barq_i^2\right]}{ (\langle v\rangle + 2 v_i) }\right| , \quad \left|{\EE\left[  f_{>1} \cdot \barq_i^2\right]}{ }  \right|\right\}\le O( L^{-2(1-\epsilon)}). 
    \end{gather*}
\end{lemma}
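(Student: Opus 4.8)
The plan is to establish \Cref{lem:decoupling error} by decoupling the two sources of randomness — the context $X$ (equivalently the attention probabilities $p,\tilde p$) and the rotated query $\barq$ — and then invoking the second-moment bounds \eqref{eq:E[p tildep]-approx} and \eqref{eq:E[f]-moment ub} together with elementary Gaussian moment computations for polynomials in $\barq$. First I would recall from \Cref{fact:p is a function of Wq's 2-norm} that $\EE[p^\top\tilde p + f_{>1}\given q]$, viewed as a function of $\barq$, depends only on the three quadratic forms $\langle\omega^{\odot2},\barq^{\odot2}\rangle$, $\langle\tilde\omega^{\odot2},\barq^{\odot2}\rangle$, and $\langle\omega\odot\tilde\omega,\barq^{\odot2}\rangle$. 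Under \Cref{cond:bounded weights} each of these is of order $O(\log L)$ with high probability (by the chi-square tail bound, as in the proof of \Cref{lemma: E[pq]-moment}), so conditioning on $\barq$ we may replace $\EE[p^\top\tilde p\given q]$ by its approximation $\exp(\langle\omega,\tilde\omega\rangle)/L$ up to the error $\xi_L = O(L^{-2(1-\epsilon)})$ from \Cref{lemma: pseudo-dynamics}, and this conditional approximant is (to leading order) a \emph{constant} in $\barq$ since $\langle\omega,\tilde\omega\rangle$ does not depend on $\barq$. The fluctuation of the conditional mean around this constant is controlled in $L^2$ by \eqref{eq:E[p tildep]-approx}.

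Next I would split each target expectation into a ``main term'' and an ``error term.'' For the first claim, write
\[
\EE\big[(p^\top\tilde p + f_{>1})\cdot\langle v,\barq^{\odot2}\rangle\cdot\barq_i^2\big]
= \frac{\exp(\langle\omega,\tilde\omega\rangle)}{L}\,\EE\big[\langle v,\barq^{\odot2}\rangle\barq_i^2\big] + \text{(error)},
\]
and use that for $\barq\sim\cN(0,I_d)$ one has $\EE[\langle v,\barq^{\odot2}\rangle\barq_i^2] = \langle v\rangle + 2v_i$ (since $\EE[\barq_i^4]=3$, $\EE[\barq_j^2\barq_i^2]=1$ for $j\neq i$), and similarly $\EE[\barq_i^2]=1$; this yields the claimed leading behavior. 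The error term has two pieces: the fluctuation of $\EE[p^\top\tilde p\given q]$ around $\exp(\langle\omega,\tilde\omega\rangle)/L$, and the contribution of $f_{>1}$. For the first, apply Cauchy–Schwarz against $\langle v,\barq^{\odot2}\rangle\barq_i^2$ (which has bounded moments, the polynomial factor $\langle v\rangle+2v_i$ pulling out cleanly since $v$ has a definite sign so there is no cancellation) and plug in \eqref{eq:E[p tildep]-approx}, giving a relative error $O(L^{-(3-\epsilon_0)/2}\cdot L) = O(L^{-(1-\epsilon_0)/2})$ after dividing by $\exp(\langle\omega,\tilde\omega\rangle)L^{-1}(\langle v\rangle+2v_i)$, using $\langle\omega,\tilde\omega\rangle = O(\log L)$ so $\exp(\langle\omega,\tilde\omega\rangle)$ is only polynomially large. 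For the second piece, since $f_{>1}$ has effective order $\ge2$ and degree $\le6$, \eqref{eq:E[f]-moment ub} gives $\EE[f_{>1}^2]\le O(L^{-4(1-\epsilon)})$, so by Cauchy–Schwarz $\EE[f_{>1}\cdot\langle v,\barq^{\odot2}\rangle\barq_i^2] = O(L^{-2(1-\epsilon)})\cdot(\langle v\rangle+2v_i)$ — which is exactly the second displayed bound in the lemma, and also subsumed into the first bound's error since $L^{-2(1-\epsilon)}\cdot L = L^{-(1-2\epsilon)}$ is smaller than $L^{-(1-\epsilon_0)/2}$ for appropriate constants. The $\barq_i^2$-only estimates follow by the same argument with $v\equiv\vone$ (or directly).

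The main obstacle I anticipate is bookkeeping the interaction between the $\barq$-polynomial factors and the conditional-expectation fluctuations: one must verify that the $O(\log L)$-size quadratic forms appearing inside $\EE[p^\top\tilde p\given q]$ do not spoil the moment bounds when multiplied by $\langle v,\barq^{\odot2}\rangle\barq_i^2$, which requires either a truncation onto the high-probability event $\{\|\omega\|$-weighted norms of $\barq$ are $O(\sqrt{\log L})\}$ (handling the low-probability complement crudely by $|p^\top\tilde p|\le1$ and a Gaussian tail bound on the polynomial factor), or a direct Cauchy–Schwarz with a fourth-moment control on $\langle v,\barq^{\odot2}\rangle\barq_i^2$. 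A secondary subtlety is ensuring the sign condition $v\in\RR_+^d$ (or $\RR_-^d$) is genuinely used — it guarantees $\langle v\rangle + 2v_i$ is bounded away from zero in the relative-error normalization and prevents cancellation in $\EE[\langle v,\barq^{\odot2}\rangle\barq_i^2]$ — so I would flag where this hypothesis enters. Everything else is routine Gaussian moment arithmetic and assembly of the constituent bounds already proved in \Cref{sec: p-moment}.
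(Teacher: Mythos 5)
Your proposal is correct and follows essentially the same route as the paper's proof: replace $p^\top\tilde p + f_{>1}$ by the constant $L^{-1}\exp(\langle\omega,\tilde\omega\rangle)$, bound the fluctuation in $L^2$ via \eqref{eq:E[p tildep]-approx} and \eqref{eq:E[f]-moment ub}, compute $\EE[\langle v,\barq^{\odot 2}\rangle\barq_i^2]=\langle v\rangle+2v_i$ exactly, and close with Cauchy--Schwarz plus the observation that the definite sign of $v$ makes $\sqrt{\Var[g]}/\EE[g]=O(1)$. The paper resolves your anticipated "bookkeeping" obstacle exactly as in your secondary option — a direct Cauchy--Schwarz against the fourth moment of the polynomial factor, with no truncation needed.
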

\begin{proof}(\emph{Proof of \Cref{lem:decoupling error}})
    See Appendix \ref{sec:proof_decoupling error} for a detailed proof.
\end{proof}

\paragraph{Notations.}
Recall that 
\begin{align*}
    \Lambda = \diag\bigl(\lambda_1 \vone_{d_1}, \dots, \lambda_I \vone_{d_I}, \vzero_{d_{I+1}}, \dots, \vzero_{d_{d_y}}\bigr),
\end{align*}
where $\vone_{d_i}$ is a $d_i$-dimensional all-one vector. 
For the eigenvalues $\mu^\h$ and $\omega^\h$, we denote by $\muih$ the $i$-th entry of $\mu^\h$, and $\baromegaih$ the average of the $i$-th block of $\omega^\h$, where the blocks are defined by the row partition of $\Lambda$ for $i \in [d_y]$. 
Moreover, if $I<d_y$, we assign nominal tasks indexed by $j=I+1, \dots, d_y$ with $\lambda_j = 0$, dimension $d_j$ such that $\sum_{j=I+1}^{d_y} d_j = d - \sum_{i=1}^I d_i$, and define $\mujh$ and $\baromegajh$ for $j\in\{I+1, \dots, d_y\}$ accordingly.
We let $\vecd = (d_1, \dots, d_{d_y})$ be the vector containing the dimensions for each corpus. 
We denote by $\baralpha_i^\h$ the average of $\alpha^\h$ within the $i$th block and $\baralpha = (\baralpha_1, \dots, \baralpha_{d_y})$ the vector containing the averages for each corpus.
Equipped with \Cref{lem:decoupling error}, we have for each terms in $\varPhi^\top A_{XX}^\h \varPhi$ that
\begin{align}
    &d \baralpha_\intf^\h 
    = \sum_{h'=1}^H \lambda \odot \mu^\h \odot \mu^\hprime \odot  \baromega^\hprime \odot \left(1\pm L^{-1}\right) 
    \label{eq:alpha intf-1}
    \\
    & + \sum_{h'=1}^H   \lambda \odot \mu^\h \odot \mu^\hprime \odot   \baromega^\h \odot \left(\pm L^{-2(1-\epsilon)}\right)
    \label{eq:alpha intf-2}
    \\ 
    &+ \sum_{h'=1}^H  \fracexproductL[\omega^\h][\omega^\hprime]  \bigl\langle  \vecd \odot \lambda, \mu^\h \odot \mu^\hprime \bigr\rangle  \baromega^\hprime \odot \left(1\pm (d_{\min}^{-1} + L^{-\frac{1-\epsilon_0}{2}}) \right)
    \label{eq:alpha intf-3}
    \\
    &+ \sum_{h'=1}^H   \bigl\langle \vecd \odot  \lambda, \mu^\h \odot \mu^\hprime \bigr\rangle \baromega^\h  \odot \left(\pm L^{-2(1-\epsilon)}\right)
    \label{eq:alpha intf-4}
    \\
    & - \sum_{h'=1}^H  2\fracexproductL[\omega^\h][\omega^\h] \bigl\langle \vecd \odot \lambda , \baromega^\h \odot \baromega^\hprime \odot \mu^\h \odot \mu^\hprime\bigr\rangle \baromega^\h \odot \left(1\pm \bigl(d_{\min}^{-1}+L^{-\frac{1-\epsilon_0}{2}} \bigr)\right)
    \label{eq:alpha intf-5}
    \\
    &+ \sum_{h'=1}^H  \fracexproductL[\omega^\h][\omega^\hprime] \bigl\langle \vecd \odot \lambda , \baromega^\h \odot \baromega^\hprime \odot \mu^\h \odot \mu^\hprime\bigr\rangle \baromega^\hprime \odot \left(1\pm \bigl(d_{\min}^{-1}+L^{-\frac{1-\epsilon_0}{2}} \bigr)\right)
    \label{eq:alpha intf-6}
    \\
    &\quad + \sum_{(\baromega_1, \baromega_2, \baromega_3)\in \Omega}  \sum_{h'=1}^H \bigl\langle \vecd \odot \lambda , \bigl|\baromega_1 \odot \baromega_2 \odot \mu^\h \odot \mu^\hprime \bigr|\bigr\rangle \baromega_3 \odot \left(\pm   L^{-2(1-\epsilon)}  \right), 
    \label{eq:alpha intf-7}
\end{align}
\begin{align*}
    d \baralpha_\signal^\h &=  2\fracexproductL[\omega^\h][\omega^\h] \bigl\langle \vecd \odot \lambda , \baromega^\h \odot \mu^\h \bigr\rangle \baromega^\h \odot \left(1\pm (d_{\min}^{-1} + L^{-\frac{1-\epsilon_0}{2}})\right) - \lambda \odot \mu^\h \odot (1 \pm L^{-1}), \nend 
    d \baralpha_\noise^\h 
    &=  \sum_{h'=1}^H  \sigma^2 d\fracexproductL[\omega^\h][\omega^\hprime] \bigl\langle \mu^\h, \mu^\hprime \bigr\rangle \baromega^\hprime \odot \left(1\pm L^{-\frac{1-\epsilon_0}{2}}\right) \nend 
    &\qquad 
    +  \sum_{h'=1}^H \sigma^2 d \bigl\langle \mu^\h, \mu^\hprime \bigr\rangle \baromega^\h \odot \left(\pm L^{-2(1-\epsilon)} \right), 
\end{align*}
where we have 
$$\Omega = \left(\{\baromega^\h, \baromega^\hprime\}^{\otimes 2} \setminus \{(\baromega^\h, \baromega^\hprime)\}\right) \otimes \{\baromega^\h, \baromega^\hprime\}, $$
as all the possible combinations of $(\baromega^\h, \baromega^\hprime)$ except for the combinations that already appear in \eqref{eq:alpha intf-5} and \eqref{eq:alpha intf-6}.
Here, we define $d_{\min} = \min_{i\in[I]} d_i$ and the additional error $d_{\min}^{-1}$ comes from the approximation of $\langle v \rangle + 2 v_i$ in \Cref{lem:decoupling error} with $\langle v\rangle$ since there are at least $d_{\min}$ terms equal to $v_i$ in the summation $\langle v \rangle$ and each coordinate of $v$ is either non-negative or non-positive.
Here, $x = a \pm b$ means that the upper bound for $x$ is $a+O(b)$ and the lower bound is $a-O(b)$.
In addition, we are able to invoke \Cref{lem:decoupling error} for \eqref{eq:alpha intf-5}, \eqref{eq:alpha intf-6} and the first term in $d \baralpha_\sigma^\h$ since $\baromega^\h\odot\baromega^\hprime\odot \mu^\h\odot \mu^\hprime$ and $\mu^\h \odot \baromega^\h$ have non-negative entries according to the condition $\muih \baromegaih\ge 0$ in \Cref{cond:bounded weights}.
Due to the same reason, we include an absolute value in \eqref{eq:alpha intf-7} to ensure that the entries of $\baromega_1 \odot \baromega_2 \odot \mu^\h \odot \mu^\hprime$ are non-negative in order to invoke \Cref{lem:decoupling error}, where we treat \eqref{eq:alpha intf-7} as an error term. Similarly, 
for   $\varPsi^\top B_Y^\h \varPsi$ we have
\begin{align}
    d \beta^\h 
    &= -  \vecd \odot \lambda \odot \baromega^\h \odot \left(1 \pm L^{-1}\right)
    +  \sum_{h'=1}^H  \sigma^2 d\fracexproductL[\omega^\h][\omega^\hprime] \mu^\hprime \odot \left(1\pm L^{-\frac{1-\epsilon_0}{2}}\right)
    \label{eq:beta signal and noise}
    \\
    &\qquad +  \sum_{h'=1}^H  \vecd \odot \lambda \odot   \baromega^\h \odot \baromega^\hprime \odot \mu^\hprime \odot \left(1\pm L^{-1}\right) 
    \label{eq:beta intf-1}
    \\
    &\qquad + \sum_{h'=1}^H  \vecd \odot \lambda \odot  \left({\baromega^\h}^{\odot 2} + {\baromega^\hprime}^{\odot 2}
    \right)  \odot \mu^\hprime \odot \left(\pm L^{-2(1-\epsilon)}\right)
    \label{eq:beta intf-2}
    \\
    &\qquad + \sum_{h'=1}^H  \fracexproductL[\omega^\h][\omega^\hprime] \vecd \odot \lambda \odot \mu^\hprime \odot \left(1 \pm L^{-\frac{1-\epsilon_0}{2}}\right)
    \label{eq:beta intf-3}
    , 
\end{align}

\subsubsection{Further Approximation for Symmetric Weights}
In this part, we consider the case where both $W_X^\h$ and $U_Y^\h$ are \ac{psd} matrices, which means that $\muih \ge 0$ and $\baromegaih \ge 0$ for all $i \in [d_y]$ and $h \in [H]$.
We first simplify $\baralpha_\intf^\h$. 
We consider the second order terms which scales as $L^{-2(1-\epsilon)}$. 
For \eqref{eq:alpha intf-7}, we note that 
the possible combinations for $(\baromega_1, \baromega_2)$ are $(\baromega^\h, \baromega^\h)$, $(\baromega^\hprime, \baromega^\hprime)$, and $(\baromega^\hprime, \baromega^\h)$ while $\baromega_3$ can be either $\baromega^\h$ or $\baromega^\hprime$.
If $(\baromega_1, \baromega_2) = (\baromega^\hprime, \baromega^\h)$, 
we can get rid of the absolute value in \eqref{eq:alpha intf-7}.
To this end, we notice that the form of this term is then no different from either \eqref{eq:alpha intf-5} or \eqref{eq:alpha intf-6} depending on the choice of $\baromega_3$, while both \eqref{eq:alpha intf-5} and \eqref{eq:alpha intf-6} can only be larger than $L^{-1}$.  
Hence, we can add an additional error term $L^{-(1-2\epsilon)}$ to both \eqref{eq:alpha intf-5} and \eqref{eq:alpha intf-6} and remove the case $(\baromega_1, \baromega_2) = (\baromega^\hprime, \baromega^\h)$ from \eqref{eq:alpha intf-7}.

We further simplify \eqref{eq:alpha intf-7} if both $\mu$ and $\baromega$ are non-negative. 
For the remaining cases, we note that $\baromega_1 = \baromega_2 $ and $\baromega_1^{\odot 2}$ has each element of order $o(1)$ according to \Cref{cond:bounded weights}.
If  $\baromega_3 = \baromega^\h$, we have \eqref{eq:alpha intf-7} bounded by \eqref{eq:alpha intf-4} element-wise and if $\baromega_3 = \baromega^\hprime$, we have \eqref{eq:alpha intf-7} bounded by \eqref{eq:alpha intf-3} times $L^{-(1- 2\epsilon)}$ element-wise.
For the second term \eqref{eq:alpha intf-2}, we also have by the non-negativity of $\mu$ that it is bounded by \eqref{eq:alpha intf-4} element-wise.
Hence, we conclude that only \eqref{eq:alpha intf-4} survives as the second order term in $\baralpha_\intf^\h$ if both $\mu$ and $\baromega$ are non-negative.

Next, we consider the first order terms which scales as $L^{-1}$.
By noting that $\mu^\hprime \odot \baromega^\hprime < O(1)$ element-wise according to \Cref{cond:bounded weights}, we have that \eqref{eq:alpha intf-5} is upper bounded by the first term in $d \baralpha_\signal^\h$ up to some constant.
By noting that $\|\baromega^\h\|_\infty < O(L^{-(1-\epsilon_0)/4})$ according to \Cref{cond:bounded weights}, we have that \eqref{eq:alpha intf-6} is upper bounded by \eqref{eq:alpha intf-3} times $L^{-(1-\epsilon_0)/2}$.
In summary, we have for $\baralpha^\h$ that 
\begin{align}
    &d \baralpha^\h  
    =- \left(1\pm L^{-1}\right)\odot  \lambda \odot \mu^\h 
    + \sum_{h'=1}^H \left(1\pm L^{-1}\right)\odot \lambda \odot \mu^\h \odot \mu^\hprime \odot  \baromega^\hprime 
    \label{eq:alpha intf-1234}
    \\
    & \quad + \sum_{h'=1}^H  \fracexproductL[\omega^\h][\omega^\hprime]  \bigl\langle  \vecd \odot \lambda \odot \phi, \mu^\h \odot \mu^\hprime \bigr\rangle  \baromega^\hprime \odot \left(1\pm (d_{\min}^{-1} + L^{-\frac{1-\epsilon_0}{2}} + L^{-(1-2\epsilon)}) \right) \nend 
    &\quad + \fracexproductL[\omega^\h][\omega^\h] \bigl\langle \vecd \odot \lambda , \baromega^\h \odot \mu^\h \bigr\rangle \baromega^\h \odot \left(\pm 1\right)
    + \sum_{h'=1}^H   \bigl\langle \vecd \odot  \lambda \odot \phi, \mu^\h \odot \mu^\hprime \bigr\rangle  \baromega^\h \odot \left(\pm L^{-2(1-\epsilon)}\right). \notag
\end{align}
For $\beta^\h$, we note that the second term \eqref{eq:beta intf-2} is of order $L^{-2(1-\epsilon)}$. 
Note that $\norm{\baromega^\h}_\infty \le O(L^{-(1-\epsilon_0)/4})$ according to \Cref{cond:bounded weights}. 
Hence, we have \eqref{eq:beta intf-3} bounded by \eqref{eq:beta intf-3} times $L^{-(1-\epsilon_0)/2}$.
In summary, we have for $\beta^\h$ that
\begin{align}
    d \beta^\h 
    &= - \left(1 \pm L^{-1}\right) \odot \vecd \odot \lambda \odot \baromega^\h +  \sum_{h'=1}^H \left(1\pm L^{-1}\right) \odot \vecd \odot \lambda \odot   \baromega^\h \odot \baromega^\hprime \odot \mu^\hprime  
    \nend
    & \qquad + \sum_{h'=1}^H \fracexproductL[\omega^\h][\omega^\hprime] \vecd \odot \lambda \odot \phi \odot \mu^\hprime \odot \left(1 \pm L^{-\frac{1-\epsilon_0}{2}}\right) 
    .
    \label{eq:beta intf-123}
\end{align}

\newpage 
\subsection{Preservation of The Decomposability Condition along Gradient Flow}
\label[appendix]{sec:proof of decomposability preserved}

In the previous section, we have shown that the decomposability of the weights implies that both $A^\h$ and $B^\h$ have only one nonzero block, and these submatrices can be diagonalized by $\varPhi$ and $\varPsi$.
In the following, we prove that the Decomposability Condition is preserved during the dynamics, given that the initialization of gradient flow satisfies the Decomposability Condition.

\begin{proof}(\emph{Proof of \Cref{lem:decomposability preserved}})
Below we verify the preservation of the conditions in \Cref{def:decomposability property} one by one.
\paragraph{Preservation of $U_X^\h = 0$ and $W_Y^\h = 0$.}
We first show the easy part in \Cref{cond:U_X and W_Y are zero} that $U_X^\h = 0$ and $W_Y^\h = 0$ during the dynamics. 
Recall from \Cref{sec:simplify_AB} that $A^\h$ only has the left-top block $A_{XX}^\h\in\RR^{d\times d}$ being non-zero. 
Therefore, the time-derivative of $W_Y^\h={K_Y^\h}^\top Q_X^\h$ satisfies
\begin{align*}
    \partial_t W_Y^\h 
    &= \partial_t {K_Y^\h}^\top Q_X^\h + {K_Y^\h}^\top \partial_t Q_X^\h \nend 
    &= - d_e^{-1/2} \bigg(\begin{bmatrix} {A_{YX}^\h} & {A_{YY}^\h}\end{bmatrix} {Q^\h}^\top Q_X^\h + {K_Y^\h}^\top K^\h \begin{bmatrix}A_{XX}^\h \\ {A_{YX}^\h}\end{bmatrix} \bigg)\nend 
    &=- d_e^{-1/2} {K_Y^\h}^\top K_X^\h A_{XX}^\h = 0, 
\end{align*}
where the last equality holds by \Cref{cond:orthogonal} that $\vspan(K_X^\h) \perp \vspan(K_Y^\h)$.
Also, we note that $B^\h$ only has the right block $B_{Y}^\h\in\RR^{d_y\times d_y}$ being non-zero.
Thus the time-derivative of $U_X^\h = O^\h V_X^\h$ satisfies
\begin{align*}
    \partial_t U_X^\h 
    &= \partial_t O^\h V_X^\h + O^\h \partial_t V_X^\h \nend
    &= - \begin{bmatrix} {B_X^\h} & B_Y^\h\end{bmatrix} {V^\h}^\top V_X^\h - O^\h {O^\h}^\top {B_X^\h} \\ 
    &= - B_Y^\h {V_Y^\h}^\top V_X^\h = 0, 
\end{align*}
where the last equality holds by \Cref{cond:orthogonal} that $\vspan(V_X^\h) \perp \vspan(V_Y^\h)$.
Hence, we conclude that $U_X^\h \equiv 0$ and $W_Y^\h \equiv 0$ along the gradient flow trajectory if they are initialized to be zero.

\paragraph{Preservation of Subspace Orthogonality.}
We next show that \Cref{cond:orthogonal} is preserved, for which it suffices to show that $\partial_t ({K_Y^\h}^\top K_X^\h) \equiv 0$ and $\partial_t ({V_X^\h}^\top V_Y^\h) \equiv 0$.
The time derivatives of $K_X^\h$ and $K_Y^\h$ are given by
\begin{align*}
    \partial_t K_X^\h = - d_e^{-1/2} Q^\h \begin{bmatrix}
        {A_{XX}^\h}^{\!\!\!\top} \\ {\color{violet}{A_{XY}^\h}^{\!\!\!\top}}
    \end{bmatrix}
    = - d_e^{-1/2} Q_X^\h {A_{XX}^\h}^{\!\!\!\top}, \quad
    \partial_t K_Y^\h = - d_e^{-1/2} Q^\h \begin{bmatrix}
        {A_{YX}^\h}^{\!\!\!\top} \\ 
        {{A_{YY}^\h}^{\!\!\!\top}}
    \end{bmatrix} = 0.
\end{align*}
These further imply
\begin{align*}
    \partial_t ({K_Y^\h}^\top K_X^\h) = {K_Y^\h}^\top \partial_t K_X^\h = - d_e^{-1/2} {K_Y^\h}^\top Q_X^\h {A_{XX}^\h}^{\!\!\!\top} = - d_e^{-1/2} {W_Y^\h} {A_{XX}^\h}^{\!\!\!\top} = 0.
\end{align*}
Similarly, we have $\partial_t V_X^\h = - {O^\h}^\top {B_X^\h} = 0, \partial_t V_Y^\h = - {O^\h}^\top B_Y^\h$, and thus
\begin{align*}
    \partial_t({V_X^\h}^\top V_Y^\h) = {V_X^\h}^\top \partial_t V_Y^\h = - {V_X^\h}^\top {O^\h}^\top B_Y^\h = - {{U_X^\h}^\top} B_Y^\h = 0.
\end{align*}
In the above two derivations, we have used \Cref{cond:U_X and W_Y are zero} that $U_X^\h = 0$ and $W_Y^\h = 0$.

\paragraph{Preservation of \Cref{cond:U_Y and W_X aligned} via diagonality of $\varPhi^\top A_{XX}^\h \varPhi$ and $\varPsi^\top B_Y^\h \varPsi$.}
Under the common singular vector space condition in \Cref{cond:U_Y and W_X aligned}, 
let $\varUpsilon^\h = \begin{bmatrix} \upsilon_1^\h, \dots, \upsilon_{d}^\h\end{bmatrix}\in\RR^{{d_e}\times d}$ and $\varPhi^\h\in\RR^{d \times d}$ be the common left- and right-singular vector matrix for both $Q_X^\h$ and $K_X^\h$, and $\varTheta^\h = \begin{bmatrix} \theta_1^\h, \dots, \theta_{d_y}^\h\end{bmatrix} \in\RR^{{d_e}\times d_y}$ and $\varPsi^\h\in\RR^{d_y\times d_y}$ be the common left- and right-singular vector matrix for both $V_Y^\h$ and ${O^\h}^\top$.
Simply put, we can decompose $Q_X^\h, K_X^h, V_Y^\h$ and $O^\h$ into
\begin{align*}
    Q_X^\h = \varUpsilon^\h \diag(\sigma(Q_X^\h)) \varPhi^\top, \quad K_X^\h = \varUpsilon^\h \diag(\sigma(K_X^\h)) \varPhi^\top, \\
    V_Y^\h = \varTheta^\h \diag(\sigma(V_Y^\h)) \varPsi^\top, \quad O^\h = \varPsi \diag(\sigma(O^\h)) \varTheta^\h, 
\end{align*}
where $(\sigma(Q_X^\h), \sigma(K_X^\h)) \in\RR^{d}$ and $(\sigma(V_Y^\h), \sigma(O^\h))\in\RR^{d_y}$ are the semi-singular values\footnote{We use the phrase \say{semi-singular} since $\sigma(Q_X^\h), \sigma(K_X^\h), \sigma(V_Y^\h)$ and $\sigma(O^\h)$ are not necessarily non-negative.}.
Note that we also have $\sigma(Q_X^\h) \odot \sigma(K_X^\h) = \omega^\h$ and $\sigma(V_Y^\h) \odot \sigma(O^\h) = \mu^\h$.
Our goal is to show that the singular vector spaces remain unchanged for all these four matrices during the dynamics.
With the observations that only $A_{XX}^\h$ and $B_Y^\h$ are non-zero, we have the following dynamics as we have derived in the previous step:
\begin{align*}
    \partial_t K_X^\h &= - d_e^{-1/2} Q_X^\h {A_{XX}^\h}^{\!\!\!\top} = - d_e^{-1/2} \varUpsilon^\h \diag(\sigma(Q_X^\h)) \bigl(\varPhi^\top {A_{XX}^\h} \varPhi\bigr)^\top \varPhi^\top, 
    \\ 
    \partial_t Q_X^\h &= - d_e^{-1/2} K_X^\h {A_{XX}^\h} = - d_e^{-1/2} \varUpsilon^\h \diag(\sigma(K_X^\h)) \bigl(\varPhi^\top {A_{XX}^\h} \varPhi\bigr) \varPhi^\top ,
    \\
    \partial_t O^\h &= - B_Y^\h {V_Y^\h}^\top = - \varPsi \bigl(\varPsi^\top B_Y^\h \varPsi\bigr) \diag(\sigma(V_Y^\h)) {\varTheta^\h}^\top, 
    \\ 
    \partial_t V_Y^\h &= - {O^\h}^\top B_Y^\h = - \varTheta^\h \diag(\sigma(O^\h)) \bigl(\varPsi^\top B_Y^\h \varPsi\bigr) {\varPsi}^\top.
\end{align*}
To this end, it suffices to verify that both $\varPhi^\top A_{XX}^\h \varPhi$ and $\varPsi^\top B_Y^\h \varPsi$ are diagonal matrices in order to show that the singular vector spaces remain unchanged.
The diagonality of $\varPsi^\top B_Y^\h \varPsi$ and $\varPhi^\top A_{XX}^\h \varPhi$ is shown by \Cref{prop:simplify_AB}.
Hence, we conclude that the singular vector spaces remain unchanged during the dynamics.


\paragraph{Preservation of \Cref{cond:task-wise homogeneous}.}
Note that the previously, \Cref{cond:orthogonal} and \Cref{cond:U_Y and W_X aligned} are self-preserved during the dynamics. 
We next show that \Cref{cond:task-wise homogeneous} is also preserved during the dynamics if the current state satisfies this condition.
Let $\cJ_i$ be the index set of the support of the $i$-th task as we have defined in the main text.
Given that $\sigma(Q_X^\h)_m = \sigma(Q_X^\h)_n$ for any $m, n\in \cJ_i$, and the same for $K_X^\h$, it suffices to check that the diagonal entries of $\varPhi^\top A_{XX}^\h \varPhi$ are the same for any $m, n\in \cJ_i$.
We say a $\RR^{d\times \star}$ matrix (or vector) is \ac{wth} if the $m$-th and the $n$-th rows (or entries) are always the same.
Note that operators $\barcT_l$ (by the \ac{wth} of $\omega^\h$) and $\cA$ (by the \ac{wth} property of $\Lambda= \EE[G^{\odot 2}]$)
are both \ac{wth}.
Therefore, it suffices to check that both
\begin{align*}
    \EE[f(s, \tilde s) \barq \barq^\top], \quad \EE\bigg[f(s, \tilde s)\sum_{j\in\cJ_i}\barq_j^2 \barq \barq^\top\bigg]
\end{align*}
have \ac{wth} \emph{diagonal} entries for any function $f$ that only depends on $s$ and $\tilde s$ and any $j\in[d]$.
These formulas capture all the terms in the decomposition of $\varPhi^\top A_{XX}^\h \varPhi$. 
The first term captures the signal term, the noise term, and the first three interference terms in the decomposition of $\varPhi^\top A_{XX}^\h \varPhi$. 
The last term captures the last interference term in the decomposition of $\varPhi^\top A_{XX}^\h \varPhi$.
Notably, by \Cref{fact:p is a function of Wq's 2-norm}, $f(s, \tilde s)$ is a function of only $\langle \omega^{\odot 2}, \cleverbar q^{\odot 2} \rangle$, $ \langle \tilde\omega^{\odot 2}, \cleverbar q^{\odot 2}\rangle$, and $\langle \omega \odot \tilde\omega, \cleverbar q^{\odot 2}\rangle$. 
For now, we can rewrite 
$\langle \omega^{\odot 2}, \barq^{\odot 2}\rangle = \sum_{i=1}^I d_i^{-1}\sum_{k\in\cJ_i} \omega_k^2 \cdot \sum_{m\in\cJ_i} \barq_m^{2}$ and the same for the remaining terms by the \ac{wth} property of $\omega$ or $\tilde\omega$. 
Let $\barq_{(i)}$ be the slice of $\barq$ that contains only the entries in $\cJ_i$.
Hence, $f(s, \tilde s)$ is just a function of $\{\norm{\barq_{(i)}}_2\}_{i\in[I]}$. 
Given the 2-norm of $\barq_{(i)}$, the posterior distribution of $\barq_{(i)}$ is shuffling-invariant.
Therefore, for the first term, we directly conclude that $\EE[f(s, \tilde s) \barq \barq^\top]$ has \ac{wth} diagonal entries.
For the second term, we combine $f(s, \tilde s)$ and $\sum_{j\in\cJ_i}\barq_j^2 \barq \barq^\top$ to form a new function that only depends on $\{\norm{\barq_{(i)}}_2\}_{i\in[I]}$.
Thus, we also conclude that $\EE\left[f(s, \tilde s)\sum_{j\in\cJ_i}\barq_j^2 \barq \barq^\top\right]$ has \ac{wth} diagonal entries.
Thus, we conclude that \Cref{cond:task-wise homogeneous} is also self-preserved during the dynamics.
\end{proof}

\newpage 
\section{Analysis of the Spectral Gradient Flow}\label[appendix]{sec:appendix_convergence_analysis}

In this section, we analyze the dynamics and convergence of gradient flow. For the analysis we separate the dynamics into two stages: the warm-up stage and the growth stage. The warm-up stage is further divided into five steps, with each step being studied and summarized afterward. The growth stage consists of the emergence and convergence phases. The dynamics paths are summarized in \Cref{sec:dynamics_path}, while the observations regarding emergence and convergence in the growth stage are summarized in \Cref{sec:emergence_and_convergence}.

Upon success of \Cref{lem:decomposability preserved}, we have 
\begin{align*}
    \partial_t \sigma(K_X^\h) = - \sqrt {d_e}^{-1} \sigma(Q_X^\h) \odot \sigma({A_{XX}^\h}^\top), 
    &\quad \partial_t \sigma(Q_X^\h) = - \sqrt {d_e}^{-1} \sigma(K_X^\h) \odot \sigma({A_{XX}^\h}), \\
    \partial_t \sigma(O^\h) = - \sigma(B_Y^\h) \odot \sigma({V_Y^\h}^\top), &\quad \partial_t \sigma(V_Y^\h) = - \sigma({O^\h}^\top) \odot \sigma(B_Y^\h), 
\end{align*}
Under the \ac{sw} initialization, we always have $\sigma(Q_X^\h) = \sigma(K_X^\h) = \sqrt{\omega^\h}$ and $\sigma(O^\h) = \sigma(V_Y^\h)=\sqrt{\mu^\h}$. 
Thus, we can further reduce the dynamics to
the combined dynamics of $\mu^\h$ and $\omega^\h$: 
\begin{align*}
    \partial_t \omega^\h &= - \sqrt {d_e}^{-1}
    \left(\sigma(Q_X^\h)^{\odot 2} +  \sigma(K_X^\h)^{\odot 2}\right) \odot \sigma({A_{XX}^\h}) &&= - 2\sqrt {d_e}^{-1} \omega^\h \odot \alpha^\h, \\
    \partial_t \mu^\h &= - \left(\sigma(O^\h)^{\odot 2} +  \sigma(V_Y^\h)^{\odot 2}\right) \odot \sigma(B_Y^\h) &&= - 2 \mu^\h \odot \beta^\h.
\end{align*}
\textbf{In the following, we rescale the time as $t \leftarrow 2 d t$.} 
Note that $\baromega^\h\in\RR^{d_y}$ is just a collection of the unique task-wise values in $\omega^\h$.
Recall the simplification of $\alpha^\h$ and $\beta^\h$ in \eqref{eq:alpha intf-1234} and \eqref{eq:beta intf-123}, which gives 
\begin{align}
    &\sqrt{\de}\partial_t \baromega^\h
    = \left(1\pm L^{-1}\right)\odot  \lambda \odot \mu^\h \odot \baromega^\h
    - \sum_{h'=1}^H \left(1\pm L^{-1}\right)\odot \lambda \odot \mu^\h \odot \mu^\hprime \odot  \baromega^\hprime \odot \baromega^\h \nend 
    &  \quad - \sum_{h'=1}^H  \fracexproductL[\omega^\h][\omega^\hprime]  \bigl\langle  \vecd \odot \lambda \odot \phi, \mu^\h \odot \mu^\hprime \bigr\rangle  \baromega^\hprime\odot \baromega^\h \odot \left(1\pm (d_{\min}^{-1} + L^{-\frac{1-\epsilon_0}{2}} + L^{-(1-2\epsilon)}) \right) \nend 
    &\quad - \fracexproductL[\omega^\h][\omega^\h] \bigl\langle \vecd \odot \lambda , \baromega^\h \odot \mu^\h \bigr\rangle (\baromega^\h)^{\odot 2} \odot \left(\pm 1\right) \nend
    &\quad - \sum_{h'=1}^H   \bigl\langle \vecd \odot  \lambda \odot \phi, \mu^\h \odot \mu^\hprime \bigr\rangle  (\baromega^\h)^{\odot 2} \odot \left(\pm L^{-2(1-\epsilon)}\right).
    \label{eq:dot baromegah}
\end{align}
Here, we denote by $\vecd$ the vector of task dimensions, i.e., $\vecd = (d_1, \ldots, d_I)^\top$.
Also, we have
\begin{align}
    \partial_t \mu^\h
    &=  \left(1 \pm L^{-1}\right) \odot \vecd \odot \lambda \odot \baromega^\h \odot \mu^\h -  \sum_{h'=1}^H \left(1\pm L^{-1}\right) \odot \vecd \odot \lambda \odot   \baromega^\h \odot \baromega^\hprime \odot \mu^\hprime  \odot \mu^\h
    \nend
    & \qquad - \sum_{h'=1}^H \fracexproductL[\omega^\h][\omega^\hprime] \vecd \odot \lambda \odot \phi \odot \mu^\hprime \odot \mu^\h \odot \left(1 \pm L^{-\frac{1-\epsilon_0}{2}}\right). 
    \label{eq:dot muh}
\end{align}
For simplicity, we let 
\begin{align*}
    \xi \triangleq L^{-1}, \quad \zeta \triangleq d_{\min}^{-1} + L^{-\frac{1-\epsilon_0}{2}} + L^{-(1-2\epsilon)},  \quad \eta \triangleq  L^{-2(1-\epsilon)}.
\end{align*}
We consider $\cI$ to be the set of tasks with $\lambda_i = \Theta(1)$ and $\cI_c$ to be the set of tasks with $\lambda_i = 0$, which we call the effective and nominal tasks, respectively.
This is a generalization of the setting in \Cref{thm:convergence-multi-head-symmetric}where all tasks are effective.
We will study the more general setting in the dynamics' analysis.

In the following, we will study the behavior of the dynamics in \eqref{eq:dot baromegah} and \eqref{eq:dot muh}.
We assume $\omega^\h$ to have the same initialization for each coordinate and have
$\baromega^\h_i(0) = \omega_0$ for each $h\in[H]$ and $i\in\cI$.
For $\mu^\h$, we don't restrict all eigenvalues to be the same. 
The following is the assumption for the initialization of $\mu^\h$.
We also let 
\begin{align*}
    \phi_i = 1 + \frac{\sigma^2 d }{d_i\lambda_i} = 1 + \snr_i^{-1}, 
\end{align*}
where $\snr_i = d_i\lambda_i/(\sigma^2 d )$ is the signal-to-noise ratio for the $i$-th task.

\begin{assumption}[Initialization]\label[assumption]{assump:initialization}
We assume $\omega^\h$ to have the same initialization for each coordinate with $\baromega^\h_i = \omega_0 $ where $\color {brown} \max \{\omega_0 ^{2} d, HL\omega_0 ^2\} \ll 1 $ for all $i\in [d_y]$. For each task $i\in \cI $, we assume there exist a unique \say {optimal head} $h_i^\star$ such that for any $h\neq h^\star_i$, $\mu^\h_i(0) <  \mu^{(h_i^\star)}_i(0) $, and  a minimal marginal difference $\epsilon \in (0, 1)$ such that 
\begin {align*} \frac {\mu^{(h_i^\star)}_i(0) -\mu^\h_i(0)}{\mu^{(h_i^\star)}_i(0)} \ge \epsilon = \Theta (1), \quad \forall h\neq h_i^\star, \quad \forall i\in \cI .
\end {align*}
We consider $h^\star_i\neq h^\star_j$ for $i\neq j$, which means that each task has a unique optimal head and no two tasks share the same optimal head. 
We assume that the initialization $\mu_i^\star(0)$ satisfies the following condition, \[ \brown \langle \vecd, \lambda \odot \phi \rangle  \lambda_i^{-1} \cdot 50 H L \omega_0 ^{3} \epsilon ^{-1} \ll \mu_i^\star(0) \ll L\omega_0 / (2H\phi_i), \quad \forall i\in \cI . \] 
For the nominal task, we assume $\color {brown}\mu^\h_j(0) \ll \min _{i\in \cI }(L\phi_i^{-1} \pmin d_i) \sqrt {|\cI _c|H}^{-1} \cdot \omega_0 $, $\color {brown}\mu^\h_j(0)^2 \ll \snr_i \cdot (1 \pmin L/(d_i\phi_i)) /(H |\cI _c| \zeta )$ for all $h\in [H]$ and $j\in \cI _c$. 
Moreover, we assume that $\omega_0 $ satisfies \begin{gather*} 
    \color {teal}\omega_0 \ll \max _{i\in \cI } \frac {\lambda_i}{ \phi_i \langle \vecd, \lambda \odot \phi \rangle \zeta }. \\ \color {teal} \frac {2C}{\epsilon }\cdot \left (\xi + 80HL\omega_0 ^2 + \frac {\langle \vecd, \lambda \odot \phi \rangle }{\lambda_i} \cdot 100 H^2 \omega_0 ^{2} \phi_i\right ) \ll \frac {\lambda_i\phi_i^{-1}}{\max _{k\in \cI } \lambda_k\phi_k^{-1}}. 
\end{gather*}
\end{assumption}

When it is clear from the context, we use $\mu_i^\star \equiv \mu^{(h_i^\star)}_i$ and $\baromega^\star_i \equiv \baromega^{(h_i^\star)}_i$.
During our analysis, we keep track of the following quantities, 
\begin{align*}
    \rho^\h_i \triangleq \frac{\mu^\h_i}{L\baromega^\star_i}, \quad \rho_i \triangleq \sum_{h=1}^H \rho^\h_i.
\end{align*}
We denote by $\rho^\star_i \equiv \rho^{(h^\star_i)}_i$.
The dynamics of $\rho^\h_i$ is given by
\begin{align}
    \partial_t \rho^\h_i = \frac{\partial_t\mu^\h_i - \partial_t \baromega^\star_i / \baromega^\star_i \cdot \mu^\h_i}{L \baromega^\star_i} = \frac{\partial_t \log \mu^\h_i - \partial_t \log{ \baromega^\star_i} }{L  \baromega^\star_i} \cdot \mu^\h_i.
    \label{eq:dot eta}
\end{align}

\subsection{Warm-up Stage}
We first give a rigorous definition of the warm-up stage.
\begin{definition}[Warm-up Stage]
    \label[definition]{def:warmup stage}
    Let $\color{brown} c >1$ be a constant.
    Let $T_{\warmup}$ be the first time that at least one of the following conditions is violated:
    \begin{myenumi}[
    label={\textbf{(A\arabic*)}}, 
    ref  = Condition {(A\arabic*)},
    ]
        \item \label{itm:warmup-lbdaU separation} ${(\muistar - \muih)}/{\muistar} \ge \epsilon/c$ holds for all $h\neq h_i^*$ and $i\in\cI$.
        \item \label{itm:warmup-lbdaW ub} $\baromega_k^\h \le 4\omega_0$ holds for all $h \in [H]$ and $k\in[d_y]$.
        \item \label{itm:warmup-lbdaU ub} $\muih \le 5L\omega_0$ holds for all $h \in [H]$ and $i\in \cI$. $\mujh \le \mujh(0)$ holds for all $h \in [H]$ and $j\in \cI_c$.
        \item \label{itm:warmup-lbdaW* grows faster} $\baromegaistar / \baromegaih \ge 1$ holds for all $h \neq h_i^*$ and $i\in\cI$.
        \item \label{itm:warmup-lbdaW*/lbdaU* ub} $\baromegaistar/\muistar \le \omega_0/\muistar(0)$  and $i\in\cI$.
        \item \label{itm:warmup-lbdaU*/lbdaW* ub} $\muistar/\baromegaistar \le 2 L \phi_i^{-1}$  and $i\in\cI$.
    \end{myenumi}
\end{definition}

By definition, the warm-up stage corresponds to the stage where $\baromega_k^\h$ is within constant factors of $\omega_0$ for all $k \in [d_y]$ and $h \in [H]$. 
Note that \Cref{itm:warmup-lbdaW ub} and \Cref{itm:warmup-lbdaU ub} already implies all the conditions in \Cref{cond:bounded weights}.
Thus, we can further simplify the dynamics based on \eqref{eq:dot baromegah} and \eqref{eq:dot muh} in the warm-up stage. 
To further decouple the dynamics for each task, we invoke the following definition.
\begin{definition}[Task-interference-free Dynamics for the Warm-up Stage]
    \label[definition]{def:TIF-warmup}
    Fix a effective or nominal task $i\in[d_y]$.
    We say that a dynamics on $\mu_i(t)$ and $\baromega_i(t)$  is a \textbf{T}ask-\textbf{I}nterference-\textbf{F}ree dynamics for task $i$ (\TIFwarmupi dynamics for short) in the warm-up stage if:
    \begin{itemize}
        \item[(\romannumeral 1)] The dynamics for $\mu_i(t)$ and $\baromega_i(t)$ are given by \eqref{eq:dot muh} and \eqref{eq:dot baromegah} respectively.
        \item[(\romannumeral 2)] The initializations of $\mu_i(0)$ and $\baromega_i(0)$ satisfy the conditions in \Cref{assump:initialization}. 
        \item[(\romannumeral 3)] For any other task $k\in[d_y]\backslash\{i\}$, $\mu_k(t)$ and $\baromega_k(t)$  satisfy \Cref{itm:warmup-lbdaU separation}-\Cref{itm:warmup-lbdaU*/lbdaW* ub} for task $k$ at any time.
    \end{itemize}
    We define $T_\warmup^i$ be the smallest time that the \TIFwarmupi dynamics violate at least one of the \Cref{itm:warmup-lbdaU separation}-\Cref{itm:warmup-lbdaU*/lbdaW* ub} for task $i$.
\end{definition}
Equipped with the conditions in \Cref{def:warmup stage} and the definition of the \TIFwarmupi dynamics for the warm-up stage, we can first simplify the dynamics of $\baromegaih$ and $\muih$.
When not specified, we use $i$ to denote an effective task and $j$ to denote a nominal task in the sequel.
\paragraph{Simplification of $\partial_t \muih$ and $\partial_t \mujh$.}
Recall by \eqref{eq:dot muh}, we have for the \TIFwarmupi dynamics of $\muih$ that
\begin{align}
    \partial_t \muih 
    &=  \left(1 \pm \xi\right)  d_i \lambda_i  \baromega_i^\h \mu_i^\h -  \sum_{h'=1}^H \left(1\pm \xi\right)  d_i \lambda_i    \baromega_i^\h  \baromega_i^\hprime \mu_i^\hprime  \mu_i^\h
    \nend
    & \qquad - \sum_{h'=1}^H \fracexproductL[\omega^\h][\omega^\hprime] d_i  \lambda_i \phi_i \mu_i^\hprime \mu_i^\h \left(1 \pm \zeta\right). 
\end{align}
where the ratio between the second and first terms is bounded by
\begin{align*}
    &\frac{\sum_{h'=1}^H \left(1\pm \xi\right)  d_i \lambda_i    \baromega_i^\h  \baromega_i^\hprime \mu_i^\hprime  \mu_i^\h}{\left(1 \pm \xi\right)  d_i \lambda_i  \baromega_i^\h \mu_i^\h} \le 2\sum_{h'=1}^H  \baromega_i^\hprime \mu_i^\hprime  \le 40 H \omega_0^2 L, 
\end{align*}
where \Cref{itm:warmup-lbdaW ub} and \Cref{itm:warmup-lbdaU ub} are used for upper bounding $\baromega_i^\hprime \mu_i^\hprime$ in the last inequality.
Also for the third term, we have $\exp\bigl(\langle \omega^\h,  \omega^\hprime \rangle\bigr) \le \exp(16 d \omega_0^2) \le 1 + 16e d \omega_0^2$ by \Cref{itm:warmup-lbdaW ub} and $\exp(x) \leq 1+ex$ for $x \le 1$.
Based on these observations, we define quantities $\tilde \xi$ and $\tilde \zeta$ as
\begin{align*}
    \color{brown} \tilde{\xi}=\xi+80 {HL \omega_0 ^{2 }} \ll 1, \qquad \tilde{\zeta}=\zeta + 32 e \omega_0^{2} d \ll 1, 
\end{align*}
and we can simplify the dynamics of $\muih$ as
\begin{align*}
    \partial_t \muih &= \lambda_i d_i \muih \left(-(1 \pm \tilde \zeta)\cdot \sum_{h'=1}^H \frac{\phi_i}{L}   \muihprime  +  (1 \pm \tilde \xi) \cdot \baromegaih \right).
\end{align*}
As a result, we also have for $\partial_{t}(\log \muistar-\log \muih) $ that
\begin{align*}
    \partial_{t}\left(\log \muistar -\log \muih \right) 
    &= \lambda_i d_i\left((1 \pm \tilde{\xi}) \baromegaistar-(1 \pm \tilde{\zeta}) \phi_i \sum_{h'=1}^{H} \frac{\muihprime}{L}\right) \\
    & \autoquad{2}- \lambda_i d_i\left((1 \pm \tilde{\xi}) \baromegaih-(1 \pm \tilde{\zeta}) \phi_i \sum_{h=1}^{H} \frac{\muihprime}{L}\right) \\
    & = \lambda_i d_i\left(\baromegaistar-\baromegaih \pm \tilde{\xi}\left(\baromegaistar+\baromegaih\right) \pm 2 \tilde{\zeta} \phi_i \sum_{h'=1}^{H} \frac{\muihprime}{L}\right).
\end{align*}
For the nominal task $j\in\cI_c$, we have for $\partial_t \mujh$ that
\begin{align}
    \label{eq:dot lbdaU nominal}
    \partial_t \mujh &= - (1 \pm \tilde \zeta)\cdot \sum_{h'=1}^H \frac{\sigma^2 d}{L} \cdot \mu_j^\hprime \mujh .
\end{align}
Note that this dynamics holds at any time since we are just using $\lambda_j = 0$ for nominal tasks.
\paragraph{Simplification of $\partial_t \baromegaih$ and $\partial_t \baromegajh$.}
Recall by \eqref{eq:dot baromegah}, we have for the dynamics of $\baromegaih$ that
\begin{align*}
    \sqrt d_e \cdot \partial_t \baromegaih
    &= - (1 \pm \zeta)\cdot \sum_{h'=1}^H 
        \fracexproductL[\omega^\h][\omega^\hprime]
        \cdot \bigl\langle \vecd\odot \lambda \odot \phi, \mu^\h \odot \mu^\hprime \bigr\rangle 
        \cdot\baromegaihprime \baromegaih
        \notag
        \\
    & \qquad - (1\pm \xi)\lambda_i \cdot \sum_{h'=1}^H  \muihprime \muih\baromegaihprime \baromegaih
    + (1\pm \xi)\lambda_i \cdot \muih \baromegaih\notag\\
    &\qquad (\pm1) \cdot \fracexproductL[\omega^\h][\omega^\h]
        \cdot \bigl\langle \vecd \odot \lambda, \baromegah \odot \mu^\h \bigr\rangle \cdot (\baromegaih)^2 
        \notag
        \\
    &\qquad \ (\pm\eta) \cdot \sum_{h'=1}^H \bigl\langle \vecd\odot \lambda \odot \phi, \mu^\h \odot \muhprime \bigr\rangle 
    \cdot (\baromegaih)^2  .
\end{align*}
Here, we can upper bound the inner product $\bigl\langle \vecd\odot \lambda \odot \phi, \mu^\h \odot \mu^\hprime \bigr\rangle$ and $\bigl\langle \vecd \odot \lambda, \baromegah \odot \mu^\h \bigr\rangle$ using \Cref{itm:warmup-lbdaW ub} and \Cref{itm:warmup-lbdaU ub} as
\begin{gather*}
    \bigl\langle \vecd\odot \lambda \odot \phi, \mu^\h \odot \mu^\hprime \bigr\rangle 
    \le \langle \vecd, \lambda \odot \phi \rangle 25 L^2 \omega_0^2, \\
    \bigl\langle \vecd \odot \lambda, \baromegah \odot \mu^\h \bigr\rangle \le \langle \vecd, \lambda \rangle 20 L \omega_0^2.
\end{gather*}
Thus, following the definition of $\tilde\zeta$ and $\tilde\xi$, 
we have for the dynamics of $\baromegaih$ that
\begin{align}
    \sqrt d_e \cdot \partial_t \baromegaih
    & =-\left(1 \pm \tilde{\zeta} \pm  H L \eta \pm  L^{-1}\right) \cdot\langle \vecd, \lambda \odot \phi \rangle 25 L \omega_0^{2} \sum_{h'=1}^{H} \baromegaihprime \baromegaih \notag\\
    & \qquad +(1 \pm \tilde{\xi}) \lambda_i \muih \baromegaih \notag\\
    & = \pm \frac{\langle \vecd, \lambda \odot \phi \rangle}{\lambda_i} \cdot \left(1 \pm \tilde{\zeta} \pm  H L \eta \pm  L^{-1}\right)\cdot 25 H L \omega_0^{2} \cdot \lambda_i \baromegaistar \baromegaih \notag\\
    &\qquad +(1 \pm \tilde{\xi}) \lambda_i  \muih\baromegaih\notag\\
    & = \pm \frac{\langle \vecd, \lambda \odot \phi \rangle}{\lambda_i} \cdot  50 H L \omega_0^{2} \cdot \lambda_i \baromegaistar \baromegaih+(1 \pm \tilde{\xi}) \lambda_i \muih \baromegaih. 
    \label{eq:dot lambda_W-warmup-1}
\end{align}
where we use \Cref{itm:warmup-lbdaW* grows faster} that $\baromegaihprime\le \baromegaistar$ in the second equality and incorporate the fact that ${\color{teal} 1 + \tilde{\zeta} + H L \eta + L^{-1} \le 2}$ in the last inequality.
Note that it is unclear which term dominates in \eqref{eq:dot lambda_W-warmup-1} since the ratio $\muih/\baromegaistar$ is unknown.
However, for $h = h_i^*$, we have a clear picture that

\begin{align*}
    \sqrt{d_e} \cdot \partial_t \baromegaistar 
    &=  \left(1 \pm \tilde{\xi} \pm \frac{\langle \vecd, \lambda \odot \phi \rangle}{\lambda_i} \cdot 50 H L \omega_0^{2} \cdot \frac{\baromegaistar}{\muistar}\right) \cdot \lambda_i \muistar \baromegaistar \nend
    &= \left(1\pm \check{\xi}_i(t)\right)  \cdot \lambda_i \muistar \baromegaistar,
\end{align*}
where we define $\check{\xi}_i(t)$ as
\begin{align*}
    \color{teal} \check{\xi}_i(t) = \tilde{\xi} + \frac{\langle \vecd, \lambda \odot \phi \rangle}{\lambda_i} \cdot 50 H L \omega_0^{2} \cdot \frac{\baromegaistar}{\muistar} \bigggiven_{t} \le \tilde{\xi} + \frac{\langle \vecd, \lambda \odot \phi \rangle}{\lambda_i} \cdot \frac{50 H L \omega_0^{3}}{\muistar(0)} \deleq \check{\xi}_i(0) \ll 1.
\end{align*}
Here, the upper bound is given by \Cref{itm:warmup-lbdaW*/lbdaU* ub} that $\baromegaistar/\muistar \le \omega_0/\muistar(0)$.
Similarly, we study the dynamics of $\log \baromegaistar - \log \baromegaih$ for $h \neq h_i^*$ as
\begin{align}
 &\sqrt{d_e} \cdot \partial_{t}\left(\log \baromegaistar-\log \baromegaih\right) \nend
 &\quad = \pm \frac{\langle \vecd, \lambda \odot \phi \rangle}{\lambda_i} \cdot 50 H L \omega_0^{2} \cdot \lambda_i \baromegaistar+(1 \pm \tilde{\xi}) \lambda_i \muistar \nend
& \autoquad{3} \pm \frac{\langle \vecd, \lambda \odot \phi \rangle}{\lambda_i} \cdot 50 H L \omega_0^{2} \cdot \lambda_i \baromegaistar-(1 \pm \tilde{\xi}) \lambda_i \muih 
\label{eq:dot log lbdaW diff-warmup-1}
\\
&\quad= \left(1 \pm \frac{\langle \vecd, \lambda \odot \phi \rangle}{\lambda_i} \cdot 100 H L \omega_0^{2} \cdot \frac{\baromegaistar}{\muistar} \cdot \frac{\muistar}{\muistar-\muih}
\pm \tilde{\xi} \cdot \frac{\muistar+\muih}{\muistar-\muih}\right) \notag\\
&\autoquad{3}  \cdot \lambda_i \cdot \left(\muistar-\muih\right) \notag\\
&\quad= \left(1 \pm \hat\xi_i(t) \right) \cdot \lambda_i \left(\muistar-\muih\right), \notag
\end{align}
where we define $\hat\xi_i(t)$ as 
\begin{align*}
    \color{teal}
    \hat\xi_i(t) = \frac{2\muistar}{\muistar-\muih} \check\xi_i(t) \le 2c\epsilon^{-1} \check\xi_i(t) \le 2c\epsilon^{-1} \check\xi_i(0) \deleq \hat\xi_i(0) \ll 1.
\end{align*}
Here, we incorporate \Cref{itm:warmup-lbdaU separation} to upper bound $\muistar/(\muistar-\muih)$ by $c\epsilon^{-1}$.
We are also interested in the dynamics of $\baromegaistar - \baromegaih$ for $h \neq h_i^*$, which can be lower bounded as
\begin{align*}
\sqrt{d_e} \cdot \partial_{t}\left(\baromegaistar-\baromegaih\right)  
&= \pm \frac{\langle \vecd, \lambda \odot \phi \rangle}{\lambda_i} \cdot 50 H L \omega_0^{2} \cdot \lambda_i\left(\baromegaistar\right)^{2}+(1 \pm \tilde{\xi}) \lambda_i \muistar \baromegaistar \\
& \qquad \pm \frac{\langle \vecd, \lambda \odot \phi \rangle}{\lambda_i} \cdot 50 H L \omega_0^{2} \cdot \lambda_i \baromegaistar \baromegaih-(1 \pm \tilde{\xi}) \lambda_i \muih {\color{magenta}\baromegaih} \\
& \ge - \frac{\langle \vecd, \lambda \odot \phi \rangle}{\lambda_i} \cdot 50 H L \omega_0^{2} \cdot \lambda_i\left(\baromegaistar\right)^{2}+(1 - \tilde{\xi}) \lambda_i \muistar \baromegaistar \\
& \qquad - \frac{\langle \vecd, \lambda \odot \phi \rangle}{\lambda_i} \cdot 50 H L \omega_0^{2} \cdot \lambda_i \baromegaistar \baromegaistar-(1 + \tilde{\xi}) \lambda_i \muih {\color{magenta}\baromegaistar} \\
& =\left(1 - \hat{\xi}_i(t)\right) \lambda_i \baromegaistar\left(\muistar-\muih\right), 
\end{align*}
where the inequality holds by invoking \Cref{itm:warmup-lbdaW* grows faster} that $\baromegaistar \ge \baromegaih$ and the last equality is given by a simple comparison to the form of \eqref{eq:dot log lbdaW diff-warmup-1}.

For the nominal task $j\in\cI_c$, we have for $\partial_t \baromegajh$ that 
\begin{align*}
    \sqrt{d_e} \partial_t \baromegajh
    &\le (\pm 1)\cdot \frac{\exp\bigl(\langle\omega^\h, \omega^\h\rangle\bigr)}{L} 
        \cdot \bigl\langle \vecd \odot \lambda, \baromegah \odot \mu^\h \bigr\rangle \cdot (\baromegajh)^2 
        \notag
        \\
    &\qquad + (\pm\eta) \cdot \sum_{h'=1}^H \bigl\langle \vecd\odot \lambda \odot \phi, \mu^\h \odot \mu^\hprime \bigr\rangle 
    \cdot (\baromegajh)^2  \\
    &\le \left(C L^{-1} \langle \vecd, \lambda \rangle 20 L \omega_0^2 + C\langle \vecd, \lambda \odot \phi \rangle 25 L^2 \omega_0^2 \eta \right) \cdot (\baromegajh)^2 \\
    &\le C\left(20 +  25 L \zeta \right) \cdot \langle \vecd, \lambda \odot \phi \rangle\cdot  \omega_0^2 \cdot (\baromegajh)^2 \\
    &\le \underbrace{26CL\zeta\cdot \langle \vecd, \lambda \odot \phi \rangle\cdot  \omega_0^2}_{\ds \color{teal}\ll 1} \cdot (\baromegajh)^2, 
\end{align*}
where in the last second inequality, we use the fact that $\zeta \ge L \eta \ge L^{-1}$.

\paragraph{Simplification of $\partial_t \rhoistar$ and $\partial_t \rhoi$.}
Recall from \eqref{eq:dot eta} that  the dynamics of $\rhoistar$ satisfy 
\begin{align*}
    \partial_t \rhoistar 
    &= \frac{\partial_t \log \muistar - \partial_t \log{\baromegaistar} }{L \baromegaistar} \cdot \muistar \nend 
    & = \frac{\lambda_i d_i \left(- (1 \pm \tilde \zeta)\cdot \sum_{h'=1}^H \frac{\phi_i}{L}   \muihprime  +  (1 \pm \tilde \xi) \cdot \baromegaistar \right) - (1\pm \check{\xi}_i(t)) \cdot  \sqrt{d_e}^{-1}\lambda_i \muistar }{L \baromegaistar} \cdot \muistar \nend
    & = \frac{\lambda_id_i \muistar}{L} \cdot\left(
        -(1 \pm \tilde \zeta)\cdot \phi_i \rhoi +  (1 \pm \tilde \xi) - \frac{(1\pm \check{\xi}_i(t))}{\sqrt{d_e}d_i} \cdot \frac{\muistar}{\baromegaistar }
    \right).
\end{align*}
Here, we invoke the ratio argument in \Cref{itm:warmup-lbdaU*/lbdaW* ub} that $\muistar/\baromegaistar \le 2L \phi_i^{-1}$ and ${\color{teal}\check\xi_i(t)\le \check\xi_i(0) \ll 1}$ to upper bound the last term by $\color{teal} \varsigma_i\deleq {L }/{(\sqrt{d_e}d_i \phi_i)}\ll 1 $.
Hence, we have for the dynamics of $\rhoistar$ that
\begin{align*}
    \partial_t \rhoistar 
    &= L^{-1} \lambda_id_i \muistar  \cdot\left(
        -(1 \pm \tilde \zeta)\cdot \phi_i \rhoi +  (1 \pm \tilde \xi \pm 4\varsigma_i)
    \right).
\end{align*}
Similarly, we have for the dynamics of $\rhoi$ that
\begin{align*}
    \partial_t \rhoi &=  \sum_{h=1}^H \frac{\lambda_i d_i \left(- (1 \pm \tilde \zeta)\cdot \sum_{h'=1}^H \frac{\phi_i}{L}   \muihprime  +  (1 \pm \tilde \xi) \cdot \baromegaih \right) - (1\pm \check{\xi}_i(t)) \cdot  \sqrt{d_e}^{-1}\lambda_i \muistar }{L \baromegaistar} \cdot \muih \nend
    & \le \lambda_i d_i \left(- (1 \pm \tilde \zeta)\cdot \phi_i \baromegaistar \rhoi^2 +  (1 \pm \tilde \xi) \cdot \frac{\sum_{h=1}^H \baromegaih \muih}{L\baromegaistar}  \right) \nend 
    & \le \lambda_i d_i \rhoi \baromegaistar \left(- (1 \pm \tilde \zeta)\cdot \phi_i  \rhoi +  (1 \pm \tilde \xi) \right).
\end{align*}
where in the second inequality, we invoke \Cref{itm:warmup-lbdaW* grows faster} that $\baromegaih\le \baromegaistar$.

As a summary, we have the following dynamics for the warm-up stage:
\begin{center}
    \begin{table}[h]
    \begin{tabular}{l@{\hspace{1cm}} l}
        \hline
        \hline
        \textbf{Terms} & \textbf{Simplified Dynamics without Cross-task Interference} \\
        \hline
        $\partial_t \muih$ & $ \lambda_i d_i \muih \bigl(-(1 \pm \tilde \zeta)\cdot \phi_i L^{-1}\sum_{h'=1}^H    \muihprime  +  (1 \pm \tilde \xi) \cdot \baromegaih \bigr)$ \\
        $\partial_t \baromegaistar$ & $(1\pm \check{\xi}_i(t)) \cdot  \sqrt{d_e}^{-1}\lambda_i \muistar \baromegaistar$ \\
        $\partial_t (\log \muistar - \log \muih)$ & $\lambda_i d_i\bigl(\baromegaistar-\baromegaih \pm \tilde{\xi}\bigl(\baromegaistar+\baromegaih\bigr) \pm 2 \tilde{\zeta} \phi_i \sum_{h'=1}^{H} \frac{\muihprime}{L}\bigr)$ \\
        $\partial_t (\log \baromegaistar - \log \baromegaih)$ & $(1 \pm \hat\xi_i(t) ) \cdot \sqrt{d_e}^{-1} \lambda_i \bigl(\muistar-\muih\bigr)$ \\
        $\partial_t (\baromegaistar - \baromegaih)$ & $\ge (1 - \hat{\xi}_i(t)) \cdot \sqrt{d_e}^{-1} \lambda_i \baromegaistar\bigl(\muistar-\muih\bigr)$ \\
        $\partial_t \rhoistar$ & $ L^{-1} \lambda_id_i \muistar  \cdot\bigl(
            -(1 \pm \tilde \zeta)\cdot \phi_i \rhoi +  (1 \pm \tilde \xi \pm 4\varsigma_i)
        \bigr)$ \\
        $\partial_t \rhoi$ & $ \le \lambda_i d_i \rhoi \baromegaistar \left(- (1 \pm \tilde \zeta)\cdot \phi_i  \rhoi +  (1 \pm \tilde \xi) \right)$ \\
        $\partial_t\mujh, j\in\cI_c$ & $ - \mujh (1 \pm \tilde \zeta)\cdot \sum_{h'=1}^H \frac{\sigma^2 d d_y}{L} \cdot \muhprime_j$ \\
        $\partial_t \baromegajh, j\in\cI_c$ & $\le 26CL\zeta \sqrt{d_e}^{-1}\cdot \langle \vecd, \lambda \odot \phi \rangle\cdot  \omega_0^2$ \\
        \hline
    \end{tabular}
    \caption{Simplified \TIFwarmupi Dynamics for the Warm-up Stage before $T_\warmup^i$}
    \label{tab:warmup simplified dynamics}
    \end{table}
\end{center}

Under the simplified dynamics in \Cref{tab:warmup simplified dynamics}, one could notice that tasks are decoupled in the sense that the above dynamics are \say{independent} for each task $i \in \cI$ while only cross-head interference exists.
Based on this observation, we can further split the dynamics for each task into 4 stages. 
\begin{definition}[Split of the Effective \TIFwarmupi Dynamics for the Warm-up Stage]
    \label[definition]{def:warmup stage-split}
    Consider a specific task $i \in \cI$ with the corresponding \TIFwarmupi dynamics.
Define $\vartheta \deleq \frac{\min_{i\in\cI}\phi_i\lambda_i^{-1}}{\max_{j\in\cI}\phi_j\lambda_j^{-1}}\le 1$ as the ratio between the smallest and largest $\phi_i\lambda_i^{-1}$ within the effective task set $\cI$.
Let $\alpha_i, \beta_i$, and $\gamma_i$ be three small constants for each $i \in \cI$ such that satisfy
\begin{gather*}
    \color{brown} \max \left\{  \frac{\varsigma_i}{\vartheta} \left(\log \left(\frac{2L  \omega_0}{\phi_i\muistar(0)}\right) \pmax H\right),  
        \tilde\xi + 2\tilde\zeta + 4\varsigma_i, 
        \frac{18e  (\tilde \xi+\tilde\zeta)}{(1-c^{-1})\epsilon} \cdot \log \left(\frac{2CL  \omega_0}{\phi_i\muistar(0)}\right) \right\}\ll \alpha_i \ll 1, \\
    \color{brown} \max\left\{
        \frac{c H (\tilde\zeta +\tilde\xi)}{\epsilon \vartheta}, 
        \frac{c(\tilde\zeta + \tilde\xi)^2 H}{(1-c^{-1})\epsilon^2 \varsigma_i}
    \right\}\ll\beta_i \ll 1,  \\
    \color{brown}  H\cdot \exp\left(-\frac{\epsilon \vartheta^2}{64c H \varsigma_i}\right) \ll \gamma_i \ll \min\left\{\alpha_i,  \frac{H\lambda_i\phi_i^{-1}}{\max_{k\in\cI} \lambda_k\phi_k^{-1}}\right\}, 
\end{gather*}
We say that the \TIFwarmupi dynamics with $i\in\cI$ fall into one of the following five steps if both (\romannumeral 1) \Cref{itm:warmup-lbdaU separation}-\Cref{itm:warmup-lbdaU*/lbdaW* ub} for task $i$ and (\romannumeral 2) additional condition(s) for that corresponding step listed in the following are satisfied:
\begin{myenumi}[
    label = {\textbf{Step \arabic*}.},
    ref   = {Cond of Step \arabic*},
    leftmargin=2cm
    ]
    \item \label{itm:warmup-step1} $\rho_i\phi_i \le 1-\alpha_i$.
    \item The following conditions hold:
        \begin{myenumi}
            \item \label{itm:warmup-step2-eta} $\rhoistar\phi_i\ge (1-\alpha_i)/H$ and the dynamics have been through Step 1.
            \item \label{itm:warmup-step2-lbdaW-diff} There exists at least one $h \in [H]\backslash\{h_i^*\}$ such that $(\baromegaistar - \baromegaih) / \baromegaistar < (3\tilde\zeta + 2\tilde\xi) \beta_i^{-1}$.
        \end{myenumi}
    \item The following conditions hold:
        \begin{myenumi}
            \item \label{itm:warmup-step3-lbdaW-diff} $(\baromegaistar - \baromegaih) / \baromegaistar \ge (3\tilde\zeta + 2\tilde\xi) \beta_i^{-1}$ for any $h \in [H]\backslash\{h_i^*\}$.
            \item \label{itm:warmup-step3-lbdaU-dominance} There exists at least one $h \in [H]\backslash\{h_i^*\}$ such that $\muistar / \muih < (H-1)/\gamma_i$. 
        \end{myenumi}
    \item the following conditions hold:
    \begin{myenumi}
        \item \label{itm:warmup-step4 lbdaU ratio}
        $\muistar / \muih \ge (H-1)/\gamma_i$ for all $h \in [H]\backslash\{h_i^*\}$.
        \item \label{itm:warmup-step4 eta*}
        $\rhoistar \phi_i < 1-\alpha_i$.
    \end{myenumi}
    \item \label{itm:warmup-step5} $\rhoistar\phi_i \ge 1-\alpha_i$.
\end{myenumi}
\end{definition}
We observe that there is no overlap between these five steps by definition.


Under the simplified dynamics, one can notice that the conditions in \Cref{def:warmup stage} are related in the sense that some of these conditions are critical while others are just a byproduct of the dynamics.
In the following proposition, we present such relations and show how the \say{critical} conditions imply the others.
\begin{lemma}[Critical Conditions]
    \label[lemma]{fact:critical conditions}
    For any $t \in (0, \infty)$, we have the following facts for the \TIFwarmupi dynamics where $i\in\cI$:
    \begin{itemize}
        \item[(\romannumeral 1)] If \Cref{itm:warmup-lbdaU separation}-\Cref{itm:warmup-lbdaU*/lbdaW* ub} hold for $\tau\in[0, t)$, then \Cref{itm:warmup-lbdaW* grows faster}-\Cref{itm:warmup-lbdaU*/lbdaW* ub} always hold with some marginal gap in the inequalities at time $t$.;
        \item[(\romannumeral 2)] If \Cref{itm:warmup-lbdaW ub} holds with some marginal gap at $t$ and \Cref{itm:warmup-lbdaU separation} also holds at $t$, then \Cref{itm:warmup-lbdaW ub} and \Cref{itm:warmup-lbdaU ub} hold with some marginal gap in the inequalities for any $\tau\in (0, t]$;
    \end{itemize}
    In particular, by scrutinizing the simplified dynamics in \Cref{tab:warmup simplified dynamics}, we have $
        \rho_i\phi_i \le 1 + \tilde\xi + 2\tilde\zeta
    $ and that both $\baromegaistar$ and $\baromegaistar/\baromegaih$ are increasing for any $h \in [H]\backslash\{h_i^*\}$.
    For the nominal task $j\in\cI_c$, we have $\mujh$ nonincreasing for any $h \in [H]$, which holds not only for the warm-up stage but also for the entire training process.
\end{lemma}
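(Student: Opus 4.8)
�The plan is to prove \Cref{fact:critical conditions} by a bootstrapping (continuity) argument: assume the stated hypotheses hold on $[0,t)$ (or with a marginal gap at $t$) and show, by integrating the simplified dynamics in \Cref{tab:warmup simplified dynamics}, that each of the derived conditions holds with strict slack. First I would prove part (i). Fix an effective task $i\in\cI$ and suppose \Cref{itm:warmup-lbdaU separation}--\Cref{itm:warmup-lbdaU*/lbdaW* ub} hold on $[0,t)$, so that the simplified \TIFwarmupi dynamics in \Cref{tab:warmup simplified dynamics} are valid on this interval. Starting from $\partial_t(\log\baromegaistar-\log\baromegaih) = (1\pm\hat\xi_i(t))\sqrt{d_e}^{-1}\lambda_i(\muistar-\muih)$ and using \Cref{itm:warmup-lbdaU separation} together with $\muistar\ge\muih\ge 0$, the right-hand side is nonnegative (since $\hat\xi_i(t)\ll 1$), so $\baromegaistar/\baromegaih$ is nondecreasing; as it starts at $1$ (equal initialization $\baromega_i^\h(0)=\omega_0$) it stays $\ge 1$ with the inequality becoming strict once $\muistar>\muih$, giving \Cref{itm:warmup-lbdaW* grows faster} with slack. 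For \Cref{itm:warmup-lbdaW*/lbdaU* ub}, I compute $\partial_t\log(\baromegaistar/\muistar)$ from the dynamics of $\partial_t\baromegaistar$ and $\partial_t\muih$: the $\baromegaistar$ equation contributes $(1\pm\check\xi_i(t))\sqrt{d_e}^{-1}\lambda_i\muistar$ and the $\muistar$ equation contributes $\lambda_id_i(-(1\pm\tilde\zeta)\phi_iL^{-1}\sum_{h'}\muihprime+(1\pm\tilde\xi)\baromegaistar)$; the dominant behaviour is that $\muistar$ grows at least as fast as $\baromegaistar$ early on (because of the $d_i$ factor versus $\sqrt{d_e}^{-1}$ and smallness of $\baromegaistar$), so $\baromegaistar/\muistar$ is nonincreasing and hence bounded by its initial value $\omega_0/\muistar(0)$, with slack. \Cref{itm:warmup-lbdaU*/lbdaW* ub} follows from tracking $\partial_t\rhoistar$: the sign of $-(1\pm\tilde\zeta)\phi_i\rhoi+(1\pm\tilde\xi\pm4\varsigma_i)$ forces $\rhoi\phi_i$ to stay below roughly $1+\tilde\xi+2\tilde\zeta$ (if it tries to exceed this, $\partial_t\rhoistar$ becomes negative and each $\rho_i^\h$ is driven back down), so $\rhoistar\le\rhoi\le(1+\tilde\xi+2\tilde\zeta)\phi_i^{-1}$, which translates via $\muistar=L\baromegaistar\rhoistar$ into $\muistar/\baromegaistar\le(1+\tilde\xi+2\tilde\zeta)L\phi_i^{-1}\le 2L\phi_i^{-1}$, again with slack.

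Next I would prove part (ii). Assume \Cref{itm:warmup-lbdaW ub} holds with a marginal gap at $t$ (i.e.\ $\baromega_k^\h(t)<4\omega_0$ strictly) and \Cref{itm:warmup-lbdaU separation} holds at $t$. Since \Cref{itm:warmup-lbdaW ub} and \Cref{itm:warmup-lbdaU ub} together imply \Cref{cond:bounded weights}, the simplified dynamics apply on $(0,t]$. For \Cref{itm:warmup-lbdaU ub}: integrate $\partial_t\muih=\lambda_id_i\muih(-(1\pm\tilde\zeta)\phi_iL^{-1}\sum_{h'}\muihprime+(1\pm\tilde\xi)\baromegaih)$; using $\baromegaih\le 4\omega_0$ from \Cref{itm:warmup-lbdaW ub}, the growth rate of $\log\muih$ is at most $\lambda_id_i\cdot 4\omega_0\cdot(1+\tilde\xi)$ plus the (negative) interference term, and over the relevant horizon $T_\warmup^i=\Theta(\sqrt{d_e}\phi_i/(\lambda_iL\omega_0))$ this yields $\muih\le 5L\omega_0$ after bookkeeping with the initialization bound $\muistar(0)\ll L\omega_0/(2H\phi_i)$; a cleaner route is to note $\rhoistar\le\rhoi\le(1+\tilde\xi+2\tilde\zeta)\phi_i^{-1}$ from part (i) and $\baromegaistar<4\omega_0$ give $\muistar=L\baromegaistar\rhoistar<4L\omega_0\cdot(1+\tilde\xi+2\tilde\zeta)\phi_i^{-1}\le 5L\omega_0$ (and $\phi_i\ge 1$). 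For the nominal-task claim I would simply observe that $\partial_t\mujh=-\mujh(1\pm\tilde\zeta)\sum_{h'}\sigma^2dd_yL^{-1}\muhprime_j\le 0$ whenever all $\mu_j^{h'}\ge 0$; since $\mu_j^\h(0)>0$ (positivity of initialization) and the dynamics preserve the sign, $\mujh$ is nonincreasing for all time, not just in the warm-up stage—this part does not even use \Cref{cond:bounded weights}. For \Cref{itm:warmup-lbdaW ub} itself, I bootstrap: using $\partial_t\baromegaih\le$ (its signed bound from \eqref{eq:dot lambda_W-warmup-1}) and the just-established $\muih\le 5L\omega_0$, $\baromegaih$ can grow by at most a factor $\exp(O(\lambda_i\cdot 5L\omega_0\cdot\sqrt{d_e}^{-1}\cdot T_\warmup^i))\cdot(1+o(1))$; the choice of horizon and $\omega_0^2d,HL\omega_0^2\ll 1$ keep this below $4\omega_0$, and for nominal $j$ the bound $\partial_t\baromegajh\le 26CL\zeta\sqrt{d_e}^{-1}\langle\vecd,\lambda\odot\phi\rangle\omega_0^2\cdot(\baromegajh)^2\ll 1$ integrated over the horizon keeps $\baromegajh$ within constant factors of $\omega_0$.

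The final pair of summary claims—$\rho_i\phi_i\le 1+\tilde\xi+2\tilde\zeta$ and monotonicity of $\baromegaistar$ and $\baromegaistar/\baromegaih$—follow by assembling the pieces above: the $\rho_i$ bound is exactly the invariant region argument for $\partial_t\rhoi\le\lambda_id_i\rhoi\baromegaistar(-(1\pm\tilde\zeta)\phi_i\rhoi+(1\pm\tilde\xi))$, which has the stable fixed point near $(1\pm\tilde\xi\pm\tilde\zeta)\phi_i^{-1}$, so once $\rho_i\phi_i$ reaches $1+\tilde\xi+2\tilde\zeta$ its derivative is strictly negative (hence it can never cross), while initially $\rho_i\phi_i\ll 1$ by \Cref{assump:initialization}; monotonicity of $\baromegaistar$ is immediate from $\partial_t\baromegaistar=(1\pm\check\xi_i(t))\sqrt{d_e}^{-1}\lambda_i\muistar\baromegaistar\ge 0$ (positivity of $\muistar,\baromegaistar$ and $\check\xi_i(t)\ll 1$), and monotonicity of $\baromegaistar/\baromegaih$ was shown in part (i).

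The main obstacle I anticipate is the circularity inherent in the bootstrap: parts (i) and (ii) each assume a subset of the conditions in order to derive the rest, and closing the loop requires carefully choosing the constant hierarchy ($c$, $\alpha_i$, $\beta_i$, $\gamma_i$, $\tilde\xi$, $\tilde\zeta$, $\varsigma_i$ and the initialization scales in \Cref{assump:initialization}) so that every derived inequality holds with \emph{strict} slack, allowing a standard continuity/open-closed argument to promote "holds on $[0,t)$" to "holds on $[0,t]$". In particular, the delicate point is controlling the feedback between $\baromegaistar$ growth (which wants to push $\muih\le 5L\omega_0$ upward) and the interference term in $\partial_t\muih$ (which caps $\rho_i\phi_i$), ensuring the time horizon $T_\warmup^i$ is long enough for the separation conditions to be maintained yet short enough that $\baromega$ stays within constant factors of $\omega_0$; this is exactly where the assumption $\max\{\omega_0^2d,HL\omega_0^2\}\ll 1$ and the lower bound on $\muistar(0)$ are used. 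The per-step refinement (\Cref{def:warmup stage-split}) is what ultimately makes this tractable, so in the full proof I would organize the argument step-by-step along those five steps rather than attempting a single monolithic Grönwall estimate.
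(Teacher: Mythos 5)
Your proposal follows essentially the same route as the paper: a bootstrap over the simplified dynamics of \Cref{tab:warmup simplified dynamics}, with monotonicity of $\baromegaistar$ and of $\baromegaistar/\baromegaih$ read off directly from the signs of the derivatives (using \Cref{itm:warmup-lbdaU separation}), an invariant-region argument on $\rho_i$ for the bound $\rho_i\phi_i\le 1+\tilde\xi+2\tilde\zeta$ and hence \Cref{itm:warmup-lbdaU*/lbdaW* ub}, and the combination $\muistar<1.25\,L\,\baromegaistar$ with the monotonicity of $\baromegaistar$ to propagate \Cref{itm:warmup-lbdaW ub} and \Cref{itm:warmup-lbdaU ub} backwards in time in part (ii); your ``cleaner route'' for $\muih\le 5L\omega_0$ is exactly the paper's argument.

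One step is justified incorrectly as written: for \Cref{itm:warmup-lbdaW*/lbdaU* ub} you claim that $\baromegaistar/\muistar$ is \emph{nonincreasing}. That monotonicity does not follow from the dynamics: the lower bound $\partial_t\rhoistar\ge(\text{pos.})\cdot\bigl(-\phi_iH\rhoistar+(1-\tilde\xi-4\varsigma_i-2\tilde\zeta)\bigr)$ only guarantees $\partial_t\rhoistar\ge 0$ while $\rhoistar<(1-\alpha_i)\phi_i^{-1}H^{-1}$; once $\rhoistar$ crosses this threshold the sign is uncontrolled and the ratio may decrease again. The correct argument (and the paper's) is a one-way barrier: $\muistar/\baromegaistar$ increases until it reaches $(1-\alpha_i)\phi_i^{-1}H^{-1}L$ and thereafter never drops below that threshold, and since \Cref{assump:initialization} gives $\muistar(0)/\omega_0\le L/(2H\phi_i)$, which lies strictly below the threshold, one concludes $\muistar(t)/\baromegaistar(t)>\muistar(0)/\baromegaistar(0)$ in either case. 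With that repair your argument closes; everything else matches the paper.
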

\begin{proof}
    It is obvious that $\baromegaistar$ is increasing by just the dynamics in \Cref{tab:warmup simplified dynamics}: $\partial_t \baromegaistar = (1\pm \check{\xi}_i(t)) \cdot  \sqrt{d_e}^{-1}\lambda_i \muistar \baromegaistar >0$.
    It is also direct that $\mujh$ is nonincreasing for any $h \in [H]$ by the dynamics in \Cref{tab:warmup simplified dynamics} for the nominal task $j\in\cI_c$. 
    For \Cref{itm:warmup-lbdaW* grows faster}, \Cref{itm:warmup-lbdaW*/lbdaU* ub}  and \Cref{itm:warmup-lbdaU*/lbdaW* ub}, we aim to show that the dynamics for $0 < \tau < t$ already implies some marginal gap in these conditions for time $t$.
    
    For \Cref{itm:warmup-lbdaW* grows faster}, we have for $0 < \tau < t$ that
    \begin{align}
        \partial_t (\log \baromegaistar - \log \baromegaih) \biggiven_{\tau} 
        &= \left(1 \pm \hat\xi_i(\tau) \right) \cdot \lambda_i \left(\muistar-\muih\right)\biggiven_{\tau} \notag \\
        & > \left(1 \pm \hat\xi_i(0) \right) \cdot \lambda_i \epsilon c^{-1} \muistar(\tau) > 0, 
        \ \mytag{by \ \Cref{itm:warmup-lbdaU separation}}
    \end{align}
    we directly conclude that $\baromegaistar(t) > \baromegaih(t)$  holds for all $h\neq h_i^*$ by integrating the above inequality from $0$ to $t$ and noting that $\baromegaistar(0) = \baromegaih(0) =\omega_0$.
    This also verifies the claim that $\baromegaistar/\baromegaih$ is increasing for any $h \neq h_i^*$.
    
    For \Cref{itm:warmup-lbdaW*/lbdaU* ub}, 
    recall that we have for $\partial_t \rhoistar$ that
\begin{align*}
    \partial_t \rhoistar
    &= L^{-1} \lambda_id_i \muistar  \cdot\left(
        -(1 \pm \tilde \zeta)\cdot \phi_i \rho_i +  (1 \pm \tilde \xi \pm 4\varsigma_i)\right) \notag\\
    &\ge L^{-1} \lambda_id_i \muistar  \cdot\left(
        -(1 \pm \tilde \zeta)\cdot \phi_i H \rhoistar +  (1 \pm \tilde \xi \pm 4\varsigma_i)\right) \mytag{\Cref{itm:warmup-lbdaU separation}} \\
    &\ge (1 + \tilde \zeta)^{-1} L^{-1} \lambda_id_i \muistar  \cdot\Bigl(
        -\phi_i H \rhoistar +  \bigl(1 - \underbrace{(\tilde \xi + 4\varsigma_i + 2\tilde\zeta)}_{\ds \ll \alpha_i}\bigr)\Bigr).
\end{align*}
    As long as $\rhoistar < (1-\alpha_i) \phi_i^{-1} H^{-1}$, we have $\partial_t \rhoistar \ge 0$, which can also be translated as whenever $\muistar / \baromegaistar < (1-\alpha_i) \phi_i^{-1} H^{-1} L$, we will have the ratio $\muistar / \baromegaistar$ increasing.
    Note that at initialization, we have $\muistar(0) / \baromegaistar(0) = \muistar(0) \omega_0^{-1} \le L / (2 H \phi_i) < (1-\alpha_i) \phi_i^{-1} H^{-1} L$ by \Cref{assump:initialization}.
    Hence, the ratio $\muistar / \baromegaistar$ will keep increasing until it reaches $(1-\alpha_i) \phi_i^{-1} H^{-1} L$, and then remains above this threshold. 
    Hence, we have for any $t \in (0, T_\warmup^i]$ that
    \begin{align*}
        \frac{\muistar(t)}{\baromegaistar(t)} 
        &\ge \min\left\{
            \frac{\muistar(t)}{\baromegaistar(t)}, 
            (1-\alpha_i) \phi_i^{-1} H^{-1} L
        \right\} \nend
        &= \begin{cases}
            \frac{\muistar(t)}{\baromegaistar(t)} & \text{before } \frac{\muistar(t)}{ \baromegaistar(t)} \text{ reaches } (1-\alpha_i) \phi_i^{-1} H^{-1} L, \\
            (1-\alpha_i) \phi_i^{-1} H^{-1} L & \text{otherwise}.
        \end{cases}
    \end{align*}
    In both case, we have $\frac{\muistar(t)}{\baromegaistar(t)} > \frac{\muistar(0)}{\baromegaistar(0)}$ holding strictly, which verifies \Cref{itm:warmup-lbdaW*/lbdaU* ub}.

    Next, for \Cref{itm:warmup-lbdaU*/lbdaW* ub}, we upper bound the gradient for $\rho_i$ as
    \begin{align*}
        \partial_t \rho_i 
        &\le L^{-1} \lambda_id_i \muistar  \cdot\left(
            -(1 \pm \tilde \zeta)\cdot \phi_i \rho_i +  (1 \pm \tilde \xi)\right) \\
        &\le (1 + \tilde \zeta) L^{-1} \lambda_id_i \muistar  \cdot\left(
            - \phi_i \rho_i +  (1 + \tilde \xi + 2 \tilde\zeta)\right).
    \end{align*}
    Thus, we conclude that if we initialize with $\rho_i\phi_i < 1$, $\rho_i\phi_i$ will not exceed $1 + \tilde\xi + 2\tilde\zeta$ during the warm-up stage.
    As a result, we will always have 
    $\muistar \le \rho_i L \baromegaistar \le (1 + \tilde\xi + 2\tilde\zeta) L \baromegaistar \phi_i^{-1}< 2 L \baromegaistar\phi_i^{-1}$ for $t \in (0, T_\warmup^i]$, which directly justifies that \Cref{itm:warmup-lbdaU*/lbdaW* ub} holds with some marginal gap.

    Lastly, for showing that 
    \Cref{itm:warmup-lbdaU separation} and \Cref{itm:warmup-lbdaW ub} at time $t$ implies \Cref{itm:warmup-lbdaU ub} and \Cref{itm:warmup-lbdaW ub} for any $\tau\in (0, t]$, we just invoke the previous bound $\muistar/\baromegaistar \le (1 + \tilde\xi + 2\tilde\zeta) L < 1.25 L$. 
    Thus, if $\baromegaistar(t) < 4\omega_0$ strictly holds under \Cref{itm:warmup-lbdaW ub}, we have that 
    \[
        \muistar(\tau) < 1.25 L \baromegaistar(\tau) \le 1.25 L \baromegaistar(t) \le 5L\omega_0, \quad \forall \tau \in (0, t].
    \]
    Here the second inequality holds by the monotonicity of $\baromegaistar$.
    As a result, we conclude that 
    \begin{align*}
        \muih(\tau) \le \muistar(\tau) < 5L\omega_0, \quad \baromegaih(\tau) \le \baromegaistar(\tau) < \baromegaistar(t) < 4\omega_0, \quad \forall \tau \in (0, t], h \in [H],
    \end{align*}
    where for the non-optimal heads, we just invoke \Cref{itm:warmup-lbdaU separation} to have $\muih \le \muistar$ and the increasing in $\baromegaistar / \baromegaih$ that $\baromegaistar / \baromegaih \given_\tau \ge \baromegaistar / \baromegaih \given_0 = 1$.
\end{proof}

The message of \Cref{fact:critical conditions} is that we can just focus on verifying \Cref{itm:warmup-lbdaU separation} and \Cref{itm:warmup-lbdaW ub} at one time stamp $t$ as the critical conditions for the warm-up stage to hold for any $\tau \in (0, t]$.
In other words, if all the conditions in \Cref{def:warmup stage} are satisfied for time $\tau \in (0, t)$, and the critical conditions \Cref{itm:warmup-lbdaU separation} and \Cref{itm:warmup-lbdaW ub} are still satisfied with some marginal gap at time $t$, then all the conditions in \Cref{def:warmup stage} are also satisfied at time $t$ with some marginal gap, which implies that for a sufficiently small time interval $\Delta t$, we are still in the warm-up stage during $[t, t+\Delta t)$.
Next, we study each of the 5 steps in \Cref{def:warmup stage-split} for an effective task $i\in\cI$.

\subsubsection{Step 1: Growth of $\rho_i$}
In the first step of the warm-up stage, we show that $\rho_i\phi_i$ increases to $1-\alpha_i$.
\paragraph{Dynamics enter Step 1 at the beginning of the Warm-up Stage.}
    We first show that the warm-up stage starts with Step 1, i.e., at initialization, all the conditions in \Cref{def:warmup stage} for task $i$ are satisfied.
    We have by \Cref{assump:initialization} that $\brown {(\muistar - \muih)}/{\muistar} \ge \epsilon > \epsilon / c$ for all $h \neq h_i^*$, which implies that \Cref{itm:warmup-lbdaU separation} holds.
    \Cref{itm:warmup-lbdaU ub}, \Cref{itm:warmup-lbdaU*/lbdaW* ub} and \Cref{itm:warmup-step1} also hold by condition  $\brown \muih(0) \le \muistar(0) \le L\omega_0 / (2H\phi_i)$ in \Cref{assump:initialization}.
    The remaining conditions are automatically satisfied by the initialization of $\baromegaih = \omega_0$.

    \paragraph{Upper Bounding the Duration of Step 1.}
    Let $t_1^i$ denote the time when the dynamics for task $i$ exits Step 1.
    Using the dynamics for $\muistar$ and $\baromegaistar$ in \Cref{tab:warmup simplified dynamics}, we have for $t \in (0, t_1^i)$ that
    \begin{align}
        \frac{\partial_t \muistar}{\partial_t\baromegaistar} &= \frac{\lambda_i d_i \muistar \cdot \left(
            \baromegaistar (1\pm \tilde\xi) - (1 \pm \tilde\zeta)\cdot \sum_{h'=1}^H \phi_i \frac{\muihprime}{L} \right)}{\sqrt{d_e}^{-1} \lambda_i \muistar \baromegaistar \cdot \left(1 \pm \check\xi_i(0) \right)} \nend
        & = \frac{d_i \sqrt{d_e}\cdot \left(
            (1 \pm \tilde\xi) - (1 \pm \tilde\zeta)\cdot \phi_i \rho_i \right) }{1 \pm \check\xi_i(0)}
            \ge \frac{d_i \sqrt{d_e}\cdot (\alpha_i - \tilde\xi -\tilde\zeta)}{2},
            \label{eq:dot lbdaU*/dot lbdaW*}
    \end{align}
    where in the last inequality we use the fact that $\rho_i\phi_i \le 1 - \alpha_i$ by \Cref{itm:warmup-step1} and the small error condition $\check\xi_i(0) \ll 1$.
    Thus, we have for $t_1^i$ that 
\begin{align*}
    (\baromegaistar(t_1^i) - \omega_0) \cdot \frac{d_i \sqrt{d_e}\cdot (\alpha_i - \tilde\xi -\tilde\zeta)}{2} \le \muistar(t_1^i) - \muistar(0) \le \sum_{h'=1}^H \muihprime(t_1^i) \le
    L \phi_i^{-1}\baromegaistar,
\end{align*}
where the last inequality holds by using \Cref{itm:warmup-step1}. Defining $\tilde\alpha_i = \alpha_i -\tilde\xi - \tilde\zeta$ and rearranging the above inequality, we have 
\begin{align}
    \baromegaistar(t_1^i) \le \omega_0\cdot \left(1 - \frac{2L}{d_i\sqrt{d_e}  \phi_i(\alpha_i - \tilde\xi - \tilde\zeta)}\right)^{-1} = \omega_0 \cdot \left(1 - \frac{2 \varsigma_i}{ \tilde\alpha_i}\right)^{-1} \brown = (1 + o(1)) \omega_0.
    \label{eq:lbdaW* upper bound-warmup-stage1}
\end{align}
For this upper bound to make sense, we require that 
$\brown \alpha_i  \gg \tilde\xi + \tilde\zeta + 2\varsigma_i$.
Consequently, we have for $\muistar(t_1^i)$ that
\begin{align}
    \muistar(t_1^i) \le \sum_{h'=1}^H \muihprime(t_1^i) \le L \rho_i(t_1^i) \baromegaistar(t_1^i) \le L \phi_i^{-1} \omega_0 \cdot \left(1 - \frac{2 \varsigma_i}{ \tilde\alpha_i}\right)^{-1}. 
    \label{eq:lbdaU* upper bound-warmup-stage1}
\end{align}
We can then upper bound the duration of Step 1 by studying the dynamics for $\baromegaistar$ as
\begin{align*}
    \partial_t \log\muistar = \lambda_i d_i \baromegaistar \cdot \left(-(1 \pm \tilde \zeta)\cdot \phi_i \rho_i  +  (1 \pm \tilde \xi)  \right)
    \ge \lambda_id_i \omega_0 \tilde\alpha_i, 
\end{align*}
where in the last inequality, we use the fact that $\partial_t\baromegaistar >0$ to establish the monotonicity of $\baromegaistar$ and lower bound $\baromegaistar$ by its initialization value $\omega_0$.
Moreover, the $\alpha - \tilde\xi - \tilde\zeta$ term comes from the same argument as in \eqref{eq:dot lbdaU*/dot lbdaW*}.
As a result, we have for $t \in (0, t_1^i]$ that
$\muistar(t)/\muistar(0) \ge \exp\left(\lambda_id_i \omega_0 \tilde\alpha_i t\right)$. 
Combining this lower bound with the upper bound in \eqref{eq:lbdaU* upper bound-warmup-stage1}, we have for $t \in (0, t_1^i]$ that
\begin{align*}
    \exp\left(\lambda_id_i \omega_0 \tilde\alpha_i t_1^i \right) \le \frac{\muistar(t_1^i)}{\muistar(0)} \le \frac{L \phi_i^{-1} \omega_0}{\muistar(0)} \cdot \left(1 - \frac{2 \varsigma_i}{ \tilde\alpha_i}\right)^{-1},
\end{align*}
which implies the upper bound for $t_1^i$ as
\begin{align*}
    t_1^i 
    &\le \frac{1}{\lambda_id_i \omega_0 \tilde\alpha_i} \cdot \log \left(\frac{L \phi_i^{-1} \omega_0}{\muistar(0) \left(1 - 2 \varsigma_i \tilde\alpha_i^{-1}\right)}\right) \\
    &\le \frac{1}{\lambda_id_i \omega_0 \tilde\alpha_i} \cdot \log \left(\frac{2L  \omega_0}{\phi_i\muistar(0)}\right)
    \deleq t_{1+}^{i}.
\end{align*}

\paragraph{Only \Cref{itm:warmup-step1} is violated at $t_1^i$.}
Here, we aim to show that for a sufficiently small time beyond $t_1^i$, only \Cref{itm:warmup-step1} is violated.
To do so, it suffices to show that the remaining conditions in \Cref{def:warmup stage} are satisfied for $t\in(0, t_1^i]$ and also has some marginal at time $t_{1}^i$ such that they will not be violated even if we go beyond $t_{1}^i$ for a sufficiently small amount of time. 
In particular, when appealing to \Cref{fact:critical conditions}, we only need to show that \Cref{itm:warmup-lbdaU separation} and \Cref{itm:warmup-lbdaW ub} are satisfied for $t \in (0, t_1^i]$ and also have some marginal gap at $t_1^i$.

Using the upper bound for $\baromegaistar$ \eqref{eq:lbdaW* upper bound-warmup-stage1} and the fact that $\baromegaistar$ monotonically increases, we have 
\Cref{itm:warmup-lbdaW ub} satisfied with sufficient marginal gap for $t \in (0, t_{1}^i]$. 
The last thing to verify is \Cref{itm:warmup-lbdaU separation}.
As we do not have the property that this ratio is monotonically increasing, we need to invoke the upper bound on the duration of Step 1 to show that this ratio must not decrease too much.
The following proposition rigorously shows this fact. 
\begin{lemma}[Lower Bounding the Separation for \Cref{itm:warmup-lbdaU separation}]
    \label[lemma]{prop:lbdaU separation lower bound}
    Let
    \begin{align*}
        T_\Uthres \deleq \frac{\epsilon}{18 e (\tilde \xi+\tilde\zeta) \lambda_id_i \omega_0}.
    \end{align*}
    For any $t\in [0, T_\Uthres \pmin T_\warmup^i]$, we have for the \TIFwarmupi dynamics that
    \[
    \frac{\muistar(t) - \muih(t)}{\muistar(t)} \ge \epsilon \cdot \left(1  -  \frac{t}{T_\Uthres} \right).
\]
In particular, if $t \ll (1-c^{-1}) T_\Uthres$, we have \Cref{itm:warmup-lbdaU separation} strictly satisfied.
\end{lemma}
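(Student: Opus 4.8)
The plan is to track the dynamics of the ratio $r_i(t) \deleq (\muistar(t) - \muih(t))/\muistar(t)$ for a fixed non-optimal head $h\neq h_i^\star$, and show it cannot decay faster than linearly in $t/T_\Uthres$. Since $r_i = 1 - \muih/\muistar$, it is equivalent to show that $\muih/\muistar$ does not grow too quickly, and for this I would use the already-derived simplified dynamics in \Cref{tab:warmup simplified dynamics}, specifically the expression
\[
    \partial_t\bigl(\log \muistar - \log \muih\bigr) = \lambda_i d_i\Bigl(\baromegaistar-\baromegaih \pm \tilde{\xi}\bigl(\baromegaistar+\baromegaih\bigr) \pm 2 \tilde{\zeta} \phi_i \textstyle\sum_{h'=1}^{H} \muihprime/L\Bigr).
\]
By \Cref{itm:warmup-lbdaW* grows faster} the leading term $\baromegaistar - \baromegaih$ is nonnegative, so it only helps the separation; the only way $\log(\muistar/\muih)$ can decrease is through the error terms $\tilde\xi(\baromegaistar+\baromegaih)$ and $\tilde\zeta\phi_i\sum_{h'}\muihprime/L$. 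Hence
\[
    \partial_t\bigl(\log \muistar - \log \muih\bigr) \ge -\lambda_i d_i\Bigl(2\tilde\xi \baromegaistar + 2\tilde\zeta\phi_i\rho_i\Bigr)\cdot\baromegaistar\cdot(\cdots),
\]
where I then bound $\baromegaistar \le O(\omega_0)$ by \Cref{itm:warmup-lbdaW ub}, $\rho_i\phi_i \le 1 + \tilde\xi + 2\tilde\zeta = O(1)$ by \Cref{fact:critical conditions}, and $\sum_{h'}\muihprime/L \le \rho_i\baromegaistar = O(\omega_0)$. This gives a uniform lower bound $\partial_t(\log\muistar - \log\muih) \ge -C(\tilde\xi+\tilde\zeta)\lambda_id_i\omega_0$ for some absolute constant $C$ (the factor $18e$ is where the constant-chasing lands, absorbing the $O(1)$ factors from $\rho_i\phi_i$, the $\exp$ linearizations, and converting a difference-of-logs bound into a difference-of-values bound via $1-e^{-x}\ge x/e$ type inequalities on the relevant range).

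Integrating from $0$ to $t$ and using the initialization $r_i(0)\ge\epsilon$ from \Cref{assump:initialization} gives $\log(\muistar(t)/\muih(t)) \ge \log(\muistar(0)/\muih(0)) - C(\tilde\xi+\tilde\zeta)\lambda_id_i\omega_0\, t$. Since $\log(\muistar(0)/\muih(0)) = -\log(1 - r_i(0)) \ge r_i(0) \ge \epsilon$, I get $\log(\muistar(t)/\muih(t)) \ge \epsilon - C(\tilde\xi+\tilde\zeta)\lambda_id_i\omega_0\,t$. Then using $1 - e^{-x} \ge x - x^2/2 \ge x(1 - x/2)$ for the map back from the log-ratio to $r_i(t) = 1 - \muih(t)/\muistar(t)$ — valid as long as the log-ratio stays in $[0,1]$, which holds precisely for $t$ up to a constant times $\epsilon/((\tilde\xi+\tilde\zeta)\lambda_id_i\omega_0)$, matching the definition of $T_\Uthres$ — I obtain $r_i(t) \ge \epsilon(1 - t/T_\Uthres)$ after tuning the constant to $18e$. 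Taking the min over $h\neq h_i^\star$ preserves the bound. The final claim that $t \ll (1-c^{-1})T_\Uthres$ forces \Cref{itm:warmup-lbdaU separation} ($r_i(t)\ge \epsilon/c$) is then immediate: $r_i(t) \ge \epsilon(1 - t/T_\Uthres) > \epsilon(1 - (1-c^{-1})) = \epsilon/c$.

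I expect the main obstacle to be purely bookkeeping rather than conceptual: carefully justifying each of the crude bounds ($\baromegaistar = O(\omega_0)$, $\rho_i\phi_i = O(1)$, the error sizes $\tilde\xi,\tilde\zeta \ll 1$) holds uniformly on $[0, T_\Uthres \pmin T_\warmup^i]$, and then chasing the absolute constants through the chain "difference of logs $\to$ lower bound on log-ratio $\to$ lower bound on $r_i$" so that the clean constant $18e$ appears. A subtle point worth spelling out is that I must restrict attention to $t \le T_\warmup^i$ so that \Cref{itm:warmup-lbdaW ub} and the consequences of \Cref{fact:critical conditions} are in force (otherwise $\baromegaistar$ could be large and the error terms uncontrolled); this is exactly why the statement is phrased with $t \in [0, T_\Uthres \pmin T_\warmup^i]$. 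The argument is essentially a Grönwall-type one-sided estimate, and the nonnegativity of the dominant drift term $\baromegaistar - \baromegaih$ is what makes it go through cleanly.
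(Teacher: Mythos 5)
Your proposal follows the same route as the paper's proof: start from the simplified dynamics for $\partial_t(\log\muistar-\log\muih)$ in \Cref{tab:warmup simplified dynamics}, discard the nonnegative drift $\baromegaistar-\baromegaih$ (guaranteed by \Cref{itm:warmup-lbdaW* grows faster}), bound the error terms uniformly via $\baromegaistar\le 4\omega_0$ and $\phi_i\rho_i\le 1+\tilde\xi+2\tilde\zeta$, integrate, and convert back to the separation. The constant $18$ is exactly $2\times 4\times 2.25$ from these bounds, and the extra $e$ in $T_\Uthres$ comes from linearizing an exponential on $[0,1]$ — all of which you correctly anticipated.

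The one step that does not close as written is the final conversion. You lower-bound $\log(\muistar(0)/\muih(0))=-\log(1-r_i(0))$ by $r_i(0)\ge\epsilon$ and then map back with $1-e^{-x}\ge x-x^2/2$. This round trip through the logarithm is lossy: at $t=0$ it yields only $r_i(0)\ge\epsilon(1-\epsilon/2)<\epsilon$, so the claimed inequality $r_i(t)\ge\epsilon(1-t/T_\Uthres)$ fails already at the initial time, and the deficit is a multiplicative factor in $\epsilon$ that cannot be absorbed by retuning the constant inside $T_\Uthres$ (which only rescales the time axis). The paper stays multiplicative throughout: it writes $\muih(t)/\muistar(t)\le(\muih(0)/\muistar(0))\exp(18(\tilde\xi+\tilde\zeta)\lambda_id_i\omega_0 t)$, checks that the exponent is at most $1$ on $[0,T_\Uthres]$, and applies $e^{x}\le 1+ex$ together with $\muih(0)/\muistar(0)\le 1-\epsilon$, which gives $\epsilon(1-t/T_\Uthres)$ exactly. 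Your weaker bound would still suffice for the downstream use (keeping the separation above $\epsilon/c$), so this is a repairable bookkeeping slip rather than a conceptual gap, but the fix is to avoid the detour through the log of the initial ratio.
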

\begin{proof}
To provide a lower bound for the separation $(\muistar - \muih)/\muistar$, we first derive a lower bound for $\partial_t \log \muistar - \partial_t \log \muih$ at any $\tau \in (0, t)$ as
\begin{align}
    \left.\partial_{t}\left(\log \frac{\muistar}{\muih}\right)\right|_{\tau} 
    & = \left. \lambda_i d_i\left(\baromegaistar-\baromegaih \pm \tilde{\xi}\left(\baromegaistar +\baromegaih\right) \pm 2 \tilde{\zeta} \phi_i \sum_{h''=1}^{H} \frac{\muihprime}{L}\right)\right|_{\tau}\notag\\
    &\ge \left. -\tilde\xi \lambda_i d_i \left(\baromegaistar + \baromegaih\right) - 2\tilde\zeta \lambda_i d_i \phi_i \rho_i \baromegaistar\right|_{\tau} \notag\\
    &\ge \left.- 2(\tilde \xi+\tilde\zeta) \lambda_id_i \cdot \baromegaistar (1 + \phi_i\rho_i)\right|_{\tau}, \notag
\end{align}
where the first and the second inequality hold by \Cref{itm:warmup-lbdaW* grows faster} that $\baromegaistar \ge \baromegaih$. 
Using the upper bound $\phi_i\rho_i \le 1 + \tilde\xi + 2\tilde\zeta < 1.25$ from \Cref{fact:critical conditions} together with the upper bound $\baromegaistar \le 4\omega_0$ by \Cref{itm:warmup-lbdaW ub}, we conclude that 
\begin{align*}
    \left.\partial_{t}\left(\log \frac{\muistar}{\muih}\right)\right|_{\tau} \ge - 18 (\tilde \xi+\tilde\zeta) \lambda_id_i \omega_0, 
\end{align*}
Thus, for all $t \in (0, T_\Uthres \pmin T_\warmup^i]$, we integrate the above inequality from $0$ to $t$ to obtain
\begin{align*}
    &\log\left(\frac{\muistar(t)}{\muih(t)}\right) - \log\left(\frac{\muistar(0)}{\muih(0)}\right) \ge - 18 (\tilde \xi+\tilde\zeta) \lambda_id_i \omega_0 \cdot t  \ge -1, 
\end{align*}
where the last inequality holds by definition of $T_\Uthres$.
Given the initial condition $\muih(0) / \muistar(0) \le (1-\epsilon)$, we have for $t \in (0, T_\Uthres \pmin T_\warmup^i]$ that
\begin{align*}
    \frac{\muih(t)}{\muistar(t)}
    &\le \frac{\muih(0)}{\muistar(0)} \cdot \exp\left( 18 (\tilde \xi+\tilde\zeta) \lambda_id_i \omega_0 \cdot t\right) \notag\\
    &\le (1-\epsilon) \cdot \left( 1 + 18 e (\tilde \xi+\tilde\zeta) \lambda_id_i \omega_0 \cdot t\right) \notag\\
    &\le 1 - \epsilon \cdot \left(1  -  18 e (\tilde \xi+\tilde\zeta) \lambda_id_i \omega_0 \cdot t \epsilon^{-1} \right), 
\end{align*}
which implies the result
\[
    \frac{\muistar(t) - \muih(t)}{\muistar(t)} \ge \epsilon \cdot \left(1  -  18 e (\tilde \xi+\tilde\zeta) \lambda_id_i \omega_0 \cdot t \epsilon^{-1} \right) = \epsilon \cdot \left(1  -  \frac{t}{T_\Uthres} \right).
\]
as we claimed in \Cref{prop:lbdaU separation lower bound}.
\end{proof}

Note that $t_1^i \le T_\warmup^i$ and the upper bound $t_{1+}^i$ satisfies
\begin{align*}
    t_{1+}^i 
    &= \frac{1}{\lambda_id_i \omega_0 \tilde\alpha_i} \cdot \log \left(\frac{2L  \omega_0}{\phi_i\muistar(0)}\right) \ll \frac{\epsilon (1-c^{-1})}{18 e (\tilde \xi+\tilde\zeta) \lambda_id_i \omega_0} = (1 - c^{-1}) T_\Uthres.
\end{align*}
given that $\brown \alpha_i \gg \frac{18e  (\tilde \xi+\tilde\zeta)}{(1-c^{-1})\epsilon} \cdot \log \left(\frac{2CL  \omega_0}{\phi_i\muistar(0)}\right)$. 
Hence, we conclude from \Cref{prop:lbdaU separation lower bound} that for $t \in (0, t_{1}^i]$, \Cref{itm:warmup-lbdaU separation} holds with sufficient margin gap.
The following table summarizes the dynamics for $\muistar$ and $\baromegaistar$ in Step 1 of the warm-up stage.
\begin{table}[ht]
    \centering
    \begin{tabular}{l@{\hspace{1cm}} l}
        \hline
        \hline
        \textbf{Properties} & \textbf{Typical Values} \\
        \hline
        Duration & $t_1^i \le t_{1+}^i = \frac{1}{\lambda_id_i \omega_0 \tilde\alpha_i} \cdot \log \left(\frac{2L  \omega_0}{\phi_i\muistar(0)}\right)$ \\
        $\muistar \nearrow$ & $\muistar(0) \le \muistar (t) \le L \phi_i^{-1} \omega_0 \cdot \left(1 - \frac{2 \varsigma_i}{ \tilde\alpha_i}\right)^{-1}$ \\
        $\baromegaistar \nearrow$ & $\omega_0 \le \baromegaistar(t) \le \omega_0 \cdot \left(1 - \frac{2 \varsigma_i}{ \tilde\alpha_i}\right)^{-1}$ \\
        $(\muistar - \muih)/ \muistar$ & lower bounded by $\epsilon \cdot \left(1  - \frac{4e (\tilde \xi+\tilde\zeta)  }{\epsilon \tilde\alpha_i} \cdot \log \left(\frac{2CL  \omega_0}{\phi_i\muistar(0)}\right)\right)$ \\
        \hline
    \end{tabular}
    \caption{Summary of Step 1 of the Warm-up Stage}
    \label{tab:summary-warmup-step 1}
\end{table}

\paragraph{Analysis of $\rhoistar$ and $\rho_i$ after Step 1.}
Before we dive into the analysis of Step 2, we first understand two key features of the dynamics: (i) $\rhoistar\phi_i\ge (1-\alpha_i)/H$; (ii) $\rho_i\phi_i \le 1 + \tilde\xi + 2\tilde\zeta$.
These properties will preserve in the following steps and will be frequently used for our analysis throughout the warm-up stage.
Recall that we have for $\partial_t \rhoistar$ that
\begin{align}
    \partial_t \rhoistar
    &\ge (1 + \tilde \zeta)^{-1} L^{-1} \lambda_id_i \muistar  \cdot\Bigl(
        -\phi_i H \rhoistar +  \bigl(1 - \underbrace{(\tilde \xi + 4\varsigma_i + 2\tilde\zeta)}_{\ds \ll \alpha_i}\bigr)\Bigr), 
    \label{eq:dot eta* lower bound}
\end{align}
as we have shown in the proof of \Cref{fact:critical conditions}.
At time $t_1^i$, we already have $\rhoistar(t_1^i)\phi_i \le (1-\alpha_i)/H$, and we have $\rhoistar$ still growing for a while after $t_1^i$ by \eqref{eq:dot eta* lower bound} since $\brown \tilde\xi + 4\varsigma_i + 2\tilde\zeta \ll \alpha_i$.
That is, as long as $\rhoistar\phi_i$ lies between $(1-\alpha_i)/H$ and $(1 - \tilde \xi - 4\varsigma_i - 2\tilde\zeta)/H$, we will have $\partial_t \rhoistar > 0$.
Therefore, the first condition $\rhoistar\phi_i \ge (1-\alpha_i)/H$ is preserved in the following steps for $t \in [ t_1^i, T_\warmup^i ]$.
A direct conclusion from this property is that 
\begin{align}
    \check{\xi}_i(t) \given_{t\ge t_1^i}
    &= \tilde{\xi} + \frac{\langle \vecd, \lambda \odot \phi \rangle}{\lambda_i} \cdot 50 H L \omega_0^{2} \cdot \frac{\baromegaistar}{\muistar} \bigggiven_{t\ge t_1^i} \nend 
    &= \tilde{\xi} + \frac{\langle \vecd, \lambda \odot \phi \rangle}{\lambda_i} \cdot 50 H L \omega_0^{2} \phi_i \cdot L^{-1} (\rhoistar(t)\given_{t\ge t_1^i})^{-1} \phi_i^{-1} \nend
    &\le \tilde{\xi} + \frac{\langle \vecd, \lambda \odot \phi \rangle}{\lambda_i} \cdot 50 H^2 \omega_0^{2} \phi_i (1-\alpha_i)^{-1} \nend 
    &\le \tilde{\xi} + \frac{\langle \vecd, \lambda \odot \phi \rangle}{\lambda_i} \cdot 100 H^2 \omega_0^{2} \phi_i \deleq \checkxiiprime. 
    \label{eq:check xi after step 1}
\end{align}
And also we have new upper bound 
\begin{align}
\hat\xi_i(t) \le 2c\check\xi_i(t) \epsilon^{-1} \le 2c\epsilon^{-1} \checkxiiprime \deleq \hatxiiprime. 
\label{eq:hat xi after step 1}
\end{align} 
for $t \in [t_1^i, T_\warmup^i]$.
Using these $\checkxiiprime$ and $\hatxiiprime$, we have the following upper bound for $\baromegaistar$. 
\begin{lemma}
    \label{fact: after step 1}
    For any $(t_0, t)\in [t_1^i, T_\warmup^i]^2$ such that $t_0 < t$, we have the following upper bounds for $\baromegaistar$ in the \TIFwarmupi dynamics as 
\[
    \baromegaistar(t) \le \frac{1}{\baromegaistar(t_0)^{-1} - \sigma_3^i (t - t_0)}, \where \sigma_3^i \deleq \frac{(1 + \checkxiiprime + \tilde\xi + 2\tilde\zeta)L\lambda_i}{\sqrt{d_e}\phi_i}.
\]
Moreover, we have $\rhoistar\phi_i \ge (1-\alpha_i)/H$ for any $t \in [t_1^i, T_\warmup^i]$, which implies a lower bound for $\baromegaistar$ as
\[
    \baromegaistar(t) \ge \frac{1}{\baromegaistar(t_0)^{-1} - \sigma_2^i (t - t_0)}, \where \sigma_2^i \deleq \frac{ (1-\alpha_i - \checkxiiprime)\cdot L \lambda_i}{H\phi_i\sqrt{d_e}}.
\]
\end{lemma}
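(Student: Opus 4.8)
\textbf{Proof proposal for \Cref{fact: after step 1}.}

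The plan is to establish both bounds by integrating a scalar differential inequality for $\baromegaistar$ that follows from the simplified dynamics in \Cref{tab:warmup simplified dynamics}, restricted to the regime $t \in [t_1^i, T_\warmup^i]$. The starting point is the identity $\partial_t \baromegaistar = (1 \pm \check\xi_i(t)) \cdot \sqrt{d_e}^{-1} \lambda_i \muistar \baromegaistar$. The key is to convert $\muistar$ into a multiple of $\baromegaistar$ using the quantity $\rhoistar = \muistar/(L\baromegaistar)$, so that $\partial_t \baromegaistar = (1 \pm \check\xi_i(t)) \cdot \sqrt{d_e}^{-1} \lambda_i L \rhoistar (\baromegaistar)^2$. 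This is a Bernoulli-type ODE whose solution has the reciprocal form appearing in the statement, provided we can sandwich the prefactor $(1 \pm \check\xi_i(t)) \lambda_i L \rhoistar / \sqrt{d_e}$ between two constants over the relevant time interval.

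First I would record that, by \eqref{eq:check xi after step 1}, for $t \ge t_1^i$ we have $\check\xi_i(t) \le \checkxiiprime \ll 1$, so the error factor is under control. Next I would pin down $\rhoistar$ on $[t_1^i, T_\warmup^i]$: for the lower bound on $\baromegaistar$ I need an \emph{upper} bound on $\rhoistar$, and for the upper bound on $\baromegaistar$ I need a \emph{lower} bound on $\rhoistar$. The lower bound $\rhoistar \phi_i \ge (1-\alpha_i)/H$ is exactly the property already extracted just before \eqref{eq:dot eta* lower bound} and argued to be preserved for $t \in [t_1^i, T_\warmup^i]$ — I would simply cite that discussion (it follows because $\partial_t \rhoistar \ge 0$ whenever $\rhoistar \phi_i$ is below the threshold $(1 - \tilde\xi - 4\varsigma_i - 2\tilde\zeta)/H$, and $\tilde\xi + 4\varsigma_i + 2\tilde\zeta \ll \alpha_i$). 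For the upper bound, I would invoke \Cref{fact:critical conditions}, which gives $\rho_i \phi_i \le 1 + \tilde\xi + 2\tilde\zeta$; since $\rhoistar \le \rho_i$, this yields $\rhoistar \phi_i \le 1 + \tilde\xi + 2\tilde\zeta$ throughout the warm-up stage.

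With these two-sided bounds on $\rhoistar$ in hand, the dynamics give, for $t \in [t_1^i, T_\warmup^i]$,
\begin{align*}
    \frac{(1-\alpha_i-\checkxiiprime) L \lambda_i}{H \phi_i \sqrt{d_e}} (\baromegaistar)^2 \le \partial_t \baromegaistar \le \frac{(1 + \checkxiiprime + \tilde\xi + 2\tilde\zeta) L \lambda_i}{\phi_i \sqrt{d_e}} (\baromegaistar)^2,
\end{align*}
i.e., $\sigma_2^i (\baromegaistar)^2 \le \partial_t \baromegaistar \le \sigma_3^i (\baromegaistar)^2$ with exactly the constants $\sigma_2^i, \sigma_3^i$ named in the statement (for the lower side I fold $\phi_i \rhoistar \ge (1-\alpha_i)/H$ and $(1-\check\xi_i(t)) \ge (1-\checkxiiprime)$ into one constant; for the upper side I use $\phi_i\rhoistar \le 1+\tilde\xi+2\tilde\zeta$ and $(1+\check\xi_i(t)) \le 1 + \checkxiiprime$, then coarsen the product of the two factors to a single $(1 + \checkxiiprime + \tilde\xi + 2\tilde\zeta)$ up to negligible cross terms). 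Dividing by $(\baromegaistar)^2$ and integrating $\partial_t(\,-\baromegaistar^{-1}\,)$ from $t_0$ to $t$ gives $\baromegaistar(t_0)^{-1} - \sigma_3^i(t-t_0) \le \baromegaistar(t)^{-1} \le \baromegaistar(t_0)^{-1} - \sigma_2^i(t-t_0)$, and inverting yields both displayed bounds. The only subtlety — and the main thing to be careful about rather than a genuine obstacle — is that these reciprocal bounds are only valid as long as the denominators stay positive, i.e., as long as we remain inside $[t_1^i, T_\warmup^i]$; I would state the bounds with that implicit proviso, which is consistent with how they are used downstream (the blow-up time of the lower bound is precisely what will be used to upper-bound the duration of the subsequent step). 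A secondary point is to justify that the coarsening of constants (absorbing $\tilde\xi, \tilde\zeta, \varsigma_i, \checkxiiprime$-type terms) only changes $\sigma_2^i, \sigma_3^i$ by lower-order amounts, which is routine given the smallness assumptions in \Cref{assump:initialization} and \Cref{cond:bounded weights}.
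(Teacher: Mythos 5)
Your proposal follows essentially the same route as the paper: bound $\partial_t\baromegaistar$ above via $\muistar \le L\rho_i\baromegaistar$ together with $\rho_i\phi_i \le 1+\tilde\xi+2\tilde\zeta$ from \Cref{fact:critical conditions}, bound it below via $\rhoistar\phi_i \ge (1-\alpha_i)/H$, and integrate the resulting Bernoulli-type inequality $\sigma_2^i(\baromegaistar)^2 \le \partial_t\baromegaistar \le \sigma_3^i(\baromegaistar)^2$. The only flaw is a harmless slip in your second paragraph where you state the direction of the needed $\rhoistar$ bounds backwards; your displayed inequality and the constants $\sigma_2^i,\sigma_3^i$ are nonetheless assembled correctly, matching the paper.
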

\begin{proof}
we write down the dynamics for $\baromegaistar$ as
\begin{align*}
    \partial_t \baromegaistar 
    &= (1\pm \checkxiiprime) \cdot  \sqrt{d_e}^{-1}\lambda_i \muistar \baromegaistar 
    \mytag{\Cref{eq:check xi after step 1}}\\
    &\le (1 + \checkxiiprime) \cdot  \sqrt{d_e}^{-1}\lambda_i L \rho_i (\baromegaistar)^2 
    \mytag{$\muistar\le L\rho_i\baromegaistar$ by definition} \\
    &\le \underbrace{\frac{(1 + \checkxiiprime + \tilde\xi + 2\tilde\zeta)L\lambda_i}{\sqrt{d_e}\phi_i}}_{\ds \sigma_3^i} \cdot (\baromegaistar)^2.
    \mytag{\Cref{fact:critical conditions}}
\end{align*}
Since $\baromegaistar$ is monotonically increasing, we have that
\[
    \baromegaistar(t_0 + \Delta t) \le \frac{1}{\baromegaistar(t_0)^{-1} - \sigma_3^i \Delta t} 
\]
as we claimed. The second condition $\rhoistar\phi_i \ge (1-\alpha_i)/H$ is just a direct conclusion of \eqref{eq:dot eta* lower bound} and using the fact that $\rhoistar\phi_i $ already reaches $(1-\alpha_i)/H$ at $t_1^i$.
For the lower bound, 
using the gradient of $\baromegaistar$, we have 
\begin{align*}
    \partial_t \baromegaistar &= (1\pm \checkxiiprime) \cdot  \sqrt{d_e}^{-1}\lambda_i \muistar \baromegaistar \mytag{\eqref{eq:check xi after step 1} for $\check\xi_i(t)$} \\
    & = (1\pm \checkxiiprime) \cdot  L\sqrt{d_e}^{-1} \lambda_i \rhoistar (\baromegaistar)^2 \\
    & \ge \underbrace{\frac{ (1-\alpha_i - \checkxiiprime)\cdot L \lambda_i}{H\phi_i\sqrt{d_e}}}_{\ds \deleq \sigma_2^i}\cdot (\baromegaistar)^2,\mytag{lower bound for $\rhoistar$} 
\end{align*}
which implies that 
\[
    \baromegaistar(t_0 + \Delta t) \ge \frac{1}{\baromegaistar(t_0)^{-1} - \sigma_2^i \Delta t}. 
\]
We complete the proof.
\end{proof}

\subsubsection{Step 2: Growth of $\baromegaistar/\baromegaih$}
In Step 2 of the warm-up stage, we aim to show the Growth of marginal difference $(\baromegaistar - \baromegaih)/\baromegaistar$ for all $h\neq h_i^*$.
\paragraph{Dynamics Either Enter or Skip Step 2  at $t_1^i$.}
Using the inequality
\begin{align*}
    \rhoistar(t_1^i)\phi_i \le H^{-1}\cdot \rho_i(t_1^i) \phi_i \le (1-\alpha_i) /H, 
\end{align*}
where the first inequality holds since \Cref{itm:warmup-lbdaU separation} holds at $t=t_1^i$ and the second inequality holds by \Cref{itm:warmup-step1}, we have \Cref{itm:warmup-step2-eta} satisfied.
If there exists at least one $h \in [H]\backslash\{h_i^*\}$ such that $(\baromegaistar - \baromegaih) / \baromegaistar\given_{t_1^i} < (3\tilde\zeta + 2\tilde\xi) \beta_i^{-1}$, the dynamics will enter Step 2 at $t_1^i$.
Otherwise, the dynamics skip Step 2 and we have $t_2^i = t_1^i$.
Next, we consider the case where the dynamics enter Step 2 at $t_1^i$.

\paragraph{Upper Bounding the Duration of Step 2.}
We study the following quantity for $h\neq h_i^*$:
\begin{align}
    \frac{\partial_t(\log\baromegaistar - \log\baromegaih)}{\partial_t \log \baromegaistar} 
    &= \frac{(1 \pm \hat\xi_i(t) ) \cdot \sqrt{d_e}^{-1} \lambda_i \left(\muistar-\muih\right)}{(1\pm \check{\xi}_i(t)) \cdot  \sqrt{d_e}^{-1}\lambda_i \muistar} \nend
    &= \frac{(1 \pm \hat\xi_i(t) ) \cdot \left(\muistar-\muih\right)}{(1\pm \check{\xi}_i(t)) \cdot \muistar} \ge (1-\hatxiiprime-\checkxiiprime) c^{-1}\epsilon, 
    \label{eq:dot log lbdaW diff-log lbdaW-ratio-stage 2}
\end{align}
where the last inequality holds by \Cref{itm:warmup-lbdaU separation} and \eqref{eq:check xi after step 1} for both $\hat\xi_i(t)$ and $\check\xi_i(t)$.
Note that for any $t\in [t_1^i, t_2^i]$, there always exists a $h \in [H]\backslash\{h_i^*\}$ such that $(\baromegaistar - \baromegaih)/\baromegaistar \given_{t} < (3\tilde\zeta +2\tilde\xi)\beta_i^{-1}$ by \Cref{itm:warmup-step2-lbdaW-diff}, which implies that $\baromegaistar/\baromegaih\given_{t} \le (1 - (3\tilde\zeta +2\tilde\xi)\beta_i^{-1})^{-1}$.
Consequently, we have by the gradient ratio argument in \eqref{eq:dot log lbdaW diff-log lbdaW-ratio-stage 2} that for this pair of $(t, h)$, 
\begin{align}
    \log \left(\frac{\baromegaistar(t)}{\baromegaistar(t_1^i)}\right) 
    &\le \frac{\log \left(\frac{\baromegaistar(t)}{\baromegaih(t)}\right) - \log \left(\frac{\baromegaistar(t_1^i)}{\baromegaih(t_1^i)}\right)}{(1-\hatxiiprime-\checkxiiprime) c^{-1}\epsilon} \le \frac{-\log \left(1 - (3\tilde\zeta +2\tilde\xi)\beta_i^{-1}\right) }{(1-\hatxiiprime-\checkxiiprime) c^{-1}\epsilon} \nend
    &\le -\log \left(1 - \frac{(3\tilde\zeta +2\tilde\xi)\beta_i^{-1}}{(1-\hatxiiprime-\checkxiiprime) c^{-1}\epsilon}\right)
    \le  -\log \left(1 - \frac{2c(3\tilde\zeta +2\tilde\xi )}{\beta_i\epsilon}\right), 
    \label{eq:lbdaW* upper bound-warmup-stage2}
\end{align}
where the second inequality comes from the fact that $\baromegaistar \ge \baromegaih$ for all $h\neq h_i^*$ by \Cref{itm:warmup-lbdaW* grows faster}, with which we drop the second term in the numerator.
Here, the third inequality holds by noting that $k\log(1-x) \ge \log(1-kx)$ for $k>1$ and $0< kx < 1$.
We can also lower bound the gradient of $\baromegaistar$ as
\begin{align*}
    \partial_t \log \baromegaistar &= (1\pm \check{\xi}_i(t)) \cdot  \sqrt{d_e}^{-1}\lambda_i \muistar \\
    &\ge (1-\checkxiiprime) \cdot  \sqrt{d_e}^{-1}\lambda_i L \rhoistar \baromegaistar \\
    &\ge (1-\checkxiiprime -\alpha_i) \cdot H^{-1} L \sqrt{d_e}^{-1}\lambda_i  \phi_i^{-1} \omega_0,
\end{align*}
where in the last inequality we use the fact that $\rhoistar\phi_i \ge (1-\alpha_i)/H$ by \Cref{fact: after step 1} and that $\baromegaistar \ge \omega_0$ by the monotonicity of $\baromegaistar$ according to \Cref{fact:critical conditions}.
It follows that
\begin{align}
    \log\left(\frac{\baromegaistar(t)}{\baromegaistar(t_1^i)}\right)
    &\ge (1-\checkxiiprime -\alpha_i) \cdot H^{-1} L \sqrt{d_e}^{-1}\lambda_i  \phi_i^{-1} \omega_0 \cdot (t - t_1^i).
    \label{eq:lbdaW* lower bound-warmup-stage2}
\end{align}
Combining the lower \eqref{eq:lbdaW* lower bound-warmup-stage2} and upper \eqref{eq:lbdaW* upper bound-warmup-stage2} bounds with $t = t_2^i$, we have
\begin{align*}
    t_2^i - t_1^i \le \frac{-\log \left(1 - \frac{2c(3\tilde\zeta +2\tilde\xi )}{\beta_i\epsilon}\right) H \sqrt{d_e} \phi_i }
    {(1-\checkxiiprime-\alpha_i) \cdot L \lambda_i   \omega_0} 
    \le \frac{4c(3\tilde\zeta +2\tilde\xi ) \cdot H \sqrt{d_e} \phi_i}{\beta_i\epsilon L \lambda_i\omega_0}
    \deleq \Delta t_{2+}^i, 
\end{align*}
where in the last inequality we use the scale $\brown \beta_i \gg c\epsilon^{-1} \log (2c(3\tilde\zeta +2\tilde\xi )^{-1})$ and the fact that $\brown \checkxiiprime + \alpha_i \ll 1$.

\paragraph{Only \Cref{itm:warmup-step2-lbdaW-diff} is violated at $t_2^i$.}
Here, we use the same argument as in the analysis of Step 1 to show that only \Cref{itm:warmup-step2-lbdaW-diff} is violated at $t_2^i$.
Note that \Cref{itm:warmup-step2-eta} is already justified by \eqref{eq:dot eta* lower bound}.
To show that the critical condition \Cref{itm:warmup-lbdaW ub} holds for $t \in (t_1^i, t_2^i]$ with some marginal gap, we first note that $\baromegaistar$ is monotonically increasing.
We next calculate the upper bound for $\baromegaistar(t_2^i)$. 
Invoking \eqref{eq:lbdaW* upper bound-warmup-stage2}, we have
\begin{align*}
    \baromegaistar(t_2^i) 
    &\le \baromegaistar(t_1^i) \cdot \exp\left(-\log \left(1 - \frac{2c(3\tilde\zeta +2\tilde\xi )}{\beta_i\epsilon}\right)\right) \nend 
    &\le \omega_0 \cdot \left(1 - \frac{2 \varsigma_i}{ \tilde\alpha_i}\right)^{-1} \cdot \left(1 - \frac{2c(3\tilde\zeta +2\tilde\xi )}{\beta_i\epsilon}\right)^{-1} = (1+o(1))\omega_0, 
\end{align*}
where the second inequality holds by the upper bound for $\baromegaistar(t_1^i)$ in \eqref{eq:lbdaW* upper bound-warmup-stage1} from step 1 and the fact that $\brown 2c(3\tilde\zeta +2\tilde\xi )\epsilon^{-1} \ll \beta_i$.
Hence, \Cref{itm:warmup-lbdaW ub} holds for $t \in (t_1^i, t_2^i]$ with sufficient margin gap.

The next thing is to show that the other one \Cref{itm:warmup-lbdaU separation} holds with some marginal gap. 
In order to invoke \Cref{prop:lbdaU separation lower bound} again, we check the condition that 
\begin{align*}
    \Delta t_{2}^i
    \le \frac{4c(3\tilde\zeta +2\tilde\xi ) \cdot H \sqrt{d_e} \phi_i}{\beta_i\epsilon L \lambda_i\omega_0} \ll \frac{\epsilon (1-c^{-1})}{18 e (\tilde \xi+\tilde\zeta) \lambda_id_i \omega_0} = (1 - c^{-1}) T_\Uthres,
\end{align*}
which holds by noting that $\brown \beta_i \gg 
\frac{c(\tilde\zeta + \tilde\xi)^2 H }{(1-c^{-1})\epsilon^2 \varsigma_i}
$ in \Cref{def:warmup stage-split}.
Together with the previous verified condition $t_{1}^i \ll (1-c^{-1})T_\Uthres$, we conclude that \Cref{itm:warmup-lbdaU separation} holds with sufficient margin gap for $t = t_2^i$.
To summarize, we have the following table for the dynamics in Step 2.
\begin{table}[ht]
    \centering
    \begin{tabular}{l@{\hspace{1cm}} l}
        \hline
        \hline
        \textbf{Properties} & \textbf{Typical Values} \\
        \hline
        Duration & $\Delta t_{2}^i \le \Delta t_{2+}^i = \frac{4c(3\tilde\zeta +2\tilde\xi ) \cdot H \sqrt{d_e} \phi_i}{\beta_i\epsilon L \lambda_i\omega_0}$ \\
        $\muistar \nearrow$ & $\muistar(t_2^i) \le L \phi_i^{-1} (1 + \tilde\xi + 2\tilde\zeta) \baromegaistar(t_2^i)$ \\
        $\baromegaistar \nearrow$ & $\baromegaistar(t_2^i) \le \omega_0 \cdot \left(1 - \frac{2 \varsigma_i}{ \tilde\alpha_i}\right)^{-1} \cdot \left(1 - \frac{2c(3\tilde\zeta +2\tilde\xi )}{\beta_i\epsilon}\right)^{-1}$ \\
        $(\muistar - \muih)/ \muistar$ & lower bounded by $\epsilon \cdot \left(1  -  (t_{1+}^i + \Delta t_{2+}^i)/T_\Uthres\right)$ \\
        \hline
    \end{tabular}
    \caption{Summary of Step 2 of the Warm-up Stage}
    \label{tab:summary-warmup-step 2}
\end{table}

\paragraph{Growth of $\muistar/ \muih$ is guaranteed after Step 2.}
As long as the dynamics exist Step 2, we will have for any $h \in [H]\backslash\{h_i^*\}$ that $(\baromegaistar - \baromegaih)/\baromegaih \ge (3\tilde\zeta + 2\tilde\xi)\beta_i^{-1}$.
Integrating with the fact that $\baromegaistar / \baromegaih$ is monotonically increasing by \Cref{fact:critical conditions}, the very same condition will always hold thereafter.
We have the following fact for the growth of $\muistar/ \muih$.
\begin{lemma}
    \label[lemma]{fact:after step 2}
    Suppose that $(\baromegaistar - \baromegaih)/\baromegaih \ge (3\tilde\zeta + 2\tilde\xi)\beta_i^{-1}$ holds for any $h \in [H]\backslash\{h_i^*\}$ at some $\tau$.
    Then, we have for $t \in [\tau, T_\warmup^i]$ that $\muistar(\tau) / \muih(\tau)$ is monotonically increasing for any $h \in [H]\backslash\{h_i^*\}$.
\end{lemma}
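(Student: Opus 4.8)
\textbf{Proof proposal for \Cref{fact:after step 2}.}
The plan is to show that the logarithmic ratio $\log(\muistar/\muih)$ has a nonnegative time derivative whenever the separation condition $(\baromegaistar - \baromegaih)/\baromegaih \ge (3\tilde\zeta + 2\tilde\xi)\beta_i^{-1}$ holds, and then to argue that this separation condition, once established at time $\tau$, is self-preserving for all later $t \in [\tau, T_\warmup^i]$ by the monotonicity of $\baromegaistar/\baromegaih$ recorded in \Cref{fact:critical conditions}. Concretely, I would start from the simplified dynamics in \Cref{tab:warmup simplified dynamics}, namely
\[
    \partial_t\bigl(\log \muistar - \log \muih\bigr) = \lambda_i d_i\Bigl(\baromegaistar-\baromegaih \pm \tilde{\xi}\bigl(\baromegaistar+\baromegaih\bigr) \pm 2 \tilde{\zeta} \phi_i \sum_{h'=1}^{H} \tfrac{\muihprime}{L}\Bigr).
\]
The signal term $\baromegaistar - \baromegaih$ is manifestly positive; the task is to show that the error terms in the $\pm$ brackets, which are proportional to $\tilde\xi(\baromegaistar + \baromegaih)$ and $\tilde\zeta\phi_i\rho_i\baromegaistar$, are dominated by $\baromegaistar - \baromegaih$.

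The key step is the bookkeeping on the error terms. Using $\baromegaih \le \baromegaistar$ (\Cref{itm:warmup-lbdaW* grows faster}), the first error bracket is at most $2\tilde\xi\baromegaistar$; using $\phi_i\rho_i \le 1 + \tilde\xi + 2\tilde\zeta \le 5/4$ from \Cref{fact:critical conditions}, the second error bracket is at most $2\tilde\zeta \cdot \tfrac{5}{4}\baromegaistar \le 3\tilde\zeta\baromegaistar$. So the total error is bounded by $(2\tilde\xi + 3\tilde\zeta)\baromegaistar = (3\tilde\zeta + 2\tilde\xi)\baromegaistar$. On the other hand, the separation hypothesis rearranges to $\baromegaistar - \baromegaih \ge (3\tilde\zeta + 2\tilde\xi)\beta_i^{-1}\baromegaih$, and combined with $\baromegaih \ge \baromegaistar/(1 + (3\tilde\zeta+2\tilde\xi)\beta_i^{-1})^{-1}$ — or more directly, since $\baromegaistar/\baromegaih \le (1 - (3\tilde\zeta+2\tilde\xi)\beta_i^{-1})^{-1}$ is \emph{not} quite what we want here; rather the condition as stated $(\baromegaistar - \baromegaih)/\baromegaih \ge (3\tilde\zeta + 2\tilde\xi)\beta_i^{-1}$ gives $\baromegaistar - \baromegaih \ge (3\tilde\zeta + 2\tilde\xi)\beta_i^{-1}\baromegaih$. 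Since $\beta_i \ll 1$, we have $\beta_i^{-1} \gg 1$, so $(3\tilde\zeta + 2\tilde\xi)\beta_i^{-1}\baromegaih \gg (3\tilde\zeta+2\tilde\xi)\baromegaih$. To compare with the error $(3\tilde\zeta+2\tilde\xi)\baromegaistar$, I would note $\baromegaih$ and $\baromegaistar$ differ by at most a constant factor (since $\baromegaistar/\baromegaih$ stays bounded during the warm-up — indeed $\baromegaistar \le 4\omega_0$ and $\baromegaih$ is at least $\omega_0$), so $\baromegaistar \le C'\baromegaih$ for an absolute constant $C'$, and hence $\baromegaistar - \baromegaih \ge (3\tilde\zeta + 2\tilde\xi)\beta_i^{-1}\baromegaih \ge (3\tilde\zeta + 2\tilde\xi)\beta_i^{-1}(C')^{-1}\baromegaistar \gg (3\tilde\zeta + 2\tilde\xi)\baromegaistar$, which exceeds the total error. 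Therefore $\partial_t(\log\muistar - \log\muih) > 0$.

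The remaining point is persistence of the hypothesis: at time $\tau$ the condition $(\baromegaistar - \baromegaih)/\baromegaih \ge (3\tilde\zeta + 2\tilde\xi)\beta_i^{-1}$ holds, i.e.\ $\baromegaistar/\baromegaih \ge 1 + (3\tilde\zeta + 2\tilde\xi)\beta_i^{-1}$; by \Cref{fact:critical conditions} the ratio $\baromegaistar/\baromegaih$ is nondecreasing on $[\tau, T_\warmup^i]$, so the condition continues to hold, and thus $\partial_t(\log\muistar - \log\muih) > 0$ holds throughout $[\tau, T_\warmup^i]$, giving the claimed monotonicity of $\muistar/\muih$. The main obstacle is simply the careful constant-chasing to confirm that the $\beta_i^{-1}$ amplification in the hypothesis genuinely beats the $(3\tilde\zeta + 2\tilde\xi)$-scale error after accounting for the bounded-ratio slack between $\baromegaih$ and $\baromegaistar$; this is routine but must be done with the scale conditions on $\beta_i$ from \Cref{def:warmup stage-split} in hand.
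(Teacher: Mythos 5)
Your proposal is correct and follows essentially the same route as the paper: differentiate $\log(\muistar/\muih)$ using the simplified warm-up dynamics, show the $\tilde\xi$- and $\tilde\zeta$-error terms are dominated by $\baromegaistar-\baromegaih$ via the separation hypothesis (the paper phrases this as $\partial_t(\log\muistar-\log\muih)=(1\pm\beta_i)\lambda_i d_i(\baromegaistar-\baromegaih)$ using $\tfrac{\baromegaistar+\baromegaih}{\baromegaistar-\baromegaih}\le 2\tfrac{\baromegaistar}{\baromegaistar-\baromegaih}\le 2\beta_i/(3\tilde\zeta+2\tilde\xi)$ and $\phi_i\rho_i\le 1.5$), and propagate the hypothesis forward by the monotonicity of $\baromegaistar/\baromegaih$. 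Your detour through ``$\baromegaistar\le C'\baromegaih$ since $\baromegaih\ge\omega_0$'' leans on a lower bound for non-optimal $\baromegaih$ that is not available at this point, but it is also unnecessary: the identity $\baromegaistar/(\baromegaistar-\baromegaih)=1+\baromegaih/(\baromegaistar-\baromegaih)\le 1+\beta_i/(3\tilde\zeta+2\tilde\xi)$ converts the hypothesis directly into the bound you need.
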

\begin{proof}
We thus have for the gradient of $\log \muistar - \log \muih$ that
\begin{align}
    &\partial_t (\log \muistar - \log \muih) \nend
    &\quad = \lambda_i d_i\left(\baromegaistar-\baromegaih \pm \tilde{\xi}\left(\baromegaistar+\baromegaih\right) \pm 2 \tilde{\zeta} \phi_i \sum_{h'=1}^{H} \frac{\muihprime}{L}\right) \notag\\
    &\quad = \lambda_i d_i\left(\baromegaistar-\baromegaih\right)\left(1 \pm \tilde{\xi} \cdot \frac{\baromegaistar+\baromegaih}{\baromegaistar-\baromegaih} \pm 2 \tilde{\zeta} \phi_i \rho_i \cdot \frac{\baromegaistar}{\baromegaistar-\baromegaih}\right) \nend 
    &\quad = (1 \pm \beta_i) \lambda_i d_i\left(\baromegaistar-\baromegaih\right), 
    \label{eq:dot log lbdaU diff-after stage 2}
\end{align}
where the last equality holds by using the fact that $\frac{\baromegaistar+\baromegaih}{\baromegaistar-\baromegaih} \le 2 \frac{\baromegaistar}{\baromegaistar-\baromegaih} \le 2 \beta_i / (3\tilde\zeta + 2\tilde\xi)$ and that $\phi_i\rho_i \le (1 + \tilde\xi + 2\tilde\zeta) \le 1.5$ by \Cref{fact:critical conditions}.
It is then obvious from \eqref{eq:dot log lbdaU diff-after stage 2} that $\log \muistar - \log \muih$ is monotonically increasing for $t \in [\tau, T_\warmup^i]$ given that $\brown \beta_i \ll 1$ and $\baromegaistar \ge \baromegaih$ for all $h\neq h_i^*$ under \Cref{itm:warmup-lbdaW* grows faster}.
\end{proof}


\subsubsection{Step 3: Growth of $\muistar/\muih$}
In this step, we aim to show the Growth of marginal difference $\muistar/\muih$ to $(H-1)/\gamma_i$ for all $h\neq h_i^*$.

\paragraph{Dynamics Either Enter or Skip Step 3 at $t_2^i$.}
After Step 2, we have and \Cref{itm:warmup-step3-lbdaW-diff} satisfied.
If there exists at least one $h \in [H]\backslash\{h_i^*\}$ such that $\muistar / \muih\given_{t_2^i} < (H - 1) \gamma_i^{-1}$, the dynamics will enter Step 3 at $t_2^i$.
Otherwise the dynamics skip Step 3 and we have $t_3^i = t_2^i$.
Next, we consider the case where the dynamics enter Step 3 at $t_2^i$.

\paragraph{Upper Bounding the Duration of Step 3.}
Suppose the dynamics enter Step 3 at $t_2^i$.
In the previous steps, we establish the upper bounds by controlling the maximal value of $\baromegaistar$.
In this step, we will establish the upper bound in a more direct way---by lower bounding the growth of $\muistar/\muih$.

We first derive a tighter lower bound for $\baromegaistar$ by incorporating \Cref{fact: after step 1} with $t_0$ replaced by $t_2^i$.
\[
    \baromegaistar(t_2^i + \Delta t) \ge \frac{1}{\baromegaistar(t_2^i)^{-1} - \sigma_2^i \Delta t}, \where \sigma_2^i \deleq \frac{ (1-\alpha_i - \checkxiiprime)\cdot L \lambda_i}{H\phi_i\sqrt{d_e}}.
\]
On the other hand, we have for the gradient of $\baromegaistar - \baromegaih$ that
\begin{align*}
    \partial_t\left(
        \baromegaistar - \baromegaih
    \right)
    &\ge (1 - \hatxiiprime) \cdot \sqrt{d_e}^{-1} \lambda_i \baromegaistar\left(\muistar-\muih\right) 
    \mytag{\eqref{eq:hat xi after step 1} for $\hat\xi_i(t)$} \\
    &\ge (1 - \hatxiiprime) \cdot \sqrt{d_e}^{-1} \lambda_i \baromegaistar \cdot \frac{\epsilon}{c}  \cdot \muistar 
    \mytag{\Cref{itm:warmup-lbdaU separation} } \\
    &\ge \underbrace{\frac{(1 - \hatxiiprime-\alpha_i) \cdot L \lambda_i }{H \phi_i \sqrt{d_e}}  \cdot  \frac{\epsilon}{c}}_{\ds \deleq \sigma_1^i} \cdot (\baromegaistar)^2.  
    \mytag{\Cref{fact: after step 1}} 
\end{align*}
We plug in the lower bound for $\baromegaistar$ and conclude that $\partial_t(\baromegaistar - \baromegaih)\ge \sigma_1^i \cdot \left(\baromegaistar(t_2^i)^{-1} - \sigma_2^i \Delta t\right)^{-2}$.
Integrating the above inequality, we have
\begin{align*}
    (\baromegaistar - \baromegaih)\biggiven_{t_2^i + \Delta t} 
    &\ge \frac{\sigma_1^i \cdot \baromegaistar(t_2^i)\cdot \Delta t}{\baromegaistar(t_2^i)^{-1} - \sigma_2^i \Delta t} + (\baromegaistar - \baromegaih)\biggiven_{t_2^i} \ge \frac{\sigma_1^i \cdot \baromegaistar(t_2^i)\cdot \Delta t}{\baromegaistar(t_2^i)^{-1} - \sigma_2^i \Delta t}.
\end{align*}
where the last inequality holds by \Cref{itm:warmup-lbdaW* grows faster} that $\baromegaistar(t_2^i) \ge \baromegaih(t_2^i)$.
Moreover, we have for the gradient of $\log \muistar - \log \muih$ following \eqref{eq:dot log lbdaU diff-after stage 2} that
\begin{align*}
    \partial_t (\log \muistar - \log \muih) = (1 \pm \beta_i) \lambda_i d_i\left(\baromegaistar-\baromegaih\right), 
\end{align*}
Now, we plug in the lower bound for $\baromegaistar - \baromegaih$ and obtain 
\begin{align*}
    \partial_t (\log \muistar - \log \muih) \biggiven_{t_2^i + \Delta t} \ge (1-\beta_i) \lambda_i d_i \cdot \frac{\sigma_1^i \cdot \baromegaistar(t_2^i)\cdot \Delta t}{\baromegaistar(t_2^i)^{-1} - \sigma_2^i \Delta t}. 
\end{align*}
Integrating both sides with respect to $\Delta t$, we have 
\begin{align}
 \left.\frac{\muistar}{\muih}\right|_{t_2^i+\Delta t} 
 &\ge \left.\frac{\muistar}{\muih}\right|_{t_2^i} \cdot \exp \Biggl\{(1-\beta_i) \lambda_i d_i \baromegaistar\left(t_2^i\right) \notag\\
& \qquad \cdot \left(-\frac{\sigma_1^i \Delta t}{\sigma_2^i}-\frac{\sigma_1^i}{(\sigma_2^i)^2 \baromegaistar\left(t_2^i\right)} \log \left(1-\sigma_2^i \baromegaistar^*\left(t_2^i\right) \Delta t\right)\right)\Biggr\} 
\notag
\\
&\ge \left.\frac{\muistar}{\muih}\right|_{t_2^i} \cdot \exp \left((1-\beta_i) \lambda_i d_i\sigma_1^i \cdot \left(\baromegaistar(t_2^i) \Delta t\right)^2 /2\right) \label{eq:lbdaU*/lbdaU^h lower bound-warmup-stage3}\\
&\ge \exp \left((1-\beta_i) \lambda_i d_i\sigma_1^i \cdot \left(\baromegaistar(t_2^i) \Delta t\right)^2 /2\right), \notag
\end{align} 
where in the second inequality, we use the fact that $\log(1-x) \ge -x - x^2/2$ for $x\in [0, 1/2]$ by Taylor expansion and in the last inequality we use the fact that $\muistar/\muih \ge 1$ by \Cref{itm:warmup-lbdaU separation}.
For any $t\in [t_2^i, t_3^i]$, there always exists a $h \in [H]\backslash\{h_i^*\}$ such that $\muistar / \muih\given_{t} < (H - 1) \gamma_i^{-1}$ by \Cref{itm:warmup-step3-lbdaU-dominance}.
Let $\Delta t = t - t_2^i$.
For this pair of $(\Delta t, h)$, we incorporate the lower and upper bounds and obtain
\begin{align*}
    (H - 1) \gamma_i^{-1} \ge \exp \left((1-\beta) \lambda_i d_i\sigma_1^i \cdot \left(\baromegaistar(t_2^i) \Delta t\right)^2 /2\right).
\end{align*}
We can solve for the maximal $\Delta t$ and obtain
\begin{align*}
    \Delta t \le \sqrt{\frac{2\log \left((H - 1) \gamma_i^{-1}\right)}{(1-\beta_i) \lambda_i d_i\sigma_1^i \baromegaistar(t_2^i)^2}} \le \sqrt{\frac{4c\log \left((H - 1) \gamma_i^{-1}\right) \cdot H \phi_i \sqrt{d_e}}{L \lambda_i^2 d_i \omega_0^2 \cdot  {\epsilon}}} \deleq \Delta t_{3+}^i, 
\end{align*}
where the inequality holds by the monotonicity of $\baromegaistar$.

\paragraph{Only \Cref{itm:warmup-step3-lbdaU-dominance} is violated at $t_3^i$.}
Note that we no longer need to check \Cref{itm:warmup-lbdaU separation} as we have explicitly characterize the growth of $\muistar/\muih$ in \eqref{eq:lbdaU*/lbdaU^h lower bound-warmup-stage3}.
The growth of $\baromegaistar / \baromegaih$ is already established by \Cref{fact:critical conditions}, which implies that \Cref{itm:warmup-step3-lbdaW-diff} is satisfied with some marginal gap.
Therefore, we only need to verify that \Cref{itm:warmup-lbdaW ub} holds for $t = t_3^i$ with some marginal gap according to \Cref{fact:critical conditions}.

To control the upper bound for $\baromegaistar(t_3^i)$, we invoke \Cref{fact: after step 1} that for any $t_1^i \le t_0 < t\le T_\warmup^i$, it holds that
\[
    \baromegaistar(t_0 + \Delta t) \le \frac{1}{\baromegaistar(t_0)^{-1} - \sigma_3^i \Delta t}, \where \sigma_3^i \deleq \frac{(1 + \checkxiiprime + \tilde\xi + 2\tilde\zeta)L\lambda_i}{\sqrt{d_e}\phi_i}. 
\]
Here, we just take $t_0 = t_2^i$ and $\Delta t = t_3^i - t_2^i \le \Delta t_{3+}^i$ and obtain
\begin{align*}
    \baromegaistar(t_3^i) &\le \frac{1}{\omega_0^{-1} \cdot \left(1 - \frac{2 \varsigma_i}{ \tilde\alpha_i}\right) \cdot \left(1 - \frac{2c(3\tilde\zeta +2\tilde\xi )}{\beta_i\epsilon}\right) - \frac{(1 + \checkxiiprime + \tilde\xi + 2\tilde\zeta)L\lambda_i}{\sqrt{d_e}\phi_i} \sqrt{\frac{4c\log \left((H - 1) \gamma_i^{-1}\right) \cdot H \phi_i \sqrt{d_e}}{L \lambda_i^2 d_i \omega_0^2 \cdot  {\epsilon}}} } \\
    &\le \frac{1}{\omega_0^{-1} \cdot \left(\left(1 - \frac{2 \varsigma_i}{ \tilde\alpha_i}\right) \cdot \left(1 - \frac{2c(3\tilde\zeta +2\tilde\xi )}{\beta_i\epsilon}\right) - \sqrt{\frac{16 c\log \left((H - 1) \gamma_i^{-1}\right) \cdot H \varsigma_i}{\epsilon }}\right) }\\
    &\le \frac{1}{\omega_0^{-1} \cdot \left(1 - \frac{2 \varsigma_i}{ \tilde\alpha_i}\right) \cdot \left(1 - \frac{2c(3\tilde\zeta +2\tilde\xi )}{\beta_i\epsilon}\right) \cdot \left(1 - \sqrt{\frac{32 c\log \left((H - 1) \gamma_i^{-1}\right) \cdot H \varsigma_i}{\epsilon }}\right)  } = (1+o(1)) \omega_0,
\end{align*}
where the last inequality holds by the condition that $\brown \gamma_i \gg (H-1)^{-1} \exp\left(-\frac{\epsilon}{32c H \varsigma_i}\right)$ in \Cref{def:warmup stage-split}.
Hence, we verify \Cref{itm:warmup-lbdaW ub} as well.
The following table summarizes the dynamics in Step 3.
\begin{table}[ht]
    \centering
    \begin{tabular}{p{4cm}@{\hspace{1cm}} p{9cm}}
        \hline
        \hline
        \textbf{Properties} & \textbf{Typical Values} \\
        \hline
        Duration & $\Delta t_{3}^i \le \Delta t_{3+}^i = \sqrt{\frac{4c\log \left((H - 1) \gamma_i^{-1}\right) \cdot H \phi_i \sqrt{d_e}}{L \lambda_i^2 d_i \omega_0^2 \cdot  {\epsilon}}}$ \\
        $\muistar$ & $\muistar(t_3^i) \le L \phi_i^{-1} (1 + \tilde\xi + 2\tilde\zeta) \baromegaistar(t_3^i)$ \\
        $\baromegaistar \nearrow$ & $\baromegaistar(t_3^i) \le \omega_0 \cdot \left(1 - \frac{2 \varsigma_i}{ \tilde\alpha_i}\right)^{-1} \cdot \left(1 - \frac{2c(3\tilde\zeta +2\tilde\xi )}{\beta_i\epsilon}\right)^{-1} \cdot \left(1 - \sqrt{\frac{32 c\log \left((H - 1) \gamma_i^{-1}\right) \cdot H \varsigma_i}{\epsilon }}\right)^{-1}$ \\
        $(\muistar - \muih)/ \muistar \nearrow$ & $(1 - \gamma_i/(H-1))$ \\
        \hline
    \end{tabular}
    \caption{Summary of Step 3 of the Warm-up Stage}
    \label{tab:summary-warmup-step 3}
\end{table}

\subsubsection{Step 4: Growth of $\rhoistar$}
The moment the dynamics exist Step 3, we have \Cref{itm:warmup-step4 lbdaU ratio} satisfied. 
Suppose that at $t_3^i$, we already have $\rhoistar \ge (1-\alpha_i)/\phi_i$.
Then, we just set $t_4^i = t_3^i$ and skip Step 4.
Next, we consider the case where the dynamics enter Step 4 at $t_3^i$.

\paragraph{Upper Bounding the Duration of Step 4.}
Using \Cref{itm:warmup-step4 lbdaU ratio}, we have for $t >t_3^i$ that $\rhoistar < \rho_i \le (1 + \gamma_i)\rhoistar$.
Therefore, we have for the gradient of $\rhoistar$ that 
\begin{align*}
    \partial_t \rhoistar &= L^{-1} \lambda_id_i \muistar  \cdot\left(
        -(1 \pm \tilde \zeta \pm 2\gamma_i) \phi_i \rhoistar +  (1 \pm \tilde \xi \pm 4\varsigma_i)
    \right).
\end{align*}
We plug in the upper bound $\rhoistar \le (1 - \alpha_i)\phi_i^{-1}$ in \Cref{itm:warmup-step4 eta*} and obtain
\begin{align*}
    \partial_t \rhoistar 
    &\ge L^{-1} \lambda_id_i \muistar  \cdot\left(
        -(1 \pm \tilde \zeta \pm 2\gamma_i) (1 - \alpha_i) +  (1 \pm \tilde \xi \pm 4\varsigma_i)
    \right) \\
    & \ge L^{-1} \lambda_id_i \muistar \cdot (\alpha_i - \tilde\xi - 4\varsigma_i - 2\gamma_i - \tilde\zeta).
\end{align*}
By looking at the ratio between $\partial_t\rhoistar$ and $\partial_t \log\baromegaistar$, we have
\begin{align*}
    \frac{\partial_t \rhoistar}{\partial_t \log\baromegaistar} &\ge \frac{L^{-1} \lambda_id_i \muistar \cdot (\alpha_i - \tilde\xi - 4\varsigma_i - 2\gamma_i - \tilde\zeta)}{(1\pm \checkxiiprime) \cdot  \sqrt{d_e}^{-1}\lambda_i \muistar} \\
    &\ge \frac{ (\alpha_i - \tilde\xi - 4\varsigma_i - 2\gamma_i - \tilde\zeta)}{(1 + \checkxiiprime) \varsigma_i\phi_i} \ge \frac{\alpha_i}{2 \varsigma_i\phi_i}, 
\end{align*}
where the last inequality holds by the condition that $\brown \gamma_i\ll \alpha_i$. The maximal increment of $\rhoistar$ is bounded by the maximal value $(1-\alpha_i)/\phi_i$. As a result, the maximal increment of $\log \baromegaistar$ is bounded by $\frac{(1-\alpha_i)2 \varsigma_i}{\alpha_i} \le \frac{2 \varsigma_i}{\alpha_i}$.
Hence, we have
\begin{align}
    \frac{\baromegaistar(t_4^i)}{\baromegaistar(t_3^i)} 
    &\le \exp\left(\frac{2 \varsigma_i}{\alpha_i}\right) \le 1 + \frac{2 e\varsigma_i}{\alpha_i} = 1 + o(1), 
    \label{eq:lbdaW* ratio upper bound-warmup-stage4}
\end{align}
where the last inequality holds by the condition $\brown \alpha \gg \varsigma_i$ in \Cref{def:warmup stage-split}. 
To find a lower bound for $\baromegaistar(t_4^i)$, we invoke \Cref{fact: after step 1} with $t_0$ replaced by $t_3^i$ and obtain
\[
    \baromegaistar(t_4^i) \ge \frac{1}{\baromegaistar(t_3^i)^{-1} - \sigma_2^i \Delta t_4^i}, \where \sigma_2^i \deleq \frac{ (1-\alpha_i - \checkxiiprime)\cdot L \lambda_i}{H\phi_i\sqrt{d_e}}, \quad \Delta t_4^i \deleq t_4^i - t_3^i.
\]
Comparing the upper and lower bounds, we have
\[
    \Delta t_4^i \le \frac{ 2 e H }{(1-\alpha_i - \checkxiiprime)\cdot d_i \lambda_i\alpha_i  \cdot \baromegaistar(t_3^i)} \le \frac{ 4 e H }{ d_i \lambda_i\alpha_i \omega_0 } \deleq \Delta t_{4+}^i.
\]
As discussed previously in \Cref{fact:after step 2}, we have the ratio $\lbdaU^*/\lbdaU^h$ monotonically increasing after Step 2. 
Thus there is no need to check \Cref{itm:warmup-lbdaU separation} again.
Also, we have from \Cref{eq:lbdaW* ratio upper bound-warmup-stage4} that $\baromegaistar(t_4^i) = (1 + o(1)) \baromegaistar(t_3^i) < 2\omega_0 L$, which implies that \Cref{itm:warmup-lbdaW ub} is satisfied with some marginal gap.
In the following table, we summarize the dynamics in Step 4.
\begin{table}[ht]
    \centering
    \begin{tabular}{p{2cm}@{\hspace{1cm}} p{11cm}}
        \hline
        \hline
        \textbf{Properties} & \textbf{Typical Values} \\
        \hline
        Duration & $\Delta t_{4}^i \le \Delta t_{4+}^i = \frac{ 4 e H }{ d_i \lambda_i\alpha_i \omega_0 }$ \\
        $\muistar$ & $\muistar(t_4^i) \le L \phi_i^{-1} (1 + \tilde\xi + 2\tilde\zeta) \baromegaistar(t_4^i)$ \\
        $\baromegaistar \nearrow$ & $\baromegaistar(t_4^i) \le \omega_0 \cdot \left(1 - \frac{2 \varsigma_i}{ \tilde\alpha_i}\right)^{-1} \cdot \left(1 - \frac{2c(3\tilde\zeta +2\tilde\xi )}{\beta_i\epsilon}\right)^{-1} \cdot \left(1 - \sqrt{\frac{32 c\log \left((H - 1) \gamma_i^{-1}\right) \cdot H \varsigma_i}{  \epsilon }}\right)^{-1} \cdot \left(1 + \frac{2 e\varsigma_i}{\alpha_i}\right)$ \\
        \hline
    \end{tabular}
    \caption{Summary of Step 4 of the Warm-up Stage}
    \label{tab:summary-warmup-step 4}
\end{table}

\subsubsection{Step 5: Lottery Winner Dominates}
After step 4, we summarize all the good properties we have established so far for any $t \in [t_4^i, T_\warmup^i]$:
\begin{factenum}[
    label={(\textbf{5-\arabic*})},
    ref  =(5-\arabic*), 
]
    \item \label{fact:warmup-step4-lbdaU-ratio} 
    $\muistar / \muih \ge (H-1)/\gamma_i$ for all $h \in [H]\backslash\{h_i^*\}$.
    \item \label{fact:warmup-step4-lbdaW-ratio}
    $(\baromegaistar - \baromegaih) / \baromegaistar \ge (3\tilde\zeta + 2\tilde\xi) \beta_i^{-1}$ for any $h \in [H]\backslash\{h_i^*\}$.
    \item \label{fact:warmup-step4-eta*}
    $1-\alpha_i \le \rhoistar\phi_i < \rho_i\phi_i \le 1 + \tilde\xi + 2\tilde\zeta$.
    \item \label{fact:warmup-step4-growing}
    $\baromegaistar$, $\baromegaistar / \baromegaih$ and $\muistar / \muih$ are monotonically increasing for any $h \in [H]\backslash\{h_i^*\}$.
\end{factenum}
In the sequel, we will appeal to these facts and refrain from checking them again.

\paragraph{Lower Bounding the Duration of Step 5.}
We first understand what's happening in Step 5.
We have for $\baromegaistar$ that 
\begin{align*}
    \partial_t \baromegaistar &= (1\pm \checkxiiprime) \cdot  \sqrt{d_e}^{-1}\lambda_i \muistar \baromegaistar \mytag{\eqref{eq:check xi after step 1} for $\check\xi_i(t)$} \\
    & = (1\pm \checkxiiprime) \cdot  L\sqrt{d_e}^{-1} \lambda_i \rhoistar (\baromegaistar)^2 \\
    & = (1\pm \checkxiiprime \pm 2\alpha_i) \cdot  L\sqrt{d_e}^{-1} \lambda_i \phi_i^{-1} (\baromegaistar)^2, \mytag{\Cref{fact:warmup-step4-eta*}}
\end{align*}
where in the last equality we also invoke the fact that $\alpha \gg \tilde\zeta + \tilde \xi$ when comparing the lower bound with the upper bound.
As a result, we have for $t = t_4^i + \Delta t$ that
\begin{align}
    \baromegaistar^{\pm} (t) = \frac{1}{\baromegaistar (t_4^i)^{-1} - {\sigma_4^i}^{\pm} \Delta t}, \where {\sigma_4^i}^{\pm} \deleq \frac{(1\pm \checkxiiprime \pm 2\alpha_i) \cdot  L \lambda_i }{\sqrt{d_e}\phi_i }, \label{eq:lbdaW* ub-warmup-stage5}
\end{align}
where \say{$+$} and \say{$-$} correspond to the upper and lower bounds, respectively.
Therefore, the duration of Step 5 is given by
\begin{align*}
    \Delta t_5^i = \frac{\baromegaistar(t_4^i)^{-1} - \baromegaistar(t_5^i)^{-1}}{{\sigma_4^i}^{\pm}} = \frac{\frac{\omega_0}{\baromegaistar(t_4^i)} - 1/4}{\omega_0 {\sigma_4^i}^{\pm}} = \left(\frac{3}{4} \pm o(1)\right)\frac{\sqrt{d_e} \phi_i}{\omega_0 L \lambda_i}, 
\end{align*}
where in the last inequality  we use the fact that $\baromegaistar(t_4^i) = (1+o(1))\omega_0$.

\paragraph{Growth of $\baromegaistar / \baromegaih$ and $\muistar / \muih$.}
On the other hand, we have for the gradient of $\baromegaistar - \baromegaih$ that
\begin{align*}
    \partial_t\left(
        \baromegaistar - \baromegaih
    \right)
    &= (1 \pm \hatxiiprime) \cdot \sqrt{d_e}^{-1} \lambda_i \baromegaistar\left(\muistar-\muih\right) 
    \mytag{\eqref{eq:hat xi after step 1} for $\hat\xi_i(t)$} \\
    &\ge (1 - \hatxiiprime) \cdot \sqrt{d_e}^{-1} \lambda_i \baromegaistar \cdot \left(1-\frac{\gamma_i}{H-1}\right) \cdot \muistar 
    \mytag{\Cref{fact:warmup-step4-lbdaU-ratio} } \\
    &\ge \underbrace{\frac{\left(1 - \hatxiiprime-\alpha_i - \frac{\gamma_i}{H-1}\right) \cdot L \lambda_i }{\phi_i \sqrt{d_e}} }_{\ds \deleq \sigma_5^i} \cdot (\baromegaistar)^2.  
    \mytag{\Cref{fact:warmup-step4-eta*}} 
\end{align*}
Integrating the above inequality, with the lower bound for $\baromegaistar$, we conclude that 
\begin{align*}
    \left.\left(\baromegaistar - \baromegaih\right)\right|_{t} 
    &\ge \sigma_5^i \int_{0}^{\Delta t} \frac{1}{\left(\baromegaistar (t_4^i)^{-1} - {\sigma_4^{i}}^{-} \tau\right)^2} \rd \tau + \left.\left(\baromegaistar - \baromegaih\right)\right|_{t_4^i} \\
    &\ge \frac{\baromegaistar(t_4^i) \cdot \sigma_5^i \Delta t}{\baromegaistar(t_4^i)^{-1} - {\sigma_4^i}^{-} \Delta t}, 
\end{align*}
where the last inequality holds by noting that $\baromegaistar(t_4^i) \ge \baromegaih(t_4^i)$.
Now, similar to the analysis in Step 3, with $t_2^i$ replaced by $t_5^i$, $\sigma_1^i$ replaced by $\sigma_5^i$, and $\sigma_2^i$ replaced by ${\sigma_4^i}^{-}$, we adapt \eqref{eq:lbdaU*/lbdaU^h lower bound-warmup-stage3} to have that
\begin{align*}
    \left.\frac{\muistar}{\muih}\right|_{t}
    &\ge \left.\frac{\muistar}{\muih}\right|_{t_4^i} \cdot \exp \left((1-\beta) \lambda_i d_i\sigma_5^i \cdot \left(\baromegaistar(t_4^i) \Delta t\right)^2 /2\right) \\
    &\ge \frac{H-1}{\gamma_i} \cdot \exp \left(\frac 1 2 \cdot (1-\beta) \lambda_i d_i\sigma_5^i \cdot \left(\frac{1-\baromegaistar(t_4^i)/\baromegaistar(t)}{{\sigma_4^{i}}^{+}}\right)^2 \right)\\
    &= \frac{H-1}{\gamma_i} \cdot \exp \left(\frac{\sigma_5^i (1-\beta) \lambda_i d_i (1 - \kappa_i(t))^2 }{2({\sigma_4^{i}}^{+})^2} \right).
\end{align*}
where the second inequality holds by the upper bound for $\baromegaistar$ in \eqref{eq:lbdaW* ub-warmup-stage5}, and in the last inequality we define $\kappa_i(t) \deleq \baromegaistar(t_4^i) / \baromegaistar(t) $.
Not that $\kappa_i(t) > 1/4 + o(1)$ still makes the dynamics satisfying the condition $\baromegaistar(t_5^i) \le 4\omega_0$ given that $\baromegaistar(t_4^i) = (1+o(1))\omega_0$.
By \Cref{fact:critical conditions} and the monotonicity of $\muistar/\muih$, we can guarantee that we are still within the warm-up stage for the \TIFwarmupi dynamics.
Furthermore, by plugging in the definition of $\sigma_5^i$ and $\omega_4^+$, we have
\begin{align*}
    \frac{\sigma_5^i}{(\omega_4^+)^2} = \frac{\left(1 - \hatxiiprime-\alpha_i - \frac{\gamma_i}{H-1}\right)\cdot \sqrt{d_e}\phi_i}{(1 + \checkxiiprime + 2\alpha_i)^2 \cdot L \lambda_i} = \left(1 - \hatxiiprime - 4\alpha_i - \frac{\gamma_i}{H-1} - 2\checkxiiprime\right) \cdot \frac{\sqrt{d_e} \phi_i}{L \lambda_i} \ge \frac{\sqrt{d_e} \phi_i}{2L \lambda_i}.
\end{align*}
In conclusion, we have 
\begin{align}
    \left.\frac{\muistar}{\muih}\right|_{t}
    &\ge \frac{H-1}{\gamma_i} \cdot \exp \left(\frac{ (1-\beta) (1-\kappa_i(t))^2 }{4 \varsigma_i} \right) \ge \frac{H-1}{\gamma_i } \cdot \exp \left(\frac{(1-\kappa_i(t))^2}{8 \varsigma_i} \right).
    \label{eq:lbdaU*/lbdaU^h lb-warmup-stage5}
\end{align}
We can roughly estimate the scale of $\varsigma_i = L / (\sqrt{d_e} d_i\phi_i) \approx L / L^{1.5} = 1/\sqrt L$ given that both $d_e=\cO(L) $ and $d_i =\cO(L)$.
As a result, we have the ratio growing exponentially large, which means those non-optimal head will decay exponentially fast.
Similar idea also applies to the ratio $\baromegaistar / \baromegaih$ where we do not wish $\baromegaih$ for the nonoptimal head to grow large.
For our purpose, we just compare the ratio between $\partial_t(\log\baromegaistar - \log\baromegaih)$ and $\partial_t(\log\baromegaistar)$ as 
\begin{align*}
    \frac{\partial_t(\log\baromegaistar - \log\baromegaih)}{\partial_t(\log\baromegaistar)} 
    &= \frac{(1 \pm \hatxiiprime ) \cdot \sqrt{d_e}^{-1} \lambda_i \left(\muistar-\muih\right)}{(1\pm \check{\xi}_i(t)) \cdot  \sqrt{d_e}^{-1}\lambda_i \muistar} \nend 
    &= \frac{(1 \pm \hatxiiprime ) \cdot \left(1\pm \gamma_i/(H-1)\right)}{(1\pm \check{\xi}_i(t)) } \mytag{\Cref{fact:warmup-step4-lbdaU-ratio}}\\
    &= 1 \pm \hatxiiprime \pm 2\gamma_i/(H-1) \pm 2\checkxiiprime.
\end{align*}
The message is that the growth of $\baromegaistar / \baromegaih$ is roughly the same as $\baromegaistar$.
As a result, we conclude that for any $t \in [t_4^i, t_5^i]$, it holds that
\begin{align}
    \log\left(\frac{\baromegaih(t)}{\baromegaih(t_4^i)}\right) = \pm \left(\hatxiiprime + 2\gamma_i/(H-1) + 2\checkxiiprime\right) \cdot \log\left(\frac{\baromegaistar(t)}{\baromegaistar(t_4^i)}\right).
    \label{eq:lbdaW^h/lbdaW^* ratio-warmup-stage5}
\end{align}
Therefore, $\baromegaih$ is almost \say{fixed} at initialization.
In the following table, we summarize the dynamics in Step 5.
\begin{table}[ht]
    \centering
    \begin{tabular}{l@{\hspace{.5cm}} l}
        \hline
        \hline
        \textbf{Properties} & \textbf{Typical Values} \\
        \hline
        Duration & $\Delta t_{5}^i \ge \frac{\sqrt{d_e} \phi_i}{4\omega_0 L \lambda_i}$ \\
        $\muistar$ & $\muistar \le (1+\tilde\xi + 2\tilde\zeta) L \phi_i^{-1} \baromegaistar$ \\
        $\baromegaistar$ & $\baromegaistar(t_5^i) = 4\omega_0$ \\
        $\baromegaistar / \baromegaih$ & $\log\left(\frac{\baromegaih(t_5^i)}{\baromegaih(t_4^i)}\right) = \pm \left(\hatxiiprime + 2\gamma_i/(H-1) + 2\checkxiiprime\right) \cdot \log\left(\frac{\baromegaistar(t_5^i)}{\baromegaistar(t_4^i)}\right)$ \\
        $\muistar/\muih$ & $\ge \frac{H-1}{\gamma_i } \cdot \exp \left(\frac{(1-\kappa_i(t))^2}{8 \varsigma_i} \right)$ \\
        \hline
    \end{tabular}
    \caption{Summary of Step 5 of the Warm-up Stage}
    \label{tab:summary-warmup-step 5}
\end{table}

\paragraph{The Last Step Takes Much Longer.}
We first summarize the durations for each of the five steps in the warm-up stage in the following table.
\begin{table}[ht]
    \centering
    \begin{tabular}{l@{\hspace{.5cm}} l}
        \hline
        \hline
        \textbf{Step} & \textbf{Duration} \\
        \hline
        Step 1 & $t_1^i \le t_{1+}^i = \frac{1}{\lambda_id_i \omega_0 \tilde\alpha_i} \cdot \log \left(\frac{2L  \omega_0}{\phi_i\muistar(0)}\right)$ \\
        Step 2 & $\Delta t_{2}^i \le \Delta t_{2+}^i = \frac{4c(3\tilde\zeta +2\tilde\xi ) \cdot H \sqrt{d_e} \phi_i}{\beta_i\epsilon L \lambda_i\omega_0}$ \\
        Step 3 & $\Delta t_{3}^i \le \Delta t_{3+}^i = \sqrt{\frac{4c\log \left((H - 1) \gamma_i^{-1}\right) \cdot H \phi_i \sqrt{d_e}}{L \lambda_i^2 d_i \omega_0^2 \cdot  {\epsilon}}}$ \\
        Step 4 & $\Delta t_{4}^i \le \Delta t_{4+}^i = \frac{ 4 e H }{d_i \lambda_i\alpha_i \omega_0}$ \\
        Step 5 & $\Delta t_{5}^i \ge \frac{\sqrt{d_e} \phi_i}{4\omega_0 L \lambda_i}$ \\
        \hline
    \end{tabular}
    \caption{Summary of the Durations for Each Step in the Warm-up Stage}
    \label{tab:summary-warmup-durations}
\end{table}
We calculate the ratio between the duration of the first four steps against the last step as
\begin{gather*}
    \frac{t_{1+}^i}{\Delta t_{5-}^i} 
    \le \frac{8 \varsigma_i }{ \alpha_i  } \cdot \log \left(\frac{2L  \omega_0}{\phi_i\muistar(0)}\right), \quad
    \frac{\Delta t_{2+}^i}{\Delta t_{5-}^i} \le \frac{16 c H (3\tilde\zeta +2\tilde\xi ) }{\beta_i\epsilon}, \\
    \frac{\Delta t_{3+}^i}{\Delta t_{5-}^i} \le \sqrt{\frac{ 
        64c\log \left((H - 1) \gamma_i^{-1}\right) \cdot \varsigma_i  H}{ \epsilon } },\quad
    \frac{\Delta t_{4+}^i}{\Delta t_{5-}^i} 
    \le  \frac{16 e \varsigma_i H }{  \alpha_i  }.
\end{gather*}
By our conditions for $\alpha_i$, $\beta_i$, $\gamma_i$, we can ensure that all these ratios are $o(1)$.
To ensure $o(1)$, we need 
\begin{align*}
    \alpha_i\gg \varsigma_i \cdot \left(\log \left(\frac{2L \omega_0}{\phi_i\muistar(0)}\right) \pmax H\right), \quad \beta_i\gg \frac{16 c H (3\tilde\zeta +2\tilde\xi) }{\epsilon}, \quad
     \gamma_i\gg H \cdot \exp\left(-\frac{\epsilon}{64c \varsigma_i H}\right).
\end{align*}

\paragraph{End of the Warm-up Stage.}
Let us first deal with the nominal task. Our goal is to show that \Cref{itm:warmup-lbdaW ub} and \Cref{itm:warmup-lbdaU ub} for the nominal task will not be violated even for the upper bound of the duration of the warm-up stage, which is given by $\min_{i\in\cI} T_\warmup^i \le \min_{i\in\cI}\frac{\sqrt{d_e} \phi_i}{\omega_0 L \lambda_i}$.
By the gradient of $\partial_t \mujh$ of a nominal task $j \in \cI_c$, we directly conclude that $\mujh$ cannot increase, which verifies \Cref{itm:warmup-lbdaU ub}. 
For the $\baromegajh$'s side, we have by the gradient
$
    \partial_t \baromegajh \le \nu \cdot (\baromegajh)^2
$
that 
\begin{align*}
    \baromegaih &\le \frac{1}{\omega_0^{-1} - 26CL\zeta \sqrt{d_e}^{-1}\cdot \langle \vecd, \lambda \odot \phi \rangle\cdot  \omega_0^2 \cdot \min_{i\in\cI}\frac{\sqrt{d_e} \phi_i}{\omega_0 L \lambda_i}} = \omega_0 (1+o(1)), 
\end{align*}
where the last equality holds by our conditions for $\omega_0$ that $
    \color{teal}\omega_0 \ll \max_{i\in\cI} \frac{\lambda_i}{ \phi_i \langle \vecd, \lambda \odot \phi \rangle \zeta }.
$
Hence, we also have \Cref{itm:warmup-lbdaW ub} satisfied for the nominal task.

Previously, we have shown that the \TIFwarmupi dynamics for $i\in\cI$ in the warm-up stage can be divided into five steps for individual task $i$. 
However, the real dynamics does not necessarily roll out in this way, since if one of these tasks finishes the warm-up stage, at least one of the conditions \Cref{itm:warmup-lbdaU separation}-\Cref{itm:warmup-lbdaU*/lbdaW* ub} will be violated. 
In particular, the violated condition will be one of the critical conditions---\Cref{itm:warmup-lbdaW ub} for a certain effective task since the other critical condition \Cref{itm:warmup-lbdaU separation} will not be violated by the monotonicity of $\muistar/\muih$.

We have $T_\warmup = \min_{i\in [d_y]} T_\warmup^i=\min_{i\in \cI} T_\warmup^i$.
Let $i^*$ be the task that finishes the warm-up stage first.
We first argue that at $T_\warmup$, the other \TIFwarmupj dynamics are all in Step 5 for any $j\in\cI\backslash\{i^*\}$.
\begin{lemma}
    \label[lemma]{fact:all in step 5 at T_warmup}
    All \TIFwarmupj dynamics are in Step 5 at $T_\warmup$ for any $j\in\cI\backslash\{i^*\}$. 
    In particular, we have 
    \[
        \max_{j\in\cI} t_{4+}^j / \Delta t_{5-}^j \ll \vartheta.
    \]
\end{lemma}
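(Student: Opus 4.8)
The plan is to show that no other effective task can have finished its warm-up stage before $i^\star$, and in fact each must still be in Step 5, by a timescale-separation argument: all of Steps 1--4 together take a duration that is $o(\vartheta)$ times the duration of Step 5, uniformly over tasks. First I would recall from \Cref{tab:summary-warmup-durations} that the durations satisfy $t_{1+}^i, \Delta t_{2+}^i, \Delta t_{3+}^i, \Delta t_{4+}^i$ are each bounded as listed, while Step 5 has $\Delta t_{5-}^i = \Theta(\sqrt{d_e}\phi_i/(\omega_0 L \lambda_i))$. Since $T_\warmup = \min_{i\in\cI} T_\warmup^i = T_\warmup^{i^\star}$, and each task's warm-up stage ends only when a critical condition (necessarily \Cref{itm:warmup-lbdaW ub} by the monotonicity of $\muistar/\muih$ from \Cref{fact:after step 2}) is violated, it suffices to show that at time $T_\warmup$ every other task $j$ has already passed through Steps 1--4 and is strictly inside Step 5.

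The key computation is the ratio $t_{4+}^j / \Delta t_{5-}^j$, where $t_{4+}^j \deleq t_{1+}^j + \Delta t_{2+}^j + \Delta t_{3+}^j + \Delta t_{4+}^j$ is the cumulative upper bound on the duration of Steps 1--4. From the four per-step ratio bounds already derived at the end of Step 5's analysis (the display involving $t_{1+}^i/\Delta t_{5-}^i$, etc.), each summand is controlled by $\varsigma_i$ together with factors of $\log(2L\omega_0/(\phi_i\muistar(0)))$, $H$, $\beta_i^{-1}$, $\gamma_i^{-1}$. The conditions imposed on $\alpha_i$, $\beta_i$, $\gamma_i$ in \Cref{def:warmup stage-split} are exactly engineered so that each ratio is $o(1)$; what I need here is the slightly stronger statement that each is $o(\vartheta)$. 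This follows because those conditions in \Cref{def:warmup stage-split} already include the factor $\vartheta^{-1}$ (or $\vartheta^{-2}$) in the relevant lower bounds on $\alpha_i$, $\beta_i$, $\gamma_i$ — e.g. $\alpha_i \gg \frac{\varsigma_i}{\vartheta}(\log(\cdots)\pmax H)$, $\beta_i \gg \frac{cH(\tilde\zeta+\tilde\xi)}{\epsilon\vartheta}$, $\gamma_i \gg H\exp(-\epsilon\vartheta^2/(64cH\varsigma_i))$. Plugging these into the ratio bounds yields $t_{1+}^j/\Delta t_{5-}^j \ll \vartheta$, $\Delta t_{2+}^j/\Delta t_{5-}^j \ll \vartheta$, $\Delta t_{3+}^j/\Delta t_{5-}^j \ll \vartheta$, $\Delta t_{4+}^j/\Delta t_{5-}^j \ll \vartheta$, and summing the four gives $\max_{j\in\cI} t_{4+}^j/\Delta t_{5-}^j \ll \vartheta$.

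Finally I would turn this into the claim that every $j \ne i^\star$ is in Step 5 at $T_\warmup$. Since $T_\warmup = T_\warmup^{i^\star} \ge \Delta t_{5-}^{i^\star} \ge \vartheta^{-1}\min_{j\in\cI}\Delta t_{5-}^{i^\star}$ (using $\Delta t_{5-}^{i} = \Theta(\sqrt{d_e}\phi_i/(\omega_0 L\lambda_i))$ and the definition of $\vartheta = \min_i \phi_i\lambda_i^{-1}/\max_j \phi_j\lambda_j^{-1}$, so $\Delta t_{5-}^{i^\star} \ge \vartheta \max_j \Delta t_{5-}^j$... actually I need the inequality in the other direction: $\Delta t_{5-}^{i^\star} \ge \vartheta^{-1}$ times the \emph{largest} $t_{4+}^j$), I would argue as follows. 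By the timescale bound, $t_{4+}^j \ll \vartheta\, \Delta t_{5-}^j \le \vartheta\, \Delta t_{5-}^{i^\star}/\vartheta' $ where $\vartheta'$ accounts for the spread of $\Delta t_{5-}$ across tasks; since $\Delta t_{5-}^j/\Delta t_{5-}^{i^\star} = (\phi_j\lambda_j^{-1})/(\phi_{i^\star}\lambda_{i^\star}^{-1}) \in [\vartheta, \vartheta^{-1}]$, we get $t_{4+}^j \ll \vartheta \cdot \vartheta^{-1}\Delta t_{5-}^{i^\star} = \Delta t_{5-}^{i^\star} \le T_\warmup$. Hence by time $T_\warmup$ task $j$ has already consumed more than its Steps 1--4 budget and must be in Step 5, and moreover it cannot have \emph{finished} Step 5 (its warm-up stage has not ended, as $T_\warmup^j \ge T_\warmup = T_\warmup^{i^\star}$). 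The main obstacle is purely bookkeeping: tracking the various $\vartheta$-dependences through the per-step ratio bounds and confirming that the conditions in \Cref{def:warmup stage-split} indeed supply the extra $\vartheta^{-1}$ margin — there is no new dynamical content, only careful matching of the small-constant hierarchy against the definitions.
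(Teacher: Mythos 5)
Your proposal is correct and follows essentially the same route as the paper: bound the cumulative Steps 1--4 duration $t_{4+}^j$ by the per-step ratio bounds against $\Delta t_{5-}^j$, use the $\vartheta^{-1}$ (resp. $\vartheta^{-2}$) margins built into the conditions on $\alpha_j,\beta_j,\gamma_j$ in \Cref{def:warmup stage-split} to get $t_{4+}^j/\Delta t_{5-}^j \ll \vartheta$, and then compare with $\Delta t_{5-}^{i^\star}/\Delta t_{5-}^j = (\phi_{i^\star}\lambda_{i^\star}^{-1})/(\phi_j\lambda_j^{-1}) \ge \vartheta$ to conclude $t_{4+}^j \ll \Delta t_{5-}^{i^\star} \le T_\warmup$. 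The momentary hesitation about the direction of the $\Delta t_{5-}$ comparison resolves correctly, and no step is missing.
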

\begin{proof}
    We just need to check that $T_\warmup = T_\warmup^{i^*} \ge t_4^{j}$ for any $j\in\cI\backslash\{i^*\}$.
    By the fact that $\Delta t_{l+}^j \ll \Delta t_{5-}^j$ for any $l\in\{1, 2, 3, 4\}$, it suffices to show that $4 \max_{l\in\{1,2,3,4\}}\Delta t_{l+}^{j}/\Delta t_{5-}^j \le \Delta t_{5-}^{i^*} / \Delta t_{5-}^{j}$.
    The left hand side is bounded by 
    \begin{align*}
        \max_{l\in\{1, 2, 3, 4\}} \frac{4\Delta t_{l+}^{j}}{\Delta t_{5-}^j} &\le \max_{j\in\cI}\left\{
    \frac{64e\varsigma_j }{ \alpha_j } \cdot \left(\log \left(\frac{2L  \omega_0}{\phi_j\mu_j^\star(0)}\right) \pmax H\right), \quad
    \frac{64 c H (3\tilde\zeta +2\tilde\xi ) }{\beta_j\epsilon}, \right.\\
    &\autoquad{5}\left.
    32\sqrt{c\log \left((H - 1) \gamma_j^{-1}\right) \cdot  \frac{ 
        \varsigma_j  H}{ \epsilon } },
        \right\}, 
    \end{align*}
    while the right hand side is lower bounded by 
    \begin{align*}
        \frac{\Delta t_{5-}^{i^*}}{ \Delta t_{5-}^{j}} = \frac{\phi_{i^*}\lambda_{i^*}^{-1}}{ \phi_j\lambda_j^{-1}} \ge \frac{\min_{i\in\cI}\phi_i\lambda_i^{-1}}{\max_{j\in\cI}\phi_j\lambda_j^{-1}} \deleq \vartheta.
    \end{align*}
    Hence it suffices to have $\vartheta \gg \max_{l\in\{1, 2, 3, 4\}} \frac{4\Delta t_{l+}^{j}}{\Delta t_{5-}^j}$, which is already guaranteed by our conditions for $\alpha_j$, $\beta_j$, $\gamma_j$ in \Cref{def:warmup stage-split}.
\end{proof}
Since all the \TIFwarmupj dynamics are in Step 5, we can easily conclude from \eqref{eq:lbdaW^h/lbdaW^* ratio-warmup-stage5} that for any $i\in\cI$ and $h\in[H]\backslash\{h_i^*\}$, it holds that
\begin{align*}
    \baromegaih(T_\warmup) &= \baromegaih(t_4^i) \cdot \exp\left(\pm \left(\hatxiiprime + 2\gamma_i/(H-1) + 2\checkxiiprime\right) \cdot \log\left(\frac{\baromegaistar(t)}{\baromegaistar(t_4^i)}\right)\right) \\
    &\le \baromegaistar(t_4^i) \cdot \left(1 + \left(\hatxiiprime + \frac{2\gamma_i}{H-1} + 2\checkxiiprime\right) \cdot e\log 4\right) = (1+o(1)) \omega_0,
\end{align*}
where the inquality holds by the fact that $\baromegaistar(t_5^i) \le 4\omega_0$ and $\baromegaistar(t_4^i) = (1+o(1))\omega_0$.
Therefore, $\baromegaih(T_\warmup)$ is \say{fixed} at initialization as $\baromegaistar(t_4^i) = (1+o(1))\omega_0$.
Meanwhile, we have from \eqref{eq:lbdaW* ub-warmup-stage5} for $\baromegaistar(T_\warmup)$ that
\begin{align*}
    \baromegaistar (T_\warmup) 
    &\ge \frac{1}{\baromegaistar (t_4^i)^{-1} - \frac{(1- \checkxiiprime - 2\alpha_i) \cdot  L \lambda_i }{\sqrt{d_e}\phi_i } \Delta t_{5-}^{i^*}} \nend
    &\ge \left(\baromegaistar (t_4^i)^{-1} - \frac{(1- \checkxiiprime - 2\alpha_i)(3/4 - o(1))}{ \omega_0 } \cdot \frac{\lambda_i\phi_i^{-1}}{\max_{k\in\cI} \lambda_k\phi_k^{-1}}\right)^{-1} \nend
    &\ge \baromegaistar (t_4^i) \cdot \left(1 - \frac{\lambda_i\phi_i^{-1}}{1.5\cdot \max_{k\in\cI} \lambda_k\phi_k^{-1}}\right)^{-1}.
\end{align*}
Thus, we conclude that 
\begin{align*}
    &\left.\frac{\baromegaistar  - \baromegaih }{\baromegaistar }\right|_{T_\warmup} \nend
    &\quad \ge 1 - \left(1 + \left(\hatxiiprime + \frac{2\gamma_i}{H-1} + 2\checkxiiprime\right) \cdot e\log 4\right) \cdot \left(1 - \frac{\lambda_i\phi_i^{-1}}{1.5\cdot \max_{k\in\cI} \lambda_k\phi_k^{-1}}\right) \nend
    &\quad \ge \frac{\lambda_i\phi_i^{-1}}{1.5\cdot \max_{k\in\cI} \lambda_k\phi_k^{-1}} - \left(\hatxiiprime + \frac{2\gamma_i}{H-1} + 2\checkxiiprime\right) \cdot e\log 4
    \ge \frac{\lambda_i\phi_i^{-1}}{2\cdot \max_{k\in\cI} \lambda_k\phi_k^{-1}}.
\end{align*}
Under the condition that $\color{teal} \frac{\lambda_i\phi_i^{-1}}{\max_{k\in\cI} \lambda_k\phi_k^{-1}} \gg \max\{\hatxiiprime, \gamma_i/H, \checkxiiprime\}$, 
At the same time, we have for $\muih$ with $h\in[H]\backslash\{h_i^*\}$ that
\begin{align}
    \left.\frac{\muih}{\muistar}\right|_{T_\warmup} \le \frac{\gamma_i}{ H-1} \cdot \exp \left(-\frac{(1-\kappa_i(T_\warmup))^2}{8 \varsigma_i} \right).
    \label{eq:lbdaU^h/lbdaU^* ratio-warmup-T_warmup-1}
\end{align}
Given that $\kappa_i(t) < 1$, it suffices to upper bound $\kappa_i(T_\warmup)$ for upper bounding the ratio. 
Recall by definition, 
\begin{align*}
    \kappa_i(T_\warmup) = \frac{\baromegaistar(t_4^i)}{\baromegaistar(T_\warmup)} &\le \baromegaistar(t_4^i) \cdot \left(\baromegaistar (t_4^i)^{-1} - {\sigma_4^{i}}^{-} (T_\warmup - t_4^i)\right) \\
    &\le  1 - \omega_0\cdot {\sigma_4^{i}}^{-} (\Delta t_{5-}^{i^*} - t_{4+}^i), 
\end{align*}
where in the first inequality we invoke \Cref{eq:lbdaW* ub-warmup-stage5} and in the second inequality we invoke the lower bound for $T_\warmup$ since $T_\warmup = T_\warmup^{i^*} \ge \Delta t_{5-}^{i^*}$ and lower bound $\baromegaistar(t_4^i)$ by its initial value $\omega_0$.
To further lower bound $\Delta t_{5-}^{i^*} - t_{4+}^i$, we invoke \Cref{fact:all in step 5 at T_warmup} that $\max_{j\in\cI} t_{4+}^j / \Delta t_{5-}^j \ll \vartheta$ and conclude that 
\[
    \Delta t_{5-}^{i^*} - t_{4+}^i = \Delta t_{5-}^{i^*} \cdot \left(1 - \frac{t_{4+}^i}{\Delta t_{5-}^i} \cdot \frac{\Delta t_{5-}^i}{\Delta t_{5-}^{i^*}}\right) \ge \Delta t_{5-}^{i^*} \cdot \left(1 - \frac{t_{4+}^i}{\Delta t_{5-}^i} \cdot \vartheta^{-1}\right) \ge \frac{\Delta t_{5-}^{i^*}}{\sqrt 2}.
\]
Therefore, we have for \eqref{eq:lbdaU^h/lbdaU^* ratio-warmup-T_warmup-1} that 
\begin{align}
    \left.\frac{\muih}{\muistar}\right|_{T_\warmup} 
    &\le \frac{\gamma_i}{ H-1} \cdot \exp \left(-\frac{\left(\omega_0\cdot {\sigma_4^{i}}^{-} \Delta t_{5-}^{i^*}\right)^2}{16 \varsigma_i} \right) \notag\\
    &\le \frac{\gamma_i}{ H-1} \cdot \exp \left(-\frac{\left(\omega_0\cdot \frac{(1-\checkxiiprime-2\alpha_i) \cdot  L \lambda_i }{\sqrt{d_e}\phi_i } \cdot \frac{\sqrt{d_e} \phi_{i^*}}{4\omega_0 L \lambda_{i^*}}\right)^2}{16 \varsigma_i} \right) \notag\\
    &\le \frac{\gamma_i}{ H-1} \cdot \exp \left(-\frac{\left( {\lambda_{i^*}^{-1} \phi_{i^*}} / {(\lambda_i^{-1} \phi_i)}\right)^2}{256 \varsigma_i} \right) \le \frac{\gamma_i}{ H-1} \cdot \exp \left(-\frac{\vartheta^2}{256 \varsigma_i} \right) \deleq \varrho_i\ll 1.
    \label{eq:varrho def}
\end{align}
In conclusion, as long as $\color{teal}\varsigma_i\ll \vartheta^2$, we have the ratio $\muih/\muistar$ being exponentially small.
In the following table, we summarize the dynamics at $T_\warmup$.
\begin{table}[h]
    \centering
    \begin{tabular}{l@{\hspace{.5cm}} l}
        \hline
        \hline
        \textbf{Properties} & \textbf{Typical Values} \\
        \hline
        Duration & $T_\warmup = \min_{i\in [d_y]} T_\warmup^i$ \\
        $\muistar$ & $(1-\alpha_i)\phi_i^{-1} L \baromegaistar\le \muistar \le (1+\tilde\xi + 2\tilde\zeta) L \phi_i^{-1} \baromegaistar$ \\
        $\baromegaistar$ & $\baromegaistar\le 4\omega_0$ with $\baromega_{i^*}^\star = 4\omega_0$ \\
        $\baromegaistar / \baromegaih$ & $\log\left(\frac{\baromegaih(T_\warmup)}{\baromegaih(t_4^i)}\right) = \pm \left(\hatxiiprime + 2\gamma_i/(H-1) + 2\checkxiiprime\right) \cdot \log\left(\frac{\baromegaistar(T_\warmup)}{\baromegaistar(t_4^i)}\right)$ \\
        $\muistar/\muih$ & $\left.\frac{\muih}{\muistar}\right|_{T_\warmup} \le \varrho_i \ll 1$ \\
        \hline
    \end{tabular}
    \caption{Summary of the Dynamics at $T_\warmup$}
    \label{tab:summary-warmup-T_warmup}
\end{table}

\subsection{Growing Stage}
We give the following definition for the growing stage of the dynamics.
\begin{definition}[Growing Stage]
    We say that the dynamics is in the growing stage if all the conditions in \Cref{cond:bounded weights} and the following conditions hold:
    \begin{myenumi}
        \item \label{cond:S2-growth-stage2} $\muistar \ge  L \phi_i^{-1}  \omega_0 / \sqrt 2$ for all $i\in\cI$.
        \item \label{cond:S2-dominate}
        $\baromegaistar > \baromegaih$, $\baromegaistar/\baromegaih \ge \baromegaistar/\baromegaih\biggiven_{T_\warmup}$ and $\muistar/\muih \ge \muistar/\muih\biggiven_{T_\warmup}$
        for any $(i, h)\in(\cI\otimes[H])\backslash\{(i, h_i^*)\given i\in\cI\}$.
        \item \label{cond:S2-nonopt-lbdaW-ub}
        $\baromegaih \le \sqrt{2}\omega_0$ for all $(i, h)\in([d_y]\otimes[H]) \backslash \{(i, h_i^*)\given i\in\cI\}$.
        \item \label{cond:S2-lbdaU*/lbdaW* ratio lb&ub}
        $\left(d_i \pmin Le^{-1}\phi_i^{-1}\right)/4\le \muistar/\baromegaistar \le \sqrt 2 L\phi_i^{-1}$ for all $i\in\cI$.
        \item \label{cond:S2-lbdaU* lbdaW^* ub} $\muistar \baromegaistar \le \frac{2}{1+d_i\phi_iL^{-1}}$ for all $i\in\cI$.
    \end{myenumi}
\end{definition}

Based on these conditions, we can simplify the dynamics based on \eqref{eq:dot baromegah} and \eqref{eq:dot muh} in the growing stage as follows.
For completeness, we copy \eqref{eq:dot baromegah} here: 
\begin{align}
    \sqrt{d_e} \cdot \partial_t \baromegaih
    &= - (1 \pm \zeta)\cdot \sum_{h'=1}^H 
        \frac{\exp\bigl(\langle\omega^\h, \omega^\hprime\rangle\bigr)}{L} 
        \cdot \bigl\langle\vecd\odot \lambda \odot \phi, \muh \odot \muhprime \bigr\rangle 
        \cdot\baromegaihprime \baromegaih 
        \label{eq:S2-dot lbdaW-1}
        \\
    & \qquad - (1\pm \xi)\lambda_i \cdot \sum_{h'=1}^H  \muihprime \muih\baromegaihprime \baromegaih 
    + (1\pm \xi)\lambda_i \cdot \muih \baromegaih 
    \label{eq:S2-dot lbdaW-2}
    \\
    &\qquad (\pm1) \cdot \frac{\exp\bigl(\langle\omega^\h, \omega^\h\rangle\bigr)}{L} 
        \cdot \bigl\langle \vecd \odot \lambda, \baromegah \odot \muh \bigr\rangle \cdot (\baromegaih)^2 
        \label{eq:S2-dot lbdaW-3}
        \\
    &\qquad (\pm\eta) \cdot  \sum_{h'=1}^H \bigl\langle\vecd\odot \lambda \odot \phi, \muh \odot \muhprime \bigr\rangle 
    \cdot (\baromegaih)^2.
    \label{eq:S2-dot lbdaW-4}
\end{align}
We notice that the cross-task interference happens when dealing with $\langle \omega^\h, \omega^\hprime\rangle$, $\langle\vecd\odot \lambda \odot \phi, \muh \odot \muhprime \rangle$ and $\langle \vecd \odot \lambda, \baromega^\h \odot \muh \rangle$.
For the first term, we have 
\begin{align}
    \langle \omega^\h, \omega^\hprime\rangle 
    &= \sum_{i=1}^{d_y} \baromegaih \baromegaihprime d_i \nend
    &\overset{(a)}{\le} \sum_{i=1}^{d_y} d_i\cdot \left(\baromegaistar \chi_i^\h + \sqrt 2 \omega_0\right) \cdot \left(\baromegaistar \chi_i^\hprime + \sqrt 2 \omega_0\right) \nend
    &\overset{(b)}{=} d_{i_h^*} \cdot (\baromega_{i_h^*}^\star)^2 \cdot \ind(h=h'\in \cB) + 2\omega_0 \left( \sqrt{d_{i_{h^\prime}^*} } \ind(h'\in \cB) + \sqrt{d_{i_{h}^*}}  \ind(h\in \cB)\right) + 2\omega_0^2 d \nend
    &\le d_{i_h^*} \cdot (\baromega_{i_h^*}^\star)^2 \cdot \ind(h=h'\in \cB) + \underbrace{\left(4\omega_0 \sqrt{d} + 2\omega_0^2 d\right)}_{\ds \color{teal} \ll 1}
    \le 1+o(1).
    \label{eq:S2-product-1-ub}
\end{align}
where we define $\chi_i^\h$ as an indicator function for which takes value $1$ if $h$ is the unique optimal head for task $i\in\cI$ and $0$ otherwise (also $0$ if $i\in\cI_c$ belongs to the nominal tasks), and $\cB=\{h\in[H]\given \exists i\in\cI \st h=h_i^*\}$ as the set for the heads that are optimal for some tasks, $\ind(\cdot)$ as the indicator function, and $i_h^*$ as the unique task that takes head $h$ as the optimal head if $h\in\cB$.

Here, $(a)$ holds by noting that each task $i$ has a unique optimal head $h_i^*$ and $\baromegaih \le \sqrt 2 \omega_0$ for all nonoptimal head $h\in[H]\backslash\{h_i^*\}$ according to \Cref{cond:S2-nonopt-lbdaW-ub}.
$(b)$ holds by noting that $\chi_i^\h \chi_i^\hprime= \ind(h=h_i^*\in\cB)$ by the uniqueness of the optimal head and $\sqrt{d_{i_{h^\prime}^*}}$ and $\sqrt{d_{i_{h}^*}}$ comes from the upper bound $\baromegaih\le \sqrt{2 d_i^{-1}}$ according to \Cref{cond:bounded weights}.
The last inequality holds by invoking the upper bound for $\baromegaistar$ in \Cref{cond:bounded weights}.
A naive lower bound for $\langle \omega^\h, \omega^\hprime\rangle$ is 
\[
    \langle \omega^\h, \omega^\hprime\rangle \ge d_{i_h^*} \cdot (\baromega_{i_h^*}^\star)^2 \cdot \ind(h=h'\in \cB)
\] 
by the nonnegativity of $\baromegaih$ and $\baromegaihprime$.

For the second term $\langle\vecd\odot \lambda \odot \phi, \muh \odot \muhprime \rangle$, we have
\begin{align*}
    &\langle\vecd\odot \lambda \odot \phi, \muh \odot \muhprime \rangle  
    = \sum_{i\in\cI} d_i \lambda_i \phi_i \muih \muihprime + \sum_{j\in\cI_c} \sigma^2 d  \mujh\mu_j^{\hprime}\\
    &\quad \overset{(a)}{\le} \sum_{i\in\cI} d_i \lambda_i \phi_i (\muistar)^2\cdot \left(\chi_i^\h + \varrho_i\right) \cdot \left(\chi_i^\hprime + \varrho_i\right)  + \sum_{j\in\cI_c} \sigma^2 d  \mujh(0)\mu_j^{\hprime}(0)\\
    &\quad = E_{i_h^*}(\mu_{i_h^*}^\star)^2\cdot \ind(h=h'\in\cB) + E_{i_h^*} \varrho_{i_h^*}(\mu_{i_h^*}^\star)^2 \cdot \ind(h\in\cB)\\
    &\autoquad{3} + E_{i_{h^\prime}^*} \varrho_{i_{h^\prime}^*}(\mu_{i_{h^\prime}^*}^\star)^2 \cdot \ind(h'\in\cB) + \sum_{i\in \cI} E_i (\muistar)^2 (\varrho_i)^2 \\
    &\autoquad{3} + E_{i_h^*} (\mu_{i_{h^\prime}^*}^\star)^2 \cdot  \underbrace{\max_{i\in\cI, j\in\cI_c,h\in[H]}\frac{\sigma^2 d  |\cI_c|}{E_i} \cdot \left(\frac{\mujh(0)}{\frac 1 4 \left(d_i \pmin \frac{L}{e\phi_i}\right)\omega_0}\right)^2}_{\ds \color{teal} \deleq \nu_1 \ll 1} , 
\end{align*}
where we define $E_i = d_i \lambda_i \phi_i = (d_i\lambda_i / (d ) + \sigma^2)\cdot d $ as the rescaled \emph{total energy} for task $i$.
Here, the first term in $(a)$ holds by noting that $\muih/\muistar \le \varrho_i$ for all nonoptimal head $h\in[H]\backslash\{h_i^*\}$ and $i\in \cI$ according to \eqref{eq:varrho def} and \Cref{cond:S2-dominate} which says that $\muistar/\muih\ge\varrho_i$. 
The second term in $(a)$ holds by noting that $\mujh$ is non-increasing for all $j\in\cI_c$.
Next, we invoke the upper bound $\muih \le \sqrt{2 L\phi_i^{-1}} \le \sqrt{2 L}$ in \Cref{cond:bounded weights} to get
\begin{align}
    \langle\vecd\odot \lambda \odot \phi, \muh \odot \muhprime \rangle  
    &\le E_{i_h^*} (\mu_{i_h^*}^\star)^2 \ind(h=h'\in\cB) + (4+2|\cI|\bar\varrho)\bar E\bar\varrho L  + \sigma^2 d |\cI_c| \bar\mu^2,  \nend
    \langle\vecd\odot \lambda \odot \phi, \muh \odot \muhprime \rangle&\le E_{i_h^*}\cdot (\mu_{i_h^*}^\star)^2\cdot (\ind(h=h'\in\cB) +\nu_1) + (4 + 2 |\cI| \bar\varrho)\bar E \bar \varrho L,
    \label{eq:S2-product-2-ub}
\end{align}
where we define $\bar E = \max_{i\in\cI} E_i$, $\bar \varrho = \max_{i\in\cI} \varrho_i$, and $\bar\mu\deleq \max_{h\in[H], j\in\cI_c}\mujh(0)$.
A naive lower bound for $\langle\vecd\odot \lambda \odot \phi, \muh \odot \muhprime \rangle$ is
\begin{align}
    \langle\vecd\odot \lambda \odot \phi, \muh \odot \muhprime \rangle \ge E_{i_h^*}\cdot (\mu_{i_h^*}^\star)^2\cdot \ind(h=h'\in\cB).
    \label{eq:S2-product-2-lb}
\end{align}
Lastly, for $\langle \vecd \odot \lambda, \baromega^\h \odot \muh \rangle$, we have
\begin{align}
    \langle \vecd \odot \lambda, \baromegah \odot \muh \rangle
    & = \sum_{i\in\cI} d_i \lambda_i \muih \baromegaih \nend
    & \le \sum_{i\in\cI} d_i \lambda_i \muistar \cdot \left(\chi_i^\h + \varrho_i(\chi_i^\h)^c\right) \cdot \left(\tilde\lambda_W^*[i] \chi_i^\h + \sqrt 2 \omega_0 (\chi_i^\h)^c\right)\nend
    & \le d_{i_h^*} \lambda_{i_h^*} \mu_{i_h^*}^\star \baromega_{i_h^*}^\star \ind(h\in\cB) + 2 \omega_0 |\cI| \bar E \sqrt{L} \bar\varrho, 
    \label{eq:S2-product-3-ub}
\end{align}
where we additionally define $(\chi_i^\h)^c = 1 - \chi_i^\h$ as the complement of $\chi_i^\h$.
Here, the first inequality holds for the same argument as before,  and the second inequality holds by invoking the upper bound for $\muistar$ in \Cref{cond:bounded weights}.

Now, we are ready to simplify the dynamics in the growing stage.
Consider $i\in\cI$ as a specific effective task.
For \eqref{eq:S2-dot lbdaW-1} with $\color{violet} h=h_i^*$, we have the following ratio bound:
\begin{align}
    &\frac{\sum_{h'=1}^H \exp\bigl(\langle\omega^\star, \omega^\hprime\rangle\bigr) L^{-1}
    \cdot \bigl\langle\vecd\odot \lambda \odot \phi, \lambda_{U}^* \odot \muhprime \bigr\rangle 
    \cdot\baromegaihprime \baromegaistar}{ \exp\bigl(\langle\omega^\star, \omega^\star\rangle\bigr) L^{-1} E_i (\muistar)^2 \cdot (\baromegaistar)^2} \nend
    &\quad \le \sum_{h'=1}^H \exp\left( \langle \baromega^\star, \baromega^\star \rangle\cdot \ind(h'=h_i^*) + \left(4\omega_0\sqrt{d} + 2\omega_0^2 d\right) \cdot \ind(h'\neq h_i^*) - \langle \baromega^\star, \baromega^\star \rangle\right)  \nend
    &\autoquad{3} \cdot \baromegaihprime \frac{E_i\cdot (\muistar)^2\cdot (\ind(h'=h_i^*) + \nu_1) + (4 + 2|\cI|\bar\varrho)\bar E \bar \varrho L}{E_i (\muistar)^2 \baromegaistar} \mytag{\eqref{eq:S2-product-1-ub}\text{ and }\eqref{eq:S2-product-2-ub} }\\
    &\quad \color{teal}\le 
    1+ \underbrace{eH\nu_1 + \max_{i\in\cI} \frac{4eH (4 + 2|\cI|\bar\varrho)\bar E \bar \varrho L}{E_i \left(L\omega_0\phi_i^{-1}\right)^2 }}_{\ds \deleq \nu_2} = \deleq 1+ \nu_2 = 1+o(1),
    \label{eq:S2-dot lbdaW-1-dominant}
\end{align}
where the second inequality holds by also noting that $\baromegaihprime/\baromegaistar \le 1$ for all $i\in\cI$ and $h'\in[H]\backslash\{h_i^*\}$ according to \Cref{cond:S2-dominate}, and we also invoke the following lower bound on $\muistar$ for all $\color{violet}i\in\cI$ by \Cref{cond:S2-growth-stage2}:
\begin{align}
    \muistar &\ge (1-\alpha_i) \phi_i^{-1} L \omega_0 \ge \frac{L\omega_0}{\sqrt 2\phi_i}
    \label{eq:S2-lbdaU* lb}
\end{align}
The message from \eqref{eq:S2-dot lbdaW-1-dominant} is that $ \exp\bigl(\langle\omega^\star, \omega^\star\rangle\bigr) L^{-1} E_i (\muistar)^2 \cdot (\baromegaistar)^2$ is the dominant term in \eqref{eq:S2-dot lbdaW-1} for any $i\in\cI$.
Again, we need to deal with the error terms \eqref{eq:S2-dot lbdaW-3} and \eqref{eq:S2-dot lbdaW-4}.
For \eqref{eq:S2-dot lbdaW-3}, we observe for any $\color{violet}h\in\cB$ that
\begin{align}
    &\frac{(\pm1)\cdot \exp\bigl(\langle\omega^\h, \omega^\h\rangle\bigr) L^{-1}
    \cdot \bigl\langle \vecd \odot \lambda, \baromegah \odot \muh \bigr\rangle \cdot (\baromegaih)^2 }{ \exp\bigl(\langle\omega^\h, \omega^\h\rangle\bigr) L^{-1}
    \cdot \bigl\langle\vecd\odot \lambda \odot \phi, (\muh)^{\odot 2}\bigr\rangle 
    \cdot (\baromegaih)^2 } 
    \nend
    &\quad \overset{(a)}{\le} C \cdot \frac{d_{i_h^*} \lambda_{i_h^*} \mu_{i_h^*}^\star \baromega_{i_h^*}^\star \ind(h\in\cB) + \sqrt 2 \omega_0 |\cI| \bar E \sqrt{2L} \bar\varrho}{E_{i_h^*}\cdot (\mu_{i_h^*}^\star)^2\cdot \ind(h\in\cB)} \nend
    &\quad = C \cdot \left(\frac{\baromega_{i_h^*}^\star}{\mu_{i_h^*}^\star} + \frac{2 \omega_0 |\cI| \bar\varrho \bar E \sqrt{L}}{E_{i_h^*}\cdot (\mu_{i_h^*}^\star)^2}\right) \nend
    &\quad \color{teal}\overset{(b)}{\le} \max_{i\in\cI} \left\{
        4C \left(d_k^{-1} \pmax \frac{e\phi_k}{L}\right) + \frac{12C \omega_0 |\cI| \bar\varrho \bar E \sqrt{L}}{E_k\cdot \left(\phi_k^{-1} L \omega_0\right)^2} \right\}
     \deleq \nu_3 \ll 1,
    \label{eq:S2-dot lbdaW-3-dominated by 1}
\end{align}
where in $(a)$ we use the upper bound and lower bound in \eqref{eq:S2-product-3-ub} and \eqref{eq:S2-product-2-lb} respectively, and in $(b)$ we invoke the lower bound for $\muistar/\baromegaistar$ in \Cref{cond:S2-lbdaU*/lbdaW* ratio lb&ub} and also the lower bound for $\muistar$ in \eqref{eq:S2-lbdaU* lb}.
The message conveyed here is that \eqref{eq:S2-dot lbdaW-3} is negligible compared to \eqref{eq:S2-dot lbdaW-1} for any $i\in\cI$ and $h\in\cB$.
On the other hand, when $\color{violet}h\notin\cB$, we have following the upper bounds for the above ratio following from \eqref{eq:S2-product-1-ub} and \eqref{eq:S2-product-3-ub} that
\begin{align}
    (\pm1)\cdot\frac{\exp\bigl(\langle\omega^\h, \omega^\h\rangle\bigr)}{L} 
    \cdot \bigl\langle \vecd \odot \lambda, \baromegah \odot \muh \bigr\rangle \cdot (\baromegaih)^2 
    \le \frac{2Ce^{1.1} \omega_0 |\cI| \bar E \sqrt{L} \bar\varrho }{L} \cdot (\baromegaih)^2.
    \label{eq:S2-dot lbdaW-3-ub}
\end{align}
Similarly, we have for the error term \eqref{eq:S2-dot lbdaW-4} that when $\color{violet}h\in\cB$,
\begin{align}
    &\frac{(\pm \eta) \cdot \sum_{h'=1}^H \bigl\langle\vecd\odot \lambda \odot \phi, \muh \odot \muhprime \bigr\rangle 
    \cdot (\baromegaih)^2 }{ \exp\bigl(\langle\omega^\h, \omega^\h\rangle\bigr) L^{-1}
    \cdot \bigl\langle\vecd\odot \lambda \odot \phi, (\muh)^{\odot 2}\bigr\rangle 
    \cdot (\baromegaih)^2} \nend
    &\quad \le \frac{\sum_{h'=1}^H \bigl\langle\vecd\odot \lambda \odot \phi, \muh \odot \muhprime \bigr\rangle}{\bigl\langle\vecd\odot \lambda \odot \phi, (\muh)^{\odot 2}\bigr\rangle} \cdot C L \eta \nend
    &\quad \le \frac{E_{i_h^*}\cdot (\mu_{i_h^*}^\star)^2\cdot (\ind(h\in\cB) + H\nu_1) + H (4 + 2|\cI|\bar\varrho)\bar E \bar \varrho L}{E_{i_h^*}\cdot (\mu_{i_h^*}^\star)^2\cdot \ind(h\in\cB)} \cdot C L \eta 
    \mytag{\eqref{eq:S2-product-2-ub} \& \eqref{eq:S2-product-2-lb}}\nend
    &\quad \le \biggl((1+H\nu_1) + \underbrace{\frac{4H (4 + 2|\cI|\bar\varrho)\bar E \bar \varrho L}{E_{i_h^*}\cdot \left(\phi_{i_h^*}^{-1} L \omega_0\right)^2}}_{\ds \color{teal} \ll 1}\biggr) \cdot C \zeta \le 2C\zeta \ll 1, 
    \label{eq:S2-dot lbdaW-4-dominated by 1}
\end{align}
where the last inequality holds by \eqref{eq:S2-lbdaU* lb} on the lower bound for $\mu_{i_h^*}^\star$ and by condition $\zeta \ge L \eta$.
When $\color{violet}h\notin\cB$,
\begin{align}
    &\sum_{h'=1}^H \bigl\langle\vecd\odot \lambda \odot \phi, \muh \odot \muhprime \bigr\rangle 
    \cdot (\baromegaih)^2 \cdot  \eta \nend
    &\quad \le \left((4 + 2|\cI|\bar\varrho)\bar E \bar \varrho L + \sigma^2 d H |\cI_c|\bar\mu^2\right)  \cdot C\zeta  \cdot (\baromegaih)^2.
    \label{eq:S2-dot lbdaW-4-ub}
\end{align}
Lastly, for the first term in \eqref{eq:S2-dot lbdaW-2}, we have for any $\color{violet} i\in\cI$ that
\begin{align}
    \frac{\sum_{h'=1}^H  \muihprime \muih\baromegaihprime \baromegaih}{ \muih\baromegaih \muistar \baromegaistar} 
    & = 1 + \sum_{h'\neq h_i^*} \frac{\muihprime  \baromegaihprime}{\muistar \baromegaistar} \le 1 + H \bar\varrho, 
    \label{eq:S2-dot lbdaW-2-dominant}
\end{align}
where we invoke the upper bound $\muih/\muistar \le \bar\varrho$ by \Cref{cond:S2-dominate}.

\paragraph{Simplifying $\partial_t \baromegaih$ in the Growing Stage.}
We have by \eqref{eq:dot baromegah} that when $\color{violet}h\in\cB$ and $\color{violet}i\in\cI$, 
\begin{align}
    \sqrt{d_e} \cdot  \partial_t \baromegaih
    &= - (1 \pm \zeta)\cdot \sum_{h'=1}^H 
        \frac{\exp\bigl(\langle\omega^\h, \omega^\hprime\rangle\bigr)}{L} 
        \cdot \bigl\langle\vecd\odot \lambda \odot \phi, \muh \odot \muhprime \bigr\rangle 
        \cdot\baromegaihprime \baromegaih 
        \notag
        \\
    & \qquad - (1 \pm 2\xi \pm H\bar\varrho)\cdot \lambda_i \muistar \baromegaistar \muih\baromegaih 
    + (1\pm \xi)\cdot \lambda_i \muih \baromegaih 
    \mytag{\eqref{eq:S2-dot lbdaW-2-dominant} \& $i\in\cI$}
    \notag\\
    &\qquad   \left(\pm
        (\nu_3 + 2C\zeta) 
    \right) \cdot \frac{\exp\bigl(\langle\omega^\h, \omega^\h\rangle\bigr)}{L} 
    \cdot \bigl\langle\vecd\odot \lambda \odot \phi, (\muh)^{\odot 2}\bigr\rangle 
    \cdot (\baromegaih)^2  \mytag{\eqref{eq:S2-dot lbdaW-3-dominated by 1} \& \eqref{eq:S2-dot lbdaW-4-dominated by 1} \& $h\in\cB$}
    \notag\\
    &= - (1 \pm {(2\zeta + \nu_3 + 2C\zeta)})\cdot \sum_{h'=1}^H 
        \frac{\exp\bigl(\langle\omega^\h, \omega^\hprime\rangle\bigr)}{L} \cdot \bigl\langle\vecd\odot \lambda \odot \phi, \muh \odot \muhprime \bigr\rangle  \nend
    & \qquad \cdot\baromegaihprime \baromegaih + \lambda_i \muih \baromegaih\left(- \muistar \baromegaistar 
    + (1\pm {(4 \xi + 2H\bar\varrho + \xi)})\right), \notag
\end{align}
where in the last equality we merge the error terms into the first term and 
invoke \Cref{cond:S2-lbdaU* lbdaW^* ub} on the upper bound of $\muistar\baromegaistar \le 2$ to get the error term $4\xi + 2H\bar\varrho + \xi$ out in the last line. 
Define $\tilde\xi_1 = 4\xi + 2H\bar\varrho + \xi$.
For the case $i\in\cI$ with $h\in\cB\backslash\{h_i^*\}$, we have a naive upper bound for this gradient
\begin{align}
    \partial_t\baromegaih 
    &\le \sqrt{d_e}^{-1} \lambda_i \muih \baromegaih\left(- \muistar \baromegaistar 
    + (1\pm {\tilde\xi_1})\right) \nend 
    &\le \bar\varrho\sqrt{d_e}^{-1} \lambda_i \muistar \baromegaih\left(- \muistar \baromegaistar 
    + (1\pm {\tilde\xi_1})\right) \nend
    &\le \bar\varrho\sqrt{d_e}^{-1} \lambda_i \sqrt{2L\phi_i^{-1}}\baromegaih\left(- \muistar \baromegaistar 
    + (1\pm {\tilde\xi_1})\right)
    , 
    \label{eq:S2-dot lbdaW-nonoptimal}
\end{align}
where the second inequality holds by invoking the upper bound $\muih/\muistar \le \bar\varrho$ by \Cref{cond:S2-dominate} and the last inequality holds by invoking \Cref{cond:bounded weights}.
For $\baromegaistar$ where $i\in\cI$, we can further simplify the dynamics as
\begin{align}
    \partial_t \baromegaistar 
    & = - (1 \pm \underbrace{(2\zeta + \nu_3+2C\zeta + 2\nu_2)}_{\ds \deleq\tilde\zeta_1}) \cdot \sqrt{d_e}^{-1} \frac{\exp\bigl(\langle\omega^\star, \omega^\star\rangle\bigr)}{L} E_i (\muistar)^2 \cdot (\baromegaistar)^2 
    \mytag{\eqref{eq:S2-dot lbdaW-1-dominant}}
    \nend 
    &\qquad + \sqrt{d_e}^{-1} \lambda_i \muistar \baromegaistar\Bigl(- \muistar \baromegaistar 
    + (1\pm \tilde\xi_1)\Bigr)\nend
    & =   \left(1 - \left(1 + (1 \pm \tilde\zeta_1) \cdot \frac{\exp\bigl(\langle\omega^\star, \omega^\star\rangle\bigr)}{L} d_i\phi_i \right) \muistar  \baromegaistar\pm \tilde\xi_1\right) \sqrt{d_e}^{-1} \lambda_i\baromegaistar\muistar \nend 
    & = \left(1 - \left(1 + \frac{\exp\bigl(d_i(\baromegaistar)^2\bigr)}{L} d_i\phi_i \right) \muistar  \baromegaistar\pm \tau_1\right) \sqrt{d_e}^{-1} \lambda_i\baromegaistar\muistar,
    \label{eq:S2-dot lbdaW-optimal}
\end{align}
where we define $\color{teal} \tau_1 \triangleq \tilde\xi_1 + 2e \tilde\zeta_1 + e^{1.1} 2\omega_0^2 d = 4(C+1)e\zeta + 4e\nu_2 + 2e\nu_3 + 5\xi+2H\bar\varrho + 2e^{1.1}\omega_0^2 d\ll 1$. The scale of $\tau_1$ comes from the upper bound for $\baromegaistar \muistar \le 2/(1+d_i\phi_iL^{-1})$ in \Cref{cond:S2-lbdaU* lbdaW^* ub} and also the upper bound $\langle \omega^{(h_i^*)},  \omega^{(h_i^*)}\rangle - d_i (\baromegaistar)^2 \le 2\omega_0^2 d$ by \Cref{cond:S2-nonopt-lbdaW-ub}.
For $\omega_j^\h$ where $\color{violet}h\in\cB$ and $\color{violet}j\in\cI_c$, we have
\begin{align}
    \partial_t \log \omega_j^\h 
    = - \frac{1 \pm \tilde\zeta_1}{\sqrt{d_e}} \cdot \sum_{h'=1}^H 
        \frac{\exp\bigl(\langle\omega^\h, \omega^\hprime\rangle\bigr)}{L} 
        \cdot \bigl\langle\vecd\odot \lambda \odot \phi, \muh \odot \muhprime \bigr\rangle 
        \cdot\baromega_j^\hprime  < 0, 
    \label{eq:S2-dot lbdaW-Ic}
\end{align}
since $\lambda_j = 0$ and we can also combine \eqref{eq:S2-dot lbdaW-3} and \eqref{eq:S2-dot lbdaW-4} into \eqref{eq:S2-dot lbdaW-1} with error $\tilde\zeta_1$ since $h\in\cB$.
Therefore, these $\omega_j^\h$ monotonically decrease. 
For $\color{violet}h\in\cB_c$ and $\color{violet}k\in[d_y]$, we can upper bound $\baromega_k^\h$ as
\begin{align}
    \partial_t \log\baromega_k^\h
    &\le 2 \ind(k\in\cI) \sqrt{d_e}^{-1}\lambda_k\sqrt{L\phi_k^{-1}} \bar\varrho + \frac{2Ce^{1.1} \omega_0 |\cI| \bar E \sqrt{L} \bar\varrho }{L \sqrt{d_e}} \cdot \baromegaih \nend
    &\qquad + \left((4 + 2|\cI|\bar\varrho)\bar E \bar \varrho L + \sigma^2 d  H |\cI_c|\bar\mu^2\right)  \cdot \sqrt{d_e}^{-1} C\zeta  \cdot \baromegaih,
    \label{eq:S2-dot lbdaW-Bc}
\end{align}
where we define $\bar\mu\deleq \max_{h\in[H], j\in\cI_c}\mujh(0)$.
Here, the first term comes from upper bounding $\sqrt{d_e}^{-1} \lambda_i \muih \baromegaih$ by the upper bound for $\muih$ in \Cref{cond:bounded weights}, the second and the third term come from the upper bounds in \eqref{eq:S2-dot lbdaW-3-ub} and \eqref{eq:S2-dot lbdaW-4-ub} respectively.

\paragraph{Simplification of $\partial_t \muih$.}
We copy the dynamics for $\muih$ as
\begin{align}
    \partial_t \muih &= - \lambda_i d_i \phi_i (1 \pm \zeta)\cdot \sum_{h'=1}^H \frac{\exp\bigl(\langle \omega^\h,  \omega^\hprime\rangle\bigr)}{L} \cdot \muihprime \muih \nend
    &\qquad - \lambda_i d_i (1 \pm \xi)\cdot \sum_{h'=1}^H \baromegaihprime \baromegaih\muihprime \muih  + \lambda_i d_i (1 \pm \xi) \cdot \baromegaih \muih. \notag
\end{align}
For the first term, we have for $\color{violet}i\in\cI$ that
\begin{align*}
    & \frac{\lambda_i d_i \phi_i\cdot \sum_{h'=1}^H  \exp\bigl(\langle \omega^\h,  \omega^\hprime\rangle\bigr) L^{-1} \cdot  \muihprime \muih}{\lambda_i d_i \phi_i\cdot \exp\bigl(\langle \omega^\h,  \omega^{(h_i^*)}\rangle\bigr) L^{-1} \cdot  \muistar \muih} \le 1 + \frac{e^{1.1} \sum_{h'\neq h_i^*} \muihprime}{ \muistar} \le 1 + e^{1.1} H \bar\varrho = 1 + o(1).
\end{align*}
As a result, for $\log\muih$ with $\color{violet}i\in\cI$, it holds that
\begin{align}
    \partial_t \log\muih &= \lambda_i d_i \biggl(-(1 \pm (2\zeta + e^{1.1}H\bar\varrho))  \frac{\exp\bigl(\langle \omega^\h,  \omega^{(h_i^*)}\rangle\bigr)}{L} \phi_i\muistar + \left(1-\baromegaistar\muistar \pm \tilde\xi_1 \right) \baromegaih \biggr).
    \label{eq:S2-dot lbdaU-nonoptimal}
\end{align}
In particular, for $h=h_i^*$, we have
\begin{align*}
    \partial_t \log\muistar 
    &= \lambda_i d_i \biggl(-(1 \pm (2\zeta + e^{1.1}H\bar\varrho))  \frac{\exp\bigl(\langle \omega^{(h_i^*)},  \omega^{(h_i^*)}\rangle\bigr)}{L} \phi_i\muistar + \left(1-\baromegaistar\muistar \pm \tilde\xi_1 \right) \baromegaistar \biggr) \nend 
    & = \lambda_i d_i \biggl(- \frac{\exp\bigl(d_i(\baromegaistar)^2\bigr)}{L} \phi_i\muistar + \Bigl(1-\baromegaistar\muistar \pm \tau_2 \Bigr) \baromegaistar \biggr), 
\end{align*}
where we define $\color{teal}\tau_2 \triangleq \tilde\xi_1 + 2e^{1.1}(2\omega_0^2 d + 2\zeta + e^{1.1}H\bar\varrho) = 5 \xi + 2H\bar\varrho + 2e^{1.1}(2\omega_0^2 d + 2\zeta + e^{1.1}H\bar\varrho) \ll 1$ and the scale of $\tau_2$ comes from the upper bound $\muistar/\baromegaistar \le 2 L \phi_i^{-1}$ in \Cref{cond:S2-lbdaU*/lbdaW* ratio lb&ub} together with the upper bound $\langle \omega^{(h_i^*)},  \omega^{(h_i^*)}\rangle - d_i (\baromegaistar)^2 \le 2\omega_0^2 d$ by \Cref{cond:S2-nonopt-lbdaW-ub}.
For $\color{violet} j\in \cI_c$, we simply have 
\begin{align*}
    \partial_t \log\mujh 
    & = - \sigma^2 d  (1 \pm \zeta)\cdot \sum_{h'=1}^H \frac{\exp\bigl(\langle \omega^\h,  \omega^\hprime\rangle\bigr)}{L} \cdot \muihprime
    \le - \sigma^2 d  (1 - \zeta) L^{-1} \sum_{h'=1}^H \mu_j^{\hprime}.
\end{align*}
Another gradient we keep track of is $\log\muistar - \log\muih$ with $\color{violet}i\in\cI$ and $\color{violet}h\neq h_i^*$, which satisfies
\begin{align*}
    \partial_t\log\left(\frac{\muistar}{\muih}\right)
    &= \lambda_id_i \left(
        \bigl(1- \baromegaistar\muistar\bigr)\cdot\bigl(\baromegaistar - \baromegaih\bigr)\pm\tilde\xi_1\bigl(1 + \baromegaistar\muistar\bigr)\cdot\bigl(\baromegaistar + \baromegaih\bigr) \right) \nend
        &\autoquad{2}  + \lambda_id_i L^{-1} \phi_i \muistar \left(\exp\bigl(\langle \omega^\h,  \omega^{(h_i^*)}\rangle\bigr)- \exp\bigl(\langle \omega^{(h_i^*)},  \omega^{(h_i^*)}\rangle\bigr)\right)
        \nend 
        &\autoquad{2} \pm (2\zeta + e^{1.1}H\bar\varrho) \lambda_id_i L^{-1} \phi_i \muistar \left(\exp\bigl(\langle \omega^\h,  \omega^{(h_i^*)}\rangle\bigr) + \exp\bigl(\langle \omega^{(h_i^*)},  \omega^{(h_i^*)}\rangle\bigr)\right).
\end{align*}
Here, for the second terms, we follow a similar argument as in the warm-up stage and obtain 
\begin{align*}
    \lambda_id_i \tilde\xi_1\bigl(1 + \baromegaistar\muistar\bigr)\cdot\bigl(\baromegaistar + \baromegaih\bigr) 
    &\le 2 \lambda_id_i \tilde\xi_1 \cdot\bigl(\baromegaistar - \baromegaih\bigr) \cdot \frac{\baromegaistar}{\baromegaistar - \baromegaih} \nend 
    &\le 4 \lambda_id_i \tilde\xi_1 \cdot\bigl(\baromegaistar - \baromegaih\bigr) \cdot \frac{\max_{k\in\cI} \lambda_k\phi_k^{-1}}{\lambda_i\phi_i^{-1}}, 
\end{align*}
where in the last inequality we invoke \Cref{cond:S2-dominate} for the upper bound of $\baromegaih/\baromegaistar$ by its corresponding value at the end of the warm-up stage.
For the third term, we observe that 
\begin{align*}
    &\lambda_id_i L^{-1} \phi_i \muistar \left(\exp\bigl(\langle \omega^\h,  \omega^{(h_i^*)}\rangle\bigr)- \exp\bigl(\langle \omega^{(h_i^*)},  \omega^{(h_i^*)}\rangle\bigr)\right) \nend 
    &\quad \ge \lambda_id_i L^{-1} \phi_i \muistar \left(1 - \exp\bigl(d_i(\baromegaistar)^2 + 2\omega_0^2d\bigr)\right) \nend 
    &\quad \ge - e^{1.1} \lambda_id_i L^{-1} \phi_i \muistar \left(d_i(\baromegaistar)^2 + 2\omega_0^2d\right). 
\end{align*}
Combining the third and the fourth term, we have
\begin{align*}
    &\lambda_id_i L^{-1} \phi_i \muistar \left(\exp\bigl(\langle \omega^\h,  \omega^{(h_i^*)}\rangle\bigr)- \exp\bigl(\langle \omega^{(h_i^*)},  \omega^{(h_i^*)}\rangle\bigr)\right)
        \nend 
    &\autoquad{2} \pm (2\zeta + e^{1.1}H\bar\varrho) \lambda_id_i L^{-1} \phi_i \muistar \left(\exp\bigl(\langle \omega^\h,  \omega^{(h_i^*)}\rangle\bigr) + \exp\bigl(\langle \omega^{(h_i^*)},  \omega^{(h_i^*)}\rangle\bigr)\right) \nend 
    &\quad \ge -\lambda_id_iL^{-1}\phi_i\muistar\left(d_i(\baromegaistar)^2 + 2\omega_0^2d + 4\zeta + 2e^{1.1}H\bar\varrho\right) \nend 
    &\quad \ge -\lambda_id_iL^{-1}\phi_i\left(d_i(\baromegaistar)^2 + 2\omega_0^2d + 4\zeta + 2e^{1.1}H\bar\varrho\right) \cdot \frac{\muistar}{\baromegaistar} \cdot \frac{\baromegaistar}{\baromegaistar - \baromegaih} \cdot \left(\baromegaistar - \baromegaih\right).
\end{align*}
Using the upper bound $\muistar/\baromegaistar\le 2L\phi_i^{-1}$ in \Cref{cond:S2-lbdaU* lbdaW^* ub} and the upper bound for $\baromegaih/\baromegaistar$ by \Cref{cond:S2-dominate}, we have for $\log\muistar - \log\muih$ with $\color{violet}i\in\cI$ and $\color{violet}h\neq h_i^*$ that 
\begin{align}
    \partial_t\log\left(\frac{\muistar}{\muih}\right) 
    &\ge \lambda_id_i \left(- 4 \left(\tilde\xi_1 + d_i(\baromegaistar)^2 + 2\omega_0^2d + 4\zeta + 2e^{1.1}H\bar\varrho\right) \frac{\max_{k\in\cI} \lambda_k\phi_k^{-1}}{\lambda_i\phi_i^{-1}} \right.\nend 
    &\autoquad{5} \left. +1- \baromegaistar\muistar \right)\cdot\bigl(\baromegaistar - \baromegaih\bigr) \nend 
    &\ge \lambda_id_i \left(1- \baromegaistar\muistar- 4 \left(\tau_2 + d_i(\baromegaistar)^2\right) \varPhi_i^{-1} \right)\cdot\bigl(\baromegaistar - \baromegaih\bigr), 
    \label{eq:S2-dot lbdaU^*/lbdaU^h lb}
\end{align}
where we define $\varPhi_i = {\lambda_i\phi_i^{-1}}/({\max_{k\in\cI} \lambda_k\phi_k^{-1}})$.

\subsubsection{Coupled Growth of $\muistar$ and $\baromegaistar$.}
We aim to show that $\muistar$ is roughly a function of $\baromegaistar$ during the growing stage.
To this end, we define the following quantity
\begin{align}
    \pi_i \triangleq \muistar \cdot \left(
        \baromegaistar + \frac{\phi_i\exp\bigl(d_i(\baromegaistar)^2\bigr)}{L\baromegaistar}
    \right).
    \label{eq:S2-def pi}
\end{align}
We always have nonnegative $\pi_i$.
In the following, we only consider $\pi_i \in [1/2, 2]$. 
We characterize the dynamics for $\log\pi_i$ as
\begin{align*}
    \partial_t \log\pi_i = \partial_t \log \muistar + \frac{\partial}{\partial \baromegaistar} \left(
        \baromegaistar + \frac{\phi_i\exp\bigl(d_i(\baromegaistar)^2\bigr)}{L\baromegaistar}
    \right) \cdot \partial_t \baromegaistar.
\end{align*}
For the second part in the derivative, we have
\begin{align*}
    &\frac{\partial}{\partial \baromegaistar} \left(
        \baromegaistar + \frac{\phi_i\exp\bigl(d_i(\baromegaistar)^2\bigr)}{L\baromegaistar}
    \right) \nend
    &\quad = \frac{\muistar}{\pi_i} \cdot \left(
        1 + \frac{\phi_i\exp\bigl(d_i(\baromegaistar)^2\bigr)}{L} \left(- \frac{1}{(\baromegaistar)^2} + 2 d_i\right)
    \right) 
\end{align*}
Let us look at the scale of the following expression. 
\begin{align*}
    &\frac{(\muistar)^2}{d_i}\left|1+ \frac{\phi_i\exp\bigl(d_i(\baromegaistar)^2\bigr)}{L} \left(- \frac{1}{(\baromegaistar)^2} + 2 d_i\right)\right| \nend
    &\quad \le \frac{(\muistar)^2}{d_i} + \frac{\phi_i\exp\bigl(d_i(\baromegaistar)^2\bigr)}{L d_i} \left( \frac{(\muistar)^2}{(\baromegaistar)^2} + 2d_i(\muistar)^2\right) 
    \le 4e^{1.1}\left(\frac{ L}{\phi_i d_i} + 1 \right), 
\end{align*}
where we invoke the upper bound $\muistar/\baromegaistar \le \sqrt 2 L \phi_i^{-1}$ by \Cref{cond:S2-lbdaU*/lbdaW* ratio lb&ub} and the upper bound $
(\muistar)^2 \le 2 L \phi_i^{-1}$ by \Cref{cond:bounded weights}.
We can also rewrite $\partial_t \log \muistar$ and $\partial_t \baromegaistar$ in terms of $\pi_i$ as 
\begin{equation}
    \label{eq:S2-dominate dynamics-with pi}
\begin{gathered}
    \partial_t \log\muistar = \lambda_id_i\baromegaistar \left(1-\pi_i \pm \tau_2\right),\\
    \partial_t \baromegaistar = \sqrt{d_e}^{-1} \lambda_i\baromegaistar\muistar \left(1 - \pi_i + \frac{\exp\bigl(d_i(\baromegaistar)^2\bigr)\phi_i}{L} \cdot\frac{\muistar}{\baromegaistar}\cdot\left(1 - d_i(\baromegaistar)^2 \right)\pm \tau_1\right).
\end{gathered}
\end{equation}
Here, for $\partial_t \baromegaistar$ we also have the following upper bound
\begin{align*}
    \left|\partial_t \baromegaistar\right| \le \frac{\lambda_i\baromegaistar\muistar}{\sqrt{d_e} } \cdot \left(|1 -\pi_i| +  \sqrt 2  e^{1.1} + \tau_1\right),  
\end{align*}
where we invoke the upper bound $\muistar/\baromegaistar \le \sqrt 2 L \phi_i^{-1}$ by \Cref{cond:S2-lbdaU*/lbdaW* ratio lb&ub}.
Therefore, we have for $\partial_t \log\pi_i$ that
\begin{align*}
    \partial_t \log\pi_i 
    &= \lambda_id_i\baromegaistar \left(1-\pi_i \pm \tau_2 \pm \frac{1}{\sqrt{d_e} \pi_i} \cdot \left(|1 -\pi_i| +  \sqrt 2  e^{1.1} + \tau_1\right) \cdot 4e^{1.1}\left(\frac{ L}{\phi_i d_i} + 1 \right)\right)\nend
    &= \lambda_id_i\baromegaistar \Bigl(1-\pi_i \pm \underbrace{(\tau_2 + 120  \varsigma_i + 120\sqrt{d_e}^{-1} )}_{\ds \le \iota_i}\Bigr).
\end{align*}
Define $\color{brown} \iota_i \deleq \max\{\tau_2 + 120  \varsigma_i + 120\sqrt{d_e}^{-1}, \alpha_i + 17d_i\omega_0^2 + 32 L\omega_0^2 \phi_i^{-1} \}$, we automatically have from $1-\alpha_i \le \rho_i^*\phi_i \le 1+ \tilde\xi + 2\tilde\zeta$ that at the start of the growing phase, $\pi$ satisfies the following upper bound 
\begin{align*}
    |\pi_i(T_\warmup)-1| &= \left|\left. L \rho_i^* \cdot \left(
        (\baromegaistar)^2 + \frac{\phi_i\exp\bigl(d_i(\baromegaistar)^2\bigr)}{L}
    \right)  - 1 \right|_{T_\warmup}\right| \nend
    &\le (1 + \alpha_i) \cdot (1 + d_i16\omega_0^2) - 1 + 32L \omega_0^2 \phi_i^{-1} \nend 
    &\le \alpha_i + 17d_i\omega_0^2 + 32 L\omega_0^2 \phi_i^{-1}.
\end{align*}
As a result, during the growing stage, we always have $\pi_i\in[1- \iota_i, 1+ \iota_i]\subset [0.5, 1.5]$.

\subsubsection{Duration of the Growing Stage.}
Under the definition $\pi_i = \muistar \cdot \left(
    \baromegaistar + \frac{\phi_i\exp\bigl(d_i(\baromegaistar)^2\bigr)}{L\baromegaistar}
\right)$ and condition $\pi_i\in[1- \iota_i, 1+ \iota_i]$, we have for $\partial_t \baromegaistar$ following \Cref{eq:S2-dominate dynamics-with pi} that
\begin{align}
    \partial_t \baromegaistar 
    &= \left(1-d_i(\baromegaistar)^2 + (1-\pi_i)\left(\frac{L(\baromegaistar)^2}{\phi_i\exp\bigl(d_i(\baromegaistar)^2\bigr)} + d_i(\baromegaistar)^2\right)\right) \pi_i \sqrt{d_e}^{-1}\lambda_i \nend
    &\qquad \cdot
    {
        \left(1 + \frac{L(\baromegaistar)^2}{\phi_i\exp\bigl(d_i(\baromegaistar)^2\bigr)}\right)^{-1} \cdot
        \left(1 + \frac{\phi_i\exp\bigl(d_i(\baromegaistar)^2\bigr)}{L(\baromegaistar)^2}\right)^{-1}} \nend 
    &=\left(1-d_i(\baromegaistar)^2 \pm 2 \iota_i\left(Ld_i^{-1}\phi_i^{-1} + 1\right)\right) \cdot (1\pm \iota_i) \sqrt{d_e}^{-1}\lambda_i \nend 
    &\qquad \cdot \left(2 
    + \frac{\phi_i\exp\bigl(d_i(\baromegaistar)^2\bigr)}{L(\baromegaistar)^2}
    + \frac{L(\baromegaistar)^2}{\phi_i\exp\bigl(d_i(\baromegaistar)^2\bigr)}\right)^{-1}, \label{eq:S2-dot lbdaW*-with pi}
\end{align}
where in the second equality we also invoke the upper bound  $(\baromegaistar)^2 \le 2 d_i^{-1}$ by \Cref{cond:bounded weights} to simplify the numerator.
Define $\brown\tilde \iota_i = 2 \iota_i(Ld_i^{-1}\phi_i^{-1}+1) \ll 1$ and 
consider the regime where $d_i (\baromegaistar)^2 \le 1 - 2\tilde \iota_i$.
Equivalently, we have
\begin{align}
    \frac{\rd t}{\rd (\baromegaistar)} 
    &= \frac{2 
    + \frac{\phi_i\exp\bigl(d_i(\baromegaistar)^2\bigr)}{L(\baromegaistar)^2}
    + \frac{L(\baromegaistar)^2}{\phi_i\exp\bigl(d_i(\baromegaistar)^2\bigr)}}{\left(1-d_i(\baromegaistar)^2 \pm \tilde \iota_i\right) \cdot (1\pm \iota_i) \sqrt{d_e}^{-1}\lambda_i}. 
    \label{eq:dt/domega}
\end{align}
We first pick $\upsilon < 1 - 2\tilde \iota_i$ such that $1-\upsilon \ge 3\tilde \iota_i$ and consider the regime where $d_i (\baromegaistar)^2 \le \upsilon$.
In this regime, we can upper bound the above expression by
\begin{align*}
    \frac{\rd t}{\rd (\baromegaistar)} 
    &\le (1 + 2\tilde \iota_i)\cdot \frac{2 
    + \frac{\phi_i e^{\upsilon}}{L(\baromegaistar)^2}
    + \frac{L(\baromegaistar)^2}{\phi_i}}{\left(1- \upsilon\right) \sqrt{d_e}^{-1}\lambda_i} = \frac{(1 + 2\tilde \iota_i)\cdot\sqrt{d_e}}{\left(1- \upsilon\right) \lambda_i} \cdot\frac{\rd}{\rd (\baromegaistar)} \left( 2 (\baromegaistar)
    - \frac{\phi_i e^{\upsilon}}{L(\baromegaistar)}
    + \frac{L(\baromegaistar)^3}{3\phi_i}\right). 
\end{align*}
Therefore, the duration of this regime $\Delta t_{6+}^i$ where $d_i(\baromega_i^\star)^2$ is raised from the value at the end of the warm-up stage to $\upsilon$ is upper bounded by  
\begin{align}
    \Delta t_{6}^i 
    &\le \left. \frac{(1 + 2\tilde \iota_i)\cdot\sqrt{d_e}}{\left(1- \upsilon\right) \lambda_i} \left(2 (\baromegaistar)
    - \frac{\phi_i e^{\upsilon}}{L(\baromegaistar)}
    + \frac{L(\baromegaistar)^3}{3\phi_i}\right) \right|_{\omega_0}^{\sqrt{\upsilon/d_i}} \nend 
    &\le \frac{(1 + 2\tilde \iota_i)\cdot\sqrt{d_e}}{\left(1- \upsilon\right) \lambda_i} \left(2 \sqrt{\upsilon d_i}
    + \frac{\phi_i e^{\upsilon}}{L\omega_0}
    + \frac{L\upsilon^{3/2} }{3\phi_i d_i^{3/2}}\right).
    \label{eq:Delta t_6-ub}
\end{align}
One should notice that the second term in the above expression dominates as $\omega_0$ is small in the sense that $\color{teal}\omega_0 \ll (\phi_i L^{-1} \sqrt{d_i}^{-1}) \pmin (\phi_i^2 L^{-2} d_i^{3/2})$. 
For the other regime where $\upsilon \le d_i (\baromegaistar)^2 \le 1- 2\tilde \iota_i$, we have
\begin{align}
    \frac{\rd t}{\rd (\baromegaistar)} 
    &\le \underbrace{{\left(2 
    + \max_{x\in\{\sqrt{\upsilon/d_i}, \sqrt{1/d_i}\}}
    \left\{\frac{\phi_i\exp\bigl(d_ix^2\bigr)}{Lx^2}
    + \frac{Lx^2}{\phi_i\exp\bigl(d_ix^2\bigr)}\right\}
    \right)} }_{\ds \deleq \varGamma_i} \cdot \frac{\sqrt{d_e}\lambda_i^{-1}}{(1- \tilde \iota_i -d_i(\baromegaistar)^2 ) (1- \iota_i)}.
    \label{eq:dt/domega-ub}
\end{align}
Here, a typical choice of $\upsilon$ is $1/2$ for the definition of $\varGamma_i$.
By integrating the above expression for $\baromegaistar$ between the above mentioned range, we have
\begin{align*}
    \Delta t_7^i 
    &\le  \frac{\varGamma_i\sqrt{d_e}\lambda_i^{-1}}{\sqrt{d_i (1 - \tilde \iota_i)^3}} \cdot \left(
        \tanh^{-1}\left(
            \sqrt{\frac{1-2\tilde \iota_i}{1-\tilde \iota_i}}
        \right) - \tanh^{-1}\left(
            \sqrt{\frac{\upsilon}{1-\tilde \iota_i}}
        \right)
    \right)\le \frac{\varGamma_i\sqrt{d_e}\lambda_i^{-1}}{\sqrt{d_i (1 - \tilde \iota_i)^3}} \cdot \frac{1}{2} \log\frac{4}{\tilde \iota_i}, 
\end{align*}
where the last inequality follows from upper bounding the positive term in the inverse hyperbolic tangent function.

\subsubsection{Verifying the Conditions for the Growing Stage.}
\paragraph{Verification for \Cref{cond:S2-growth-stage2}.} 
We first verify that $\muistar \ge  L \phi_i^{-1}  \omega_0 / \sqrt 2$ in \Cref{cond:S2-growth-stage2} for $i\in\cI$ during the growing stage.
Using the fact that $\pi_i \in [1- \iota_i, 1+ \iota_i]$, we have
\begin{align*}
    \muistar \ge \left(
        (\baromegaistar) + \frac{\phi_i\exp\bigl(d_i(\baromegaistar)^2\bigr)}{L(\baromegaistar)}
    \right)^{-1} \cdot (1- \iota_i).
\end{align*}
It remains to check that $(\baromegaistar) \ge \omega_0$. Invoking \eqref{eq:S2-dot lbdaW*-with pi}, we can easily observe that for $(\baromegaistar) < 1/\sqrt{2d_i}$, we have $\partial_t (\baromegaistar) > 0$.
Since at the end of the warm-up stage, we already have $(\baromegaistar) \ge \omega_0$, we can conclude that $(\baromegaistar) \ge \omega_0$ for all $i\in\cI$ during the growing stage. 

\paragraph{Verification for \Cref{cond:S2-dominate}.}
We next check the conditions in \Cref{cond:S2-dominate}. 
We first show that $(\baromegaistar) \ge \baromegaih$ and $(\baromegaistar)/\baromegaih \ge (\baromegaistar)/\baromegaih\biggiven_{T_\warmup}$ hold for all $i\in\cI$ and $h\in\cB\backslash\{h_i^*\}$.
Invoking \eqref{eq:S2-dot lbdaW-nonoptimal} and \eqref{eq:S2-dot lbdaW-optimal}, we have for $\partial_t (\log(\baromegaistar) -  \log\baromegaih)$ that
\begin{align*}
    \partial_t \left(
        \log\frac{(\baromegaistar)}{\baromegaih}
    \right)
    & \ge \left(1 - \left(1 + \frac{\exp\bigl(d_i(\baromegaistar)^2\bigr)}{L} d_i\phi_i \right) \muistar  (\baromegaistar)\pm \tau_1\right) \sqrt{d_e}^{-1} \lambda_i\muistar \nend 
    &\qquad - \left(- \muistar \baromegaistar 
    + (1\pm \tilde\xi_1)\right)\sqrt{d_e}^{-1} \lambda_i \muih.
\end{align*}
We split the discussion into two cases based on whether $(\baromegaistar) \muistar < 1/(1 + 2e^{1.1}d_i\phi_iL^{-1})$ or not.
If the condition holds, we have following the upper bound $(\baromegaistar) \le (1+o(1))\sqrt{d_i^{-1}}$ by \Cref{cond:bounded weights} that
\begin{align*}
    \partial_t \left(
        \log\frac{(\baromegaistar)}{\baromegaih}
    \right) \ge \left(\frac{e^{1.1}d_i\phi_iL^{-1}}{1+2e^{1.1} d_i\phi_iL^{-1}} - \tau_1\right) \sqrt{d_e}^{-1} \lambda_i\muistar - (1 + \tilde\xi_1) \sqrt{d_e}^{-1} \lambda_i \muih.
\end{align*}
At this point, we invoke the ratio argument $\muistar/\muih \ge \muistar/\muih\biggiven_{T_\warmup} \ge \bar\varrho^{-1}$ for any $i\in\cI$ in \Cref{cond:S2-dominate} that
\begin{align*}
    \partial_t \left(
        \log\frac{(\baromegaistar)}{\baromegaih}
    \right) \ge \left(\frac{e^{1.1}d_i\phi_iL^{-1}}{1+2e^{1.1} d_i\phi_iL^{-1}} - \tau_1 - 2\bar\varrho\right) \sqrt{d_e}^{-1} \lambda_i\muistar >0, 
\end{align*}
where we invoke the relationship $\color{teal} \min\{e^{1.1}d_i\phi_iL^{-1},1\} \gg \tau_1 \pmax \bar\varrho$. 
Therefore, we have $(\baromegaistar)/\baromegaih$ growing throughout the first case
(Also, note that $(\baromegaistar)$ is also growing under this condition).
Hence, both conditions are satisfied for the first case.
For the other case, since we already have $(\baromegaistar) \muistar \ge 1/(1 + 2e^{1.1}d_i\phi_iL^{-1})$, by incorporating the relationship between $(\baromegaistar)$ and $\muistar$ in \eqref{eq:S2-def pi} we conclude that $(\baromegaistar)\ge \sqrt{1/2d_i}$.
Therefore, we just need to check that $\baromegaih$ is still somewhere near the initialization. 
To do so, we incorporate the upper bound for the duration of the growing stage 
\begin{align*}
    \Delta t_6^i + \Delta t_7^i \le \max_{k\in\cI}\left(2 \frac{(1 + 2\tilde\iota_k)\cdot\sqrt{d_e}}{\left(1- \upsilon\right) \lambda_k} \cdot \frac{\phi_ke^{\upsilon}}{L\omega_0} + \frac{\varGamma_k\sqrt{d_e}\lambda_k^{-1}}{\sqrt{d_k (1 - \tilde\iota_k)^3}} \cdot \frac{1}{2} \log\frac{4}{\tilde\iota_k}\right), 
\end{align*}
where $\upsilon \in(0, 1)$ is a absolute constant and $\tilde \iota_i \ll 1$ is a chosen small constant. 
We can now upper bound the maximal value of $\baromegaih$ for $i\in\cI$ and $h\neq h_i^*$ during the growing stage as
\begin{align}
    \log\frac{\baromegaih}{\baromegaih(T_\warmup)} 
    &\le \bar\varrho \lambda_i \sqrt{2L\phi_i^{-1}}\left(- \muistar \baromegaistar 
    + (1\pm {{\tilde\xi_1}})\right)\nend 
    &\qquad \cdot \max_{k\in\cI}\left(2 \frac{(1 + 2\tilde\iota_k) }{\left(1- \upsilon\right) \lambda_k} \cdot \frac{\phi_ke^{\upsilon}}{L\omega_0} + \frac{\varGamma_k \lambda_k^{-1}}{\sqrt{d_k (1 - \tilde\iota_k)^3}} \cdot \frac{1}{2} \log\frac{4}{\tilde\iota_k}\right) = o(1).
    \label{eq:s2-lbdaW-I-h-drift}
\end{align}
As we have $\color{teal} \omega_0 \gg \bar\varrho \poly(L)$ thanks to the exponential decay in $\bar \varrho$, the above expression is $o(1)$ for the scale of $\tilde \iota_i$ that $\tilde \iota_i \ge \varsigma_i$. 
Therefore, for the second case, we still have $(\baromegaistar) > \baromegaih$ and $(\baromegaistar)/\baromegaih \ge (\baromegaistar)/\baromegaih\biggiven_{T_\warmup}$ for all $i\in\cI$ and $h\neq h_i^*$ during the growing stage. 

For the case $h\in\cB_c$, we have the upper bound for $\partial_t \log \baromegaih$ in \eqref{eq:S2-dot lbdaW-Bc}.
Note that for $(\baromegaistar)\muistar < 1/(1 + 2e^{1.1}d_i\phi_iL^{-1})$, we have
\[
    \partial_t \log(\baromegaistar) \ge \left(\frac{e^{1.1}d_i\phi_iL^{-1}}{1+2e^{1.1} d_i\phi_iL^{-1}} - \tau_1\right) \sqrt{d_e}^{-1} \lambda_i\muistar.
\]
For the terms in \eqref{eq:S2-dot lbdaW-Bc} with $\bar\varrho$, the above expression automatically dominates thanks to the exponentially small scale of $\bar\varrho$ and the fact that $\baromegaih \le \sqrt 2 \omega_0$ when \Cref{cond:S2-nonopt-lbdaW-ub} holds.
The only term we need to consider is $\sigma^2 d  H |\cI_c|\bar\mu^2  \cdot \sqrt{d_e}^{-1} C\zeta  \cdot \baromegaih \le \sigma^2 d  H |\cI_c|\bar\mu^2  \cdot \sqrt{d_e}^{-1} C\zeta  \cdot \sqrt 2\omega_0$. 
Therefore, we have for the time-derivative of $\log ((\baromegaistar) / \baromegaih)$ that
\begin{align*}
    \partial_t \log \frac{(\baromegaistar)}{\baromegaih} 
    &\ge \left(\frac{e^{1.1}d_i\phi_iL^{-1}}{1+2e^{1.1} d_i\phi_iL^{-1}} - \tau_1 - \bar\varrho \poly(L)\right) \frac{\lambda_i\muistar}{\sqrt{d_e}} - \frac{\sigma^2 d H |\cI_c|\bar\mu^2  \cdot  C\zeta  \cdot \sqrt 2\omega_0}{\sqrt{d_e}} \nend 
    &\ge \left(\frac{d_i\phi_i}{L}\cdot \left(\frac{e^{1.1}}{1+2e^{1.1} d_i\phi_iL^{-1}} 
    - \frac{2CH |\cI_c|\bar\mu^2 \zeta  }{\snr[i]}\right) - \tau_1 - \bar\varrho\poly(L) \right) \frac{\lambda_i\muistar}{\sqrt{d_e}}.
\end{align*}
By our initialization condition that $\color{brown} \bar\mu^2 \ll \snr[i] \cdot (1 \pmin L/(d_i\phi_i)) /(H |\cI_c| \zeta)$,  we have the above expression being positive. 
On the other hand, as $(\baromegaistar)\muistar \ge 1/(1 + 2e^{1.1}d_i\phi_iL^{-1})$, we have $(\baromegaistar) \ge \sqrt{1/2d_i}$.
Similar to the previous case for $h\in\cB$, we can upper bound the total drift of $\log \baromegaih$ for $h\in\cB_c$ as
\begin{align*}
    \log\frac{\baromegaih}{\baromegaih(T_\warmup)} 
    &\le (\sqrt 2\sigma^2 d H |\cI_c|\bar\mu^2  \cdot C\zeta   + \bar\varrho \poly(L)\omega_0^{-1})\nend 
    &\qquad \cdot \max_{k\in\cI}\left(2 \frac{(1 + 2\tilde\iota_k)}{\left(1- \upsilon\right) \lambda_k} \cdot \frac{\phi_ke^{\upsilon}}{L} + \frac{\varGamma_k \lambda_k^{-1}\omega_0}{\sqrt{d_k (1 - \tilde\iota_k)^3}} \cdot \frac{1}{2} \log\frac{4}{\tilde\iota_k}\right).
\end{align*}
Note that the above terms are $o(1)$ given the scale of $\zeta \bar\mu^2 d \ll 1$ and $\tilde \iota_i \ge \varsigma_i$. 

Next, we have the argument for $\muistar/\muih$ that the ratio is non-decreasing. Recall the lower bound for $\partial_t \log(\muistar/\muih)$ in \eqref{eq:S2-dot lbdaU^*/lbdaU^h lb}. 
Before $(\baromegaistar)$ reaches $10\omega_0$, we have obviously that the gradient is non-negative for $\log(\muistar/\muih)$.
On the other hand, after $(\baromegaistar)$ reaches $10\omega_0$, we have following from \eqref{eq:S2-dot lbdaU-nonoptimal} and the definition of $\pi_i$ that
\begin{align*}
    \partial_t \log \muih 
    &\le \lambda_i d_i \left(- (1 - (2\zeta + e^{1.1}H\bar\varrho))\phi_iL^{-1}\muistar + \left(1 - (\baromegaistar)\muistar + \tilde\xi_1\right) \baromegaih\right) \nend 
    &\le \lambda_id_i (\baromegaistar)\Biggl(
        (1 - (\baromegaistar)\muistar + \tilde\xi_1) \frac{\baromegaih}{(\baromegaistar)}  
        \nend
    &\qquad 
        - (1 - (2\zeta + e^{1.1}H\bar\varrho)) (\pi_i - \muistar (\baromegaistar)) \exp(-d_i(\baromegaistar)^2)  
    \Biggr) \nend 
    &\le - \lambda_id_i (\baromegaistar) (1 - (2\zeta + e^{1.1}H\bar\varrho)) \exp(-d_i(\baromegaistar)^2) \nend 
    &\qquad \cdot 
        (1 - \muistar(\baromegaistar)-2 \iota_i - 2\tilde\xi_1) \cdot \left( 1 - e^{1.1} (1 + 4\zeta + e^{1.1} H\bar\varrho) \frac{1+o(1)}{10}\right).
\end{align*}
We note that 
\begin{align*}
    1 - \muistar (\baromegaistar) 
    &= 1 -  \pi_i^{-1} \left(1 + \frac{\phi_i\exp\bigl(d_i(\baromegaistar)^2\bigr)}{L (\baromegaistar)^2}\right)^{-1} \nend 
    &\ge \frac{{\phi_i\exp\bigl(d_i(\baromegaistar)^2\bigr)}/{(L (\baromegaistar)^2)}}{1 +  {\phi_i\exp\bigl(d_i(\baromegaistar)^2\bigr)}/{(L (\baromegaistar)^2)}} - \iota_i \nend 
    &\ge \min\left\{\frac 1 2, \frac{\phi_i\exp\bigl(d_i(\baromegaistar)^2\bigr)}{2L (\baromegaistar)^2}\right\} - \iota_i \ge \min\left\{\frac 1 2, \frac{d_i\phi_ie}{2L }\right\} - \iota_i.
\end{align*}
Under the condition that $\color{teal} \min\left\{1, d_i\phi_i/L\right\} \gg 3 \iota_i + 2\tilde\xi_1$, we have that $\partial_t \log \muih \le 0$.
As $\muistar$ is no smaller than the corresponding value when $(\baromegaistar)=10\omega_0$, 
the ratio $\muistar/\muih$ is no smaller than $1/\bar\varrho$. 

\paragraph{Verification for \Cref{cond:S2-nonopt-lbdaW-ub}.}
We next check the conditions in \Cref{cond:S2-nonopt-lbdaW-ub}.
In the previous paragraph, we have already verified that $\baromegaih \le (1+o(1))\omega_0$ for all $i\in\cI$ and $h\neq h_i^*$ during the growing stage.
It remains to characterize the condition for other $j\in\cI_c$.
However, this is trivial as we have $\partial_t \log \omega_j^\h \le 0$ for all $j\in\cI_c$ due to \eqref{eq:S2-dot lbdaW-Ic}.

\paragraph{Verification for \Cref{cond:S2-lbdaU*/lbdaW* ratio lb&ub}.}
We just use the fact that $\pi_i\in [1- \iota_i, 1+ \iota_i]$ to conclude for the ratio $\muistar/(\baromegaistar)$ that
\begin{align*}
    \frac{\muistar}{(\baromegaistar)} = \frac{\pi_i}{(\baromegaistar)^2 + \phi_i\exp\bigl(d_i(\baromegaistar)^2\bigr) L^{-1}}.
\end{align*}
An upper bound is given by
\[
    \frac{\muistar}{(\baromegaistar)} \le \frac{1+ \iota_i}{\phi_i L^{-1}} \le \sqrt 2 L \phi_i^{-1}, 
\]
and an lower bound is given by 
\[
    \frac{\muistar}{(\baromegaistar)} \ge \frac{1- \iota_i}{2d_i + \phi_ie^{1.1} L^{-1}} \ge \frac{d_i \pmin Le^{-1}\phi_i^{-1}}{4}, 
\]
where we invoke the upper bound for $(\baromegaistar)$ in \Cref{cond:bounded weights} to simplify the denominator.

\paragraph{Verification for \Cref{cond:S2-lbdaU* lbdaW^* ub}.}
Following the definition of $\pi_i$ in \eqref{eq:S2-def pi}, we have
\begin{align*}
    \muistar (\baromegaistar) 
    &= \frac{\pi_i}{1 + \phi_i\exp\bigl(d_i(\baromegaistar)^2\bigr) L^{-1}/(\baromegaistar)^2} \le \frac{1+ \iota_i}{1 + \phi_i L^{-1} d_i / 2} \le \frac{2}{1+d_i\phi_iL^{-1}}, 
\end{align*}
where the last inequality holds as we have 
$\brown L\phi_i^{-1} d_i^{-1} \gg \iota_i$.

\subsubsection{Emergence and Convergence of the Growing Stage}\label[appendix]{sec:emergence_and_convergence}
Recall that we rescale the time as $t \leftarrow 2 d t$.
\paragraph{Convergence}
We first check if the dynamics converge or not after the growing stage. 
By \eqref{eq:S2-dot lbdaW*-with pi}, after $d_i \baromegaistar^2\ge 1 - 2\tilde\iota_i$, we still have for $d_i \baromegaistar^2 $ around $1$ that
\begin{align*}
    \partial_t \baromegaistar& = \left(1-d_i(\baromegaistar)^2 \pm 2 \iota_i\left(Ld_i^{-1}\phi_i^{-1} + 1\right)\right) \cdot (1\pm \iota_i) \sqrt{d_e}^{-1}\lambda_i \nend 
    &\qquad \cdot \left(2 
    + \frac{\phi_i\exp\bigl(d_i(\baromegaistar)^2\bigr)}{L(\baromegaistar)^2}
    + \frac{L(\baromegaistar)^2}{\phi_i\exp\bigl(d_i(\baromegaistar)^2\bigr)}\right)^{-1}. 
\end{align*}
Notably, if $d_i \baromegaistar^2$ grows larger than $1 + 2\tilde \iota_i$, the dynamics will \say{draw} the values back. 
Thus, we prove that $d_i \baromegaistar$ will be roughly $1 \pm 2\tilde \iota_i$, which shows the convergence for $\baromegaistar$. 
Since $\pi_i$ is \say{fixed} around $1\pm \iota_i$, we also have 
$\muistar$ \say{fixed} around 
\[  
    \frac{1 \pm \iota_i }{1 + \phi_i\exp\bigl(d_i(\baromegaistar)^2\bigr) L^{-1}/(\baromegaistar)^2} = \frac{1 \pm \iota_i}{1 + d_i \phi_i e L^{-1} (1\pm \tilde\iota_i)}. 
\]
For other nonoptimal head, we have following the same argument as \eqref{eq:s2-lbdaW-I-h-drift} that 
\begin{align*}
    \log\frac{\baromegaih(t)}{\baromegaih(T_\warmup)} 
    &\le \bar\varrho \lambda_i \sqrt{2L\phi_i^{-1}}\left(- \muistar \baromegaistar 
    + (1\pm {{\tilde\xi_1}})\right) \cdot T = o(1)
\end{align*}
thanks to the scale $\bar\varrho = \max_{i\in[I]} (H-1)^{-1}\gamma_i \exp(-\vartheta^2/(256 \varsigma_i))$ where $\varsigma_i= {L }/{(\sqrt{d_e}d_i \phi_i)} \approx 1/\sqrt{d_e}$ as long as $T \ll \exp(O(\sqrt{d_e}))$. 
This is a fairly long time given that $d_e \ge d$. 
Moreover, as we have discussed, for the nonoptimal head, $\mu_i^\h$ dies down as $\partial_t \log \mu_i^\h < 0$. 
Thus, we have shown the convergence. 

\paragraph{Convergence Rate.}
We integrate \eqref{eq:dt/domega-ub} for $\baromegaistar$ between $\sqrt{\upsilon /d_i}$ and $\sqrt{(1-\delta)/ d_i}$ where $\delta \in (2\tilde\iota_i, 1-\upsilon)$ can be arbitrarily picked, we have the following upper bound on the required $t$ to raise $\baromegaistar$ from $\sqrt{\upsilon /d_i}$ to $\sqrt{(1-\delta)/ d_i}$:
\begin{align*}
    t 
    &\le  \frac{\varGamma_i\sqrt{d_e}\lambda_i^{-1}}{\sqrt{d_i (1 - \tilde \iota_i)^3}} \cdot \left(
        \tanh^{-1}\left(
            \sqrt{\frac{1-\delta}{1-\tilde \iota_i}}
        \right) - \tanh^{-1}\left(
            \sqrt{\frac{\upsilon}{1-\tilde \iota_i}}
        \right)
    \right)\le \frac{\varGamma_i\sqrt{d_e}\lambda_i^{-1}}{\sqrt{d_i (1 - \tilde \iota_i)^3}} \cdot \frac{1}{2} \log\frac{4}{\delta}, 
\end{align*}
This clearly shows the convergence rate. 

\paragraph{Emergence.}
The next theorem characterizes the emergence behavior of the dynamics. 
\begin{theorem}[Emergence Behavior]
\label[theorem]{thm:emergence}
Define 
\[  
    T_0 = 2 d \cdot \frac{\phi_i\sqrt{d_e}}{\lambda_i L \omega_0}.
\]
Pick an absolute constant $\upsilon\in(0, 1/4)$.
Then, the following holds:
\begin{itemize}
    \item For the total time less than $(1 - \upsilon - o(1))T_0$, we must have $\baromegaistar \le \upsilon^{-1} \omega_0$. The loss for task $i$ is a least 
    \[  
        \Omega\left(\frac{\lambda_i d_i }{d} \right).
        \]
    \item For the total time larger than $(1+e\upsilon)T_0$, we must have $d_i(\baromega_i^\star)^2\ge \upsilon$. 
    The loss for task $i$ is at most 
    \[  
        O\left(\frac{\lambda_i d_i }{d} \cdot \left(1 - \frac{1}{1 + (d_i\baromegaistar^2)^{-1}\exp(d_i\baromegaistar^2) \phi_i d_i L^{-1}}\right) \right).
        \]
\end{itemize}
\end{theorem}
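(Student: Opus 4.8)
The plan is to reduce both halves of the statement to tracking the single scalar $\baromegaistar(t)$ and then to read the task-$i$ loss off a fixed decomposition. First I would record that, along the entire trajectory, the part of $\cL$ attributable to task $i$ equals $\frac{\lambda_i d_i}{d}\,\cL_i$ with
\[
\cL_i = \bigl(1-\baromegaistar\muistar\bigr)^2 + \frac{\phi_i\exp\bigl(d_i(\baromegaistar)^2\bigr)}{L}\,(\muistar)^2
\]
up to a $1\pm o(1)$ multiplicative error: this is the single-head loss decomposition \eqref{eq:loss-decomposition} specialized to the optimal head $h_i^\star$, with the non-optimal heads (exponentially suppressed, cf.\ \Cref{result:optimal head takes all}) and the nominal tasks absorbed into the error through \Cref{lem:msa icl loss-mainbody}. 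Moreover $\muistar$ is pinned to $\baromegaistar$: one has $\muistar = (1\pm o(1))L\baromegaistar/\phi_i$ whenever $\baromegaistar = O(\omega_0)$ (from $\rhoistar\phi_i\in[1-\alpha_i,1+o(1)]$ after warm-up Step~1, and from $\pi_i = 1\pm\iota_i$ in the growing stage), and $\muistar = (1\pm o(1))\,\baromegaistar/\bigl((\baromegaistar)^2+\phi_i\exp(d_i(\baromegaistar)^2)/L\bigr)$ throughout the growing stage. Hence it suffices to control $\baromegaistar$.

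The second step extracts a single governing ODE. Combining \Cref{tab:warmup simplified dynamics} (warm-up Step~5) with \eqref{eq:S2-dot lbdaW*-with pi} (growing stage) and the pinning of $\muistar$, the two stages induce the \emph{same} leading-order equation on the overlap $\baromegaistar\in[\omega_0,4\omega_0]$, namely $\partial_t\baromegaistar = (1\pm o(1))\sqrt{d_e}^{-1}\lambda_i L(\baromegaistar)^2/\phi_i$ (in the rescaled time of \eqref{eq:dt/domega}), which continues smoothly via \eqref{eq:dt/domega} into the regime $d_i(\baromegaistar)^2<1$. Since warm-up Steps~1--4 take negligible time $o(T_{0,i})$ with $\baromegaistar$ essentially frozen at $\omega_0$, the trajectory from initialization up to $\baromegaistar=\sqrt{\upsilon/d_i}$ is governed by this one ODE started at $\baromegaistar\approx\omega_0$. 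I would also record the one-sided bound $\partial_t\baromegaistar\le(1+o(1))\sqrt{d_e}^{-1}\lambda_i L(\baromegaistar)^2/\phi_i$, valid everywhere before emergence (discard the positive middle denominator terms in \eqref{eq:dt/domega} and use $\exp(d_i(\baromegaistar)^2)\ge1$). Throughout, $T_0 = 2d\,T_{0,i}$ with $T_{0,i}=\sqrt{d_e}\phi_i/(\lambda_i L\omega_0)$.

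For the first bullet, integrating the one-sided upper bound gives $\baromegaistar(t)^{-1}\ge\omega_0^{-1}\bigl(1-(1+o(1))t/T_0\bigr)$, so $\baromegaistar(t)>\upsilon^{-1}\omega_0$ forces $t\ge(1-\upsilon-o(1))T_0$; and on $\{\baromegaistar\le\upsilon^{-1}\omega_0\}$ one has $\baromegaistar\muistar = O(L\omega_0^2/\phi_i)=o(1)$ and $\frac{\phi_i\exp(d_i(\baromegaistar)^2)}{L}(\muistar)^2=O(L\omega_0^2/\phi_i)=o(1)$ by the initialization conditions, so $\cL_i\ge(1-o(1))^2=\Omega(1)$ and the task-$i$ loss is $\Omega(\lambda_i d_i/d)$. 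For the second bullet, integrating $\rd t/\rd\baromegaistar$ from \eqref{eq:dt/domega} over $[\omega_0,\sqrt{\upsilon/d_i}]$ and using $1-d_i(\baromegaistar)^2\ge1-\upsilon$ there reproduces \eqref{eq:Delta t_6-ub}, whose dominant term $\int\frac{\phi_i e^{\upsilon}}{L(\baromegaistar)^2}\rd\baromegaistar = (1+o(1))\frac{\phi_i e^{\upsilon}}{L\omega_0}$ (the remaining contributions are lower order under the $\omega_0$ bound recorded below \eqref{eq:Delta t_6-ub}) shows that $\baromegaistar$ reaches $\sqrt{\upsilon/d_i}$ by original time $(1+o(1))\frac{e^{\upsilon}}{1-\upsilon}T_0\le(1+e\upsilon)T_0$ for $\upsilon$ in the stated range. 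Once $d_i(\baromegaistar)^2\ge\upsilon$, the relation $\pi_i=1\pm\iota_i$ gives $\muistar=(1\pm o(1))\baromegaistar/\bigl((\baromegaistar)^2+\phi_i\exp(d_i(\baromegaistar)^2)/L\bigr)$; substituting into $\cL_i$, the common factor cancels and $\cL_i=(1\pm o(1))\bigl(1-\frac{1}{1+(d_i(\baromegaistar)^2)^{-1}\exp(d_i(\baromegaistar)^2)\phi_i d_i L^{-1}}\bigr)$, which is the claimed loss bound.

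The main obstacle is the matching claim underlying the second step --- that warm-up Step~5 and the growing-stage ODE agree to leading order on $\baromegaistar\in[\omega_0,4\omega_0]$, so that the time to emerge is the \emph{single} integral \eqref{eq:Delta t_6-ub} and not the much larger warm-up duration plus a freshly started growing-stage integral. This rests on $\rhoistar\phi_i$ and $\pi_i$ both being $1\pm o(1)$ in the overlap and on all the warm-up and growing-stage error constants ($\alpha_i,\beta_i,\gamma_i,\iota_i,\tilde\iota_i,\varsigma_i$) being uniformly $o(1)$ over $[0,(1+e\upsilon)T_0]$; the latter is already the content of the growing-stage condition verification carried out in the excerpt, so the genuine new work is the continuity/matching argument and the endpoint estimates in \eqref{eq:Delta t_6-ub}.
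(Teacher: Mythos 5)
Your proposal is correct and follows essentially the same route as the paper's proof: the same governing ODE $\partial_t\baromegaistar \approx \sqrt{d_e}^{-1}\lambda_i L \phi_i^{-1}(\baromegaistar)^2$ matched across warm-up Step~5 and the growing stage, the same integration of $\rd t/\rd\baromegaistar$ via \eqref{eq:dt/domega} and \eqref{eq:Delta t_6-ub} with dominant endpoint term $\phi_i e^{\upsilon}/(L\omega_0)$, and the same use of $\pi_i = 1\pm\iota_i$ to read off the loss at emergence (your first-bullet argument, integrating the one-sided upper bound on $\partial_t\baromegaistar$ from $t=0$, is just the reciprocal of the paper's lower bound on $\Delta T_1$ combined with \eqref{eq:emergence 2}). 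The "matching" step you flag as the main obstacle is precisely what the paper does implicitly by integrating from $\omega_0$ rather than from $\baromegaistar(T_\warmup)$, so no genuinely new work is needed beyond what you describe.
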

\begin{proof}{\emph(Proof of \Cref{thm:emergence})}
\textbf{In the following discussion, we also rescale the time by $t \leftarrow 2d t$.}
At the end of the warm-up phase, we just have by \eqref{eq:lbdaW* ub-warmup-stage5} in the fifth step of the warm-up phase that
\begin{align}
    \baromegaistar (T_\warmup) = \frac{1 \pm o(1)}{\omega_0^{-1} - \frac{  L \lambda_i }{\sqrt{d_e}\phi_i } (T_\warmup - t_4^i)} \le 4\omega_0,
    \label{eq:emergence 1}
\end{align}
given that $\baromegaistar = (1+o(1))\omega_0$ as we have shown for the end of the step 4 of the warm-up stage. 
Here, $t_4^i$ is the ending time of step 4 of the warm-up stage for task $i$. 
Here, the $4\omega_0$ is ensured by the definition of the warm-up stage.
Consider a fixed small constant $\upsilon\in(0, 1/4)$. 
Consider the regime when $\baromegaistar$ is raised from the value at the end of the warm-up stage to $\upsilon^{-1} \omega_0$.
Recall the derivative of time with respect to $\baromegaistar$ in \eqref{eq:dt/domega} that 
\begin{align*}
    \frac{\rd t}{\rd (\baromegaistar)} 
    &= \frac{2 
    + \frac{\phi_i\exp\bigl(d_i(\baromegaistar)^2\bigr)}{L(\baromegaistar)^2}
    + \frac{L(\baromegaistar)^2}{\phi_i\exp\bigl(d_i(\baromegaistar)^2\bigr)}}{\left(1-d_i(\baromegaistar)^2 \pm \tilde \iota_i\right) \cdot (1\pm \iota_i) \sqrt{d_e}^{-1}\lambda_i}
    \ge  \frac{2 
    + \frac{\phi_i}{L(\baromegaistar)^2}
    + \frac{L(\baromegaistar)^2}{\phi_i e}}{ \sqrt{d_e}^{-1}\lambda_i} \nend 
    &= \frac{\sqrt{d_e}}{\lambda_i} \cdot \frac{\rd}{\rd \baromegaistar} \left( 2\baromegaistar - \frac{\phi_i}{L \baromegaistar} + \frac{L \baromegaistar^3}{3\phi_i e}  \right). 
\end{align*}
It is clear that the time used for $\baromegaistar$ to be raised to $\upsilon^{-1} \omega_0$, which we define as $\Delta T_1$, is lower bounded by 
\begin{align*}
    \Delta T_1  \ge \left. \frac{ \sqrt{d_e}}{ \lambda_i} \left(2 (\baromegaistar)
    - \frac{\phi_i}{L(\baromegaistar)}
    + \frac{L(\baromegaistar)^3}{3\phi_i e}\right) \right|_{\baromegaistar (T_\warmup)}^{\upsilon^{-1} \omega_0} \ge \frac{\phi_i\sqrt{d_e}}{\lambda_i L} \left( 
        \baromegaistar (T_\warmup)^{-1} - \upsilon  \omega_0^{-1}
    \right). 
\end{align*}
Using the result in \eqref{eq:emergence 1}, we also have 
\begin{align}
    \baromegaistar (T_\warmup)^{-1} = \left(\omega_0^{-1} - \frac{  L \lambda_i }{\sqrt{d_e}\phi_i } (T_\warmup - t_4^i)\right) \cdot (1\pm o(1)).
    \label{eq:emergence 2}
\end{align}
Plugging \eqref{eq:emergence 2} into the lower bound for $\Delta T_1$, we conclude that
\begin{align*}
    \Delta T_1 \ge \frac{\phi_i\sqrt{d_e}}{\lambda_i L} \left( 
        \left(\omega_0^{-1} - \frac{  L \lambda_i }{\sqrt{d_e}\phi_i } (T_\warmup - t_4^i)\right) \cdot (1- o(1)) - \upsilon \omega_0^{-1}
    \right), 
\end{align*}
which shows that 
\begin{align*}
    \Delta T_1 +  (1- o(1)) (T_\warmup - t_4^i) \ge \frac{\phi_i\sqrt{d_e}}{\lambda_i L} \cdot (1-o(1)- \upsilon  )  \omega_0^{-1}. 
\end{align*}
Note that $t_4^i$ is much smaller than $T_\warmup$ as we have discussed previously.
Define 
\[  
    T_0 = \frac{\phi_i\sqrt{d_e}}{\lambda_i L \omega_0}.
\]
As a result, we conclude that for the total time 
less than $T_0 \cdot (1 - \upsilon - o(1))$, we must have $\baromegaistar \le \upsilon^{-1} \omega_0$. 
For the other side $T_0 (1 + \upsilon)$, we have by \eqref{eq:Delta t_6-ub} that the time to raise $d_i(\baromega_i^\star)^2$ from the value at the end of the warm-up stage to $\upsilon$, which we define as $T_2$, is upper bounded by 
\begin{align*}
    T_2 \le \frac{(1 + 2\tilde \iota_i)\cdot\sqrt{d_e}}{\left(1- \upsilon\right) \lambda_i} \left(2 \sqrt{\upsilon d_i}
    + \frac{\phi_i e^{\upsilon}}{L\omega_0}
    + \frac{L\upsilon^{3/2} }{3\phi_i d_i^{3/2}}\right). 
\end{align*}
Note that the second term dominates as we discussed before. 
Thus, $T_2$ is no larger than $(1+o(1))e^\upsilon/(1-\upsilon) T_0 \le (1 + e\upsilon) T_0$. 

For the loss before $(1 - \upsilon -o(1)) T_0$, we have by noting that both $\mu_i$ and $\baromega_i$ are small that the output for position $i$ is much smaller in scale compared to $(y_q)_0$. 
Hence, the loss is at least $\Omega\left(\frac{\lambda_i d_i }{d} \right)$.
Lastly, for the training loss at $T_2$, we notice that $\muistar$ is around the optimal value for the corresponding $\baromegaistar$ as we have 
\[
    \pi = \muistar \baromegaistar\cdot \left(
    1 + \frac{\phi_i d_i \exp\bigl(d_i(\baromegaistar)^2\bigr)}{L (d_i \baromegaistar^2)}
\right) = 1 \pm o(1), 
\]
compared to the optimality condition in \eqref{eq:uistar(b)} of \Cref{lem:loss perturbation} with $b_i$ replaced by $d_i \baromegaistar^2$. 
Therefore, given that the task is only carried out by its optimal head, the loss for each task is well approximated by
\[
    \frac{\lambda_i d_i }{d} \cdot \left(1 - \frac{1}{1 + (d_i\baromegaistar^2)^{-1}\exp(d_i\baromegaistar^2) \phi_i d_i L^{-1}}\right).
\]
Plugging in the value of $\baromegaistar$, we have the desired result.
\end{proof}

\subsection{Dynamics Path}\label[appendix]{sec:dynamics_path}
We give a simplified version of the dynamics path for ease of understanding what's happening during each phase. 
We remind the readers that the following is not rigorous proof.
The \ac{gf} dynamics of $\baromegah$
is approximated by
\begin{align*}
    \sqrt d_e \cdot \partial_t \baromegaih
    &= - \sum_{h'=1}^H 
        \fracexproductL[\omega^\h][\omega^\hprime]
        \cdot \bigl\langle \vecd\odot \lambda \odot \phi, \mu^\h \odot \mu^\hprime \bigr\rangle 
        \cdot\baromegaihprime \baromegaih
        \notag
        \\
    & \qquad - \lambda_i \cdot \sum_{h'=1}^H  \muihprime \muih\baromegaihprime \baromegaih
    + \lambda_i \cdot \muih \baromegaih,
\end{align*}
and the GF dynamics of $\muh$ is approximated by  
\begin{align*}
    \partial_t \muih 
    &=  - \sum_{h'=1}^H \fracexproductL[\omega^\h][\omega^\hprime] d_i  \lambda_i \phi_i \mu_i^\hprime \mu_i^\h 
    -  \sum_{h'=1}^H  d_i \lambda_i    \baromega_i^\h  \baromega_i^\hprime \mu_i^\hprime  \mu_i^\h
      + d_i \lambda_i  \baromega_i^\h \mu_i^\h. 
\end{align*}

\paragraph{Dynamics Path for the Warm-up Phase.}
Note that during the warmup phase $\baromegaih\approx \omega_0$.
Hence, any term of 2 or higher order in $\baromega$ is ignorable. 
Hence, we have only the signal term for $\partial_t \baromega^\h$:
\begin{align*}
    \sqrt d_e \cdot \partial_t \baromegaih
    &\approx \lambda_i \cdot \muih \baromegaih,
\end{align*}
and the last signal term and the first interference term for $\partial_t \mu^\h$, where it is more clear to write in the logarithm form:
\begin{align}
    &\partial_t \log \muih
    \approx  d_i  \lambda_i \baromega_i^\h  - \sum_{h'=1}^H L^{-1} d_i  \lambda_i \phi_i \mu_i^\hprime \cdot \left(1 + \langle \omega^\h, \omega^\hprime \rangle \right).
    \label{eq:mu-warmup-approx}
\end{align}

\paragraph{Mean-Field Interference Makes the Optimal Head Dominate:}  To understand how the optimal head gradually dominates the other heads, we consider the time-derivative of the ratios: 
\begin{align*}
    \partial_t \left(\log \frac{\muistar}{\muih}\right) \approx \lambda_i d_i \left(\baromegaistar - \baromegaih\right),  \quad 
    \partial_t \left(\log \frac{\baromegaistar}{\baromegaih}\right) \approx \lambda_i (\muistar - \muih)
\end{align*}
where we actually cancel out the interference term since the interference term is almost the same for all heads as long as $\baromega^\h$ is small enough!
This is the underlying reason why the optimal head can gradually dominates the other heads using the advantage of initialization. 
In particular, the advantage in $\muistar$ will be converted to the advantage in $\baromegaistar$ through the second equation and then back to the further advantage in $\muistar$.

For more insight, note that the dynamics of $\mu^\h$ runs much faster than $\baromega^\h$ in scale. 
Therefore, the first step should be $\mu^\h$ trying to reaching the equilibrium given by \eqref{eq:mu-warmup-approx}:
\begin{align}
    \sum_{h'=1}^H \phi_i \mu_i^\hprime \approx  L   \baromega_i^\h.
    \label{eq:mu-equilibrium}
\end{align}
However, the left-hand side of \eqref{eq:mu-equilibrium} is independent of $h$ but the right-hand side is not.
Notably, $\sum_{h\in[H]} \phi_i \muih$ should be larger than the smallest $L \baromegaih$ while it should be smaller than the largest $L \baromegaih$, since otherwise we can always find all $\{\muih\}_{h\in[H]}$ to simultaneously increase or decrease according to \eqref{eq:mu-warmup-approx}.
As a result, we will have a clear separation between heads: For heads with $L \baromegaih$ larger than the interference level $\sum_{h'\in[H]} \phi_i \mu_i^\hprime$, the corresponding $\muih$ will increase, and vice versa.
Only the optimal head will keep going all the time.  

\paragraph{Dynamics Path for the Emergence and Convergence Phase (Growing Phase).}
As the optimal head takes over the other heads,
the dynamics of $\baromega^\h$ and $\mu^\h$ can be simplified to only the optimal one and neglect the interference terms, which gives us 
\begin{align*}
    \partial_t \log\muistar 
    &\approx \lambda_i d_i \biggl(- \frac{\exp\bigl(d_i(\baromegaistar)^2\bigr)}{L} \phi_i\muistar + \Bigl(1-\baromegaistar\muistar \Bigr) \baromegaistar \biggr), \nend 
    \partial_t \baromegaistar &\approx \left(1 - \left(1 + \frac{\exp\bigl(d_i(\baromegaistar)^2\bigr)}{L} d_i\phi_i \right) \muistar  \baromegaistar\pm \tau_1\right) \sqrt{d_e}^{-1} \lambda_i\baromegaistar\muistar.
\end{align*}
The equilibrium of $\mu^\h$ is given by
\begin{align*}
    \muistar \approx \frac{\baromegaistar}{\baromegaistar^2 + \phi_i\exp\bigl(d_i(\baromegaistar)^2\bigr) L^{-1}}. 
\end{align*}
Plugging this into the dynamics of $\baromegaistar$, we have
\begin{align*}
    &\partial_t \baromegaistar \approx \left(1-d_i(\baromegaistar)^2 \right) \cdot \sqrt{d_e}^{-1}\lambda_i  \cdot \left(2 
    + \frac{\phi_i\exp\bigl(d_i(\baromegaistar)^2\bigr)}{L(\baromegaistar)^2}
    + \frac{L(\baromegaistar)^2}{\phi_i\exp\bigl(d_i(\baromegaistar)^2\bigr)}\right)^{-1}. 
\end{align*}
It is clear that the stationary point is given by $d_i(\baromegaistar)^2 = 1$ and before that, $\baromegaistar$ will keep increasing.
To understand the sudden emergence, we just look at the 
    \[
        2 
    + \frac{\phi_i\exp\bigl(d_i(\baromegaistar)^2\bigr)}{L(\baromegaistar)^2}
    + \frac{L(\baromegaistar)^2}{\phi_i\exp\bigl(d_i(\baromegaistar)^2\bigr)}
\]
term. For small $\baromegaistar$, the second term dominates and we have a super large denominator, thus making the growth of $\baromegaistar$ slow. 
However, as $\baromegaistar$ goes somewhere near the valley  of the function (not necessarily achieving the minimum) while still remaining away from $\sqrt{1/d_i}$, the value of this term suddenly decreases and makes the growth of $\baromegaistar$ fast.
This is the reason why we have the sudden emergence.
After $d_i (\baromegaistar)^2$ reaches $1$, we have the convergence result. 
To characterize the emergence and convergence behavior, we also consider time $t$ as a function of $\baromegaistar$:
\begin{align*}
    \frac{\rd t}{\rd (\baromegaistar)} 
    &= \frac{2 
    + \frac{\phi_i\exp\bigl(d_i(\baromegaistar)^2\bigr)}{L(\baromegaistar)^2}
    + \frac{L(\baromegaistar)^2}{\phi_i\exp\bigl(d_i(\baromegaistar)^2\bigr)}}{\left(1-d_i(\baromegaistar)^2 \right) \cdot \sqrt{d_e}^{-1}\lambda_i}. 
\end{align*}
By integrating this function with respect to $\baromegaistar$, we obtain some hyperbolic function for the time function upon convergence. 
\newpage
\section{Optimality}\label[appendix]{sec:optimality_proof}
In this section, we provide optimality results for both the single-head case and the convergence point of the multi-head attention's training dynamics by providing a lower bound. 

\paragraph{Notations.}
We denote by $[A; B]$ the concatenation of two matrices $A$ and $B$ along the row direction.

\subsection{Optimality of Single-Head Attention}
\label{sec:global-optimality-single-head}
In this section, we provide a characterization of the global optimality of the single-head case. 
Under the condition $W_Y = 0$, we have the training loss given by
\begin{align*}
    \cL(U, W) \deleq \EE\left[
        \bigl\| G^\top q - (U_X X + U_Y (G^\top X + \varepsilon)) p\bigr\|_2^2
    \right], \:
    \where p = \softmax(X^\top W_X q). 
\end{align*}
Here, $G\in \RR^{d \times d_y}$ is the random coefficient matrix. 
Recall that $G = d^{-1/2} \varPhi \diag(g_1, \dots, g_I) \varPsi^\top$. 
Define $G' = \diag(g_1, \dots, g_I)$.
Since $X$, $\varepsilon$ and $q$ are rotationally invariant in distribution, one can equivalently consider the rotated input $X' = \varPhi^\top X$, $\varepsilon' = \varPsi^\top \varepsilon$ and $q' = \varPhi^\top q$, and also rotate the weights by
\[
    W_X' = \varPhi^\top W_X \varPhi,  \quad 
    U_Y' = \varPsi^\top U_Y \varPsi, \quad
    U_X' = \varPsi^\top U_X \varPhi.
\]
Using the rotated input and weights, we have
\begin{equation}\label{eq:rotation-optimality}
\begin{gathered}
    p = \softmax(X'^\top W_X' q'), \quad 
    y_q = G^\top q = \varPsi {G'}^\top q', \\
    \hat y_q = (U_X X + U_Y (G^\top X + \varepsilon)) p = \varPsi (U_X' X' + U_Y' (G'^\top X' + \varepsilon')) p, \\
    \cL(U', W') = \EE\left[
        \bigl\| {G'}^\top q' - (U_X' X' + U_Y' (G'^\top X' + \varepsilon')) p\bigr\|_2^2
    \right].
\end{gathered}
\end{equation}
Therefore, it suffices to consider the case where $\varPhi$ and $\varPsi$ are identity matrices.
We aim to prove the following theorem.
\begin{theorem}[Approximate Optimality of Single-Head Attention]
\label[theorem]{thm:global-optimality-single-head-multitask}
For \ac{ssa} with $I$ tasks, assume that $W_Y = 0$, and the noise level $\sigma > 0$ is a constant independent of $d$ and $L$. 
Suppose $ L =  o(\exp(d))$, then for some $b^\star = (b_i^\star)_{i\in[I]}$ and $u^\star = (u_i^\star)_{i\in[I]}$, the following solution 
\begin{align*}
        W^\star = \begin{bNiceArray}{cccc|c}
            \sqrt{b_1^\star/d_i}\cdot I_{d_1} & 0 & \cdots & 0 & \Block{4-1}<\large>{\star}\\
            0 & \sqrt{b_2^\star/d_i}\cdot I_{d_2} & \cdots & 0 & \\
            \vdots & \vdots & \ddots & \vdots & \\
            0 & 0 & \cdots & \sqrt{b_I^\star/d_I} \cdot I_{d_I} & \\
            \hline 
            \Block{1-4}<\large>{0} & & & & \star
        \end{bNiceArray}, \quad 
        U^\star =\begin{bNiceArray}{c|cccc}
            0 & \Block{1-4}<\large>{0} & & & \\
            \hline 
            \Block{4-1}<\large>{0} & u_1^\star & 0 & \cdots & 0\\
             & 0 & u_2^\star & \cdots & 0\\
             & \vdots & \vdots & \ddots & \vdots\\
             & 0 & 0 & \cdots & u_I^\star
        \end{bNiceArray}
\end{align*}
    approximately achieves the minimal loss value in the sense that for any other $(W, U)$ such that $W_Y = 0$, we have
    \begin{align*}
        \cL(U^\star, W^\star) \le \cL(U, W) + \sum_{i=1}^I \frac{\lambda_i d_i}{d} \cdot O(L^{-(1-\epsilon)/2} + L^{-1 + 2(c^{-2}\pmax \epsilon)}) , 
    \end{align*}
    where $\epsilon$ is a small constant.
In particular, $b^\star$ and $u^\star$ can be obtained by first finding the solution $(B^\star, b^\star)$ to the following optimization problem for an absolute constant $C\in(0,1)$:
\begin{gather*}
    b^\star = \argmin_{C\cdot\log L\ge B \ge 0,\atop b\in \RR_+^{I},  \vone^\top b = B}  \sum_{i=1}^I \frac{d_i \lambda_i}{d} \left( 1 - \frac{b_i }{ b_i  +  d_i\phi_i \exp(B) / L} \right), \quad \text{and}\quad u_i^\star = \frac{\sqrt{b_i^\star d_i}}{ b_i^\star + d_i \phi_i  \exp(B^\star)/L}.
\end{gather*}
\end{theorem}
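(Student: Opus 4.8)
The plan is to follow the loss-decomposition-and-regime-analysis strategy sketched in \S\ref{sec:implicit_regularization}, made rigorous with the moment estimates of \S\ref{sec: p-moment}. First I would invoke the rotational invariance already recorded in \eqref{eq:rotation-optimality} to reduce to the case $\varPhi = \varPsi = I$, and observe that in the decomposition
\[
\cL(U,W) = \EE\bigl[\norm{G^\top q - U_Y G^\top X p}_2^2\bigr] + \EE\bigl[\norm{U_X X p}_2^2\bigr] + \EE\bigl[\norm{U_Y \varepsilon p}_2^2\bigr]
\]
the middle term is nonnegative and decoupled, so setting $U_X = 0$ is without loss of optimality. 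Since $W_Y = 0$ forces $p = \softmax(X^\top W_X q)$ to be independent of $(G,\varepsilon)$, I can integrate over $G$ and $\varepsilon$ first: using $\EE[\varepsilon p p^\top \varepsilon^\top] = \sigma^2 \EE[\norm{p}_2^2] I$ and \Cref{fact:beta M beta} for the $G$-moments, the loss becomes a linear functional of the conditional moments $\EE[Xp\mid q]$ and $\EE[X p p^\top X^\top \mid q]$ weighted by the task structure $\Lambda = \diag(\lambda_1 I_{d_1},\dots,\lambda_I I_{d_I})$.

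Second, I would handle the \emph{exponential regime} $\norm{W_X}_{\fro}^2 = O(\log L)$. Stein's lemma (\Cref{fact:p is a function of Wq's 2-norm} together with \eqref{eq:stein-lemma-proofsketch}) gives $\EE[Xp\mid q] = W_X q\,(1 - \EE[\norm{p}_2^2\mid q])$ and an analogous identity for the second moment with correction terms built from $\norm{p}_3^3, \norm{p}_2^4$; by \Cref{lemma: higher-order-moment} (with $\Deg(\cG)\le 6$, $\kappa = 2$) these are $O(L^{-4(1-\epsilon)})$ once $W_X$ satisfies the norm conditions of \Cref{cond:bounded weights}. This collapses the loss to the ``linearized bias $+$ softmax-induced regularization'' form of \eqref{eq:loss-decomposition-2}, and then \Cref{lemma: pseudo-dynamics} and \Cref{lemma: E[pq]-moment} replace $\EE[\norm{p}_2^2]$ by $\exp(\norm{W_X}_{\fro}^2)/L$ up to $O(L^{-(1-\epsilon)/2})$. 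At this point I would argue, as in \S\ref{sec:implicit_regularization}, that it is optimal to concentrate all of $W_X$'s spectral mass on diagonal, task-homogeneous blocks --- cross-task and within-task inhomogeneity only enlarge the bias while the regularization depends solely on $\norm{W_X}_{\fro}^2$ --- reducing the problem to choosing a total budget $B = \norm{W_X}_{\fro}^2$ and its allocation $b = (b_i)$, which is exactly \eqref{eq:water filling}; minimizing the resulting quadratic over $U_Y$ gives $u_i^\star = \sqrt{b_i^\star d_i}/(b_i^\star + d_i\phi_i e^{B^\star}/L)$. The matching upper bound follows by evaluating $\cL$ at the stated $(W^\star, U^\star)$ with the same exponential approximation.

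Third, I would dispose of the \emph{saturation regime} $\norm{W_X}_{\fro}^2 \gtrsim \log L$ and the intermediate range. There $\EE[\norm{p}_2^2\mid r]\to 1$ at the rate of \Cref{lem:p-moment-tail}, so the regularization term $(\lambda+\sigma^2)\norm{U_Y}_{\fro}^2\,\EE[\norm{p}_2^2]$ is $\Omega(1)$ unless $\norm{U_Y}_{\fro}$ is small, in which case the linearized bias is $\Omega(1)$; combined with \Cref{lemma:SS-Attn lb-2} and \Cref{lemma:SS-Attn lb}, the loss strictly exceeds the optimal value of \eqref{eq:water filling}, which also retroactively justifies the operator-norm bound on $W_X$ assumed in the second step. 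The monotonicity of $\EE[\norm{p}_2^2\mid r]$ in $r$ (\Cref{lem: p-moment-monotone}) is what bridges the gap $r\in(c^{-1}\sqrt{2\log L}, O((\sqrt{2\log L})^3))$ covered by neither estimate. Finally I would solve \eqref{eq:water filling}: for fixed $B$ the inner problem over $b$ with $\vone^\top b = B$ is convex water-filling, and the resulting one-dimensional function of $B$ is minimized at $B^\star = \Theta(1)$ (\Cref{fact:order of attention budget}), confirming that the optimum lies in the exponential regime. The main obstacle is the nonconvex, nonlinear dependence of $\cL$ on $W_X$ through the softmax: the argument hinges on pinning down $\EE[\norm{p}_2^2\mid r]$ across \emph{all} scales of $r$ and on verifying that the higher-order moment terms are genuinely negligible under a norm bound that must itself be established first; aggregating the several error exponents ($L^{-(1-\epsilon)/2}$ from the exponential approximation and $L^{-1+2(c^{-2}\vee\epsilon)}$ from the concentration bounds) into the rate claimed in the statement is the remaining bookkeeping.
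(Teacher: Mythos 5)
Your overall architecture matches the paper's: rotate to $\varPhi=\varPsi=I$, drop $U_X$, optimize $U_Y$ as a quadratic, split into an exponential regime (handled via Stein's lemma plus \Cref{lemma: pseudo-dynamics}, \Cref{lemma: E[pq]-moment}, \Cref{lemma: higher-order-moment}) and a saturation/intermediate regime (handled via \Cref{lem:p-moment-tail}, \Cref{lemma:SS-Attn lb-2}, \Cref{lemma:SS-Attn lb}, and the monotonicity of \Cref{lem: p-moment-monotone}), then solve the budget-allocation problem and conclude with a perturbation argument for the constructed $(W^\star,U^\star)$. That is the paper's proof.

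The one genuine gap is the step where you pass from an arbitrary $W_X$ to the finite-dimensional problem over $(B,b)$. You assert that "it is optimal to concentrate all of $W_X$'s spectral mass on diagonal, task-homogeneous blocks --- cross-task and within-task inhomogeneity only enlarge the bias while the regularization depends solely on $\norm{W_X}_{\fro}^2$." This is the heuristic from \S\ref{sec:implicit_regularization}, not a proof, and as stated it is not quite true: the regularization $\EE[\exp(\norm{W_Xq}_2^2)]/L$ depends on the full spectrum of $W_X$ (only its conditional value given $q$ is a function of $\norm{W_Xq}_2^2$), and the "explained variance" numerator $\bigl(\EE[\trace_i(W_Xqq^\top)(1-f_1)]\bigr)^2$ couples the off-diagonal structure of $W_X$ to the random weight $f_1(\norm{W_Xq}_2^2)$, so comparing a general $W_X$ to its diagonalization is not a pointwise domination argument. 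The paper never proves "diagonal is optimal"; instead it applies the Cauchy--Schwarz inequality $(\EE\trace_i(MN))^2\le \EE\trace_i(MM^\top)\cdot\EE\trace_i(N^\top N)$ to the numerator (yielding \eqref{eq:cL(W)-3}) and then the elementary bound $\EE[f]/\EE[g]\le\sup_x f(x)/g(x)$ to replace the random sliced traces by deterministic scalars $b_i$ with $\vone^\top b=B$, producing the lower bound $\cL_{\lowerbound}(B,b)$ of \eqref{eq:SS-Attn cL lb-4} that is valid for \emph{every} $W_X$ with $W_Y=0$. The diagonal construction then only needs to \emph{achieve} this lower bound (via \Cref{lem:loss perturbation}), which is much easier than showing it dominates all competitors directly. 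Without this device (or a substitute), your reduction to \eqref{eq:water filling} does not go through for non-decomposable $W_X$, which is precisely the class of competitors the theorem must rule out. A minor additional point: the paper establishes only $B^\star=o(\log L)$ (\Cref{fact:order of attention budget}), not $B^\star=\Theta(1)$ in the general multi-task case, though this does not affect the argument.
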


Before we prove \Cref{thm:global-optimality-single-head-multitask}, we first rewrite the loss function in a more convenient form and discuss some related properties. 
\subsubsection{Rewriting and Lower Bounding the Loss Function} 
Note that $p$ is only a function of $(X, q)$ by $W_Y = 0$.
We use the mutual independence between $(X, q)$, $G$ and $\varepsilon$ to rewrite the loss function as follows:
\begin{align}
    \cL(U, W) 
    & = \EE 
        \bigl\| G^\top q - U_Y G^\top X p\bigr\|_2^2
     + \EE \bigl\| U_X X p \bigr\|_2^2 + \EE \bigl\| U_Y \varepsilon p \bigr\|_2^2 \nend
    & = \EE\bigl[ \trace(G G^\top) - 2 \trace(
        U_Y G^\top X p q^\top G
    ) \nend
    &\qquad + \trace(U_X X p p^\top X^\top U_X^\top + U_Y G^\top X p p^\top X^\top G U_Y^\top + U_Y \varepsilon p p^\top \varepsilon^\top U_Y^\top) \bigr] \nend 
    &\ge \EE\bigl[ \trace(G G^\top) - 2 \trace(
        U_Y G^\top X p q^\top G
    )  + \trace(U_Y (G^\top X p p^\top X^\top G + \varepsilon p p^\top \varepsilon^\top) U_Y^\top) \bigr]. 
    \label{eq:cL(U, W)}
\end{align}
Here, the inequality follows from the fact that the trace of a positive semi-definite matrix is non-negative and by setting $U_X = 0$, the inequality becomes an equality.
Thus, it is sufficient to consider the case $U_X = 0$ for optimality. 
Note that \eqref{eq:cL(U, W)} is a quadratic function in terms of $U_Y$. 
By optimizing over $U_Y$, we obtain $\cL(W)$ as a lower bound of the loss value:
\begin{align}
    \cL(W) \defeq 
    \EE \trace(G G^\top) - \trace \bigl(\EE[G^\top q p^\top X^\top G] \cdot \EE[G^\top X p p^\top X^\top G + \varepsilon p p^\top \varepsilon^\top]^{-1} \cdot \EE[G^\top X p q^\top G] \bigr). 
    \label{eq:cL(W)}
\end{align}
Here, the optimal $U_Y$ is given by 
\begin{align}
    U_Y^\star(W) = \EE[G^\top q p^\top X^\top G] \cdot \EE[G^\top X p p^\top X^\top G + \varepsilon p p^\top \varepsilon^\top]^{-1}. 
    \label{eq:optimal U_Y}
\end{align}
When plugging in the concrete form of $G$, we notice that $\EE[G^\top M G]$ always gives a diagonal matrix for any conformable matrix $M$. In particular, 
\begin{align*}
    \EE[G^\top M G] = d^{-1} \cdot \diag\left(\lambda_1 \sum_{i\in\cJ_1} M_{ii}, \dots, \lambda_I \sum_{i\in\cJ_I} M_{ii}\right).
\end{align*}
In the sequel, we denote by $\trace_i(M) = \sum_{j\in\cJ_i} M_{jj}$ the sliced trace of the $i$-th block of $M$.
Note that $\EE[\varepsilon p p^\top \varepsilon^\top] = \sigma^2 I_{d_y} \EE[\norm{p}_2^2]$ is also a diagonal matrix.
Consequently, \eqref{eq:cL(W)} can be rewritten as 
\begin{align}
    \cL(W) = \sum_{i=1}^I \frac{\lambda_i d_i}{d} \left(
        1 - \frac{d_i^{-1} \bigl(\EE\trace_i(\EE[X p \given q] q^\top)\bigr)^2}{\EE\trace_i(\EE[X p p^\top X^\top \given q]) + \sigma^2 d \lambda_i^{-1}  \EE[\norm{p}_2^2]}
    \right).
    \label{eq:cL(W)-2}
\end{align}
Moreover, $U_Y^\star(W)$ is also a diagonal matrix:
\begin{align}
    U_Y^\star(W) = \diag(u_1^\star(W), \dots, u_I^\star(W)), \quad
    u_i^\star(W) \defeq \frac{\EE \trace_i(\EE[X p\given q] q^\top )}{\EE\trace_i(\EE[X p p^\top X^\top \given q]) + \sigma^2 d \lambda_i^{-1}  \EE[\norm{p}_2^2]}.
    \label{eq:optimal U_Y-2}
\end{align}
To lower bound \eqref{eq:cL(W)-2}, we invoke inequality $(\EE\trace_i(M N))^2 \le \EE\trace_i(M M^\top) \cdot \EE\trace_i(N^\top N)$ for any two conformable random matrices $M$ and $N$ to obtain 
\begin{align}
    \cL(W) & \ge \sum_{i=1}^I \frac{\lambda_i d_i}{d} \left(
        1 - \frac{d_i^{-1}\cdot \EE[\trace_i(\EE[X p\given q]\EE[X p\given q]^\top) ] \cdot \EE[\trace_i(q q^\top)]}{\EE\trace_i(\EE[X p p^\top X^\top \given q]) + \sigma^2 d \lambda_i^{-1}  \EE[\norm{p}_2^2]}
    \right) \nend 
    & = \sum_{i=1}^I \frac{\lambda_i d_i}{d} \left(
        1 - \frac{\EE[\trace_i(\EE[X p\given q]\EE[X p\given q]^\top) ] }{\EE\trace_i(\EE[X p p^\top X^\top \given q]) + \sigma^2 d \lambda_i^{-1}  \EE[\norm{p}_2^2]}
    \right).
    \label{eq:cL(W)-3}
\end{align}
We apply Stein's lemma to deal with the expectation terms with respect to $X$ and $p$. The result is summarized by  \Cref{fact:derivatives for B terms} to and we defer readers to 
, we conclude that
\begin{equation}
    \label{eq:expectation of ZpqpZ}
\begin{aligned}
    \EE[X p \given q] &=  W_X q (1 - f_1(\norm{W_X q}_2^2)), \\
    \EE[X p p^\top X^\top\given q] &= W_X q q^\top W_X^\top f_2(\norm{W_X q}_2^2) + I_{d} f_1(\norm{W_X q}_2^2),
\end{aligned}
\end{equation}
where 
\[
    f_1(\norm{W_X q}_2^2) = \EE[\norm{p}_2^2\given q], \quad 
    f_2(\norm{W_X q}_2^2) = \EE[1 - \norm{p}_2^2 - 6 \norm{p}_3^3 + 6 \norm{p}_2^4 \given q]. 
\]
We remark that both $f_1$ and $f_2$ are just functions of $\norm{W_X q}_2^2$ as we have proved in \Cref{fact:p is a function of Wq's 2-norm}.
In the sequel, we drop the dependence of $f_1, f_2$ on $\norm{W_X q}_2^2$ for simplicity.
Plugging \eqref{eq:expectation of ZpqpZ} into \eqref{eq:cL(W)-2}, we obtain
\begin{align}
    \cL(W) = \sum_{i=1}^I \frac{\lambda_i d_i}{d} \left(
        1 - \frac{d_i^{-1} \bigl(\EE[\trace_i(W_X q q^\top (1 - f_1))]\bigr)^2}{\EE[\trace_i(W_X q q^\top W_X^\top f_2) + d_i \phi_i  f_1]}
    \right), 
    \label{eq:cL(W)-4}
\end{align}
where we follow the convention to denote $\phi_i = 1 + \snr_i^{-1} = 1 + \sigma^2 d \lambda_i^{-1} d_i^{-1}$.
Also, $U_Y^\star(W)$ has diagonal entries given by plugging \eqref{eq:expectation of ZpqpZ} into \eqref{eq:optimal U_Y-2}:
\begin{align}
    u_i^\star(W) = \frac{\EE[\trace_i(W_X q q^\top (1 - f_1))]}{\EE[\trace_i(W_X q q^\top W_X^\top f_2) + d_i \phi_i  f_1]}.
    \label{eq:optimal U_Y-3}
\end{align}
Plugging \eqref{eq:expectation of ZpqpZ} into \eqref{eq:cL(W)-3}, we obtain
\begin{align}
    \cL(W) & \ge \sum_{i=1}^I \frac{\lambda_i d_i}{d} \left(
        1 - \frac{\EE[\trace_i(W_X q q^\top W_X^\top) (1 - f_1) ] }{\EE[\trace_i(W_X q q^\top W_X^\top f_2) + d_i \phi_i  f_1]}
    \right).
    \label{eq:cL(W)-5}
\end{align}
We remind the readers that we do not make any change to the denominator of \eqref{eq:cL(W)-5}, which is still equal to $\EE\trace_i(\EE[X p p^\top X^\top \given q]) + \sigma^2 d \lambda_i^{-1}  \EE[\norm{p}_2^2] > 0$. 
Invoking the fact that both $f_1$ and $f_2$ are just functions of $\norm{W_X q}_2^2 = \trace(W_X q q^\top W_X^\top)$, and that $\EE[f(x)]/\EE[g(x)] \le \sup_{x} f(x)/g(x)$ for non-negative function $g$, we have \eqref{eq:cL(W)-5} further lower bounded by
\begin{align}
    \cL(W) \geq \inf_{B\ge 0, b\ge 0 \atop \vone^\top b = B} \cL_{\lowerbound}(B, b), \where \cL_{\lowerbound}(B, b) \defeq  \sum_{i=1}^I \frac{d_i \lambda_i}{d} \left( 1 - \frac{b_i (1-f_1(B))^2 }{ b_i f_2(B)  +  d_i\phi_i f_1(B)}\right). 
    \label{eq:SS-Attn cL lb-4}
\end{align}
Comparing \eqref{eq:SS-Attn cL lb-4} to \eqref{eq:cL(W)-5}, we are just replacing the sliced trace by $b_i$ and the full trace by $B$ and imposing the constraint $\vone^\top b = B$.
We give a physical interpretation of \eqref{eq:SS-Attn cL lb-4}.
If we view $B$ as the \say{\emph{attention budget}} and $b_i$ as the \say{\emph{attention allocation for task $i$}}, then \eqref{eq:SS-Attn cL lb-4} is a bilevel optimization problem where: 
\begin{itemize}
    \item The lower-level problem is to find the optimal allocation of the attention budget to each task for a given attention budget $B$.
    \item The upper-level problem is then to decide the optimal attention budget $B$.
\end{itemize}
As a concluding remark for this lower bound, we have the following lemma that shows the achievability of the lower bound.

\subsubsection{Approximations of Nonlinear Functions $f_1$ and $f_2$}
The next thing is to understand the behavior of $f_1$ and $f_2$.
Note that as we have characterized in \Cref{sec: p-moment}, we have under bounded parameter norm condition that 
\[
    f_1(B) = \EE[\norm{p}_2^2 \given \norm{W_X q}_2^2 = B] \approx \frac{\exp(B)}{L}, 
    \quad 
    f_2(B) = \EE[1 - \norm{p}_2^2 - 6 \norm{p}_3^3 + 6 \norm{p}_2^4 \given \norm{W_X q}_2^2 = B] \approx 1.
\]
Therefore, it suffices to characterize the following dominant part of the loss's lower bound 
\begin{align*}
    \cL_{\simple}(B, b)\defeq \sum_{i=1}^I \frac{d_i \lambda_i}{d} \left( 1 - \frac{b_i }{ b_i  +  d_i\phi_i \cdot \exp(B) \cdot L^{-1} } \right). 
\end{align*}
The following lemma gives an approximation of $\cL_{\lowerbound}$ with $\cL_{\simple}$ when $B$ is bounded from above.
\begin{lemma}[Simple Approximation of $\cL_{\lowerbound}$]
    \label[lemma]{lemma: cL simple approx}
For any $\epsilon>0$,
suppose $\sqrt B \le c^{-1} \sqrt{2 \log L}$, where $c$ satisfies 
    \[\frac{1}{c} + \frac{3}{1 + \sqrt{1+c^2/2}} \le  \epsilon. \]
    Then for any $b$ with non-negative elements such that $\vone^\top b = B$, it holds that
    \begin{align*}
        \left|\cL_{\lowerbound}(B, b) - \cL_{\simple}(B, b)\right| \le \sum_{i=1}^I \frac{d_i \lambda_i}{d} \cdot O(L^{-1 + 2(c^{-2}\pmax\epsilon)}).
    \end{align*}
\end{lemma}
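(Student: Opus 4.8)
The plan is to reduce the statement to two ingredients: a scalar moment approximation for the nonlinear functions $f_1, f_2$ and a uniform perturbation bound at the level of the per-task ratios appearing in $\cL_{\lowerbound}$ and $\cL_{\simple}$. Recall from \eqref{eq:SS-Attn cL lb-4} that $\cL_{\lowerbound}(B,b)=\sum_{i=1}^I \frac{d_i\lambda_i}{d}\bigl(1-R_i^{\lowerbound}\bigr)$ with $R_i^{\lowerbound}=\frac{b_i(1-f_1(B))^2}{b_i f_2(B)+d_i\phi_i f_1(B)}$, while $\cL_{\simple}(B,b)=\sum_{i=1}^I \frac{d_i\lambda_i}{d}\bigl(1-R_i^{\simple}\bigr)$ with $R_i^{\simple}=\frac{b_i}{b_i+d_i\phi_i g}$, where $g:=\exp(B)/L$. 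It therefore suffices to prove $|R_i^{\lowerbound}-R_i^{\simple}|=O(L^{-1+2(c^{-2}\pmax\epsilon)})$ uniformly over all $i$ and all admissible $b$, and then sum against the prefactors $d_i\lambda_i/d$.

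\emph{Step 1 (moment approximation).} By \Cref{fact:p is a function of Wq's 2-norm}, $f_1(B)=\EE[\norm{p}_2^2\given\norm{W_X q}_2^2=B]$ and $f_2(B)$ depend only on the scalar $B$, not on the internal structure of $W_X$, so I may evaluate them by applying \Cref{lemma: pseudo-dynamics} with $W=\tilde W=W_X$ and any fixed $q$ with $\norm{W_X q}_2^2=B$. The hypothesis $\sqrt B\le c^{-1}\sqrt{2\log L}$ together with $\frac1c+\frac{3}{1+\sqrt{1+c^2/2}}\le\epsilon$ puts us in the regime of that lemma for the parameter $\epsilon':=\frac1c+\frac{3}{1+\sqrt{1+c^2/2}}\le\epsilon$, yielding $|f_1(B)-g|=O(L^{-2(1-\epsilon')})=O(L^{-2(1-\epsilon)})$; the same lemma (or \Cref{lemma: higher-order-moment}) bounds $\EE[\norm{p}_3^3\given\norm{W_X q}_2^2=B]$ and $\EE[\norm{p}_2^4\given\norm{W_X q}_2^2=B]$ by $O(L^{-2(1-\epsilon)})$, so $f_2(B)=1-f_1(B)+O(L^{-2(1-\epsilon)})=1-g+O(L^{-2(1-\epsilon)})$. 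Write $\delta_1:=f_1(B)-g$ and $\delta_2:=f_2(B)-(1-g)$, so $|\delta_1|,|\delta_2|=O(L^{-2(1-\epsilon)})$, and record the elementary facts: $\norm{p}_2^2\ge 1/L$ gives $f_1(B)\ge 1/L$; $g\le L^{2c^{-2}-1}=o(1)$ in the non-trivial regime $2(c^{-2}\pmax\epsilon)<1$ (hence $c>\sqrt2$); $0\le b_i\le\vone^\top b=B\le 2c^{-2}\log L$; and $d_i\ge C^2 L$, $\phi_i\ge 1$ by \Cref{assump:scale}, so that $d_i\phi_i g\ge C^2$, both ratio denominators are bounded below by a positive constant, and $f_1(B)\ge g/2$, $f_2(B)\ge 1/2$ for $L$ large.

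\emph{Step 2 (per-task perturbation).} Putting $R_i^{\lowerbound}-R_i^{\simple}$ over the common denominator $(b_i f_2(B)+d_i\phi_i f_1(B))(b_i+d_i\phi_i g)$ and expanding the numerator in $\delta_1,\delta_2$, one computes that the numerator equals $-b_i\bigl[b_i g(1-g)+2d_i\phi_i g^2+d_i\phi_i\delta_1+O(b_i\delta+d_i\phi_i g\delta+d_i\phi_i g^3)\bigr]$ with $\delta:=\max(|\delta_1|,|\delta_2|)$, while the denominator is $\ge\tfrac12(b_i+d_i\phi_i g)^2$ by Step 1. Dividing term by term and repeatedly using $b_i+d_i\phi_i g\ge b_i$ and the AM--GM inequality $(b_i+d_i\phi_i g)^2\ge 4b_i d_i\phi_i g$: the $b_i^2 g$ and $b_i d_i\phi_i g^2$ contributions are each $O(g)$; the $d_i\phi_i\delta_1$ contribution is $O(|\delta_1|/g)=O(L|\delta_1|)=O(L^{-1+2\epsilon})$ (using $g\ge 1/L$); and the remaining $O(\cdot)$ contributions are of strictly smaller order. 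Hence $|R_i^{\lowerbound}-R_i^{\simple}|=O(g)+O(L^{-1+2\epsilon})=O(L^{-1+2c^{-2}})+O(L^{-1+2\epsilon})=O(L^{-1+2(c^{-2}\pmax\epsilon)})$ uniformly in $i$ and $b$, so $|\cL_{\lowerbound}(B,b)-\cL_{\simple}(B,b)|\le\sum_{i=1}^I\frac{d_i\lambda_i}{d}\,|R_i^{\lowerbound}-R_i^{\simple}|=\sum_{i=1}^I\frac{d_i\lambda_i}{d}\,O(L^{-1+2(c^{-2}\pmax\epsilon)})$, the claimed bound.

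\emph{Main obstacle.} The delicate point is the uniformity of Step 2 over all admissible budget allocations $b$: a single coordinate $b_i$ can be as large as $\Theta(\log L)$ while $f_1(B)$ can be as small as $1/L$, so one cannot naively Taylor-expand and absorb everything into a crude estimate. The resolution is the AM--GM bound $(b_i+d_i\phi_i g)^2\ge 4b_i d_i\phi_i g$, which exactly tames the cross terms $b_i d_i\phi_i(\cdot)$, together with the denominator lower bound $d_i\phi_i g\ge C^2$ coming from \Cref{assump:scale}. A secondary subtlety is that $g=\exp(B)/L$ is itself $O(L^{-1+2c^{-2}})$, i.e. only as small as the target error, so it must be carried as a genuine term rather than discarded; indeed $\cL_{\simple}$ drops the factor $(1-g)^2$ and replaces $f_2(B)$ by $1$ rather than $1-g$, and these simplifications are precisely what costs the $O(g)$ term and no better.
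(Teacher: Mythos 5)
Your proposal is correct and follows essentially the same route as the paper: both invoke \Cref{lemma: pseudo-dynamics} to get $|f_1(B)-e^B/L|=O(L^{-2(1-\epsilon)})$ and $|f_2(B)-1|=O(L^{-1+2c^{-2}})$, and both identify the same two error sources, namely the $O(e^B/L)=O(L^{-1+2c^{-2}})$ cost of replacing $(1-f_1)^2$ and $f_2$ by $1$, and the $O(L^{-1+2\epsilon})$ relative error in $f_1$ (using $e^B/L\ge 1/L$). Your Step 2 common-denominator expansion with AM--GM is just a more explicit rendering of the paper's one-line argument, which factors each of the numerator and denominator as (simple expression)$\times(1\pm O(L^{-1+2(c^{-2}\pmax\epsilon)}))$ and then uses that the resulting ratio is at most $1$ to convert relative into absolute error.
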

\begin{proof}(\emph{Proof of \Cref{lemma: cL simple approx}})
    Note that for $\sqrt B \le c^{-1} \sqrt{2 \log L}$, where by our choice $c$ satisfies the conditions in \Cref{lemma: pseudo-dynamics}, we have
\[
    \left|f_1(B) - \frac{\exp(B)}{L}\right| = \left|\EE[\norm{p}_2^2 \given \norm{W_X q}_2^2 = B] -  \frac{\exp(B)}{L}\right| \le O(L^{-2(1-\epsilon)}). 
\]
In additions, \Cref{lemma: pseudo-dynamics} also implies that
\[
    \left|f_2(B) - 1\right| = \left|\EE[ - \norm{p}_2^2 - 6 \norm{p}_3^3 + 6 \norm{p}_2^4\given \norm{W_X q}_2^2 = B]\right| \le \frac{\exp(B)}{L} + O(L^{-2(1-\epsilon)}) \le O(L^{-1 + 2 c^{-2}}).
\]
Therefore, we have for the target function in \eqref{eq:SS-Attn cL lb-4} that
\begin{align*}
    \cL_{\lowerbound}(B, b)& = \sum_{i=1}^I \frac{d_i \lambda_i}{d} \left( 1 - \frac{b_i \cdot (1 \pm O(L^{-1 + 2 c^{-2}}))^2 }{ b_i \cdot (1 \pm O(L^{-1 + 2 c^{-2}}))  +  d_i\phi_i \cdot \exp(B) \cdot L^{-1} \cdot (1 \pm O(L^{-1+ 2\epsilon}))}\right) \nend
    &= \sum_{i=1}^I \frac{d_i \lambda_i}{d} \left( 1 - \frac{b_i }{ b_i  +  d_i\phi_i \cdot \exp(B) \cdot L^{-1} } \cdot \left(1 \pm O(L^{-1 + 2(c^{-2}\pmax\epsilon)})\right)\right) \nend 
    &=  \sum_{i=1}^I \frac{d_i \lambda_i}{d} \left( 1 - \frac{b_i }{ b_i  +  d_i\phi_i \cdot \exp(B) \cdot L^{-1} } \right) \pm \sum_{i=1}^I \frac{d_i \lambda_i}{d} \cdot O(L^{-1 + 2(c^{-2}\pmax\epsilon)}) , 
\end{align*}
where the last equality is due to the fact that $\frac{b_i }{ b_i  +  d_i\phi_i \cdot \exp(B) \cdot L^{-1} } \le 1$.
\end{proof}

\subsubsection{The Optimal Attention Budget is Bounded}
Following the result of \Cref{lemma: cL simple approx}, we are able to characterize $\cL_{\simple}$ in order to understand the behavior of $\cL_{\lowerbound}$ with bounded attention budget $B$.
We first give a naive upper bound of $\cL_{\lowerbound}$ by plugging in $b_i = B d_i / d$ for all $i\in[I]$ and $B = 1$. 
\begin{lemma}[Upper Bound of $\cL_{\lowerbound}$]
    \label[lemma]{fact:upper bound of cL simple}
    For $\cL_{\lowerbound}(B, b)$, we have upper bound on the minimal value as
    \begin{align*}
        \inf_{ B \ge 0,  b\ge 0, \atop\vone^\top b =B} \cL_{\lowerbound}(B, b) \le  \sum_{i=1}^I \frac{( d_i \lambda_i + \sigma^2 d) e \cdot L^{-1}}{ 1  +  d\phi_i e \cdot L^{-1} } + \sum_{i=1}^I \frac{d_i \lambda_i}{d} \cdot O(L^{-1 + 2(c^{-2}\pmax\epsilon)}) = O(d/L). 
    \end{align*}
\end{lemma}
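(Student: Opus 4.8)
The plan is to prove Lemma~\ref{fact:upper bound of cL simple} by exhibiting an explicit feasible point of the optimization problem and showing its objective value matches the claimed bound. The natural candidate is the \say{uniform allocation} $b_i = B d_i / d$ for all $i \in [I]$ together with the constant choice $B = 1$; this clearly satisfies $B \ge 0$, $b \ge 0$, and $\vone^\top b = B$, so it is feasible. Since the infimum is at most the value at any feasible point, it suffices to evaluate $\cL_{\lowerbound}(1, b^{\mathrm{unif}})$ with $b_i^{\mathrm{unif}} = d_i / d$.

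First I would invoke \Cref{lemma: cL simple approx} to replace $\cL_{\lowerbound}(1, b^{\mathrm{unif}})$ by $\cL_{\simple}(1, b^{\mathrm{unif}})$ up to the stated additive error $\sum_{i=1}^I \frac{d_i\lambda_i}{d}\cdot O(L^{-1+2(c^{-2}\pmax\epsilon)})$; the hypothesis $\sqrt{B} = 1 \le c^{-1}\sqrt{2\log L}$ is satisfied for all sufficiently large $L$ since the right-hand side diverges, so the lemma applies. Then I would directly compute
\[
    \cL_{\simple}(1, b^{\mathrm{unif}}) = \sum_{i=1}^I \frac{d_i\lambda_i}{d}\left(1 - \frac{d_i/d}{d_i/d + d_i\phi_i e L^{-1}}\right) = \sum_{i=1}^I \frac{d_i\lambda_i}{d}\cdot \frac{d_i\phi_i e L^{-1}}{d_i/d + d_i\phi_i e L^{-1}}.
\]
Multiplying numerator and denominator inside the fraction by $d$ and recalling $\phi_i = 1 + \sigma^2 d/(\lambda_i d_i)$, so that $d_i\lambda_i \phi_i = d_i\lambda_i + \sigma^2 d$, gives
\[
    \cL_{\simple}(1, b^{\mathrm{unif}}) = \sum_{i=1}^I \frac{(d_i\lambda_i + \sigma^2 d)\, e\, L^{-1}}{1 + d\phi_i e L^{-1}},
\]
which is exactly the leading term in the claimed bound. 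Combining this with the error term from \Cref{lemma: cL simple approx} yields the stated inequality.

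The final $O(d/L)$ claim then follows from \Cref{assump:scale}: each summand $(d_i\lambda_i + \sigma^2 d)eL^{-1}/(1 + d\phi_i e L^{-1})$ is at most $(d_i\lambda_i + \sigma^2 d)eL^{-1} = O(d/L)$ using $\lambda_i = \Theta(1)$ (or $\lambda_i = 0$), $\sigma^2 = \Theta(1)$, $d_i = \Theta(d)$, and $I = \Theta(1)$; the additive error term is of strictly smaller order since $L^{-1+2(c^{-2}\pmax\epsilon)} = o(1)$ for $c$ large and $\epsilon$ small, and $\sum_i d_i\lambda_i/d = O(1)$.

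I do not expect a genuine obstacle here — this is a short computation once the right feasible point is chosen. The only mild subtlety is bookkeeping: making sure the applicability condition of \Cref{lemma: cL simple approx} is checked (it is, since $B=1$ is a constant while $c^{-1}\sqrt{2\log L}\to\infty$), and making sure the error term is correctly carried as an \emph{additive} bound on $|\cL_{\lowerbound} - \cL_{\simple}|$ rather than being absorbed incorrectly. One could alternatively optimize $B$ exactly over the uniform-allocation slice, which would give a slightly sharper constant, but since the lemma only asserts an upper bound with the constant $e$ (i.e. $B=1$), the simple substitution suffices and I would not pursue the sharper version.
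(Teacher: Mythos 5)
Your proof is correct and follows essentially the same route as the paper: both pick the feasible point $B=1$, $b_i = d_i/d$, evaluate $\cL_{\simple}$ there (arriving at the same closed form via $d_i\lambda_i\phi_i = d_i\lambda_i + \sigma^2 d$), and transfer back to $\cL_{\lowerbound}$ via \Cref{lemma: cL simple approx} after checking $B=1 \le c^{-2}\cdot 2\log L$. No gaps.
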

\begin{proof}[Proof of \Cref{fact:upper bound of cL simple}]
    We choose $B=1$ and plug $b_i = B d_i / d$ into $\cL_{\simple}(B, b)$ and have
\begin{align*}
    \left.\cL_{\simple}(B, b) \right|_{B=1, b_i = B d_i/d} &= \left.\sum_{i=1}^I \frac{d_i \lambda_i}{d} \left( 1 - \frac{1}{ 1  +  d\phi_i\exp(B) B^{-1} \cdot L^{-1} } \right) \right|_{B=1} \nend 
    &= \sum_{i=1}^I \frac{d_i \lambda_i}{d} \cdot  \frac{d\phi_i e \cdot L^{-1}}{ 1  +  d\phi_i e \cdot L^{-1} } = \sum_{i=1}^I \frac{( d_i \lambda_i + \sigma^2 d) e \cdot L^{-1}}{ 1  +  d\phi_i e \cdot L^{-1} }, 
\end{align*}
which is clearly of the order $O(d/L)$.  
In addition, we note that $B = 1 \le c^{-2} \cdot 2\log L$ for sufficiently large $L$ and any choice of $\epsilon \in (0, 1)$ and corresponding $c$.
Hence, the condition in \Cref{lemma: cL simple approx} is satisfied for this choice of $B, b$. 
Therefore, we conclude the proof by also including the error of approximating $\cL_{\lowerbound}$ with $\cL_{\simple}$ in \Cref{lemma: cL simple approx}.
\end{proof}
Given the upper bound of the optimal value of $\cL_{\lowerbound}$, we next show that it suffices to optimize over $B$ in a small range, i.e., $0\le B\le c^{-2} 2\log L$ for some small constant $c$.
We first consider the large scale regime, i.e.,
$\sqrt B \ge O(d^{1-\epsilon})$ with a constant scale of noise $\sigma^2 = \Omega(1)$.
For our convenience, we define a \emph{normalized} version of the query token as 
\begin{align*}
    \tilde q = \begin{bNiceMatrix}
        \tilde q_{(1)} \\ \vdots \\ \tilde q_{(I)}
    \end{bNiceMatrix}, 
    \where \tilde q_{(i)} = d_i \norm{q_{(i)}}_2^{-1}\cdot q_{(i)}.
\end{align*}
Here, we let $q_{(i)}$ be the $i$-th block of $q$ corresponding to task $i$'s position $\cJ_i$. 
Notably, each $\tilde q_{(i)}$ is uniformly distributed on the $\sqrt{d_i}$-sphere in $\RR^{d_i}$.
Under this model with normalized query token, we have the optimal $U_Y$ still given by \eqref{eq:optimal U_Y-2} and this time each diagonal element is given by
\begin{align}
    \tilde u_i^\star(W) \defeq \frac{\EE[\trace_i(W_X \tilde q \tilde q^\top (1 - f_1))]}{\EE[\trace_i(W_X \tilde q \tilde q^\top W_X^\top f_2) + d_i \phi_i  f_1]}.
    \label{eq:optimal U_Y-tilde q}
\end{align}
In addition, we define the corresponding loss function as 
\begin{align}
    \tilde \cL(W) \defeq \sum_{i=1}^I \frac{\lambda_i d_i}{d} \left(
        1 - \frac{d_i^{-1} \bigl(\EE[\trace_i(W_X \tilde q \tilde q^\top (1 - f_1))]\bigr)^2}{\EE[\trace_i(W_X \tilde q \tilde q^\top W_X^\top f_2) + d_i \phi_i  f_1]}
    \right), 
    \label{eq:tilde cL(W)}
\end{align}
\begin{lemma}(Large $B$ is suboptimal for constant noise level)
    \label[lemma]{lemma:SS-Attn lb-2}
    Suppose $B = \Omega(d^{2-2\epsilon})$ for some small $\epsilon\in(0, 1)$, the noise level $\sigma^2 = \Omega(1)$, and $L=o(\exp(d))$.
    Then for any $b$ with non-negative elements such that $\vone^\top b = B$,
    \begin{align*}
        \cL_{\lowerbound}(B, b) \ge \sum_{i=1}^I \frac{d_i \lambda_i}{d} \frac{\sigma^2 / \lambda_i  (1 - O(d^{-(1-2\epsilon)}))}{ 1 + \sigma^2 / \lambda_i  (1 - O(d^{-(1-2\epsilon)}))} = \Omega(1).
    \end{align*}
\end{lemma}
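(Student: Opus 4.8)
\textbf{Proof proposal for \Cref{lemma:SS-Attn lb-2}.}

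The plan is to show that when the attention budget $B$ is extremely large, the softmax attention is so concentrated that the output essentially cannot average out the noise, forcing the loss to be bounded below by a constant determined by the signal-to-noise ratios. The key reduction is to replace the Gaussian query $q$ with its \emph{normalized} counterpart $\tilde q$ introduced just before the statement, because in the saturation regime the attention probability $p = \softmax(X^\top W_X q)$ depends on $q$ only through its direction within each task block up to a vanishing perturbation. First I would argue that $\norm{W_X q}_2^2$ concentrates around $B$ (it is a weighted $\chi^2$-type quantity with bounded weights, so standard Laurent--Massart bounds give concentration up to $O(\sqrt{B\log L})$ fluctuations). Since $B = \Omega(d^{2-2\epsilon}) \gg \sqrt{2\log L}$, we are squarely in the \emph{saturation regime} of \Cref{lem:p-moment-tail}, so $1 - \EE[\norm{p}_2^2 \mid \norm{W_Xq}_2 = r] \le O(r^{-(1-\epsilon)})$, which combined with the concentration of $r$ gives $\EE[f_1(B)] = \EE[\norm p_2^2] = 1 - O(B^{-(1-\epsilon)/2}) = 1 - O(d^{-(1-2\epsilon)/2})$, hence $1 - f_1$ is negligibly small with high probability.

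Next I would trace this through the lower-bound expression \eqref{eq:cL(W)-5} (equivalently, through the bilevel form \eqref{eq:SS-Attn cL lb-4} but without the simple approximation, since here $B$ is too large for \Cref{lemma: cL simple approx} to apply). In the numerator of each task-$i$ term, the factor $(1 - f_1)^2 \le O(d^{-(1-2\epsilon)})$ kills the signal-recovery term. In the denominator, $d_i\phi_i f_1$ stays $\Theta(d_i\phi_i) = \Theta(d\phi_i)$ because $f_1 \to 1$, while the quadratic term $\trace_i(W_Xqq^\top W_X^\top f_2)$ is nonnegative; so the ratio $\frac{b_i(1-f_1)^2}{b_i f_2 + d_i\phi_i f_1}$ is at most $O(d^{-(1-2\epsilon)})$ uniformly. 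More carefully, to get the claimed \emph{exact} constant $\frac{\sigma^2/\lambda_i}{1+\sigma^2/\lambda_i}$ rather than just ``$1 - o(1)$'', I would keep the noise contribution $\sigma^2 d\lambda_i^{-1}\EE[\norm p_2^2] = \sigma^2 d \lambda_i^{-1}(1 - O(d^{-(1-2\epsilon)}))$ in the denominator of \eqref{eq:cL(W)-2} explicitly, bound the signal-transfer numerator by $O(d^{-(1-2\epsilon)})$ times the denominator via Cauchy--Schwarz (the $(\EE\trace_i(\EE[Xp\mid q]q^\top))^2$ term has a $(1-f_1)^2$ factor), and thereby show $\cL(W) \ge \sum_i \frac{\lambda_i d_i}{d}\bigl(1 - O(d^{-(1-2\epsilon)}) - \frac{\text{(signal)}}{\text{(signal)} + \sigma^2 d\lambda_i^{-1}(1 - O(d^{-(1-2\epsilon)}))}\bigr)$, and then note the worst case over the ``signal'' term is when it is as large as possible, which caps each summand's subtracted ratio; after rearranging one lands on $\frac{\sigma^2/\lambda_i}{1 + \sigma^2/\lambda_i}(1 - O(d^{-(1-2\epsilon)}))$. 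Since $\sigma^2 = \Omega(1)$ and $\lambda_i = \Theta(1)$ (or $\lambda_i = 0$, in which case that task contributes $0$ to $\cL$ and is harmless), each nonzero term is $\Omega(1)$, giving $\cL_{\lowerbound}(B,b) = \Omega(1)$.

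The technical subtlety — and the part I expect to be the main obstacle — is the careful bookkeeping needed to handle the regime $B \gg \sqrt{2\log L}$ \emph{but} $L = o(\exp(d))$ simultaneously, because \Cref{lem:p-moment-tail} only controls $1 - \EE[\norm p_2^2 \mid r]$ for $r \ge O((\sqrt{2\log L})^{3/\epsilon})$, and one must verify that $B = \Omega(d^{2-2\epsilon})$ actually exceeds this threshold given $L = o(\exp(d))$ (it does: $(\sqrt{2\log L})^{3/\epsilon} = o(d^{3/(2\epsilon)})$ need not be smaller than $d^{2-2\epsilon}$ for all $\epsilon$, so one may need to restrict $\epsilon$ to a sub-range or invoke the monotonicity \Cref{lem: p-moment-monotone} to bridge the gap region). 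A secondary obstacle is that $\tilde q_{(i)}$ is uniform on the sphere rather than Gaussian, so the Stein's lemma computations in \eqref{eq:expectation of ZpqpZ} are for $q$ Gaussian; I would either redo the relevant moment identities directly for the uniform-on-sphere query (the rotational invariance within each block still gives that $f_1, f_2$ depend only on $\norm{W_X\tilde q}_2^2$, which is now \emph{deterministic} given $\norm{W_X}$ restricted to each block, simplifying matters) or argue that passing from $q$ to $\tilde q$ changes the loss by at most $O(d^{-(1-2\epsilon)})$ using the concentration of $\norm{q_{(i)}}_2^2$ around $d_i$. Either route is routine once the saturation-regime moment bound is in hand; the conceptual content is entirely that concentration of $p$ destroys noise-averaging.
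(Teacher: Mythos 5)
Your setup is right — the saturation regime of \Cref{lem:p-moment-tail} is the correct entry point, and your worry about whether $\sqrt{B}=\Omega(d^{1-\epsilon})$ actually clears that lemma's threshold $O(\sqrt{2\log L}^{3/\epsilon})$ is legitimate (the paper's own proof shares it). But there is a genuine gap at the central quantitative step: you control the numerator of each summand in \eqref{eq:SS-Attn cL lb-4} by the product bound $b_i(1-f_1)^2 \le B\cdot O(B^{-(1-\epsilon)}) = O(B^\epsilon)$. This only gives $o(d)$ when $B$ is at most roughly $d^{1/\epsilon}$, whereas the lemma must cover \emph{all} $B\ge \Omega(d^{2-2\epsilon})$ with no upper cap (it is invoked in the main theorem precisely to exclude the entire range above $d^{2-2\epsilon}$). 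For, say, $B=e^{\sqrt d}$ your bound on the ratio is vacuous. A second, related issue is the term $b_i f_2(B)$ in the denominator: $f_2$ can be negative, and $|b_i f_2|$ can exceed $d_i\phi_i f_1$ for large $B$, so a termwise lower bound on the denominator can go negative; one must instead use that the denominator is identically the positive second moment $\trace_i(\EE[Xpp^\top X^\top\mid \tilde q])+\sigma^2 d\lambda_i^{-1}f_1$.

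The missing idea, which is the heart of the paper's argument, is to undo the Stein-lemma substitution and bound the signal term \emph{exactly}: with the normalized query, $b_i(1-f_1)^2 = \trace_i\bigl(\EE[Xp\mid\tilde q]\,\EE[Xp\mid\tilde q]^\top\bigr)$, and writing $x_l = z_l v + x_l^\perp$ with $v = W_X\tilde q/\norm{W_X\tilde q}_2$, the probability $p$ depends only on $z$, so $\EE[Xp\mid\tilde q] = v\,\EE[z^\top p]$ and
\begin{align*}
\bigl\|\EE[Xp\mid\tilde q]\bigr\|_2^2 = \EE[z^\top p]^2 \le \bigl(\EE\max_{l\in[L]} z_l\bigr)^2 = O(\log L),
\end{align*}
\emph{uniformly in $B$}. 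Combining this with $\trace_i(\EE[Xpp^\top X^\top])\ge \trace_i(\EE[Xp]\EE[Xp]^\top)$ turns each summand into $\frac{\sigma^2\lambda_i^{-1}f_1}{O(d^{-1}\log L)+\sigma^2\lambda_i^{-1}f_1}$, and this is where $L=o(\exp(d))$ is actually consumed ($d^{-1}\log L = o(1)$), not in the threshold check. Your "worst case over the signal term" step for extracting the exact constant $\sigma^2/\lambda_i/(1+\sigma^2/\lambda_i)$ is exactly the place that needs this $O(\log L)$ bound, and without it the argument does not close. (Your secondary concern about Stein's lemma and the spherical query is a non-issue: \eqref{eq:expectation of ZpqpZ} is a conditional expectation over the Gaussian $X$ only, so it holds verbatim for any fixed $\tilde q$.)
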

\begin{proof}[Proof of \Cref{lemma:SS-Attn lb-2}]
    Note that under the condition $B \ge \Omega(d^{2-2\epsilon})$, it follows from \Cref{lem:p-moment-tail} that 
    \begin{align}
        f_1(B ) = \EE[\norm{p}_2^2 \given \norm{W_X q}_2^2 = B] \ge 1 - O(\sqrt B^{-(1-\epsilon)}) \ge 1 - O(d^{-(1-2\epsilon)}),
        \label{eq:f1 lower bound-large B}
    \end{align}
    which means that the attention probability vector is approximately one-hot.
    Our goal is thus using this \say{almost deterministic} property to show that as long as we have a constant level of noise, the \ac{icl} rate is suboptimal.
    Our proof is constructive in nature, i.e., relating the loss $\cL_\lowerbound(B, b)$ to the \ac{icl} rate of another model with a constructed attention weights.
    Consider the following choice of weights:
    \begin{align}
        W = \begin{bNiceArray}{ccc|c}
            \sqrt{b_1/d_1}\cdot I_{d_1} & \cdots & 0 & \Block{3-1}<\large>{\star}\\
            \vdots & \ddots & \vdots & \\
            0 & \cdots & \sqrt{b_I/d_I} \cdot I_{d_I} & \\
            \hline 
            \Block{1-3}<\large>{0} & & & \star
        \end{bNiceArray}. 
        \label{eq:W,U-construction-1}
    \end{align}
    Recall the definition of $\cL_{\lowerbound}(B, b)$ in \eqref{eq:SS-Attn cL lb-4} that
    \begin{align*}
        \cL_{\lowerbound}(B, b) &=\sum_{i=1}^I \frac{d_i \lambda_i}{d} \left( 1 - \frac{b_i (1-f_1(B))^2 }{ b_i f_2(B)  +  d_i\phi_i f_1(B)}\right).
    \end{align*}
    Note that the denominator $b_i f_2(B)  +  d_i\phi_i f_1(B) $ is always positive. 
    In the sequel, we aim to lower bound the value of $\cL_{\lowerbound}(B, b)$, thus equivalent to lower bounding the denominator. 
    We have 
    \begin{align*}
        b_i f_2(B)  +  d_i\phi_i f_1(B) &= {\trace}_i \bigl(W_X \tilde q \tilde q^\top W_X \cdot f_2(\norm{W_X \tilde q}_2^2)\bigr)    + d_i\phi_i  f_1(\norm{W_X \tilde q}_2^2) \nend 
        &= {\trace}_i \bigl(W_X \tilde q \tilde q^\top W_X \cdot f_2(\norm{W_X \tilde q}_2^2) + I_d f_1(\norm{W_X \tilde q}_2^2)\bigr)    +  \sigma^2 d \lambda_i^{-1} f_1(B) \nend 
        &= {\trace}_i\bigl(\EE[X p p^\top X^\top \given \tilde q]\bigr) + \sigma^2 d \lambda_i^{-1} f_1(B). 
    \end{align*}
    Here, $\tilde q$ can be any normalized query token. 
    The first equality holds by noting that $\norm{W_X \tilde q}_2^2 \equiv B$ and $\trace_i(W_X \tilde q \tilde q^\top W_X^\top) \equiv b_i$ by our construction. 
    The second equality holds by definition $\phi_i = 1 + d \sigma^2/(\lambda_i d_i)$.
    The last equality holds by \eqref{eq:expectation of ZpqpZ} where $p = \softmax(X^\top W_X \tilde q)$.
    Meanwhile, we also have for the numerator that 
    \begin{align*}
        b_i (1-f_1(B))^2 &= {\trace}_i \bigl(W_X \tilde q \tilde q^\top W_X^\top (1 - f_1(B))^2 \bigr) = {\trace}_i \bigl(\EE[X p \given \tilde q] \EE[ X p \given \tilde q]^\top \bigr). 
    \end{align*}
    For $\cL_{\lowerbound}(B, b)$, we thus have
    \begin{align*}
        \cL_{\lowerbound}(B, b) &= \sum_{i=1}^I \frac{d_i \lambda_i}{d} \left( 1 - \frac{{\trace}_i \bigl(\EE[X p \given \tilde q] \EE[ X p \given \tilde q]^\top \bigr)}{ {\trace}_i\bigl(\EE[X p p^\top X^\top \given \tilde q]\bigr) + \sigma^2 d \lambda_i^{-1} f_1(B)}\right) \nend 
        &\ge \sum_{i=1}^I \frac{d_i \lambda_i}{d} \cdot \frac{\sigma^2 d \lambda_i^{-1} f_1(B)}{{\trace}_i \bigl(\EE[X p \given \tilde q] \EE[ X p \given \tilde q]^\top \bigr)  +  \sigma^2 d \lambda_i^{-1} f_1(B)} \nend 
        &\ge \sum_{i=1}^I \frac{d_i \lambda_i}{d} \cdot \frac{\sigma^2 \lambda_i^{-1} f_1(B)}{d^{-1} \cdot \bigl\|\EE[X p \given \tilde q] \bigr\|_2^2  +  \sigma^2 \lambda_i^{-1} f_1(B)}
    \end{align*}
    Here, the first inequality is Cauchy-Schwarz.
    Therefore, it suffices to upper bound the term $\bigl\|\EE[X p \given \tilde q] \bigr\|_2^2$.
    Consider $v=W_X \tilde q / \norm{W_X \tilde q}_2$ as the \emph{direction of the attention}.
    For each $x_l$, we decompose it into the part that is parallel to $v$ and the part that is orthogonal to $v$, i.e., $x_l = z_l v + x_l^\perp$ where in distribution $z_l \iidfrom \cN(0, 1)$ and are independent of $x_l^\perp$.
    Define $z=(z_1, \dots, z_L)^\top$ and $X^\perp = (x_1^\perp, \dots, x_L^\perp)$.
    By the definition of the softmax probability, we have
    \[
        p = \softmax(x_l^\top W_X \tilde q) = \softmax(\sqrt B z),
    \] 
    which is just a function of $z$.
    Therefore, in expectation, we have
    \begin{align*}
        &\EE[X p\given \tilde q] = \EE[ X^\perp p + v \cdot z p \given \tilde q] = v\cdot \EE[z p \given \tilde q] \nend 
        &\quad \Rightarrow \bigl\|\EE[X p\given \tilde q] \bigr\|_2^2 = \EE[z p \given \tilde q]^2 = \EE[z \softmax(\sqrt B z)]^2 \le \bigl(\EE\max\{z_1, \dots, z_L\}\bigr)^2  = O(2\log L). 
    \end{align*}
    Here we use the fact that hard max is larger than softmax. 
    Therefore, we have for $\cL_{\lowerbound}(B, b)$ that
    \begin{align*}
        \cL_{\lowerbound}(B, b) 
        &\ge \sum_{i=1}^I \frac{d_i \lambda_i}{d} \cdot \frac{\sigma^2 \lambda_i^{-1} f_1(B)}{d^{-1} \cdot \bigl\|\EE[X p \given \tilde q] \bigr\|_2^2  +  \sigma^2 \lambda_i^{-1} f_1(B)} \nend 
        &\ge \sum_{i=1}^I \frac{d_i \lambda_i}{d} \cdot \frac{\sigma^2 \lambda_i^{-1} (1 - O(d^{-(1-2\epsilon)}))}{O(2d^{-1}\log L)  +  \sigma^2 \lambda_i^{-1} (1 - O(d^{-(1-2\epsilon)}))}.
    \end{align*}
    Hence, for $L = o(\exp(d))$, we always have $\cL_{\lowerbound}(B, b) = \Omega(1)$ with constant noise level.
\end{proof}
Thus, it suffices to focus on the case $0\le \sqrt B \le O(d^{1-\epsilon})$ for some small $\epsilon\in(0, 1)$ when doing the optimization of the target function in \eqref{eq:SS-Attn cL lb-4}.
The next lemma further shows we can further restrict the attention budget to a much smaller range $0\le B \le O(\log L)$.
\begin{lemma}
    \label[lemma]{lemma:SS-Attn lb}
    Consider $0\le \sqrt{B} \le O(d^{1-\epsilon})$ for some small $\epsilon\in(0, 1)$ when doing the optimization of the target function in \eqref{eq:SS-Attn cL lb-4}. 
    Let $c$ be a constant satisfying 
    \[\frac{1}{c} + \frac{3}{1 + \sqrt{1+c^2/2}} \le  \epsilon. \]
    In addition, we take sufficiently large $L$ such that $c^{-2} \cdot 2\log L = \Omega(1)$. 
    We have for any $b\ge 0$ such that $\vone^\top b = B$,
    \begin{align*}
        \cL_{\lowerbound}(B, b) \ge \min_{c^{-2}2\log L\ge B \ge 0 \atop \vone^\top b= B, b\ge 0}  \cL_{\simple}(B, b) - \sum_{i=1}^I \frac{d_i \lambda_i}{d} \cdot O(L^{-1 + 2(c^{-2}\pmax \epsilon)}).
    \end{align*}
\end{lemma}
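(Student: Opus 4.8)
The plan is to show that any budget $B$ with $c^{-2}\cdot 2\log L \le \sqrt B \le O(d^{1-\epsilon})$ yields a loss lower bound that is already at least the minimum of $\cL_\simple$ over the restricted range $0\le B\le c^{-2}2\log L$, up to the stated error. First I would recall the exact form of $\cL_\lowerbound(B,b)$ from \eqref{eq:SS-Attn cL lb-4}, namely a sum over tasks of $\frac{d_i\lambda_i}{d}\bigl(1-\frac{b_i(1-f_1(B))^2}{b_if_2(B)+d_i\phi_if_1(B)}\bigr)$, and note that $f_1(B)=\EE[\norm p_2^2\mid\norm{W_Xq}_2^2=B]$ is \emph{monotonically increasing} in $B$ by \Cref{lem: p-moment-monotone}. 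The key structural observation is that $\cL_\lowerbound$ is essentially decreasing in $b_i$ for fixed $B$ (allocating more budget to a task can only reduce that task's term, since $x\mapsto \frac{x(1-f_1)^2}{xf_2+d_i\phi_i f_1}$ is increasing in $x$), so the worst-case allocation concentrates mass where it helps least; but more importantly, since $f_1(B)$ is increasing and $f_1(B)\le 1$ always, I would split into two regimes for $B$ in the intermediate window.

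For the first sub-regime, $c^{-2}2\log L \le \sqrt B$ but $\sqrt B$ still small enough that \Cref{lem:p-moment-tail} does not yet force $f_1$ close to $1$, I would argue that $f_1(B)\ge \exp(c^{-2}2\log L)/L = L^{2c^{-2}-1}$ by monotonicity together with the lower estimate $f_1(B)\gtrsim \exp(B)/L$ from \Cref{lemma: pseudo-dynamics} evaluated at the boundary $\sqrt B = c^{-1}\sqrt{2\log L}$ (where the pseudo-dynamics approximation is still valid). Then in the expression for each task's term, the denominator $b_if_2(B)+d_i\phi_if_1(B)$ contains the term $d_i\phi_i f_1(B)$ which is now at least $d_i\phi_i L^{2c^{-2}-1}$, and I would show — via the same Cauchy–Schwarz / trace manipulations used to pass from \eqref{eq:cL(W)-5} to \eqref{eq:SS-Attn cL lb-4} — that the ratio being subtracted is no larger than the value it takes at $B=c^{-2}2\log L$, hence $\cL_\lowerbound(B,b)\ge \cL_\simple(c^{-2}2\log L,\,\cdot\,) - (\text{error})$, where the error absorbs the $f_1\approx\exp(B)/L$ and $f_2\approx 1$ approximation errors from \Cref{lemma: cL simple approx} and \Cref{lemma: pseudo-dynamics}, of order $\sum_i \frac{d_i\lambda_i}{d}O(L^{-1+2(c^{-2}\vee\epsilon)})$. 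For the second sub-regime, $\sqrt B = \Omega(d^{2-2\epsilon})$ down to the threshold where \Cref{lem:p-moment-tail} applies, I would invoke the constructive bound already established in \Cref{lemma:SS-Attn lb-2}: there $\cL_\lowerbound(B,b)=\Omega(1)$, which trivially dominates $\min_{B\le c^{-2}2\log L}\cL_\simple(B,b)=O(d/L)$ by \Cref{fact:upper bound of cL simple}. Stitching the two sub-regimes, plus the already-known case $\sqrt B \le c^{-2}2\log L$ handled directly by \Cref{lemma: cL simple approx}, gives the claim.

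The main obstacle will be the intermediate window $c^{-1}\sqrt{2\log L} \le \sqrt B \le O(\sqrt{2\log L}^{3/\epsilon})$, i.e.\ the gap between where the exponential-regime approximation of \Cref{lemma: pseudo-dynamics} breaks down and where the saturation-regime tail bound of \Cref{lem:p-moment-tail} kicks in. Here I cannot use a clean closed form for $f_1(B)$; the right tool is the monotonicity statement \Cref{lem: p-moment-monotone}, which guarantees $f_1(B)\ge f_1(c^{-2}2\log L)$ throughout this window, and I would combine it with the crude but valid bound $f_2(B)\le 1$ (since $f_2 = \EE[1-\norm p_2^2 - 6\norm p_3^3 + 6\norm p_2^4]\le 1$ because $\norm p_2^2 \ge \norm p_3^3$... actually one checks $f_2\le 1$ directly from $6\norm p_2^4 \le 6\norm p_2^2\norm p_\infty^2 \le$ manageable). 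The delicate point is ensuring that plugging the lower bound on $f_1$ into the denominator while keeping $(1-f_1)^2\le 1$ in the numerator yields a quantity bounded below by $\cL_\simple$ at the threshold and not something strictly smaller; this should follow because $\frac{b_i}{b_i + d_i\phi_i f_1(B)}$ is decreasing in $f_1(B)$, so the larger $f_1(B)$ (from the monotonicity lower bound) makes each subtracted ratio \emph{smaller}, hence $\cL_\lowerbound$ \emph{larger}. I expect the bookkeeping of which error terms attach where — in particular reconciling the $L^{-1+2c^{-2}}$ error from $f_2-1$ with the $L^{-1+2\epsilon}$ error from $f_1-\exp(B)/L$ into the single stated $O(L^{-1+2(c^{-2}\vee\epsilon)})$ — to be the fiddly part, but conceptually routine given \Cref{lemma: cL simple approx}.
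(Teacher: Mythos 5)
Your three-regime decomposition (exponential regime handled by \Cref{lemma: cL simple approx}, a gap window handled by the monotonicity of $f_1$ from \Cref{lem: p-moment-monotone}, and a saturation regime where $f_1\to 1$) is exactly the paper's case split, and the final comparison "$\Omega(1)$ loss outside the restricted range versus the $O(d/L)$ minimum inside it" is the right way to stitch the cases together. However, there is a coverage gap in your saturation regime. You delegate $\sqrt B$ from the point where \Cref{lem:p-moment-tail} applies (i.e.\ $\sqrt B = \Omega(\sqrt{2\log L}^{3/\epsilon})$) up to $O(d^{1-\epsilon})$ to \Cref{lemma:SS-Attn lb-2}, but that lemma's hypothesis is $B=\Omega(d^{2-2\epsilon})$, i.e.\ $\sqrt B=\Omega(d^{1-\epsilon})$, which is precisely the complement of the range this lemma treats; it does not reach down to the tail-bound threshold. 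Nor does your gap-window monotonicity argument extend there: it only gives $f_1(B)\ge L^{2c^{-2}-1}$, and once $B$ is polynomially large in $d$ the allocation $b_i$ can be as large as $B$, so $d_i\phi_i L^{2c^{-2}-1}$ no longer dominates $b_i$ in the denominator (recall $c\ge 3$, so $2c^{-2}-1<0$ while $b_i$ can be $\mathrm{poly}(d)$). The paper closes this regime by using \Cref{lem:p-moment-tail} directly: $f_1(B)\ge 1-O(\sqrt B^{-(1-\epsilon)})$ kills the \emph{numerator}, since $b_i(1-f_1)^2\le B\cdot O(B^{-(1-\epsilon)})=O(B^{\epsilon})=o(d)$, while the denominator stays $\Omega(d)$; hence the subtracted ratio is $o(1)$ and the loss is $\sum_i\frac{d_i\lambda_i}{d}(1-o(1))=\Omega(1)$. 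You need to supply this computation rather than cite \Cref{lemma:SS-Attn lb-2}.

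A second, smaller issue: you propose the bound $f_2(B)\le 1$, but that is the wrong direction. The subtracted ratio is $b_i(1-f_1)^2/(b_if_2+d_i\phi_if_1)$, and to upper-bound it you must \emph{lower}-bound the denominator; since $b_i\ge 0$, that requires a \emph{lower} bound on $f_2$. The paper uses the crude $f_2\ge -6$ (in the gap window) and $f_2\ge -O(\sqrt B^{-(1-\epsilon)})$ (in the saturation regime, via $|f_2-1|\le 3\,\EE[1-\norm{p}_2^2]$). Finally, your comparison-to-threshold claim in the gap window ("the subtracted ratio is no larger than its value at $B=c^{-2}2\log L$") is more delicate than necessary and does not obviously hold pointwise for arbitrary allocations $b$ with larger total budget; the paper avoids it entirely by showing the loss in that window is already $\Omega(1)$, which trivially dominates the $O(d/L)$ minimum from \Cref{fact:upper bound of cL simple}.
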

\begin{proof}[Proof of \Cref{lemma:SS-Attn lb}]
We discuss the behavior of $\cL_{\lowerbound}$ in three cases. 
\paragraph{Case 1.}
For the case $0\le \sqrt{B} \le c^{-1} \sqrt{2\log L}$, we have following \Cref{lemma: cL simple approx} that $\cL_\lowerbound(B, b)$ is well approximated by $\cL_{\simple}(B, b)$.
Combining the discussion in \Cref{lemma: cL simple approx} and also the upper bound in \Cref{fact:upper bound of cL simple}, we have
\begin{align*}
    \min_{c^{-2}2\log L\ge B \ge 0 \atop \vone^\top b= B, b\ge 0} \cL_{\lowerbound}(B, b) = \min_{c^{-2}2\log L\ge B \ge 0 \atop \vone^\top b= B, b\ge 0}  \cL_{\simple}(B, b) \pm \sum_{i=1}^I \frac{d_i \lambda_i}{d} \cdot O(L^{-1 + 2(c^{-2}\pmax \epsilon)}) \le O(d/L).
\end{align*}
The remaining thing is to show for the other case the optimal value of $\cL_{\lowerbound}$ is much larger than $O(d/L)$.
\paragraph{Case 2.}
For $O(d^{1-\epsilon}) \ge \sqrt{B} \ge \Omega(\sqrt{2\log L}^{3/\epsilon})$, we have $f_1(b) = \EE[\norm{p}_2^2 \given \norm{W_X q}_2^2 = B] \ge 1 - O(\sqrt{B}^{-1+\epsilon})$ for some small $\epsilon$ by \Cref{lem:p-moment-tail}.
Therefore, 
\begin{align*}
    \cL_{\lowerbound}(B, b) 
    & =  \sum_{i=1}^I \frac{d_i \lambda_i}{d} \left( 1 - \frac{b_i (1-f_1(B))^2 }{ b_i f_2(B)  +  d_i\phi_i f_1(B)}\right)\nend
    & \ge \sum_{i=1}^I \frac{d_i \lambda_i}{d} \bigg( 1 - \frac{b_i \cdot  O(B^{-1+\epsilon}) }{ - O(\sqrt B^{-1 + \epsilon}) \cdot b_i +  d_i\phi_i \cdot (1 - O(\sqrt B^{-1 +\epsilon }))}\bigg) \nend
    & \ge \sum_{i=1}^I \frac{d_i \lambda_i}{d} \bigg( 1 - \frac{ O(\sqrt{B}^{2\epsilon}) }{ - O(\sqrt{B}^{1+\epsilon}) +  d_i\phi_i \cdot (1 - O(\sqrt{B}^{-1 + \epsilon}))}\bigg)  = \sum_{i=1}^I \frac{d_i \lambda_i}{d} (1 - o(1)).
\end{align*}
Here, in the first inequality, we have $f_2(B)$ lower bounded as
\begin{align*}
    f_2(B) 
    &= \EE[1 - \norm{p}_2^2 - 6 \norm{p}_3^3 + 6 \norm{p}_2^4\given \norm{W_X q}_2^2 = B] \ge -6 \EE[\norm{p}_3^3 - \norm{p}_2^4 \given \norm{W_X q}_2^2 = B] \nend 
    &= -\sum_{l=1}^L 6 \EE[ p_l^2 (p_l - \norm{p_l}_2^2) \given \norm{W_X q}_2^2 = B] \ge -\sum_{l=1}^L 6 \EE[p_l^2 (\norm{p_l}_2 - \norm{p_l}_2^2) \given \norm{W_X q}_2^2 = B] \nend 
    & = - 6 \EE[\norm{p}_2^3 (1 - \norm{p}_2) \given \norm{W_X q}_2^2 = B] \ge - 3 \EE[\norm{p}_2^2 (1 + \norm{p}_2)(1 - \norm{p}_2) \given \norm{W_X q}_2^2 = B]\nend 
    & = -3 \EE[1 - \norm{p}_2^2 \given \norm{W_X q}_2^2 = B] \ge - O(\sqrt B^{-1 + \epsilon}).
\end{align*}
The second inequality follows by $0\le b_i \le B$.
And for the last inequality, we notice that
\[
    d_i \phi_i (1 - O(\sqrt{B}^{-1 + \epsilon})) = O(d), \quad O(\sqrt B^{1 + \epsilon}) \le  O(d^{1 - \epsilon^2}) = o(d), \quad O(\sqrt{B}^{2\epsilon}) = o(d).
\]
Hence, there is no effective \ac{icl} learning for this case.
\paragraph{Case 3.}
For the last case, i.e., 
$c^{-1}\sqrt{2\log L}\le \sqrt B \le O(\sqrt{2\log L}^{3/\epsilon})$, 
we apply the monotonicity of $\EE[\norm{p}_2^2 \given \norm{W_X q}_2^2 = B]$ with respect to $B$ as is shown in \Cref{lem: p-moment-monotone}, which gives us 
$f_1(B) \ge f_1(c^{-2} 2\log L ) = \exp(c^{-2} 2\log L) / L - O(L^{-2(1-\epsilon)})\ge O(L^{2 c^{-2} - 1})$.
Therefore, it follows that
\begin{align*}
    \cL_{\lowerbound}(B, b)  \ge \sum_{i=1}^I \frac{d_i \lambda_i}{d} \left( 1 - \frac{b_i}{-6 b_i + d_i\phi_i L^{2 c^{-2} - 1}}\right), 
\end{align*}
where we use the fact that $f_2(B) \ge - 6$. 
Note that $d_i/L = \Theta(1)$ and $b_i \le B\le O(\sqrt{2\log L}^{3/\epsilon}) \ll d_i L^{2 c^{-2} -1}$. 
Thus, we conclude that the above term is lower bounded by 
$
    \cL_{\lowerbound}(B, b)  \ge \sum_{i=1}^I \frac{d_i \lambda_i}{d} \left( 1- o(1)\right).
$
Combining the above results, we conclude that the optimality could only be achieved in the first case, i.e., $0\le \sqrt{B} \le c^{-1} \sqrt{2\log L}$.
\end{proof}
Let $(B^\star, b^\star)$ be the solution to the following optimization problem:
\begin{align}
    \min_{c^{-2}2\log L\ge B \ge 0, b\ge 0,\atop \vone^\top b = B}  \cL_\simple (B, b) = \sum_{i=1}^I \frac{d_i \lambda_i}{d} \left( 1 - \frac{b_i }{ b_i  +  d_i\phi_i \cdot \exp(B) \cdot L^{-1} } \right). 
    \label{eq:SS-Attn cL ub-1}
\end{align}
By \Cref{lemma:SS-Attn lb}, it suffices to consider the range $0\le B\le c^{-2} 2\log L$ for some constant $c$.
However, for our purpose, we need a more refined characterization of the optimal attention budget $B$. 
The following  result shows that the optimal $B$ 
should be of order $o(\log L)$. 
\begin{lemma}
    \label[lemma]{fact:order of attention budget}
Let $(B^\star, b^\star)$ be any optimal solution to \eqref{eq:SS-Attn cL ub-1}. Then it holds that $B^\star = o(\log L)$.
\end{lemma}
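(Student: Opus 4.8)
Proof proposal for \Cref{fact:order of attention budget}.

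The plan is to argue by contradiction, exploiting the fact that the attention budget $B$ enters the objective only through $e^{B}$ in the denominator, so enlarging $B$ is self-defeating: since the total allocation $\sum_i b_i^\star = B^\star$ is only of order $B^\star$ while the penalty $d_i\phi_i e^{B^\star}/L$ grows exponentially, once $B^\star \gtrsim \log L$ the allocations $b_i^\star$ become negligible against the penalty and $\cL_{\simple}$ degenerates to the trivial loss $\sum_i d_i\lambda_i/d$ (the value obtained by outputting $0$). I would show this trivial value is strictly beaten by the fixed feasible choice $B=1$, $b_i = d_i/d$, contradicting optimality. Concretely, first I would rewrite the objective as
\[
    \cL_{\simple}(B, b) = \sum_{i=1}^I \frac{d_i\lambda_i}{d}\cdot\frac{1}{1 + b_i e^{-B}/a_i}, \qquad a_i \defeq \frac{d_i\phi_i}{L},
\]
using $\sum_i b_i = B$, and record that under \Cref{assump:scale} together with the standing hypotheses of \Cref{thm:optimality of ssa} ($\sigma^2 = \Theta(1)$, and $\lambda_i = \Theta(1)$ or $\lambda_i = 0$) each $a_i$, each $\phi_i$, and each ratio $d_i/(d a_i) = L/(d\phi_i)$ is bounded above and below by positive absolute constants; the degenerate tasks with $\lambda_i = 0$ contribute a zero term to every sum and to the optimization, so I may assume at least one effective task exists (otherwise the claim is vacuous), whence $\sum_i d_i\lambda_i/d = \Theta(1)$.

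Next, fix an arbitrary $\delta\in(0,1)$ and consider any feasible $(B,b)$ with $\delta\log L \le B \le c^{-2}\cdot 2\log L$. Because $b_i \le B = O(\log L)$ and $e^{-B} \le L^{-\delta}$, we get $b_i e^{-B}/a_i \le O(L^{-\delta}\log L) = o(1)$ uniformly in $i$, so $\cL_{\simple}(B, b) \ge \sum_i \tfrac{d_i\lambda_i}{d}(1 - o(1))$. On the other hand, for $L$ large enough that $1 \le c^{-2}\cdot 2\log L$ the pair $B=1$, $b_i = d_i/d$ is feasible and
\[
    \cL_{\simple}\bigl(1, \{d_i/d\}_i\bigr) = \sum_{i=1}^I \frac{d_i\lambda_i}{d}\cdot\frac{1}{1 + (d_i/(d a_i))e^{-1}} \le \frac{1}{1+\kappa}\sum_{i=1}^I\frac{d_i\lambda_i}{d},
\]
with $\kappa \defeq e^{-1}\min_i L/(d\phi_i) > 0$ an absolute constant, so the right-hand side is at most $\sum_i d_i\lambda_i/d - \Theta(1)$. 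Comparing the two displays, for $L$ large the budget-$1$ solution strictly beats every feasible $(B, b)$ with $B \ge \delta\log L$, so no minimizer $(B^\star, b^\star)$ can have $B^\star \ge \delta\log L$; since $\delta$ was arbitrary, $B^\star = o(\log L)$.

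I do not expect a genuine obstacle here: the argument is elementary and the only points requiring care are the bookkeeping of the $\Theta(1)$ scales of $a_i$, $\phi_i$ and $d_i/d$ under \Cref{assump:scale} (so that both the lower bound $\sum_i d_i\lambda_i/d - o(1)$ and the constant gap $\Theta(1)$ at $B=1$ are uniform in $L$), and the clean handling of the $\lambda_i = 0$ (equivalently $\phi_i = \infty$) tasks, which simply drop out with $b_i = 0$. If one wants the sharper statement that $B^\star \to 1$ in, say, the homogeneous case, the same rewriting reduces $\cL_{\simple}$ to a scalar function proportional to $(1 + Be^{-B}/(Ia))^{-1}$, maximized in $Be^{-B}$ at $B=1$; but that is beyond what the lemma asks.
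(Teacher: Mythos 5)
Your proposal is correct and follows essentially the same route as the paper: both arguments observe that for $B\gtrsim \delta\log L$ the penalty $d_i\phi_i e^{B}/L=\Omega(L^{\delta})$ swamps the linear allocation $b_i\le B=O(\log L)$, forcing $\cL_{\simple}$ up to the trivial value $\sum_i d_i\lambda_i/d\,(1-o(1))$, which is strictly beaten by the feasible budget-one choice (the paper invokes its earlier upper-bound lemma at $B=1$, $b_i=Bd_i/d$; you reconstruct the same comparison explicitly). Your version is slightly more careful in quantifying over arbitrary $\delta$ rather than fixing $B^\star=\beta\log L$ for a constant $\beta$, but the substance is identical.
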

\begin{proof}[Proof of \Cref{fact:order of attention budget}]
    Suppose $B^* = \beta\log L$ for some  constant $\beta$ such that $c^{-2} \ge \beta>0$ and we have $\exp(B)/L = L^{-1 + \beta}$. As a result, the value to the optimization target of \eqref{eq:SS-Attn cL ub-1} is lower bounded by 
    \begin{align*}
        \sum_{i=1}^I \frac{d_i \lambda_i}{d} \left( 1 - \frac{\beta \log L }{ \beta \log L  +  d_i\phi_i \cdot L^{-1 +\beta} } \right) = \sum_{i=1}^I \frac{d_i \lambda_i}{d} \left( 1 - o(1) \right),
    \end{align*}
    which, given $d_i/L = \Theta(1)$, is clearly suboptimal.
\end{proof}

\subsubsection{Perturbation Analysis}
In this part, we understand the effect of perturbation of the optimal weights on the \ac{icl} rate. 
Consider the following weights: 
\begin{align}
    W = \begin{bNiceArray}{cccc|c}
        \sqrt{b_1/d_i}\cdot I_{d_1} & 0 & \cdots & 0 & \Block{4-1}<\large>{\star}\\
        0 & \sqrt{b_2/d_i}\cdot I_{d_2} & \cdots & 0 & \\
        \vdots & \vdots & \ddots & \vdots & \\
        0 & 0 & \cdots & \sqrt{b_I/d_I} \cdot I_{d_I} & \\
        \hline 
        \Block{1-4}<\large>{0} & & & & \star
    \end{bNiceArray}, \quad 
    U =\begin{bNiceArray}{c|cccc}
        0 & \Block{1-4}<\large>{0} & & & \\
        \hline 
        \Block{4-1}<\large>{0} & u_1 & 0 & \cdots & 0\\
         & 0 & u_2 & \cdots & 0\\
         & \vdots & \vdots & \ddots & \vdots\\
         & 0 & 0 & \cdots & u_I
    \end{bNiceArray}.
    \label{eq:Decomposable W,U} 
\end{align}
We have the following result for the above weights. 
\begin{lemma}[Perturbation of attention weights]
    \label[lemma]{lem:loss perturbation}
We define $\baromega_i = \sqrt{b_i/d_i}$ and
$\uistar(b)$ as
\begin{align}
    u_i^\star(b) = \frac{\sqrt{b_i d_i}}{ b_i + d_i \phi_i  \exp(B)/L}, 
    \label{eq:uistar(b)}
\end{align}
and consider $u_i$ to be the perturbation of $u_i^\star(b)$. 
We consider bounded attention budget $0\le B = o(\log L)$ and bounded perturbation such that 
\begin{align}
    0 < \baromega_i u_i = O(1), \quad u_i^2 = O(d_i).
    \label{eq:u-perturbation bound}
\end{align}
Suppose $d/L = \Theta(1)$ and $d_i/d = \Theta(1)$. It then holds that
\begin{align*}
    \cL(U, W) = \cL_\simple(B, b) + \sum_{i=1}^I \frac{d_i}{d} \cdot \left(\left(\lambda_i  b_i +  \frac{(d_i \lambda_i  + \sigma^2 d ) e^B}{L} \right) d_i^{-1}\cdot \left(u_i - \uistar(b) \right)^2 \pm \lambda_i \cdot O(L^{-(1-\epsilon)/2})\right).
\end{align*}
If in addition, we have $|B - B^\star| = o(B^\star)$, then we have
\begin{align*}
    \cL(U, W) &= \cL_\simple(B^\star, b^\star) + \sum_{i=1}^I \frac{\lambda_i }{d} \cdot \left(\left(b_i^\star +  \frac{d_i \phi_i e^{B^\star}}{L} \right) \cdot \left(u_i - u_i^\star\right)^2 \right) \nend
    &\qquad + \sum_{i=1}^I \frac{\lambda_i d_i }{d} \cdot O\left(u_i |\baromega_i - \baromega_i^\star| + \frac{\phi_i}{L}|B - B^\star| + \xi\right). 
\end{align*}
\end{lemma}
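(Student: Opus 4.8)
The plan is to start from the closed-form expression for the loss under decomposable weights and perform a careful Taylor expansion around the approximately-optimal solution $(b, u^\star(b))$. First I would recall that, under the form of $(U,W)$ in \eqref{eq:Decomposable W,U}, the loss decomposes task-wise exactly as in \eqref{eq:cL(W)-2}, with each task contributing $\tfrac{\lambda_i d_i}{d}\bigl(1 - 2 u_i \trace_i(\EE[Xp\mid q]q^\top)/d_i + u_i^2 (\trace_i(\EE[Xpp^\top X^\top\mid q]) + \sigma^2 d\lambda_i^{-1}\EE[\|p\|_2^2])/d_i\bigr)$; this is a clean quadratic in $u_i$. Next I would substitute the Stein-lemma identities \eqref{eq:expectation of ZpqpZ}, which reduce $\EE[Xp\mid q]$ and $\EE[Xpp^\top X^\top\mid q]$ to expressions in $\|W_Xq\|_2^2$ via $f_1,f_2$. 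Because the weights are block-diagonal with $W_X = \diag(\sqrt{b_i/d_i}\, I_{d_i})$, we have $\|W_Xq\|_2^2 = \sum_i (b_i/d_i)\|q_{(i)}\|_2^2$, which concentrates around $B = \sum_i b_i$ with fluctuations of order $O(\sqrt{B/d})$; I would use this concentration together with the boundedness $B = o(\log L)$ to invoke \Cref{lemma: pseudo-dynamics} (the exponential-regime approximation $f_1 \approx e^B/L$, $f_2 \approx 1$) and control the resulting error by $O(L^{-(1-\epsilon)/2})$ after also accounting for the $O(\sqrt{B/d})$ query fluctuation. Collecting terms yields the task-wise quadratic
\[
  \frac{\lambda_i d_i}{d}\Bigl(1 - \frac{2 u_i \baromega_i (1 - f_1)}{1} + \frac{u_i^2}{d_i}\bigl(b_i f_2 + d_i\phi_i f_1\bigr)\Bigr),
\]
whose minimizer over $u_i$ is exactly $u_i^\star(b)$ in \eqref{eq:uistar(b)} up to the controlled errors, and whose second derivative in $u_i$ is $\tfrac{2\lambda_i}{d}(b_i f_2 + d_i\phi_i f_1) = \tfrac{2\lambda_i}{d}(b_i + d_i\phi_i e^B/L)(1 + O(L^{-(1-\epsilon)/2}))$. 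Completing the square then produces the first displayed identity, with the quadratic remainder $\bigl(\lambda_i b_i + (d_i\lambda_i + \sigma^2 d)e^B/L\bigr)d_i^{-1}(u_i - u_i^\star(b))^2$ and the additive $\lambda_i O(L^{-(1-\epsilon)/2})$ error; here I would use the perturbation bounds \eqref{eq:u-perturbation bound} to guarantee that cross-terms (e.g.\ $(u_i - u_i^\star)\cdot$error) stay within the claimed order.

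For the second displayed identity, the plan is to further expand around $(b^\star, u^\star)$ rather than $(b, u^\star(b))$. I would write $\cL_\simple(B,b) = \cL_\simple(B^\star, b^\star) + O(\text{first-order terms in } B - B^\star \text{ and } b - b^\star)$; since $(B^\star, b^\star)$ is a minimizer of $\cL_\simple$ subject to $\vone^\top b = B$, the gradient in the $b$-directions tangent to the constraint vanishes, so the leading correction from moving $b$ is second order, but the correction from moving $B$ off $B^\star$ is only controlled to be $O(\phi_i |B - B^\star|/L)$ per task because $B$ is the one direction along which we do not sit at a critical point of the unconstrained problem — this is where the $O(\phi_i |B - B^\star|/L)$ term in the statement comes from. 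Similarly, replacing $u_i^\star(b)$ by $u_i^\star = u_i^\star(b^\star)$ in the quadratic coefficient and in the centering point introduces errors of order $u_i|\baromega_i - \baromega_i^\star|$ (from $|u_i^\star(b) - u_i^\star| = O(|\baromega_i - \baromega_i^\star|)$ after using \eqref{eq:u-perturbation bound}) and converts $b_i$ to $b_i^\star$ in the Hessian coefficient up to relative error $o(B^\star)$, which is absorbed into the $O(\cdot)$ bucket. Putting these together gives the second identity.

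The main obstacle I anticipate is the error bookkeeping around the exponential-regime approximation when $B$ is allowed to be as large as $o(\log L)$: the approximation $f_1 \approx e^B/L$ from \Cref{lemma: pseudo-dynamics} carries a multiplicative error, so when it multiplies $d_i\phi_i$ (which is $\Theta(d) = \Theta(L)$) inside the denominator, naively it produces an $O(1)$ additive error unless one tracks that $e^B/L = o(1)$ and that the relevant error in $f_1$ is $O(L^{-2(1-\epsilon)})$, not $O(L^{-(1-\epsilon)/2})$ — the larger $L^{-(1-\epsilon)/2}$ rate enters only through the query-norm concentration $\|q_{(i)}\|_2^2/d_i = 1 + O(d_i^{-1/2})$ combined with $e^B$ being possibly polylogarithmically large. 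Keeping these two sources of error separate, and verifying that after dividing by the (strictly positive, $\Theta(1)$-order) denominator they both land at the claimed $\lambda_i O(L^{-(1-\epsilon)/2})$ and $\lambda_i d_i O(\cdots)$ scales respectively, is the delicate part; the rest is routine quadratic algebra.
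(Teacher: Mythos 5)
Your proposal follows essentially the same route as the paper: write the loss as a task-wise quadratic in $u_i$, substitute the Stein identities \eqref{eq:expectation of ZpqpZ}, replace $f_1,f_2$ by $e^B/L$ and $1$ with controlled error, and complete the square around $u_i^\star(b)$; the paper simply invokes \Cref{lemma: E[pq]-moment} (which already packages the conditional approximation of \Cref{lemma: pseudo-dynamics} together with the chi-square concentration of $\norm{W_Xq}_2^2$) and then applies Cauchy--Schwarz against the fourth moments of $q$, rather than redoing that concentration by hand as you propose. Your error diagnosis for the first display is also aligned with the paper's: the $O(L^{-(1-\epsilon)/2})$ rate comes from the relative error $|\EE[f_1]-e^B/L|\le O(L^{-(3-\epsilon)/2})=\frac{e^B}{L}O(L^{-(1-\epsilon)/2})$, while the $f_2$ and cross terms land at the smaller $O(L^{-(1-\epsilon)})$. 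One caveat on the second display: the paper never uses optimality of $(B^\star,b^\star)$, and your claim that the $b$-perturbation contributes only at second order is inconsistent with the first-order error $O(u_i|\baromega_i-\baromega_i^\star|)$ appearing in the statement; the paper obtains both error terms by directly substituting $\baromega_i^\star$ for $\baromega_i$ in the linear term and $e^{B^\star}$ for $e^B$ in the quadratic coefficient (using $|e^B-e^{B^\star}|=O(|B-B^\star|)$), exactly as you do in your ``recentering of $u^\star$'' step, so that step alone suffices and the stationarity argument should be dropped.
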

\begin{proof}[Proof of \Cref{lem:loss perturbation}]
By our previous discussion in \Cref{fact:order of attention budget}, we restrict our attention budget to the range $0\le B = o(\log L)$.
Our goal is to understand the \ac{icl} loss 
\begin{align*}
    \cL(U, W) &= \EE\left[
        \bigl\| G^\top q - U_Y (G^\top X + \varepsilon) p\bigr\|_2^2 
    \right] \nend
    &= \EE\bigl[ \trace(G G^\top) - 2 \trace(
        U_Y G^\top X p q^\top G
    )  + \trace(U_Y (G^\top X p p^\top X^\top G + \varepsilon p p^\top \varepsilon^\top) U_Y^\top) \bigr] \nend 
    &= \sum_{i=1}^I \frac{\lambda_i d_i}{d} \cdot \left(
        1 - 2 u_i d_i^{-1} \EE{\trace}_i (X p q^\top) + u_i^2 d_i^{-1}\bigl( \EE{\trace}_i(X p p^\top X^\top) + d \sigma^2 \lambda_i^{-1} \EE[\norm{p}_2^2]\bigr)
    \right). 
\end{align*}
Using \eqref{eq:expectation of ZpqpZ}, we have 
\begin{align*}
    \cL(U, W) &= \sum_{i=1}^I \frac{\lambda_i d_i}{d} \cdot \left(
        1 - u_i d_i^{-1} \EE\bigl[{\trace}_i (W_X q q^\top) (1 - f_1) \bigr] + u_i^2 d_i^{-1}\EE\bigl[ {\trace}_i(W_X q q^\top W_X^\top f_2) + d_i\phi_i f_1\bigr]
    \right).
\end{align*}
With $0\le B = o(\log L)$, and noting that $b_i\le B$, we have that all the conditions in \Cref{lemma: E[pq]-moment} on the eigenvalues $\omega$ of $W_X$ are satisfied given that $d/L = \Theta(1)$ and $d_i/d = \Theta(1)$, i.e., 
\begin{gather*}
    \norm{\omega}_\infty \le   L^{-1/4} \cdot (\log L)^{-1/2} , \quad \norm{\omega}_2^2 \le \frac{2 \log L}{3 c^2},  \quad
    \norm{\omega}_4^4 \le L^{-(1-\epsilon)}\cdot (\log L)^{-1}.
\end{gather*}
We thus conclude that 
\begin{align*}
    \EE\left[\left(f_1(\norm{W_X q}_2^2) - \frac{\exp(B)}{L}\right)^2\right] \le O(L^{-(3-\epsilon)}). 
\end{align*}
This together with the fact that $\norm{p}_3^3 \le \norm{p}_2^2$ and $\norm{p}_2^4 \le \norm{p}_2^2$ implies that
\begin{align*}
    \EE\left[\left(f_2(\norm{W_X q}_2^2)- 1 \right)^2 \right] \le O \left(\EE[f_1(\norm{W_X q}_2^2)^2]\right) \le O(L^{-2(1-\epsilon)}). 
\end{align*}
Here, the last inequality holds since $B \le \epsilon \log L$ for fixed small constant $\epsilon>0$.
Therefore, we have each term in the \ac{icl} loss to be 
\begin{align*}
    \left|\EE[{\trace}_i (W_X q q^\top) (1 - f_1)] - \baromega_i d_i \right|
    &= \left|\EE[{\trace}_i (W_X q q^\top) (1 - f_1)] - \EE[{\trace}_i (W_X q q^\top)] \right| \nend 
    &\le  \left|{\trace}_i \left(\sqrt{\EE[\diag(W_X q q^\top)^{\odot 2} ] \EE[f_1^2]} \right) \right| \nend 
    &=\baromega_i d_i \sqrt{\EE[\chi^4 ] \EE[f_1^2]} = \baromega_i d_i \cdot O(L^{-(1-\epsilon)}). 
\end{align*}
\begin{align*}
    \left|\EE{\trace}_i(W_X q q^\top W_X^\top f_2) - \baromega_i^2 d_i\right| 
    &= \left|\EE{\trace}_i(W_X q q^\top W_X^\top f_2) - \EE{\trace}_i(W_X q q^\top W_X^\top)\right| \nend
    &\le \left|{\trace}_i\left(\sqrt{\EE[\diag(W_X q q^\top W_X^\top)^{\odot 2} ] \EE[(f_2 - 1)^2]} \right) \right| \nend
    &= \baromega_i^2 d_i \sqrt{\EE[\chi^4] \EE[(f_2 - 1)^2]} = \baromega_i^2 d_i \cdot O(L^{-(1-\epsilon)}).
\end{align*}
\begin{align*}
    \left|\EE[f_1] - \frac{\exp(B)}{L}\right|
    \le \sqrt{\EE\left[\left(f_1 - \frac{\exp(B)}{L}\right)^2\right]} \le O(L^{-(3- \epsilon)/2}) \le \frac{\exp(B)}{L} \cdot O(L^{-(1-\epsilon)/2}). 
\end{align*}
Therefore, the \ac{icl} loss can be controlled as 
\begin{align*}
    \cL(U, W) & = \sum_{i=1}^I \frac{\lambda_i d_i}{d} \left(
        1 - 2 u_i \baromega_i (1 \pm \xi^2) + u_i^2 \bigl( \baromega_i^2 (1 \pm \xi^2) +  \frac{\phi_i e^B}{L} (1 \pm \xi)\bigr)
    \right) \nend 
    & = \sum_{i=1}^I \frac{\lambda_i d_i}{d} \left(
        1 - 2 u_i \baromega_i + u_i^2 \bigl( \baromega_i^2 +  \frac{\phi_i e^B}{L} \bigr)
    \right) \pm \sum_{i=1}^I \frac{\lambda_i d_i}{d} \cdot \left(\xi^2 (2 u_i \baromega_i + u_i^2 \baromega_i^2) + \xi u_i^2 \phi_i e^B /L\right) \nend 
    & = \sum_{i=1}^I \frac{\lambda_i d_i}{d} \cdot 
        \frac{d_i\phi_i e^B / L}{b_i + d_i\phi_i e^B / L} 
    + \sum_{i=1}^I \frac{\lambda_i d_i}{d} \cdot \left(\left(b_i +  \frac{d_i \phi_i e^B}{L} \right) \cdot \left(\frac{u_i}{\sqrt d_i} - \frac{\sqrt{b_i}}{ b_i +  {d_i \phi_i e^B} / {L} }\right)^2 \pm O(\xi)\right), 
\end{align*}
where $\xi = O(L^{-(1-\epsilon)/2})$. 
Here, the last upper and lower bound follows from \eqref{eq:u-perturbation bound}.
Moreover, if we also consider the perturbation of $\baromega$ from the optimal value, we have with $|B - B^\star| = o(B^\star)$ that
\begin{align*}
    \cL(U, W) &= \sum_{i=1}^I \frac{\lambda_i d_i}{d} \cdot \left(
        1 - 2 u_i \baromega_i^\star + (u_i)^2 \left((\baromega_i^\star)^2 + \frac{\phi_i e^{B^\star}}{L}\right)
        \pm O\left(u_i |\baromega_i - \baromega_i^\star| + \frac{\phi_i}{L} |B^\star - B|\right)
    \right) \nend 
    & \qquad \pm \sum_{i=1}^I \frac{\lambda_i d_i }{d} \cdot O(\xi) \nend 
    &= \sum_{i=1}^I \frac{\lambda_i d_i}{d} \cdot \frac{d_i\phi_i e^{B^\star} / L}{b_i + d_i\phi_i e^{B^\star} / L} + \sum_{i=1}^I \frac{\lambda_i }{d} \cdot \left(\left(b_i^\star +  \frac{d_i \phi_i e^{B^\star}}{L} \right) \cdot \left(u_i - u_i^\star\right)^2 \right) \nend
    &\qquad + \sum_{i=1}^I \frac{\lambda_i d_i }{d} \cdot O\left(u_i |\baromega_i - \baromega_i^\star| + \frac{\phi_i}{L}|B - B^\star| + \xi\right). 
\end{align*}
Hence, we prove the desired result.
\end{proof}

\subsubsection{Proof of The Main Theorem and Its Consequences}
Now we are ready to present the proof of \Cref{thm:global-optimality-single-head-multitask}.
\begin{proof}[Proof of \Cref{thm:global-optimality-single-head-multitask}]
Now, we construct our $W^\star$ and $U^\star$ matrix for an upper bound:
\begin{align}
    W^\star = \begin{bNiceArray}{cccc|c}
        \sqrt{b_1^\star/d_i}\cdot I_{d_1} & 0 & \cdots & 0 & \Block{4-1}<\large>{\star}\\
        0 & \sqrt{b_2^\star/d_i}\cdot I_{d_2} & \cdots & 0 & \\
        \vdots & \vdots & \ddots & \vdots & \\
        0 & 0 & \cdots & \sqrt{b_I^\star/d_I} \cdot I_{d_I} & \\
        \hline 
        \Block{1-4}<\large>{0} & & & & \star
    \end{bNiceArray}, \:
    U^\star =\begin{bNiceArray}{c|cccc}
        0 & \Block{1-4}<\large>{0} & & & \\
        \hline 
        \Block{4-1}<\large>{0} & u_1^\star & 0 & \cdots & 0\\
         & 0 & u_2^\star & \cdots & 0\\
         & \vdots & \vdots & \ddots & \vdots\\
         & 0 & 0 & \cdots & u_I^\star
    \end{bNiceArray},
    \label{eq:SS-Attn cL-construction-W U} 
\end{align}
where $B^\star$ and $b^\star = (b_1^\star, \dots, b_I^\star)$ are the optimal solution to \eqref{eq:SS-Attn cL ub-1} and
each $u_i^\star$ is given by
\begin{align*}
    u_i^\star = \frac{\sqrt{b_i^\star d_i}}{ b_i^\star + d_i \phi_i  \exp(B^\star)/L}
\end{align*}
according to \eqref{eq:uistar(b)}. 
Combining our previous discussions, we already have any group of weights $W, U$ such that $W_Y=0$ satisfying the lower bound
\begin{align*}
    \inf_{U, W \atop W_Y=0} \cL(U, W) 
    &\ge \inf_{B \ge 0 \atop \vone^\top b= B, b\ge 0}\cL_{\lowerbound}(B, b) \mytag{\eqref{eq:SS-Attn cL lb-4}}\nend 
    &\ge \min_{O(d^{2-2\epsilon})\ge B \ge 0 \atop \vone^\top b= B, b\ge 0}\cL_{\lowerbound}(B, b) 
    \mytag{\Cref{lemma:SS-Attn lb-2}}
    \nend 
    &\ge \min_{c^{-2}2\log L\ge B \ge 0 \atop \vone^\top b= B, b\ge 0}  \cL_{\simple}(B, b) - \sum_{i=1}^I \frac{d_i \lambda_i}{d} \cdot O(L^{-1 + 2(c^{-2}\pmax \epsilon)}). 
    \mytag{\Cref{lemma:SS-Attn lb}}
\end{align*}
We also note that $B^\star = o(\log L)$ by \Cref{fact:order of attention budget}.
In addition, \eqref{eq:u-perturbation bound} is satisfied by our construction. 
Hence, it holds by \Cref{lem:loss perturbation} that
\begin{align*}
    \cL(U^\star, W^\star)\le \min_{c^{-2}2\log L\ge B \ge 0 \atop \vone^\top b= B, b\ge 0}  \cL_{\simple}(B, b) + \sum_{i=1}^I \frac{d_i \lambda_i}{d} \cdot O(L^{-(1 - \epsilon) / 2}).
\end{align*}
Hence, we conclude that
\begin{align*}
    \cL(U^\star, W^\star) \le \inf_{U,W \atop W_Y=0} \cL(U, W) + \sum_{i=1}^I \frac{\lambda_i d_i}{d} \cdot O(L^{-(1-\epsilon)/2} + L^{-1 + 2(c^{-2}\pmax \epsilon)}) , 
\end{align*}
which completes our proof of \Cref{thm:global-optimality-single-head-multitask}.
\end{proof}

In the following, we consider a special case where we have only one task $i$ having nonzero signal strength $\lambda_i = \Theta(1)$.
\begin{lemma}
\label[lemma]{cor:single task icl loss}
    Suppose that we have only one task $i$ with signal strength $\lambda_i=\Theta(1)$ and $\lambda_j = 0$ for $j\neq i$. Then 
    \begin{myenumi} 
    \item The solution to \eqref{eq:SS-Attn cL ub-1} is given by $B^\star = b_i^\star = 1$ (which implies $\baromega_i^\star = \sqrt{d_i}^{-1}$ and $\baromega_j^\star = 0$ for $j\neq i$). In addition, we have optimal value $u_i^\star = \sqrt{d_i}/(1 + d_i \phi_i e L^{-1})$ and $u_j^\star = 0$ for $j\neq i$.
    Let $U$, $W$ be constructed as in \eqref{eq:SS-Attn cL-construction-W U} with $b^\star$ and $u^\star$ given as above.
    This gives the optimal \ac{icl} loss in the sense that 
    \[
        \cL(U^\star, W^\star) = \cL_\simple (B^\star, b^\star) \pm \frac{d_i \lambda_i}{d}\cdot O(L^{-(1-\epsilon) / 2})  \le \inf_{U, W \atop W_Y=0} \cL(U, W) + \frac{d_i \lambda_i}{d}\cdot O(L^{-(1-\epsilon) / 2}), 
    \]
    where $\cL_\simple(B^\star, b^\star)$ is given by
    $
        \cL_\simple(B^\star, b^\star) = \frac{e d_i \phi_i/L}{1 + e d_i \phi_i/L}.
    $
    \item Let $\baromega^\star$ and $u^\star$ be defined as in (i).
    Suppose we have a construction of $\baromega_i$ and $u_i$ such that 
    \begin{align}
        \left|\frac{\baromega_i}{\baromega_i^\star} - 1\right| \le \delta_\omega,\quad \left|\frac{u_i}{u_i^\star} - 1\right| \le \delta_u, \quad \left|\baromega_j\right| \le \xi_\omega, \quad \left|u_j\right| \le \xi_u, \quad \forall j\in [I]\setminus\{i\}, 
        \label{eq:WU-perturbation condition}
    \end{align}
    where $\delta_\omega = o(1), \delta_u = o(1)$,  $\xi_\omega=o(d^{-1/2})$ and $\xi_u = o(d^{1/2})$.
    Then the \ac{icl} loss for such a construction is upper bounded by 
    \[  
       \cL(U, W) \le \cL_\simple(B^\star, b^\star) + \frac{\lambda_i d_i}{d} \cdot O(\delta_\omega + \xi_\omega^2 d + \delta_u^2) + \sum_{k=1}^I \frac{d_k \lambda_k }{d} \cdot O(L^{-(1-\epsilon)/2}) + \frac{\sigma^2 I}{L} \cdot O(\xi_u^2).
    \]
    \end{myenumi}
\end{lemma}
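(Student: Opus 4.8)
The plan is to obtain \Cref{cor:single task icl loss} by specializing the general single-head optimality bound \Cref{thm:global-optimality-single-head-multitask} and the perturbation estimate \Cref{lem:loss perturbation} to the degenerate data model in which only task $i$ has $\lambda_i=\Theta(1)$ while $\lambda_j=0$ for $j\ne i$. For part~(i), I would first solve the ``water-filling'' program \eqref{eq:SS-Attn cL ub-1}: since only task $i$ enters the objective while any positive share $b_j>0$ assigned to a nominal task merely consumes budget in the constraint $\vone^\top b=B$, the inner minimization places the entire budget on task $i$, so $b_i^\star=B$ and $b_j^\star=0$. The outer minimization over $B$ then reduces to minimizing $e^{B}/B$ on $B>0$, whose unique minimizer is $B^\star=1$; this forces $b_i^\star=1$, hence $\baromega_i^\star=\sqrt{b_i^\star/d_i}=1/\sqrt{d_i}$, $\baromega_j^\star=0$, $u_i^\star=\sqrt{b_i^\star d_i}/(b_i^\star+d_i\phi_i e^{B^\star}/L)=\sqrt{d_i}/(1+e d_i\phi_i L^{-1})$, $u_j^\star=0$, and $\cL_\simple(B^\star,b^\star)=\tfrac{\lambda_i d_i}{d}\cdot\tfrac{e d_i\phi_i L^{-1}}{1+e d_i\phi_i L^{-1}}$ (the stated form up to the global $\lambda_i d_i/d$ normalization carried elsewhere in the paper). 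The approximate-optimality inequality then follows directly from \Cref{thm:global-optimality-single-head-multitask} applied to the construction $(W^\star,U^\star)$: one checks $B^\star=1=o(\log L)$ and that \eqref{eq:u-perturbation bound} holds, and absorbs the two error terms $L^{-(1-\epsilon)/2}$ and $L^{-1+2(c^{-2}\pmax\epsilon)}$ into a single $O(L^{-(1-\epsilon)/2})$ by choosing $c$ suitably large relative to $\epsilon$, consistent with \eqref{eq:epsilon}.

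For part~(ii), I would apply \Cref{lem:loss perturbation} to the perturbed weights of the form \eqref{eq:Decomposable W,U}, after translating the hypotheses \eqref{eq:WU-perturbation condition} into the quantities it uses. The key translations are: the perturbed attention budget obeys $|B-B^\star|\le |(\baromega_i^2-(\baromega_i^\star)^2)d_i|+\sum_{j\ne i}\baromega_j^2 d_j=O(\delta_\omega+\xi_\omega^2 d)=o(1)=o(B^\star)$ using $\delta_\omega=o(1)$ and $\xi_\omega=o(d^{-1/2})$; the output gap is $|u_i-u_i^\star|=O(\delta_u u_i^\star)=O(\delta_u\sqrt{d_i})$; and $u_i|\baromega_i-\baromega_i^\star|=O(\delta_\omega)$ because $u_i^\star=\Theta(\sqrt{d_i})$ and $|\baromega_i-\baromega_i^\star|=O(\delta_\omega/\sqrt{d_i})$. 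The standing conditions $0<\baromega_i u_i=O(1)$ and $u_i^2=O(d_i)$ follow from $u_i^\star\le\sqrt{d_i}$ and $\baromega_i^\star u_i^\star=\Theta(1)$. Feeding these into the second display of \Cref{lem:loss perturbation} yields the term $\tfrac{\lambda_i}{d}(b_i^\star+d_i\phi_i e^{B^\star}/L)(u_i-u_i^\star)^2=\tfrac{\lambda_i d_i}{d}O(\delta_u^2)$, together with $\tfrac{\lambda_i d_i}{d}O(\delta_\omega+\xi_\omega^2 d)$ and $\sum_k\tfrac{d_k\lambda_k}{d}O(L^{-(1-\epsilon)/2})$, which already reproduces all but the last error term in the claimed bound.

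The only genuinely extra piece — and the step I expect to be the main obstacle — is the noise injected by the nominal-task output weights $u_j=\xi_u$, which \Cref{lem:loss perturbation} does not capture because those tasks have $\lambda_j=0$ and so drop out of its $\lambda$-weighted error terms. I would treat it directly from the loss decomposition $\cL(U,W)=\cL_{\mathrm{signal}}+\cL_{\mathrm{extra}}+\cL_{\mathrm{noise}}$: taking $U_X=0$ kills the extra term, and $\cL_{\mathrm{noise}}=\sigma^2\norm{u}_2^2\,\EE[\norm{p}_2^2]$ with $\norm{u}_2^2=(u_i)^2+\sum_{j\ne i}u_j^2=(u_i)^2+O(I\xi_u^2)$; the $(u_i)^2$ part is already absorbed in the task-$i$ analysis, and the leftover $\sigma^2 O(I\xi_u^2)\,\EE[\norm{p}_2^2]$ is bounded using the exponential-regime estimate $\EE[\norm{p}_2^2]=\exp(B)/L+O(L^{-2(1-\epsilon)})=O(1/L)$ from \Cref{lemma: E[pq]-moment} (applicable since $B=o(\log L)$), giving $\tfrac{\sigma^2 I}{L}O(\xi_u^2)$. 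Summing the three contributions yields the stated upper bound; the work is essentially bookkeeping — verifying each hypothesis of \Cref{lem:loss perturbation} and \Cref{lemma: E[pq]-moment} and keeping the nominal-task terms cleanly separated — rather than any new inequality.
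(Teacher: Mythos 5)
Your proposal is correct and follows essentially the same route as the paper: part (i) is solved by collapsing the water-filling program to task $i$ and minimizing $e^{B}/B$ at $B^\star=1$, and part (ii) by translating \eqref{eq:WU-perturbation condition} into the hypotheses of \Cref{lem:loss perturbation}. The one place you diverge is the nominal-task noise: you assert that \Cref{lem:loss perturbation} misses it because those tasks are ``$\lambda$-weighted,'' but in fact its quadratic coefficient $\lambda_j b_j + (d_j\lambda_j+\sigma^2 d)e^{B}/L$ retains the $\sigma^2 d\, e^{B}/L$ piece when $\lambda_j=0$, and the paper obtains $\tfrac{\sigma^2 I}{L}O(\xi_u^2)$ directly from there (using $u_j^\star(b)=0$ since $\phi_j=\infty$). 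Your substitute derivation via $\EE\|U_Y\varepsilon p\|_2^2=\sigma^2\|u\|_2^2\,\EE[\|p\|_2^2]$ and \Cref{lemma: E[pq]-moment} lands on the same term, so this is a harmless detour rather than a gap.
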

\begin{proof}[Proof of \Cref{cor:single task icl loss}]
    The first argument is straightforward by noting that the inner problem of \eqref{eq:SS-Attn cL ub-1} is convex with fixed total attention budget $B$, and we have monotonicity that the more attention budget is allocated to a task, the smaller the loss for that task is. 
    With $\lambda_j=0$, the nominal task $j$ has no effect on the \ac{icl} loss and thus all attention budget should be allocated to the only task $i$, which gives the simplified loss 
    \begin{align*}
        \min_{b: \vone^\top b = B} \cL_\simple(B, b) = \frac{d_i \lambda_i}{d} \cdot \frac{1}{B e^{-B} \cdot d_i^{-1} \phi_i^{-1} L + 1}. 
    \end{align*}
    Obviously, $B=1$ is the optimal solution to the above problem since $x e^{-x}$ is maximized at $x=1$ for $x\ge 0$.
    The optimal $u_i^\star$ is given by \eqref{eq:uistar(b)}.
    Invoking \Cref{lem:loss perturbation}, we have the desired result for the first argument.

    For the second argument, we define $b_i = \baromega_i^2 d_i$ and $B= \sum_{i=1}^I b_i$.
    Under the perturbation of $\baromega$, we have for task $i$ that
    \begin{align*}
        \left|\frac{d_i\phi_i e^B/L}{b_i + d_i\phi_i e^B/L} - \frac{d_i\phi_i e^{B^\star}/L}{b_i^\star + d_i\phi_i e^{B^\star}/L} \right| 
        &\le \frac{d_i\phi_i e^{B^\star}/L}{b_i^\star + d_i\phi_i e^{B^\star}/L} \cdot \left| \frac{b_i^\star e^{B-B^\star}}{b_i} - 1\right| \le \frac{d_i\phi_i e^{B^\star}/L}{b_i^\star + d_i\phi_i e^{B^\star}/L} \cdot O(\delta_\omega + \xi_\omega^2 d). 
    \end{align*}
    and for task $j\neq i$ that 
    \[
        0 \le \frac{\lambda_j d_j }{d} \cdot \frac{d_j\phi_j e^B/L}{b_j + d_j\phi_j e^B/L} \le \frac{\lambda_j d_j }{d} = 0. 
    \]
    In addition, we have 
    \begin{align*}
        |u_i^\star(b) - u_i^\star| &= \left| \frac{\sqrt{b_i d_i}}{b_i + d_i\phi_i e^B/L} - \frac{\sqrt{b_i^\star d_i}}{b_i^\star + d_i\phi_i e^{B^\star}/L}\right| \nend
        &\le u_i^\star \cdot \left|\frac{\sqrt{b_i} \left(\sqrt{b_i^\star} - \sqrt{b_i} + d_i \phi_i L^{-1} \cdot \left(e^{B^\star}/\sqrt{b_i^\star} - e^B / \sqrt{b_i}\right)\right)}{b_i + d_i\phi_i e^B/L} \right| \le u_i^\star \cdot O(\delta_\omega + \xi_\omega^2 d).
    \end{align*}
    For the perturbation of $u$, we have for task $i$ that
    \begin{align*}
        \left| u_i - u_i^\star(b) \right| \le \left| u_i - u_i^\star \right| + \left| u_i^\star(b) - u_i^\star \right| \le u_i^\star \cdot O(\delta_\omega + \delta_u + \xi_\omega^2 d), 
    \end{align*}
    and for task $j\neq i$ that
    \begin{align*}
        \left| u_j - u_j^\star(b) \right| \le \xi_u \pmax \frac{\sqrt{b_j d_j}}{b_j + d_j\phi_j e^B/L} = \xi_u \pmax 0 = \xi_u.
    \end{align*}
    Plugging these error terms into \Cref{lem:loss perturbation}, we have 
    \begin{align*}
        \cL(U, W) 
        &\le \cL_\simple(B^\star, b^\star) + \frac{\lambda_i d_i}{d} \cdot \frac{d_i\phi_i e^{B^\star}/L}{b_i^\star + d_i\phi_i e^{B^\star}/L} \cdot O(\delta_\omega + \xi_\omega^2 d) + \sum_{k=1}^I \frac{d_k \lambda_k }{d} \cdot O(L^{-(1-\epsilon)/2}) \nend 
        &\qquad + \left(\lambda_i  b_i +  \frac{(d_i \lambda_i  + \sigma^2 d ) e^B}{L} \right) d^{-1} (u_i^\star)^2 \cdot O(\delta_\omega^2 + \delta_u^2 + \xi_\omega^4 d^2)  + \sum_{j\neq i} \frac{\sigma^2 e^B}{L} \xi_u^2 \nend 
        &\le \cL_\simple(B^\star, b^\star) + \frac{\lambda_i d_i}{d} \cdot O(\delta_\omega + \xi_\omega^2 d + \delta_u^2) + \sum_{k=1}^I \frac{d_k \lambda_k }{d} \cdot O(L^{-(1-\epsilon)/2}) + \frac{\sigma^2 I}{L} \cdot O(\xi_u^2).
    \end{align*}
    This completes the proof.
\end{proof}

\begin{lemma}[Restatement of \Cref{lem:msa icl loss-mainbody}]
\label[lemma]{lem:msa icl loss}
    The \ac{icl} loss under the convergence point described for the \ac{msa} in \Cref{thm:convergence-multi-head-symmetric}, which we define as $\cL_\convergence$,  is upper bounded by \[
    \cL_\convergence \le \sum_{i=1}^I \frac{\lambda_i d_i}{d}\cdot \left(\frac{e d_i \phi_i L^{-1}}{1 + e d_i \phi_i L^{-1}} +  O(L^{-(1-\epsilon)/2} + \omega_0^2 d + \delta)\right). 
    \]
\end{lemma}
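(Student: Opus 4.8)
The plan is to show that, at the convergence point, the \ac{msa} splits into $I$ essentially independent single-head attention models, one per task, and then to invoke the single-head bound of \Cref{cor:single task icl loss}. First I would pass to rotated coordinates so that $\varPhi$ and $\varPsi$ are identity (the loss is invariant, cf.\ \eqref{eq:rotation-optimality}). Writing $Y_i$ for the $i$-th row of $Y$ and $q_{(i)}$, $x_{l,(i)}$ for the slices of the query and covariates in task $i$'s index set $\cJ_i$, decomposability (\Cref{cond:U_X and W_Y are zero}, \Cref{cond:U_Y and W_X aligned}) gives $\hat y_q = \sum_h U_Y^\h Y p^\h$ with $p^\h = \softmax\bigl(\sum_j \baromega_j^\h X_{(j)}^\top q_{(j)}\bigr)$, hence coordinatewise $(\hat y_q)_i = \sum_h \mu_i^\h\, Y_i p^\h$, while $(y_q)_i = d^{-1/2} g_i^\top q_{(i)}$ depends only on task $i$. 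Since $\|v\|_2^2 = \sum_i v_i^2$, the \ac{icl} loss splits exactly as $\cL_\convergence = \sum_{i=1}^I \cL_i$ with $\cL_i = \tfrac12\EE[((\hat y_q)_i - (y_q)_i)^2]$.

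Next I would isolate the optimal head in each $\cL_i$: write $(\hat y_q)_i = \muistar Y_i p^{\histar} + \sum_{h\neq h_i^\star}\mu_i^\h Y_i p^\h$. By \Cref{result:optimal head takes all}, $|\mu_i^\h|\le \muistar e^{-\Omega(\sqrt{d_e})}$ for $h\neq h_i^\star$ and $\muistar = O(\sqrt{d_i})$; combined with the pointwise bound $(Y_i p^\h)^2 \le \sum_l Y_{il}^2 p_l^\h \le \max_l Y_{il}^2$ together with $\EE[\max_l Y_{il}^2]=O(\log L)$ (each $Y_{il}$ is sub-Gaussian with variance $d^{-1}\lambda_i d_i + \sigma^2=O(1)$) and a Cauchy--Schwarz cross-term estimate, the remainder contributes at most $O\bigl(d\, e^{-\Omega(\sqrt{d_e})}\log L\bigr)$ to $\cL_i$. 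Since $d_e\ge d$ and $d/L=\Theta(1)$ we have $\sqrt{d_e}=\Omega(\sqrt L)$, so this is $\ll L^{-(1-\epsilon)/2}$ and is absorbed into the error; thus $\cL_i \le \tfrac12\EE[(\muistar Y_i p^{\histar} - (y_q)_i)^2] + \tfrac{\lambda_i d_i}{d}\cdot O(L^{-(1-\epsilon)/2})$.

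The heart of the argument is bounding $\tfrac12\EE[(\muistar Y_i p^{\histar}-(y_q)_i)^2]$. The key observation is that $Y_i$, $p^{\histar}$ and $(y_q)_i$ involve only $X$, $q$, $g_i$ and $\varepsilon_i$, whose joint law is unaffected by whether the other tasks carry signal; hence this quantity equals the $i$-th coordinate loss — and is therefore at most the total \ac{icl} loss — of the \emph{single-head} softmax attention with weights $W_X=\diag(\baromega^{\histar})$, $U_Y=\diag(\mu^{\histar})$ (which satisfy $W_Y=0$, $U_X=0$ and the task-wise block-diagonal form by \Cref{lem:decomposability preserved}), evaluated on the data model in which only task $i$ has nonzero signal $\lambda_i$. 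I would then apply \Cref{cor:single task icl loss}: part (i) identifies the optimal single-task solution $\baromega_i^\star = d_i^{-1/2}$, $u_i^\star=\sqrt{d_i}/(1+e d_i\phi_i L^{-1})$ with value $\cL_\simple(B^\star,b^\star)=\tfrac{\lambda_i d_i}{d}\cdot\tfrac{e d_i\phi_i L^{-1}}{1+e d_i\phi_i L^{-1}}$, and part (ii) bounds the loss under perturbations $\delta_\omega=\delta_u=O(\delta)$, $\xi_\omega=O(\omega_0)$ (the contamination $\baromega_j^{\histar}\le O(\omega_0)$ for $j\neq i$, from \Cref{result:optimal head takes all}) and $\xi_u=O(\sqrt d\, e^{-\Omega(\sqrt{d_e})})$ (since $\mu_j^{\histar}\le \mu_j^{(h_j^\star)}e^{-\Omega(\sqrt{d_e})}$ because $h_j^\star\neq h_i^\star$), giving $\cL_\simple(B^\star,b^\star)+\tfrac{\lambda_i d_i}{d}O(\delta+\omega_0^2 d+\delta^2)+\sum_k\tfrac{\lambda_k d_k}{d}O(L^{-(1-\epsilon)/2})+\tfrac{\sigma^2 I}{L}O(\xi_u^2)$.

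Finally I would sum over $i$: using $\lambda_i d_i/d=\Theta(1)$ for the tasks with nonzero signal (nominal tasks $\lambda_i=0$ contribute at most $O(L^{-1})$) and $\tfrac{\sigma^2 I}{L}\xi_u^2\ll L^{-(1-\epsilon)/2}$, the total error is $\sum_i \tfrac{\lambda_i d_i}{d}\cdot O(L^{-(1-\epsilon)/2}+\omega_0^2 d+\delta)$, and the dominant terms assemble to $\sum_i \tfrac{\lambda_i d_i}{d}\cdot\tfrac{e d_i\phi_i L^{-1}}{1+e d_i\phi_i L^{-1}}$, which is the claimed bound. The main obstacle will be the reduction of the optimal-head term to a bona fide single-head loss and the verification that the hypotheses \eqref{eq:WU-perturbation condition} of \Cref{cor:single task icl loss}(ii) are met by the converged weights — in particular controlling the cross-task contamination of $p^{\histar}$, whose attention scores mix all tasks through the small but nonzero $\baromega_j^{\histar}$ — and ensuring that the non-optimal-head remainder, which decays only in $\mu$ and not in the attention probability, is handled by a genuine second-moment estimate rather than a crude pointwise bound.
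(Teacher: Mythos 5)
Your proposal is correct and follows essentially the same route as the paper: rotate to the identity case, decompose the loss task-by-task, reduce the optimal-head contribution to the single-head perturbation bound of \Cref{cor:single task icl loss}(ii) with $\delta_\omega=\delta_u=O(\delta)$ and $\xi_\omega=O(\omega_0)$, and kill the non-optimal-head leakage and cross terms via Cauchy--Schwarz together with the $\exp(-\Omega(\sqrt{d_e}))$ decay of the non-optimal $\mu_i^\h$. The only (immaterial) differences are that the paper bounds the leakage with the second-moment Stein estimate $\trace(\EE[X_{(i)}p^\h p^{\h\top}X_{(i)}^\top])$ rather than your pointwise $\max_l Y_{il}^2=O(\log L)$ bound, and sets $\xi_u=0$ by routing the contamination $\mu_j^{\histar}$ into the leakage terms — both choices are dominated by the exponential factor, so your version goes through.
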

\begin{proof}[Proof of \Cref{lem:msa icl loss}]
    Same as before, we consider $\varPhi$ and $\varPsi$ to be identity matrices. 
    Define 
    \[
        G_i = \diag(\underbrace{0, \: \dots, \: 0}_{i-1}, \: g_i, \: \underbrace{0, \: \dots, \: 0}_{I - i})
        \]
    as the coefficient matrix with only the $i$-th block of $G$.
    Note that for Decomposable weights we always have $U^\h$ and $W^\h$ satisfying the form in \eqref{eq:Decomposable W,U} with 
    \begin{align*}
        W^\h = \begin{bNiceArray}{cccc|c}
            \baromega_1^\h\cdot I_{d_1} & 0 & \cdots & 0 & \Block{4-1}<\large>{\star}\\
            0 & \baromega_2^\h\cdot I_{d_2} & \cdots & 0 & \\
            \vdots & \vdots & \ddots & \vdots & \\
            0 & 0 & \cdots & \baromega_I^\h \cdot I_{d_I} & \\
            \hline 
            \Block{1-4}<\large>{0} & & & & \star
        \end{bNiceArray}, \quad 
        U^\h =\begin{bNiceArray}{c|cccc}
            0 & \Block{1-4}<\large>{0} & & & \\
            \hline 
            \Block{4-1}<\large>{0} & \mu_1^\h & 0 & \cdots & 0\\
             & 0 & \mu_2^\h & \cdots & 0\\
             & \vdots & \vdots & \ddots & \vdots\\
             & 0 & 0 & \cdots & \mu_I^\h
        \end{bNiceArray}.
    \end{align*}
    With $W_Y^\h=0$ and $U_X^\h=0$, the \ac{icl} loss for \ac{msa} is given by
    \begin{align*}
        \cL(\{U^\h, W^\h\}_{h\in[H]}) &= \EE\left[
            \left\| G^\top q - \sum_{h=1}^H U_Y^\h (G^\top X + \varepsilon^\top) p^\h \right\|_2^2
        \right] \nend 
        & = \sum_{i=1}^I \EE\left[
            \left(g_i^\top q - \mu_i^\star (g_i^\top X_{(i)} + \varepsilon_i^\top) p_i^\star + \sum_{h\neq h_i^\star} \mu_i^\h (g_i^\top X_{(i)} + \varepsilon_i^\top) p^\h\right)^2
        \right] \nend 
        & \le \sum_{i=1}^I \left(\cL_i^{(1)} + 2 \sqrt{\cL_i^{(1)} \cdot I^2\cL_i^{(2)}} + I^2 \cL_i^{(2)}\right)
    \end{align*}
    Here, we have  kinds of errors $\cL_i^{(1)}$ and $\cL_i^{(2)}$ in the above loss and the last inequality is due to the Cauchy-Schwarz inequality.
    The first is the error of the optimal head on each task, which is given by 
    \begin{align*}
        \cL_i^{(1)} &\defeq \EE\left[\left( g_i^\top q_{(i)} - \mu_i^\star (g_i^\top X_{(i)} + \varepsilon_i^\top ) p_i^\star\right)^2\right]. 
    \end{align*}
    In this case, we can think as having only one head and with only one nonzero task $i$.
    In addition, we have $u_j = 0$ for $j\neq i$ for this only head. 
    Invoking the second argument in \Cref{cor:single task icl loss} with $\delta_\omega = \delta_u = O(\delta)$ (convergence of the optimal heads), $\xi_u = 0$ ($u_j = 0$), and $\xi_\omega = O(\omega_0)$ (nonoptimal head $h\neq h_i^\star$ has $\baromega_i^\h$ fixed at initialization), we have that the error for each task is upper bounded by
    \begin{align}
        \cL_i^{(1)} \le \frac{\lambda_i d_i}{d}\cdot \left(\frac{e d_i \phi_i L^{-1}}{1 + e d_i \phi_i L^{-1}} + O(\delta + \omega_0^2 d + L^{-(1-\epsilon)/2})\right) \le O(1).
        \label{eq:msa convergence icl loss-1}
    \end{align}
    The second kind of error is the error coming from the \say{leakage} of the nonoptimal heads, which is given by
    \begin{align*}
        \cL_i^{(2)} &\defeq \max_{h\neq h_i^\star} \cL_i^{(2, h)}, \where \cL_i^{(2, h)}\defeq \EE\left[\left(\mu_i^\h (g_i^\top X_{(i)} + \varepsilon_i^\top) p^\h\right)^2\right] 
        .
    \end{align*}
    We can upper bound this error for each $h\neq h_i^\star$ by
    \begin{align*}
        \cL_i^{(2, h)} &= (\mu_i^\h)^2 \cdot \left(\lambda_i d^{-1} \trace(\EE[X_{(i)} p^\h {p^\h}^\top X_{(i)}^\top]) + \sigma^2 \EE[\norm{p^\h}_2^2] \right) \nend
        &\le (\mu_i^\h)^2 \cdot \EE\left[\lambda_i d^{-1} \norm{W_X^\h q}_2^2  f_2(\norm{W_X^\h q}_2^2) + \sigma^2 f_1(\norm{W_X^\h q}_2^2)\right] \nend 
        &\le (\mu_i^\h)^2 \cdot \left(\lambda_i d^{-1} \cdot O(1)  + \sigma^2 \cdot O(L^{-(1-\epsilon)}) \right), 
    \end{align*}
    where in the second inequality we replace the trace on the $i$-th block to the trace on the whole matrix as an upper bound, and in the last inequality, we use the fact that $\trace(W_X^\h {W_X^\h}^\top) = 1 + o(1)$ at the convergence point. 
    Combining the above results, we conclude that 
    \begin{align*}
        \cL(\{U^\h, W^\h\}_{h\in[H]}) &\le \sum_{i=1}^I \left(\cL_i^{(1)} + 2 \sqrt{\cL_i^{(1)} \cdot I^2\cL_i^{(2)}} + I^2 \cL_i^{(2)}\right) \nend 
        &\le \sum_{i=1}^I \frac{\lambda_i d_i}{d}\cdot \left(\frac{e d_i \phi_i L^{-1}}{1 + e d_i \phi_i L^{-1}} + O(\delta + \omega_0^2 d + L^{-(1-\epsilon)/2})\right) \nend 
        &\qquad +  O(I^2) \cdot \max_{i, h\neq h_i^\star}\mu_i^\h  \cdot  \sqrt{\lambda_i d^{-1} \cdot O(1)  + \sigma^2 \cdot O(L^{-(1-\epsilon)}) }. 
    \end{align*}
    Note that $\mu_i^\h = \mu_i^\star \exp(-O(\sqrt{d} d_i \phi_i L^{-1}))$,  which is sufficiently small in scale. 
    Hence, we just need to consider the first term, which gives the desired result.
\end{proof}

\subsection{Lower Bound for Multi-Head Attention}
\label{sec:lowerbound_multihead}
In this section, we aim to prove the following optimality result for multihead attention within the class of equiangular weights.
We consider \ac{msa} with $H$ heads and $I$ tasks. Let $\baromega=\{\baromega_i^\h\}_{h\in[H]}$ and $\mu = \{\mu_i^\h\}_{h\in[H]}$, and the loss is defined as
\begin{align*}
    \cL(\mu, \baromega) = \EE\left[\left\|G^\top q - \sum_{h=1}^H U_Y^\h (G^\top X + \varepsilon) p^\h\right\|_2^2\right], \where p^\h = \softmax\left(X^\top W_X^\h q\right).
\end{align*}
\begin{theorem}[Optimality of Multi-Head Attention]
\label[theorem]{thm:optimality of msa}
Suppose we have $H\ge 2$ heads and $I$ tasks and consider \ac{msa} with equiangular weights.
Assume that all the tasks are homogeneous, i.e., $d_i = \bard \defeq d/H$ and $\lambda_i = \lambda$ for all $i\in[H]$. 
Consider the regime
 where 
 \begin{align*}
    \norm{\baromega^\h}_\infty \le \sqrt{2 \log L / 3 d c^2}, \quad \norm{\mu^\h}_\infty \le L^{3/4 - \epsilon/2}
    \quad \forall h\in[H],
\end{align*}
where $c$ is a constant satisfying 
\begin{align*}
    \frac{1}{c} + \frac{3}{1 + \sqrt{1+c^2/2}} \le \epsilon.
\end{align*}
The \ac{icl} loss for \ac{msa} is lower bounded by the maximum of
\begin{align*}
    \frac{\lambda}{ \phi^{-1} d^{-1} L \cdot (H-1) +1 } - O(L^{-\epsilon/2}), 
\end{align*}
and the Bayesian risk 
\begin{equation}
    \begin{aligned}
        \mathrm{Bayesian~Risk} & = \mathrm{Variance} + \mathrm{Bias}, \where  \\
        \mathrm{Variance} &= I \sigma^2 \cdot \frac{br + (1 + r) - \sqrt{(br - 1 + r)^2 + 4 b}}{2 \sqrt{(br- 1 + r)^2 + 4 b}}, \\
        \mathrm{Bias} &= \lambda \cdot \left(\frac{br (1+r) + (1 - r)^2 - |1-r|\sqrt{(br - 1 + r)^2 + 4 b r}}{2 \sqrt{(br- 1 + r)^2 + 4 b}} + \left(1 - \frac 1 r\right) \ind (r > 1)\right). 
    \end{aligned}
    \notag
\end{equation}
\end{theorem}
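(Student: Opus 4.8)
\textbf{Proof proposal for Theorem~\ref{thm:optimality of msa}.} The plan is to establish the two lower bounds separately and then take their maximum. The Bayesian risk bound is immediate: any measurable predictor of $y_q = G^\top q$ given the context $(Z, z_q)$ has mean-squared error at least the Bayes risk $\cR_{\mathrm{Bayes}}$ under the Gaussian prior $g_i \iidfrom \cN(0, \lambda/d \cdot I_{\bard})$, and the softmax attention output is one such predictor. To get the explicit formula for $\cR_{\mathrm{Bayes}}$, I would compute the posterior mean of $G^\top q$ given $Z$. Since the prior is Gaussian and the likelihood $y_l = G^\top x_l + \varepsilon_l$ is Gaussian in $G$, the posterior over each task block $g_i$ is Gaussian with a ridge-type mean $\hat g_i = (X_{(i)} X_{(i)}^\top + (\sigma^2 d/\lambda) I_{\bard})^{-1} X_{(i)} Y_i^\top$; the Bayes risk then decomposes into a variance term (from the posterior covariance contracted against $q$) and a bias term (the part of $G^\top q$ not recoverable, which is nonzero only when $\bard > L$, i.e. $r > 1$, explaining the $\ind(r>1)$ factor). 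The asymptotic expressions for Variance and Bias follow from the Marchenko--Pastur law applied to the sample covariance $X_{(i)} X_{(i)}^\top/L$ with aspect ratio $r = \bard/L$; the square-root terms $\sqrt{(br - 1 + r)^2 + 4b}$ are exactly the Stieltjes-transform evaluations of the MP distribution at the ridge parameter $b = \sigma^2/\lambda$. This part is essentially bookkeeping with random matrix theory and I expect no serious difficulty.

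The harder bound is $\text{\MultiheadICLLB} = \lambda/(\phi^{-1}d^{-1}L(H-1)+1)$, which is the genuinely new content and uses the equiangular and bounded-norm structure. Following the proof sketch in \S\ref{sec:implicit_regularization}, I would first invoke the loss decomposition that isolates task-wise performance: under decomposable (and in particular equiangular) weights operating in the exponential regime,
\[
    \cL(\mu, \baromega) = \frac{\lambda}{I}\sum_{i=1}^I \cL_i(\mu, \baromega) + O(L^{-\epsilon/2}), \quad \cL_i(\mu, \baromega) = 1 - 2\mu_i^\top \baromega_i + \mu_i^\top (\baromega_i \baromega_i^\top + B)\mu_i,
\]
where $\mu_i, \baromega_i \in \RR^H$ collect the head-wise eigenvalues for task $i$ and $(B)_{hh'} = \phi \exp(\bard\langle \baromega^\h, \baromega^\hprime\rangle)/L$. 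The justification of this decomposition is exactly the Stein-lemma moment analysis of \Cref{lemma: pseudo-dynamics}, \Cref{lemma: higher-order-moment}, and \Cref{fact:derivatives for B terms}, which requires the norm conditions $\norm{\baromega^\h}_\infty \le \sqrt{2\log L/3dc^2}$ and $\norm{\mu^\h}_\infty \le L^{3/4-\epsilon/2}$ to control the error and the approximation $\EE[\norm{p^\h}_2^2] \approx \exp(\norm{\omega^\h}_2^2)/L$. Then I would perform the minimization over $\mu_i$ for fixed $\baromega$: since $\cL_i$ is a convex quadratic in $\mu_i$ with Hessian $2(\baromega_i\baromega_i^\top + B)$, the minimizer gives $\min_{\mu_i}\cL_i = 1 - \baromega_i^\top (\baromega_i\baromega_i^\top + B)^{-1}\baromega_i$. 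Using the Sherman--Morrison formula this equals $1/(1 + \baromega_i^\top B^{-1}\baromega_i)$, so the task-$i$ loss is minimized by \emph{maximizing} $\baromega_i^\top B^{-1}\baromega_i$.

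The final step exploits equiangularity: by \Cref{def:equiangular}, $B = (\tilde a - \tilde b) I + \tilde b E$ where $E$ is all-ones, $\tilde a = \phi\exp(\bard a)/L$, $\tilde b = \phi\exp(\bard b)/L$, so $B^{-1}$ has the explicit rank-one-plus-scaled-identity form, and $\baromega_i^\top B^{-1}\baromega_i$ can be bounded by decomposing each $\baromega_i$ into its component along $\vone_H$ and its orthogonal complement. The key combinatorial constraint is that the $H$ vectors $\{\baromega_i\}$ live in $\RR^H$ and share the Gram structure imposed by $\langle \baromega^\h, \baromega^\hprime\rangle = b$ for $h\neq h'$ (note $\baromega_i^\h$ here is indexed differently from $\baromega^\h$ — I need to be careful that $\baromega_i$ is the $i$-th coordinate slice across heads, and the constraint $a = \norm{\baromega^\h}_2^2 = \sum_i \bard (\baromega_i^\h)^2$ couples all tasks). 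Summing $\min_{\mu_i}\cL_i$ over $i$ and optimizing over the scalar parameters $\tilde a, \tilde b$ (equivalently over $a, b$), I would show the sum is at least $I \cdot \lambda/(\phi^{-1}d^{-1}L(H-1) + 1)$ up to the $O(L^{-\epsilon/2})$ error, which after the $\lambda/I$ normalization gives the claimed bound. \textbf{The main obstacle} I anticipate is controlling the nonlinear dependence of $B$ on $a$ and $b$ through the exponentials $\exp(\bard a), \exp(\bard b)$ while simultaneously respecting the global norm budget — the minimization over $(a,b)$ is a genuinely nonconvex two-dimensional problem, and I expect to need a careful case analysis (as in the single-head proof, separating an "exponential regime" where $\bard a = O(1)$ from a saturation regime ruled out by \Cref{lemma:SS-Attn lb-2} and \Cref{lemma:SS-Attn lb}) together with the observation that at the optimum the off-diagonal $b$ should be as negative as the Gram constraint allows, which is what produces the factor $(H-1)$ in the denominator.
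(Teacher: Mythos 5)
Your proposal follows essentially the same route as the paper's proof: the task-wise quadratic decomposition of the loss, minimization over $\mu_i$ to reduce to $1-\baromega_i^\top(\baromega_i\baromega_i^\top+B)^{-1}\baromega_i$, the explicit structure of $B=(\tilde a-\tilde b)I+\tilde b E$ with the decomposition of $\baromega_i$ along $\vone_H$ and its orthogonal complement, the two-dimensional scalar optimization in which the off-diagonal correlation is pushed to the boundary $a+(H-1)b=0$ to yield the factor $(H-1)$, and the trivial Bayes-risk bound evaluated via Marchenko--Pastur. The only small discrepancy is that the saturation-regime case analysis you anticipate is not needed here, since the hypothesis $\norm{\baromega^\h}_\infty\le\sqrt{2\log L/3dc^2}$ already confines the weights to the exponential regime; the paper resolves the $(x,y)$-optimization by a direct derivative computation (plus a Jensen step to average over tasks) rather than by regime splitting.
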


\begin{proof}[Proof of \Cref{thm:optimality of msa}]
Under the decomposability assumption, we have the loss function as 
\begin{align*}
    \cL(\mu, \baromega) &= \EE\left[
        q^\top G G^\top q - 2 q^\top G \sum_{h\in[H]}U_Y^\h (G^\top X + \varepsilon) p^\h \right.\nend 
        &\qqquad \left.+ \sum_{h, h'\in[H]} {p^\h}^\top (X^\top G + \varepsilon^\top) {U_Y^\h}^\top U_Y^\hprime (G^\top X + \varepsilon) p^\hprime
    \right] \nend 
    &=\lambda - \frac{2 \lambda}{d} \sum_{h\in[H]} \trace\left((\diag(\mu^\h)\kron I_{\bard} ) \EE[X p^\h q^\top]\right) \nend 
    &\qqquad + \frac{\lambda}{d} \sum_{h, h'\in[H]} \trace\left((\diag(\mu^\h\odot \mu^\hprime)\kron I_{\bard} ) \EE[X p^\h {p^\hprime}^\top X^\top ]\right) \nend
    &\qqquad + \sum_{h, h'\in[H]} \sigma^2 \langle \mu^\h, \mu^\hprime\rangle \EE[{p^\h}^\top p^\hprime] \nend 
    &= \lambda - \frac{2 \lambda}{I} \sum_{h\in[H]} \EE\left[\langle \mu^\h, \baromega^\h \odot  v^2 \rangle (1 - f_3^\handh)\right] \nend 
    &\qqquad + \frac{\lambda}{I} \sum_{h, h'\in[H]} \EE\left[\bigl\langle \mu^\h \odot \mu^\hprime, \baromega^\h\odot \baromega^\h \odot v^2  f_1^\handhprime + \baromega^\hprime \odot \baromega^\hprime \odot v^2 f_1^\hprimeandh \bigr\rangle\right]  \nend
    &\qqquad + \frac{\lambda}{I} \sum_{h, h'\in[H]} \EE\left[\bigl\langle \mu^\h \odot \mu^\hprime, \baromega^\h\odot \baromega^\hprime \odot v^2  f_2^\handhprime \bigr\rangle + \left(1+\frac{\sigma^2 I}{\lambda}\right)\langle \mu^\h, \mu^\hprime\rangle f_3^\handhprime\right]
\end{align*}
where $\kron$ denotes the Kronecker product, and $\odot$ denotes the element-wise product.
Here, we denote by $v^2$ a $d$-dimensional random vector where each entry is drawn $\iid$ from the chi-square distribution with degree $1$. 
We use the following definition for the last equality, 
\begin{align*}
    f_1^\handhprime & \defeq \EE\left[(p^\hprime)^\top (p^\h)^{\odot 2} - (p^\hprime)^\top p^\h \norm{p^\h}_2^2 \given q \right]\nend 
    f_2^\handhprime & \defeq \EE\left[(1 - \norm{p^\hprime}_2^2) (1 - \norm{p^\h}_2^2) + (p^\h)^\top p^\hprime - (p^\hprime)^\top (p^\h)^{\odot 2} - (p^\hprime)^\top (p^\h)^{\odot 2} + ({p^\h}^\top p^\hprime)^2\given q \right] \nend 
    f_3^\handhprime & \defeq \EE\left[(p^\h)^\top p^\hprime\right],
\end{align*}
where the relationship between these terms to 
$\EE[X p]$ and 
$\EE[X p^\h {p^\hprime}^\top X^\top ]$ is given by \Cref{fact:derivatives for B terms}. 
As a matter of fact, both $f_1$ and $f_2$ are functions of $q$. 
Note that $f_1^\handhprime$ is actually a higher order term, and according to our regime where $\norm{\baromega^\h}_\infty \le \sqrt{2 \log L / 3 d c^2}$, we have all the conditions in \Cref{lemma: higher-order-moment} and \Cref{lemma: E[pq]-moment} satisfied and thus 
\begin{gather*}
    |\EE[v_i^2 \cdot f_1^\handhprime]| \le \sqrt{\EE[v_i^4] \cdot \EE[(f_1^\handhprime)^2]} = O(L^{-2(1-\epsilon)}), \\
    |\EE[v_i^2 \cdot f_2^\handhprime] - 1| \le \sqrt{\EE[v_i^4] \cdot \EE[(f_2^\handhprime -1)^2]} = O(L^{-1}), \\
    \left|\EE[f_3^\handhprime] - \frac{\exp(\bard\langle \baromega^\h, \baromega^\hprime \rangle)}{L}\right| \le \sqrt{\EE\left[\left(f_3^\handhprime -\frac{\exp(\bard\langle \baromega^\h, \baromega^\hprime \rangle)}{L}\right)^2\right]} = O(L^{-(3-\epsilon)/2}). 
\end{gather*}
Note that it is hard to optimize for $\mu$ for each individual head. 
However, we note that by the \DC, we can instead optimize for each task separately.
In particular, the individual loss of task $i$ is given by
\begin{align*}
    \cL_i(\mu, \baromega) 
    &\defeq 1 -  2 \mu_i^\top \baromega_i   +  \mu_i^\top (\baromega_i \baromega_i^\top + B) \mu_i .
\end{align*}
The $B\in\RR^{H\times H}$ matrix in the above equation is given element-wise by
\begin{align*}
    (B_i)_{h h'} \defeq \phi_i L^{-1} \exp(\bard \langle \baromega^\h, \baromega^\hprime \rangle).
\end{align*}
Here, we denote by $\mu_i = (\mu_i^\h)_{h\in[H]}$ and $\baromega_i = (\baromega_i^\h)_{h\in[H]}$ the eigenvalues of the attention weights for task $i$.
Here, we drop the energy term $ {\lambda}/{I}$  in the definition. 
Note that we also ignore the higher order terms by suffering from an error of $O(L^{-\epsilon/2})$ given the scale of $\mu$ and $\baromega$.
As a result, the relationship between $\cL$ and $\cL_i$ is given by
\begin{align*}
    \cL(\mu, \baromega) = \frac{\lambda}{I}\sum_{i=1}^I \cL_i(\mu, \baromega) + O(L^{-\epsilon/2}).
\end{align*}

\paragraph{Lower Bound for the \ac{icl} Loss.}
We define an operator $\cR$ that applies the following elementwise transformation to a matrix $A$: 
\begin{align*}
    (\cR\circ A)_{m n} =  \phi_i L^{-1} \exp(\bard A_{m n}). 
\end{align*}
Let $M = (\baromega^{(1)}, \dots, \baromega^{(H)}) \in \RR^{I\times H}$. 
To this end, we can rewrite $B$ as 
\begin{align*}
    B = \cR\circ ( M^\top M).
\end{align*}
Solving for this quadratic problem, we obtain 
\begin{align}
    \mu_i^\star = (\baromega_i \baromega_i^\top + B)^{-1} \baromega_i,
    \label{eq:optimal-mu}
\end{align}
and the corresponding loss
\begin{align*}
    \cL_i(\baromega) \defeq 1 - \baromega_i^\top (\baromega_i \baromega_i^\top + B)^{-1} \baromega_i. 
\end{align*}
Note that although we restrict $\norm{\mu^\h}_\infty \le L^{3/4 - \epsilon/2}$, the optimal $\mu_i^\star$ in \Cref{eq:optimal-mu} can be much larger than $L^{3/4 - \epsilon/2}$, which may lead to a looser lower bound. 
However, as we will show later, this only leads to an additional constant multiplicative factor in the lower bound.
By property of the equiangular weights, $M^\top M$ has the same value for the diagonal elements, which we denote by $a$, and also the same value for off-diagonal elements, which we denote by $b$.
The same also holds for $B$, and we denote the value for the diagonal elements by $\tilde a$ and the value for the off-diagonal elements by $\tilde b$.
The quadratic term can be simplified as
\begin{align*}
    A_i \defeq \baromega_i \baromega_i^\top + (\tilde a - \tilde b) I + \tilde b E . 
\end{align*}
We note that 
\[
    \varUpsilon_i  = \left(\frac{\vone_H}{\sqrt H}, \:
    \baromega_i - \frac{\vone_H^\top \baromega_i}{H} \cdot \vone_H,\: \dots \right)
\]
forms an orthogonal basis. Let $q_i = \vone_H^\top \baromega_i /\sqrt H$ be the inner product of $\vone_H/\sqrt H$ and $\baromega_i$ 
and  $\tilde\omega_i = \baromega_i - q_i \cdot \vone_H/\sqrt H$ be the orthogonal projection of $\baromega_i$ onto the orthogonal complement of $\vone_H/\sqrt H$.
We thus have 
\begin{align*}
    \varUpsilon_i^\top A_i \varUpsilon_i = \begin{bNiceArray}{ccc}
        {q_i^2 + \tilde a + (H-1)\tilde b} & q_i \tomegainorm & 0\\
        q_i \tomegainorm & {\tomegainorm^2 + \tilde a -\tilde b} & 0\\ 
        0 & 0 & (\tilde a - \tilde b) I_{H-2}
    \end{bNiceArray}. 
\end{align*}
The inverse of the above matrix is given by 
\begin{align*}
    (\varUpsilon_i^\top A_i \varUpsilon_i)^{-1} = 
    \begin{bNiceArray}{ccc}
        \frac{{\tomegainorm^2 + \tilde a -\tilde b}}{\Delta} & -\frac{q_i \tomegainorm}{\Delta } & 0\\
        -\frac{q_i \tomegainorm}{\Delta} & \frac{ {q_i^2 + \tilde a + (H-1)\tilde b}}{\Delta} & 0\\ 
        0 & 0 & \frac{I_{H-1}}{\tilde a - \tilde b}
    \end{bNiceArray}.
\end{align*}
where 
\begin{align*}
    \Delta &= (q_i^2 + \tilde a + (H-1)\tilde b)(\tomegainorm^2 + \tilde a -\tilde b) - q_i^2 \tomegainorm^2 \nend 
    & = (\tilde a+ (H-1) \tilde b) \tomegainorm^2 + q_i^2 (\tilde a - \tilde b) + (\tilde a + (H-1) \tilde b)(\tilde a - \tilde b)
\end{align*}
is the determinant of top left $2\times 2$ submatrix of $\varUpsilon_i^\top A_i \varUpsilon_i$.
Therefore, 
\begin{align*}
    \baromega_i^\top A_i^{-1} \baromega_i &= (\varUpsilon_i^\top \baromega_i)^\top (\varUpsilon_i^\top A_i \varUpsilon_i)^{-1} (\varUpsilon_i^\top \baromega_i) \nend 
    & = \begin{bNiceMatrix}
        q_i & \tomegainorm & 0
    \end{bNiceMatrix} 
    \cdot 
    \begin{bNiceArray}{ccc}
        \frac{{\tomegainorm^2 + \tilde a -\tilde b}}{\Delta} & -\frac{q_i \tomegainorm}{\Delta } & 0\\
        -\frac{q_i \tomegainorm}{\Delta} & \frac{ {q_i^2 + \tilde a + (H-1)\tilde b}}{\Delta} & 0\\ 
        0 & 0 & \frac{I_{H-1}}{\tilde a - \tilde b}
    \end{bNiceArray}
    \cdot 
    \begin{bNiceMatrix}
        q_i \\ \tomegainorm \\ 0
    \end{bNiceMatrix} \nend 
    & = \frac{q_i^2 (\tomegainorm^2 + \tilde a - \tilde b) + \tomegainorm^2 (q_i^2 + \tilde a + (H-1)\tilde b) - 2 q_i^2 \tomegainorm^2 }{\Delta} \nend
    & = 1 - \frac{(\tilde a - \tilde b)(\tilde a + (H-1)\tilde b)}{\Delta} .
\end{align*}
As a result, we have
\begin{align}
    &\frac{1}{I}\sum_{i=1}^I \cL_i(\baromega) \nend
    &\quad= \frac{1}{I}\sum_{i=1}^I \frac{(\tilde a - \tilde b)(\tilde a + (H-1)\tilde b)}{(\tilde a+ (H-1) \tilde b) \tomegainorm^2 + q_i^2 (\tilde a - \tilde b) + (\tilde a + (H-1) \tilde b)(\tilde a - \tilde b)} \nend 
    &\quad\ge \frac{(\tilde a - \tilde b)(\tilde a + (H-1)\tilde b)}{(\tilde a+ (H-1) \tilde b)  \cdot \frac{1}{I}\sum_{i=1}^I \tomegainorm^2 + \frac{1}{I}\sum_{i=1}^I q_i^2 (\tilde a - \tilde b) + (\tilde a + (H-1) \tilde b)(\tilde a - \tilde b)} 
    \label{eq: lower bound for L_i}
    \\
    &\quad = \frac{(\tilde a - \tilde b)(\tilde a + (H-1)\tilde b)}{(\tilde a+ (H-1) \tilde b)  \cdot I^{-1} (H-1)(  a -  b) + I^{-1}(a + (H-1)b)(\tilde a - \tilde b) + (\tilde a + (H-1) \tilde b)(\tilde a - \tilde b)} \notag,
\end{align}
where in the inequality we use the Jensen's inequality for the convex function $x\mapsto 1/(1+x)$.
In the last equality, we note that 
\[
    \sum_{i=1}^I \norm{\baromega_i}_2^2 = \trace(M^\top M) = H a, \quad \sum_{i=1}^I q_i^2 = \sum_{i=1}^I \norm{\vone_H^\top \baromega_i /\sqrt H}_2^2 = H^{-1} \trace(E M^\top M) = a + (H-1) b. 
\]
As a result, $I^{-1}\sum_{i=1}^I \tomegainorm^2 = I^{-1} \sum_{i=1}^I \norm{\baromega_i}_2^2 - I^{-1}\sum_{i=1}^I q_i^2 = I^{-1}(H-1)(a-b)$ and $I^{-1}\sum_{i=1}^I q_i^2 = I^{-1}(a + (H-1) b)$.
By a direction calculation, we also have
\begin{align*}
    \frac{1}{I} \sum_{i=1}^I \cL_i(\baromega) 
    &\ge \frac{1}{ \frac{(H-1)( a- b)}{I (\tilde a -\tilde b)} + \frac{a+(H-1)b}{I(\tilde a + (H-1)\tilde b)} + 1} \nend 
    &= \frac{1}{ \tilde b^{-1}\cdot \frac{(H-1)( a- b)}{I (\tilde a /\tilde b - 1)} + \tilde b^{-1} \cdot \frac{a+(H-1)b}{I(\tilde a/\tilde b + (H-1))} + 1}.
\end{align*}
Now, we plug in the definition $\tilde a = \phi L^{-1} \exp(\bard a)$ and $\tilde b = \phi L^{-1} \exp(\bard b)$, and we have
\begin{align*}
    \frac{1}{I} \sum_{i=1}^I \cL_i(\baromega)
    &\ge \frac{1}{ I^{-1}\phi^{-1} L \exp(-\bard b) \cdot \left(\frac{(H-1)( a- b)}{\exp(\bard (a-b))- 1} +  \frac{a+(H-1)b}{\exp(\bard (a-b)) + H-1} \right) + 1}. 
\end{align*}
To this end, we define $x= \bard (a-b)$ and $y = \bard b$.
Our target is thus to optimize: 
\begin{align*}
    \max_{x, y} g(x, y) \defeq \exp(-y) \cdot \left(\frac{(H-1)x}{\exp(x)- 1} +  \frac{x + Hy}{\exp(x) + H-1} \right).
\end{align*}
However, there are constraints we need to consider, given the nonnegativity of the energy: 
\begin{align*}
    \sum_{i=1}^I \tomegainorm^2 = (H-1)(a-b)\ge 0 \quad &\Rightarrow  & x \ge 0.\\
    \sum_{i=1}^I q_i^2 = a + (H-1)b \ge 0 \quad &\Rightarrow & x + H y \ge 0. 
\end{align*}
We first optimize $y$ given $x$.
The partial derivative of $g(x, y)$ with respect to $y$ is given by
\begin{align*}
    \frac{\partial g(x, y)}{\partial y} = C(x,y) \cdot \left(1 + (H-2) x - y e^x (x + y -1)\right), 
\end{align*}
where $C:\RR\times\RR\mapsto \RR_+$ is a positive function. 
Let 
\[
    y_0 = - \frac{e^x(x-1) + 1 + (H-2) x}{e^x -1}. 
\]
Note that the above function increases when $y\le y_0$ and decreases when $y\ge y_0$.
Our constraint also requires that $y\ge -x/H$.
We just need to decide which one is larger between $-x/H$ and $y_0$. 
Suppose $y_0 > - x/H$. 
With a little bit of algebra, we have this condition equivalent to
\[
    \frac{H}{H-1} > \left(1 + \frac{H}{e^x -1}\right) x 
\]
given the nonnegativity of $x$.
Note that the left hand side decreases to $1$ as $H$ increases, and the right hand side increases. Thus, we just need to check the case when $H=2$, and we consider another function 
\[
    g_1(x) = x + \frac{2x}{e^x -1} -2.
\]
Notably, this function is increasing as $x \ge 0$, and the minimal value is achieved at $x=0$ which gives $g_1(0) = 0$.
Thus, we have $y_0 > - x/H$ impossible for $H\ge 2$.
As a result, we conclude that the optimal $y$ should always be 
$
    y^\star(x) \defeq -\frac{x}{H}.
$
The remaining is just a simple plug in, which gives 
\begin{align*}
    g(x, y^\star) = \exp\left(\frac{x}{H}\right) \cdot \frac{(H-1)x}{\exp(x)- 1}, 
\end{align*}
where the optimal $x$ is given by finding the maximum of the above function when $x \ge 0$. 
Let the optimal $x$ be $x^\star$.
Clearly $0 < x^\star \le 1$ as for $x=0$, $\exp(x/H)x$ has a faster growing rate than $\exp(x)$ in a sufficiently small region. For the other side, it is not hard to see that function decreases for $x \ge 1$. 
We have the loss lower bounded by 
\begin{align}
    \cL(\mu, \baromega) &\ge  \frac{\lambda}{I}\sum_{i=1}^I \cL_i(\mu, \baromega) - O(L^{-\epsilon/2}) \nend 
    &\ge \frac{\lambda}{ I^{-1}\phi^{-1} \bard^{-1} L \cdot \max_{x\ge 0}\frac{\exp(x/H) (H-1)x}{\exp(x)-1} + 1} - O(L^{-\epsilon/2}) \nend
    &= \frac{\lambda}{ \phi^{-1} d^{-1} L \cdot \max_{x\ge 0}\frac{\exp(x/H) (H-1)x}{\exp(x)-1} + 1} - O(L^{-\epsilon/2}). 
    \label{eq:lower-bound-multihead-asi}
\end{align}
Note that the loss in \eqref{eq:lower-bound-multihead-asi} cannot be better than the loss achieved by the Bayesian posterior mean estimator. 
Let $b = \sigma^2 /\lambda $. 
The Bayesian risk in our case is just the \ac{mmse}. 
Asymptotically, i.e., $d\rightarrow \infty$ and $L \rightarrow \infty$ with $\bard/L\rightarrow r$,  the \ac{mmse} is given by
\begin{equation}
\begin{aligned}
    \mathrm{Bayesian~Risk} & = \mathrm{Variance} + \mathrm{Bias}, \where  \\
    \mathrm{Variance} &= I \sigma^2 \cdot \frac{br + (1 + r) - \sqrt{(br - 1 + r)^2 + 4 br}}{2 \sqrt{(br- 1 + r)^2 + 4 br}}, \\
    \mathrm{Bias} &= \lambda \cdot \left(\frac{br (1+r) + (1 - r)^2 - |1-r|\sqrt{(br - 1 + r)^2 + 4 b r}}{2 \sqrt{(br- 1 + r)^2 + 4 br}} + \left(1 - \frac 1 r\right) \ind (r > 1)\right). 
\end{aligned}
\owntag{MMSE \& Ridge}
\label{eq:asymptotic-bayesian-risk}
\end{equation}
where $r = \bard/L = d/(I L)$.
Here, characterizing the asymptotical behavior of the \ac{mmse} is just a matter of calculating the following variance and bias term in the limit $\bard/L \rightarrow r$:
\begin{align*}
    \mathrm{Variance} &\defeq \sum_{i=1}^I \sigma^2 \EE\left[\trace\left(
        \left(X_{(i)} X_{(i)}^\top + b d I_{\bard}\right)^{-1} X_{(i)} X_{(i)}^\top \left(X_{(i)} X_{(i)}^\top + b d I_{\bard}\right)^{-1}\right)
    \right], \nend 
    \mathrm{Bias} &\defeq \sum_{i=1}^I \frac{\lambda}{d} \cdot \EE\left[\trace\left(
        \left(\left(X_{(i)} X_{(i)}^\top + b d I_{\bard}\right)^{-1} X_{(i)} - I_{\bard}\right) \left(\left(X_{(i)} X_{(i)}^\top + b d I_{\bard}\right)^{-1} X_{(i)} - I_{\bard}\right)^\top\right)
    \right].
\end{align*}
Let $\nu$ be the Marchenko-Pastur distribution \citep{marchenko1967distribution} for the eigenvalues of $X_{(i)} X_{(i)}^\top / L$, which is the same for different $i$ since we assume the homogeneity of the tasks. 
Therefore, we have
\begin{align*}
    \mathrm{Variance} = \sigma^2 d \cdot \EE_{s\sim \nu}\left[\frac{L s}{(L s + b d)^2}\right], 
    \mathrm{Bias} = \lambda \cdot \EE_{s\sim \nu}\left[\frac{(bd)^2}{(L s + b d)^2}\right], 
\end{align*}
which gives the result in \eqref{eq:asymptotic-bayesian-risk}. 
In particular, if $r \rightarrow 0$, i.e., large sequence length, the \ac{mmse} is approximately given by 
This completes the proof.
\end{proof}

\paragraph{Discussion of the \ac{icl} Loss Lower Bound.}
Note that throughout the proof of the lower bound, the first inequality we are using is \eqref{eq: lower bound for L_i}, where equality holds if
\[
    \norm{\tilde\omega_i}_2^2 = \norm{\tilde\omega_j}_2^2 \quad \text{for all } i, j\in[I] \quad \text{and} \quad q_i^2 = q_j^2 \quad \text{for all } i, j\in[I], 
\]
and the second optimality condition is $y^\star = -x^\star/H$ with $x^\star$ solved by $\argmax_{x\ge 0}\frac{\exp(x/H) (H-1)x}{\exp(x)-1}$.
Note that $x +  Hy = 0 \Rightarrow a + (H-1)b = 0 \Rightarrow \sum_{i=1}^I q_i^2 = 0$, which means that the projection of $\baromega_i$ onto $\vone_H/\sqrt H$ is zero for all $i\in[I]$. 
Therefore, we just need to ensure $\norm{\baromega_i}_2^2 = \norm{\baromega_j}_2^2$ for all $i, j\in[I]$.
Consider matrix $M = (\baromega^{(1)}, \dots, \baromega^{(H)}) \in \RR^{I\times H}$.
Our condition is just saying that 
\begin{myenumi}
    \item the row sum of $M$ is $0$. 
    \item the row norm of $M$ is the same for all rows. 
    \item the columns of $M$ form an equiangular system.
    \item $\trace(M^\top M) = (H-1) x^\star$. 
\end{myenumi}
However, observant readers might have noticed that the optimal $x^\star$ is always $0$ for $H\ge 2$. 
This is not realistic as $\trace(M^\top M) =0$ just means $W=0$, which is not learning anything.
We remark that this issue is due to our approximation of the loss function's landscape and allowing violation of the $\norm{\mu^\h}_\infty \le L^{3/4 - \epsilon/2}$ condition in \eqref{eq:optimal-mu}.
However, we remark that \eqref{eq:lower-bound-multihead-asi} still gives us a correct lower bound on the \ac{icl} rate, where the difference between letting $x\rightarrow 0$ and $x = \Theta(1)$ is just a loss of a constant multiplicative factor as we can clearly see from \eqref{eq:lower-bound-multihead-asi}. 
Although condition (iv) is not realistic, the remaining conditions are indeed realistic and achievable by the gradient flow of the \ac{msa} as we will show in the experiment.


\subsection{Comparison}
\label{sec:icl loss comparison}
In the following discussion, we neglect some error terms for simplicity and only consider the regime where $L/d$ beats the $\snr$, i.e., $r = \bard/L =o(\lambda/\sigma^2)$. 
Note that when we are having a single head to solve the multitask problem, the optimal rate given by \Cref{thm:global-optimality-single-head-multitask} is
\begin{equation}
    \frac{\lambda}{e^{-1} \phi^{-1} d^{-1} L + 1} \approx  e (\lambda + \sigma^2 I) \cdot \frac{d}{L}, 
    \owntag{\SingleheadICLOpt}
    \label{OPT-single head}
\end{equation}
while the lower bound in \eqref{eq:lower-bound-multihead-asi} is given by 
\begin{equation}
    \frac{\lambda}{ \phi^{-1} d^{-1} L \cdot (H-1) + 1} \approx (H-1)^{-1} \cdot (\lambda + \sigma^2 I) \cdot \frac{d}{L}. 
    \owntag{\MultiheadICLLB}
    \label{LB-multihead}
\end{equation}
The \ac{icl} loss achieved by the convergence point of the gradient flow of the \ac{msa} with $H\ge I$ characterized by \Cref{lem:msa icl loss-mainbody}, 
\begin{equation}
    \frac{\lambda}{I e^{-1}\phi^{-1} d^{-1} L + 1} \approx e I^{-1} \cdot (\lambda + \sigma^2 I) \cdot \frac{d}{L}.
    \owntag{\ConvergenceICL}
    \label{GF-multihead}
\end{equation}
Moreover, the Bayesian risk is given by \eqref{eq:asymptotic-bayesian-risk}, which for $r = \bard/L \le 1$ is approximately given by the variance term 
\begin{equation}
    \frac{I\sigma^2 r}{1 + (b-1)r} = \frac{I \sigma^2}{I \cdot d^{-1} L + \sigma^2/\lambda - 1} \approx \sigma^2 \cdot \frac{d}{L}.
    \tag{MMSE}
    \label{Bayesian risk}
\end{equation}
In addition, when using a ridge regression with regularization parameter $bd$ for each task, i.e., 
\[
    \hat g_i = \argmin_{g\in \RR^{\bard}} \left\|Y_i - g^\top X_{(i)}\right\|_2^2 + b d \cdot \norm{g}_2^2, 
\]
and use the minimizer as the estimator for the query,
we still have the rate given by \eqref{eq:asymptotic-bayesian-risk} ($b$ functions as the regularization parameter).
Notably, as the number of tasks $I$ increases, the \ac{icl} loss achieved by the gradient flow of the \ac{msa} is getting closer to the \ac{mmse} but only with a constant multiplicative factor gap.
There is clear gap between the single head and the multihead \ac{icl} loss. 
\begin{figure}[ht]
    \centering
    \includegraphics[width=0.4\textwidth]{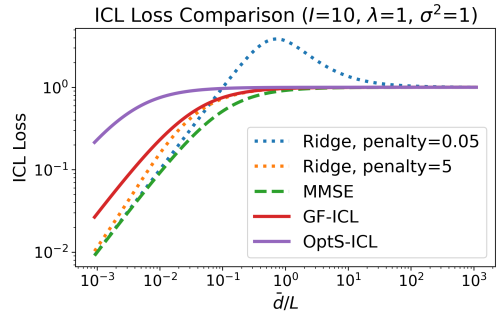}
    \caption{Comparison of loss functions}
\end{figure}


\newpage
\section{Generalization}
\label{sec:generalization-proof}
In this section, we provide generalization results both to a new sequence length and nonlinear tasks. 
Before we proceed, we first introduce the basic definition of multivariate Hermite polynomials and some additional notations.
\subsection{Length Generalization}
\label{sec:length-generalization}
We first consider generalization of the optimal single-head softmax attention to a new sequence length $\tilde L$.
\begin{proof}[Proof of \Cref{cor:length generalization}]
    The proof is a combination of \Cref{cor:single task icl loss} and the proof of \Cref{lem:msa icl loss}.
    Note that for a new sequence length $\tilde L$, the optimal $W_X$ remains unchanged. Also, we can equivalently view $U_Y$ as a perturbation of the optimal $U_Y$ under $\tilde L$.
    Let $\baromegaistar$ and $\muistar$ be the optimal (nonzero) eigenvalues for $W_X^{\histar}$ and $U_Y^\histar$ at task $i$'s position under the new sequence length $\tilde L$.
    Let $\baromega_i^\h$ and $\baromega_i^\h$ be the eigenvalues at the convergence point of the gradient flow trained under the original sequence length $L$.
    In particular, 
    \begin{gather*}
        \baromegaistar = d_i^{-1/2}, \quad \muistar = \frac{\sqrt{d_i}}{1 + e \phi_i d_i/\tilde L}, \quad 
        \left|\frac{\baromega_i^\histar}{d_i^{-1/2}} - 1 \right| \le O(\delta), \quad \left|\frac{\mu_i^\histar}{\sqrt{d_i}/(1 + e \phi_i d_i L^{-1})} - 1 \right| \le O(\delta). \\
        \baromega_i^\h = O(\omega_0), \quad \mu_i^\h = \frac{\sqrt{d_i}}{1 + e \phi_i d_i L^{-1}} \cdot \exp(- O(\sqrt d d_i \phi_i L^{-1})), 
        \quad \forall h\in [H]\setminus\{h_i^\star\}.
    \end{gather*}
    Therefore, we have for the convergence point of the gradient flow,
    \begin{gather*}
        \left|\frac{\baromega_i^\histar}{\baromegaistar} - 1\right| \le O(\delta),\quad  
        \left|\frac{\mu_i^\histar}{\muistar} - 1\right| \le O(\delta) \cdot \frac{1 + e \phi_i d_i L^{-1}}{1 + e\phi_i d_i \tilde L^{-1}} + e \phi_i d_i |L^{-1} - \tilde L^{-1}|.
    \end{gather*}
    We observe that the only difference from the previous proof is the last error in $\mu_i^\histar$.
    Therefore, the only difference is in the first kind of error concerning the optimal head previously described by \eqref{eq:msa convergence icl loss-1}. 
    In this case, we have following \Cref{cor:single task icl loss} to get \eqref{eq:msa convergence icl loss-1} replaced by
    \begin{align*}
        \cL_i^{(1)} \le \frac{\lambda_i d_i}{d}\cdot \left(\frac{e d_i \phi_i \tilde L^{-1}}{1 + e d_i \phi_i \tilde L^{-1}} + O(\delta + \omega_0^2 d + L^{-(1-\epsilon)/2}) + O ( e \phi_i d_i |L^{-1} - \tilde L^{-1}|)^2 \right) \le O(1). 
    \end{align*}
    Therefore, we have the result in \Cref{lem:msa icl loss} replaced by
    \[
        \cL_\convergence \le \sum_{i=1}^I \frac{\lambda_i d_i}{d}\cdot \left(\frac{e d_i \phi_i L^{-1}}{1 + e d_i \phi_i L^{-1}} +  O(L^{-(1-\epsilon)/2} + \omega_0^2 d + \delta) + O (\phi_i d_i |L^{-1} - \tilde L^{-1}|)^2\right). 
    \]
    Hence, we have the result in \Cref{cor:length generalization}.
\end{proof}

\subsection{Generalization to Nonlinear Task}
\label{sec:nonlinear-generalization}
In this section, we provide proof for the nonlinear generalization of the optimal single-head softmax attention for a single task.
\begin{proof}[Proof of \Cref{lem:nonlinear generalization}]
Since at the convergence point of the gradient flow, each optimal head is handling a unique task as the optimal single-head softmax attention, the generalization for the multi-head case is consequently implied by the single-head case.
To this end, we consider a nonlinear task $y = f(x)$, where $f$ is a nonlinear function.
Suppose $f$ has degree at most $D$ in the sense that $f$ is a linear combination of $d$-dimensional multivariate Hermite polynomials $\{\herm_\alpha \given \alpha\in\NN^d, \: |\alpha| = \sum_{i=1}^d \alpha_i \le D\}$ as 
\begin{align*}
    f(x) = \sum_{|\alpha|\le D} \hat f_\alpha \herm_\alpha(x), \where \hat f_\alpha\in\RR.
\end{align*}
We consider only the bias for the softmax attention and it suffices to only consider 
\begin{align*}
    \EE\left[\sum_{l=1}^L f(x_l) p_l \Biggiven q\right], \where p = \softmax(s), \quad s_l = x_l^\top W_X q.
\end{align*}
We consider each term in the decomposition of $f(x)$ separately.
For degree $\alpha$, we have 
\begin{align*}
    \EE\left[
        \sum_{l=1}^L \herm_\alpha(x_l) p_l(s) \Biggiven q
    \right]
    & = \sum_{l=1}^L \EE\left[
        (-1)^{|\alpha|} e^{\norm{x_l}_2^2/2} \frac{\partial^{|\alpha|}}{\partial x^\alpha} e^{-\norm{x_l}_2^2/2} \cdot p_l(s)
    \right] \nend 
    & = \sum_{l=1}^L \int_{x_l} (-1)^{|\alpha|}\frac{\partial^{|\alpha|}}{\partial x_l^\alpha} \frac{e^{-\norm{x_l}_2^2/2}}{\sqrt{2\pi}^d} \cdot p_l(s) \rd x_l \nend 
    & = \sum_{l=1}^L \EE\left[
        \frac{\partial^{|\alpha|}}{\partial x_l^\alpha} p_l(s) \Biggiven q
    \right] = \prod_{i=1}^d (\baromega q_i)^{\alpha_i} \cdot \sum_{l=1}^L \EE\left[
         \frac{\partial^{|\alpha| } p_l(s)}{\partial s_l^{|\alpha|}} \Biggiven q
    \right]
\end{align*}
where the last equality is by integrate by parts.
The derivative of the softmax probability with respect to the attention score will yield a polynomial of $p$ as well since $\partial p_l/\partial s_l  = p_l(1-p_l)$.
Note that since the optimal attention works in the linear operating region and we have low degree $|\alpha|\le D$, we can safely apply the low-effective-order (\Cref{def:graph-polynomial}) approximation since the coefficient before each term is constant and we have at most $D$ terms $\partial^{|\alpha|} p_l/\partial s_l^{|\alpha|}$.
Note that $p_l$ has effective order $0$, $p_l - p_l^2$ also has lowest effective order $0$ and low effective order approximation $p_l - p_l^2 \ldeq[0] p_l$.
As a matter of fact, by \Cref{fact:derivative does not decrease effective order} and \eqref{eq:low-effective-order calculation-2} in \Cref{fact:low-effective-order equivalence}, we can always ensures that 
\begin{align*}
    \frac{\partial^{|\alpha|}}{\partial s_l^{|\alpha|}} p_l \ldeq[0] p_l. 
\end{align*}
We have by the tail bound of chi-square distribution (by \eqref{eq:p-moment-chi^2 right tail}) that for any $\delta\in(0,1)$,
\begin{align}
    \PP\left(
        \norm{W_X q}_2^2  \ge \norm{W_X}_{\fro}^2 + 2 \norm{W_X}_{\fro}^2 \sqrt{\log \delta^{-1}} + 2 \norm{W_X}_{\oper}^2 \log \delta^{-1}
    \right) \le \delta.
    \label{eq:success event}
\end{align}
Since $W_X = d^{-1/2} I_d$, to ensure $\norm{W_X q}_2^2 \le c^{-2} \cdot 2\log L$ for some constant $c$ such that 
\[
    \frac{1}{c} + \frac{3}{1 + \sqrt{1+c^2/2}} = \epsilon 
\]
holds for some small constant $\epsilon$, we just need to set 
\[
    \delta = \exp\left(- (c^{-2} \log L - 1)^2  \right) \pmax \exp\left(-d/2\right).
\]
Notably, we have all terms with higher effective order upper bounded by the only term with effective order $1$, i.e., $\norm{p}_2^2$,  since $p_l \le 1$ for all $l\in[L]$.
Thus, by \Cref{lemma: pseudo-dynamics}, we have with probability at least $1-\delta$ that the terms effective order higher than $0$ in $p$ (e.g., $\EE[\norm{p}_2^2\given q], \EE[\norm{p}_3^3\given q]$) is upper bounded by $O(L^{-2(1-\epsilon)})$ given that $D$ is constant.
Thus, conditioned on the success of the event in \eqref{eq:success event}, we have
\begin{align*}
    \sum_{l=1}^L \EE\left[
        \frac{\partial^{|\alpha| } p_l(s)}{\partial s_l^{|\alpha|}} \Biggiven q
    \right] = \EE\left[\norm{p}_1 \given q\right] \pm O(L^{-2(1-\epsilon)}) = 1 \pm O(L^{-2(1-\epsilon)}).
\end{align*}
Thus, we conclude that 
\begin{align*}
    &\PP\left(\forall |\alpha|\le D, \quad \left|\EE\left[
        \sum_{l=1}^L \herm_\alpha(x_l) p_l(s) \Biggiven q
    \right] - \prod_{i=1}^d (\baromega q_i)^{\alpha_i} \right| \ge O(L^{-2(1-\epsilon)})\right) \nend 
    &\qquad \le \exp\left(- (c^{-2} \log L - 1)^2  \right) \pmax \exp\left(-d/2\right).
\end{align*}
Therefore, 
\begin{align*}
    &\PP\left( \left|\EE\left[
        \mu Y p \given q
    \right] - \mu \cdot \sum_{\alpha:|\alpha|\le D} \hat f_\alpha \baromega^{|\alpha|} \cdot q^\alpha \right| \ge O(L^{-2(1-\epsilon)})\right)  \le \exp\left(- \left(\frac{\log L}{c^2} - 1 \right)^2 \pmin \frac{d}{2}\right), 
\end{align*}
where we define $q^\alpha = \prod_{i=1}^d q_i^{\alpha_i}$.
Thus, we conclude the result in \Cref{lem:nonlinear generalization}.
\end{proof}

\newpage
\section{Auxiliary Results} \label[appendix]{sec:proof_aux_lemma}
We collect the proofs of the auxiliary results used before.

\subsection{Proof of Auxiliary Results in \S\ref{sec:simplify_AB}}

\subsubsection{Proof of \Cref{fact:expectation terms}} \label[proof]{proof:fact:expectation terms}

\begin{proof} 
    We give a proof for the first equality in \Cref{fact:expectation terms} and the remaining two can be calculated using a similar argument.
    The proof is based on the Stein's Lemma, which states that for a Gaussian random variable $x \sim N(0, I_d)$ and a differentiable function $g \colon \RR^{d} \rightarrow \RR$ with  $ \EE[ \| \nabla g(x)  \|_2]  $ finite, we have
    \[
        \EE[x \cdot g(x)] = \EE[\nabla g(X)].
    \]
    We apply the Stein's Lemma to the function $f_{lmn}$ recursively  for three times. 
    Note that $f_{lmn}$ is a function of the attention scores $s_l$ and $\tilde s_l$, and thus is a function of the covariate sequence  $\{x_l \}_{l\in [L]}$ in the ICL.  
    By chain rule, we have 
    \begin{align*}
        \EE\left[
            f_{lmn} \cdot x_l\otimes x_m\otimes x_n
        \right]
        &= \EE\left[
            \nabla_{x_l} \otimes (f_{lmn} \cdot  x_m\otimes x_n) 
        \right] \nend 
        &= \EE\left[
            \nabla_{x_l} f_{lmn} \otimes x_m\otimes x_n + \delta_{lm} f_{lmn} \cdot I_{d}\otimes x_n + \delta_{ln} f_{lmn} \cdot \left( I_{d}\otimes x_m\right)^{\top(132)}
        \right].
    \end{align*}
    Here the first equality is due to the Stein's lemma, and the second equality is due to the chain rule, where we use the fact that $\nabla _{x_l }(x_m) = \delta_{lm} I_d$. 
Moreover, applying the Stein's Lemma again, we have 
\begin{align*}
    \EE\left[
        \nabla_{x_l} f_{lmn} \otimes x_m\otimes x_n
    \right]
    &= \EE\left[
        (\nabla_{x_l} \otimes \nabla_{x_m })  f_{lmn}  \otimes x_n + \delta_{mn}\nabla_{x_l} f_{lmn} \otimes I_{d} 
    \right] \nend 
    &= \EE\left[
        (\nabla_{x_l} \otimes \nabla_{x_m} \otimes \nabla_{x_n}) f_{lmn} + \delta_{mn}\nabla_{x_l} f_{lmn} \otimes I_{d} 
    \right],  \\
    \EE\left[ \delta_{lm} f_{lmn} \cdot I_{d}\otimes x_n  \right] &  = \EE\left[
        \delta_{lm}  \cdot I_{d}\otimes \nabla_{x_n} f_{lmn} \right] , \\
        \EE \left[ \delta_{ln} f_{lmn} \cdot \left( I_{d}\otimes x_m\right)^{\top(132)} \right] & = 
        \EE\left[
            \delta_{ln} f_{lmn} \cdot \left( I_{d}\otimes \nabla_{x_m} f_{lmn}\right)^{\top(132)} \right], 
\end{align*}
where we apply the Stein's Lemma twice in the first equation. 
Therefore, we have 
    \begin{align*}
        \EE\left[
            f_{lmn} \cdot x_l\otimes x_m\otimes x_n
        \right]
        &= \EE\left[
            (\nabla_{x_l} \otimes \nabla_{x_m} \otimes \nabla_{x_n}) f_{lmn} + \delta_{mn}\nabla_{x_l} f_{lmn} \otimes I_{d} 
        \right]\nend 
        &\qquad + \EE\left[
            \delta_{lm}  \cdot I_{d}\otimes \nabla_{x_n} f_{lmn} + \delta_{ln} f_{lmn} \cdot \left( I_{d}\otimes \nabla_{x_m} f_{lmn}\right)^{\top(132)}
        \right]. 
    \end{align*}
    Notice that $x_l$ only appears in the attention scores $s_l$ and $\tilde s_l$.
This is because $s_l = x_l^{\top} W_X q$ and $\tilde s_l = x_l^{\top} \tilde W_X q$. 
As a result, we can write 
$\nabla_{x_l} = (W_X q \partial_l + \tilde W_X q \tilde\partial_l)$. 
Plugging in the definition of $\cT_l$ in \eqref{eq:define_T_l}, we obtain that $\nabla _{x_l} f_{lmn} = \cT_l \circ  f_{lmn}$, i.e., we can replace $\nabla _{x_l} $ by the operator $\cT_l  $ in the above equation.
Finally, taking the summation over $l, m, n \in [H]$, we obtain the desired result.

For the second equality, consider the second-order tensor $\{ f_{lm} \}_{l, m \in [L]}$.  By applying the Stein's Lemma twice, we have 
\begin{align*}
    \EE\left[
        f_{lm} \cdot x_l\otimes x_m \right] = 
        \EE\left[
            \nabla_{x_l} f_{lm} \otimes x_m + \delta_{lm} f_{lm} \cdot I_{d}
        \right] = \EE \left [
            (\nabla_{x_l} \otimes  \nabla_{x_m}) f_{lm} + \delta_{lm} f_{lm} \cdot I_{d}\right ].
\end{align*}
Then replacing $\nabla_{x_l} $ by $\cT_l$ and taking the summation over $l, m \in [H]$, we obtain the desired result.

Finally, the last equality can be proved by directly applying the Stein's Lemma to the function $f_{l}$, i.e., $$
    \EE [
        f_{l } \cdot x_l  ] = 
        \EE [\nabla _{x_{l}} f_{l} ].$$
Therefore, we conclude the proof. 
    \end{proof}

    \subsubsection{Proof of \Cref{fact:p is a function of Wq's 2-norm}} \label{proof:fact:p is a function of Wq's 2-norm}
    \begin{proof}
  We define $\nu = \diag(\omega) \cleverbar q$ and $\tilde \nu = \diag(\tilde\omega) \cleverbar q$. 
            Note that the joint distribution of $(s, \tilde s)\given q$ can be factorized as a tensor product of the distribution of each individual $(s_1, \tilde s_1)\given q$ since $\{ x_l\}_{l \in [L]}$ are independently sampled.
            It suffices to show that the distribution of $(s_1, \tilde s_1)\given q$ is fully determined by $\norm{\nu}_2$, $\norm{\tilde\nu}_2$ and $\langle \nu, \tilde\nu\rangle$.
            For fixed $\norm{\nu}_2$, $\norm{\tilde\nu}_2$ and $\langle \nu, \tilde\nu\rangle$, there exist constants $c_1$ and $c_2$ that depend on $\norm{\nu}_2$, $\norm{\tilde\nu}_2$ and $\langle \nu, \tilde\nu\rangle$ such that
            we can rewrite $\tilde \nu$ as $\tilde\nu = c_1 \nu_{\parallelsum} + c_2 \nu_{\perp}$, 
            where $\nu_{\parallelsum} = \nu / \norm{\nu}_2$ and $\nu_\perp$ is orthogonal to $\nu_{\parallelsum}$.
            By definition, we have 
            \begin{align*}
                s_1 &= \cleverbar x_1^\top \diag(\omega) \cleverbar q = \langle \cleverbar x_1, \nu\rangle =\norm{\nu}_2 \cdot \langle \cleverbar x_1, \nu_{\parallelsum}\rangle, \\ \tilde s_1 &= \cleverbar x_1^\top \diag(\tilde\omega) \cleverbar q = \langle \cleverbar x_1, \tilde \nu\rangle = c_1 \cdot \langle \cleverbar x_1, \nu_{\parallelsum}\rangle + c_2 \cdot \langle \cleverbar x_1, \nu_\perp\rangle,
            \end{align*}
            where $\cleverbar x_1 = \varPhi^\top x_1$ is the rotated covariate of the first context token, whose distribution is also rotationally invariant.
            Therefore, the joint distribution of $\langle \cleverbar x_1, \nu_{\parallelsum}\rangle$ and $\langle \cleverbar x_1, \nu_\perp\rangle$ does not depend on the specific choice of $\nu_{\parallelsum}$ and $\nu_\perp$.
            Thus, the joint distribution of $(s_1, \tilde s_1)\given q$ is fully determined by $(\norm{\nu}_2, c_1, c_2)$, and thus by $(\norm{\nu}_2, \norm{\tilde\nu}_2, \langle \nu, \tilde\nu\rangle)$. Thus, we conclude the proof. 
        \end{proof}

        \subsection{Proof of Auxiliary Results in \S\ref{sec:approximation_dynamics}} \label{sec:proof_approximation_dynamics}

\subsubsection{Proof of \Cref{fact:low-effective-order equivalence}} \label{proof:flow-effective-order equivalence}

\begin{proof} 
    For the last term in \eqref{eq:partial derivative operator}, if $W(R(u)) \ge 1$, then adding $1$ to $a_u$ will not change $\connectedcomponent_{\ge 1}(\cG)$ and only increase the effective order by $1$.
    If $W(R(u)) = 0$, then adding $1$ to $a_u$ will also include $R(u)$ in $\connectedcomponent_{\ge 1}(\cG)$ and increase the cardinality of $\connectedcomponent_{\ge 1}(\cG)$ by $1$, which offsets the effect of the increase in $a_u$. 
    Therefore, the conditions for keeping effective order $k$ is $W(R(u)) =0$ for the last term. 

    We next consider the first $|\cV|$ terms in the summation, where each term only adds a new edge $(u, s)$ for some $s \in \cV$ to the graph $\cG$.
    The requirement $a_s \ge 1$ is obvious, which also suggests that $R(s)\in \connectedcomponent_{\ge 1}(\cG)$.
    To this end, it suffices to see under what conditions will adding an edge $(u, s)$ to the graph $\cG$ not affect $|\connectedcomponent_{\ge 1}(\cG)|$. 
    If $s \in R(u)$, then adding an edge $(u, s)$ within the same connected component will not change anything. 
    If $s \notin R(u)$, consider two cases: (i) $W(R(u)) = 0$ and (ii) $W(R(u)) \ge 1$.
    If $W(R(u)) = 0$, then $R(u)\notin \connectedcomponent_{\ge 1}(\cG)$ and adding an edge $(u, s)$ only replaces $R(s)$ with $R(s)\cup R(u)$ in $\connectedcomponent_{\ge 1}(\cG)$, which does not change the cardinality of $\connectedcomponent_{\ge 1}(\cG)$.
    If $W(R(u)) \ge 1$, then $R(u)\in \connectedcomponent_{\ge 1}(\cG)$ and adding an edge $(u, s)$ will merge $R(u)$ and $R(s)$ into a single connected component, which decreases the cardinality of $\connectedcomponent_{\ge 1}(\cG)$ by $1$. 
    Therefore, the conditions for keeping effective order $k$ is $s\in R(u) \lor W(R(u)) =0$.
\end{proof}

\subsubsection{Proof of \Cref{lem:decoupling error}}\label{sec:proof_decoupling error}
        \begin{proof}
            We denote by $f_{=1}$ the terms $p^\top \tilde p$ and $f_{>1}$ the terms of higher effective order. 
            Let $f= f_{=1} + f_{>1}$.
            We denote by $f^* = L^{-1}\cdot\exproduct[\omega][\tilde\omega]$ as the approximation of $f$.
            Invoking \eqref{eq:E[p tildep]-approx} and \eqref{eq:E[f]-moment ub}, we have
            \begin{align*}
                \frac{\sqrt{\EE[(f - f^*)^2]}}{f^*} \le \frac{\sqrt{2\EE[(f_{>1})^2] + 2\EE[(f_{=1} - f^*)^2]}}{L^{-1}\cdot\exproduct[\omega][\tilde\omega]} \le O(L^{-(1-\epsilon_0)/2}) .
            \end{align*}
            For the first case, we denote by $g = \langle v, \barq^{\odot 2}\rangle\cdot \barq_i^2$ and $g^* = \EE[g]=\langle v \rangle + 2 v_i$.
            For the second case, we denote by $g = \barq_i^2$ and $g^* = \EE[g] = 1$. 
            We use the following error decomposition 
            \begin{align*}
                \left| \frac{\EE[f g]}{f^* g^*} - 1  \right|
                &\le  \frac{\left|\EE[(f-f^*)(g-g^*)]\right| + \left|\EE[(f-f^*)g^*]\right| + \left|f^*\EE[(g-g^*)]\right|}{|f^*g^*|} \nend 
                &\le \frac{\sqrt{\EE[(f-f^*)^2] \cdot \EE[(g-g^*)^2]}}{|f^*g^*|} + \frac{\sqrt{\EE[(f-f^*)^2]}}{|f^*|} \nend 
                &\le \left(2 + \frac{\sqrt{\EE[(g-g^*)^2]}}{|g^*|}\right) \cdot O(L^{-(1-\epsilon_0)/2}). 
            \end{align*}
            where in the second inequality, we use the Cauchy-Schwarz inequality and the fact that $\EE[(g-g^*)] = 0$.
            It suffices to verify that ${\sqrt{\EE[(g-g^*)^2]}}/{|g^*|} = O(1)$ for both cases.
            For the first case, we have the following calculation: 
            \begin{align*}
                \EE[(g-g^*)^2] &= \Var[g] = \Var[\langle v_{-i}, \barq_{-i}^{\odot 2}\rangle \barq_i^2 + v_i \barq_i^4] \le 2 \Var[\langle v_{-i}, \barq_{-i}^{\odot 2}\rangle \barq_i^2] + 2 \Var[v_i \barq_i^4] \nend 
                &= 2 \left(\EE[\langle v_{-i}, \barq_{-i}^{\odot 2}\rangle^2] \cdot \EE[\barq_i^4] - \EE[\langle v_{-i}, \barq_{-i}^{\odot 2}\rangle]^2 \cdot \EE[\barq_i^2]^2\right) + 2 \Var[v_i \barq_i^4] \nend 
                &= 2 \left(
                    3 v_{-i}^\top (\vone \vone^\top + 2 I_{d -1}) v_{-i} - \langle v_{-i} \rangle^2 
                \right) + 192 v_i^2 = 4 \langle v_{-i} \rangle^2 + 12 \langle v_{-i}^{\odot 2} \rangle + 192 v_i^2. 
            \end{align*}
            Therefore, 
            \begin{align*}
                \frac{\sqrt{\EE[(g-g^*)^2]}}{|g^*|} 
                &\le \frac{\sqrt{4 \langle v_{-i} \rangle^2 + 12 \langle v_{-i}^{\odot 2} \rangle + 192 v_i^2}}{\left|\langle v \rangle + 2 v_i \right|} \le \frac{4 \langle v_{-i} \rangle + \sqrt{192}v_i}{\langle v \rangle + 2 v_i} \le O(1), 
            \end{align*}
            where we use the fact that $\langle v_{-i}^{\odot 2} \rangle \le \langle v_{-i} \rangle^2$ and $|\langle v_{-i} \rangle| \le |\langle v \rangle|$ if $v\in\RR_+^{d}$ or $v\in\RR_{-}^{d}$.
            For the second case, we have $\EE[(g-g^*)^2] = \Var[g] = \Var[\barq_i^2] = 2$ and hence ${\sqrt{\EE[(g-g^*)^2]}}/{g^*} = \sqrt 2 = O(1)$. 
            Next, we consider only having terms with higher effective order. Using the same definition for $g$ and $g^*$,  it holds that
            \begin{align*}
                \left|\frac{ {\EE[f_{>1} g]}}{  g^*}\right| 
                &\le \left|\frac{ \EE[f_{>1} (g-g^*)] + \EE[f_{>1} g^*]}{  g^*}\right| 
                \le \frac{\sqrt{\EE[(f_{>1})^2] \cdot \EE[(g-g^*)^2]} + g^*\EE[f_{>1} ]}{ |g^*|} \nend 
                &\le \left(1 + \frac{\sqrt{\EE[(g-g^*)^2]}}{|g^*|}\right) \cdot {\sqrt{\EE[(f_{>1})^2]}}{} = O( L^{-2(1-\epsilon)}), 
            \end{align*}
            where in the last inequality, we have already proved that $\sqrt{\EE[(g-g^*)^2]}/{g^*} = O(1)$ for both cases. 
            We also invoke \eqref{eq:E[f]-moment ub} to conclude that $\sqrt{\EE[(f_{>1})^2]} = O( L^{-2(1-\epsilon)})$.
        \end{proof}

\newpage
\end{document}